\titleformat{\chapter}[display]
  {\bfseries\centering\LARGE}
  {\MakeUppercase{\chaptertitlename}~\thechapter} 
  {0pt}
  {\MakeUppercase} 
\titleformat{\section}{\bfseries\large}{\thesection}{1em}{}
\pgfplotsset{compat=1.17}
\patchcmd{\thebibliography}{\section*{\refname}}{}{}{}
\newtheorem{proposition}{Proposition}
\newtheorem{assumption}{Assumption}
\newtheorem{theorem}{Theorem}
\newtheorem{corollary}{Corollary}
\newtheorem{lemma}{Lemma}
\newtheorem{definition}{Definition}
\newtheorem{remark}{Remark}
\crefname{assumption}{Assumption}{Assumptions}
\newcommand{\norm}[1]{\left\lVert#1\right\rVert_2}
\newcommand\inner[2]{\left\langle #1, #2 \right\rangle}
\renewcommand\d[0]{\boldsymbol{d}}
\renewcommand\r[0]{\boldsymbol{r}}
\newcommand\x[0]{\boldsymbol{x}}
\newcommand\y[0]{\boldsymbol{y}}
\newcommand{\vbb}{\mathbf{v}}
\renewcommand\aa[0]{\mathbb{A}}
\newcommand\bb[0]{\mathbb{B}}
\newcommand\cc[0]{\mathbb{C}}
\newcommand\ee[0]{\mathbb{E}}
\newcommand\ii[0]{\mathbb{I}}
\newcommand\nn[0]{\mathbb{N}}
\newcommand\rr[0]{\mathbb{R}}
\renewcommand\ss[0]{\mathbb{S}}
\newcommand\zz[0]{\mathbb{Z}}
\newcommand\aaa[0]{\mathcal{A}}
\newcommand\bbb[0]{\mathcal{B}}
\newcommand\ddd[0]{\mathcal{D}}
\newcommand\eee[0]{\mathcal{E}}
\newcommand\fff[0]{\mathcal{F}}
\renewcommand\ggg[0]{\mathcal{G}}
\newcommand\hhh[0]{\mathcal{H}}
\newcommand\iii[0]{\mathcal{I}}
\newcommand\nnn[0]{\mathcal{N}}
\newcommand\ooo[0]{\mathcal{O}}
\newcommand\ppp[0]{\mathcal{P}}
\newcommand\rrr[0]{\mathcal{R}}
\newcommand\xxx[0]{\mathcal{X}}
\newcommand\zzz[0]{\mathcal{Z}}
\newcommand\rb[1]{\left(#1\right)}
\renewcommand\sb[1]{\left[#1\right]}
\newcommand\cb[1]{\left\{#1\right\}}
\newcommand\supp[1]{\text{supp}\left(#1\right)}
\definecolor{convexblue}{RGB}{88, 138, 190}
\definecolor{thirdgreen}{RGB}{124, 180, 115}
\definecolor{quadorange}{RGB}{255, 140, 90}
\newcommand{\la}{\langle}
\begin{document}

\doublespacing

\newcommand{\thesisTitle}{What Makes Local Updates Effective: The Role of Data Heterogeneity and Smoothness}
\newcommand{\yourName}{\textbf{Kumar Kshitij Patel}}
\newcommand{\yourDept}{TOYOTA TECHNOLOGICAL INSTITUTE AT CHICAGO}
\newcommand{\yourSchool}{Chicago, Illinois}
\newcommand{\monthYear}{May, 2025}
\newcommand{\advisor}{Nathan Srebro, Lingxiao Wang}
\newcommand{\committee}{Nathan Srebro, Lingxiao Wang, Avrim Blum, Suhas Diggavi}


\begin{titlepage}
\begin{center}
\begin{tikzpicture}[remember picture,overlay]
      \node[opacity=0.1,inner sep=0pt,anchor=south west] at (current page.south west)
      {\includegraphics[width=\paperwidth,height=\paperheight]{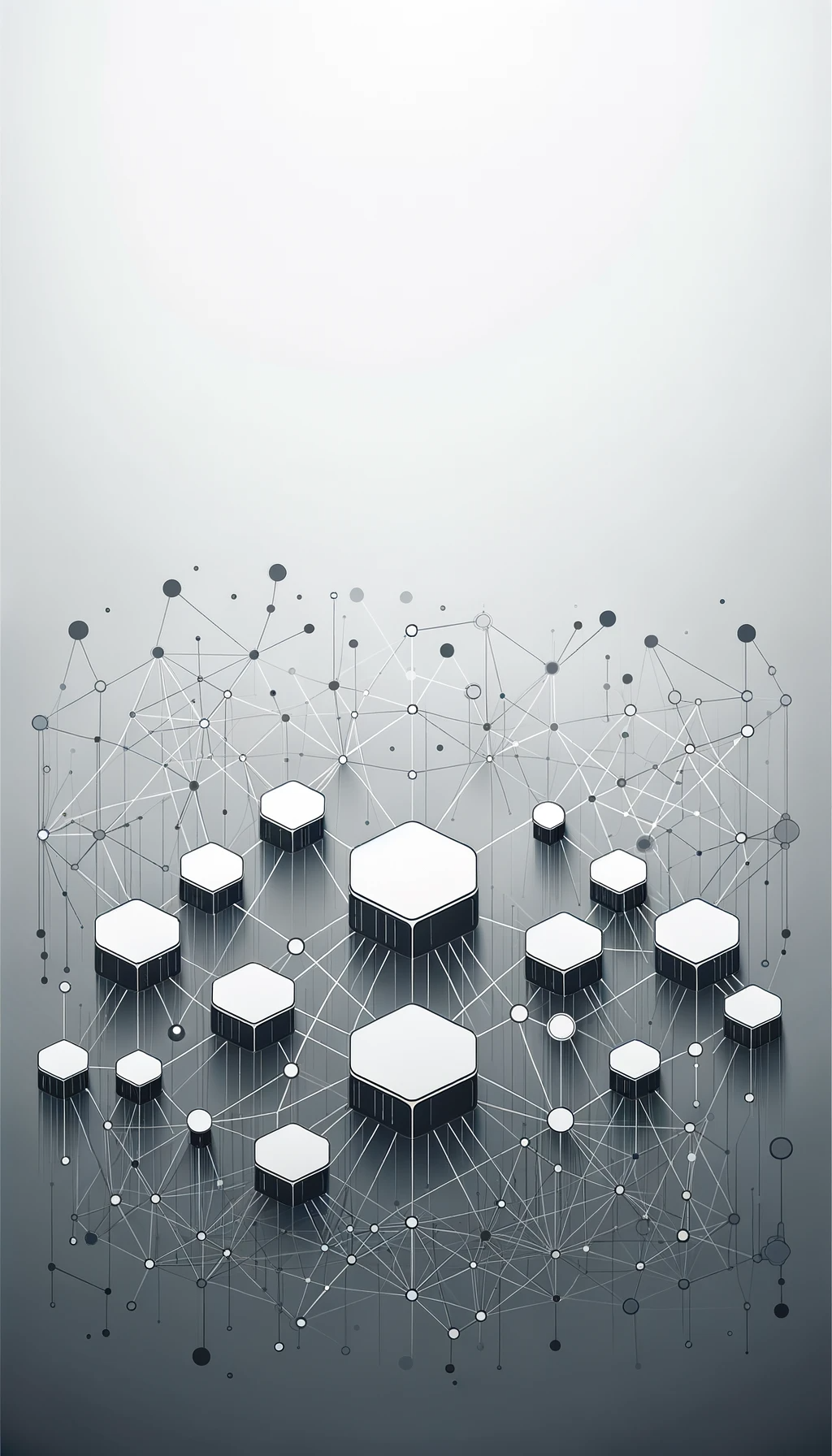}};
    \end{tikzpicture}
\begin{doublespacing}
\LARGE
\vspace{\baselineskip}
Ph.D.~Thesis\\
\vspace{\baselineskip}

\textbf{\huge{\thesisTitle}}\\

\vspace{2\baselineskip}
By\\
\yourName\\
\vspace{3\baselineskip}
Committee: \committee\\
\vspace{\baselineskip}
\LARGE
\yourDept\\
\yourSchool\\
\monthYear

\vfill

\end{doublespacing}

\end{center}
\end{titlepage}

\pagenumbering{roman}
\tableofcontents
\clearpage

\pagenumbering{arabic}

\chapter{Introduction}

Humans have an exceptional ability to quickly learn new tasks by recognizing patterns and integrating prior knowledge through shared representations in the pre-frontal cortex~\cite{tomasello1997primate,kanwisher2010functional,botvinick2015reinforcement,collins2013cognitive,duncan2010multiple,botvinick2014computational,sorscher2022neural}. From the early days of artificial intelligence (AI), researchers have sought to replicate this multi-task learning capability~\cite{samuel1959some,minsky1961steps,newell1972human,schmidhuber1987evolutionary,thrun1995learning,caruana1997multitask,baxter2000model,ruder2017overview,finn2017model}, yet most recent breakthroughs have come from improving learning performance on single tasks by scaling up models, datasets, and computation~\cite{brown2020language,sutton2019bitter,kaplan2020scaling}. We are now at a critical juncture where further scaling faces multiple challenges. Acquiring high-quality data is increasingly complex due to privacy regulations and intellectual property concerns~\cite{gdpr,villalobosposition,appel2023generative}. Worse still, collecting large centralized datasets in fields like healthcare and drug discovery remains infeasible altogether~\cite{ching2018opportunities,clark2021machinelearning,prakash2023chemicalspace}. Moreover, the resources needed to operate at scale---amidst diminishing returns---have concentrated power within a few companies, slowing down the pace of research~\cite{bommasani2021opportunities,bender2021dangers,ahmed2020democratization,zuboff2019age,besiroglu2024compute,thompson2020computational,powerai,gholami2024ai}. Compounding this, current AI systems frequently exhibit bias, poor robustness, and struggle to generalize across distribution shifts~\cite{bender2021dangers,selbst2019fairness,szegedy2013intriguing,hendrycks2019benchmarking,taori2020measuring}.

Federated learning (FL)~\citep{mcmahan2016federated, mcmahan2016communication, kairouz2019advances} has emerged as a transformative approach for mitigating several of the challenges above by enabling decentralized training that preserves data privacy, complies with regulatory constraints, and effectively handles diverse data distributions. FL successfully combines the advantages of multi-task learning with secure, collaborative training. Its practical impact is evident across various domains: in healthcare, facilitating drug discovery and medical imaging analyses~\citep{chen2020fl, li2019privacy, kaissis2021end}, notably in collaborative efforts during the COVID-19 pandemic~\citep{dayan2021federated, powell2019nvidia, roth2020federated}; in mobile technologies, enhancing smart assistants and predictive keyboards~\citep{mcmahan_ramage_2017, apple, paulik2021federated}; and in finance, improving detection and prevention of financial crimes~\citep{shiffman_zarate_deshpande_yeluri_peiravi_2021}.

Research interest in FL has also surged dramatically---from a handful of studies in 2016~\cite{konevcny2016federated, mcmahan_ramage_2017} to over 3000 publications in 2020 alone~\citep{kairouz2019advances, wang2021field}. Despite its rapid adoption, however, fundamental questions remain unresolved, ranging from theoretical inquiries about FL’s core objectives to practical concerns such as incentivizing the participation of rational agents. Recent vulnerabilities and attacks highlight critical security and privacy risks inherent in FL systems~\citep{liu2022threats}. Furthermore, competing methodologies, such as data escrows and centralized foundation models, have emerged as viable alternatives~\cite{angel2024federated, xia2023data}. This landscape highlights the pressing need for thorough theoretical and practical examinations of federated learning algorithms.

Central to federated learning is \textbf{Local SGD} (a.k.a. Federated Averaging), arguably its most popular optimization algorithm~\citep{mcmahan_ramage_2017}. Local SGD involves each machine performing $K$ local stochastic updates before averaging the resulting models with other machines at each communication round (see update~\eqref{eq:local_updates}). Despite its simplicity, Local SGD consistently outperforms alternative first-order methods, including mini-batch SGD~\citep{charles2021large, wang2022unreasonable, dekel2012optimal, lin2018don, woodworth2020local}. While numerous SGD variants have been proposed for FL~\citep{kairouz2019advances, wang2021field}, most rely fundamentally on local updates.

This robust empirical performance has inspired a significant body of theoretical work aimed at explaining the benefits of local updates~\citep{mcdonald2009efficient, zinkevich2010parallelized, zhang2016parallel, stich2018local, dieuleveut2019communication, khaled2020tighter, koloskova2020unified, woodworth2020local, karimireddy2020scaffold, woodworth2020minibatch, yuan2020federated, mishchenko2022proxskip, woodworth2021min, glasgow2022sharp, wang2022unreasonable}.

This Ph.D.\ thesis contributes to this growing literature by developing a unified and rigorous theoretical account of local update algorithms---especially Local SGD---under realistic models of data heterogeneity. It provides both a conceptual framework for reasoning about heterogeneity in federated settings and a set of technical results---including novel upper and lower bounds for non-asymptotic convergence rates---that characterize the strengths and limitations of local updates across a range of problem settings.

While Local SGD has been widely studied, many existing analyses rely on restrictive assumptions. This thesis relaxes such assumptions by introducing a framework that incorporates higher-order notions of heterogeneity and smoothness. The resulting theory clarifies trade-offs between optimization accuracy, communication efficiency, and algorithmic bias. It systematically characterizes the convergence behavior of Local SGD across convex, non-convex, and online settings---each with distinct challenges---while integrating insights from fixed-point theory, minimax optimization complexity, and algorithmic design.

\section{Contributions of this Thesis}

The main contributions are summarized below:

\begin{itemize}
    \item \textbf{A heterogeneity-aware min-max complexity framework.}  
    This thesis introduces a unified framework for analyzing federated optimization that disentangles different sources of heterogeneity using first- and second-order measures. These refined assumptions clarify not only when local updates succeed, but also \emph{why}---offering more interpretable alternatives to commonly used bounded-gradient or dissimilarity conditions. The framework also supports a finer-grained complexity theory based on min-max optimality, used throughout the thesis.

    \item \textbf{The central role of second-order heterogeneity.}  
    Across both convex and non-convex settings, our results show that small second-order heterogeneity is necessary and sufficient for local update algorithms to outperform centralized or mini-batch methods. This theme unifies lower and upper bounds and illustrates how higher-order structure enables provable gains from local computation---even when first-order assumptions are insufficient.

    \item \textbf{New insights into the limits of local updates.}  
    In regimes where local updates fail to yield benefits, we identify and prove the min-max optimality of classical algorithms such as mini-batch SGD. These results clarify the boundary between problem settings where local computation is provably effective and those where centralized algorithms remain optimal.

    \item \textbf{Extending fixed-point and implicit bias perspectives.}  
    We revisit the fixed-point behavior of Local SGD under quadratic objectives and connect it to heterogeneity-aware bias and conditioning. These results uncover an implicit regularization effect induced by local updates and situate the fixed-point behavior within broader discussions on optimization geometry and bias.

    \item \textbf{Third-order smoothness improves convergence under heterogeneity.}  
    We show that the known benefits of third-order smoothness in homogeneous settings extend to heterogeneous regimes. Our consensus-error-based analysis framework captures these effects, enabling tighter finite-time bounds by simultaneously leveraging smoothness and heterogeneity structure.

    \item \textbf{A theory of federated online optimization.}  
    Moving beyond static data distributions, we develop a theory for sequential decision-making in federated environments. Our results delineate the limits of collaboration under full-information feedback and show that local updates can provably improve regret under bandit feedback in high-dimensional or low-heterogeneity regimes.
\end{itemize}

Collectively, these contributions offer a principled, granular understanding of local update algorithms across diverse optimization settings. Grounded in practical models of heterogeneity, the results advance both theory and practice toward the reliable, efficient deployment of federated systems.

\section{An Outline of the Results in this Thesis}

The technical portion of this thesis begins in \Cref{ch:setting}, which introduces the formal setup and structural assumptions used throughout. This includes the oracle and communication models, first- and second-order heterogeneity measures, and the notion of min-max optimality that anchors our analyses. While this chapter primarily serves as groundwork, it also develops conceptual insights, such as the limitations of first-order heterogeneity assumptions and the importance of alignment between the geometries of different machines (\Cref{prop:barB_vs_B}, \Cref{sec:is_first_enough}).

\Cref{ch:baselines_and_lower_bounds} establishes new lower bounds for distributed algorithms under varying heterogeneity assumptions, based on our work in~\cite{patel2024limits,patel2025revisiting}. Key results include a tight lower bound for Local SGD under bounded first-order heterogeneity (\Cref{thm:new_LSGD_lower_bound}); an algorithm-independent lower bound that identifies mini-batch SGD as min-max optimal in this regime (\Cref{thm:AIlb_zeta0}); and a lower bound demonstrating that small second-order heterogeneity can enable Local SGD to outperform centralized methods (\Cref{thm:new_LSGD_lower_bound_with_tau}). These results clarify the precise conditions under which local updates offer provable gains, and set expectations for the upper bounds derived in later chapters.

The next two chapters provide complementary perspectives on the convergence of Local SGD.

\Cref{ch:fixed_point} analyzes the algorithm’s limiting behavior under quadratic objectives by characterizing its fixed point, drawing on results from~\cite{patel2024limits,patel2025revisiting}. We derive new closed-form expressions for the fixed-point discrepancy (\Cref{lem:fixed_disc_UB}) and establish a finite-time convergence bound (\Cref{thm:conv_with_fixed_pt_pers}) that quantifies trade-offs between optimization error, variance, and heterogeneity-induced bias. While prior works (e.g., \cite{charles2020outsized,malinovskiy2020local}) have explored fixed-point analyses, our main contribution is integrating this perspective with a heterogeneity-aware view. We also extend the analysis to general convex objectives in \Cref{sec:fixed_derivation_convex}, showing that the fixed point corresponds to a reweighted least-squares solution, revealing an implicit regularization effect induced by local updates, reconciling optimization theory with other existing results~\cite{gu2023and}.

\Cref{ch:upper_bounds} introduces a new finite-time convergence analysis based on consensus error, improving on previous results through tighter recursions and relaxed assumptions~\cite{patel2025revisiting}. We first obtain sharper bounds under third-order smoothness and second-order heterogeneity (\Cref{thm:UB_convex_w_Q,thm:UB_Sconvex_w_Q_zeta,thm:UB_Sconvex_func_w_Q_zeta}), then derive results under relaxed first-order heterogeneity using coupled recursions (\Cref{thm:UB_Sconvex_wo_Q,thm:UB_function_wo_Q,thm:UB_Sconvex_quad,thm:UB_function_quad}), and finally combine these to give the most general result in \Cref{thm:UB_Sconvex_w_Q}. This requires delicate control over higher-order terms, including fourth-moment bounds on consensus error. Because the methods in this chapter form a core technical contribution of the thesis, \Cref{app:chap5} includes a self-contained tutorial on consensus error–based analyses.

\Cref{ch:non_convex} generalizes these insights to the non-convex setting by analyzing \textsc{CE-LSGD}, a new communication efficient, variance-reduced variant of Local SGD introduced in~\cite{patel2022towards}. We prove that \textsc{CE-LSGD} is minimax optimal under deterministic oracles and nearly optimal under stochastic ones (\Cref{thm:alg1,thm:alg1_full_main}). We also present new lower bounds for both zero-respecting distributed algorithms and centralized methods (\Cref{thm:lb,thm:lb_cent}), showing that small second-order heterogeneity is again essential for local updates to offer improvements. This chapter also presents an alternative perspective frequently used in distributed optimization: analyzing the trade-off between oracle complexity and communication cost in high-accuracy regimes, with implications for overparameterized deep learning.

Finally, \Cref{ch:online} develops a theory of federated online optimization to address dynamic environments where data evolves over time, based on~\cite{patel2023federated}. We first show that collaboration offers no benefit under full-gradient feedback (\Cref{thm:first_lip,thm:first_smth}), connecting stochastic and adversarial models via a unified minimax regret framework. We then propose two new bandit algorithms, \textsc{FedPOSGD} and \textsc{FedOSGD}, and prove they achieve strictly better regret under zeroth-order feedback (\Cref{thm:favlb,thm:bd_grad_first_stoch,thm:smooth_first_stoch}), particularly in high-dimensional or low-heterogeneity regimes. These results extend the reach of local update algorithms to real-world, sequential learning settings.

Taken together, the results in this thesis establish a unified and principled theory of local update methods across convex, non-convex, and online optimization, grounded in realistic models of data heterogeneity. The thesis contributes new lower bounds, improved upper bounds, and new algorithms, all of which have implications for both the theory and practice of federated optimization. 

\section{Relevant References for this Thesis}

All of the results in this thesis are derived from four papers for which I am the primary author:

\begin{enumerate}
    \item \citet{patel2022towards}, NeurIPS'22 (with Lingxiao Wang, Blake Woodworth, Brian Bullins, Nathan Srebro);
    \item \citet{patel2023federated}, ICML'23 (with Lingxiao Wang, Aadirupa Saha, Nathan Srebro);
    \item \citet{patel2024limits}, COLT'25 (with Margalit Glasgow, Ali Zindari, Lingxiao Wang, Sebastian U.\ Stich, Ziheng Cheng, Nirmit Joshi, Nathan Srebro);
    \item \citet{patel2025revisiting}, under review (with Ali Zindari, Lingxiao Wang, Sebastian U.\ Stich).
\end{enumerate}

The observations in \Cref{ch:setting}---in particular \Cref{prop:barB_vs_B} and \Cref{sec:is_first_enough}---the lower bounds in \Cref{ch:baselines_and_lower_bounds} (\Cref{thm:new_LSGD_lower_bound,thm:AIlb_zeta0,thm:new_LSGD_lower_bound_with_tau}), the fixed-point analysis in \Cref{ch:fixed_point} (\Cref{thm:conv_with_fixed_pt_pers}), and the consensus-error framework developed in \Cref{ch:upper_bounds} (\Cref{thm:UB_Sconvex_w_Q_zeta,thm:UB_Sconvex_func_w_Q_zeta,thm:UB_convex_w_Q,thm:UB_Sconvex_wo_Q,thm:UB_function_wo_Q,thm:UB_Sconvex_quad,thm:UB_function_quad,thm:UB_Sconvex_w_Q}) are drawn from the last two papers~\cite{patel2024limits,patel2025revisiting}. 

The analysis of the non-convex setting in \Cref{ch:non_convex}---including the development of \textsc{CE-LSGD} and associated upper and lower bounds (\Cref{thm:alg1,thm:alg1_full_main,thm:lb,thm:lb_cent})---comes from the first paper~\cite{patel2022towards}, which also contains additional results on partial participation and higher-order algorithms that fall outside the scope of this thesis. 

Finally, all algorithms and theoretical results in the online setting presented in \Cref{ch:online} (\Cref{thm:first_lip,thm:first_smth,thm:favlb,thm:bd_grad_first_stoch,thm:smooth_first_stoch}) are drawn from the second paper~\cite{patel2023federated}.

\paragraph{Beyond this Thesis.} I have also co-authored earlier works on Local SGD~\cite{woodworth2020local,woodworth2020minibatch,lin2018don,dieuleveut2019communication}, which, while influential to my thinking, are not included here. Several other works---e.g.,~\cite{stich2018local,karimireddy2020scaffold,woodworth2018graph,koloskova2020unified,glasgow2022sharp,yuan2020federated,wang2022unreasonable,charles2020outsized,murata2021bias,karimireddy2020mime,khaled2020tighter,arjevani2015communication,arjevani2019lower,shamir2014communication,shamir2014dane,duchi2015optimal,shamir2017optimal}---have also provided key intellectual foundations for the analyses in this thesis.

The study of local updates remains a rapidly evolving area, increasingly relevant to challenges around data ownership, privacy, and communication. I hope the tools and results presented here contribute meaningfully to this growing field.

\chapter{Formal Setting and Some Preliminary Observations}\label{ch:setting}

This chapter lays the formal foundation for the rest of the thesis by introducing the distributed optimization framework, the oracle model, the key structural assumptions commonly used in the analysis of local update algorithms, and the notion of min-max complexity (defined in \eqref{eq:min_max_optimality}) that will serve as a focal point in the following three chapters. While much of the material functions as technical setup, we interleave critical observations---such as \Cref{prop:barB_vs_B}---that illuminate the practical and theoretical implications of these assumptions. In particular, we examine various notions of data heterogeneity, ranging from first- and second-order heterogeneity to divergences in local optima across machines, and we clarify how these different forms of heterogeneity affect algorithmic behavior. Our discussion highlights both the strengths and limitations of standard assumptions in federated learning---for example, in \Cref{sec:is_first_enough}---and motivates the introduction of more nuanced complexity measures. Together, these components not only provide a formal foundation but also a conceptual framework that informs the lower bounds, analyses, and algorithms developed in the subsequent chapters.

\subsection*{Outline and Important References}
This chapter introduces the formal setup and assumptions that underlie the rest of the thesis. Most of the definitions and concepts presented are standard in the distributed optimization literature~\citep{stich2018local,karimireddy2020scaffold,karimireddy2020mime,karimireddy2019error,khaled2020tighter,woodworth2020local,woodworth2020minibatch,koloskova2020unified}. \Cref{sec:ch2.1} presents the consensus optimization formulation and the intermittent communication model, which were formalized in the graph oracle framework by~\citet{woodworth2018graph}. \Cref{sec:ch2.2,sec:ch2.3,sec:ch2.4} introduce standard assumptions from convex optimization, such as convexity, strong convexity, and various notions of smoothness.

In \Cref{sec:ch2.5}, we turn to data heterogeneity assumptions. While many of the assumptions are drawn from prior work, this section includes novel commentary on their relationships, limitations, and practical interpretations. These insights originate from our papers~\citep{patel2024limits,patel2025revisiting}, co-authored with Margalit Glasgow, Ali Zindari, Lingxiao Wang, Sebastian U.\ Stich, Ziheng Cheng, Nirmit Joshi, and Nathan Srebro. \Cref{sec:ch2.6} discusses the notion of zero-respecting algorithms, introduced in the optimization lower bounds literature~\citep{arjevani2015communication,arjevani2019lower} and extended to the federated setting in our prior work~\citep{patel2022towards}. Finally, \Cref{sec:min_max} revisits the classical concept of min-max optimality~\citep{nesterov2018lectures} in the context of our distributed learning setup and serves as a foundation for the convergence results in subsequent chapters.

\section{Federated Optimization with Intermittent Communication}\label{sec:ch2.1}
Federated learning is a form of distributed optimization characterized by training data that is dispersed across numerous devices---potentially millions---instead of being centralized in a single location. Each device typically holds data drawn from distinct, heterogeneous distributions, creating significant variability across the network. Additionally, privacy considerations are paramount, as devices often prefer not to share their raw data directly. Compounding these constraints, communication among devices is infrequent and limited, necessitating algorithms that efficiently aggregate learning updates under these challenging conditions. Our aim in this section is to formally define a mathematical model which can incorporate these characteristics of FL.

The most commonly studied optimization objective assuming $M$ machines is the following,
\begin{align}\label{prob:scalar}
    \min_{x\in \rr^d}\rb{F(x):=\frac{1}{M}\sum_{m\in[M]}F_m(x)}\enspace,
\end{align}
where $F_m := \ee_{z_m \sim \ddd_m}[f(x;z_m)]$ is a stochastic objective on machine $m$, defined using a loss function $f(\cdot;z\in \zzz)\in \fff$ and a data distribution $\ddd_m\in \Delta(\zzz)$. Problem \eqref{prob:scalar} is ubiquitous in machine learning---from training in a data center on multiple GPUs \citep{krizhevsky2012imagenet}, to decentralized training on millions of devices \citep{mcmahan2016federated, mcmahan2016communication}. Notably, objective \eqref{prob:scalar} aims to find a single consensus model for the $M$ different objectives. When data heterogeneity is very high, solving this problem might only be the first step, followed by personalization on each machine.

We also need to define a communication model for optimizing Problem \eqref{prob:scalar}. Perhaps the simplest, most basic, and most important distributed setting is that of intermittent communication (IC) \citep{woodworth2018graph}, where $M$ machines work in parallel over $R$ communication rounds to optimize objective \eqref{prob:scalar}, and during each round of communication, each machine may sequentially compute $K$ oracle calls (such as stochastic gradients). See Figure \ref{fig:IC} for an illustration of the IC framework. 

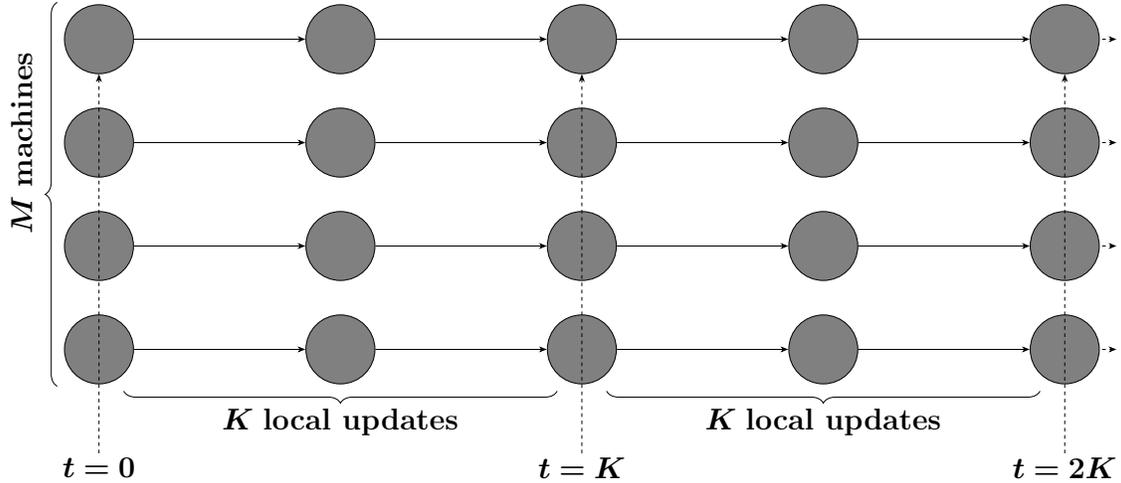
\begin{figure}
    \centering
    \resizebox{0.9\linewidth}{!}{

\begin{tikzpicture}[
    node distance=2cm and 3.5cm,
    line/.style={-Stealth, thick},
    node/.style={circle, draw, thick, fill=gray, minimum size=2cm},
    dashedline/.style={dashed, thick},
    timeline/.style={dashed, thick, -Stealth},
    continueline/.style={dashed, thick, -Stealth, shorten <=2pt},
    brace/.style={decorate, decoration={brace, amplitude=10pt, raise=4pt}, thick}
]

\foreach \x in {1,...,5} {
    \foreach \y in {1,...,4} {
        \node[node] (N-\x-\y) at (\x*7, -\y*3) {};
    }
}

\foreach \x [remember=\x as \lastx (initially 1)] in {2,...,5} {
    \foreach \y in {1,...,4} {
        \draw[line] (N-\lastx-\y) -- (N-\x-\y);
    }
}


\draw[brace] ([yshift=-10pt, xshift=-10pt]N-1-4.south west) -- ([yshift=10pt, xshift=-10pt]N-1-1.north west);
\node[left=0.8cm of N-1-2, anchor=south, rotate=90, align=center] {\Huge $\bm{M}$ \textbf{machines}}; 



\node[below=of N-1-4] (t0) {\Huge $\bm{t=0}$};
\draw[timeline] (t0) -- (N-1-1);

\draw[brace, decoration={mirror}] ([yshift=-10pt]N-1-4.south east) -- ([yshift=-10pt]N-3-4.south west) node[midway, below=15pt, align=center] {\Huge$\bm{K}$ \textbf{local updates}};

\node[below=of N-3-4] (t2) {\Huge $\bm{t=K}$};
\draw[timeline] (t2) -- (N-3-1);

\draw[brace, decoration={mirror}] ([yshift=-10pt]N-3-4.south east) -- ([yshift=-10pt]N-5-4.south west) node[midway, below=15pt, align=center] {\Huge$\bm{K}$ \textbf{local updates}};

\node[below=of N-5-4] (t4) {\Huge $\bm{t=2K}$};
\draw[timeline] (t4) -- (N-5-1);

\foreach \y in {1,...,4} {
    \draw[continueline] (N-5-\y) -- ++(1.5, 0);
}


\end{tikzpicture}
    \caption{Illustration of the intermittent communication setting.}
    \label{fig:IC}
\end{figure}

Having fixed our communication model, an instance of problem \eqref{prob:scalar} can be characterized by the client distributions $\{\ddd_m\in \Delta(\zzz)\}_{m\in[M]}$ and a differentiable loss function $f(\cdot;z\in\zzz):\rr^d\to \rr$ and assume it belongs to some function class $f\in\fff$. With this we can denote the set of all problem instances by $\ppp\in \Delta(\zzz)^{\otimes M}\times \fff$. In the rest of this section, we will define different restrictions on $\fff$ and distributions $\cb{\ddd_1, \dots, \ddd_M}$, which would lead to interesting sub-problem classes of $\ppp$.

\section{Restrictions on the Function Class \texorpdfstring{$\fff$}{TEXT} and Data Distributions \texorpdfstring{$\cb{\ddd_m}_{m\in[M]}$}{TEXT}}\label{sec:ch2.2}
Throughout this thesis, we assume that the loss function $f(\cdot; z)$ is \textbf{differentiable} for all $z \in \mathcal{Z}$, and that for each fixed $x \in \mathbb{R}^d$, the map $z \mapsto f(x; z)$ is \textbf{measurable}. Most of our analyses can also be extended to sub-differentiable functions using standard techniques from the optimization literature. We omit discussing these generalizations here, to keep the focus on the novel analysis techniques we develop. We will now state the regularity assumptions we make for different machines' objectives $F_m$'s, while noting that these assumptions on the machines together also imply the regularity of the average objective $F$.


Most of our analyses will assume that the objective function on each machine is convex.
\begin{assumption}[Convexity]\label{ass:convex}
    For all machines $m \in [M]$, the function $F_m(\cdot)$ satisfies,
    \begin{align*}
        F_m(x) + \inner{\nabla F_m(x)}{y-x} &\leq F_m(y)\enspace, && \forall\ x,\ y\ \in \rr^d\enspace.
    \end{align*}
    When $F_m(\cdot)$ is twice differentiable, this is equivalent to assuming $0 \preceq \nabla^2 F_m(\cdot)$.
\end{assumption}
 
\begin{remark}[Restrictions on $f$ and $\ddd_m$]
The convexity of $F_m$ depends on the choice of the loss function $f$ and the data distribution $\mathcal{D}_m$. In particular, if the loss $f(\cdot; z)$ is convex for all $z \in \operatorname{supp}(\mathcal{D}_m)$, then convexity is preserved due to linearity of expectation. On the other hand, it is possible for $f(\cdot; z)$ to be non-convex for some $z \in \operatorname{supp}(\mathcal{D}_m)$, while $F_m$ remains convex due to averaging effects. We directly assume the convexity of $F_m$ to abstract away these intricacies.
\end{remark}
We will also assume that the objective functions are strongly convex for some of our analyses.
\begin{assumption}[Strong Convexity]\label{ass:strongly_convex}
    For all machines $m \in [M]$, the function $F_m(\cdot)$ satisfies,
    \begin{align*}
        F_m(x) + \inner{\nabla F_m(x)}{y-x} + \frac{\mu}{2}\norm{x-y}^2 &\leq F_m(y)\enspace, && \forall\ x,\ y\ \in \rr^d\enspace.
    \end{align*}
    When $F_m(\cdot)$ is twice differentiable, this is equivalent to assuming $\mu\cdot I_d \preceq \nabla^2 F_m(\cdot)$.
\end{assumption}
\begin{remark}
Strong convexity implies that each function $F_m$ admits a unique minimizer, which we denote by $x_m^\star$. Moreover, strong convexity guarantees that the function $F_m$ grows quadratically away from its minimum:
\[
F_m(x) \geq F_m(x_m^\star) + \frac{\mu}{2} \|x - x_m^\star\|^2, \quad \forall x \in \mathbb{R}^d \enspace.
\]
This inequality is useful for bounding distances to the optimum in terms of sub-optimality. One way to guarantee strong convexity is to assume that the loss function is strongly convex. This can be ensured, for instance, by adding a regularizer $\frac{\mu}{2}\norm{\cdot}^2$ to the loss function. When $\ee_{z\sim\ddd_m}\sb{\norm{\nabla f(\cdot; z)}}<\infty$, we can interchange expectation and the gradients, then to ensure that $F_m$'s are also $\mu$-strongly convex.  
\end{remark}
Our analyses will also rely on a smoothness assumption on the objective function. In particular, we consider Lipschitzness of the function, the gradients, and the Hessian.
\begin{assumption}[Bounded Gradients]\label{ass:bounded_gradients}
    For all machines $m \in [M]$, the function $\nabla F_m(\cdot)$ satisfies for some scalar $G>0$,
    \begin{align*}
        \norm{\nabla F_m(\cdot)} \leq G\enspace,
    \end{align*}
    which is equivalent to assuming Lipschitzness of the function, i.e.,
    \begin{align*}
        \norm{F_m(x) - F_m(y)} &\leq G\norm{x-y}\enspace, &&\forall x,\ y\ \in \rr^d\enspace.
    \end{align*}
\end{assumption}
We will only need this assumption when we analyze the online adversarial setting. For analyzing problem \eqref{prob:scalar}, we will only rely on the following assumption about the Lipschitzness of the gradients.
\begin{assumption}[Second-order Smoothness]\label{ass:smooth_second}
    For all machines $m \in [M]$, the function $F_m(\cdot)$ satisfies for some scalar $H>0$,
    \begin{align*}
         F_m(y) &\leq F_m(x) + \inner{\nabla F_m(x)}{y-x} + \frac{H}{2}\norm{x-y}^2 \enspace, && \forall\ x,\ y\ \in \rr^d\enspace,
    \end{align*}
    which is equivalent to assuming Lipschitzness of the gradients, i.e.,
    \begin{align*}
        \norm{\nabla F_m(x) - \nabla F_m(y)} &\leq H\norm{x-y}\enspace, &&\forall x,\ y\ \in \rr^d\enspace.
    \end{align*}
    When $F_m(\cdot)$ is twice differentiable these conditions are equivalent to assuming that $\nabla^2 F_m(\cdot)\preceq H\cdot I_d$ or that the spectral norm of $\nabla^2 F_m(\cdot)$ is bounded, i.e., $\norm{\nabla^2F_m(\cdot)} \leq H$.
\end{assumption}
\begin{remark}[Self Bounding Property]\label{rem:self_bounding}
Second-order smoothness also implies a useful bound on the gradient norm in terms of function sub-optimality. In particular, for any $x \in \mathbb{R}^d$, we have:
\[
\|\nabla F_m(x)\|^2 \leq 2H \big(F_m(x) - F_m(x_m^\star)\big) \enspace,
\]
where $x_m^\star := \arg\min_x F_m(x)$. This inequality can be derived by applying the smoothness condition with the choice
\[
y = x - \frac{1}{H} \nabla F_m(x) \enspace,
\]
which yields
\[
F_m\left(x - \frac{1}{H} \nabla F_m(x)\right) \leq F_m(x) - \frac{1}{2H} \|\nabla F_m(x)\|^2 \enspace.
\]
Since $F_m(x_m^\star) \leq F_m\left(x - \frac{1}{H} \nabla F_m(x)\right)$, rearranging gives the desired bound. This inequality is widely used in optimization to relate stationarity to function sub-optimality under second-order smoothness.
\end{remark}
We often refer to the above assumption as ``smoothness'' or second-order smoothness, where additional context is required to disambiguate it from the following assumption.

\begin{assumption}[Third-order Smoothness]\label{ass:smooth_third}
    For all machines $m \in [M]$, the function $F_m(\cdot)$ is twice continuously differentiable and satisfies for some scalar $Q>0$,
    \begin{align*}
        \norm{\nabla^2 F_m(x) - \nabla^2 F_m(y)} &\leq Q\norm{x-y}\enspace, &&\forall x,\ y\ \in \rr^d\enspace.
    \end{align*}
    When $F_m(\cdot)$ is thrice-differentiable, this is equivalent to assuming,
\begin{align*}
    \left|\frac{\partial^3 F_m(x)}{\partial x_i \partial x_j \partial x_k}\right| \leq Q, 
    \quad \forall x \in \mathbb{R}^d,\;\forall i,j,k\in[d]\enspace.
\end{align*} 
\end{assumption}

We will now consider two canonical loss functions used in machine learning to illustrate the above assumptions.

\subsection{Example 1. Regression with Square Loss}\label{subsec:square_loss}
Assume $\mathcal{Z} = \mathbb{R}^{d+1}$ and all data points are covariate-label pairs, $z = (a, y)$. Then we can define the square loss function as
\begin{align}\label{eg:square_loss}
    f^{\text{square}}\rb{x;(a,y)} &:= \frac{1}{2}\rb{\inner{a}{x}-y}^2\enspace, &&\forall\ a,\ x\ \in \mathbb{R}^{d},\ y\ \in\mathbb{R}\enspace.
\end{align}
Note that the square loss is twice differentiable, and
\begin{align*}
    \nabla_x f^{\text{square}}\rb{x;(a,y)} &= a\rb{\inner{a}{x}-y}\enspace,\\
    \nabla^2_x f^{\text{square}}\rb{x;(a,y)} &= aa^T\enspace,\\
    \nabla^3_x f^{\text{square}}\rb{x;(a,y)} &= 0\enspace,
\end{align*}
which implies that for any $a$, $0\preceq \nabla^2 f^{\text{square}}\rb{x;(a,y)} \preceq \norm{a}^2\cdot I_d$. Thus, the square loss always satisfies \Cref{ass:convex,ass:smooth_third} (with $Q=0$), and can be made to satisfy \Cref{ass:strongly_convex,ass:smooth_second} by imposing suitable conditions on the distribution of $a$. On the other hand, without assuming a bounded domain for $x$, i.e., an upper bound on $\norm{x}$, the square loss does not satisfy \Cref{ass:bounded_gradients}\footnote{While there are relaxations of \Cref{ass:bounded_gradients} that the square loss can satisfy, such as bounding gradients near the optimizer along with a quadratic upper bound elsewhere, we do not pursue those here. Variants like the Huber loss are designed to satisfy \Cref{ass:bounded_gradients} and are useful in robust regression.}.

\subsection{Example 2. Classification with Logistic Loss}
Assume $\mathcal{Z} = \mathbb{R}^{d+1}$ and all data points are covariate-label pairs, $z = (a, y)$ with binary labels. Then we can define the logistic loss function as
\begin{align}\label{eg:logistic_loss}
f^{\text{logistic}}\left(x;(a,y)\right) &:= \log\left(1 + \exp\left(y\langle a, x\rangle\right)\right), &&\forall\ a, x \in \mathbb{R}^{d},\ y \in \{-1,1\}\enspace.
\end{align}
The logistic loss is infinitely differentiable, and
\begin{align*}
\nabla_x f^{\text{logistic}}\left(x;(a,y)\right) &= \frac{-y a}{1 + \exp(y\langle a, x \rangle)}\enspace,\\
\nabla_x^2 f^{\text{logistic}}\left(x;(a,y)\right) &= \frac{\exp(y\langle a, x \rangle)}{\left(1 + \exp(y\langle a, x \rangle)\right)^2}aa^T\enspace,\\
\nabla_x^3 f^{\text{logistic}}\left(x;(a,y)\right) &= \frac{y\exp(y\langle a, x \rangle)\left(1 - \exp(y\langle a, x \rangle)\right)}{\left(1 + \exp(y\langle a, x \rangle)\right)^3}a\otimes a\otimes a\enspace,
\end{align*}
where $\otimes$ denotes the tensor (outer) product, generalizing the outer product of vectors to higher-order tensors. For vectors $u, v, w \in \mathbb{R}^d$, the tensor product $u \otimes v \otimes w$ results in a third-order tensor with entries $(u \otimes v \otimes w)_{ijk} = u_i v_j w_k$. This notation compactly expresses higher-order derivatives. In particular, logistic loss always satisfies \Cref{ass:convex} and can be made to satisfy \Cref{ass:strongly_convex} by imposing assumptions on the distribution of covariates $a$. The norm of its gradient is bounded by $\norm{a}$, and the spectral norm of its Hessian is bounded by $\norm{a}^2$. Since the third-order derivative tensor has operator norm at most $\norm{a}^3$, the third-order smoothness constant can also be bounded accordingly. Hence, if $\norm{a} \leq C$ with high probability, then \Cref{ass:bounded_gradients,ass:smooth_second,ass:smooth_third} are satisfied with $L \propto C$, $H \propto C^2$, and $Q \propto C^3$ respectively. This shows that the logistic loss is “smoother” than square loss for large values of $x$.

\begin{remark}
Given the above two loss functions, it is tempting to make regularity assumptions directly on the loss function $f$ instead of the objectives $F_m$'s. Indeed, in practice, most loss functions encountered in learning problems are not merely convex in expectation (over data) but are individually convex and smooth for each sample. Nonetheless, assuming strong convexity and smoothness only at the level of the expected objective can sometimes lead to tighter constants in theoretical bounds.

Finally, there is a growing body of work that leverages heterogeneity in regularity across machines---for example, using importance sampling strategies to weight gradient updates differently---but such techniques are beyond the scope of this thesis.
\end{remark}

\section{Restrictions on the Oracle Model}\label{sec:ch2.3}
The oracle framework is a very common abstraction in optimization literature \citep{nesterov2018lectures, nemirovski1994efficient, woodworth2018graph}, and an oracle call can be seen as a unit of information and/or computation. This is especially useful when providing lower-bound results. Specifically, recall that in the intermittent communication model, the machines communicate for $R$ communication rounds with $K$ time steps in between. We denote the total time horizon by $T:=KR$. Then at each time step $t\in[0,T-1]$, machine $m\in[M]$ queries its oracle $\ooo^m$ with a point $x_t^m\in\rr^d$ which returns an output $\ooo^m(x_t^m)$. The most common oracle studied in stochastic optimization is a stochastic first-order oracle which can be used to implement different distributed variants of SGD.

\begin{definition}[Stochastic First-order Oracle]\label{def:oracle_first}
For all $m\in[M]$, machine $m\in[M]$ is equipped with an oracle $\ooo_m:\rr^d\times \Delta(\zzz)\to\rr\times\rr^d$, such that for any $x\in\rr^d$ the oracle samples a \textit{random datum} $z\sim \ddd_m$ and outputs $\rb{s_z(x),\ g_z(x)}$ such that $\ee\sb{s_z(x)|x} = F_m(x)$ and $\ee\sb{g_z(x)|x} = \nabla F_m(x)$.
\end{definition}
We will assume that the stochastic gradients have bounded moments, which allows us to control the variability in stochastic gradients. 
\begin{assumption}[Bounded Fourth Moment of Stochastic Gradients]\label{ass:stoch_bounded_fourth_moment}
    For all $m \in [M]$ and $x \in \rr^d$, let $\rb{s_z(x),\ g_z(x)} = \ooo_m(x)$ then we assume
    \(
    \ee_{z \sim \ddd_m}[\norm{g_z(x) - \nabla F_m(x)}^4 \mid x] \leq \sigma_{4,m}^4\enspace
    \). We also denote $\bar\sigma_4^4 := \frac{1}{M}\sum_{m\in[M]}\sigma_{4,m}^4$. 
\end{assumption}
In some cases we only need the following weaker second moment bound.
\begin{assumption}[Bounded Second Moment of Stochastic Gradients]\label{ass:stoch_bounded_second_moment}
    For all $m \in [M]$ and $x \in \rr^d$, let $\rb{s_z(x),\ g_z(x)} = \ooo_m(x)$ then we assume
    \(
    \ee_{z \sim \ddd_m}[\norm{g_z(x) - \nabla F_m(x)}^2 \mid x] \leq \sigma_{2,m}^2\enspace.
    \)We also denote $\bar\sigma_2^2 := \frac{1}{M}\sum_{m\in[M]}\sigma_{2,m}^2$. 
\end{assumption}
\begin{remark}[Stochastic Gradients for Learning Problems]
For problems of the form \eqref{prob:scalar}, implementing a first-order oracle under either of the above moment assumptions is straightforward: it amounts to sampling a data point $z \sim \mathcal{D}_m$ and computing $(f(\cdot; z), \nabla f(\cdot; z))$. This is justified by our assumption that for each $z \in \mathcal{Z}$, the function $f(\cdot; z)$ is differentiable, and for each fixed $x \in \mathbb{R}^d$, the map $z \mapsto \nabla f(x; z)$ is measurable. Together with the bounded moment assumptions---namely, $\mathbb{E}_{z \sim \mathcal{D}_m}[\|\nabla f(x; z)\|] < \infty$ for all $x \in \mathbb{R}^d$ (which can be ensured by \Cref{ass:stoch_bounded_second_moment,ass:stoch_bounded_fourth_moment})---these regularity conditions are sufficient to justify the interchange of expectation and differentiation:
\[
\nabla F_m(x) = \nabla \mathbb{E}_{z \sim \mathcal{D}_m}[f(x; z)] = \mathbb{E}_{z \sim \mathcal{D}_m}[\nabla f(x; z)]\enspace.
\]
For discrete probability distributions $\mathcal{D}_m$, the interchange of expectation and differentiation follows directly from the linearity of expectation, since $F_m(x)$ reduces to a finite or countable sum over differentiable functions $f(x; z)$.
\end{remark}
\begin{remark}[Relaxing the Regularity Assumptions]
For most of our analyses, the regularity assumptions stated in the previous section can be relaxed to hold only for the expected objective function on each machine. For instance, instead of requiring \Cref{ass:strongly_convex,ass:smooth_second} to hold pointwise for every realization of $z \in \mathcal{Z}$, it suffices to assume that the expected objective satisfies $\mu \cdot I_d \preceq \nabla^2 F_m(\cdot) \preceq H \cdot I_d$, for all $m \in [M]$. The same applies to other regularity assumptions, such as smoothness or third-order bounds. 

This relaxation is possible because, as we will see later, our analysis primarily relies on \Cref{ass:stoch_bounded_second_moment,ass:stoch_bounded_fourth_moment} and uses conditional expectations to abstract away the stochasticity of individual updates. 
\end{remark}
\begin{remark}[Relaxing the Bounded Moments Assumption]
It is sometimes possible in the analysis of SGD algorithms to relax \Cref{ass:stoch_bounded_second_moment,ass:stoch_bounded_fourth_moment} so that these assumptions only need to hold at the optima of each machine, denoted by $S_m^\star = \arg\min_{x \in \mathbb{R}^d} F_m(x)$, rather than for all $x \in \mathbb{R}^d$. Some of our analyses can be extended to accommodate this weaker assumption, particularly when combined with \Cref{ass:smooth_second}. 

However, it is unclear how to extend this relaxation to all our results—especially in the strongly convex setting, where global moment bounds are crucial for bounding quantities such as the \emph{consensus error}. Moreover, relaxing the moment bounds to hold only near the optima is known to make optimization more complex, particularly in the context of accelerated~\cite{woodworth2021even}. For these reasons, we do not pursue this relaxation in our work.
\end{remark}

\section{Local SGD Algorithm and Notation}\label{sec:ch2.4}
For ease of notation in this section and throughout we will often omit the oracle calls, and assume the stochastic first-order oracle (c.f., \Cref{def:oracle_first}) is implemented by sampling a data point on each machine and computing the gradient on that point. With this in mind, we can write the update for Local SGD (initialized at $x_0$) for round $r\in[R]$ as follows,
\begin{equation}\label{eq:local_updates}
    \begin{aligned}
    x_{r,0}^m &= x_{r-1}, && \forall\ m\in[M]\\
    x_{r,k+1}^m &= x_{r,k}^m - \eta \nabla f(x_{r,k}; z_{r,k}^m),\ z_{r,k}^m\sim \ddd_m, && \forall\ m\in[M], k\in[0, K-1]\\
    \bar x_{r} &= x_{r-1} + \frac{\beta}{M}\sum_{m\in[M]}\rb{x_{r,K}^m-\bar x_{r-1}}.   
    \end{aligned}
\end{equation}
Above $x_{r,k}^m$ is the $k^{th}$ local model on machine $m$, leading up to the $r^{th}$ round of communication, while $x_r$ is the consensus model at the end of the $r^{th}$ communication\footnote{We will also often denote the stochastic gradient output of $\ooo^m(x_{r,k}^m)$ by $g_{r,k}^m$.}. For local SGD, $\eta$ is referred to as the inner step size, while $\beta$ is the outer step size. Setting $\beta=1$ recovers \textbf{\textit{``vanilla local SGD''}} with a single step size which has been analyzed in several earlier works \citep{stich2018local, dieuleveut2019communication, khaled2020tighter, woodworth2020local}. Vanilla local SGD is equivalent to averaging the machine's models after $K$ local updates.  An alternative indexing of time would also be useful for vanilla Local SGD, which we will pay the most attention to. At time step $t \in [0,T-1]$ each machine $m\in[M]$ samples $z_t^m \sim \ddd_m$ and performs the update:
\begin{equation}\label{eq:local_updates_continuous}
    \begin{aligned}
    x_{t+1}^m &:= x_t^m - \eta \nabla f(x_t^m; z_t^m), && \text{if}\quad t+1 \mod K \neq 0\enspace,\\
    x_{t+1}^m &:= \frac{1}{M}\sum_{n\in[M]}\rb{x_t^n - \eta \nabla f(x_t^n; z_t^n)} && \text{if}\quad t+1 \mod K = 0\enspace,
    \end{aligned}
\end{equation}
For the above indexing it would be useful to define $\delta(t) = t - (t\mod K)$, i.e., the last communication round on or before time step $t\in[0,T]$.

We will often compare Local SGD with Mini-batch SGD~\citep{dekel2012optimal}. Mini-batch SGD's updates (initialized at $x_0$) for round $r\in[R]$ are as follows,
\begin{equation}\label{eq:MB_SGD}
    \begin{aligned}
    g_{r,k}^m &= \nabla f(\bar x_{r-1}; z_{r,k}^m),\ z_{r,k}^m\sim \ddd_m, && \forall\ m\in[M], k\in[0, K-1]\\
    \bar x_{r} &= \bar x_{r-1} - \frac{\beta}{M}\sum_{m\in[M], k\in[0, K-1]}g_{r,k}^m.
    \end{aligned}
\end{equation}
 The main difference in the mini-batch update compared to vanilla local SGD is that its local gradient is computed at the same point for the entire communication round\footnote{If in the Local-SGD updates in \eqref{eq:local_updates}, we replace $x_{r,K}^m - x_{r-1}$ by $\sum_{k=0}^{K-1}\nabla f(x_{r,k}^m; z_{r,k}^m)$ then by setting $\eta=0$ we can recover the mini-batch update in \eqref{eq:MB_SGD}. Thus Local SGD with two step-sizes generalizes both vanilla Local SGD and mini-batch SGD.}. Due to this, mini-batch SGD is not impacted by data heterogeneity, as it optimizes $F$ without getting affected by the multi-task nature of problem \eqref{prob:scalar}\footnote{We will discuss lower bounds and optimization baselines further in \Cref{ch:baselines_and_lower_bounds}.}. However, this is also why local SGD can intuitively outperform mini-batch SGD: it has more effective updates than mini-batch SGD. For e.g., without noise, i.e., $\sigma=0$, mini-batch SGD keeps obtaining the gradient at the same point, thus making only $R$ updates instead of $KR$ updates of local SGD. 

Comparing vanilla Local SGD and mini-batch SGD, and showing that Local SGD can beat mini-batch SGD when the data distributions across the machines is similar, is one of the main thrusts in the optimization literature in Federated Learning. This leads us to the discussion of data heterogeneity assumptions in the next section. 

\section{Restrictions on Data Heterogeneity}\label{sec:ch2.5}
In order to discuss different notions of data-heterogeneity, it would be helpful to define following sets of optima.
\begin{definition}[Machines' and Average Objective's Optima]
    For all $m \in [M]$, define the set of optima as $S_m^\star := \arg\min_{x \in \rr^d} F_m(x)$. Similarly, define the set of optima for the average objective as $S^\star := \arg\min_{x \in \rr^d} F(x)$. 
\end{definition}
For optimization to be feasible, we must assume that at least some of the solutions in the target set are easily recoverable. One necessary condition for that to happen is that the optima are not arbitrarily large. 
\begin{assumption}[Bounded Optima]\label{ass:bounded_optima}
    For all machines $m\in[M]$, $\exists$ $x_m^\star \in S_m^\star$ such that $\norm{x_m^\star} \leq B_m$. We will define $\bar B := \frac{1}{M}\sum_{m\in[M]}B_m$. Similarly, $\exists$ $x^\star \in S^\star$ such that $\norm{x^\star} \leq B$.
\end{assumption}
It is worth noting that $B$ can be much larger than $\bar B$, especially as $M$ grows.
\begin{figure}[t]
    \centering
    \includegraphics[width=0.8\linewidth]{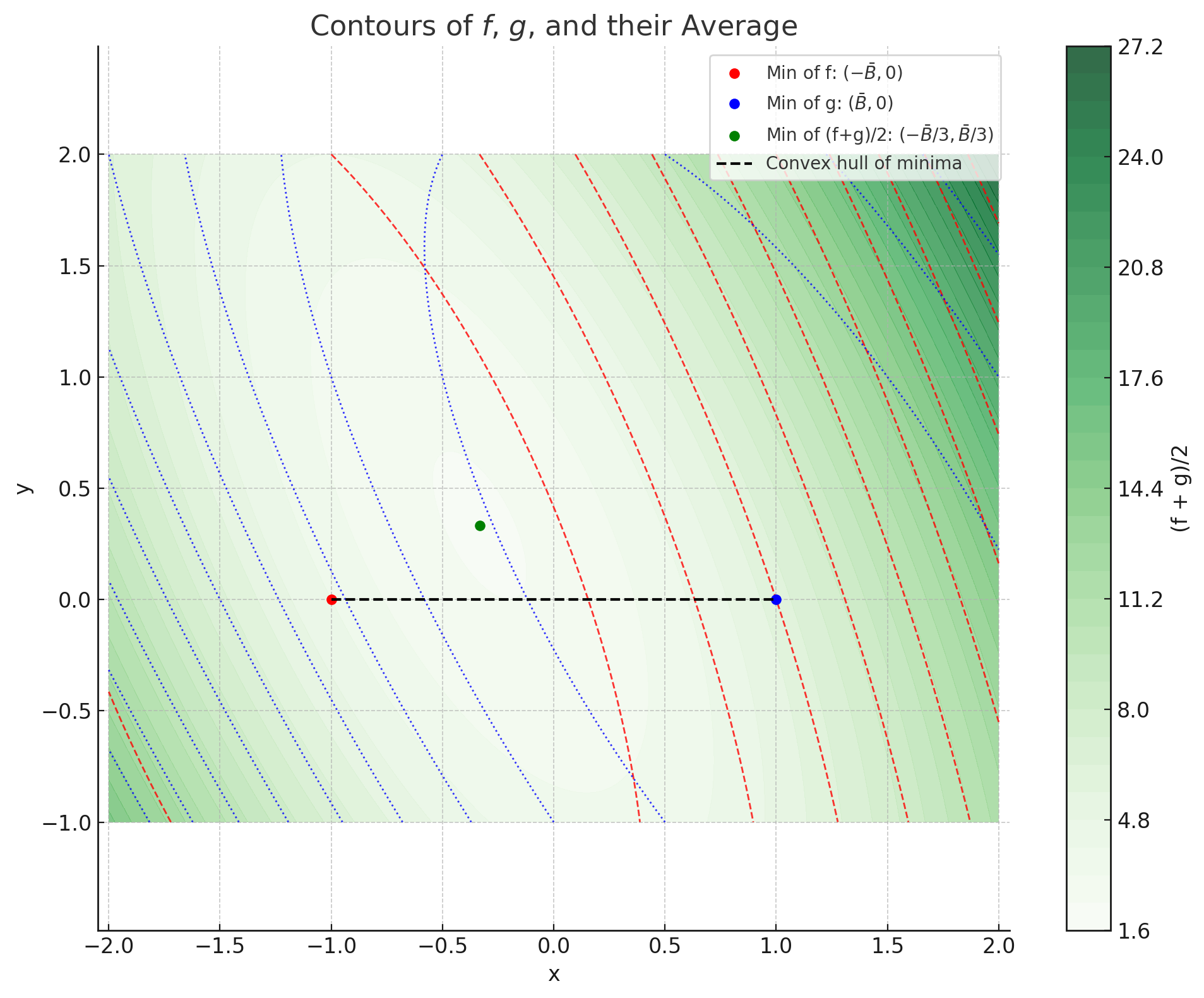}
    \caption{Illustration of the functions in \Cref{prop:barB_vs_B}.}
    \label{fig:prop_1}
\end{figure}
\begin{proposition}\label{prop:barB_vs_B}
    There exists a problem instance satisfying \Cref{ass:convex,ass:smooth_second,ass:smooth_third} such that $B \geq \frac{\sqrt{M}\bar B}{3}$. 
\end{proposition}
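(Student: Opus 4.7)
The plan is to exhibit a single explicit quadratic construction for which the individual minimizers are small, yet the average objective is minimized only at a point with norm of order $\sqrt{M}$. The intuition is that each $F_m$ constrains only a single coordinate of $x$ to equal a nonzero value, so each $F_m$ individually admits a minimizer that is zero in all other coordinates, while the averaged objective simultaneously constrains every coordinate, forcing its minimizer to have $M$ nonzero components.

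Concretely, I would take $d = M$ and, for each machine $m \in [M]$, set
\begin{equation*}
F_m(x) \;=\; \tfrac{1}{2}\bigl(x_m - 1\bigr)^2, \qquad x \in \rr^M.
\end{equation*}
The first step is to verify the regularity assumptions. Each $F_m$ is a nonnegative quadratic in a single coordinate, so $F_m$ is convex (\Cref{ass:convex}). Its Hessian is the rank-one projector $e_m e_m^\top$, giving spectral norm exactly $1$, so \Cref{ass:smooth_second} holds with $H = 1$. Since each $F_m$ is quadratic, $\nabla^2 F_m$ is constant, so \Cref{ass:smooth_third} holds trivially with $Q = 0$.

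The second step is to compute the local and global optima. For machine $m$, $S_m^\star = \{x \in \rr^M : x_m = 1\}$, and the minimum-norm element of this affine subspace is $e_m$, with $\|e_m\| = 1$; hence one may take $B_m = 1$ for every $m$, giving $\bar B = 1$. For the averaged objective,
\begin{equation*}
F(x) \;=\; \frac{1}{2M}\sum_{m\in[M]}(x_m - 1)^2,
\end{equation*}
which is strictly convex with unique minimizer $\mathbf{1} = (1,\dots,1) \in \rr^M$, so $S^\star = \{\mathbf{1}\}$ and necessarily $B = \|\mathbf{1}\| = \sqrt{M}$.

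Combining these computations yields $B = \sqrt{M}\,\bar B \geq \tfrac{\sqrt{M}\,\bar B}{3}$, proving the claim. There is no real obstacle here beyond identifying the right construction: once one notices that geometric misalignment between the directions each $F_m$ penalizes can cause coordinate-wise requirements to accumulate under averaging, both the construction and the subsequent verifications are direct. Any analogous example where each $F_m$ depends on a different one-dimensional projection (e.g.\ $F_m(x) = \tfrac{1}{2}(\langle a_m, x\rangle - 1)^2$ for pairwise orthogonal unit vectors $a_m$) would serve equally well.
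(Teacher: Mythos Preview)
Your proof is correct and in fact yields the sharper conclusion $B \ge \sqrt{M}\,\bar B$, but it exploits a different mechanism than the paper's argument. In your construction each $F_m$ has a rank-one Hessian, so $S_m^\star$ is an entire affine hyperplane that happens to pass within distance $1$ of the origin; the gap between $\bar B$ and $B$ then comes from the fact that the \emph{intersection} of these hyperplanes (which equals $S^\star$) sits far from the origin even though each hyperplane individually does not. The paper instead builds $F_m$'s that are strictly convex---each $S_m^\star$ is a single point of norm $\bar B$---and shows that the minimizer of the average can fall outside the convex hull of these points, which is the phenomenon the surrounding discussion is meant to illustrate (and which is later revisited under \Cref{ass:strongly_convex}). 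Your construction is more elementary and gives a cleaner constant, while the paper's construction makes a stronger conceptual point: the $\sqrt{M}$ blow-up can occur even when every machine's optimum is uniquely determined, so it is not merely an artifact of degenerate solution sets.
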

\begin{proof}
    Our construction uses square loss, which makes $F_m$'s quadratic functions. Note that assuming every data-point is $z=(\beta,y)$ we can denote the hessian of machine $m$'s objective as,
    \begin{align*}
        \nabla^2 F_m(\cdot) = \ee_{(\beta,y)}\sb{\beta\beta^T}\enspace.  
    \end{align*}
    The Hessian only depends on the co-variate distribution. In particular, note that by supporting the distribution $\ddd_m$ on $d$ different $\beta$'s and choosing the probabilities appropriately we can construct any positive semi-definite Hessian: this follows from the singular value decomposition of $\nabla^2 F_m(\cdot)$. This allows us to state the functions on each machine, without loss of generality, and ignore the actual distributions $\ddd_m$'s.

    Before we state the functions explicitly, we first consider the following two quadratic functions in two dimensions:
    \begin{align*}
        f(x,y) = 2\rb{x+\bar B}^2 + \rb{x+y+\bar B}^2\quad \text{and} \quad g(x,y) = \rb{x-\bar B}^2 + \rb{x+y-\bar B}^2\enspace.
    \end{align*}
Note that both these functions are strictly convex with optimizers at $\rb{-\bar B,0}$ and $\rb{\bar B,0}$ respectively. However, the optimizer of the average of these functions is given by $\rb{-\frac{\bar B}{3}, \frac{\bar B}{3}}$, which is notably not on the convex hull of the optimizers of the constituent functions. We illustrate these functions in \Cref{fig:prop_1}. To see this, note the gradients for these functions and the average function:
\begin{align*}
    \nabla f(x,y) &= \begin{bmatrix}
        4(x+\bar B) + 2(x + y + \bar B)\\
        2(x + y + \bar B)
    \end{bmatrix}\enspace;\ \nabla f(x,y) =0 \Rightarrow (x,y) = (-\bar B, 0)\enspace.\\
    \nabla g(x,y) &= \begin{bmatrix}
        2(x-\bar B) + 2(x + y - \bar B)\\
        2(x + y - \bar B)
    \end{bmatrix}\enspace;\ \nabla g(x,y)=0 \Rightarrow (x,y) = (\bar B, 0)\enspace.\\ 
    \nabla \rb{\frac{f+g}{2}}(x,y) &= \begin{bmatrix}
        6x + 2\bar B + 2(x + y)\\
        2(x + y)
    \end{bmatrix}\enspace;\ \rb{\frac{f+g}{2}}(x,y) =0 \Rightarrow (x,y) =\rb{-\frac{\bar B}{3}, \frac{\bar B}{3}} \enspace.
\end{align*}
Now we define $M$ different objectives on $d$ dimensions (assuming $M$, $d$ are even for simplicity) as follows: 
\begin{align*}
    F_1(x) &= f(x[1],x[2]) + \frac{1}{2}\norm{(0,0,x[3], \dots, x[M])}^2\enspace,\\
    F_2(x) &= g(x[1],x[2]) + \frac{1}{2}\norm{(0,0,x[3], \dots, x[M])}^2\enspace,\\
    F_3(x) &= f(x[3],x[4]) + \frac{1}{2}\norm{(x[1],x[2],0,0,x[4] \dots, x[M])}^2\enspace,\\
    F_4(x) &= g(x[3],x[4]) + \frac{1}{2}\norm{(x[1],x[2],0,0,x[4] \dots, x[M])}^2\enspace,\\
    &\vdots\\
    F_{M-1}(x) &= f(x[M-1],x[M]) + \frac{1}{2}\norm{(0, 0, \dots, x[M-1],x[M])}^2\enspace,\\
    F_M(x) &= g(x[M-1],x[M]) + \frac{1}{2}\norm{(0, 0, \dots, x[M-1],x[M])}^2\enspace.
\end{align*}
Due to the properties of $f,g$ that we discussed above note that the optimizer of each machine has a norm $\bar B$. However, due to the decoupling of dimensions across every other pair of odd and even machine, the optimizer of the average objective is given by $x^\star = \rb{-\frac{\bar B}{3}, \frac{\bar B}{3},-\frac{\bar B}{3}, \frac{\bar B}{3},\dots, -\frac{\bar B}{3}, \frac{\bar B}{3}}$. Thus in order to satisfy \Cref{ass:bounded_optima} we most choose $B\geq \frac{B\sqrt{M}}{3}$.

This proves the proposition.
\end{proof}
The proof of the above proposition relies on the key idea that, in higher dimensions, the optimizer of the average objective may not lie within the convex hull of the optima of individual objectives. This is an aspect of optimizing Problem \eqref{prob:scalar} that could make it much more complex than the individual optimization problems of converging to solution sets $S_m^\star$'s. To control for this behavior, we will first introduce the following assumption, which governs the maximum distance between the solution sets of each machine. 
\begin{assumption}[Discrepancy between Machines' Optima]\label{ass:zeta_star}
    For machines $m,n \in [M]$,  there exists $\zeta_{\star,m,n}= \zeta_{\star,n,m} \leq B_m + B_n$ such that, 
    \[
    \inf_{x_m^\star \in S_m^\star,\ x_n^\star \in S_n^\star} \norm{x_m^\star - x_n^\star} = \zeta_{\star,m,n} = \zeta_{\star,n,m}\enspace.
    \]
    We also denote $\zeta_{\star,m} = \frac{1}{M}\sum_{n\in[M]}\zeta_{\star,m,n}\leq 2\bar B$ for all $m\in[M]$, and $\zeta_\star^4 := \frac{1}{M^2}\sum_{m,n\in[M]}\zeta_{\star,m,n}^4$.
\end{assumption}
All machines share a common minimizer if and only if $\zeta_\star = 0$, and in that situation, solving Problem~\eqref{prob:scalar} recovers this global optimum. However, when machines do not share an optimizer, we must additionally assume that at least one minimizer of the average objective is approximately optimal for each machine. Without this condition, some clients may not benefit from collaboration.
\begin{assumption}[Discrepancy between Machines' and the Average Objective's Optima]\label{ass:phi_star}
    For each machine $m\in[M]$, there exists a $x^\star \in S^\star$ and a $\phi_{\star,m} \leq B + B_m$ such that 
    \[
    \inf_{x_m^\star \in S_m^\star} \norm{x_m^\star - x^\star} = \phi_{\star,m} \enspace.
    \]
    We also denote $\phi_\star^4 := \frac{1}{M}\sum_{m\in[M]}\phi_{\star, m}^4$.
\end{assumption}
\begin{remark}[Other First-order Heterogeneity Assumptions]\label{rem:other_first_order_ass}
Most existing first-order heterogeneity conditions are variants of \Cref{ass:phi_star}. Notably, using \Cref{ass:smooth_second} and choosing $x^\star$ and $x_m^\star$ to be the optima implied by \Cref{ass:phi_star} we can conclude that,
\begin{align*}
    \frac{1}{M}\sum_{m\in[M]}\norm{\nabla F_m(x^\star)}^2 &=  \frac{1}{M}\sum_{m\in[M]}\norm{\nabla F_m(x^\star) - \nabla F_m(x_m^\star)}^2\enspace,\\
    &\leq^{\text{(\Cref{ass:smooth_second})}} \frac{1}{M}\sum_{m\in[M]} H^2\norm{x^\star - x_m^\star}^2\enspace,\\
    &\leq \frac{1}{M}\sum_{m\in[M]}H^2\phi_{\star,m}^2 = H^2\phi_\star^2\enspace.
\end{align*}
Essentially, \Cref{ass:phi_star} implies that the clients are approximately simultaneously stationary at some $x^\star\in S^\star$. This notion of simultaneous stationarity was used in several existing works as a data heterogeneity assumption~\cite{woodworth2020minibatch,glasgow2022sharp,patel2022towards,patel2023still,patel2024limits}. But we believe that stating this in the form of \Cref{ass:phi_star}, makes the actual source of heterogeneity more transparent. The quantity $\phi_\star$ captures the notion of \emph{approximate simultaneous realizability} across clients and has also appeared in the literature on collaborative PAC learning and incentives for Federated Learning \citep{blum2017collaborative, nguyen2018improved, haghtalab2022demand, han2023effect}. All these assumptions are usually referred to as first-order heterogeneity assumptions at the optima.
\end{remark}
\begin{remark}[$\zeta_\star$ vs.\ $\phi_\star$]\label{rem:zeta_star_vs_phi_star}
    With \Cref{ass:strongly_convex}, all the machines and the average objective have a unique optimum. In particular then \Cref{ass:phi_star} implies \Cref{ass:zeta_star} with $\zeta_\star \leq 2\phi_\star$. However, the reverse is not true in general. Using a similar construction as in \Cref{prop:barB_vs_B} we note that there is a quadratic problem instance which satisfies \Cref{ass:zeta_star} with $\zeta_\star$ can be chosen to be $2\bar B$, yet satisfies \Cref{ass:phi_star} only with $\phi_\star \geq \frac{\sqrt{M}\bar B}{3}$ which can be much larger when $M$ is large. Essentially, the gap between $\phi_\star$ and $\zeta_\star$ also comes from the fact that in higher dimensions, averaging the machines' objectives can push the minimizer away from $S_m^\star$'s.  
\end{remark}
\subsection{Is a Small First-order Heterogeneity Enough?}\label{sec:is_first_enough}
One might conjecture that when $\phi_\star$ in \Cref{ass:phi_star} is small then local updates in \eqref{eq:local_updates} should be helpful, as we could more quickly recover the approximately simultaneously optimal solution $x^\star$. However, consider the following quadratic example\footnote{Recall we can construct any quadratic objectives using singular value decomposition and an appropriate data distribution as in the proof of \Cref{prop:barB_vs_B}.} on two machines and in two dimensions~\cite{patel2024limits},
\begin{align*}
    F_1(x) &:= \frac{1}{2}(x-x^\star)^T\begin{bmatrix}
        H & 0\\
        0 & 0
    \end{bmatrix}(x-x^\star)\enspace,\\
    F_2(x) &:= \frac{1}{2}(x-x^\star)^T\begin{bmatrix}
        0 & 0\\
        0 & H
    \end{bmatrix}(x-x^\star)\enspace.
\end{align*}
The above two objectives share an optimizer $x^\star$, and thus both notions of data heterogeneity in \Cref{ass:zeta_star,ass:phi_star} are zero. Each machine's objective also satisfies \Cref{ass:smooth_second}. Assuming we run local GD (i.e., we don't have stochastic gradients but exact gradients) on both machines initialized at $(0,0)$, then the iterate after $R$ rounds is given by (proved in \Cref{app:chap2}), 
\begin{align}\label{eq:motivation}
  \bar x_R = x^\star\rb{1 - \rb{1- \frac{\beta}{2}\rb{1 - (1-\eta H)^K}}^R}\enspace.  
\end{align}
Let us first consider vanilla local SGD, i.e., $\beta=1$. The above expression simplifies to
\begin{align*}
  \bar x_R = x^\star\rb{1 - \rb{\frac{1 + (1-\eta H)^K}{2}}^R}\enspace.  
\end{align*}
Thus, even if $K\to\infty$, $x_R$ \textbf{does not} converge to $x^\star$ for finite $R$. 
\begin{remark}[The Role of the Outer Step-size]
        For the above example, if we set $\beta=2$, $x_R$ will converge to $x^\star$ in a single communication round when $K\to\infty$! Thus, in this example, we can not rule out that \Cref{ass:zeta_star,ass:phi_star} capture all the essential data heterogeneity in our problem. In \Cref{ch:baselines_and_lower_bounds} we will show a construction that works for any outer step-size $\beta$ and also show that (an accelerated variant of) mini-batch SGD in \eqref{eq:MB_SGD} is ``optimal''.
\end{remark}

A closer inspection of the above example reveals that the two machines exhibit very different curvature profiles along different directions. This discrepancy causes their local updates to diverge between communication rounds. To account for this effect, it becomes essential to control the geometry induced by the second derivative of each local objective, which governs the behavior of first-order methods.

To formalize this idea, we introduce the following second-order heterogeneity assumption:

\begin{assumption}[Bounded Second-order Heterogeneity]\label{ass:tau}
There exists $\tau \leq 2H$ such that
\[
\sup_{m,n \in [M]} \sup_{x \in \mathbb{R}^d} \|\nabla^2 F_m(x) - \nabla^2 F_n(x)\| \leq \tau \enspace.
\]
\end{assumption}

Note that $\tau$ measures the second-order smoothness of the difference $F_m(\cdot) - F_n(\cdot)$ for any pair of machines $m,n \in [M]$. This observation will allow us to replace $H$ with the typically smaller quantity $\tau$ in several parts of our analysis, leading to tighter bounds that better capture the role of data heterogeneity. This second-order heterogeneity assumption has been used by several works, although mainly in the non-convex setting~\citep{karimireddy2020mime,murata2021bias,patel2022towards}. 

\begin{remark}[$\zeta_\star$ and $\tau$ vs.\ $\phi_\star$]\label{rem:zeta_star_tau_vs_phi_star}
    In some settings, the parameter $\phi_\star$ in \Cref{ass:phi_star} can be bounded using $\zeta_\star$ and $\tau$ from \Cref{ass:zeta_star,ass:tau}. For example, if each $F_m$ is a strongly convex quadratic function with Hessian $A_m$ and unique minimizer $x_m^\star$, then the global optimum satisfies
    \[
    x^\star = A^{-1} \cdot \frac{1}{M} \sum_{m \in [M]} A_m x_m^\star \enspace,
    \]
    where \( A := \frac{1}{M} \sum_{m \in [M]} A_m \). Using this and defining $\bar{x}^\star := \frac{1}{M} \sum_{m} x_m^\star$ we can derive~\citep{patel2025revisiting} for $m\in[M]$,
    \begin{align*}
    \norm{x_m^\star - x^\star} &\leq \norm{x_m^\star - \bar x^\star} + \norm{\bar x^\star - x^\star}\enspace,\\
    &= \norm{\frac{1}{M}\sum_{n\in[M]}\rb{x_m^\star - x_n^\star}} + \norm{\frac{1}{M}\sum_{n\in[M]}\rb{x_n^\star - A^{-1}A_nx_n^\star}}\enspace,\\
    &\leq \frac{1}{M}\sum_{n\in[M]}\norm{x_m^\star - x_n^\star} + \norm{\frac{1}{M}\sum_{n\in[M]}A^{-1}\rb{A - A_n}\rb{x_n^\star - \bar x^\star}}\enspace,\\
    &\leq^{\text{(\Cref{ass:zeta_star})}} \frac{1}{M}\sum_{n\in[M]}\zeta_{\star,m,n} +  \frac{1}{M}\sum_{n\in[M]}\norm{A^{-1}\rb{A-A_n}\rb{x_n^\star-\bar x^\star}}\enspace,\\
    &\leq \frac{1}{M}\sum_{n\in[M]}\zeta_{\star,m,n} + \frac{1}{M}\sum_{n\in[M]}\norm{A^{-1}}\norm{A-A_n}\norm{x_n^\star-\bar x^\star}\enspace,\\
    &\leq \zeta_{\star,m} + \frac{1}{M^2}\sum_{l,n\in[M]}\frac{\tau \zeta_{\star,l,n}}{\mu} = \zeta_{\star,m} + \frac{\tau\zeta_\star}{\mu}\enspace.
\end{align*}
   Averaging this over $m\in[M]$ implies that we can choose $\phi_\star = \zeta_\star (1 + \tau / \mu)$. Thus, when $\tau/\mu$ is small $\phi_\star$ and $\zeta_\star$ can be comparable, which is in contrast to what we discussed in the absence of \Cref{ass:tau} in \Cref{rem:zeta_star_vs_phi_star}. For general non-quadratic problems, however, it may not be possible to eliminate the dependence on $\phi_\star$.
\end{remark}
Finally, we state a first-order heterogeneity assumption which, while more restrictive and less transparent than the second-order assumptions discussed earlier, significantly simplifies the analysis of local update algorithms.

\begin{assumption}[Uniform Bounded First-order Heterogeneity]\label{ass:zeta}
Assume the objectives on each machine satisfy \Cref{ass:smooth_second}. Then there exists a constant $\zeta > 0$ such that
\[
\sup_{m,n \in [M]} \sup_{x \in \mathbb{R}^d} \|\nabla F_m(x) - \nabla F_n(x)\| \leq H \cdot \zeta \enspace.
\]
\end{assumption}

\Citet{woodworth2020minibatch} demonstrated that vanilla Local SGD can outperform mini-batch SGD under \Cref{ass:zeta}. In \Cref{ch:upper_bounds}, we also analyze Local SGD's convergence in terms of the heterogeneity parameter $\zeta$. However, as shown below, \Cref{ass:zeta} can be overly restrictive.

\begin{proposition}\label{prop:zeta_limitation}
Let $F_m(x) = \frac{1}{2}x^\top A_m x + b_m^\top x + c_m$ for all $m \in [M]$. If the objectives $\{F_m\}_{m \in [M]}$ satisfy \Cref{ass:zeta} for some finite $\zeta < \infty$, then for any two machines $m, n \in [M]$, it must hold that $A_m = A_n$.
\end{proposition}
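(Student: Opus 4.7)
The plan is to exploit the explicit form of the gradient difference under the quadratic structure and use the fact that an unbounded affine map cannot have a uniform bound in $x$.

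First, I would compute the gradient of each $F_m$, obtaining $\nabla F_m(x) = A_m x + b_m$, and then write the pairwise gradient difference as the affine function
\begin{equation*}
\nabla F_m(x) - \nabla F_n(x) = (A_m - A_n)x + (b_m - b_n) \enspace.
\end{equation*}
Assumption~\ref{ass:zeta} then requires that this affine map has norm bounded by $H\zeta$ for every $x\in\mathbb{R}^d$ and for every pair $m,n\in[M]$.

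Next, I would argue by contradiction: suppose $A_m \neq A_n$ for some pair. Then the symmetric matrix $A_m - A_n$ has at least one nonzero eigenvalue $\lambda$ with associated unit eigenvector $v$. Plugging in $x = t\cdot v$ for $t\in\mathbb{R}$ gives
\begin{equation*}
\norm{(A_m - A_n)(t v) + (b_m - b_n)} \geq |t\lambda| - \norm{b_m - b_n} \enspace,
\end{equation*}
by the reverse triangle inequality. Letting $|t|\to\infty$ makes the right-hand side diverge, contradicting the uniform bound $H\zeta < \infty$. Hence $A_m - A_n = 0$, i.e., $A_m = A_n$, for every pair $m,n\in[M]$.

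The argument is essentially a one-liner and I do not foresee a real obstacle; the only subtlety worth flagging in the write-up is to justify why the linear part of an affine map must vanish when the map is uniformly bounded on $\mathbb{R}^d$, which is exactly the eigenvalue-scaling step above. Note also that the proposition says nothing about the affine parts $b_m$: any finite $\zeta$ still allows arbitrary differences $b_m - b_n$ (with $\zeta \geq \sup_{m,n}\norm{b_m - b_n}/H$), so the claim is really that first-order heterogeneity in the sense of Assumption~\ref{ass:zeta} is only compatible with identical Hessians across clients, highlighting its restrictiveness as foreshadowed in the remark preceding the proposition.
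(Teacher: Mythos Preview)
Your proposal is correct and follows essentially the same approach as the paper: both exploit that the gradient difference is affine in $x$, pick a direction along which the linear part is nonzero, and scale to infinity to contradict the uniform bound $H\zeta$. The only cosmetic differences are that the paper uses standard basis vectors $e_i$ to isolate columns of $A_m - A$ (with $A$ the average Hessian) rather than an eigenvector of $A_m - A_n$, and works relative to the average rather than pairwise; neither change is substantive.
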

The above proposition is proved in \Cref{app:chap2}. In essence, \Cref{ass:zeta} allows heterogeneity only in the linear terms $b_m$ and constants $c_m$, but not in the curvature (i.e., the Hessians) of the functions. This rigidity is precisely why many recent works---such as \citep{khaled2020tighter, karimireddy2020scaffold, koloskova2020unified}---have considered the relaxed first-order heterogeneity assumptions we discussed earlier.

In fact, if we know that the local SGD iterates lie in $\bb_2(D)$, the ball of diameter $D$ around origin, then using \Cref{ass:phi_star,ass:tau}, we can give the following upper bound which avoids \Cref{ass:zeta}. 
\begin{proposition}\label{prop:zeta_bound}
    If the objectives $\{F_m\}_{m \in [M]}$ satisfy \Cref{ass:smooth_second,ass:phi_star,ass:tau} then we have for all $m,n\in[M]$,
    \begin{align*}
        \sup_{x\in\bb_2(D)}\norm{\nabla F_m(x)-\nabla F_n(x)} \leq H\rb{\zeta_{\star,m} + \zeta_{\star,n}} + \tau\rb{D + \bar B}\enspace.
    \end{align*}
\end{proposition}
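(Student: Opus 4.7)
The proof rests on the observation that \Cref{ass:tau} makes $\nabla F_m - \nabla F_n$ a $\tau$-Lipschitz vector field: its Jacobian is $\nabla^2 F_m - \nabla^2 F_n$, whose operator norm is bounded by $\tau$. Consequently, for any reference point $y \in \rr^d$,
\[
\norm{\nabla F_m(x) - \nabla F_n(x)} \leq \norm{\nabla F_m(y) - \nabla F_n(y)} + \tau\,\norm{x-y}\enspace,
\]
which is the basic tool I would use to translate the heterogeneity from the arbitrary test point $x$ into a point near each machine's optima. The role of \Cref{ass:smooth_second} is complementary: at any minimizer $\tilde x_m^\star \in S_m^\star$, $\nabla F_m(\tilde x_m^\star)=0$, so $\norm{\nabla F_m(y)} \leq H\,d(y, S_m^\star)$.

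\textbf{Plan.} First, I would split the difference through the average objective,
\[
\nabla F_m(x) - \nabla F_n(x) = \bigl(\nabla F_m(x) - \nabla F(x)\bigr) - \bigl(\nabla F_n(x) - \nabla F(x)\bigr)\enspace,
\]
and bound each piece separately, so the problem reduces to controlling $\norm{\nabla F_m(x) - \nabla F(x)}$. Writing $\nabla F_m - \nabla F = \frac{1}{M}\sum_l (\nabla F_m - \nabla F_l)$ as an average of $\tau$-Lipschitz maps, then for each $l$ applying the translation inequality to the pair $(m,l)$ with reference $\tilde x_l^\star \in S_l^\star$ satisfying $\norm{\tilde x_l^\star}\leq B_l$ (guaranteed by \Cref{ass:bounded_optima}), I get
\[
\norm{\nabla F_m(x) - \nabla F_l(x)} \leq \norm{\nabla F_m(\tilde x_l^\star)} + \tau\,\norm{x - \tilde x_l^\star}\enspace.
\]
Applying \Cref{ass:smooth_second} to the first term bounds it by $H\cdot d(\tilde x_l^\star, S_m^\star)$. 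Averaging over $l$, the translation cost is at most $\tau(D + \bar B)$ since $\norm{x}\leq D$ and $\frac{1}{M}\sum_l \norm{\tilde x_l^\star}\leq \bar B$, while the residual contributes $\frac{H}{M}\sum_l d(\tilde x_l^\star, S_m^\star)$, which I would bound by $H\,\zeta_{\star,m}$ by aligning the choices of $\tilde x_l^\star$ with the pairs realising the infima in \Cref{ass:zeta_star}. The role of \Cref{ass:phi_star} is to provide a common near-minimizer $x^\star$ through which the optima on different machines can be consistently anchored. Combining with the symmetric bound for $n$ and collecting terms yields the claimed inequality.

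\textbf{Main obstacle.} The hard part is to justify that a single family $\{\tilde x_l^\star\}_{l\in[M]}$ can simultaneously (i) have norms averaging to at most $\bar B$, and (ii) on average over $l$, realise $d(\tilde x_l^\star, S_m^\star) \leq \zeta_{\star,m,l}$ (and symmetrically for $n$). Each property is guaranteed by a different piece of the definitions ($B_l$ vs.\ $\zeta_{\star,m,l}$), and the infima defining $\zeta_{\star,m,l}$ range over \emph{both} machines' optima sets rather than fixing one small-norm representative. Reconciling this tension cleanly appears to need either an additional geometric step (e.g.\ projecting a near-optimum of $F_l$ onto $S_m^\star$ without inflating its norm) or invoking \Cref{ass:phi_star} to pin every $\tilde x_l^\star$ relative to the common minimizer $x^\star$, at the cost of possibly losing small absolute constants.
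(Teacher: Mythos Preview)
Your core toolkit is exactly right and matches the paper: the map $\nabla F_m - \nabla F_n$ is $\tau$-Lipschitz by \Cref{ass:tau}, so one translates from $x$ to a reference point near the optima, and the residual is controlled via $H$-smoothness and the vanishing of $\nabla F_m$ at $x_m^\star$. The difference is in the choice of reference point, and this is where your plan loses the stated constant.

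By splitting $\nabla F_m - \nabla F_n$ through the average $\nabla F$ and bounding the two halves separately, each half picks up its own translation cost $\tau(D+\bar B)$, so you end up with $H(\zeta_{\star,m}+\zeta_{\star,n}) + 2\tau(D+\bar B)$ rather than the claimed bound. The paper avoids this doubling by not splitting at all: it translates $\nabla G_{m,n}(x) = \nabla F_m(x)-\nabla F_n(x)$ directly to a \emph{single} anchor point $\bar x^\star := \tfrac{1}{M}\sum_l x_l^\star$, incurring $\tau\norm{x-\bar x^\star}\le\tau(D+\bar B)$ just once, and then bounds $\norm{\nabla F_m(\bar x^\star)}+\norm{\nabla F_n(\bar x^\star)}$ by $H(\zeta_{\star,m}+\zeta_{\star,n})$ via smoothness and $\bar x^\star - x_m^\star = \tfrac{1}{M}\sum_l(x_l^\star - x_m^\star)$. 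The detour through $\nabla F$ is unnecessary and costly.

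On your ``main obstacle'': you have correctly identified a genuine tension, and it is not resolved in the paper either. The paper simply fixes one representative $x_m^\star\in S_m^\star$ per machine and asserts both $\tfrac{1}{M}\sum_l\norm{x_l^\star}\le\bar B$ and $\norm{x_m^\star - x_l^\star}\le\zeta_{\star,m,l}$; in the strongly convex regime where each $S_m^\star$ is a singleton this is automatic, and that is the setting where the proposition is subsequently used. Your invocation of \Cref{ass:phi_star} to reconcile the choices is not how the paper proceeds (indeed, the bound involves $\zeta_{\star,m}$ rather than $\phi_{\star,m}$, so the role of \Cref{ass:phi_star} in the hypothesis is vestigial), but your instinct that something is being swept under the rug in the general convex case is well founded.
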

\begin{proof}
    Denote the function $G_{m,n} := F_m - F_n$ for all $m,n\in [M]$. Then note that using Taylor expansion, we can write for any $x\in\bb_2(D)$ and $\bar x^\star = \frac{1}{M}\sum_{m\in[M]}x_m^\star$ where $x_m^\star\in S_m^\star$,
    \begin{align*}
        \nabla G_{m,n}(x) - \nabla G_{m,n}(\bar x^\star) = \sb{\int_{0}^1\nabla^2 G_m(\bar x^\star + s(x-\bar x^\star))ds}(x-\bar x^\star)\enspace.
    \end{align*}
    Re-arranging, taking norms, and applying the triangle inequality implies,
    \begin{align*}
        \norm{\nabla G_{m,n}(x)}  &= \norm{\nabla G_{m,n}(\bar x^\star)  + \sb{\int_{0}^1\nabla^2 G_{m,n}(\bar x^\star + s(x-\bar x^\star))ds}(x-\bar x^\star)}\enspace,\\
        &\leq \norm{\nabla F_{m}(\bar x^\star) - \nabla F_{n}(\bar x^\star)} + \norm{\int_{0}^1\nabla^2 G_{m,n}(\bar x^\star + s(x-\bar x^\star))ds}\norm{x-\bar x^\star}\enspace,\\
        &\leq \norm{\nabla F_{m}(\bar x^\star)} + \norm{\nabla F_{n}(\bar x^\star)} + \sb{\int_{0}^1\norm{\nabla^2 G_{m,n}(\bar x^\star + s(x-\bar x^\star))}ds}\norm{x-\bar x^\star}\enspace,\\
        &\leq^{\text{(\Cref{ass:phi_star,ass:tau})}} \frac{H}{M}\sum_{l\in[M]}\rb{\zeta_{\star,m,l} + \zeta_{\star,n,l}} + \sb{\int_{0}^1\tau ds}\rb{\norm{x} + \norm{\bar x^\star}}\enspace,\\
        &= H\rb{\zeta_{\star,m} + \zeta_{\star,n}} + \tau\rb{D + \bar B} \enspace.
    \end{align*}
    This gives us the desired result.
\end{proof}
The above proposition implies that if we know our algorithm's iterates will be inside a ball $\bb_2(D)$, the smaller the second-order heterogeneity of our problem, the smaller the bound on its first-order heterogeneity. In the extreme case of $\tau=0$, we can replace $\zeta$ with other more reasonable heterogeneity assumptions. Finally, as a sanity check, in the homogeneous setting, i.e., when $\ddd_m$'s are all the same, the right-hand side is zero. Surprisingly despite the above connection, aside from our results in \Cref{ch:upper_bounds}~\citep{patel2024limits, patel2025revisiting}, existing literature has not been able to demonstrate a meaningful advantage of local updates (in the convex setting) without relying on \Cref{ass:zeta}.

\section{Distributed Zero-Respecting Algorithms}\label{sec:ch2.6}
To place our algorithms in a formal framework, which is also useful when discussing lower bounds and the optimality of our algorithms, we will examine the following class of algorithms, which generalizes the class of zero-respecting algorithms~\citep{carmon2020lower,arjevani2019lower} to the distributed setting. 
\begin{definition}[Distributed Zero-respecting Algorithms]\label{def:zero_respecting}
Consider $M$ machines in the intermittent communication setting, each endowed with an oracle $\ooo_m:\rr^d \times \Delta(\zzz)\to \rr\times\rr^d$ and a distribution $\ddd_m$ on $\zzz$. Let $I_{r,k}^m$ denote the input to the $k^{th}$ oracle call, leading up to the $r^{th}$ communication round on machine $m$. An optimization algorithm initialized at $0$ is distributed zero-respecting if:
\begin{enumerate}
    \item for all $\ r\in [R], k\in[K], m\in[M]$, $I_{r,k}^{m}$ is in
$$\cb{\bigcup_{l\in[k-1]}\supp{\ooo_{m}(I_{r,l}^m; \ddd_m)}}\cup \cb{\bigcup_{n\in[M], s\in[r-1], l \in [K]} \supp{\ooo_{n}(I_{s,l}^n; \ddd_n)}},$$
    \item for all $\ r\in [R], k\in[K], m\in[M]$, $I_{r,k}^{m}$ is a deterministic function (which is same across all the machines) of $$\cb{\bigcup_{l\in[k-1]}\ooo_{m}(I_{r,l}^m; \ddd_m)}\cup \cb{\bigcup_{n\in[M], s\in[r-1], l \in [K]} \ooo_{n}(I_{s,l}^n; \ddd_n)},$$
    \item at the $r^{th}$ communication round, the machines only communicate vectors in
$$\cb{\bigcup_{n\in[M], s\in[r], l \in [K]} \supp{\ooo_{n}(I_{s,l}^n; \ddd_n)}}.$$
\end{enumerate}
We denote this class of algorithms by $\boldsymbol{\aaa_{ZR}}$. Furthermore, if all the oracle inputs are the same between two communication rounds, i.e., $I_{r,k}^m = I_r\in \iii$ for all $m\in[M], k\in[K], r\in[R]$, then we say that the algorithm is centralized, and denote this class of algorithms by $\boldsymbol{\aaa_{ZR}^{cent}}\subset \aaa_{ZR}$. 
\end{definition}

This class encompasses a diverse range of distributed optimization algorithms, including Local SGD \citep{mcmahan2016communication} (c.f., \eqref{eq:local_updates}), as well as many distributed variance-reduced algorithms \citep{karimireddy2020mime, zhao2021fedpage, khanduri2021stem, patel2022towards}. Non-distributed zero-respecting algorithms are those whose iterates have components in directions about which the algorithm has no information, meaning that, in some sense, it is just ``wild guessing''. We have also defined the smaller class of centralized algorithms $\aaa_{ZR}^{cent}$. These algorithms query the oracles at the same point within each communication round and use the combined $MK$ oracle queries each round to get a \textit{``mini-batch''} estimate of the gradient. Thus, the class $\aaa_{ZR}^{cent}$ includes algorithms such as mini-batch SGD\citep{dekel2012optimal} (c.f., \eqref{eq:MB_SGD}), accelerated mini-batch SGD \citep{ghadimi2012optimal}, mini-batch SARAH \citep{nguyen2017sarah}, and mini-batch STORM \citep{cutkosky2019momentum}, but doesn't include local-update algorithms in $\aaa_{ZR}$ such as Local-SGD.

\section{Min-Max Optimality}\label{sec:min_max}

In the optimization literature, \textbf{min-max optimality} is a fundamental concept used to characterize the \emph{hardness} of optimizing a problem with a certain type of algorithm. In our setting, a problem instance \( P \) can be fully specified by a loss function \( f : \mathbb{R}^d \times \mathcal{Z} \to \mathbb{R} \) and a collection of \( M \) data distributions \( \mathcal{D}_1, \dots, \mathcal{D}_M \in \Delta(\mathcal{Z}) \). Alternatively, we may adopt the \emph{oracle framework}, representing each instance via objective functions \( \{F_m : \mathbb{R}^d \to \mathbb{R} \}_{m \in [M]} \) and corresponding oracles \( \{\mathcal{O}_m : \mathbb{R}^d \times \Delta(\mathcal{Z}) \to \mathbb{R} \times \mathbb{R}^d \}_{m \in [M]} \), as defined in \Cref{def:oracle_first}.

Let \(\mathcal{P}\) denote a \textbf{class of problems}, defined by restricting the loss function and data distributions to satisfy certain assumptions---for example, smoothness, convexity, bounded stochastic gradient moments, and bounded first-order heterogeneity (\Cref{ass:convex,ass:smooth_second,ass:stoch_bounded_second_moment,ass:zeta}). Let \(\mathcal{A}\) denote a \textbf{class of algorithms}, such as zero-respecting algorithms (\(\mathcal{A}_{ZR}\)) defined over this problem class. Assuming we are interested in the expected sub-optimality of the output, the \emph{min-max optimization error} for this problem-algorithm pair is given by
\begin{align}\label{eq:min_max_optimality}
    \rrr(\aaa,\ppp) := \min_{A \in \mathcal{A}} \max_{P \in \mathcal{P}} \left( \mathbb{E}[F(x^A)] - \min_{x^\star \in \mathbb{R}^d} F(x^\star) \right)\enspace,
\end{align}
where \(x^A\) denotes the (random) output of algorithm \(A\), which may depend on the data distributions and oracles across machines. Importantly, we treat different hyperparameter choices (e.g., different step sizes for Local SGD) as distinct algorithms within \(\mathcal{A}\).

The central goal in much of optimization theory is to characterize the quantity in \eqref{eq:min_max_optimality} up to numerical constants. To establish \textbf{upper bounds} on the min-max complexity, it suffices to construct an algorithm \(A \in \mathcal{A}\) that achieves the desired convergence rate for every \(P \in \mathcal{P}\). Conversely, to establish \textbf{lower bounds}, one must construct a problem instance \(P \in \mathcal{P}\) on which all algorithms in \(\mathcal{A}\) suffer at least a specific error.

In the serial (non-distributed) setting, min-max complexity is well-understood for several foundational problem classes, including smooth convex and strongly convex stochastic optimization~\citep{nesterov2018lectures,nemirovski1994efficient,ghadimi2012optimal}, as well as smooth non-convex optimization~\citep{arjevani2019lower,nguyen2017sarah}. In \Cref{ch:baselines_and_lower_bounds}, we extend this min-max perspective to distributed optimization to investigate how data heterogeneity impacts the effectiveness of local update algorithms. In particular, we often restrict our focus to the class \(\mathcal{A}^{\text{Local}}\), which comprises various instantiations of local update algorithms, such as Local SGD. This focus necessitates deriving algorithm-dependent upper and lower bounds for a fixed problem class---a direction that is relatively less explored in the serial optimization literature.

\chapter{Optimization Lower Bounds and Baselines}\label{ch:baselines_and_lower_bounds}
In this section, we study baseline distributed optimization algorithms and establish lower bounds on their convergence. These lower bounds will be instrumental both for identifying min-max optimality and for guiding the development of matching upper bounds. Our contributions are threefold:
\begin{enumerate}
    \item In \Cref{thm:new_LSGD_lower_bound}, we establish a tight lower bound for Local SGD under bounded first-order heterogeneity (\Cref{ass:phi_star}), fully characterizing its min-max convergence rate. This result shows that \Cref{ass:phi_star} alone is insufficient to demonstrate an advantage of local updates.
    
    \item Under the same assumption, we show in \Cref{thm:AIlb_zeta0} that the min-max optimal algorithm is (accelerated) mini-batch SGD (cf. updates \eqref{eq:MB_SGD}). Together, \Cref{thm:new_LSGD_lower_bound,thm:AIlb_zeta0} close a recent line of work and highlight the need for stronger heterogeneity assumptions to justify the benefits of local updates.
    
    \item We partially address this in \Cref{thm:new_LSGD_lower_bound_with_tau}, where we incorporate second-order heterogeneity (\Cref{ass:tau}) and show that when $\tau$ is small, Local SGD can outperform mini-batch SGD.
\end{enumerate}

Our technical approach combines classical lower-bound techniques with new analytical constructions~\citep{arjevani2015communication,arjevani2019lower,woodworth2020local,woodworth2020minibatch,glasgow2022sharp}. We discuss motivating examples for \Cref{thm:new_LSGD_lower_bound,thm:AIlb_zeta0} in the main text, and defer the more intricate construction behind \Cref{thm:new_LSGD_lower_bound_with_tau} to \Cref{app:chap3}.

\subsection*{Outline and Important References}

In \Cref{sec:ch3.1}, we begin with the homogeneous setting, where all clients have the same distribution $\mathcal{D}_m$. This setting isolates the effect of objective regularity from data heterogeneity and serves as an essential baseline. The upper and lower bounds in this section synthesize results from several prior works~\citep{dieuleveut2019communication,woodworth2020local,khaled2020tighter,yuan2020federated,glasgow2022sharp,woodworth2021min,woodworth2021minimax}, which collectively establish the min-max complexity of homogeneous distributed optimization with intermittent communication.

\Cref{sec:ch3.2} then considers the setting where client objectives satisfy~\Cref{ass:zeta_star,ass:phi_star}. The results here (\Cref{thm:new_LSGD_lower_bound,thm:AIlb_zeta0}) are from our COLT 2024 paper~\citep{patel2024limits}, co-authored with Margalit Glasgow, Ali Zindari, Lingxiao Wang, Sebastian U.\ Stich, Ziheng Cheng, Nirmit Joshi, and Nathan Srebro.

Finally, \Cref{sec:ch3.3} presents the lower bound (\Cref{thm:new_LSGD_lower_bound_with_tau}) under the additional assumption of bounded second-order heterogeneity (\Cref{ass:tau}), taken from our upcoming work~\citep{patel2025revisiting}, co-authored with Ali Zindari, Lingxiao Wang, Sebastian U.\ Stich.

\section{Homogeneous Problems and the Role of Third-order Smoothness}\label{sec:ch3.1}
We begin by first revisiting two baseline algorithms: mini-batch SGD (c.f., updates in \eqref{eq:MB_SGD}) and SGD on a single machine. Intuitively, when noise and heterogeneity are both low, one could expect SGD on a single machine to outperform both Local SGD and mini-batch SGD, both collaborative algorithms. This motivates us first to discuss the homogeneous setting, when $\ddd_m=\ddd$ for each $m\in[M]$.

In the homogeneous setting, \citet{dieuleveut2019communication,woodworth2020local} showed that when the problem instances are convex quadratics—i.e., they satisfy \Cref{ass:convex,ass:smooth_second,ass:smooth_third,ass:bounded_optima} with $Q=0$—and we are equipped with stochastic first-order oracles satisfying \Cref{ass:stoch_bounded_second_moment}, then Local SGD outputs an iterate $x^{L\textnormal{-}SGD}$\footnote{Throughout the thesis, we typically consider the output of Local SGD to be either the final averaged iterate $x_T$ or a weighted average of the iterates $x_0, \dots, x_T$. While the last iterate is more common in practice, weighted averages often simplify theoretical analysis. \citet{woodworth2020local} specifically used $x^{L\textnormal{-}SGD} = \frac{1}{T} \sum_{t=0}^{T-1} x_t$.} that satisfies the following convergence bound:\footnote{In convex optimization, it is standard to express convergence rates in terms of expected function value sub-optimality~\cite{nesterov2018lectures,nemirovski1994efficient}. These results can often be extended to high-probability bounds; see, e.g., \cite{liu2023high}. We focus only on expected guarantees in this thesis.}
\begin{align}\label{rate:lsgd_quad_hom}
    \ee\sb{F(x^{L\textnormal{-}SGD})} - F(x^\star) \leq c_1\cdot\rb{\frac{HB^2}{KR} + \frac{\sigma_2 B}{\sqrt{MKR}}}\enspace,
\end{align}
where $c_1$ is a numerical constant.

\begin{remark}[Extreme Communication Efficiency]\label{rem:extreme_communication_efficiency}
    The above convergence rate reveals a striking property: as $K$ tends to infinity, the upper bound tends to zero. That is, with sufficiently many local updates, even a single round of communication can suffice. This \emph{extreme communication efficiency} makes Local SGD particularly attractive in settings where communication is the primary bottleneck, such as cross-device federated learning~\cite{kairouz2019advances}. The special case where local updates are followed by only a single communication step is often referred to as \emph{One-shot Averaging}. While this behavior is specific to homogeneous quadratic objectives, it highlights an important tradeoff: large $R$ can often be replaced with large $K$, an idea that underpins much of communication-efficient distributed optimization.
\end{remark}

The convergence rate in \eqref{rate:lsgd_quad_hom} is strictly better than the corresponding bounds for both mini-batch SGD and single-machine SGD:
\begin{equation}\label{rate:mb_sm_sgd_quad_hom}
\begin{aligned}
    \ee\sb{F(x^{MB\textnormal{-}SGD})} - F(x^\star) &\leq c_2\cdot\rb{\frac{HB^2}{R} + \frac{\sigma_2 B}{\sqrt{MKR}}}\enspace,\\
    \ee\sb{F(x^{SM\textnormal{-}SGD})} - F(x^\star) &\leq c_3\cdot\rb{\frac{HB^2}{KR} + \frac{\sigma_2 B}{\sqrt{KR}}}\enspace.
\end{aligned}
\end{equation}

Moreover, \citet{woodworth2020local} showed that an accelerated variant of Local SGD, inspired by the acceleration technique of \citet{ghadimi2012optimal}, achieves the following improved rate:
\begin{align}\label{rate:acc_lsgd_quad_hom}
    \ee\sb{F(x^{Acc\textnormal{-}L\textnormal{-}SGD})} - F(x^\star) \leq c_4\cdot\rb{\frac{HB^2}{K^2R^2} + \frac{\sigma_2 B}{\sqrt{MKR}}}\enspace,
\end{align}
which improves upon the accelerated rates for both mini-batch and single-machine SGD:
\begin{equation}\label{rate:acc_mb_sm_sgd_quad_hom}
\begin{aligned}
    \ee\sb{F(x^{Acc-MB\textnormal{-}SGD})} - F(x^\star) &\leq c_5\cdot\rb{\frac{HB^2}{R^2} + \frac{\sigma_2 B}{\sqrt{MKR}}}\enspace,\\
    \ee\sb{F(x^{Acc-SM\textnormal{-}SGD})} - F(x^\star) &\leq c_6\cdot\rb{\frac{HB^2}{K^2R^2} + \frac{\sigma_2 B}{\sqrt{KR}}}\enspace.
\end{aligned}
\end{equation}

Importantly, \citet{woodworth2020local} also showed that the rate in \eqref{rate:acc_lsgd_quad_hom} is min-max optimal: there exists a quadratic homogeneous instance satisfying \Cref{ass:convex,ass:smooth_second,ass:smooth_third,ass:bounded_optima,ass:stoch_bounded_second_moment} on which no distributed zero-respecting algorithm (cf. \Cref{def:zero_respecting}) can perform better. This result thus characterizes the min-max complexity of convex quadratic stochastic optimization under distributed settings.

In a follow-up work, \citet{woodworth2021min} extended this analysis to general smooth convex functions. A key finding was that, for such functions, the min-max optimal convergence rate is achieved by the best of (accelerated) single-machine SGD and mini-batch SGD. That is, no algorithm can achieve a strictly better rate than the minimum of the two in \eqref{rate:acc_mb_sm_sgd_quad_hom}. Their construction of a non-quadratic hard instance revealed that Local SGD may perform worse in high-noise regimes for general convex functions. In particular, Local SGD outperforms mini-batch SGD only in the regime where single-machine SGD already does—thus limiting its practical benefit in those settings.

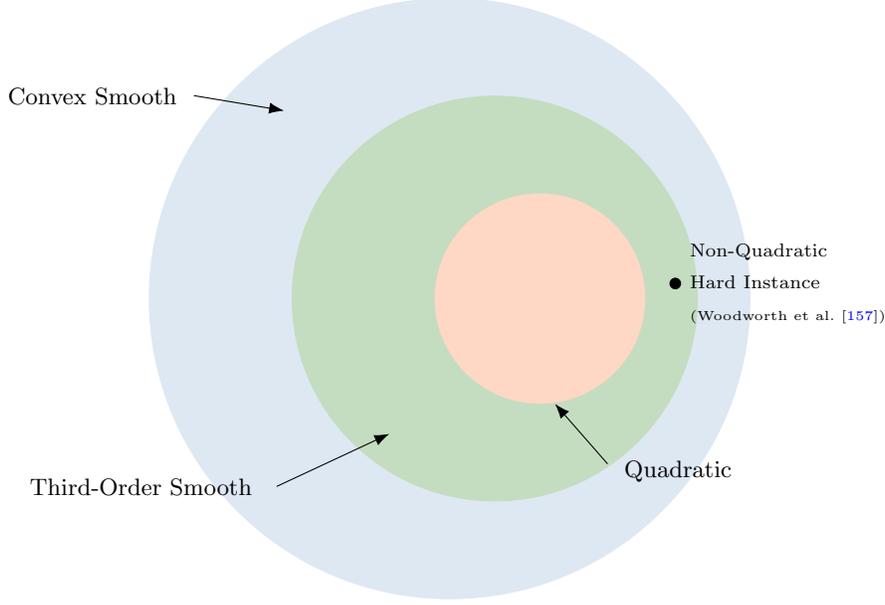
\begin{figure}
    \centering
    \begin{tikzpicture}

        \fill[convexblue!20] (-3,0) circle (4cm);
        
        \fill[thirdgreen!45] (-2.4,0) circle (2.7cm);
        
        \fill[quadorange!35] (-1.8,0) circle (1.4cm);
        
        \node[left] at (-6.5,2.7) {\small Convex Smooth};
        \draw[-{Latex[length=2mm]}] (-6.4,2.7) -- (-5.2,2.5);
        
        \node[left] at (-5.5,-2.5) {\small Third-Order Smooth};
        \draw[-{Latex[length=2mm]}] (-5.3,-2.5) -- (-3.8,-1.8);
        
        \node[right] at (-0.8,-2.3) {\small Quadratic};
        \draw[-{Latex[length=2mm]}] (-0.9,-2.2) -- (-1.6,-1.4);
        
        \filldraw[black] (0,0.2) circle (2pt);
        \node[align=left] at (1.5,0.2) {\scriptsize Non-Quadratic\\\scriptsize Hard Instance\\\tiny (\citet{woodworth2021min})};
    
    \end{tikzpicture}

    \caption{The class of convex and third-order smooth problems satisfying \Cref{ass:convex,ass:smooth_third} interpolates between the class of problems with quadratic objective functions and convex-smooth functions satisfying \Cref{ass:convex,ass:smooth_second}. Notably, the lower bound of \citet{woodworth2021min} is a non-quadratic instance.}
    \label{fig:third_smooth}
\end{figure}

This motivates the need to reconcile the favorable results for quadratic problems with the limitations in general convex settings. Here, \Cref{ass:smooth_third} proves useful, as it enables interpolation between smooth convex functions: from the very smooth case (with $Q=0$) to less smooth settings (with $Q \approx 2H$). Under this assumption, \citet{yuan2020federated} derived the following convergence guarantee for Local SGD:
\begin{align}\label{rate:lsgd_third_smooth_hom}
    \ee\sb{F(x^{L\textnormal{-}SGD})} - F(x^\star) = \tilde\ooo\rb{\frac{HB^2}{KR} + \frac{\sigma_2 B}{\sqrt{MKR}} + \frac{Q^{1/3}\sigma_4^{2/3}B^{5/3}}{K^{1/3}R^{2/3}} }\enspace,
\end{align}
where $\tilde\ooo$ hides constants and logarithmic factors in problem-dependent parameters.\footnote{Techniques exist to remove logarithmic factors (e.g.,~\cite{lacoste2012simpler, stich2019unified}), but since they are often dominated by polynomial terms, we hide them throughout this thesis.}

This guarantee can be strictly better than those in \eqref{rate:mb_sm_sgd_quad_hom} when $Q$ and $\sigma_4$ are small. \citet{yuan2020federated} also proposed an accelerated variant of Local SGD with the following rate:
\begin{align}\label{rate:acc_lsgd_third_smooth_hom}
    \ee\sb{F(x^{Acc\textnormal{-}L\textnormal{-}SGD})} - F(x^\star) = \tilde\ooo\rb{\frac{HB^2}{KR^2} + \frac{\sigma_2 B}{\sqrt{MKR}} + \frac{H^{1/3}\sigma_2^{2/3}B^{4/3}}{M^{1/3}K^{1/3}R} + \frac{Q^{1/3}\sigma_4^{2/3}B^{5/3}}{K^{1/3}R^{4/3}}}\enspace.
\end{align}

Whether this convergence rate is tight, and whether this accelerated variant is min-max optimal, remains an open question. Notably, \eqref{rate:acc_lsgd_third_smooth_hom} does not match the following lower bound derived by \citet{woodworth2021min} for any distributed zero-respecting algorithm in the homogeneous, third-order smooth setting:
\begin{align}\label{lb:third_smooth}
    \ee\sb{F(\hat x)} - F(x^\star) = \tilde\Omega\Bigg( \frac{HB^2}{K^2R^2} + \min \cb{\frac{\sigma B}{\sqrt{MKR}}, HB^2} + \min\cb{\frac{HB^2}{R^2}, \frac{\sqrt{Q\sigma}B^2}{K^{1/4}R^2}, \frac{\sigma B}{\sqrt{KR}}}\Bigg)\enspace.
\end{align}

While we do not delve further into accelerated local-update algorithms in this thesis, it will be helpful to compare our subsequent convergence bounds with the lower bound in \eqref{lb:third_smooth}.

\begin{remark}[Optimal Rates for Accelerated Mini-batch and Single-Machine SGD]\label{rem:mb_rate_is_tight}
The rates in \eqref{rate:acc_mb_sm_sgd_quad_hom} for accelerated mini-batch and single-machine SGD are tight, even for smooth convex quadratic functions. In the case of accelerated single-machine SGD, this follows from classical lower bounds in the serial setting~\citep{nesterov2018lectures,nemirovski1994efficient,ghadimi2012optimal}.

For accelerated mini-batch SGD, observe that in the homogeneous setting, where all machines share the same objective, the algorithm effectively uses $MK$ stochastic gradients per update over $R$ communication rounds. This is equivalent to performing $R$ accelerated updates with reduced stochastic noise variance $\sigma_2^2/(MK)$, as captured by \Cref{ass:stoch_bounded_second_moment}. Thus, the optimal rate in this setting matches that of accelerated single-machine SGD with appropriately reduced variance, which is precisely the rate stated in \eqref{rate:acc_mb_sm_sgd_quad_hom}.
\end{remark}

Three key takeaways emerge from analyzing the homogeneous setting (cf. \Cref{fig:third_smooth}):
\begin{itemize}
    \item \textbf{Extreme communication efficiency}, as seen in the convex quadratic setting \eqref{rate:acc_lsgd_quad_hom}, is a highly desirable property of local update algorithms. Ideally, in heterogeneous settings with large $K$, we aim for communication complexity to improve as data heterogeneity decreases.
    
    \item \textbf{Third-order smoothness} (cf. \Cref{ass:smooth_second}) plays a crucial role in establishing the effectiveness of Local SGD even in homogeneous scenarios. This highlights its potential as a structural property to exploit in the heterogeneous case as well---a central theme in our subsequent analysis.
    
    \item \textbf{Min-max optimality in smooth convex problems} is attained by the best of single-machine SGD and mini-batch SGD~\citep{woodworth2021min}. This is surprising given that Local SGD often outperforms both in practice~\citep{mcmahan_ramage_2017,charles2021large}. A likely reason for this discrepancy is that the homogeneous model is overly simplistic: in real-world applications, data across machines is typically similar but not identical. Capturing this mild heterogeneity is the first step we take in the next section.
\end{itemize}

\begin{remark}[Local SGD as a Quadratic Solver]
    The strong performance of Local SGD on quadratic objectives, as shown in the convergence rates \eqref{rate:lsgd_quad_hom} and \eqref{rate:acc_lsgd_quad_hom}, is highly encouraging. It naturally motivates a broader strategy: reduce the task of optimizing general convex objectives to a sequence of well-chosen quadratic subproblems. This reduction underlies many classical second-order methods, including Newton’s method~\citep{nesterov1994interior}, trust-region methods~\citep{nocedal2006numerical,carmon2020acceleration}, and cubic regularization~\citep{nesterov2006cubic}. It also inspires more recent approaches that go beyond second-order information~\citep{nesterov2019implementable,bullins2020highly}.

    In~\cite{bullins2021stochastic}, we leverage Local SGD to implement a distributed stochastic Newton method. When the objective is highly smooth---specifically, quasi self-concordant~\citep{bach2010self}---this method can provably outperform existing first-order distributed algorithms. However, we do not explore these results in this thesis, as our focus is on understanding the intrinsic value of local updates. The stochastic Newton method abstracts away the role of local updates, and thus lies beyond the scope of our present discussion.
\end{remark}

\section{Mild Heterogeneity: The Case of Shared Optimizers}\label{sec:ch3.2}
We begin our discussion by revisiting the simple problem instance introduced in \Cref{sec:is_first_enough}. Consider two machines, each optimizing a two-dimensional objective:
\begin{equation}\label{eq:simple_example_shared_optimum}
    \begin{aligned}
    F_1(x) &:= \frac{1}{2}(x - x^\star)^T \begin{bmatrix}
        H & 0 \\
        0 & 0
    \end{bmatrix} (x - x^\star)\enspace,\\
    F_2(x) &:= \frac{1}{2}(x - x^\star)^T \begin{bmatrix}
        0 & 0 \\
        0 & H
    \end{bmatrix} (x - x^\star)\enspace.
\end{aligned}
\end{equation}

While this setup does not fall under the homogeneous setting, it exhibits \emph{minimal heterogeneity} in the sense that both machines share the same optimizer $x^\star$. We already discussed in \Cref{sec:is_first_enough} that even this small amount of heterogeneity can preclude the extreme communication efficiency seen in the homogeneous quadratic setting for vanilla Local SGD.

This example is compelling, though, as it makes another point. Despite being convex and smooth, the minimal heterogeneity is enough to make single-machine SGD ineffective. Specifically, if one runs SGD on only one of the two machines and evaluates sub-optimality for the average objective, the error remains lower bounded by $HB^2$---regardless of how long SGD is run.

This motivates a re-interpretation of the min-max optimality result from \citet{woodworth2021min}, which identifies the best of mini-batch and single-machine SGD as optimal for smooth convex problems. In the homogeneous case, this result is intuitive: the sole benefit of collaboration is variance reduction through averaging stochastic gradients, a benefit that mini-batch SGD already captures. Thus, the lower bound essentially reflects a dichotomy between low and high stochastic gradient noise $\sigma_2$.

However, when the objectives are only connected via sharing a common optimizer while exhibiting other functional differences, this dichotomy breaks down. In such heterogeneous settings, collaboration can be beneficial even when the noise is low: to find a shared optimizer quickly. One might ask whether the appropriate baseline in this case is to run SGD independently on each machine. However, this strategy is also insufficient: in the above example, there is no mechanism in place to enforce consensus, and independent optimization may lead to significantly different local models. Averaging such models would not necessarily yield a meaningful or accurate solution.

This also highlights why \emph{one-shot averaging}, which works well in homogeneous quadratic problems, cannot be expected to succeed in the heterogeneous setting without additional assumptions. In the remainder of this section, we formalize these insights by deriving tight convergence guarantees for Local SGD under the assumption that the machines’ objectives share a common optimizer. We will also demonstrate that, under this minimal heterogeneity, accelerated mini-batch SGD is indeed optimal for this problem class.

We begin with the following new lower bound for Local SGD, which incorporates \Cref{ass:phi_star}. Note that when the machines share an optimizer, the quantity $\phi_\star$ in the assumption is zero. Thus, this lower bound remains valid even when the objectives do not share a common minimizer.

\begin{theorem}\label{thm:new_LSGD_lower_bound}
There exists a problem instance satisfying \Cref{ass:convex,ass:smooth_second,ass:bounded_optima,ass:stoch_bounded_second_moment,ass:phi_star}, such that for all $K \geq 2$, the final iterate $\bar{x}_R$ of Local SGD as defined in \eqref{eq:local_updates}, initialized at zero and using any step sizes $\eta,\ \beta \geq 0$, satisfies for a numerical constant $c_6$:
\begin{align*}
\ee\sb{F(\bar{x}_R)} - F(x^\star)
\;\geq\; c_6\cdot\rb{\frac{HB^2}{R}
+ \frac{(H\sigma_2^2 B^4)^{1/3}}{K^{1/3} R^{2/3}}
+ \frac{\sigma_2 B}{\sqrt{MKR}}
+ \frac{(H\phi_\star^2 B^4)^{1/3}}{R^{2/3}}}\enspace.
\end{align*}
\end{theorem}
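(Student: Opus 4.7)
The plan is to construct a hard quadratic problem instance on a small number of machines whose suboptimality decomposes as a sum over independent coordinate blocks, with one block contributing each of the four terms in the claimed lower bound. Because the lower bound must hold for every choice of the inner step-size $\eta$ and the outer step-size $\beta$, the instance must be robust in the sense that no single choice of $(\eta,\beta)$ can kill all four terms at once. I would build the instance in low dimension as a direct sum of four blocks, using diagonal Hessians so that Local SGD's iterates evolve coordinatewise and each block can be analyzed in isolation and then summed.

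The first term $HB^2/R$ is already witnessed by the ``orthogonal Hessian'' example in \eqref{eq:simple_example_shared_optimum}: the closed-form iterate in \eqref{eq:motivation} shows that for every $\beta$ bounded away from its special ``magic'' value, the residual on this block cannot fall below a constant fraction of $HB^2/R$, no matter how large $K$ is. The fourth term $(H\phi_\star^2 B^4)^{1/3}/R^{2/3}$ requires a second block where the two machines have distinct minimizers separated by $\Theta(\phi_\star)$; client drift during the $K$ local steps cannot be eliminated in $R$ rounds without shrinking $\eta K$ so much that the first block makes no progress, and balancing these two effects in $\eta$ yields the $R^{-2/3}$ rate together with the cube-root dependence on $(H, \phi_\star^2, B^4)$. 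The two stochastic terms come from appending blocks along which I add isotropic mean-zero noise of variance $\sigma_2^2$ to the oracle: a standard bias-variance decomposition gives $\Omega(\sigma_2 B/\sqrt{MKR})$ from the total sample budget, while the intermediate term $(H\sigma_2^2 B^4)^{1/3}/(K^{1/3}R^{2/3})$ reflects that between communications variance is averaged only within a machine, so the effective mini-batch for the $K$ local steps is $K$ rather than $MK$ and the classical accelerated-SGD lower bound of \citet{nemirovski1994efficient} applied at this noise scale produces exactly this rate.

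For each block I would unroll \eqref{eq:local_updates} to write $\bar x_r$ as a linear map applied to $\bar x_{r-1}$ plus stochastic and bias residuals; the diagonal structure reduces everything to scalar recursions whose closed-form solutions, of the type in \eqref{eq:motivation}, can be lower-bounded uniformly in the feasible range of $(\eta,\beta)$. The final step is to optimize the lower bound over $(\eta,\beta)$: each of the four terms dominates in a different step-size regime, so the max over blocks becomes a four-way balance whose minimum over step sizes recovers the claimed sum. Verifying \Cref{ass:convex,ass:smooth_second,ass:bounded_optima,ass:stoch_bounded_second_moment,ass:phi_star} with the stated constants then reduces to rescaling each block independently.

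The hard part will be coupling the blocks so that $\beta$ cannot be chosen adversarially to rescue one block at the expense of another. As the remark following \eqref{eq:motivation} points out, on the isolated motivating example the choice $\beta=2$ makes the error collapse to zero in a single round, so the construction must include an auxiliary (homogeneous) block whose contribution blows up whenever $\beta$ is significantly larger than $1$, thereby pinning $\beta$ to a range in which the orthogonal-Hessian block is forced to retain its $\Omega(HB^2/R)$ error. A secondary technical nuisance is ensuring that the lower bounds for the stochastic blocks survive for arbitrary (possibly time-dependent) step sizes inside a communication round, which I would handle by applying an information-theoretic argument to the $K$-step local trajectory rather than a per-iterate bound.
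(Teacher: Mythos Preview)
Your high-level decomposition into four independent coordinate blocks, combined on disjoint coordinates, matches the paper's approach, and your treatment of the stochastic terms and the $\phi_\star$ term (by citing/re-deriving the \citet{glasgow2022sharp} construction) is essentially what the paper does. The gap is in the $HB^2/R$ term.

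Your plan there is to take the orthogonal-Hessian example \eqref{eq:simple_example_shared_optimum}, then add an auxiliary homogeneous block whose purpose is to ``pin'' $\beta$ to a neighborhood of $1$. But even with $\beta$ pinned to exactly $1$, that example does \emph{not} yield $\Omega(HB^2/R)$: plugging $\beta=1$ into \eqref{eq:motivation} gives residual $\bigl(\tfrac{1+(1-\eta H)^K}{2}\bigr)^R$, which for (say) $\eta=1/H$ is $2^{-R}$, not $1/R$. Your assertion that ``for every $\beta$ bounded away from its special magic value the residual on this block cannot fall below a constant fraction of $HB^2/R$'' is therefore wrong. No auxiliary block that merely restricts $\beta$ to a range can rescue this; the inner step-size $\eta$ can still be chosen to get exponential decay in $R$.

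The paper's fix is qualitatively different and is the key idea you are missing. Instead of using the axis-aligned Hessians, it takes $A_1=e_1e_1^\top$ and $A_2=vv^\top$ with $v=(\alpha,\sqrt{1-\alpha^2})$. Because both Hessians are rank one with the same nonzero eigenvalue, $K$ local GD steps on machine $i$ collapse exactly to a single GD step with effective step size $\tilde\eta=(1-(1-\eta H)^K)/H$. After averaging and applying the outer step size, one round of Local SGD is literally one GD step on the average objective with step size $\beta\tilde\eta$. Now the parameter $\alpha$ controls the condition number of the average Hessian $\tfrac12(A_1+A_2)$, and can be chosen so that $\kappa=\Omega(R)$. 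The $HB^2/R$ bound then follows from a separate lemma (\Cref{lem:condition_number}): GD with \emph{any} step size on a quadratic with condition number $\Theta(R)$ has error $\Omega(HB^2/R)$ after $R$ steps. This simultaneously handles all $\eta$ and all $\beta$, because $\beta\tilde\eta$ is just some step size for GD.
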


\begin{remark}[No Extreme Communication Efficiency]
The lower bound in \Cref{thm:new_LSGD_lower_bound} rules out the possibility of extreme communication efficiency for Local SGD under bounded first-order heterogeneity, as captured by \Cref{ass:phi_star}. This is consistent with the behavior observed in the simple example from \eqref{eq:simple_example_shared_optimum}. Furthermore, since the hard instance used to derive the first term of the lower bound is quadratic, we cannot appeal to third-order smoothness (\Cref{ass:smooth_third}) to improve the rate. The best previously known lower bound in this setting, due to \citet{glasgow2022sharp}, vanished as $K \to \infty,\ \phi_\star\to 0$ and therefore did not preclude extreme communication efficiency when the machines had a shared optimum.
\end{remark}

\begin{remark}[Tightness under \Cref{ass:phi_star}]
\citet{koloskova2020unified} established a matching upper bound to \Cref{thm:new_LSGD_lower_bound}, which means our lower bound is tight and fully characterizes the min-max optimal convergence rate of Local SGD under \Cref{ass:phi_star}. This is noteworthy, as several prior works~\citep{woodworth2020local,woodworth2020minibatch,glasgow2022sharp} had speculated that Local SGD might achieve faster convergence under \Cref{ass:phi_star}.

Moreover, observe that the first two terms in the lower bound are identical to those that appear in the convergence rate for mini-batch SGD (cf. \eqref{rate:mb_sm_sgd_quad_hom}). Thus, under \Cref{ass:phi_star}, there is no provable separation between Local SGD and mini-batch SGD, implying that this assumption alone is insufficient to explain the empirical dominance of Local SGD. This suggests that additional structural assumptions on data heterogeneity are necessary to identify regimes in which Local SGD can provably outperform mini-batch SGD---thereby reconciling theory with empirical observations.
\end{remark}

While the main idea behind the proof of \Cref{thm:new_LSGD_lower_bound} resembles the simple example in \eqref{eq:simple_example_shared_optimum}, recall that in that case, setting $\beta = 2$ made Local SGD converge with a single round of communication. To obtain a lower bound that holds for arbitrary outer step sizes $\beta$, we modify the Hessian of the second objective. This change effectively reduces the local updates within a communication round to a single SGD step. With this structure in place, we invoke the following auxiliary result:

\begin{lemma}\label{lem:condition_number}
There exists a convex quadratic function $F(x)$ over $x \in \mathbb{R}^2$, which is $H$-smooth, $\mu$-strongly convex with condition number $\kappa = 12R$, and whose minimizer $x^\star$ satisfies $\norm{x^\star} \leq B$, such that the $R^{\text{th}}$ gradient descent iterate $\hat{x}_R$ (initialized at zero and using any step size $\eta > 0$) satisfies
\[
F(\hat{x}_R) - F(x^\star) \;\geq\; \frac{HB^2}{8R}\enspace.
\]
\end{lemma}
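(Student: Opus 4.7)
The plan is to exhibit a 2D convex quadratic with condition number exactly $\kappa = 12R$ for which no single step size lets plain gradient descent make fast progress along both eigendirections. I would take
\[
F(x) \;=\; \tfrac{1}{2}(x-x^\star)^T A (x-x^\star), \qquad A = \mathrm{diag}(H, \mu), \qquad \mu = H/(12R),
\]
with $x^\star = (a_1, a_2)$ on the sphere $a_1^2 + a_2^2 = B^2$. Such an $F$ is $H$-smooth, $\mu$-strongly convex, has condition number $12R$, and satisfies $\norm{x^\star}\le B$, so the regularity conditions are immediate and the work reduces to a direct calculation combined with a two-case split on $\eta$.

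Next I would unwind the recursion. Starting from $x_0 = 0$, the update $x_{t+1} = x_t - \eta A (x_t - x^\star)$ gives $\hat x_R - x^\star = -(I - \eta A)^R x^\star$, and hence
\begin{align*}
F(\hat x_R) - F(x^\star) \;=\; \tfrac{H}{2}\, a_1^2 (1-\eta H)^{2R} \;+\; \tfrac{\mu}{2}\, a_2^2 (1 - \eta \mu)^{2R}\enspace.
\end{align*}
Then I split on the magnitude of $\eta$. If $\eta \ge 2/H$, then $|1 - \eta H| \ge 1$ and the first term alone gives a lower bound of at least $\tfrac{H a_1^2}{2}$. If instead $0 < \eta < 2/H$, then $\eta \mu < 2/\kappa = 1/(6R)$, so Bernoulli's inequality yields $(1 - \eta \mu)^{2R} \ge 1 - 2R/(6R) = 2/3$, and the second term alone gives at least $\tfrac{\mu a_2^2}{3}$. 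The two regimes cover all $\eta > 0$, so in both cases a lower bound holds.

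Finally I would balance the two cases by choosing $(a_1, a_2)$ on the sphere of radius $B$ so that $\tfrac{H a_1^2}{2} = \tfrac{\mu a_2^2}{3}$. This forces $a_2 = \Theta(B)$ and $a_1 = \Theta(\sqrt{\mu/H}\, B) = \Theta(B/\sqrt{R})$, and produces a uniform lower bound of order $\mu B^2 = HB^2/(12R)$ in both regimes. The qualitative structure --- a 2D quadratic with eigenvalues separated by a factor $\Theta(R)$, together with a two-regime analysis on $\eta$ --- already gives the advertised $\Omega(HB^2/R)$ rate. The main obstacle I anticipate is purely bookkeeping: recovering the precise constant $1/8$ requires calibrating the exact split of mass between $a_1$ and $a_2$ and replacing the loose Bernoulli estimate with a sharper bound such as $(1 - 1/n)^n \ge 1/e$ for $n \ge 2$ or a direct Chebyshev-polynomial calculation, to squeeze the constant from my rough $1/36$ down to the stated $1/8$.
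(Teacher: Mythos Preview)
Your approach is essentially the same as the paper's: a 2D diagonal quadratic with eigenvalues $H$ and $\mu=H/(12R)$, a two-regime split on $\eta$ (large $\eta$ makes the $H$-direction diverge, small $\eta$ cannot contract the $\mu$-direction), and Bernoulli's inequality for the small-step regime. The only cosmetic differences are that the paper puts equal mass $B/\sqrt{2}$ in each eigendirection rather than balancing, and splits at $\eta=3/H$ rather than $2/H$; your worry about sharpening the constant via Chebyshev polynomials is unnecessary, since the paper also just uses Bernoulli (and in fact its arithmetic for the claimed $1/8$ is loose---your honest $\Theta(1/36)$ is at the same level of rigor).
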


We prove \Cref{lem:condition_number,thm:new_LSGD_lower_bound} in \Cref{app:chap3}.

A natural question arises: if Local SGD is not optimal, what is the best algorithm for the class of minimally heterogeneous problems where all machines share a common optimizer? We have already ruled out single-machine SGD as a viable approach, and \Cref{thm:new_LSGD_lower_bound} establishes that Local SGD cannot strictly outperform mini-batch SGD under \Cref{ass:phi_star}. This raises the possibility that (accelerated) mini-batch SGD may, in fact, be min-max optimal in this setting. We confirm this intuition by proving the following lower bound for all distributed zero-respecting algorithms (cf. \Cref{def:zero_respecting}).

\begin{theorem}[Algorithm Independent Lower Bound]\label{thm:AIlb_zeta0}
There exists a problem instance satisfying \Cref{ass:convex,ass:smooth_second,ass:bounded_optima,ass:stoch_bounded_second_moment,ass:phi_star}, such that for all $K \geq 2$, the final iterate $\hat{x}$ of any distributed zero-respecting algorithm initialized at zero with $R$ rounds of communication and $K$ stochastic gradient computations per machine per round satisfies,
\begin{align}
 \mathbb{E}\left[F(\hat{x})\right] - F(x^{\star}) \geq c_7\cdot\rb{\frac{HB^2}{R^2} + \frac{\sigma B}{\sqrt{MKR}}}\enspace.
\end{align}
\end{theorem}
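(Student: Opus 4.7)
The plan is to prove the two terms of the lower bound on two separate hard instances, both satisfying the assumptions with $\phi_\star = 0$, and then observe that $\Omega(A + B) = \Omega(\max(A, B))$, so the existence statement is satisfied by whichever term dominates on the larger of the two values.

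For the deterministic term $HB^2/R^2$, I would use a distributed version of Nesterov's tridiagonal chain quadratic. Let $d = \Theta(R)$ and split the couplings across two machines: $F_1$ contains a pinning term $(x_1-c)^2$ together with all odd-indexed couplings $(x_i - x_{i+1})^2$ for $i$ odd, while $F_2$ contains only the even-indexed couplings, appropriately scaled so each function is $(H/2)$-smooth and their shared minimizer $(c, \dots, c)$ has norm exactly $B$. Because both machines share this minimizer, $\phi_\star = 0$, and (for $M>2$) one can replicate each type across the remaining machines without changing this fact. The key structural claim is a support-growth lemma: for any distributed zero-respecting algorithm (\Cref{def:zero_respecting}), the iterates on every machine at the end of round $r$ are supported in $\{1, \dots, r+1\}$, regardless of $K$. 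The induction step uses the fact that on each machine only alternate couplings appear, so additional local oracle queries on machine $m$ within a single round cannot activate any coordinate beyond what the first local query on that machine already reveals; advancing further along the chain requires the \emph{other} machine's gradient and thus another round of communication. With at most $R+1 \leq d/2$ coordinates active after $R$ rounds, the classical Nesterov lower bound restricted to this subspace yields the $\Omega(HB^2/R^2)$ term.

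For the stochastic term $\sigma B/\sqrt{MKR}$, I would use a homogeneous one-dimensional instance $F_m(x) = (\alpha/2)(x - \theta)^2$ with $\theta \in \{-B, +B\}$ hidden, equipped with a stochastic gradient oracle returning $\alpha(x - \theta) + \sigma\xi$ for $\xi$ with unit second moment. Since all machines share the same distribution, $\phi_\star = 0$ trivially, and \Cref{ass:stoch_bounded_second_moment} holds by construction. The algorithm collects $n = MKR$ independent noisy gradients in total, and a standard Le Cam two-point argument gives $\mathbb{E}[(\hat x - \theta)^2] \geq \Omega(\sigma^2/(\alpha^2 n))$, which translates to expected sub-optimality $\Omega(\sigma^2/(\alpha n))$. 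Tuning $\alpha \asymp \sigma/(B\sqrt{n})$ (clipped to be at most $H$, which holds in the regime where the bound is non-vacuous) yields the $\Omega(\sigma B/\sqrt{MKR})$ term.

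The main obstacle is the support-growth lemma underpinning the first construction. Formalizing that zero-respecting algorithms cannot exploit the $K$ local steps to bypass inter-machine communication requires a careful induction: at each round one must track not only the supports of individual iterates but the \emph{aggregate} support of every quantity computed on each machine---oracle inputs, oracle outputs, and arbitrary deterministic combinations thereof---showing none of them can introduce a coordinate outside the union of the previous round's supports plus one new coordinate per machine. Once this lemma is established, the remainder reduces to invoking the classical Nesterov construction for the deterministic part and a standard information-theoretic lower bound for the stochastic part, after which taking the maximum of the two instances proves the stated $\Omega(HB^2/R^2 + \sigma B/\sqrt{MKR})$ bound.
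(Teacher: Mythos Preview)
Your proposal is correct and follows essentially the same approach as the paper. Both split a Nesterov-style chain quadratic across machines by the parity of the coupling index, prove a support-growth lemma showing that the chain can advance by at most one coordinate per communication round regardless of $K$, and then convert the resulting $\Omega(B^2)$ distance to optimum into an $\Omega(HB^2/R^2)$ function-value gap via the $\Theta(H/d^2)$ strong convexity of the tridiagonal Hessian; the stochastic term is handled separately by a standard homogeneous statistical lower bound. The only cosmetic difference is that the paper uses weighted couplings $(qx_i - x_{i+1})^2$ with $q = 1 - 1/R$ and optimum $x_i^\star = q^i$, whereas you use unit couplings with the constant optimum $(c,\dots,c)$; both yield a shared minimizer (hence $\phi_\star = 0$) and the same eigenvalue scaling for the chain Laplacian.
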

\begin{remark}[Min-max Optimality of Accelerated Mini-batch SGD]
    The lower bound above is matched by accelerated mini-batch SGD~\citep{ghadimi2012optimal}, establishing its min-max optimality under \Cref{ass:phi_star}. This resolves a line of work on the intermittent communication setting under this assumption~\citep{khaled2020tighter, karimireddy2020scaffold, koloskova2020unified, woodworth2020minibatch, glasgow2022sharp, wang2022unreasonable}. Notably, the convergence rate of mini-batch SGD is \emph{independent of data heterogeneity}~\citep{woodworth2020minibatch}, as it relies solely on variance-reduced stochastic gradients for the averaged objective $F$, without requiring alignment across local objectives. Another way to see this is by assigning the same objective to all machines: specifically, the quadratic objective for which the rate in \eqref{rate:acc_mb_sm_sgd_quad_hom} is known to be tight~\citep{nesterov2018lectures,nemirovski1994efficient}. This fully homogeneous construction satisfies any heterogeneity assumption. Similarly, we can show that the convergence rate for mini-batch SGD in \eqref{rate:mb_sm_sgd_quad_hom} is tight, under any notion of data heterogeneity, using \Cref{lem:condition_number} and a standard mean estimation lower bound.
\end{remark}
The proof of \Cref{thm:AIlb_zeta0} is technically interesting and somewhat different from the intuition in \eqref{eq:simple_example_shared_optimum} but to avoid digressing from the main narrative, we defer it to \Cref{app:chap3}.

Taken together, the observations in this section highlight the limitations of mini-batch SGD in exploiting benign problem structure, and motivate the analysis of Local SGD under stronger heterogeneity assumptions beyond \Cref{ass:phi_star}—where it has the potential to outperform mini-batch methods. To this end, the next section shows how \Cref{ass:tau} can circumvent the lower bound in \Cref{thm:new_LSGD_lower_bound}.

\section{Formalizing the Role of Second-order Heterogeneity}\label{sec:ch3.3}
Several recent works have established that \Cref{ass:tau} plays a central role in determining the communication complexity of distributed optimization. A prominent line of research has focused on distributed proximal-point methods. Early results in the quadratic setting showed that when $\tau = 0$, these methods achieve \emph{extreme communication efficiency}, requiring only a constant number of communication rounds~\citep{shamir2014dane}. More recent work extended these guarantees to the general case with $\tau > 0$~\citep{sun2022distributed,kovalev2022optimal,jiang2024fedred,jiang2024dane}. This naturally raises the question: can Local SGD—and more broadly, local-update algorithms—benefit from smaller second-order heterogeneity $\tau$?

To motivate this possibility, we revisit the example in \eqref{eq:simple_example_shared_optimum}, where the second-order heterogeneity is $H$, the worst possible value for $\tau$. To construct an instance satisfying both \Cref{ass:smooth_second,ass:tau}, we decouple $\tau$ from $H$ by introducing an additional dimension:
\begin{equation}\label{eq:simple_example_shared_optimum_w_tau}
    \begin{aligned}
    F_1(x) &:= \frac{1}{2}(x - x^\star)^T \begin{bmatrix}
        \tau & 0 & 0 \\
        0 & 0 & 0\\
        0 & 0 & H
    \end{bmatrix} (x - x^\star)\enspace,\\
    F_2(x) &:= \frac{1}{2}(x - x^\star)^T \begin{bmatrix}
        0 & 0 & 0\\
        0 & \tau & 0\\
        0 & 0 & H
    \end{bmatrix} (x - x^\star)\enspace.
    \end{aligned}
\end{equation}

This instance is $H$-smooth and $\tau$-second-order heterogeneous. In the shared (third) direction of high curvature, local updates are effective and enable extreme communication efficiency. In contrast, in the first two directions, local updates remain ineffective even as $K \to \infty$. The local SGD iterate after $R$ communication rounds is:
\begin{align*}
  \bar x_R = x^\star - \begin{bmatrix}
      x^\star[1]\rb{1- \frac{\beta}{2}\rb{1 - (1-\eta \tau)^K}}^R\\
      x^\star[2]\rb{1- \frac{\beta}{2}\rb{1 - (1-\eta \tau)^K}}^R\\
      x^\star[3]\rb{1- \beta\rb{1 - (1-\eta H)^K}}^R
  \end{bmatrix}\enspace, 
\end{align*}
which for $\beta = 1$ simplifies to:
\begin{align*}
  \bar x_R - x^\star = -\begin{bmatrix}
      x^\star[1]\rb{\frac{1 + (1-\eta\tau)^K}{2}}^R\\
      x^\star[2]\rb{\frac{1 + (1-\eta\tau)^K}{2}}^R\\
      x^\star[3](1-\eta H)^{KR}
  \end{bmatrix}\enspace.
\end{align*}

This yields the following sub-optimality:
\begin{align*}
    F(\bar x_R) - F(x^\star) &= \frac{\tau}{4} \left(\rb{x^\star[1]}^2 + \rb{x^\star[2]}^2\right) \cdot \rb{\frac{1 + (1-\eta\tau)^K}{2}}^{2R} + \frac{H\rb{x^\star[3]}^2}{2}(1-\eta H)^{2KR}\enspace,\\ 
    &\geq^{\text{(a)}} \frac{\tau}{4} \left(\rb{x^\star[1]}^2 + \rb{x^\star[2]}^2\right) \cdot\rb{1- \frac{\eta \tau K}{2}}^{2R}+ \frac{H\rb{x^\star[3]}^2}{2}(1-\eta H)^{2KR}\enspace,
\end{align*}
where in (a) we apply Bernoulli's inequality. When $\tau = 0$, the first term vanishes, while the second term decays with $K$, allowing extreme communication efficiency. Even for small $\tau > 0$, we observe improved communication efficiency as $\tau$ decreases. We now formalize this insight with the following lower bound:

\begin{theorem}\label{thm:new_LSGD_lower_bound_with_tau}
There exists a problem instance satisfying \Cref{ass:convex,ass:smooth_second,ass:bounded_optima,ass:stoch_bounded_second_moment,ass:phi_star,ass:tau}, such that for all $K \geq 2$, the final iterate $\bar{x}_R$ of Local SGD (as defined in \eqref{eq:local_updates}), initialized at zero and using any step sizes $\eta,\ \beta \geq 0$, satisfies for some constant $c_8$:
\begin{align*}
        F(x_{KR}) - F(x^\star)  &\geq c_8\cdot\Bigg( \frac{\tau B^2}{R} + \frac{H B^2}{K R} + \frac{\sigma_2 B}{\sqrt{M K R}} + \min\left\{ \frac{\sigma_2 B}{\sqrt{K R}}, \frac{H^{1/3} \sigma_2^{2/3} B^{4/3}}{K^{1/3} R^{2/3}} \right\} \\
        &\qquad\qquad\qquad\qquad+ \tau\cdot \min\left\{ \phi_\star^2, \frac{\phi_\star^{2/3} B^{4/3}}{R^{2/3}} \right\} \Bigg)\enspace.
    \end{align*}
\end{theorem}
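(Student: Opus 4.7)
The plan is to build a hard instance as an orthogonal sum of low-dimensional blocks, each engineered to force exactly one term in the lower bound; since Local SGD on block-diagonal quadratics decouples across blocks with a single shared step-size schedule, the sub-optimality decomposes additively and the worst block dominates up to constants.

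The headline $\tau B^2/R$ term is the main novelty and requires generalizing the motivating example in \eqref{eq:simple_example_shared_optimum_w_tau}. A direct use of that example fails because setting $\beta=2$ makes the asymmetric $\tau$-directions contract as $(1-\eta\tau)^{KR}$, which is far too fast. Following the strategy used in the proof of \Cref{thm:new_LSGD_lower_bound}, I would append on each machine a spectral component of high curvature in an orthogonal direction that both machines share; this ``anchor'' direction forces $\eta$ to be small (else the algorithm diverges there), in which case the local update operator along the asymmetric $\tau$-coordinates is well-approximated by $I-K\eta\tau$, collapsing the per-round dynamics to a single effective gradient step. The $R$-round iteration on the resulting quadratic, whose effective condition number is controlled to be $\Theta(R)$, then satisfies the $\Omega(\tau B^2/R)$ floor of \Cref{lem:condition_number}, uniformly in $\eta$ and $\beta$.

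The remaining terms are produced by appending further orthogonal blocks. A homogeneous $H$-block, on which Local SGD coincides with serial gradient descent over $KR$ iterations, yields $HB^2/(KR)$ by the classical Nesterov-type lower bound. A block with isotropic Gaussian noise injected into the oracle contributes $\sigma_2 B/\sqrt{MKR}$ via a standard mean-estimation argument over all $MKR$ samples. A block in which only one machine carries information in the relevant direction forces Local SGD to behave like single-machine stochastic SGD on that coordinate, giving $\min\{\sigma_2 B/\sqrt{KR},\, H^{1/3}\sigma_2^{2/3}B^{4/3}/(K^{1/3}R^{2/3})\}$, the min arising from the standard non-smooth vs.\ smooth stochastic trade-off. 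Finally, a heterogeneity block in which the machines' optima are shifted apart by $\Theta(\phi_\star)$ along a direction of Hessian discrepancy $\tau$ induces a bias of order $\tau\phi_\star$ in the averaged update; the two branches of $\tau\min\{\phi_\star^2,\phi_\star^{2/3}B^{4/3}/R^{2/3}\}$ then correspond, respectively, to the irreducible bias floor and to the $R$-round contraction regime, again via \Cref{lem:condition_number}.

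The main obstacle will be the $\tau B^2/R$ block, because the argument must remain robust across the full two-dimensional family of step-size choices $(\eta,\beta)$ while simultaneously respecting \Cref{ass:smooth_second,ass:tau} with the prescribed parameters; in particular, the anchor-direction trick introduces an additional $H$-curvature component that must be shared across machines (to avoid inflating $\tau$) and must lie in a subspace that does not interfere with the $\phi_\star$-shift block. A careful assignment of radii, variances, and subspace allocations across blocks is needed so that the composite instance realizes the target values of $H$, $\tau$, $B$, $\sigma_2$, and $\phi_\star$ simultaneously, and so that no single block's construction contaminates another's lower bound.
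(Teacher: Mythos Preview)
Your block-decomposition strategy and the plan to invoke \Cref{lem:condition_number} on the $\tau$-block are both on target, but the mechanism you describe for the $\tau B^2/R$ term has a gap. You write that the shared high-curvature ``anchor'' direction forces $\eta$ to be small, after which the local update operator in the $\tau$-directions is \emph{well-approximated} by $I-K\eta\tau$. This does not work: the algorithm is free to choose any $(\eta,\beta)$, and a large $\eta$ paired with a compensating $\beta$ can keep the anchor direction stable while making the approximation $(I-\eta\tau)^K\approx I-K\eta\tau$ badly wrong. The lower bound must hold for \emph{all} step-size pairs, so an approximation valid only for small $\eta$ is insufficient.

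The paper's construction avoids this entirely by exploiting a rank-one structure. With $\hat A_1=e_1e_1^\top$ and $\hat A_2=vv^\top$ for a unit vector $v=(\alpha,\sqrt{1-\alpha^2})$, each $\hat A_i$ is a projection, so the \emph{exact} identity $(I-\eta\tau\hat A_i)^K = I-\tilde\eta\tau\hat A_i$ holds with $\tilde\eta=(1-(1-\eta\tau)^K)/\tau$, for every $\eta$. Consequently, one communication round of Local SGD on the first two coordinates is \emph{exactly} one gradient-descent step on the averaged quadratic with step size $\beta\tilde\eta$, and \Cref{lem:condition_number} applies directly after choosing $\alpha$ so that the condition number of $\frac12(\hat A_1+\hat A_2)$ is $\Theta(R)$. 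The third coordinate with curvature $H$ is present solely to make the instance $H$-smooth while keeping $\|\hat A_1-\hat A_2\|\le\tau$; it plays no role in constraining the step size.

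For the remaining four terms, the paper does not build bespoke blocks but simply imports the homogeneous and heterogeneous quadratic lower bounds of Glasgow et al.\ on disjoint coordinates: the homogeneous construction already yields $HB^2/(KR)+\sigma_2 B/\sqrt{MKR}+\min\{\sigma_2 B/\sqrt{KR},\,H^{1/3}\sigma_2^{2/3}B^{4/3}/(K^{1/3}R^{2/3})\}$ (so the third term does not come from ``one machine carrying information'' as you suggest), and their heterogeneous construction, instantiated with smoothness $\tau$ instead of $H$, gives the final $\tau\min\{\phi_\star^2,\phi_\star^{2/3}B^{4/3}/R^{2/3}\}$ term.
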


The proof (in \Cref{app:chap3}) builds on the construction in \eqref{eq:simple_example_shared_optimum_w_tau} and introduces a rotation for the second machine as in the proof of \Cref{thm:new_LSGD_lower_bound}. It also utilizes \Cref{lem:condition_number} along with existing hard instances from~\citep{glasgow2022sharp,nemirovski1994efficient}.

\begin{remark}[$\tau$ and Communication Complexity]
When $\phi_\star$ is small and $K$ is large, the lower bound is dominated by the term $\frac{\tau B^2}{R}$, suggesting that the communication complexity of Local SGD scales as $\frac{\tau B^2}{\epsilon}$—mirroring results for non-convex optimization~\citep{murata2021bias,patel2022towards} and distributed proximal methods~\citep{shamir2014dane,sun2022distributed,kovalev2022optimal,jiang2024fedred,jiang2024dane}.
\end{remark}

\begin{remark}[Comparison to \Cref{thm:new_LSGD_lower_bound}]
\Cref{thm:new_LSGD_lower_bound} does not incorporate \Cref{ass:tau}. Setting $\tau = 0$ in its hard instance would also eliminate smoothness ($H = 0$), making the problem trivial. In contrast, the construction in \Cref{thm:new_LSGD_lower_bound_with_tau} introduces an extra dimension to decouple the effects of $\tau$ and $H$, analogous to the difference between \eqref{eq:simple_example_shared_optimum} and \eqref{eq:simple_example_shared_optimum_w_tau}.
\end{remark}

\subsection{Potential Future Improvements to the Lower Bound}

When $\tau = 0$, the lower bound loses all dependence on $\phi_\star$, reducing to the homogeneous case studied by \citet{glasgow2022sharp}. While that bound is tight for homogeneous problems, we do not expect first-order heterogeneity to become irrelevant when $\tau = 0$. We suspect that in the last term of \Cref{thm:new_LSGD_lower_bound_with_tau}, $\tau$ could be replaced by $H$. Furthermore, the current lower bound can not highlight the dependence on third-order smoothness $Q$. Resolving both these issues remains an open question. 

Furthermore, our bound does not depend on $\zeta_\star$: the difficulty is captured entirely by $\phi_\star$. In our hard instance, $\zeta_\star \approx \phi_\star$, and since $\phi_\star \geq \zeta_\star$ in general (see \Cref{rem:zeta_star_vs_phi_star,rem:zeta_star_tau_vs_phi_star}), we state the result in terms of $\phi_\star$. Deriving a lower bound that distinguishes between $\zeta_\star$ and $\phi_\star$---i.e., the proximity of local optima versus the recoverability of $S^\star$---remains open. 

Finally, all known quadratic lower bounds for Local SGD~\citep{woodworth2020minibatch,glasgow2022sharp,patel2024limits} assume bounded second moments. It is unknown whether tighter lower bounds can be derived by leveraging higher-order moments of the stochastic gradients to ``confuse'' the local updates. This, too, is an open direction.

This concludes our discussion of lower bounds. In the following two sections, we present upper bounds for Local SGD, guided by the insights developed in this section. Our goal is to derive guarantees that leverage both small third-order smoothness $Q$ and low second-order heterogeneity $\tau$, ideally recovering extreme communication efficiency in favorable regimes of data heterogeneity.

\chapter{On the Fixed Point Perspective for Local SGD}\label{ch:fixed_point}

In this chapter, we initiate our analysis of Local SGD in the heterogeneous setting, focusing on quadratic objectives. By initially setting aside the effects of third-order smoothness \( Q \), we aim to isolate and understand the role of data heterogeneity. The structure of quadratic functions allows us to exploit closed-form expressions for their gradients, which we use to study the limiting behavior of Local SGD as the number of communication rounds \( R \) becomes large. This fixed-point analysis provides key insights into how heterogeneity influences convergence and serves as a complementary perspective to the finite-time upper bounds we develop in the next chapter.

Our main contributions in this chapter are as follows:
\begin{enumerate}
    \item We characterize the limiting behavior of Local SGD in the strongly convex quadratic setting. In \Cref{prop:conv_to_fixed}, we show that as \( R \to \infty \), the iterates converge to a fixed point, highlighting Local SGD’s extreme communication efficiency. We then quantify the discrepancy between this fixed point and the global optimum in \Cref{lem:fixed_disc_UB}, providing a non-asymptotic upper bound that depends on the step-size \( \eta \), the number of local updates \( K \), and the spectral properties of the objective. Finally, in \Cref{thm:conv_with_fixed_pt_pers}, we combine these insights to obtain a finite-time convergence bound for Local SGD under data heterogeneity, capturing the trade-offs between optimization error, statistical variance, and heterogeneity-induced bias.

    \item In \Cref{prop:fixed_cvx}, we extend our analysis to the general convex (non-strongly convex) setting. We characterize the limiting behavior of Local SGD as the minimum-norm solution to a reweighted least-squares problem, where the reweighting reflects the interaction between local updates and the step-size. 
\end{enumerate}

Together, these results lay the foundation for our extension to non-quadratic convex objectives in the next chapter.

\subsection*{Outline and Relevant References}

The results in this chapter were first developed in~\citet{patel2024limits} and further refined in~\citet{patel2025revisiting}, in collaboration with Margalit Glasgow, Ali Zindari, Lingxiao Wang, Sebastian U.~Stich, Ziheng Cheng, Nirmit Joshi, and Nathan Srebro. While prior works~\citep{charles2020outsized,malinovskiy2020local} have analyzed the asymptotic behavior of Local SGD, our contribution introduces an explicit dependence on data heterogeneity through the parameters \( \tau \) and \( \zeta_\star \) (see \Cref{ass:tau,ass:zeta_star}), providing tighter and more interpretable guarantees.

\Cref{sec:ch4.1} presents our results for strongly convex quadratics, including convergence to a fixed point and non-asymptotic upper bounds on the fixed-point discrepancy. \Cref{sec:fixed_derivation_convex} extends this analysis to general convex quadratics, culminating in a novel fixed-point characterization as the minimum-norm solution to a reweighted least-squares problem. Finally, \Cref{sec:regularize} connects this geometric perspective to existing work on implicit regularization by local updates~\citep{barba2021implicit,limanalyzing,gu2023and}, showing how Local SGD's update structure induces a form of spectral filtering that biases learning toward directions of consensus across clients.

\section{Fixed-point Analysis for Local SGD on Strongly Convex Quadratics}\label{sec:ch4.1}

Several works have pointed out with varying levels of explicitness~\cite{malinovskiy2020local,charles2020outsized,patel2024limits} that the hardness of analysing Local SGD's convergence comes from a fixed-point discrepancy, i.e., Local SGD in the limit of large $R$ converges to a point different from any $x^\star\in S^\star$ whenever $K>1$. Our goal in this section is to write this fixed-point $x_\infty$ explicitly for strongly convex quadratic functions. Then we will: (i) show that Local SGD converges very quickly---with extreme communication efficiency---to this fixed point; and (ii) bound the fixed point discrepancy, i.e., $\norm{x_\infty-x^\star}$ in terms of \Cref{ass:zeta_star,ass:phi_star,ass:tau}. 

We will begin our analysis with the strongly convex setting where $x^\star$ and $x_\infty$ (if it exists) will be unique. In particular, we assume that the objective on each machine is quadratic satisfying \Cref{ass:strongly_convex,ass:smooth_second,ass:bounded_optima} and of the form,
\begin{align}
    F_m(x) &= \frac{1}{2}(x-x_m^\star)^TA_m(x-x_m^\star)\enspace, &&\forall\ m\in[M]\enspace,\label{eq:quad_prob_form}
\end{align}
where $0 \prec \mu\cdot I_d \preceq A_m \preceq H.\cdot I_d$ and $x_m^\star$ is the unique optimizer of machine $m$. 

\subsection{Deriving the Closed Form for the Fixed Point}
To first motivate what the fixed-point of Local SGD should be, we consider the noiseless setting---i.e., when our first-order oracles return exact gradients $\nabla F_m(\cdot)$. Assuming the local SGD algorithm converges, i.e., the hyper-parameters are set to achieve that and $R\to \infty$, we would like to calculate $x_\infty$. $x_\infty$ must satisfy the following fixed-point equation (cf., \eqref{eq:local_updates}),
\begin{align*}
    x_\infty = x_\infty + \frac{\beta}{M}\sum_{m\in[M]}\Delta^m(x_\infty) \equiv \sum_{m\in[M]}\Delta^m(x_\infty) =0\enspace,
\end{align*}
where $\Delta^m(x_\infty)$ is the update on machine $m$ for a communication round starting at the fixed point $x_\infty$. Note that the above equation does not depend on $\beta$. Unwinding the update, we get the following,
\begin{align*}
    &\sum_{m\in[M]}\Delta^m(x_\infty) =0\Leftrightarrow\sum_{m\in[M]}\rb{x_m^\star + \rb{I-\eta A_m}^K\rb{x_\infty-x_m^\star} - x_\infty} = 0\enspace,\\
    &\Leftrightarrow \sum_{m\in[M]}\rb{I-\rb{I-\eta A_m}^K}x_m^\star = \sum_{m\in[M]}\rb{I-\rb{I-\eta A_m}^K}x_\infty \enspace,\\
    &\Leftrightarrow x_\infty = \frac{1}{M}\sum_{m\in[M]}C^{-1}C_mx_m^\star,
\end{align*}
where $C_m := I- (I-\eta A_m)^K$, and $C := \frac{1}{M}\sum_{m\in[M]}C_m$ and we assume $\eta <1/H$ so that $C_m\succ 0$ for each $m\in[M]$. Note that $x_\infty(\eta, K)$ is a function of $\eta, K$ and is unaffected by the choice of $\beta$. Now we will show that even when we only have an inexact stochastic oralce as in \Cref{def:oracle_first} we converge to this fixed-point in expectation.

\subsection{Fast Convergence to the Fixed-point}
The following Lemma will show that Local SGD converges to $x_\infty$ derived above with extreme communication efficiency.

\begin{lemma}\label{lem:fast_conv_to_fixed_point}
    For quadratic problems of the form \eqref{eq:quad_prob_form} satisfying~\Cref{ass:strongly_convex,ass:smooth_second,ass:bounded_optima,ass:stoch_bounded_second_moment}, with $\eta <\frac{1}{H}$, and $\beta \leq \frac{1}{1-(1-\eta H)^K}$ the Local-SGD iterate $\bar x_{R}$ (with initialization $\bar x_0=0$) satisfies,
    \begin{align*}
        \ee\sb{\norm{\bar x_R - x_\infty}^2}\leq \rb{1-\beta\rb{1 - (1-\eta \mu)^K}}^{2R}\norm{x_\infty}^2 + \eta\beta\rb{1-(1-\beta\rb{1-(1-\eta \mu)^K})^{R}}\frac{\sigma^2}{\mu M}\enspace,
    \end{align*}
    where we define $x_\infty := \frac{1}{M}\sum_{m\in[M]}C^{-1}C_mx_m^\star$ for $C_m := I- (I-\eta A_m)^K$ and $C:= \frac{1}{M}\sum_{m\in[M]}C_m$. In particular, when $\beta=1$ we have,
    \begin{align*}
        &\ee\sb{\norm{\bar x_R - x_\infty}^2} \leq \rb{1-\eta\mu}^{2KR}\norm{x_\infty}^2 + \eta\rb{1-\rb{1-\eta\mu}^{KR}}\frac{\sigma^2}{\mu M}\enspace.
    \end{align*}
\end{lemma}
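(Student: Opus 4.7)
The plan is to exploit the closed form of quadratic dynamics to reduce one full communication round of Local SGD to an affine contraction centered at $x_\infty$, and then iterate. For each machine $m$, initializing at $x_{r,0}^m = \bar x_{r-1}$ and unrolling the $K$ local SGD steps on $F_m(x) = \tfrac{1}{2}(x - x_m^\star)^T A_m (x - x_m^\star)$ with zero-mean stochastic-gradient noise $\xi_{r,k}^m$ of conditional variance at most $\sigma^2$ (Assumption~\ref{ass:stoch_bounded_second_moment}) yields the closed form
$$x_{r,K}^m - \bar x_{r-1} = -C_m\,(\bar x_{r-1} - x_m^\star) \;-\; \eta\sum_{k=0}^{K-1}(I-\eta A_m)^{K-1-k}\,\xi_{r,k}^m \enspace.$$
Averaging this over $m$, multiplying by $\beta$, and using $\tfrac{1}{M}\sum_m C_m x_m^\star = C x_\infty$ together with the definition of $C$ collapses the drift to $-\beta C (\bar x_{r-1} - x_\infty)$ and gives the one-round recurrence
$$\bar x_r - x_\infty = (I - \beta C)(\bar x_{r-1} - x_\infty) + \zeta_r \enspace,$$
where $\zeta_r$ is the aggregated stochastic-gradient noise from round $r$, has zero mean conditional on the history up to round $r-1$, and is independent across rounds.

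Next I would control the two pieces separately. Since $\eta < 1/H$ and $\mu I \preceq A_m \preceq H I$, a spectral computation yields $(1-(1-\eta\mu)^K) I \preceq C_m \preceq (1-(1-\eta H)^K) I$ for each $m$, and the same PSD sandwich passes through to the average $C$. The condition $\beta \leq (1-(1-\eta H)^K)^{-1}$ is exactly what guarantees $\beta C \preceq I$, yielding the contraction $\|I - \beta C\|_2 \leq 1 - \rho$ with $\rho := \beta(1-(1-\eta\mu)^K)$. For the noise, independence of the $\xi_{r,k}^m$ across $(m,k)$ together with $\|(I-\eta A_m)^{K-1-k}\|_2 \leq (1-\eta\mu)^{K-1-k}$ and the geometric-sum identity, combined with $1-(1-\eta\mu)^{2K} \leq 2(1-(1-\eta\mu)^K)$ and $1-(1-\eta\mu)^2 \geq \eta\mu$, gives a bound of the form $\ee\|\zeta_r\|^2 \lesssim \tfrac{\beta\eta\,\sigma^2\,\rho}{\mu M}$.

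Finally I would telescope. Taking expected squared norms of the one-round recurrence, the cross term vanishes because $\zeta_r$ has zero conditional mean, leaving the contractive recurrence $\ee\|\bar x_r - x_\infty\|^2 \leq (1-\rho)^2\,\ee\|\bar x_{r-1} - x_\infty\|^2 + \ee\|\zeta_r\|^2$. Iterating from $\bar x_0 = 0$ and summing the geometric series $\sum_{r=0}^{R-1}(1-\rho)^{2r} = \tfrac{1-(1-\rho)^{2R}}{1-(1-\rho)^2}$, followed by $1-(1-\rho)^{2R} \leq 2(1-(1-\rho)^R)$ and $1-(1-\rho)^2 \geq \rho$, yields the stated bound (up to absorbable numerical constants); the $\beta=1$ display then follows by direct substitution, noting that $\rho$ collapses to $1-(1-\eta\mu)^K$ and $(1-\rho)^{2R}$ to $(1-\eta\mu)^{2KR}$. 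The main technical care is the noncommutativity of the family $\{A_m\}$: because the $C_m$ (and hence $C$) cannot in general be simultaneously diagonalized, the entire argument must go through PSD inequalities and operator norms rather than a coordinate-wise decomposition as in the homogeneous case, and the upper bound on $\beta$ is exactly what is needed to preserve $\beta C \preceq I$ so that the spectral contraction survives this noncommuting setting.
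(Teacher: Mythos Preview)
Your proposal is correct and follows essentially the same route as the paper: unroll the $K$ local steps to express $x_{r,K}^m - \bar x_{r-1}$ via $C_m$ plus a noise sum, average and use $\tfrac{1}{M}\sum_m C_m x_m^\star = C x_\infty$ to obtain the one-round affine contraction $\bar x_r - x_\infty = (I-\beta C)(\bar x_{r-1}-x_\infty) + \zeta_r$, then bound the spectrum of $C$ via the PSD sandwich $(1-(1-\eta\mu)^K)I \preceq C \preceq (1-(1-\eta H)^K)I$ and telescope. The only minor difference is bookkeeping: where you carry factors of two via $1-(1-\eta\mu)^{2K}\le 2(1-(1-\eta\mu)^K)$ and $1-(1-\rho)^2\ge\rho$, the paper instead drops from squared to first powers directly, e.g.\ $(1-\eta\mu)^{2(K-1-k)}\le(1-\eta\mu)^{K-1-k}$ and $\|I-\beta C\|^{2(R-1-r)}\le\|I-\beta C\|^{R-1-r}$, which delivers the exact constants stated in the lemma rather than ``up to absorbable numerical constants.''
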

\begin{proof}
We note the following about the local-SGD updates between two communication rounds on machine $m\in[M]$,
\begin{align*}
    x_{r,K}^m - x_m^\star &= x_{r,K-1}^m - x_m^\star - \eta A_m(x_{r,K-1}^m - x_m^\star) + \eta \rb{A_m(x_{r,K-1}^m - x_m^\star) - \nabla f(x_{r,K-1}^m; z_{r,K-1}^m)}\enspace,\\
    &= \rb{I-\eta A_m}^K\rb{x_{r,0}^m - x_m^\star} + \eta \sum_{k=0}^{K-1}\rb{I-\eta A_m}^{K-1-k}\rb{A_m(x_{r,k}^m - x_m^\star) - \nabla f(x_{r,k}^m; z_{r,k}^m)}\enspace,\\
    &= \rb{I-\eta A_m}^K\rb{x_{r-1} - x_m^\star} + \eta \sum_{k=0}^{K-1}\rb{I-\eta A_m}^{K-1-k}\xi_{r,k}^m\enspace,
\end{align*}
where we denote by $\xi_{r,k}^m$ the stochastic noise on machine $m$ at local step $k$ leading up to round $r$. This implies the following
\begin{align*}
    x_{r,K}^m - x_{r-1} &= x_m^\star - x_{r-1} + \rb{I-\eta A_m}^K\rb{x_{r-1} - x_m^\star} + \eta \sum_{k=0}^{K-1}\rb{I-\eta A_m}^{K-1-k}\xi_{r,k}^m\enspace,\\
    &= -\rb{I - \rb{I-\eta A_m}^K}\rb{x_{r-1} - x_m^\star}  + \eta \sum_{k=0}^{K-1}\rb{I-\eta A_m}^{K-1-k}\xi_{r,k}^m\enspace,\\
    &= -C_m\rb{x_{r-1} - x_m^\star}+ \eta \sum_{k=0}^{K-1}\rb{I-\eta A_m}^{K-1-k}\xi_{r,k}^m\enspace,\\
    &= -C_mx_{r-1} + C_mx_m^\star + \eta \sum_{k=0}^{K-1}\rb{I-\eta A_m}^{K-1-k}\xi_{r,k}^m\enspace.
\end{align*}
This implies for the $r$-th synchronized model,
\begin{align*}
    x_r &= x_{r-1} + \frac{\beta}{M}\sum_{m\in[M]}\Bigg(-C_mx_{r-1} + C_mx_m^\star + \eta \sum_{k=0}^{K-1}\rb{I-\eta A_m}^{K-1-k}\xi_{r,k}^m\bigg)\enspace,\\
    &= \rb{I-\beta C}x_{r-1} + \frac{\beta}{M}\sum_{m\in[M]}C_mx_m^\star + \eta\beta\sum_{k=0}^{K-1}\rb{I-\eta A_m}^{K-1-k}\rb{\frac{1}{M}\sum_{m\in[M]}\xi_{r,k}^m}\enspace,\\
    &=  \rb{I-\beta C}\rb{x_{r-1} - x_{\infty}} + x_{\infty} - \beta C x_{\infty} + \frac{\beta}{M}\sum_{m\in[M]}C_mx_m^\star + \eta\beta\sum_{k=0}^{K-1}\rb{I-\eta A_m}^{K-1-k}\xi_{r,k}\enspace,\\
    &= \rb{I-\beta C}\rb{x_{r-1} - x_{\infty}} + x_{\infty} - \frac{\beta}{M}\sum_{m\in[M]}C_mx_m^\star+ \frac{\beta}{M}\sum_{m\in[M]}C_mx_m^\star + \eta\beta\sum_{k=0}^{K-1}\rb{I-\eta A_m}^{K-1-k}\xi_{r,k}\enspace.
\end{align*}
Simplifying and rearranging this, we get for $r=R$,
\begin{align*}
    x_R - x_\infty &= \rb{I-\beta C}\rb{x_{R-1} - x_{\infty}} + \eta\beta\sum_{k=0}^{K-1}\rb{I-\eta A_m}^{K-1-k}\xi_{R,k}\enspace,\\
    &= \rb{I-\beta C}^R\rb{x_{0} - x_{\infty}} + \sum_{r=0}^{R-1}\rb{I-\beta C}^{R-1-r}\rb{\eta\beta\sum_{k=0}^{K-1}\rb{I-\eta A_m}^{K-1-k}\xi_{r,k}}\enspace.
\end{align*}
Take the norm, squaring, taking the expectation, noting that the noise across the machines and local steps is independent, and using the tower rule of conditional expectation repeatedly, we get,
\begin{align*}
    &\ee\sb{\norm{x_r - x_\infty}^2}\\
    &\quad \leq \norm{I-\beta C}^{2R}\ee\sb{\norm{\rb{x_{0} - x_{\infty}}}^2} + \eta^2\beta^2\sum_{r=0}^{R-1}\norm{I-\beta C}^{2(R-1-r)}\rb{\sum_{k=0}^{K-1}\rb{I-\eta A_m}^{2(K-1-k)}\ee\sb{\norm{\xi_{r,k}}^2}}\enspace,\\
    &\quad \leq^{\text{(\Cref{ass:stoch_bounded_second_moment})}} \norm{I-\beta C}^{2R}\norm{x_\infty}^2 + \eta^2\beta^2\sum_{r=0}^{R-1}\norm{I-\beta C}^{2(R-1-r)}\rb{\sum_{k=0}^{K-1}(1-\eta\mu)^{K-1-k}\frac{\sigma_2^2}{M}}\enspace,\\
    &\quad \leq (1-\beta\lambda_{min}(C))^{2R}\norm{x_\infty}^2 + \eta^2\beta^2\sum_{r=0}^{R-1}(1-\beta\lambda_{min}(C))^{R-1-r}\rb{\frac{1-(1-\eta\mu)^{K}}{\eta\mu}\cdot\frac{\sigma_2^2}{M}}\enspace,
\end{align*}
We now need upper and lower bounds on the minimum eigenvalue of $C$. For this, note that for any $v\in\rr^d$ with $\norm{v}=1$, 
\begin{align*}
    v^TCv  &= v^T\rb{\frac{1}{M}\sum_{m\in[M]}\rb{I-(I-\eta A_m)^K}}v\enspace,\\
    &= \frac{1}{M}\sum_{m\in[M]}v^T\rb{I-(I-\eta A_m)^K}v = 1- \frac{1}{M}\sum_{m\in[M]}v^T(I-\eta A_m)^Kv\enspace.
\end{align*}
Using this calculation along with \Cref{ass:strongly_convex,ass:smooth_second} we note the following,
\begin{align*}
    v^TCv&\in 1 - \rb{(1-\eta \mu)^K, (1-\eta H)^K}\subseteq \rb{1 - (1-\eta \mu)^K, 1- (1-\eta H)^K}\enspace,
\end{align*}
which bounds the range of the eigenvalues of $C$. Plugging these bounds into the above inequality leads to,
\begin{align*}
    &\ee\sb{\norm{x_r - x_\infty}^2}\\ 
    &\quad \leq \rb{1-\beta\rb{1 - (1-\eta \mu)^K}}^{2R}\norm{x_\infty}^2 + \eta\beta \frac{1-(1-\beta\rb{1-(1-\eta \mu)^{K}})^{R}}{1 - (1-\eta \mu)^{K}}\cdot\frac{1-(1-\eta\mu)^{K}}{\mu}\cdot\frac{\sigma_2^2}{M}\enspace,\\
    &\quad \leq \rb{1-\beta\rb{1 - (1-\eta \mu)^K}}^{2R}\norm{x_\infty}^2 + \eta\beta\rb{1-(1-\beta\rb{1-(1-\eta \mu)^K})^{R}}\frac{\sigma_2^2}{\mu M}\enspace.
\end{align*}
Note that the range of $\lambda_{min}(C)$ is what suggests the upper bound on $\beta$ of $\frac{1}{1-(1-\eta H)^K}$.
\end{proof}

Next we will establish an upper bound on $\norm{x_\infty}$, which would allow us to provide the upper bound in terms of $B$ from \Cref{ass:bounded_optima}.

\begin{lemma}\label{lem:fixed_point_norm}
    For quadratic problems of the form \eqref{eq:quad_prob_form} satisfying~\Cref{ass:strongly_convex,ass:smooth_second,ass:bounded_optima,ass:stoch_bounded_second_moment}, with $\eta <\frac{1}{H}$, and $\beta \leq \frac{1}{1-(1-\eta H)^K}$ denoting $\kappa=\frac{H}{\mu}$ we have,
    \begin{align*}
        \norm{x_\infty} \leq \min \cb{\eta\tau K\kappa\zeta_\star + \bar B, \kappa \bar B}\enspace.
    \end{align*}
\end{lemma}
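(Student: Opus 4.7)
The plan is to leverage the closed-form expression $x_\infty = C^{-1}\cdot \frac{1}{M}\sum_m C_m x_m^\star$ derived in the preceding subsection and to establish the two bounds inside the minimum separately through operator-norm estimates.

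For the bound $\kappa\bar{B}$, I would start from $\|x_\infty\| \leq \frac{1}{M}\sum_m \|C^{-1}C_m\|\cdot\|x_m^\star\|$, which reduces the task to showing $\|C^{-1}C_m\|\leq \kappa$. The spectral bounds $\|C^{-1}\|\leq 1/(1-(1-\eta\mu)^K)$ and $\|C_m\|\leq 1-(1-\eta H)^K$ follow from \Cref{ass:strongly_convex,ass:smooth_second} and the fact that $A_m$ and $(I-\eta A_m)^K$ are simultaneously diagonalizable. Since $g(x):=1-(1-x)^K$ is concave on $[0,1]$ and vanishes at $0$, the ratio $g(x)/x$ is non-increasing, so $g(\eta H)/g(\eta\mu)\leq H/\mu=\kappa$. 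This gives $\|C^{-1}C_m\|\leq\kappa$, and invoking \Cref{ass:bounded_optima} concludes this side of the minimum.

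For the bound $\bar B + \eta\tau K\kappa\zeta_\star$, I would use the zero-sum identity $\frac{1}{M}\sum_m(C_m - C)=0$ to decompose
\[
x_\infty \;=\; \bar x^\star + C^{-1}\cdot\frac{1}{M}\sum_m (C_m-C)(x_m^\star - \bar x^\star),
\]
where $\bar x^\star := \frac{1}{M}\sum_m x_m^\star$. The key algebraic tool is the telescoping identity
\[
(I-\eta A_n)^K - (I-\eta A_m)^K \;=\; \eta\sum_{k=0}^{K-1}(I-\eta A_n)^k(A_m-A_n)(I-\eta A_m)^{K-1-k},
\]
which combined with \Cref{ass:tau} and $\|I-\eta A_m\|\leq 1$ yields $\|C_m-C_n\|\leq \eta K\tau$, hence $\|C_m-C\|\leq \eta K\tau$. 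Pairing this with the spectral estimate on $C^{-1}$ (again via the concavity argument of the first step) produces $\|C^{-1}(C_m-C)\|\leq \eta K\tau\kappa$. Averaging over $m$ and using $\frac{1}{M}\sum_m\|x_m^\star - \bar x^\star\|\leq \zeta_\star$ (which follows from the definition of $\zeta_\star$ in \Cref{ass:zeta_star} together with Jensen's inequality) and $\|\bar x^\star\|\leq \bar B$ from \Cref{ass:bounded_optima} then delivers the claimed bound.

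The principal technical obstacle is the spectral estimate $\|C^{-1}(C_m-C)\|\leq \eta K\tau\kappa$, because the naive product $\|C^{-1}\|\cdot\|C_m-C\|$ is not tight in the relevant regime: it discards the alignment between the eigenspectra of $C^{-1}$ and of the difference $C_m-C_n$ that is generated by the telescoping identity. The cleanest route is to first treat the simultaneously-diagonalizable case (when the $A_m$'s commute) via direct eigenvalue analysis using the mean value theorem applied to $\lambda\mapsto 1-(1-\eta\lambda)^K$, where the factor $(1-\eta\lambda)^{K-1}$ from the derivative exactly compensates the denominator $1-(1-\eta\mu)^K$; the general non-commuting case is then recovered by a perturbation argument that exploits the common contracting factors $(I-\eta A_n)^k$ and $(I-\eta A_m)^{K-1-k}$ inside the telescoping identity, so that the full $\eta K$ factor survives when multiplied against $C^{-1}$.
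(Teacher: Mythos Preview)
Your approach to the $\kappa\bar B$ bound matches the paper's: estimate $\|C^{-1}\|\cdot\|C_m\| \le \frac{1-(1-\eta H)^K}{1-(1-\eta\mu)^K}=:g(K)$ and then show $g(K)\le\kappa$. Your concavity argument for the last step is actually slicker than the paper's, which proves the equivalent statement that $K\mapsto g(K)$ is non-increasing (hence $g(K)\le g(1)=\kappa$) via an explicit telescoping computation recorded separately as \Cref{lem:g_non_increasing}.

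For the first bound your decomposition and telescoping identity are exactly what the paper uses, but the paper does \emph{not} need the alignment/perturbation argument you outline. It simply proceeds with the naive product $\|C^{-1}\|\cdot\|C_m-C_n\|$: after invoking the matrix telescope (\Cref{lemma: matrix telescope}) together with the spectral bound $\|C^{-1}\|\le\bigl(1-(1-\eta\mu)^K\bigr)^{-1}$, the paper arrives at a bound of the form $\eta\tau K$ times the same ratio $g(K)$ that already appeared in the second half, and then reuses $g(K)\le\kappa$ once more. So both pieces of the minimum are handled by a single monotonicity lemma; your perceived obstacle---that the naive product ``discards the alignment between the eigenspectra''---is not something the paper grapples with, and your proposed commuting-case-plus-perturbation route would be considerably heavier than what the paper actually does.
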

\begin{proof}
Recall the definition of $x_\infty$ and let $\bar x^\star = \frac{1}{M}\sum_{n\in[M]}x_n^\star$,
\begin{align*}
    \norm{x_\infty} &= \norm{C^{-1}\rb{\frac{1}{M}\sum_{m\in[M]}C_mx_m^\star}}\enspace,\\
    &= \norm{C^{-1}\rb{\frac{1}{M}\sum_{m\in[M]}\rb{C_m - C + C}\rb{x_m^\star- \bar x^\star + \bar x^\star}}}\enspace,\\
    &= \norm{C^{-1}\rb{\frac{1}{M}\sum_{m\in[M]}\rb{C_m - C}\rb{x_m^\star- \bar x^\star}} + \bar x^\star}\enspace,\\
    &\leq \frac{1}{M^2}\sum_{m,n\in[M]} \norm{C^{-1}(C_m-C_n)} + \frac{1}{M}\sum_{m\in[M]}\norm{x_m^\star}\enspace,\\
    &= \frac{1}{M^2}\sum_{m,n\in[M]} \norm{C^{-1}} \norm{(I-\eta A_n)^K - (I-\eta A_m)^K}\norm{x_m^\star - x_n^\star} + \frac{1}{M}\sum_{m\in[M]}\norm{x_m^\star}\enspace,\\
    &\leq^{\text{(\Cref{lemma: matrix telescope,ass:bounded_optima,ass:zeta_star,ass:tau})}} \frac{\eta \tau K \rb{1-(1-\eta H)^{K-1}}}{1-(1-\eta \mu)^K}\zeta_\star + \bar B\enspace,\\
    &\leq \eta \tau K\cdot\frac{1-(1-\eta H)^{K}}{1-(1-\eta \mu)^K}\zeta_\star + \bar B\enspace.
\end{align*}
Now we will show that the factor $g(K) = \frac{1-(1-\eta H)^{K}}{1-(1-\eta \mu)^K}$ can be upper bounded by $\kappa=g(1)$ for any choice of step-size $\eta$. To do this, we show that $g(K)$ is a non-increasing function in \Cref{lem:g_non_increasing}. Plugging this above gives us,
\begin{align*}
    \norm{x_\infty} &\leq \eta\tau K\kappa\zeta_\star + \bar B\enspace. 
\end{align*}
To get the alternative upper bound, note that in the very first step of the proof, we could have instead done the following, 
\begin{align*}
    \norm{x_\infty} &= \norm{C^{-1}\rb{\frac{1}{M}\sum_{m\in[M]}C_mx_m^\star}}\enspace,\\
    &\leq \norm{C^{-1}}\frac{1}{M}\sum_{m\in[M]}\norm{C_m}\norm{x_m^\star}\leq \frac{1-(1-\eta H)^{K}}{1-(1-\eta \mu)^K}\cdot \frac{1}{M}\sum_{m\in[M]}\norm{x_m^\star}\enspace,\\
    &\leq g(K)\cdot \bar B \leq g(1)\cdot \bar B = \kappa \cdot \bar B\enspace.
\end{align*}
This proves the lemma.
\end{proof}
\begin{remark}[Norm of $x^\star$]\label{rem:norm_of_x_star_sc_quad}
    Note that in the setting considered in this section assuming $A = \frac{1}{M}\sum_{m\in[M]}A_m$,
    \begin{align*}
        \norm{x^\star} &= \norm{\frac{1}{M}\sum_{m\in[M]}A^{-1}A_mx_m^\star}\enspace,\\
        &= \norm{\frac{1}{M}\sum_{m\in[M]}A^{-1}(A_m-A+A)(x_m^\star-\bar x^\star + \bar x^\star)}= \norm{\frac{1}{M}\sum_{m\in[M]}A^{-1}(A_m-A)(x_m^\star-\bar x^\star) +\bar x^\star}\enspace,\\
        &\leq \frac{1}{M}\sum_{m\in[M]}\norm{A^{-1}}\norm{A_m-A}\norm{x_m^\star-\bar x^\star} +\norm{\bar x^\star}\leq \frac{\tau\zeta_\star}{\mu} + \bar B\enspace.
    \end{align*}
    This means in the strongly convex quadratic setting, when $\tau$ is small, the norm of the fixed point as well as $x^\star$ are close to $\bar B$. Having said that the bound obtained on $x_\infty$ in the above lemma seems bigger than $\norm{x^\star}$ in general. 
\end{remark}
Combining the previous two lemmas and simplifying for $\beta=1$ we get the following convergence rate to the fixed point.
\begin{proposition}\label{prop:conv_to_fixed}
    For quadratic problems satisfying~\Cref{ass:strongly_convex,ass:smooth_second,ass:bounded_optima,ass:stoch_bounded_second_moment}, with $\eta <\frac{1}{H}$, and $\beta =1$ the Local-SGD iterate $\bar x_{R}$ (with initialization $\bar x_0=0$) satisfies,
    \begin{align*}
        &\ee\sb{\norm{\bar x_R - x_\infty}^2} \leq \min \cb{\frac{\tau^2 H^2\zeta_\star^2}{\mu^4R^2} + 2\bar B^2e^{-2\eta\mu KR}, \kappa^2 \bar B^2e^{-2\eta\mu KR}}+ \frac{\eta\sigma_2^2}{\mu M}\enspace.
    \end{align*}
    In particular, using step-size\footnote{This choice of step-size is standard for strongly convex optimization with SGD. For instance, see the unified analysis of SGD due to \citet{stich2019unified}.}, $\eta = \min\cb{\frac{1}{2H}, \frac{1}{\mu KR}\ln\rb{\frac{\bar B^2\mu^2 MKR}{\sigma_2^2}}}$ we can get,
    \begin{align*}
        &\ee\sb{\norm{\bar x_R - x_\infty}^2} = \tilde\ooo\rb{\min\cb{\frac{\tau^2 H^2\zeta_\star^2}{\mu^4R^2} + \bar B^2e^{-KR/\kappa} + \frac{\sigma_2^2}{\mu^2 MKR},\quad  \kappa^2 \bar B^2e^{-2KR/\kappa} + \frac{\sigma_2^2\kappa^2}{\mu^2 MKR} }}\enspace.
    \end{align*}
\end{proposition}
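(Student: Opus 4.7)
The plan is to directly combine Lemma~\ref{lem:fast_conv_to_fixed_point} specialized to $\beta = 1$ with the two-sided control on $\norm{x_\infty}$ from Lemma~\ref{lem:fixed_point_norm}, and then convert the exponentially-small factors multiplying $\eta^2 K^2$ in the bias term into the polynomial $1/R^2$ decay actually stated, by an elementary scalar inequality on $u \mapsto u^2 e^{-2u}$.

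Concretely, I would first substitute $\beta = 1$ into Lemma~\ref{lem:fast_conv_to_fixed_point} to obtain
\begin{equation*}
\ee\sb{\norm{\bar x_R - x_\infty}^2} \leq (1-\eta\mu)^{2KR}\norm{x_\infty}^2 + \eta\rb{1 - (1-\eta\mu)^{KR}}\frac{\sigma_2^2}{\mu M}\enspace,
\end{equation*}
use $(1-\eta\mu)^{KR} \leq e^{-\eta\mu KR}$ in the bias term and $1 - (1-\eta\mu)^{KR} \leq 1$ to reduce the variance contribution to $\eta\sigma_2^2/(\mu M)$. Then I would plug in, independently, each of the two bounds on $\norm{x_\infty}$ from Lemma~\ref{lem:fixed_point_norm}. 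The estimate $\norm{x_\infty} \leq \kappa \bar B$ immediately produces the second argument of the $\min$, namely $\kappa^2 \bar B^2 e^{-2\eta\mu KR}$. The other estimate $\norm{x_\infty} \leq \eta\tau K\kappa \zeta_\star + \bar B$, combined with $(a+b)^2 \leq 2a^2 + 2b^2$, yields $2 \bar B^2 e^{-2\eta\mu KR} + 2\eta^2 \tau^2 K^2 \kappa^2 \zeta_\star^2 e^{-2\eta\mu KR}$, which will match the first argument of the $\min$ after the next step.

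The only non-routine step is simplifying the second term above into the stated heterogeneity bias. Setting $u \eqdef \eta\mu KR$ and using $\kappa = H/\mu$ together with $\eta K = u/(\mu R)$, I would rewrite
\begin{equation*}
\eta^2 K^2 \kappa^2\, e^{-2\eta\mu KR} \;=\; \frac{H^2}{\mu^4 R^2}\cdot u^2 e^{-2u}\enspace,
\end{equation*}
and then invoke the elementary fact that $\sup_{u \geq 0} u^2 e^{-2u}$ is a universal constant (attained at $u = 1$). Absorbing this constant into the final statement produces the $\tau^2 H^2 \zeta_\star^2 / (\mu^4 R^2)$ term, and taking the minimum of the two bounds completes the first display. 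I expect this ``polynomialization'' of the exponential to be the main (though mild) obstacle, since it is what couples the choices of $\eta$, $K$, $R$ cleanly into a single $R^{-2}$ rate.

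For the tuned-step-size display, I would case-split on which argument of the $\min$ defines $\eta$. When $\eta = 1/(2H)$ is active, the case condition $1/(2H) \leq \frac{1}{\mu KR}\ln\rb{\bar B^2\mu^2 MKR/\sigma_2^2}$ forces $KR/(2\kappa) \leq \ln\rb{\bar B^2\mu^2 MKR/\sigma_2^2}$, so the variance term $\sigma_2^2/(2H\mu M)$ fits inside the $\tilde\ooo(\sigma_2^2/(\mu^2 MKR))$ budget, while $e^{-2\eta\mu KR} = e^{-KR/\kappa}$ directly produces the stated exponential terms. When the logarithmic choice is active, it is designed so that $e^{-\eta\mu KR} = \sigma_2^2/(\bar B^2\mu^2 MKR)$, making the exponential bias polynomially smaller than $\sigma_2^2/(\mu^2 MKR)$, and the variance term contributes $\tilde\ooo(\sigma_2^2/(\mu^2 MKR))$. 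The remaining subtlety is bookkeeping the logarithmic factors so they can be cleanly hidden inside $\tilde\ooo(\cdot)$.
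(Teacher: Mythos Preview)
Your proposal is correct and follows essentially the same approach as the paper: combine Lemma~\ref{lem:fast_conv_to_fixed_point} (at $\beta=1$) with the two bounds on $\norm{x_\infty}$ from Lemma~\ref{lem:fixed_point_norm}, then convert $\eta^2 K^2 \kappa^2 e^{-2\eta\mu KR}$ into the $1/R^2$ rate via the scalar bound on $u^2 e^{-2u}$ (the paper uses exactly this, writing $2x^2e^{-2x}\leq 1$), and finally case-split on the two step-size choices. Your bookkeeping is also right on the constant: since $\sup_{u\geq 0}u^2e^{-2u}=e^{-2}<1/2$, the factor of $2$ from $(a+b)^2\leq 2a^2+2b^2$ is absorbed without any residual constant, matching the stated $\tau^2 H^2\zeta_\star^2/(\mu^4 R^2)$ exactly.
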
 

\begin{remark}[Extreme Communication Efficiency for $x_\infty$]
    The above convergence rate shows that Local SGD converges very quickly to its fixed point. In particular, it is extremely communication-efficient, and in the limit, when $K$ tends to infinity, even with a single communication round, it converges to $x_\infty$. When $\tau=0$, we observe that the convergence rate is identical to that of ``dense mini-batch SGD,'' which communicates $KR$ times (cf. \Cref{rem:norm_of_x_star_sc_quad}). However, mini-batch SGD converges to $x^\star$ and not $x_\infty$, and $x_\infty$ could in general be far away from $x^\star$.
\end{remark}
\begin{proof}
    First, we combine the upper bound on $\norm{x_\infty}$ with \Cref{lem:fast_conv_to_fixed_point}. To get the first statement, we first note that the function $x^2e^{-2x}$ is upper bound by $1/2$ for all $x\geq 0$. This allows us to note that,
    \begin{align*}
        e^{-2\eta\mu KR}2\rb{\eta\tau K\kappa\zeta_\star}^2 = \rb{2e^{-2\eta\mu KR}\rb{\eta\mu KR}^2}\cdot \rb{\frac{\tau H\zeta_\star}{\mu^2 R}}^2 \leq \frac{\tau^2 H^2\zeta_\star^2}{\mu^4R^2}\enspace. 
    \end{align*}
    Using this, we can get the following upper bound,
    \begin{align*}
        \ee\sb{\norm{\bar x_R - x_\infty}^2} &\leq e^{-2\eta\mu KR}\cdot\min \cb{\eta\tau K\kappa\zeta_\star + \bar B, \kappa \bar B}^2 + \frac{\eta\sigma_2^2}{\mu M}\enspace,\\
        &\leq \min \cb{\frac{\tau^2 H^2\zeta_\star^2}{\mu^4R^2} + 2\bar B^2e^{-2\eta\mu KR}, \kappa^2 \bar B^2e^{-2\eta\mu KR}}+ \frac{\eta\sigma_2^2}{\mu M}\enspace.
    \end{align*}
    Now using the step-size $\eta = \min\cb{\frac{1}{2H}, \frac{1}{\mu KR}\ln\rb{\frac{\bar B^2\mu^2 MKR}{\sigma_2^2}}}$ we can get,
    \begin{align*}
         \ee\sb{\norm{\bar x_R - x_\infty}^2} &\leq \min \Bigg\{\frac{\tau^2 H^2\zeta_\star^2}{\mu^4R^2} + 2\max\cb{\bar B^2e^{-KR/\kappa} , \frac{\sigma_2^2}{\mu^2 MKR}},\kappa^2\max\cb{\bar B^2e^{-2KR/\kappa}, \frac{\sigma_2^2}{\mu^2 MKR}}\Bigg\}\\
         &\qquad\qquad\qquad + \frac{\sigma_2^2}{\mu^2 MKR}\ln\rb{\frac{\bar B^2\mu^2 MKR}{\sigma_2^2}}\enspace,\\
         &= \tilde\ooo\rb{\min\cb{\frac{\tau^2 H^2\zeta_\star^2}{\mu^4R^2} + \bar B^2e^{-KR/\kappa} + \frac{\sigma_2^2}{\mu^2 MKR},\quad  \kappa^2 \bar B^2e^{-2KR/\kappa} + \frac{\sigma_2^2\kappa^2}{\mu^2 MKR} }}\enspace,
    \end{align*}
    which proves the second statement of the lemma. 
\end{proof}

As a final ingredient, in the next section we will bound $\norm{x^\star-x_\infty}$, which would allow us to provide a convergence guarantee in terms of the expected distance to $x^\star$. 

\subsection{Upper-bounding Fixed-point Discrepancy for Strongly Convex Quadratics}
The following lemma proves an upper bound on the Fixed Point Discrepancy of Local SGD for strongly convex quadratics. 

\begin{lemma}\label{lem:fixed_disc_UB}
    For quadratic problems of the form \eqref{eq:quad_prob_form} satisfying~\Cref{ass:strongly_convex,ass:smooth_second,ass:bounded_optima,ass:stoch_bounded_second_moment}, with $\eta <\frac{1}{H}$, we have
    \begin{align*}
    \norm{x^\star - x_\infty} &\leq \frac{\zeta_\star\tau}{\mu}\cdot\min\cb{ \frac{(1-\eta H)^K - 1 + \eta H K + \eta \mu K\rb{1 - (1-\eta H)^{K-1}}}{1-(1-\eta\mu)^K}, 1 + \frac{\eta \mu K(1-\eta\mu)^{K-1}}{1-(1-\eta \mu)^K}}\enspace.
\end{align*}
\end{lemma}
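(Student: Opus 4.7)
The plan is to derive a structural identity for $x^\star - x_\infty$ using the optimality of $x^\star$ for the averaged quadratic and then bound the resulting expression in two complementary ways corresponding to the two terms inside the minimum. Starting from $x_\infty - x^\star = -C^{-1}\frac{1}{M}\sum_m C_m(x^\star - x_m^\star)$ and subtracting the zero vector $\eta K C^{-1}\frac{1}{M}\sum_m A_m(x^\star - x_m^\star)$ (which vanishes by the optimality condition $\frac{1}{M}\sum_m A_m(x^\star - x_m^\star) = 0$), I get
\[
x^\star - x_\infty = -C^{-1}\cdot\frac{1}{M}\sum_m(C_m - \eta K A_m)(x^\star - x_m^\star).
\]
Splitting $x^\star - x_m^\star = (x^\star - \bar x^\star) + (\bar x^\star - x_m^\star)$ with $\bar x^\star = \frac{1}{M}\sum_m x_m^\star$ and using $\sum_m(C-\eta K A)(\bar x^\star - x_m^\star) = 0$ to symmetrize the averaged piece, I rewrite this as
\[
x^\star - x_\infty = -C^{-1}(C-\eta K A)(x^\star - \bar x^\star) - C^{-1}\cdot\frac{1}{M}\sum_m\bigl[(C_m - \eta K A_m) - (C - \eta K A)\bigr](\bar x^\star - x_m^\star).
\]

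The first branch of the minimum is obtained by bounding each of these two pieces sharply. The scalar function $\lambda\mapsto\eta K\lambda + (1-\eta\lambda)^K - 1$ is non-negative by Bernoulli and increasing on $[\mu, H]$ (its derivative $\eta K(1-(1-\eta\lambda)^{K-1})$ is non-negative), so $\norm{C-\eta KA}\leq \eta HK - 1 + (1-\eta H)^K$; combined with $\norm{C^{-1}}\leq 1/(1-(1-\eta\mu)^K)$ from the spectral analysis inside the proof of \Cref{lem:fixed_point_norm} and $\norm{x^\star - \bar x^\star}\leq \tau\zeta_\star/\mu$ (derivable exactly as in \Cref{rem:norm_of_x_star_sc_quad}), the first piece contributes the $\eta HK - 1 + (1-\eta H)^K$ part of the numerator. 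For the second piece, I would establish a matrix mean-value estimate $\norm{(C_m - \eta K A_m) - (C - \eta K A)}\leq \eta K\tau(1-(1-\eta H)^{K-1})$ by telescoping $(I-\eta A_m)^K - (I-\eta A_n)^K = -\eta\sum_{j=0}^{K-1}(I-\eta A_m)^{j}(A_m-A_n)(I-\eta A_n)^{K-1-j}$, pairing each summand with a copy of $\eta(A_m - A_n)$ from the rewriting $\eta K(A_m - A_n) = \eta\sum_j(A_m - A_n)$, and applying the monotonicity of $\lambda\mapsto 1-(1-\eta\lambda)^{K-1}$ on $[\mu, H]$. Averaging $\norm{\bar x^\star - x_m^\star}\leq \zeta_{\star,m}$ over $m$ to $\zeta_\star$ delivers the $\eta\mu K(1-(1-\eta H)^{K-1})$ part, and adding the two contributions (both divided by $1-(1-\eta\mu)^K$) yields the first branch.

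For the second branch, I would use the alternative representation
\[
x^\star - x_\infty = \frac{1}{M}\sum_m\bigl[A^{-1}(A_m - A) - C^{-1}(C_m - C)\bigr](x_m^\star - \bar x^\star),
\]
obtained by combining the expression for $x^\star - \bar x^\star$ derivable as in \Cref{rem:norm_of_x_star_sc_quad} with the identity $x_\infty - \bar x^\star = C^{-1}\frac{1}{M}\sum_m(C_m-C)(x_m^\star - \bar x^\star)$ from the proof of \Cref{lem:fixed_point_norm}. A direct triangle inequality with $\norm{A^{-1}(A_m-A)}\leq \tau/\mu$, $\norm{C^{-1}(C_m-C)}\leq \eta K\tau(1-\eta\mu)^{K-1}/(1-(1-\eta\mu)^K)$ (via the same matrix-telescoping step used inside \Cref{lem:fixed_point_norm}), and $\frac{1}{M}\sum_m\norm{x_m^\star-\bar x^\star}\leq \zeta_\star$ gives the second branch, and the lemma follows by taking the minimum of the two.

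The principal technical obstacle is the matrix mean-value estimate used for the first branch: the non-commutativity of the per-machine Hessians $A_m$ rules out a direct scalar-style argument, and a naive triangle inequality between $C_m - \eta K A_m$ and $C - \eta K A$ erases the $\tau$-scaling entirely. The correct argument relies on a symmetric telescoping that groups each summand of $(I-\eta A_m)^K - (I-\eta A_n)^K$ with one copy of $\eta(A_m - A_n)$, so that each grouped term can be reorganized into a form whose spectral norm is bounded by $1-(1-\eta H)^{K-1}$ uniformly in the summation index; keeping track of which factor is bounded by the upper spectral limit $H$ versus the lower limit $\mu$ is the delicate bookkeeping step.
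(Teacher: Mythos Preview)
Your decomposition and the second branch match the paper essentially verbatim: the paper also passes through $\bar x^\star$ and bounds $\norm{x^\star-\bar x^\star}\leq\tau\zeta_\star/\mu$ and $\norm{x_\infty-\bar x^\star}\leq\frac{\eta K\tau\zeta_\star(1-\eta\mu)^{K-1}}{1-(1-\eta\mu)^K}$ via the matrix telescope. For the first branch, the paper instead writes $x^\star-x_\infty=\frac{1}{M^2}\sum_{m,n}(A^{-1}A_m-C^{-1}C_m)(x_m^\star-x_n^\star)$ and factors $A^{-1}A_m-C^{-1}C_m=C^{-1}\big[R(A^{-1}A_m-I)+(R-R_m)\big]$ with $R_m:=C_m-\eta KA_m$. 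Your route through the optimality condition $\sum_mA_m(x^\star-x_m^\star)=0$ and the centering $x^\star-x_m^\star=(x^\star-\bar x^\star)+(\bar x^\star-x_m^\star)$ is a slightly cleaner way to land on the same two pieces: one controlled by $\norm{R}$ (easy, via the scalar bound $(1-\eta H)^K-1+\eta HK$) and one by $\norm{R_m-R_n}$.

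The gap is precisely the step you flag as the ``principal technical obstacle,'' and the proposed fix does not go through. Writing $R_m-R_n=\eta\sum_{j=0}^{K-1}\big[P^j D\,Q^{K-1-j}-D\big]$ with $P=I-\eta A_m$, $Q=I-\eta A_n$, $D=A_m-A_n$, the termwise bound $\norm{P^jDQ^{K-1-j}-D}_2\leq(1-(1-\eta H)^{K-1})\norm{D}_2$ is immediate for commuting $P,Q$ (since $p^jq^{K-1-j}\geq(1-\eta H)^{K-1}$), but for non-commuting $P,Q$ the map $D\mapsto P^jDQ^{K-1-j}-D$ becomes, in the eigenbases of $P$ and $Q$, a Schur product against $M_{\alpha\beta}=p_\alpha^jq_\beta^{K-1-j}-1$; the scalar monotonicity of $\lambda\mapsto 1-(1-\eta\lambda)^{K-1}$ only controls $\max_{\alpha,\beta}|M_{\alpha\beta}|$, which is \emph{not} the Schur-multiplier norm in spectral norm. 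A naive triangle inequality instead gives roughly a factor of two, which breaks the exact constant in the lemma.

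The paper does not attempt a termwise bound. It packages the whole estimate as a matrix-function Lipschitz inequality and proves, in a separate appendix lemma, that $\norm{R(A_m)-R(A_n)}_2\leq\sup_{\lambda\in[\mu,H]}|R'(\lambda)|\cdot\norm{A_m-A_n}_2=\eta K\big(1-(1-\eta H)^{K-1}\big)\tau$ via the Daleckii--Krein divided-difference formula together with a Schur-multiplier estimate for positive-semidefinite multipliers (exploiting that the divided-difference matrix of $R$ on $[\mu,H]$ is sign-definite). That operator-theoretic ingredient is what your sketch is missing; once you invoke it, the rest of your argument is complete.
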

\begin{remark}
    Note that the above bound goes to zero when $\tau$ or $\zeta_\star$ is zero. Thus the data heterogeneity notions in \Cref{ass:zeta_star,ass:tau} are intricately linked to fixed-point discrepancy. Furthermore, when $K=1$, there are no unsynchronized local updates, and Local SGD becomes mini-batch SGD; the above upper bound also becomes zero. Finally, when $\eta\to 0$ but $K$ is held constant, then note that using L'Hospital's rule (on the first term in the minimum), we get that the upper bound tends to zero. On the other hand, when $\eta = \Omega(1/K)$ and $K\to\infty$, then the upper bound goes to $\frac{2\zeta_\star\tau}{\mu}$ (as the second term in the minimum becomes active). We illustrate the effect of the step-size on the fixed-point discrepancy in \Cref{fig:fixed,fig:convergence}. 
\end{remark}
\begin{proof}
    Note the following,
\begin{align*}
    \textcolor{red}{\norm{x^\star - x_\infty}} &= \norm{\frac{1}{M^2}\sum_{m,n\in[M]}\rb{A^{-1}A_m - C^{-1}C_m}\rb{x_m^\star - x_n^\star}}\enspace,\\
    &\leq \frac{1}{M^2}\sum_{m,n\in[M]}\norm{A^{-1}A_m - C^{-1}C_m}\norm{x_m^\star - x_n^\star}\enspace,\\
    &\leq^{\text{(\Cref{ass:zeta_star})}} \textcolor{red}{\frac{1}{M}\sum_{m\in[M]}\norm{A^{-1}A_m - C^{-1}C_m}\zeta_{\star,m,n}}\enspace.
\end{align*}
Let us denote the following,
\begin{align*}
    C_m := I- (I-\eta A_m)^K =: \eta K A_m + R_m\quad \text{and} \quad R := \frac{1}{M}\sum_{m\in[M]}R_m\enspace.
\end{align*}
In particular, note that when $K=1$, then $R_m=0$, which implies that $R=0$. $R_m$nis essentially the first-order Binomial residual on machine $m\in[M]$. Using this notation, we have the following,
\begin{align*}
   \norm{A^{-1}A_m - C^{-1}C_m} &= \norm{C^{-1}\rb{CA^{-1}A_m - C_m}}= \norm{C^{-1}\rb{\rb{\eta K A + R}A^{-1}A_m - \eta K A_m - R_m}}\enspace,\\
    &=\norm{C^{-1}\rb{RA^{-1}A_m - R_m}} =\norm{C^{-1}\rb{RA^{-1}A_m - R + R - R_m}}\enspace,\\
    &\leq \norm{C^{-1}}\rb{\norm{R}\norm{A^{-1}A_m -I} + \norm{R-R_m}}\enspace,\\
    &\leq^{\text{(\Cref{ass:strongly_convex,ass:tau})}} \frac{1}{1-(1-\eta\mu)^K}\frac{1}{M}\sum_{n\in[M]}\rb{\frac{\tau}{\mu}\textcolor{blue}{\norm{R_n}} + \textcolor{cyan}{\norm{R_m - R_n}}}\enspace.
\end{align*}
Now it suffices to upper bound the two terms $\norm{R_m}$ and $\norm{R_m - R_n}$. As a sanity check, note that when $K=1$ and $\tau=0$, the upper bounds are still zero. For the first term, note the following using the diagonalization of the matrix $A_m = V_m\Sigma_mV_m^{-1}$,
\begin{align*}
    \textcolor{blue}{\norm{R_n}} &= \norm{I -\eta K A_n - (I-\eta A_n)^K}= \norm{I -\eta K \Sigma_n - (I-\eta \Sigma_n)^K}\enspace,\\
    &\leq \sup_{\lambda \in [\mu, H]} |1 - \eta K \lambda - (1-\eta\lambda)^K| = \textcolor{blue}{(1-\eta H)^K - 1 + \eta H K}\enspace, 
\end{align*}
where we use the fact that $\eta < \frac{1}{H}$ which implies that $\eta \lambda < 1$ in the above function, which in turn implies that $|1 - \eta K \lambda - (1-\eta\lambda)^K|$ is an increasing function in the range $\lambda \in [\mu, H]$. Now we need to bound the second term $\norm{R_m - R_n}$. Note that ideally we would like the upper bound to also vanish with $K=1$ and $\tau=0$. We cannot use the strategy from above because we do not know if the matrices $A_m$ and $A_n$ commute. Instead we will use the following property (see \Cref{lem:R_Lipschitz} in \Cref{app:chap4}),
\begin{align*}
    \norm{R(A_m) - R(A_n)} \leq \sup_{\lambda\in[\mu,H]}|R'(\lambda)|\norm{A_m-A_n}\enspace,
\end{align*}
where we define $R(\lambda):= 1 - (1-\eta\lambda)^K - \eta K\lambda$. Note the following,
\begin{align*}
    |R'(\lambda)| &= |\eta K(1-\eta\lambda)^{K-1} -\eta K| = |-\eta K\rb{1 - (1-\eta\lambda)^{K-1}}| = \eta K\cdot|1 - (1-\eta\lambda)^{K-1}|\enspace.
\end{align*}
Plugging this into the above bound gives us,
\begin{align*}
 \textcolor{cyan}{\norm{R_m-R_n}} &= \norm{R(A_m) - R(A_n)}\enspace,\\ 
 &\leq \sup_{\lambda\in[\mu,H]}\eta K\norm{A_m-A_n}\cdot|1 - (1-\eta\lambda)^{K-1}| \leq \textcolor{cyan}{\eta K\tau\rb{1 - (1-\eta H)^{K-1}}}\enspace.
\end{align*}
For a sanity check, note that when $K=1$ or $\tau=0$, this bound is zero. Plugging the \textcolor{blue}{blue} and \textcolor{cyan}{cyan} upper bounds back into \textcolor{red}{the original bound} on fixed-point discrepancy, we get,
\begin{align*}
    &\norm{x^\star - x_\infty}\\
    &\quad\leq \frac{1}{1-(1-\eta\mu)^K}\frac{1}{M^2}\sum_{m,n\in[M]}\rb{\frac{\tau}{\mu}\norm{R_n} + \norm{R_m - R_n}}\zeta_{\star,m,n}\enspace,\\
    &\quad\leq \frac{1}{1-(1-\eta\mu)^K}\frac{1}{M^2}\sum_{m,n\in[M]}\rb{\frac{\tau}{\mu}\rb{(1-\eta H)^K - 1 + \eta H K} + \eta K\tau\rb{1 - (1-\eta H)^{K-1}}}\zeta_{\star,m,n}\enspace,\\
    &\quad\leq \frac{\zeta_\star\tau}{\mu}\cdot \frac{(1-\eta H)^K - 1 + \eta H K + \eta \mu K\rb{1 - (1-\eta H)^{K-1}}}{1-(1-\eta\mu)^K}\enspace.
\end{align*}
This proves the bound in the lemma. 

Now for the second bound, we first bound the distance between $x^\star$ and $\bar x^\star = \frac{1}{M}\sum_{m\in[M]}x_m^\star$\footnote{Note that the hard instance used in the proof of \Cref{prop:barB_vs_B} has a constant second order heterogeneity lower bounded by $2$. This means the conclusion of the following inequalities, which implies that the gap between $\bar x^\star$ and $x^\star$ tends to zero as $\tau$ tends to zero, is not a contradiction. },
    \begin{align*}
        \norm{x^\star - \bar x^\star} &= \norm{\frac{1}{M}\sum_{m\in[M]}\rb{I-A^{-1}A_m}x_m^\star} = \norm{\frac{1}{M}\sum_{m\in[M]}A^{-1}\rb{A-A_m}(x_m^\star-\bar x^\star)}\enspace,\\
        &\leq \frac{1}{M}\sum_{m\in[M]} \norm{A^{-1}\rb{A-A_m}(x_m^\star-\bar x^\star)}\leq \frac{1}{M^2}\sum_{m,n\in[M]} \norm{A^{-1}}\norm{A-A_m}\norm{x_m^\star-x_n^\star}\enspace,\\
        &\leq \frac{1}{M}\sum_{m\in[M]} \frac{1}{\mu}\cdot\tau \cdot \zeta_{\star,m,n} = \frac{\tau \zeta_\star}{\mu}\enspace.
    \end{align*}
    We now bound the distance between $x_\infty(K>1, \eta)$ and $\bar x^\star$ similarly,
    \begin{align*}
        \norm{x_\infty - \bar x^\star} &= \norm{\frac{1}{M}\sum_{m\in[M]}\rb{I-C^{-1}C_m}x_m^\star} = \norm{\frac{1}{M}\sum_{m\in[M]}C^{-1}\rb{C-C_m}(x_m^\star-\bar x^\star)}\enspace,\\
        &\leq \frac{1}{M}\sum_{m\in[M]} \norm{C^{-1}\rb{C-C_m}(x_m^\star-\bar x^\star)}\leq \frac{1}{M^2}\sum_{m,n\in[M]} \norm{C^{-1}}\norm{C-C_m}\norm{x_m^\star-x_n^\star}\enspace,\\
        &\leq \frac{1}{M^2}\sum_{m,n\in[M]}\frac{1}{1-(1-\eta \mu)^K}\sup_{m,n\in[M]} \norm{(I-\eta A_m)^K-(I-\eta A_n)^K}\zeta_{\star,m,n}\enspace,\\
        &\leq^{\text{(\Cref{lemma: matrix telescope})}} \frac{1}{M^2}\sum_{m,n\in[M]}\frac{1}{1-(1-\eta \mu)^K}\sup_{m,n\in[M]} \norm{\eta(A_m-A_n)}K(1-\eta\mu)^{K-1}\zeta_{\star,m,n}\enspace,\\
        &\leq \frac{\tau \zeta_\star}{\mu}\cdot\frac{\eta \mu K(1-\eta\mu)^{K-1}}{1-(1-\eta \mu)^K}\enspace,
    \end{align*}
    which finishes the proof of the second bound in the lemma.
\end{proof}

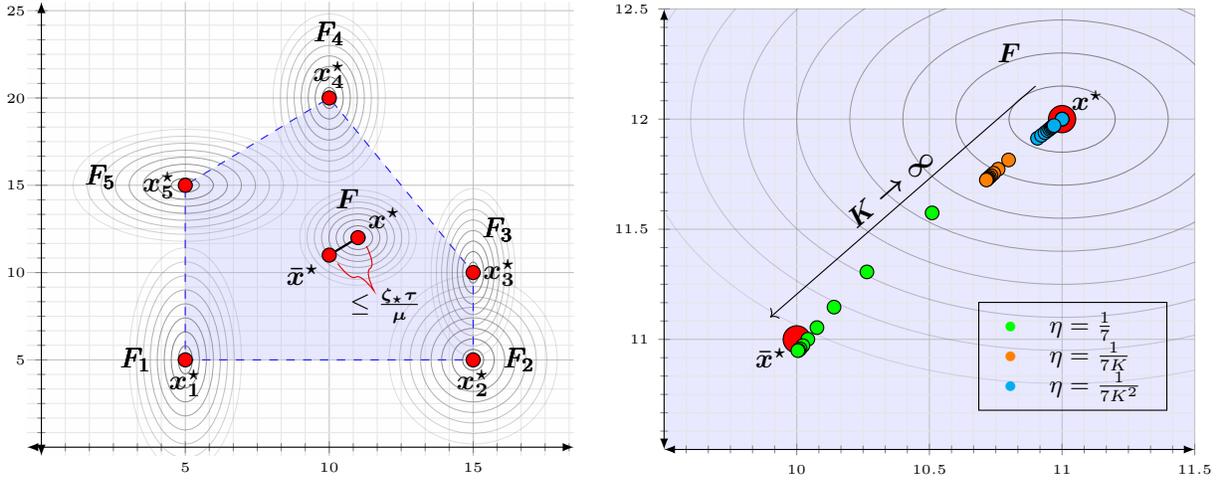
\begin{figure}[ht]
\centering
\begin{minipage}{0.48\textwidth}
\centering
\resizebox{\linewidth}{!}{%

\begin{tikzpicture}
\begin{axis}[
    xmin=0, xmax=18,
    ymin=0, ymax=25,
    grid=both,
    grid style={line width=.1pt, draw=gray!20},
    major grid style={line width=.2pt,draw=gray!50},
    axis lines=middle,
    minor tick num=5,
    enlargelimits={abs=0.5},
    axis line style={latex-latex},
    axis background/.style={fill=gray!1},
    ticklabel style={font=\tiny},
    xlabel style={at={(ticklabel* cs:1)},anchor=north west},
    ylabel style={at={(ticklabel* cs:1)},anchor=south west}
]

\addplot+[only marks, mark=*, mark size=2.5pt, mark options={fill=red, draw=black}]
    coordinates {
    (5,5) (15,5) (15,10) (10,20) (5,15) (10,11) (11,12)
    };
\node at (axis cs:5,5) [anchor=north] {$\bm{x_1^\star}$};
\node at (axis cs:15,5) [anchor=north] {$\bm{x_2^\star}$};
\node at (axis cs:15,10) [anchor=west] {$\bm{x_3^\star}$};
\node at (axis cs:10,20) [anchor=south] {$\bm{x_4^\star}$};
\node at (axis cs:5,15) [anchor=east] {$\bm{x_5^\star}$};

\draw[dashed, blue] (axis cs:5,5) -- (axis cs:15,5) -- (axis cs:15,10) -- (axis cs:10,20) -- (axis cs:5,15) -- cycle;
\filldraw[fill=blue!30, fill opacity=0.3, draw=none] (axis cs:5,5) -- (axis cs:15,5) -- (axis cs:15,10) -- (axis cs:10,20) -- (axis cs:5,15) -- cycle;

\node at (axis cs:10,11) [anchor=north east] {$\bm{\bar x^\star}$};
\node at (axis cs:11,12) [anchor=south west] {$\bm{x^\star}$};

\foreach \r in {1,...,8}{ 
    \pgfmathsetmacro{\step}{\r*2.5}
    \pgfmathsetmacro{\opacity}{max(0.2, 1 - \r / 10)}
    \edef\temp{\noexpand\draw[black!60, very thin, opacity=\opacity] (axis cs:5,5) ellipse ({\step pt} and {\step*2 pt});}
    \temp
    \node[label={[label distance=0cm]180:$F_1$}] at (axis cs:4.5,5) {};
    \edef\temp{\noexpand\draw[black!60, very thin, opacity=\opacity] (axis cs:15,5) ellipse ({\step*1.5 pt} and {\step*1.5 pt});}
    \temp
    \node[label={[label distance=-1.2cm]180:$F_2$}] at (axis cs:14.5,5) {};
    \edef\temp{\noexpand\draw[black!60, very thin, opacity=\opacity] (axis cs:15,10) ellipse ({\step*0.75 pt} and {\step*1.5 pt});}
    \temp
    \node[label={[label distance=0.2cm]45:$F_3$}] at (axis cs:14.25,10) {};
    \edef\temp{\noexpand\draw[black!60, very thin, opacity=\opacity] (axis cs:10,20) ellipse ({\step pt} and {\step*1.5 pt});}
    \temp
    \node[label={[label distance=0.3cm]90:$F_4$}] at (axis cs:10,20.5) {};
    \edef\temp{\noexpand\draw[black!60, very thin, opacity=\opacity] (axis cs:5,15) ellipse ({\step*2 pt} and {\step pt});}
    \temp
    \node[label={[label distance=-1.5cm]0:$F_5$}] at (axis cs:5,15.5) {};
}

\foreach \r in {1,...,8}{
    \pgfmathsetmacro{\stepX}{\r*sqrt(3)} 
    \pgfmathsetmacro{\stepY}{\r*sqrt(2)} 
    \pgfmathsetmacro{\opacity}{max(0.2, 1 - \r / 10)}
    \edef\temp{\noexpand\draw[black!60, very thin, opacity=\opacity] (axis cs:11,12) ellipse ({\stepX*1.5 pt} and {\stepY*1.5 pt});}
    \temp
}
\node[label={[label distance=-0.1cm]150:$\bm{F}$}] at (axis cs:11.5,13) {};

\draw [thick] (axis cs:11,12) -- (axis cs:10,11);
\draw [decorate,decoration={brace,amplitude=15pt},xshift=3pt,yshift=-3pt, color=red!90!black]
(axis cs:11,12) -- (axis cs:10,11) node [black,midway,xshift=12pt,yshift=-18pt] {\small$\leq\bm{\frac{\zeta_\star\tau}{\mu}}$};

\end{axis}
\end{tikzpicture}
}
\end{minipage}\hfill
\begin{minipage}{0.51\textwidth}
\centering
\resizebox{\linewidth}{!}{%

\begin{tikzpicture}
\begin{axis}[
    xmin=10, xmax=11,
    ymin=11, ymax=12,
    grid=both,
    grid style={line width=.1pt, draw=gray!20},
    major grid style={line width=.2pt,draw=gray!50},
    axis lines=middle,
    minor tick num=5,
    enlargelimits={abs=0.5},
    axis line style={latex-latex},
    axis background/.style={fill=blue!30, fill opacity=0.3},
    ticklabel style={font=\tiny},
    xlabel style={at={(ticklabel* cs:1)},anchor=north west},
    ylabel style={at={(ticklabel* cs:1)},anchor=south west}
]

\addplot+[only marks, mark=*, mark size=5pt, mark options={fill=red, draw=black}]
    coordinates {
    (10,11) (11,12)
    };
\node at (axis cs:10,11) [anchor=north east] {$\bm{\bar x^\star}$};
\node at (axis cs:11,12) [anchor=south west] {$\bm{x^\star}$};

\addplot+[only marks, mark=*, mark size=2.5pt, mark options={fill=green, draw=black}]
    coordinates {
    (11, 12)
    (10.51020408, 11.57407407)
    (10.26448363, 11.30620985)
    (10.14049587, 11.14633494)
    (10.07609909, 11.05334492)
    (10.04177659, 11.00042072)
    (10.0231446, 10.97136128)
    (10.01290431, 10.95654894)
    (10.00722831, 10.95023809)
    (10.00406319, 10.9489501)
    };

\addplot+[only marks, mark=*, mark size=2.5pt, mark options={fill=orange, draw=black}]
    coordinates {
    (11, 12)
    (10.79831933, 11.81451613)
    (10.75903614, 11.77268722)
    (10.74197075, 11.7540491)
    (10.73240831, 11.74348994)
    (10.72628847, 11.73669031)
    (10.72203465, 11.73194517)
    (10.71890591, 11.72844536)
    (10.71650785, 11.72575743)
    (10.71461121, 11.72362817)
    };

\addplot+[only marks, mark=*, mark size=2.5pt, mark options={fill=cyan, draw=black}]
    coordinates {
    (11, 12)
    (10.90733591, 11.91287879)
    (10.92232661, 11.9259796)
    (10.93551932, 11.93831929)
    (10.94530409, 11.94760228)
    (10.95263775, 11.95459758)
    (10.95828684, 11.95999987)
    (10.96275457, 11.96427829)
    (10.9663694, 11.96774267)
    (10.96935104, 11.97060156)
    };

\foreach \r in {1,...,8}{
    \pgfmathsetmacro{\stepX}{\r*0.2} 
    \pgfmathsetmacro{\stepY}{\r*0.15} 
    \pgfmathsetmacro{\opacity}{max(0.2, 1 - \r / 10)}
    \edef\temp{\noexpand\draw[black!60, very thin, opacity=\opacity] (axis cs:11,12) ellipse ({\stepX} and {\stepY});}
    \temp
}

\node at (axis cs:10.8, 12.3) {$\bm{F}$};

\draw[->] (10.9,12.15) -- (9.9,11.1) node[midway, above, sloped, pos=0.5] {$\bm{K\to\infty}$};

\end{axis}
\node[
    draw,
    at={(6.5,0.5)},
    anchor=south east,
    font=\small
    ] {
    \begin{tabular}{cl}
        \textcolor{green}{$\bullet$} & $\eta = \frac{1}{7}$ \\
        \textcolor{orange}{$\bullet$} & $\eta = \frac{1}{7K}$ \\
        \textcolor{cyan}{$\bullet$} & $\eta = \frac{1}{7K^2}$
    \end{tabular}
};

\end{tikzpicture}
}
\end{minipage}
\caption{Illustration of a two-dimensional optimization problem with $M=5$ machines, each with a $1$-strongly convex and $6$-smooth objective. On the left, we draw the contour lines for each machine's objective as well as for the average objective. We also indicate the two relevant solution concepts $\bar x^\star$ and $x^\star$. On the right, we zoom into the convex hull of the machines' optima, plotting the sequence of fixed points for local GD as a function of $\eta$ and increasing $K \in [10]$. We show the fixed points for three values of $\eta$, each demonstrating a different trend for $\lim_{K\to\infty}x_{\infty}(K, \eta, \beta)$.}
\label{fig:fixed}
\end{figure}

\begin{figure}
    \centering
    \includegraphics[width=0.7\textwidth]{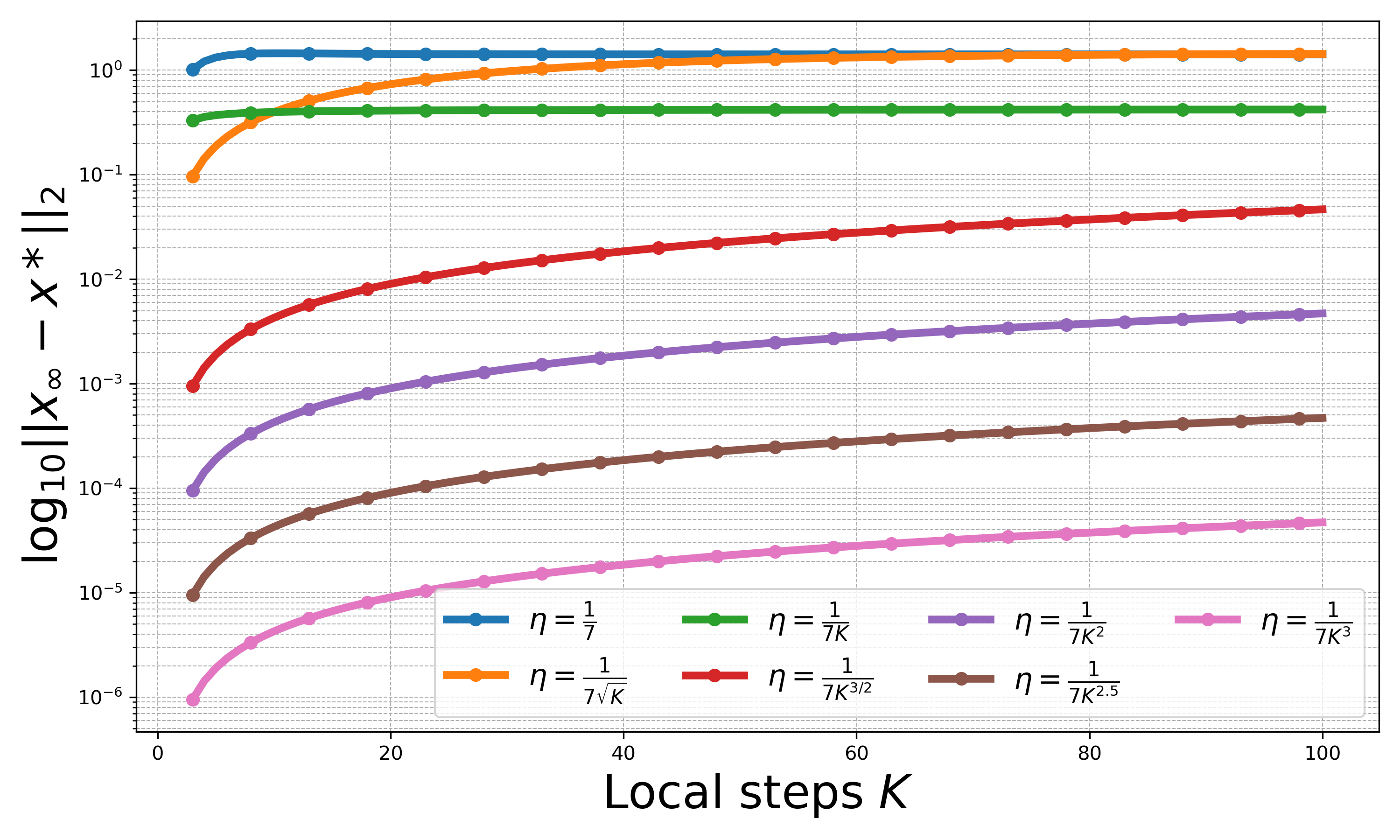}
    \caption{Illustration of the same distributed problem as Figure \ref{fig:fixed} to understand where the fixed point converges as $K$ grows. We consider $7$ different choices of $\eta$ (as a function of $K$) and plot $\log\norm{x_{\infty}(K,\eta,1)-x^\star}$ as a function of $K\in[100]$. We notice that for $\eta >\frac{1}{HK}$, the fixed point goes to $\bar x^\star$ as $K$ increases, while for $\eta<\frac{1}{HK}$, the fixed point gets progressively closer to $x^\star$. }
    \label{fig:convergence}
\end{figure}

\subsection{Towards a Convergence Guarantee using Fixed-point Discrepancy}

To derive a final convergence guarantee for Local SGD on strongly convex quadratic objectives, we combine the fixed-point characterization from \Cref{prop:conv_to_fixed} with the fixed-point discrepancy bound in \Cref{lem:fixed_disc_UB}. This yields the following convergence result, expressed in terms of the step-size $\eta$:

\begin{theorem}\label{thm:conv_with_fixed_pt_pers}
    For quadratic objectives satisfying \Cref{ass:strongly_convex,ass:smooth_second,ass:bounded_optima,ass:stoch_bounded_second_moment}, with step-size $\eta < \frac{1}{H}$ and momentum parameter $\beta = 1$, the Local-SGD iterate $\bar x_R$ (initialized with $\bar x_0 = 0$) satisfies:
    \begin{align*}
        &\ee\sb{\norm{\bar x_R - x^\star}^2} \leq c_9\cdot\Bigg(\min\cb{\frac{\tau^2 H^2\zeta_\star^2}{\mu^4R^2} + 2\bar B^2e^{-2\eta\mu KR}, \kappa^2 \bar B^2e^{-2\eta\mu KR}}+ \frac{\eta\sigma_2^2}{\mu M}\\
        &\qquad\quad + \frac{\zeta_\star^2\tau^2}{\mu^2}\cdot\min\cb{ \frac{(1-\eta H)^K - 1 + \eta H K + \eta \mu K\rb{1 - (1-\eta H)^{K-1}}}{1-(1-\eta\mu)^K}, 1 + \frac{\eta \mu K(1-\eta\mu)^{K-1}}{1-(1-\eta \mu)^K}}^2\Bigg)\enspace.
    \end{align*}
\end{theorem}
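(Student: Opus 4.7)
The plan is to obtain the convergence bound by decomposing the error via the fixed point $x_\infty$ of Local SGD, and then invoking the two results we have already established: \Cref{prop:conv_to_fixed}, which controls the convergence of $\bar x_R$ to $x_\infty$, and \Cref{lem:fixed_disc_UB}, which controls the fixed-point discrepancy $\norm{x_\infty - x^\star}$. Since both ingredients are already in place, the proof will be a short triangle-inequality argument; the substantive content lies in the two supporting results rather than in the combination itself.

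Concretely, I would first write
\[
\norm{\bar x_R - x^\star}^2 \;\leq\; 2\norm{\bar x_R - x_\infty}^2 \;+\; 2\norm{x_\infty - x^\star}^2,
\]
using the elementary inequality $\norm{a+b}^2 \leq 2\norm{a}^2 + 2\norm{b}^2$ applied to $a = \bar x_R - x_\infty$ and $b = x_\infty - x^\star$. Taking expectations (noting that $x_\infty$ and $x^\star$ are deterministic under our setup, being fully determined by the problem instance $\{A_m, x_m^\star\}_{m\in[M]}$ and the hyperparameters $\eta, K$), this gives
\[
\ee\sb{\norm{\bar x_R - x^\star}^2} \;\leq\; 2\,\ee\sb{\norm{\bar x_R - x_\infty}^2} \;+\; 2\norm{x_\infty - x^\star}^2.
\]

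Next, I would substitute the bound from \Cref{prop:conv_to_fixed} for the first term, which yields exactly the min-of-two structure $\min\{\tfrac{\tau^2 H^2\zeta_\star^2}{\mu^4 R^2} + 2\bar B^2 e^{-2\eta\mu KR},\ \kappa^2 \bar B^2 e^{-2\eta\mu KR}\} + \tfrac{\eta\sigma_2^2}{\mu M}$, and square the bound from \Cref{lem:fixed_disc_UB} for the second term, which contributes $\tfrac{\zeta_\star^2\tau^2}{\mu^2}$ multiplied by the square of the min of the two expressions in step-size and $K$. Absorbing the factor of $2$ and any cross-term constants into a single numerical constant $c_9$ then produces exactly the stated bound.

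The main ``obstacle'' is essentially notational: ensuring that the square of a minimum is upper-bounded by the minimum of squares (which holds termwise for nonnegative quantities, so this is immediate), and verifying that the hypotheses $\eta < 1/H$ and $\beta = 1$ assumed in the theorem are precisely those required by both \Cref{prop:conv_to_fixed} and \Cref{lem:fixed_disc_UB}. Both inputs indeed assume $\eta < 1/H$, and \Cref{prop:conv_to_fixed} is stated for $\beta = 1$, so the hypotheses align and no additional technical work is needed beyond the triangle inequality and a clean bookkeeping of constants.
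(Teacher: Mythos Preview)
Your proposal is correct and matches the paper's approach exactly: the paper explicitly states that the theorem is obtained by combining \Cref{prop:conv_to_fixed} with \Cref{lem:fixed_disc_UB}, and the triangle-inequality decomposition you describe is the natural (and only obvious) way to do so. The paper does not give any further proof details beyond naming these two ingredients, so your write-up is in fact more detailed than what appears there.
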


While this result highlights how convergence depends on the choice of $\eta$, tuning the step-size remains subtle, and we do not yet obtain a closed-form rate. Notably, selecting $\eta$ as in \Cref{prop:conv_to_fixed} ensures rapid convergence to the fixed point, but the upper bound on the fixed-point discrepancy from \Cref{lem:fixed_disc_UB} does not decay with increasing $R$ or $K$. This does not necessarily imply that the actual discrepancy remains large---instead, it suggests that our bound may be loose in this regime (see \Cref{fig:convergence} for empirical evidence). 

In the next section, we revisit the convergence analysis from a different perspective, directly bounding the Local SGD error without relying on fixed-point characterization. This alternative approach recovers the same key terms appearing in \Cref{prop:conv_to_fixed}, and crucially, extends to general (non-quadratic) smooth and strongly convex objectives. Before we proceed with this analysis, however, we will briefly consider what happens in the convex setting, while also highlighting the potential implicit regularization of Local SGD.

\section{Local SGD's Fixed Point for Convex Quadratics}\label{sec:fixed_derivation_convex}

While in the general convex setting, we cannot write an explicit formula for the fixed point $x_\infty$, we can characterize it as the minimum-norm solution of a certain least-squares problem, where the geometry for each machine is defined by the matrices $C_m$. 

\begin{proposition}[Fixed Point for Convex Quadratics]\label{prop:fixed_cvx}
    Assume we have a quadratic problem instance satisfying \Cref{ass:convex,ass:smooth_second,ass:bounded_optima,ass:stoch_bounded_second_moment} with $\sigma_2=0$, $\eta < 1/H$. Further define $C_m := I-(I-\eta A_m)^K$, $C:=\frac{1}{M}\sum_{m\in[M]}C_m$ and $c:=\frac{1}{M}\sum_{m\in[M]}C_mx_m^\star$ for some $x_m^\star\in S_m^\star$ for each $m\in[M]$. If $c\neq 0$ and $c \in \text{im}(C)=\text{ker}(C)^\perp$, then Local GD converges to the following solution in the limit of $R\to\infty$,
    \begin{align*}
        x_\infty = \arg\min\quad \norm{x}\enspace,\quad \text{s.t.}\quad x\in \min_{x\in\rr^d}\frac{1}{M}\sum_{m\in[M]}\|x-x_m^\star\|_{C_m}^2\enspace.
    \end{align*}
    If on the other hand $c\neq 0$ and $c\not\in \text{im}(C)$, the the iterates do not converge, but if we define the sequence $y_R = C\bar x_R$, then 
    \begin{align*}
        \lim_{R\to\infty}y_R = \sum_{i\in[l]}v_iv_i^Tc\rb{\lim_{R\to\infty}\rb{1-(1-\lambda_i)^R}}\enspace,
    \end{align*}
    where $C = \sum_{i\in[l]}\lambda_iv_iv_i^T$ is the eigen-value decomposition of $C$ for orthonormal vectors $\cb{v_1, \dots, v_l}$. If $c=0$, the iterates of Local-GD do not move from $\bar x_0=0$.
\end{proposition}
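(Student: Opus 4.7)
I would reuse the noiseless calculation from the proof of \Cref{lem:fast_conv_to_fixed_point}: with $\sigma_2=0$ and $\beta=1$, each communication round implements the affine recursion $x_r = (I-C)x_{r-1}+c$, so starting from $x_0=0$ we get the closed form
\begin{align*}
    x_R = \Bigl(\sum_{r=0}^{R-1}(I-C)^r\Bigr)c\enspace.
\end{align*}
Each $C_m$ is symmetric, and since $\eta<1/H$ implies $0\preceq I-\eta A_m\preceq I$, we also have $0\preceq C_m\preceq I$; hence $C$ is symmetric with spectrum contained in $[0,1]$. I would then invoke its eigendecomposition $C=\sum_{i\in[l]}\lambda_i v_i v_i^T$, with $\lambda_i\in(0,1]$ on $\text{im}(C)$ and $\text{ker}(C)=\text{im}(C)^\perp$, so that $(1-\lambda_i)^R\to 0$ for every $i\in[l]$.

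\textbf{Case analysis.} Decomposing $c=c_\parallel+c_\perp$ with $c_\parallel\in\text{im}(C)$ and $c_\perp\in\text{ker}(C)$, the closed form separates into
\begin{align*}
    v_i^T x_R = \frac{1-(1-\lambda_i)^R}{\lambda_i}\,v_i^Tc\quad(i\in[l])\enspace,\qquad P_{\text{ker}(C)}\,x_R = R\cdot c_\perp\enspace,
\end{align*}
where the second identity uses $(I-C)v=v$ on $\text{ker}(C)$. When $c\in\text{im}(C)$ (i.e., $c_\perp=0$), each in-image coefficient converges to $\frac{v_i^Tc}{\lambda_i}$, giving $x_R\to C^+c$. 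To identify this with the claimed min-norm least-squares solution, I would expand
\begin{align*}
\frac{1}{M}\sum_{m\in[M]}\|x-x_m^\star\|_{C_m}^2 = x^T C x - 2 x^T c + \text{const}\enspace,
\end{align*}
so the set of minimizers is the affine subspace $\{x:Cx=c\}$, which is nonempty exactly because $c\in\text{im}(C)$; its unique element of minimum norm is its orthogonal projection onto $\text{ker}(C)^\perp=\text{im}(C)$, namely $C^+c$. To ensure the limit lies in this affine space, I would note that $\text{im}(C)$ is $(I-C)$-invariant, so every iterate $x_R$ stays in $\text{im}(C)$, hence so does the limit. When $c\not\in\text{im}(C)$, the kernel component $P_{\text{ker}(C)}x_R = R\cdot c_\perp$ grows linearly and the iterates diverge; but $y_R:=Cx_R$ annihilates the kernel and, eigenvector by eigenvector, reduces to $y_R=\sum_{i\in[l]}(1-(1-\lambda_i)^R)\,v_iv_i^Tc$, which matches the stated formula (and actually converges, since the $R$-limits equal $1$ when $\lambda_i\in(0,1]$). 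Finally, $c=0$ is immediate from the recursion: $x_R=0$ for all $R$.

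\textbf{Expected difficulty.} The main subtlety is that the $\{A_m\}$ need not commute, so one cannot diagonalize $C$ in any eigenbasis of a single $A_m$, and one must instead work entirely with $C$'s own spectral decomposition. Once the spectral bound $0\preceq C\preceq I$ is established from $\eta<1/H$, everything else is a scalar geometric-series computation on each eigenspace of $C$, together with the routine identification of $C^+c$ as the minimum-norm solution of the quadratic least-squares objective. I expect no serious obstacle beyond keeping the $\text{im}(C)/\text{ker}(C)$ decomposition consistently in view.
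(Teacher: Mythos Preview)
Your proposal is correct and follows essentially the same route as the paper: unroll the noiseless affine recursion $x_R=\sum_{r=0}^{R-1}(I-C)^r c$, diagonalize $C$ in its own eigenbasis, analyze the geometric series on $\text{im}(C)$ versus the linear growth on $\text{ker}(C)$, and identify the limit with the minimum-norm solution of $Cx=c$ via the least-squares reformulation. The only cosmetic difference is that you phrase the limit as $C^+c$ and use projector notation, whereas the paper writes out the eigenvector sums explicitly; the substance is identical.
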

\begin{remark}
    When the objectives on each machine are strongly convex, then we always have $c\in \text{im}(C) = \rr^d$. In general when $\text{im}(C) = \rr^d$, we can guarantee convergence to a fixed point. An even weaker sufficient condition is to assume that $\cap_{m\in[M]}\text{ker}(A_m) = \cb{0}$, which guarantees that $\text{ker}(C) = \cb{0}$ and hence $\text{im}(C) = \rr^d$. We prove this last condition in \Cref{lem:fact_C_m_A_m}. The condition \( \bigcap_{m=1}^M \ker(A_m) = \{0\} \) is equivalent to the average Hessian $A$ being positive definite, i.e., \( A \succ 0 \), or in the global objective being strongly convex. This condition ensures that local curvature from different clients collectively constrains all directions and the machines are no simultaneously blind to some direction. 
\end{remark}
\begin{proof}
We recall that even in the convex setting (i.e., with $\mu=0$) we can write the following for the Local SGD iterate $\bar x_R$ in the noise-less setting with $\beta =1$ and initialization $\bar x_0 = 0$,
\begin{align*}
    \bar x_R &= \frac{1}{M}\sum_{m\in[M]}\rb{\rb{I-\eta A_m}^K\rb{\bar x_{R-1}-x_m^\star} + x_m^\star}\enspace,\\
    &= \frac{1}{M}\sum_{m\in[M]}\rb{\rb{I-\eta A_m}^K} \bar x_{R-1} +\frac{1}{M}\sum_{m\in[M]}\rb{I-\rb{I-\eta A_m}^K}x_m^\star\enspace,\\
    &= \rb{I-C}\bar x_{R-1} +  \frac{1}{M}\sum_{m\in[M]} C_m x_m^\star\enspace,\\
    &=^{(x_0=0)} \sum_{j=0}^{R-1}(I-C)^jc\enspace.
\end{align*}
Now let us assume an orthonormal basis for the span of $C$ is given by $\cb{v_1, \dots, v_l}$ where $l\leq d$. This allows us to write,
\begin{align*}
    C = \sum_{i\in[l]}\lambda_i v_iv_i^T\enspace,
\end{align*}
where $0< \lambda_i \leq 1-(1-\eta H)^K < 1$ as our step-size is $\eta < 1/H$. Let us extend this to an orthonormal basis for the entire vector space $\rr^d$ as $\cb{v_1, \dots, v_l, v_{l+1}, \dots, v_d}$ so that $v_{l+1}, \dots, v_d \in \text{ker}(C)$. This also implies for $j\in \zz_{\geq 0}$,
\begin{align*}
    (I-C)^j = \rb{\sum_{i\in[l]}(1-\lambda_i) v_iv_i^T + \sum_{i\in[l+1,d]}v_iv_i^T}^j = \sum_{i\in[l]}(1-\lambda_i)^j v_iv_i^T + \sum_{i\in[l+1,d]}v_iv_i^T\enspace.
\end{align*}
Now we will inspect how $\bar x_R$ evolves in each direction, $i\in[d]$. 

First let us consider $i\in[l]$,
\begin{align*}
    v_i^T\bar x_R = \sum_{j=0}^{R-1}v_i^T(I-C)^jc = \sum_{j=0}^{R-1}(1-\lambda_i)^jv_i^Tc = \frac{1 - (1-\lambda_i)^R}{\lambda_i}v_i^Tc\enspace.
\end{align*}
No matter how we pick $\eta,\ K$, this would converge as $R\to\infty$, to some quantity proportional to $v_i^Tc$. 

Now let us consider $i\in[l+1,d]$,
\begin{align*}
    v_i^T\bar x_R = \sum_{j=0}^{R-1}v_i^T(I-C)^jc = \sum_{j=0}^{R-1}v_i^Tc = Rv_i^Tc\enspace.
\end{align*}
Notably, this does not converge unless $v_i^Tc = 0$. 

In particular, the iterates of local GD converge iff $v_i^Tc = 0$ for all $i\in[l+1,d]$. Or in other words, $c\in \text{im}(C) = \text{ker}(A)^{\perp}$. First let us assume, this is true, then we can conclude that the local GD iterates only evolve in the sub-space $\text{im}(C)$. Where do they converge? Solving the fixed-point equation in the limit of large $R$ gives us,
\begin{align*}
    x_\infty = \rb{I-C}x_\infty+  c \quad \Rightarrow\quad Cx_\infty = c\enspace.
\end{align*}
Summarizing the two key findings so far we get that assuming $c\in \text{im}(C)$, $Cx_\infty = c$ and $x_\infty\not\in \text{ker}(C)$. This is equivalent to saying that $x_\infty$ is the minimum norm solution of the linear system $Cx=c$. In other words,
\begin{align*}
    x_\infty = \arg\min_{x\text{ s.t. }Cx=c}\norm{x}\enspace.
\end{align*}
Further, note that using the least square formulation we can write the solutions of the linear system $Cx=c$ as,
\begin{align*}
    \min_{x\in\rr^d}\frac{1}{M}\sum_{m\in[M]}\|x-x_m^\star\|_{C_m}^2\enspace. 
\end{align*}
This implies that $x_\infty$ (when it exists) is the solution of the following optimization problem,
\begin{align*}
    \min\quad \norm{x}\enspace,\quad \text{s.t.}\quad x\in \min_{x\in\rr^d}\frac{1}{M}\sum_{m\in[M]}\|x-x_m^\star\|_{C_m}^2\enspace.
\end{align*}

In the case, that $c\not\in \text{im}(A)$, there exists $i\in[l+1,d]$ such that $v_i^Tc\neq 0$. The iterates will explode in this direction, but still, notably, the sequence $C\bar x_R$ does converge, because
\begin{align*}
    \lim_{R\to\infty} C\bar x_R &=  \lim_{R\to\infty}\sum_{i\in[l]}\lambda_iv_iv_i^T\bar x_R =  \lim_{R\to\infty}\sum_{i\in[l]}v_iv_i^Tc \rb{1-(1-\lambda_i)^R}\enspace,
\end{align*}
No matter how we pick $\eta,\ K$, this limit exists. 
\end{proof}

\begin{figure}
    \centering
    \includegraphics[width=0.8\linewidth]{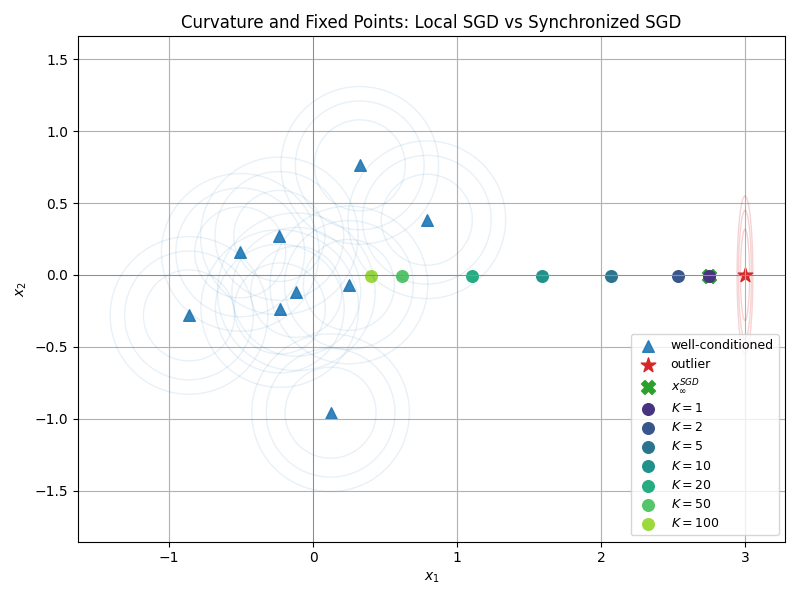}
    \caption{The effect of having an outlier with a sharp curvature on Local SGD's fixed point with progressively higher local update steps.}
    \label{fig:implcit}
\end{figure}

\section{Implicit Regularization due to Local Updates}\label{sec:regularize}
Several works have attempted to understand the effectiveness of Local-SGD from a different perspective, namely by arguing that the solution obtained by Local-SGD is somehow superior. In other words, these works have attempted to characterize the implicit regularization achieved through local update steps. On such notable work is due to Gu et al.~\cite{gu2023and}. For convex quadratic problems, the fixed-point perspective can also be used to understand the implicit regularization of Local SGD. Specifically, recall that under the assumption we discussed in the previous sub-section, i.e., $\cap_{m\in[M]}\ker(A_m) = \cb{0}$, we can also characterize the fixed-point of synchronized SGD as follows,
\begin{align*}
        x_\infty^{SGD} = \arg\min\quad \norm{x}\enspace,\quad \text{s.t.}\quad x\in \min_{x\in\rr^d}\frac{1}{M}\sum_{m\in[M]}\|x-x_m^\star\|_{A_m}^2\enspace.
\end{align*}
Thus the main difference with respect to Local SGD with $K>1$, is a different geometry on each machine defined by $A_m$ as opposed to $C_m$ of Local-SGD,
\begin{align*}
        x_\infty^{L-SGD} = \arg\min\quad \norm{x}\enspace,\quad \text{s.t.}\quad x\in \min_{x\in\rr^d}\frac{1}{M}\sum_{m\in[M]}\|x-x_m^\star\|_{C_m}^2\enspace.
\end{align*}
One natural question is: \textbf{when is the geometry endowed by Local-SGD better?} 

When $\eta$ is ``large enough,'' then for larger \( K \), the matrix polynomial \( C_m = I - (I - \eta A_m)^K \) increasingly flattens the influence of high-curvature (i.e., high-eigenvalue) directions in \( A_m \). In other words, Local SGD implicitly applies a spectral filter that down-weighs directions where the local objective is sharply curved. This has a regularization effect: machines with highly ill-conditioned losses or extremely sharp curvature (possibly due to overfitting, poor conditioning, or adversarial data) contribute less in those sensitive directions. Instead, Local SGD emphasizes agreement in directions where curvature is more moderate or shared across machines.

As a result, the fixed point \( x_\infty^{L\text{-}SGD} \) avoids overreacting to any single client's extreme curvature and instead biases the solution toward directions of consensus and smoothness. In this sense, Local SGD can be interpreted as interpolating between machine-specific optimization (via \( A_m \)) and a more uniform averaging of preferences (via \( C_m \)), particularly in settings with heterogeneous curvature. This implicit regularization may lead to better generalization in practice, especially when the global objective inherits pathological structure from just a few problematic machines.

In \Cref{fig:implcit} we simulate the effect of having an outlier with a sharp curvature, showing how progressively more local update steps regularize the geometry.

\subsection{Extension to Non-quadratics?}
The biggest issue with extending the above analysis to non-quadratics, is that it becomes hard to even write the expression for the fixed point in a closed form. As we will see in \Cref{ch:upper_bounds} it is much easier to use the usual consensus error based analysis in these settings.

\chapter{Local SGD Analyses using Consensus Error}\label{ch:upper_bounds}
In this chapter, we present one of the central contributions of this thesis: improved analyses of Local SGD through sharper bounds on the \emph{consensus error}. This quantity captures the cost of asynchrony across machines and plays a key role in distributed optimization. Unlike the previous section, our analysis does not rely on explicitly characterizing the fixed point of Local SGD, thereby avoiding the need to bound the fixed-point discrepancy. In particular, our contributions are as follows:

\begin{enumerate}
    \item In \Cref{thm:UB_Sconvex_w_Q_zeta,thm:UB_Sconvex_func_w_Q_zeta,thm:UB_convex_w_Q} we first provide new upper bounds for Local SGD under \Cref{ass:zeta}, that improve over existing bounds~\citep{woodworth2020minibatch} by incorporating the effect of second-order heterogeneity and third-order smoothness (\Cref{ass:tau,ass:smooth_third}). These analyses require us to prove new uniform bounds on the consensus error for Local SGD.
     
    \item In \Cref{thm:UB_Sconvex_wo_Q,thm:UB_function_wo_Q,thm:UB_Sconvex_quad,thm:UB_function_quad} we further improve our analyses, deriving coupled recursions between iterate sub-optimality and consensus errors to provide new upper bounds for Local SGD that avoids \Cref{ass:zeta}, and only relies on more relaxed \Cref{ass:zeta_star,ass:phi_star}. Our upper bounds reflect the qualitative behavior predicted by our lower bounds (cf. \Cref{thm:new_LSGD_lower_bound_with_tau}): convergence accelerates as second-order heterogeneity (\Cref{ass:tau}) diminishes. 

    \item Finally, in \Cref{thm:UB_Sconvex_w_Q} we avoid both \Cref{ass:zeta} and incorporate third-order smoothness (\Cref{ass:smooth_third}). This requires an even more careful treatment of higher-order terms, bounding the fourth moment of the consensus error, and handling several coupled recursions.

\end{enumerate}

Collectively, these contributions deepen our understanding of how various forms of heterogeneity influence the behavior of Local SGD and lay the groundwork for further theoretical and algorithmic developments in heterogeneous distributed optimization.

Our techniques extend naturally to other settings where consensus-like errors arise, including communication compression~\citep{stich2018sparsified,karimireddy2019error,gao2024econtrol}, quantization~\citep{alistarh2018convergence,magnusson2019quantization}, asynchronous updates~\citep{ye2018coding,stich2020error}, differential privacy~\citep{wei2020federated,wang2024consensus}, and Byzantine robustness~\citep{alistarh2018byzantine,karimireddy2021learning}. We will present proof sketches in this chapter and \Cref{app:chap5} provides a fully self-contained tutorial on our analyses for Local SGD. 

\subsection*{Outline and Relevant References}

The results in this chapter are based on two papers~\cite{patel2024limits,patel2025revisiting} co-authored with Margalit Glasgow, Ali Zindari, Lingxiao Wang, Sebastian U.\ Stich, Ziheng Cheng, Nirmit Joshi, and Nathan Srebro.

Consensus formation is a fundamental challenge in distributed optimization, particularly when solving Problem~\eqref{prob:scalar}, and has been extensively studied in the context of asynchronous algorithms, compression, and error tolerance. One of the first analyses of Local SGD that explicitly bounded the consensus error was provided by~\citet{stich2018local}. Later,~\citet{woodworth2020local} gave an improved upper bound also relying on bounding the consensus error, which was subsequently shown to be tight in the convex quadratic case via a lower bound from~\citet{glasgow2022sharp}.

The most directly related prior work is that of~\citet{woodworth2020minibatch}, who analyzed Local SGD under a first-order heterogeneity assumption (\Cref{ass:zeta}) and derived a convergence rate using a bound on the consensus error. We revisit and strengthen this result in \Cref{sec:ch5.1} by introducing a new coupled recursion between iterate sub-optimality and consensus error. Solving this recursion yields improved convergence rates that depend on the second-order heterogeneity constant $\tau$ (\Cref{ass:tau}). 

In \Cref{sec:third_order_smooth}, we extend the analysis to settings with third-order smoothness (\Cref{ass:smooth_third}), incorporating the constant $Q$ into the bounds. While \ citet {yuan2020federated} provided a related analysis in the homogeneous setting (which also utilizes \Cref{ass:smooth_third}), our work generalizes it to heterogeneous environments, where the analysis is technically more involved and requires solving four different coupled recursions. 

Finally, \Cref{sec:ch5.3} presents new upper bounds for Local SGD in the convex (non-strongly convex) setting. These results are derived via a convex-to-strongly-convex reduction that leverages the guarantees established in the earlier sections. \Cref{sec:ch5.4} presents simulations that decouple the effect of first and second-order heterogeneity, confirming the predictions of our theory.

\section{Strongly Convex Setting}\label{sec:ch5.1}
In this section, we first present our result on convergence in iterates in the strongly convex setting in \Cref{thm:UB_Sconvex_wo_Q}, and then on convergence in function values in \Cref{thm:UB_function_wo_Q}. The latter allows us to compare our upper bounds with the lower bound in \Cref{thm:new_LSGD_lower_bound}. We will focus here on the key ideas used to derive \Cref{thm:UB_Sconvex_wo_Q}; the proof of \Cref{thm:UB_function_wo_Q} is morally similar, and deferred to \Cref{app:together_2}.

Our analysis proceeds in three stages. We begin by introducing a standard one-step progress result in \Cref{lem:iterate_error_second_recursion_main_body}, which quantifies the improvement of Local SGD in terms of the \emph{consensus error}---a quantity that measures the deviation between local and global iterates and plays a central role in the analysis of many distributed optimization algorithms. We then identify the two main issues in the existing consensus error bounds: (i) they rely on restrictive assumptions~\cite{woodworth2020minibatch,patel2024limits}; and (ii) they do not characterize the effect of second-order heterogeneity. To address both these issues we establish a new upper bound on the consensus error in \Cref{lem:consensus_error_second_recursion_main_body}, that only depends on \Cref{ass:zeta_star,ass:phi_star,ass:tau}. Finally, we substitute this bound into the progress lemma and unroll the resulting recursion to obtain convergence guarantees for both strongly convex and general convex objectives. These results reveal how the convergence of Local SGD depends on the data heterogeneity parameters $\tau$, $\zeta_\star$, and $\phi_\star$, and highlight the algorithm’s communication efficiency in regimes of low data heterogeneity and large $K$.

\begin{lemma}[Canonical One-step Lemma]\label{lem:iterate_error_second_recursion_main_body}
    Assume that the problem instance satisfies \Cref{ass:strongly_convex,ass:smooth_second,ass:stoch_bounded_second_moment}. Then, for step-size $\eta < \frac{1}{H}$ and all $t \in [0,T-1]$, Local-SGD's iterates satisfy:
    \begin{align*}
        \ee\left[\norm{x_{t+1} - x^\star}^2\right] 
        &\leq \left(1 - \eta \mu\right)\ee\left[\norm{x_t - x^\star}^2\right] 
        + \frac{\eta H^2}{\mu} \cdot \textcolor{blue}{\frac{1}{M} \sum_{m \in [M]} \ee\left[\norm{x_t - x_t^m}^2\right]} 
        + \frac{\eta^2 \sigma_2^2}{M} \enspace.
    \end{align*}
\end{lemma}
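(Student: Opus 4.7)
The plan is to expand the squared distance after one Local SGD step, take conditional expectation to peel off the stochastic noise, and then invoke the per-machine strong convexity and smoothness guaranteed by \Cref{ass:strongly_convex,ass:smooth_second}---rather than only working through the global $F$---which is what lets the contraction factor land sharply at $(1-\eta\mu)$.

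First observe that, whether or not $t+1$ is a synchronization step, the consensus iterate satisfies $x_{t+1} = x_t - \eta \bar g_t$ with $\bar g_t := \frac{1}{M}\sum_{m\in[M]}\nabla f(x_t^m;z_t^m)$, since averaging commutes with the linear gradient update. Conditioning on the filtration $\fff_t$ of iterates up to time $t$ and using cross-machine independence of the stochastic noise together with \Cref{ass:stoch_bounded_second_moment} gives
\[
\ee\sb{\norm{x_{t+1}-x^\star}^2 \mid \fff_t} \leq \norm{x_t - x^\star - \eta\bar\nabla_t}^2 + \frac{\eta^2\sigma_2^2}{M}\enspace,
\]
where $\bar\nabla_t := \frac{1}{M}\sum_{m\in[M]}\nabla F_m(x_t^m)$.

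For the deterministic part, the plan is to bound $\inner{\bar\nabla_t}{x_t-x^\star}$ via the per-machine decomposition $\inner{\nabla F_m(x_t^m)}{x_t - x^\star} = \inner{\nabla F_m(x_t^m)}{x_t-x_t^m} + \inner{\nabla F_m(x_t^m)}{x_t^m-x^\star}$. Apply $H$-smoothness of $F_m$ to lower bound the first summand by $F_m(x_t) - F_m(x_t^m) - \tfrac{H}{2}\norm{x_t-x_t^m}^2$, and $\mu$-strong convexity of $F_m$ to lower bound the second summand by $F_m(x_t^m) - F_m(x^\star) + \tfrac{\mu}{2}\norm{x_t^m - x^\star}^2$. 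Averaging over $m$ and using that $\tfrac{1}{M}\sum_{m\in[M]}\norm{x_t^m - x^\star}^2 = \norm{x_t-x^\star}^2 + \tfrac{1}{M}\sum_{m\in[M]}\norm{x_t^m - x_t}^2$ (since $x_t$ is by definition the average of the $x_t^m$'s) yields
\[
\inner{\bar\nabla_t}{x_t-x^\star} \geq F(x_t) - F(x^\star) + \frac{\mu}{2}\norm{x_t-x^\star}^2 - \frac{H-\mu}{2M}\sum_{m\in[M]}\norm{x_t-x_t^m}^2\enspace.
\]
For the $\eta^2\norm{\bar\nabla_t}^2$ term, split $\bar\nabla_t = \nabla F(x_t) + \varepsilon_t$ with $\varepsilon_t := \tfrac{1}{M}\sum_{m\in[M]}(\nabla F_m(x_t^m)-\nabla F_m(x_t))$, use the self-bounding property from \Cref{rem:self_bounding} to bound $\norm{\nabla F(x_t)}^2 \leq 2H(F(x_t)-F(x^\star))$, and use $H$-smoothness of each $F_m$ together with Jensen's inequality to get $\norm{\varepsilon_t}^2 \leq \tfrac{H^2}{M}\sum_{m\in[M]}\norm{x_t-x_t^m}^2$.

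Substituting these bounds into the expansion of $\norm{x_t - x^\star - \eta\bar\nabla_t}^2$ and invoking $\eta < 1/H$ lets the $F(x_t) - F(x^\star)$ Bregman-gap contributions cancel (with a non-positive residual), yielding a $(1-\eta\mu)\norm{x_t-x^\star}^2$ contraction together with a consensus-error coefficient that is at most $\eta H^2/\mu$ after using $\mu \leq H$ and $\eta \leq 1/H$ to absorb the linear and quadratic-in-$\eta$ contributions $\eta(H-\mu)/M$ and $2\eta^2 H^2/M$ into the advertised form. Taking total expectation then yields the lemma. The main technical nuance---and the reason the lemma is ``canonical''---is the per-machine strong convexity step: a Young's-inequality-based proof that starts from the global $F$ alone only yields the weaker contraction factor $(1-\eta\mu/2)$, so exploiting the structure of each individual $F_m$ via the decomposition above is precisely what recovers the sharp $(1-\eta\mu)$ factor.
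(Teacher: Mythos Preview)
Your approach is genuinely different from the paper's and has a gap in the step-size range. The paper does \emph{not} expand the square and work through function values; instead it writes
\[
\norm{x_t - x^\star - \eta\bar\nabla_t}^2 = \norm{\bigl(x_t - \eta\nabla F(x_t) - x^\star\bigr) - \eta\varepsilon_t}^2
\]
(with your $\varepsilon_t$) and applies the weighted AM--GM inequality $(a+b)^2 \leq (1+\gamma^{-1})a^2 + (1+\gamma)b^2$ with $\gamma = (1-\eta\mu)/(\eta\mu)$. The full GD step on $F$ contracts by $(1-\eta\mu)^2$ for all $\eta < 1/H$ (via the mean-value theorem, $\norm{I - \eta\nabla^2 F}\leq 1-\eta\mu$), and the chosen $\gamma$ gives $(1+\gamma^{-1})(1-\eta\mu)^2 = 1-\eta\mu$ and $(1+\gamma)\eta^2 = \eta/\mu$, delivering the advertised constants directly without any function-value cancellation.

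Your route, by contrast, produces a coefficient of $-2\eta + 4\eta^2 H = 2\eta(2\eta H - 1)$ on $F(x_t) - F(x^\star)$, since the $2\eta^2 H^2/M$ consensus term you quote comes from the crude split $\norm{\bar\nabla_t}^2 \leq 2\norm{\nabla F(x_t)}^2 + 2\norm{\varepsilon_t}^2$. This coefficient is non-positive only when $\eta \leq 1/(2H)$, not for all $\eta < 1/H$ as the lemma requires. Under the stronger restriction $\eta \leq 1/(2H)$ your absorption of the consensus coefficient does go through (since $\eta(H-\mu) + 2\eta^2 H^2 \leq \eta(2H-\mu) \leq \eta H^2/\mu$ by $(H-\mu)^2 \geq 0$), so what you have actually proves a slightly weaker version of the lemma.

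Your closing remark is also off: the paper's proof \emph{is} a Young's-inequality argument working only through the global $F$ for the contraction (and per-machine smoothness only to bound $\norm{\varepsilon_t}^2$), and it lands on the sharp $(1-\eta\mu)$ factor. The per-machine strong-convexity decomposition you introduce is not what drives that constant.
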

The above lemma is standard in the analysis of Local SGD~\cite{stich2018local,dieuleveut2019communication,woodworth2020local,woodworth2020minibatch,yuan2020federated,glasgow2022sharp,patel2024limits}; we include a proof in \Cref{app:recursions} for completeness. The \textcolor{blue}{blue} term is the \emph{consensus error}, which vanishes when all clients communicate at every time step (i.e., in fully synchronous SGD). Early analyses of Local SGD~\cite{woodworth2020minibatch}, often controlled this term using the restrictive \Cref{ass:zeta}.

In particular, \citet{woodworth2020minibatch} showed that the consensus error can be bounded as:
\begin{align}\label{eq:old_consensus_error_UB}
\textcolor{blue}{\frac{1}{M} \sum_{m \in [M]} \ee\left[\norm{x_t - x_t^m}^2\right]} 
\leq 2 K^2 \eta^2 H^2 \zeta^2+  6 K \sigma_2^2 \eta^2 \enspace.    
\end{align}
Similar upper bounds have also appeared in other works. We include a proof of the above statement in \Cref{app:zeta_results} for completeness.  Substituting \eqref{eq:old_consensus_error_UB} into \Cref{lem:iterate_error_second_recursion_main_body} and unrolling the recursion yields a convergence rate. However, as we discussed in \Cref{ch:baselines_and_lower_bounds}, \Cref{ass:zeta} is very restrictive as it requires the gradient functions across clients to be pointwise similar, allowing only limited heterogeneity---essentially in the linear terms. Notably, ~\citet{wang2022unreasonable} criticized the uniform consensus error bound in \eqref{eq:old_consensus_error_UB}, arguing that contrary to practice it implies an overly conservative step-size $\eta = \ooo(1/K)$ to prevent consensus error from diverging as $K \to \infty$.

The following result relaxes the need for \Cref{ass:zeta} by providing a new upper bound on the consensus error that depends on $\zeta_\star$, $\tau$, and the expected iterate error at the most recent communication round---a quantity that decreases over time.

\begin{lemma}[A Coupled Recursion for Consensus Error]\label{lem:consensus_error_second_recursion_main_body}
    Assume that the problem instance satisfies \Cref{ass:strongly_convex,ass:smooth_second,ass:stoch_bounded_second_moment,ass:bounded_optima,ass:zeta_star,ass:phi_star,ass:tau}. Then, for step-size $\eta < \frac{1}{H}$ and all $t \in [0, T]$, Local-SGD's iterates satisfy:
    \begin{align*}
        \frac{1}{M} \sum_{m \in [M]} \ee\left[\norm{x_t - x_t^m}^2\right] 
        &\leq 2 \eta^2 H^2 K^2 \zeta_\star^2 + \frac{2 \eta^3 \tau^2 K^2 \sigma_2^2}{\mu} + 2 \eta^2 \sigma_2^2 K \ln(K) \\
        &\quad + 4 \eta^2 \tau^2 (t - \delta(t))^2 (1 - \eta \mu)^{2(t - 1 - \delta(t))} \left( \ee\left[\norm{x_{\delta(t)} - x^\star}^2\right] + \phi_\star^2 \right) \enspace,
    \end{align*}
    where $\delta(t) := t - (t \mod K)$ is the most recent communication round prior to or at time $t$.
\end{lemma}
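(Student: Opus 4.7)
The plan is to unroll the consensus error between consecutive communication rounds, decompose each per-step gradient difference into a stochastic noise part and a systematic drift part, and close a coupled recursion that uses \Cref{ass:tau} to keep the coupling coefficient proportional to $\tau$ rather than $H$. Concretely, for any $t$ with $\delta(t) \leq t$, the identity
\begin{align*}
    x_t^m - x_t &= -\eta \sum_{s=\delta(t)}^{t-1}\rb{\xi_s^m - \frac{1}{M}\sum_{n \in [M]}\xi_s^n} - \eta \sum_{s=\delta(t)}^{t-1}\rb{\nabla F_m(x_s^m) - \frac{1}{M}\sum_{n \in [M]}\nabla F_n(x_s^n)}\enspace,
\end{align*}
where $\xi_s^m := \nabla f(x_s^m; z_s^m) - \nabla F_m(x_s^m)$, separates the two contributions. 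The noise sum is a martingale-difference sequence whose second moment is controlled by \Cref{ass:stoch_bounded_second_moment} together with independence across local steps and machines, which eventually produces the $\eta^2 K \sigma_2^2$ scaling that becomes Term 3.

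For the deterministic drift, the essential inequality is that $F_m - F_n$ is $\tau$-smooth (equivalent to \Cref{ass:tau}), so
\begin{align*}
    \norm{\rb{\nabla F_m(x) - \nabla F_n(x)} - \rb{\nabla F_m(x_m^\star) - \nabla F_n(x_m^\star)}} \leq \tau \norm{x - x_m^\star}\enspace,
\end{align*}
combined with $\nabla F_m(x_m^\star) = 0$ and the $H$-smoothness of $F_n$ to obtain $\norm{\nabla F_n(x_m^\star)} = \norm{\nabla F_n(x_n^\star) - \nabla F_n(x_m^\star)} \leq H\zeta_{\star,m,n}$ via \Cref{ass:smooth_second,ass:zeta_star}. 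Writing $\norm{x - x_m^\star} \leq \norm{x - x^\star} + \phi_{\star,m}$ via \Cref{ass:phi_star} then splits the averaged drift on machine $m$ into three pieces: an $H\zeta_\star$ contribution, a $\tau\phi_\star$ contribution, and a $\tau\norm{x_s - x^\star}$ coupling that must be handled recursively. Integrating the first two over the $K$ inner steps yields the $H^2 K^2 \zeta_\star^2$ of Term 1 and the $\phi_\star^2$ part of Term 4; crucially, centering at each machine's own optimum $x_m^\star$ (rather than at $x^\star$) is what keeps the coupling constant $\tau$ instead of $H$, and this is exactly what makes the bound sharper than the classical $\ooo(K^2 \eta^2 H^2 \zeta^2)$ estimate one obtains under the stronger \Cref{ass:zeta}.

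To close the recursion, I would pair the drift bound with strong convexity applied to a virtual noiseless local trajectory initialized at $x_{\delta(t)}$, obtaining the contraction $\norm{y_s - x^\star}^2 \lesssim (1-\eta\mu)^{2(s-\delta(t))}\norm{x_{\delta(t)} - x^\star}^2$ (plus small residuals from $\phi_\star$ and noise). Substituting this into the $\tau\norm{x_s - x^\star}$ coupling and accumulating over the $K$ local steps produces the $(t-\delta(t))^2 (1-\eta\mu)^{2(t-1-\delta(t))}$ envelope in Term 4, while the convolution of the noise with the same $(1-\eta\mu)$ contraction yields Term 2 through a geometric-series summation with the characteristic $\tau^2 \sigma_2^2 / \mu$ scaling.

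The main obstacle will be the bookkeeping in the coupled recursion: since $\norm{x_s^m - x_s}^2$ reappears on the right-hand side at later steps through the $\tau \norm{x_s - x^\star}$ piece, a naive Young's inequality with a constant weight would inflate the bound by $(1+\gamma)^K$ for some $\gamma > 0$. Choosing $\gamma \asymp 1/\ln K$ keeps this factor bounded at the cost of a $\ln K$ multiplier on the noise accumulation, producing the $\ln(K)$ in Term 3. Preserving a $\tau$ (rather than $H$) coefficient in front of $\norm{x_{\delta(t)} - x^\star}^2$ throughout this process---so that the Young steps do not silently inflate $\tau$ back to $H$---is the other delicate point, and it is exactly what dictates the centering of the drift decomposition at $x_m^\star$.
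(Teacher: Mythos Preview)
Your high-level plan—decompose the drift via the $\tau$-smoothness of $F_m - F_n$, then couple with the iterate error through a strong-convexity contraction—matches the paper's strategy, but two key mechanisms are misidentified.

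First, the $\ln K$ does not come from a constant Young weight $\gamma \asymp 1/\ln K$: that choice gives $(1+\gamma)^K \approx e^{K/\ln K}$, which diverges. The paper instead runs a \emph{one-step} recursion on the pairwise difference $\|x_t^m - x_t^n\|^2$ and applies Young's inequality with the \emph{time-varying} weight $\gamma_j = j - \delta(t)$. The product telescopes, $\prod_{i=j+1}^{t-1}(1+1/\gamma_i) = (t-\delta(t))/(j+1-\delta(t))$, so the noise contribution at step $j$ carries weight $(t-\delta(t))/(j+1-\delta(t))$; summing over $j$ yields the harmonic series $\sum_j 1/(j+1-\delta(t)) \leq \ln(t-\delta(t))$, which is the actual source of the $\ln K$ in Term~3.

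Second, in your unrolled identity the gradients $\nabla F_m(x_s^m)$ and $\nabla F_n(x_s^n)$ sit at \emph{different} points, so your $\tau$-smoothness inequality (which requires a common evaluation point) cannot be applied as written. Reducing to a common point would reintroduce $\|x_s^m - x_s^n\|$ with coefficient $H$, not $\tau$—exactly the inflation you want to avoid. The paper sidesteps this by staying in one-step form: it splits $\nabla F_m(x_{t-1}^m) - \nabla F_n(x_{t-1}^n) = [\nabla F_m(x_{t-1}^m) - \nabla F_m(x_{t-1}^n)] + [\nabla F_m(x_{t-1}^n) - \nabla F_n(x_{t-1}^n)]$. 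The first bracket becomes $(I - \eta\nabla^2 F_m(c))(x_{t-1}^m - x_{t-1}^n)$ by the mean-value theorem and supplies a $(1-\eta\mu)$ contraction on the pairwise difference; the second bracket now lives at the common point $x_{t-1}^n$, where the $\tau$-bound (centered at $x_n^\star$, not $x_m^\star$) applies cleanly. The term $\|x_{t-1}^n - x_n^\star\|^2$ is then controlled by the \emph{actual} single-machine SGD contraction (\Cref{lem:single_second_recursion}), giving $(1-\eta\mu)^{2(t-1-\delta(t))}\|x_{\delta(t)} - x_n^\star\|^2 + \eta\sigma_2^2/\mu$; the $\eta\sigma_2^2/\mu$ residual here—not a virtual noiseless trajectory—is what generates Term~2 after being multiplied by $(1+\gamma_j)\eta^2\tau^2$ and summed.
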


We prove this result in \Cref{app:double_recursions}. Unlike the earlier bound in \eqref{eq:old_consensus_error_UB}, our upper bound improves with lower second-order heterogeneity. In the limit $\tau \to 0$, it effectively replaces $\zeta$ with $\zeta_\star$ in \eqref{eq:old_consensus_error_UB}, and can therefore be significantly smaller. While our bound does require setting $\eta = \mathcal{O}(1/K)$ to prevent blow-up as $K \to \infty$, we provide an alternative bound in \Cref{app:explode} that avoids this and addresses the concerns raised by Wang et al.~\cite{wang2022unreasonable}. That said, as we explain in \Cref{app:explode}, the regime $\eta = \mathcal{O}(1/K)$ is ultimately the most relevant for our analysis, making \Cref{lem:consensus_error_second_recursion_main_body} more useful.

Combining the coupled recursions in \Cref{lem:consensus_error_second_recursion_main_body,lem:iterate_error_second_recursion_main_body} leads to the following convergence guarantee:

\begin{theorem}[Informal, Iterate Error]\label{thm:UB_Sconvex_wo_Q}
    Assume a problem instance satisfies \Cref{ass:strongly_convex,ass:smooth_second,ass:stoch_bounded_second_moment,ass:bounded_optima,ass:zeta_star,ass:phi_star,ass:tau} and $R = \tilde\Omega\left(\frac{H \tau}{\mu^2}\right)$. Then, for a suitable step-size $\eta$, and $x_0 = 0$ Local SGD outputs $x_{KR}$ such that:
    \begin{align*}
        \ee\left[\norm{x_{KR} - x^\star}^2\right] 
        &= \tilde\ooo\left( e^{-\frac{\mu K R}{2H}} B^2 
        + \frac{\sigma_2^2}{\mu^2 M K R} 
        + \frac{\tau^2 H^2 \phi_\star^2}{\mu^4 R^2} 
        + \frac{H^4 \zeta_\star^2}{\mu^4 R^2} 
        + \frac{H^2 \tau^2 \sigma_2^2}{\mu^6 K R^3} 
        + \frac{H^2 \sigma_2^2}{\mu^4 K R^2} \right) \enspace.
    \end{align*}
\end{theorem}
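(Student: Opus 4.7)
The plan is to combine the two recursions from \Cref{lem:iterate_error_second_recursion_main_body} and \Cref{lem:consensus_error_second_recursion_main_body} into a single closed recursion for the iterate error, then unroll it over $T=KR$ steps and tune the step-size via the two-phase recipe of~\citet{stich2019unified}. First I would substitute the consensus-error bound into the one-step lemma. After multiplying by $\eta H^2/\mu$ and using $(t-\delta(t))^2 \leq K^2$ together with $(1-\eta\mu)^{2(t-1-\delta(t))}\leq 1$, this gives
\begin{align*}
\ee\sb{\norm{x_{t+1}-x^\star}^2} \leq (1-\eta\mu)\ee\sb{\norm{x_t-x^\star}^2} + \frac{4\eta^3\tau^2 H^2 K^2}{\mu}\ee\sb{\norm{x_{\delta(t)}-x^\star}^2} + D,
\end{align*}
where $D=D(\eta,K,\sigma_2,\zeta_\star,\phi_\star,\tau)$ collects all $\delta(t)$-independent terms, namely the canonical noise $\eta^2\sigma_2^2/M$ plus $\eta H^2/\mu$ times each non-coupled piece of \Cref{lem:consensus_error_second_recursion_main_body} (including the constant $\phi_\star^2$ contribution from the coupled term).

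The main obstacle is the middle term, which couples the iterate error at time $t+1$ to the iterate error at the previous communication round. I plan to neutralize it by demanding that its coefficient be at most $\eta\mu/2$, i.e., $\eta \leq c\mu/(\tau H K)$ for a small numerical constant $c$. Under the two-phase schedule $\eta = \min\{1/H,\ \tilde\Theta(1/(\mu KR))\}$, this requirement translates exactly into $R \geq c'H\tau/\mu^2$, which is the regime hypothesis of the theorem. With this bound in hand I would iterate the recursion one full round at a time: for $t\in[rK,(r+1)K-1]$, $\delta(t)=rK$ is constant, and summing the $K$ per-step inequalities yields
\begin{align*}
\ee\sb{\norm{x_{(r+1)K}-x^\star}^2} \leq \rb{1-\tfrac{\gamma}{2}}\ee\sb{\norm{x_{rK}-x^\star}^2} + \tfrac{\gamma D}{\eta\mu}, \qquad \gamma := 1-(1-\eta\mu)^K,
\end{align*}
where the factor $1/2$ comes precisely from absorbing the coupled coefficient $4\eta^3\tau^2 H^2 K^2/\mu \leq \eta\mu/2$ into the per-round contraction.

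Unrolling over the $R$ communication rounds and using $\gamma \geq \eta\mu K/2$ for small $\eta\mu$ produces $\ee\sb{\norm{x_{KR}-x^\star}^2} \leq \exp(-\eta\mu KR/2)B^2 + 2D/(\eta\mu)$. The final step is to tune $\eta$ using the standard two-phase rule: pick $\eta = \min\{1/H,\ (1/(\mu KR))\log(\mu^2 B^2 KR/\bar D)\}$ for the dominant piece of $D/(\eta\mu)$. Because $D$ contains four qualitatively distinct pieces, balancing each against the initial error separately (and absorbing logarithmic factors in $\tilde\ooo$) produces the six contributions in the theorem: the exponential contraction of $B^2$ from the $\eta=1/H$ phase; the mini-batch noise $\sigma_2^2/(\mu^2 MKR)$ from $\eta^2\sigma_2^2/M$; the heterogeneity-bias terms $\tau^2 H^2\phi_\star^2/(\mu^4 R^2)$ and $H^4\zeta_\star^2/(\mu^4 R^2)$ from the $\phi_\star^2$ coupling piece and the $\eta^2 H^2 K^2\zeta_\star^2$ piece respectively; and the residual noise terms $H^2\tau^2\sigma_2^2/(\mu^6 KR^3)$ and $H^2\sigma_2^2/(\mu^4 KR^2)$ from the $\eta^3\tau^2 K^2\sigma_2^2/\mu$ and $\eta^2\sigma_2^2 K\log K$ parts of \Cref{lem:consensus_error_second_recursion_main_body}. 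The hard part is the absorption: without $R=\tilde\Omega(H\tau/\mu^2)$ the recursions do not decouple, and one is forced either into much more conservative step-sizes $\eta=\ooo(1/K)$ or back to the uniform consensus bound \eqref{eq:old_consensus_error_UB}, which requires the much stronger \Cref{ass:zeta}.
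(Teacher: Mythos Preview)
Your proposal is correct and follows essentially the same route as the paper. The paper also substitutes \Cref{lem:consensus_error_second_recursion_main_body} into the \textcolor{blue}{blue} branch of \Cref{lem:iterate_error_second_recursion_main_body}, unrolls to a per-round recursion with contraction factor $\rho_1=(1-\eta\mu)^K+(1-(1-\eta\mu)^K)\tfrac{4\eta^2H^2\tau^2}{\mu^2}K^2(1-\eta\mu)^{K-2}$, imposes $R\gtrsim H\tau/\mu^2$ so that $\tfrac{1-(1-\eta\mu)^K}{1-\rho_1}\le 2$ (your absorption condition in disguise), and then tunes $\eta=\min\{1/(2H),\tfrac{1}{\mu KR}\ln(B^2/\epsilon)\}$; the only cosmetic difference is that the paper tracks the $(1-\eta\mu)^{K-2}$ factors explicitly and separately bounds $\rho_1^R$, producing the slightly sharper requirement $R\ge \max\{\tfrac{3H\tau}{\mu^2}\ln(B^2/\epsilon),\tfrac{2H\tau}{\mu^2}\ln^{3/2}(B^2/\epsilon)\}$, still within $\tilde\Omega(H\tau/\mu^2)$.
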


For the complete theorem statement, the precise step-size choice, and the derivation of the bound, see \Cref{app:together_1}. As a baseline, we can compare the above rate to the convergence rate of mini-batch SGD in the intermittent communication setting (see e.g., \cite{woodworth2020minibatch}),
\begin{align*}
    \ee\left[\norm{x_{KR}^{MB-SGD} - x^\star}^2\right] &= \ooo\rb{e^{-\frac{\mu R}{2H} }B^2 + \frac{\sigma_2^2}{\mu^2 M K R}}\enspace. 
\end{align*}
It is well known that the convergence rate for mini-batch SGD above is tight and \textbf{can not} improve with lower data heterogeneity \cite{woodworth2020minibatch,patel2024limits} (also cf. \Cref{rem:mb_rate_is_tight}). As we discussed in \Cref{ch:baselines_and_lower_bounds} local SGD can not beat mini-batch SGD under just \Cref{ass:zeta_star,ass:phi_star}, leaving open the question of what happens when we additionally have \Cref{ass:tau}. \Cref{thm:UB_Sconvex_wo_Q} answers this question, showing that with a small $\tau$, Local SGD can converge much faster than mini-batch SGD. Notably, when $K \to \infty$, the communication complexity of Local SGD for target accuracy $\epsilon$\footnote{We say a solution $\hat x$ has $\epsilon$ target iterate sub-optimality if $\ee\sb{\norm{\hat x - x^\star}}\leq \epsilon$.} and large $K$ satisfies:
\begin{align}
R^{L-SGD}(\epsilon) = \tilde\ooo\left( \frac{H \tau}{\mu^2} 
+ \frac{\tau H \phi_\star}{\mu^2 \sqrt{\epsilon}} 
+ \frac{H^2 \zeta_\star}{\mu^2 \sqrt{\epsilon}} \right) \enspace.\label{eq:comm_compl_sconvex_wo_Q}   
\end{align}
The above communication complexity decreases with data heterogeneity, suggesting that Local SGD becomes increasingly communication-efficient when tasks are more aligned. In particular, the convergence rate smoothly interpolates to the behavior on homogeneous problems, for which our bound implies that a constant number of communication rounds suffice. On the other hand, with similar $K$, the communication complexity of mini-batch SGD is $\tilde\Omega(\kappa)$, and does not improve with a lower data heterogeneity.

We note that \Cref{thm:UB_Sconvex_wo_Q} is the first ever result to prove the domination of Local SGD over mini-batch SGD in settings of reasonable heterogeneity, i.e., these rates only depend on $\tau$, $\zeta_\star$, $\phi_\star$, and not on $\zeta$, while also showing a provable benefit of local update steps. 

Using a different progress lemma (\Cref{app:function_value}), we also derive a corresponding function-value convergence result based on the same consensus error bound in \Cref{lem:consensus_error_second_recursion_main_body}.
\begin{theorem}[Informal, Function Error with Strong Convexity]\label{thm:UB_function_wo_Q}
    Assume a problem instance satisfies \Cref{ass:strongly_convex,ass:smooth_second,ass:stoch_bounded_second_moment,ass:bounded_optima,ass:zeta_star,ass:phi_star,ass:tau}, $R= \tilde\Omega\rb{\frac{\tau\sqrt{\kappa}}{\mu}}$, and $KR = \Omega(\kappa)$. Then, for a suitable choice of step-size $\eta$, Local SGD initialized at $x_0 = 0$ outputs $\hat x$, a weighted combination of its iterates, satisfying,
    \begin{align*}
        \ee\sb{F(\hat x)} - F(x^\star) &= \tilde\ooo\rb{e^{-\frac{\mu KR}{2H}}\mu B^2 + \frac{\sigma_2^2}{\mu MKR} + \frac{\tau^2H \phi_\star^2}{\mu^2R^2} + \frac{H^3\zeta_\star^2}{\mu^2R^2}+ \frac{H \tau^2\sigma_2^2}{\mu^4KR^3}+ \frac{H \sigma_2^2}{\mu^2 KR^2}}\enspace.
    \end{align*}
\end{theorem}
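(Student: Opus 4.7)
I would mirror the proof of \Cref{thm:UB_Sconvex_wo_Q}, but substitute the iterate-progress lemma for a function-value progress lemma and extract the final bound through a weighted average of the iterates. Concretely, starting from the standard expansion of $\ee\sb{\norm{x_{t+1}-x^\star}^2}$, I would split the cross term into $\inner{\nabla F(x_t)}{x_t-x^\star}$ (bounded using strong convexity) and the drift $\inner{\tfrac{1}{M}\sum_m(\nabla F_m(x_t^m)-\nabla F_m(x_t))}{x_t-x^\star}$ (bounded via \Cref{ass:smooth_second} and Young's inequality with parameter $\mu/2$ so that only half of the $\mu$-term is consumed). This should yield a one-step inequality of the form
\[
2\eta\,\ee\sb{F(x_t)-F(x^\star)} \leq \rb{1-\tfrac{\eta\mu}{2}}\ee\sb{\norm{x_t-x^\star}^2} - \ee\sb{\norm{x_{t+1}-x^\star}^2} + \tfrac{\eta^2\sigma_2^2}{M} + \tfrac{2\eta H^2}{\mu}\cdot\tfrac{1}{M}\sum_{m\in[M]}\ee\sb{\norm{x_t-x_t^m}^2}.
\]
Crucially, the consensus coefficient ends up with one fewer $\mu$ in the denominator after the weighted-averaging step below than in the iterate-progress analysis, which is the source of the $\kappa$-improvement in every heterogeneity-dependent term relative to \Cref{thm:UB_Sconvex_wo_Q}.

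\noindent\textbf{Assembly.} I would then substitute \Cref{lem:consensus_error_second_recursion_main_body} and invoke \Cref{thm:UB_Sconvex_wo_Q} to handle the $\ee\sb{\norm{x_{\delta(t)}-x^\star}^2}$ term that appears inside the consensus bound. Multiplying the progress inequality by weights $w_{t+1}:=(1-\eta\mu/2)^{-(t+1)}$, the two distance terms telescope to give
\[
2\eta\sum_{t=0}^{KR-1} w_{t+1}\,\ee\sb{F(x_t)-F(x^\star)} \leq w_0\, B^2 + \sum_t w_{t+1}\cdot\rb{\tfrac{\eta^2\sigma_2^2}{M} + \tfrac{2\eta H^2}{\mu}\cdot\ee\sb{\tfrac{1}{M}\sum_m \norm{x_t-x_t^m}^2}}.
\]
Since $\sum_t w_{t+1} = \Theta(w_{KR}/(\eta\mu))$, dividing through produces an initial term of the expected form $\mu B^2 e^{-\eta\mu KR/2}$, a noise term $\eta\sigma_2^2/M$, and weighted consensus contributions. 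Applying Jensen's inequality to $\hat x:=\rb{\sum_t w_{t+1}}^{-1}\sum_t w_{t+1} x_t$ then converts the left-hand side to $\ee\sb{F(\hat x)-F(x^\star)}$, and the six terms of the claimed rate emerge after tuning $\eta = \min\cb{\tfrac{1}{2H},\,\ln(\cdot)/(\mu KR)}$ in the style of \citet{stich2019unified}.

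\noindent\textbf{Main obstacle.} The delicate part is controlling the trajectory-dependent piece of the consensus error, namely $4\eta^2\tau^2(t-\delta(t))^2(1-\eta\mu)^{2(t-1-\delta(t))}\rb{\ee\sb{\norm{x_{\delta(t)}-x^\star}^2}+\phi_\star^2}$, under the weighted summation. Summed over a single communication epoch, the factor $(t-\delta(t))^2$ contributes roughly $K^3/3$; this must be balanced against the exponential discount and the substitution of \Cref{thm:UB_Sconvex_wo_Q} for $\ee\sb{\norm{x_{\delta(t)}-x^\star}^2}$. Ensuring that the aggregated contribution collapses to $\tau^2 H\phi_\star^2/(\mu^2 R^2)$ and $H\tau^2\sigma_2^2/(\mu^4 KR^3)$---rather than inflating by further factors of $\kappa$---requires the weakened hypothesis $R=\tilde\Omega(\tau\sqrt{\kappa}/\mu)$, which in effect gives $\sqrt{\kappa}$ worth of slack compared to the $\tilde\Omega(H\tau/\mu^2)$ needed for the iterate bound, exactly because the weighted-averaging step gains one power of $\mu$ over the unrolling used for \Cref{thm:UB_Sconvex_wo_Q}. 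Tracking these factors will likely demand a three-regime case analysis on whether $\eta=1/(2H)$ or $\eta\propto \ln(\cdot)/(\mu KR)$ is active, mirroring the structure used in \Cref{app:together_1}.
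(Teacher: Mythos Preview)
Your overall strategy---a function-value one-step lemma combined with weighted averaging and the consensus bound of \Cref{lem:consensus_error_second_recursion_main_body}---matches the paper's, but the specific one-step inequality you derive is too weak to recover either the stated rate or the weaker condition $R=\tilde\Omega(\tau\sqrt{\kappa}/\mu)$. The problem is the consensus coefficient. By bounding the drift $\langle\tfrac{1}{M}\sum_m(\nabla F_m(x_t^m)-\nabla F_m(x_t)),x_t-x^\star\rangle$ via Young's inequality with parameter $\mu/2$, you incur $\tfrac{2\eta H^2}{\mu}C(t)$ on the right-hand side. The paper instead uses \Cref{lem:function_error_easy} (Lemma~7 of \citet{woodworth2020minibatch}), which achieves only $2\eta H\cdot C(t)$. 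The sharper coefficient comes from a different decomposition: write $\langle\nabla F_m(x_t^m),x_t-x^\star\rangle=\langle\nabla F_m(x_t^m),x_t^m-x^\star\rangle+\langle\nabla F_m(x_t^m),x_t-x_t^m\rangle$, apply strong convexity to the first piece to extract $F_m(x_t^m)-F_m(x^\star)$, and then use smoothness of $F_m$ to replace $F_m(x_t^m)$ by $F_m(x_t)$ at cost $\tfrac{H}{2}\norm{x_t-x_t^m}^2$; the cross term $\langle\nabla F_m(x_t^m),x_t-x_t^m\rangle$ is absorbed exactly in this substitution, so no $1/\mu$ ever appears. With your $\tfrac{H^2}{\mu}$ coefficient, after tuning $\eta\approx\ln(\cdot)/(\mu KR)$ the heterogeneity terms come out as $\tfrac{H^2\tau^2\phi_\star^2}{\mu^3R^2}$, $\tfrac{H^4\zeta_\star^2}{\mu^3R^2}$, etc.---each a factor $\kappa$ worse than claimed---and the contraction factor requires $R=\tilde\Omega(H\tau/\mu^2)$ rather than $\tau\sqrt{\kappa}/\mu$. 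Your intuition that weighted averaging ``gains one power of $\mu$'' is correct relative to simply multiplying \Cref{thm:UB_Sconvex_wo_Q} by $H$, but the theorem claims a $\kappa^2$ improvement over that naive bound, and the second factor of $\kappa$ comes entirely from the sharper one-step lemma.

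A secondary point: the paper does not invoke \Cref{thm:UB_Sconvex_wo_Q} to bound $\ee[\norm{x_{\delta(t)}-x^\star}^2]$. Doing so would force you to assume $R=\tilde\Omega(H\tau/\mu^2)$ anyway, since that is the hypothesis of \Cref{thm:UB_Sconvex_wo_Q}. Instead, after substituting the consensus bound, the paper keeps the self-referential $A(K(r-1))$ term and absorbs it into a contraction factor $\rho_4:=(1-\eta\mu)^K+(1-(1-\eta\mu)^K)\tfrac{8\eta^2H\tau^2K^2}{\mu}(1-\eta\mu)^{K-2}$ on $A(Kr)$, then unrolls over $R$ rounds. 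The averaging weights are accordingly two-scale, $w_t\propto\rho_4^{R-1-\delta(t)/K}(1-\eta\mu)^{\delta(t)+K-1-t}$, rather than the single exponential $(1-\eta\mu/2)^{-(t+1)}$ you propose; see \Cref{lem:UB_Sconvex_function_wo_Q} for the full argument.
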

The proof of the above theorem can be found in \Cref{app:together_2}\footnote{In the regime $\kappa >>1$ \Cref{thm:UB_function_wo_Q} is much better than just applying second-order smoothness to \Cref{thm:UB_Sconvex_wo_Q}.}. 

The above convergence rate also exhibits a desirable dependence on the data heterogeneity constants and outperforms mini-batch SGD.

Finally, it is worth noting that the hard instance in \Cref{thm:new_LSGD_lower_bound} is a quadratic function, whereas \Cref{thm:UB_function_wo_Q} applies to general strongly convex objectives. This raises the possibility that, by restricting attention to quadratic functions, we may be able to improve upon the upper bound in \Cref{thm:UB_function_wo_Q}. In the next section, we explore this direction by deriving tighter upper bounds in regimes where the third-order smoothness constant $Q$ from \Cref{ass:smooth_third} is small.

\section{Incorporating Third-order Smoothness}\label{sec:third_order_smooth}

We will begin by stating a modified one-step progress result in \Cref{lem:iterate_error_second_recursion_main_body_w_Q} that explicitly captures second-order heterogeneity and third-order smoothness. This directly recovers improved bounds for quadratic objectives by setting $Q = 0$ in \Cref{thm:UB_Sconvex_quad,thm:UB_function_quad}. To handle general third-order smooth functions, we combine this with new bounds on the fourth moment of the consensus error (\Cref{app:fourth_consensus}) and a corresponding fourth-moment progress lemma (\Cref{app:fourth_error_iterate}), resulting in \Cref{thm:UB_Sconvex_w_Q}. These results show that when $Q$ and $\tau$ are small, Local SGD can achieve significantly faster convergence, even under substantial first-order heterogeneity.

\begin{lemma}[Modified One-step Lemma]\label{lem:iterate_error_second_recursion_main_body_w_Q}
    Assume the problem instance satisfies \Cref{ass:strongly_convex,ass:smooth_second,ass:smooth_third,ass:stoch_bounded_second_moment,ass:tau}. Then, for step-size $\eta < \frac{1}{H}$ and all $t \in [0, T-1]$, the iterates of Local SGD satisfy:
    \begin{align*}
        \mathbb{E}\left[\norm{x_{t+1} - x^\star}^2\right] 
        &\leq \left(1 - \eta \mu\right)\mathbb{E}\left[\norm{x_t - x^\star}^2\right] 
        + \frac{\eta^2 \sigma_2^2}{M} + \frac{2\eta Q^2}{\mu} \cdot \textcolor{red}{\frac{1}{M} \sum_{m \in [M]} \mathbb{E}\left[\norm{x_t - x_t^m}^4\right]}\\
        &\quad  + \frac{2\eta \tau^2}{\mu} \cdot \textcolor{blue}{\frac{1}{M} \sum_{m \in [M]} \mathbb{E}\left[\norm{x_t - x_t^m}^2\right]} \enspace.
    \end{align*}
\end{lemma}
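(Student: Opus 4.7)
The plan is to mirror the proof of \Cref{lem:iterate_error_second_recursion_main_body}, but to replace the crude bound $\norm{\nabla F_m(x_t^m) - \nabla F_m(x_t)} \leq H\norm{x_t^m - x_t}$ with a sharper Taylor-expansion estimate that exploits \Cref{ass:smooth_third,ass:tau}. Starting from the virtual-average recursion $x_{t+1} = x_t - \frac{\eta}{M}\sum_{m\in[M]}\nabla f(x_t^m; z_t^m)$, conditioning on the history and taking expectation over the independent per-machine noise using \Cref{ass:stoch_bounded_second_moment} gives
\[
    \ee\sb{\norm{x_{t+1} - x^\star}^2 \mid x_t} \leq \norm{x_t - x^\star - \eta \bar{G}_t}^2 + \frac{\eta^2 \sigma_2^2}{M}\enspace,
\]
where $\bar{G}_t := \frac{1}{M}\sum_{m\in[M]} \nabla F_m(x_t^m)$ is the expected averaged gradient.

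The key step is controlling the bias $e_t := \bar{G}_t - \nabla F(x_t)$ induced by the local drift. A second-order Taylor expansion around $x_t$ gives $\nabla F_m(x_t^m) = \nabla F_m(x_t) + \nabla^2 F_m(x_t)(x_t^m - x_t) + R_m$, with remainder satisfying $\norm{R_m} \leq \frac{Q}{2}\norm{x_t^m - x_t}^2$ by \Cref{ass:smooth_third}. Averaging over $m$ and exploiting the virtual-average identity $\frac{1}{M}\sum_m (x_t^m - x_t) = 0$, the leading-order linear term rewrites as $\frac{1}{M}\sum_m (\nabla^2 F_m(x_t) - \nabla^2 F(x_t))(x_t^m - x_t)$, where each correction matrix has spectral norm at most $\tau$ by \Cref{ass:tau}. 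Applying Cauchy--Schwarz together with Jensen's inequality then delivers
\[
    \norm{e_t}^2 \leq 2\tau^2 \cdot \frac{1}{M}\sum_{m\in[M]} \norm{x_t - x_t^m}^2 + \frac{Q^2}{2} \cdot \frac{1}{M}\sum_{m\in[M]} \norm{x_t - x_t^m}^4\enspace.
\]

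With $e_t$ in hand, I would decompose $\norm{x_t - x^\star - \eta \bar{G}_t}^2$ as $\norm{x_t - x^\star - \eta\nabla F(x_t)}^2 - 2\eta\inner{e_t}{x_t - x^\star - \eta\nabla F(x_t)} + \eta^2 \norm{e_t}^2$. The first term is the usual gradient descent contraction: combining strong convexity with the self-bounding property $\norm{\nabla F(x_t)}^2 \leq 2H(F(x_t) - F(x^\star))$ and the assumption $\eta \leq 1/H$, it is bounded by $(1 - \eta\mu)\norm{x_t - x^\star}^2$ (with a bit of slack on the function-value term). The cross term is dispatched with Young's inequality, contributing an $\ooo(\eta/\mu)\norm{e_t}^2$ payment while absorbing a small multiple of $\eta\mu\norm{x_t-x^\star}^2$ back into the contraction, and the residual $\eta^2\norm{e_t}^2$ is dominated by $\frac{2\eta}{\mu}\norm{e_t}^2$ since $\eta \leq 1/H \leq 1/\mu$. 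Substituting the bound on $\norm{e_t}^2$ and taking outer expectations yields the claim.

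The main conceptual obstacle is the zero-mean cancellation $\frac{1}{M}\sum_m(x_t^m - x_t) = 0$, which is what converts the naive $H$-dependence of \Cref{lem:iterate_error_second_recursion_main_body} into the much smaller $\tau$-dependence by letting only the Hessian \emph{heterogeneity} (rather than its magnitude) drive the linear part of the bias; this is precisely what will enable the subsequent coupled-recursion analysis to benefit from small second-order heterogeneity. The remaining work is routine bookkeeping to track the numerical prefactors so that exactly $\frac{2\eta\tau^2}{\mu}$ and $\frac{2\eta Q^2}{\mu}$ appear in front of the second- and fourth-moment consensus terms, respectively.
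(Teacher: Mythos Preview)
Your proposal is essentially the same approach as the paper's proof. Both separate the stochastic noise, decompose the bias $e_t = \bar G_t - \nabla F(x_t)$ via second-order information (the paper uses the mean-value theorem to produce a Hessian at an intermediate point $\hat x_t^m$ and then adds/subtracts $\nabla^2 F_m(x_t)$ and $\nabla^2 F(x_t)$; you use a Taylor expansion with Lagrange remainder --- these are equivalent), exploit the zero-mean identity $\frac{1}{M}\sum_m (x_t^m - x_t) = 0$ to kill the $\nabla^2 F(x_t)$ piece and expose $\tau$ rather than $H$, and then split via Young's inequality.

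One small technical point: to land exactly on the coefficient $(1-\eta\mu)$ in front of $\ee\|x_t - x^\star\|^2$ while simultaneously paying $\frac{\eta}{\mu}\|e_t\|^2$, the paper uses the Hessian-based contraction $\|x_t - x^\star - \eta\nabla F(x_t)\| \leq (1-\eta\mu)\|x_t - x^\star\|$ (valid for $\eta < 1/H$ since $\mu I \preceq \nabla^2 F \preceq HI$), which gives $(1-\eta\mu)^2$ upon squaring; the weighted Young's inequality with parameter $\gamma = \frac{1-\eta\mu}{\eta\mu}$ then turns $(1-\eta\mu)^2$ into $(1-\eta\mu)$ and $\eta^2$ into $\frac{\eta}{\mu}$. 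Your self-bounding route only delivers $(1-\eta\mu)$ before the split, so absorbing the cross term would degrade the contraction to $(1-c\eta\mu)$ for some $c<1$ rather than the stated $(1-\eta\mu)$. This is easily fixed by swapping in the Hessian contraction, and you already flag the constant-tracking as routine bookkeeping.
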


We prove \Cref{lem:iterate_error_second_recursion_main_body_w_Q} in \Cref{app:recursions}. Compared to \Cref{lem:iterate_error_second_recursion_main_body}, this recursion introduces an additional \textcolor{red}{fourth-moment} of the \textcolor{blue}{consensus error}, weighted by the third-order smoothness constant $Q$. While this fourth-moment term can dominate the second-moment term, the decomposition reveals how smoother problems (with small $Q$ and $\tau$) reduce the impact of delayed communication. In particular, when $Q = 0$---which is true when each $F_m$ is quadratic---we obtain significantly sharper bounds than in \Cref{thm:UB_Sconvex_wo_Q}.

\begin{theorem}[Informal, Iterate Error for Quadratics]\label{thm:UB_Sconvex_quad}
    Assume the problem instance is quadratic and satisfies \Cref{ass:strongly_convex,ass:smooth_second,ass:stoch_bounded_second_moment,ass:bounded_optima,ass:zeta_star,ass:phi_star,ass:tau}, $R = \tilde{\Omega}\left(\frac{\tau^2}{\mu^2}\right)$ and $K R = \tilde{\Omega}(1)$. Then, for a suitable choice of step-size $\eta$, Local SGD initialized at $x_0 = 0$ outputs $x_{KR}$ such that:
    \begin{align*}
        \mathbb{E}\left[\norm{x_{KR} - x^\star}^2\right] 
        &= \tilde{\mathcal{O}}\left( e^{-\frac{\mu K R}{2H}} B^2 
        + \frac{\sigma_2^2}{\mu^2 M K R} 
        + \frac{\tau^4 \phi_\star^2}{\mu^4 R^2} 
        + \frac{\tau^2 H^2 \zeta_\star^2}{\mu^4 R^2} 
        + \frac{\tau^4 \sigma_2^2}{\mu^6 K R^3} 
        + \frac{\tau^2 \sigma_2^2}{\mu^4 K R^2} \right) \enspace.
    \end{align*}
\end{theorem}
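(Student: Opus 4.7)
The plan is to combine the modified one-step lemma (\Cref{lem:iterate_error_second_recursion_main_body_w_Q}), specialized to $Q=0$ since the objectives are quadratic, with the consensus-error bound from \Cref{lem:consensus_error_second_recursion_main_body}. The modified progress inequality yields
\[
\mathbb{E}\!\left[\|x_{t+1}-x^\star\|^2\right] \leq (1-\eta\mu)\,\mathbb{E}\!\left[\|x_t-x^\star\|^2\right] + \frac{\eta^2\sigma_2^2}{M} + \frac{2\eta\tau^2}{\mu}\cdot\frac{1}{M}\sum_{m\in[M]}\mathbb{E}\!\left[\|x_t-x_t^m\|^2\right],
\]
and the key gain over the analysis leading to \Cref{thm:UB_Sconvex_wo_Q} is that the prefactor on the consensus error is $\tau^2/\mu$ rather than $H^2/\mu$. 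Substituting the consensus bound turns this into a recursion of the coupled form
\[
V_{t+1} \leq (1-\eta\mu) V_t + A_t + C_t\,V_{\delta(t)},
\]
where $V_t := \mathbb{E}\|x_t-x^\star\|^2$, $A_t$ is a noise/heterogeneity term, and $C_t := \tfrac{8\eta^3\tau^4}{\mu}(t-\delta(t))^2(1-\eta\mu)^{2(t-1-\delta(t))}$ couples the current error to the last synchronized iterate.

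The first technical step is to decouple this recursion. My plan is to sum $C_t$ across one communication round of length $K$; because $(t-\delta(t))\leq K$ and the geometric factor $(1-\eta\mu)^{2(t-1-\delta(t))}$ is bounded by one, $\sum_{t=\delta+1}^{\delta+K} C_t = \mathcal{O}(\eta^3\tau^4 K^3/\mu)$. When the assumption $R = \tilde{\Omega}(\tau^2/\mu^2)$ is combined with the step-size choice $\eta = \tilde{\Theta}(\min\{1/H, \ln(\cdot)/(\mu KR)\})$ (mirroring the choice used in \Cref{prop:conv_to_fixed} and \Cref{thm:UB_Sconvex_wo_Q}), this per-round contamination becomes small enough that one can treat $V_{\delta(t)}$ as essentially $V_t$ up to a multiplicative constant, folding $C_t\,V_{\delta(t)}$ back into the contractive term. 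This converts the delayed recursion into a standard geometric one of the form $V_{t+1}\leq (1-\eta\mu/2)V_t + \tilde A_t$.

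The second step is to unroll this clean recursion over $T=KR$ steps in the usual way (e.g., the unified framework of \citet{stich2019unified}), which yields
\[
V_T \lesssim e^{-\eta\mu KR}B^2 + \sum_{t=0}^{T-1} (1-\eta\mu/2)^{T-1-t} \tilde A_t,
\]
where $\tilde A_t$ aggregates: (i) the noise floor $\eta^2\sigma_2^2/M$; (ii) the deterministic heterogeneity contributions $\tfrac{\eta\tau^2}{\mu}\!\cdot\!\eta^2H^2K^2\zeta_\star^2$ and $\tfrac{\eta^3\tau^4}{\mu}(K^2\phi_\star^2)$; and (iii) the stochastic contributions $\tfrac{\eta\tau^2}{\mu}\!\cdot\!(\eta^3\tau^2K^2\sigma_2^2/\mu + \eta^2\sigma_2^2 K\ln K)$. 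Summing the geometric series replaces each $\eta$ in $\tilde A_t$ by $1/\mu$ and produces the six terms in the stated bound; the factor $\tau^2$ in front of every heterogeneity term comes from the modified one-step lemma, while the $H^2\zeta_\star^2$ term is inherited from \Cref{lem:consensus_error_second_recursion_main_body}'s $\eta^2H^2K^2\zeta_\star^2$ contribution that does not carry a $\tau^2$ prefactor.

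The main obstacle I anticipate is handling the coupled recursion rigorously: one must ensure that the aggregate contamination from $C_t\,V_{\delta(t)}$ is strictly smaller than $(\eta\mu/2)V_t$ along the trajectory, which is precisely why the condition $R=\tilde{\Omega}(\tau^2/\mu^2)$ appears. A secondary subtlety is the step-size tuning: the presence of both $\tau$ and $H$ in different terms means the optimal $\eta$ must simultaneously satisfy $\eta<1/H$, $\eta = \tilde{\mathcal{O}}(1/(\mu KR))\cdot\ln(\cdot)$ to make the exponential vanish, and $\eta^3\tau^4 K^3/\mu = \mathcal{O}(\eta\mu)$ to justify the decoupling. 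Verifying that a single $\eta$ satisfies all three—together with $KR=\tilde{\Omega}(1)$—is the delicate calculation. Once the decoupling is validated, the remainder is a routine application of the unified SGD analysis with an appropriately tuned step-size.
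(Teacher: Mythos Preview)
Your proposal is essentially correct and uses the same ingredients as the paper: the modified one-step lemma with $Q=0$ (giving the $\tau^2/\mu$ prefactor on the consensus term), the coupled consensus-error bound from \Cref{lem:consensus_error_second_recursion_main_body}, the condition $R=\tilde\Omega(\tau^2/\mu^2)$ to control the contamination, and the standard step-size tuning.

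The one place where your framing differs from the paper is the decoupling step. You describe absorbing the $C_t V_{\delta(t)}$ term into the contractive term to obtain a \emph{step-level} recursion $V_{t+1}\leq(1-\eta\mu/2)V_t+\tilde A_t$, which requires treating $V_{\delta(t)}$ as essentially $V_t$. That is not quite clean: within a communication round, $V_{\delta(t)}$ is fixed while $V_t$ evolves, and there is no a~priori reason $V_{\delta(t)}\lesssim V_t$. The paper instead unrolls the step-level recursion \emph{within a single round} first---using the second (round-wise) statement of \Cref{lem:consensus_error_second_recursion_main_body}---to obtain a direct \emph{round-level} recursion $A(Kr)\leq\rho_2\,A(K(r-1))+\text{constants}$, where $\rho_2=(1-\eta\mu)^K+(1-(1-\eta\mu)^K)\cdot\tfrac{4\eta^2\tau^4K^2}{\mu^2}(1-\eta\mu)^{K-2}$. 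This avoids comparing $V_{\delta(t)}$ and $V_t$ altogether: both the direct contraction and the consensus-error feedback act on the same quantity $A(K(r-1))$, and the condition $R=\tilde\Omega(\tau^2/\mu^2)$ simply ensures $\rho_2<1$ (more precisely, $(1-(1-\eta\mu)^K)/(1-\rho_2)\leq 2$). After that, the unrolling across rounds and step-size tuning proceed exactly as you outline. The round-level framing is cleaner and is what you should implement.
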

\begin{remark}[Comparison to the Fixed-Point Perspective]
    The first, second, and fourth terms in the convergence rate above closely resemble those appearing in the fixed-point analysis of \Cref{ch:fixed_point}, particularly in \Cref{prop:conv_to_fixed}. Although we do not establish the tightness of the overall upper bound in \Cref{thm:UB_Sconvex_quad}, we believe these three terms are individually tight---up to potential differences between $\bar B$ and $B$.
\end{remark}
\begin{remark}[Removing $\phi_\star$ Dependence]\label{rem:no_phi_star_UB_Sconvex_quad}
    Recall that due to \Cref{rem:zeta_star_tau_vs_phi_star} we can upper bound $\phi_\star$ by $\zeta_\star\rb{1+ \frac{\tau}{\mu}}$ for quadratic. This allows us to simplify the above convergence rate to 
    \begin{align*}
        \mathbb{E}\left[\norm{x_{KR} - x^\star}^2\right] 
        &= \tilde{\mathcal{O}}\left( e^{-\frac{\mu K R}{2H}} B^2 
        + \frac{\sigma_2^2}{\mu^2 M K R} 
        + \frac{\tau^6 \zeta_\star^2}{\mu^6 R^2} 
        + \frac{\tau^2 H^2 \zeta_\star^2}{\mu^4 R^2} 
        + \frac{\tau^4 \sigma_2^2}{\mu^6 K R^3} 
        + \frac{\tau^2 \sigma_2^2}{\mu^4 K R^2} \right) \enspace.
    \end{align*}
    The above simpler upper bound can be comparable to \Cref{thm:UB_Sconvex_quad} when $\tau<<H$. It would be useful to compare this result later to the convergence rate in terms of $\zeta$.  
\end{remark}
We prove this theorem in \Cref{app:together_1}. To understand the improvement over \Cref{thm:UB_Sconvex_wo_Q}, consider the implied communication complexity in the large $K$ regime:
\begin{align}
R(\epsilon) = \tilde{\mathcal{O}}\left( \frac{\tau^2}{\mu^2} 
+ \frac{\tau^2 \phi_\star}{\mu^2 \sqrt{\epsilon}} 
+ \frac{\tau H \zeta_\star}{\mu^2 \sqrt{\epsilon}} \right) \enspace, \label{eq:comm_compl_sconvex_quad}
\end{align}
which becomes constant when $\tau = 0$. In contrast, the bound in \eqref{eq:comm_compl_sconvex_wo_Q} still depends on $\zeta_\star$ even when $\tau = 0$. This highlights how low third-order smoothness ($Q$) and low second-order heterogeneity ($\tau$) improve Local SGD’s performance---especially in settings where first-order heterogeneity remains large. It is also worth noting that the convergence rates for mini-batch SGD do not improve with a lower third-order smoothness, as the hard instances for mini-batch SGD are all quadratic~\cite{nesterov2018lectures} (also cf. \Cref{rem:mb_rate_is_tight}).

Using a different modified progress lemma (see \Cref{app:function_value}), we also derive the following convergence rate in terms of function values. 

\begin{theorem}[Informal, Function Error for Quadratics]\label{thm:UB_function_quad}
    Assume the problem instance is quadratic and satisfies \Cref{ass:strongly_convex,ass:smooth_second,ass:stoch_bounded_second_moment,ass:bounded_optima,ass:zeta_star,ass:phi_star,ass:tau}, $R= \tilde\Omega\rb{\frac{\tau^2}{\mu^2}}$, and $KR = \Omega(\kappa)$. Then, for a suitable choice of step-size $\eta$, Local SGD initialized at $x_0 = 0$ outputs $\hat x$, a weighted combination of its iterates, satisfying,
    \begin{align*}
        \ee\sb{F(\hat x)} - F(x^\star) &= \tilde\ooo\rb{e^{-\frac{\mu KR}{2H}}\mu B^2 + \frac{\sigma_2^2}{\mu MKR} + \frac{\tau^4 \phi_\star^2}{\mu^3R^2} + \frac{\tau^2H^2\zeta_\star^2}{\mu^3R^2}+ \frac{\tau^4\sigma_2^2}{\mu^5KR^3}+ \frac{\tau^2\sigma_2^2}{\mu^3 KR^2}}\enspace.
    \end{align*}
\end{theorem}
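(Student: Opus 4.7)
The plan is to mirror the structure of the proof of \Cref{thm:UB_Sconvex_quad}, but track function-value sub-optimality instead of iterate error. The key ingredients will be: (i) a function-value analogue of the quadratic case of \Cref{lem:iterate_error_second_recursion_main_body_w_Q} (which, since $Q=0$, drops the fourth-moment consensus term and only keeps the $\tau^2$-weighted second-moment consensus term), and (ii) the same coupled consensus-error bound from \Cref{lem:consensus_error_second_recursion_main_body}. Concretely, the per-step lemma I would prove takes the form
\begin{align*}
    2\eta\,\mathbb{E}\sb{F(x_t) - F(x^\star)}
    &\leq (1-\eta\mu)\,\mathbb{E}\sb{\norm{x_t-x^\star}^2} - \mathbb{E}\sb{\norm{x_{t+1}-x^\star}^2}
    + \frac{\eta^2\sigma_2^2}{M} + \frac{2\eta \tau^2}{\mu}\cdot \frac{1}{M}\sum_{m\in[M]}\mathbb{E}\sb{\norm{x_t-x_t^m}^2}\enspace,
\end{align*}
obtained by expanding the gradient step, using the quadratic identity to relate $\inner{\nabla F(x_t)}{x_t-x^\star}$ to $F(x_t)-F(x^\star)$, and absorbing the cross-term between the average gradient and each local gradient via $\tau$-heterogeneity (the $H$-smooth term becomes a $\tau$-term precisely because $F_m-F$ is $\tau$-smooth and $\nabla F(x^\star)=0$).

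Next, I would substitute the consensus-error bound from \Cref{lem:consensus_error_second_recursion_main_body} directly into this per-step inequality. This produces a recursion of the form
\begin{align*}
    2\eta\,\mathbb{E}\sb{F(x_t)-F(x^\star)} &\leq (1-\eta\mu)\,\mathbb{E}\sb{\norm{x_t-x^\star}^2} - \mathbb{E}\sb{\norm{x_{t+1}-x^\star}^2} + \frac{\eta^2\sigma_2^2}{M} \\
    &\quad + \frac{2\eta\tau^2}{\mu}\Bigl( 2\eta^2 H^2 K^2\zeta_\star^2 + \tfrac{2\eta^3\tau^2 K^2\sigma_2^2}{\mu} + 2\eta^2 \sigma_2^2 K\ln K \\
    &\quad\qquad + 4\eta^2\tau^2 K^2 (1-\eta\mu)^{2(t-1-\delta(t))}\bigl(\mathbb{E}\sb{\norm{x_{\delta(t)}-x^\star}^2}+\phi_\star^2\bigr)\Bigr)\enspace.
\end{align*}
I would then apply the exponentially-growing weights $w_t = (1-\eta\mu)^{-(t+1)}$ and sum over $t\in[0,KR-1]$, so that the iterate-error terms telescope and leave an $e^{-\eta\mu KR}B^2$ initial-condition contribution. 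The last coupled term contributes $\mathbb{E}\sb{\norm{x_{\delta(t)}-x^\star}^2}$, which I would control by invoking \Cref{thm:UB_Sconvex_quad} to obtain a uniform bound on $\mathbb{E}\sb{\norm{x_{\delta(t)}-x^\star}^2}$ under the assumption $R=\tilde\Omega(\tau^2/\mu^2)$; this decouples the recursion.

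After dividing by the total weight $\sum_t w_t \asymp (\eta\mu)^{-1}(1-\eta\mu)^{-KR}$ and defining $\hat x$ as the weighted average of the iterates, Jensen's inequality gives a bound on $\mathbb{E}[F(\hat x)]-F(x^\star)$ of the form
\begin{align*}
    \mathbb{E}\sb{F(\hat x)}-F(x^\star) &= \tilde{\ooo}\Bigl(\mu B^2 e^{-\eta\mu KR} + \frac{\eta\sigma_2^2}{M} + \frac{\eta^2 \tau^2 H^2 K^2 \zeta_\star^2}{1} + \frac{\eta^3 \tau^4 K^2\sigma_2^2}{\mu} + \eta^2\tau^2 K \sigma_2^2 \\
    &\qquad\qquad + \eta^2\tau^4 K^2 \phi_\star^2 + \eta^2\tau^4 K^2 \cdot (\text{iterate bound from \Cref{thm:UB_Sconvex_quad}}) \Bigr)\enspace.
\end{align*}
The final step is tuning $\eta$: I would set $\eta = \tilde\Theta\rb{\min\{1/H,\ \ln(\cdot)/(\mu KR)\}}$, the standard choice for strongly-convex SGD from \citet{stich2019unified}, which produces the exponential term $e^{-\mu KR/2H}\mu B^2$, the statistical term $\sigma_2^2/(\mu MKR)$, and reduces each deterministic consensus term by a factor of $(\mu KR)^{-2}$ (or $(\mu KR)^{-3}$) yielding the stated $R^{-2}$, $KR^{-3}$, and $KR^{-2}$ scalings. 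The conditions $R = \tilde\Omega(\tau^2/\mu^2)$ and $KR=\Omega(\kappa)$ ensure the step-size cap $\eta<1/H$ is not binding and that the recursion from \Cref{lem:consensus_error_second_recursion_main_body} closes.

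The main obstacle I anticipate is the coupling introduced by the last term of the consensus-error bound, which depends on $\mathbb{E}\sb{\norm{x_{\delta(t)}-x^\star}^2}$. A naive substitution creates a two-way dependence between iterate and function-value errors; resolving it cleanly requires either (a) a bootstrapping argument, where the iterate bound from \Cref{thm:UB_Sconvex_quad} is substituted to close the recursion before the weighted-averaging step, or (b) a simultaneous recursion on both quantities. Option (a) is cleaner and is what I would pursue, since the iterate-error result is already proven; the technical care needed is in ensuring that the $\tau^4 K^2\phi_\star^2$ term combines correctly with the contribution of the bootstrapped iterate bound so that no extra factor of $\kappa$ appears (otherwise the $\mu^{-3}$ denominators in the theorem would degrade to $\mu^{-4}$, recovering only \Cref{thm:UB_Sconvex_quad} after applying smoothness). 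Verifying this is essentially where the ``function-value gain'' over iterate-error bounds is extracted.
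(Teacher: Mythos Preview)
Your approach is correct in spirit and would yield the stated rate, but it differs from the paper's in how the coupling with $A(\delta(t))$ is resolved. The paper does \emph{not} bootstrap from \Cref{thm:UB_Sconvex_quad}. Instead, it uses the per-step lemma \Cref{lem:function_error_hard} with $Q=0$ (structurally equivalent to yours), substitutes the consensus bound, and then \emph{unrolls within a single round first}. Because $A(\delta(t))=A(Kr)$ for all $t$ in round $r$, the coupled term merges with the telescoped $A(Kr)$ coming from the per-step recursion, producing a single contraction factor $\rho_4 = (1-\eta\mu)^K + (1-(1-\eta\mu)^K)\tfrac{8\eta^2\tau^4 K^2}{\mu^2}(1-\eta\mu)^{K-2}$. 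The round-wise inequality $A(K(r+1))\leq \rho_4 A(Kr) - \eta\sum_{t}(1-\eta\mu)^{\cdots}E(t) + \text{const}$ is then unrolled across rounds, and the natural weights become the two-level scheme $w_t \propto \rho_4^{R-1-r}(1-\eta\mu)^{Kr+K-1-t}$ rather than your single-level $(1-\eta\mu)^{-(t+1)}$.

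What each approach buys: the paper's route is self-contained (no dependence on a separately proven iterate theorem) and the $B^2$ term decays cleanly via $\rho_4^R$. Your bootstrapping route is conceptually simpler but has a delicate step you partly anticipate: when you substitute the iterate bound for $A(\delta(t))$, its exponentially decaying part $\rho_2^{r}B^2$ must be shown to average down under the growing weights $(1-\eta\mu)^{-(t+1)}$ so that no spurious $\tau^4 B^2/(\mu^3 R^2)$ term survives. This works (since $\rho_2\approx(1-\eta\mu)^K$ under $R=\tilde\Omega(\tau^2/\mu^2)$, the product $\rho_2^{r}(1-\eta\mu)^{-Kr}$ is essentially constant in $r$, and the normalizing weight sum supplies the missing $e^{-\eta\mu KR}$), but it is extra bookkeeping that the paper's round-wise contraction avoids entirely.
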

The proof for the above theorem can be found in \Cref{app:together_2}. Compared to \Cref{thm:UB_Sconvex_wo_Q} we again see an improvement, as all but the first two terms in the convergence rate go to zero when $\tau=0$.
\begin{remark}[Removing $\phi_\star$ Dependence]\label{rem:no_phi_star_UB_function_quad}
    Recall that due to \Cref{rem:zeta_star_tau_vs_phi_star} we can upper bound $\phi_\star$ by $\zeta_\star\rb{1+ \frac{\tau}{\mu}}$ for quadratic. This allows us to simplify the above convergence rate to 
    \begin{align*}
        \ee\sb{F(\hat x)} - F(x^\star) &= \tilde\ooo\rb{e^{-\frac{\mu KR}{2H}}\mu B^2 + \frac{\sigma_2^2}{\mu MKR} + \frac{\tau^6 \zeta_\star^2}{\mu^5R^2} + \frac{\tau^2H^2\zeta_\star^2}{\mu^3R^2}+ \frac{\tau^4\sigma_2^2}{\mu^5KR^3}+ \frac{\tau^2\sigma_2^2}{\mu^3 KR^2}}\enspace.
    \end{align*}
    This rate suggests that the communication complexity for quadratic in the regime when $K$ is large is given by $\ooo\rb{\frac{\tau^3\zeta_\star}{\mu^{5/2}\epsilon^{1/2}} + \frac{\tau H \zeta_\star}{\mu^{3/2}\epsilon^{1/2}}}$ for target accuracy $\epsilon$. 
\end{remark}

Finally, we prove the following result for general third-order smooth functions.
\begin{theorem}[Informal, Iterate Error with $Q$]\label{thm:UB_Sconvex_w_Q}
    Assume a problem instance satisfies \Cref{ass:strongly_convex,ass:smooth_second,ass:smooth_third,ass:stoch_bounded_second_moment,ass:stoch_bounded_fourth_moment,ass:bounded_optima,ass:zeta_star,ass:phi_star,ass:tau}. Then, for a suitable choice of step-size $\eta$, Local SGD initialized at $x_0 = 0$ outputs $x_{KR}$ satisfying:
    \begin{align*}
        &\ee\sb{\norm{x_{KR}-x^\star}^2} + \frac{1}{B^2}\ee\sb{\norm{x_{KR}-x^\star}^4} = \tilde\ooo\Bigg(e^{-KR/\kappa}B^2 + \frac{\sigma_2^2}{\mu^2 MKR} + \frac{\sigma_4^4}{\mu^4K^3R^3 M^2B^2}\\ 
        &\quad+ \kappa'\rb{\frac{\tau^2\phi_\star^2}{\mu^2R^2}+ \frac{\tau^4\sigma_4^2}{\mu^6 KR^5B^2}\phi_\star^2 + \frac{\sigma_2^2\tau^2}{\mu^4KR^4B^2}\phi_\star^2 + \frac{\tau^4}{\mu^4B^2R^4}\phi_\star^4 + \frac{H^2\zeta_\star^2}{\mu^2 R^2} + \frac{\tau^2\sigma_2^2}{\mu^4KR^3}}\\
        &\quad + \kappa'\rb{\frac{\sigma_2^2\ln(K)}{\mu^2KR^2}+\frac{H^4\zeta_\star^4}{\mu^4R^3 B^2} +\frac{\tau^4\sigma_4^4}{\mu^8K^2R^5B^2} + \frac{\sigma_2^2H^2\zeta_\star^2}{\mu^4B^2R^4} + \frac{\tau^2\sigma_2^4}{\mu^6KR^5 B^2} + \frac{\sigma_4^4\ln(K)}{\mu^4KB^2R^4}}\Bigg)\enspace,
    \end{align*}
    where we assume $R = \tilde \Omega\rb{\frac{\tau \sqrt{\kappa'}}{\mu}}$ and define $\kappa':= 2 + \frac{4Q^2B^2}{\mu^2} + \frac{6H^4}{\mu^4}$.
\end{theorem}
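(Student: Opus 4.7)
The plan is to extend the coupled-recursion strategy used for \Cref{thm:UB_Sconvex_wo_Q} and \Cref{thm:UB_Sconvex_quad} to a larger system that simultaneously tracks four interlocked quantities: the second and fourth moments of the iterate error, $A_t := \ee[\norm{x_t-x^\star}^2]$ and $B_t := \ee[\norm{x_t-x^\star}^4]/B^2$, together with the second and fourth moments of the consensus error, $C_t := \frac{1}{M}\sum_{m\in[M]}\ee[\norm{x_t-x_t^m}^2]$ and $D_t := \frac{1}{M}\sum_{m\in[M]}\ee[\norm{x_t-x_t^m}^4]$. The fourth-moment consensus error $D_t$ is forced on us by \Cref{lem:iterate_error_second_recursion_main_body_w_Q}: the term $Q^2 D_t/\mu$ is the price paid for Taylor-expanding $\nabla F_m$ to second order and exploiting \Cref{ass:smooth_third}, while the cubic remainder's natural Cauchy--Schwarz partner sits at the fourth moment. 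The combination $A_t + B_t$ in the theorem statement is precisely what comes out of solving this system.

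The first two recursions are already in hand or follow the same template as earlier in the chapter: \Cref{lem:iterate_error_second_recursion_main_body_w_Q} gives the $A_{t+1}$-recursion, and an analogous argument expanding $\norm{x_{t+1}-x^\star}^4$, applying strong convexity plus Young's inequality, and invoking \Cref{ass:stoch_bounded_fourth_moment} to handle the quartic noise, yields a $B_{t+1}$-recursion with contraction factor $(1-\eta\mu)$ and additional right-hand side contributions proportional to $\eta^2\sigma_4^4/(M^2 B^2)$, $\eta Q^2 \ee[\norm{x_t-x_t^m}^6]/(\mu B^2)$, and $\eta\tau^2 D_t/(\mu B^2)$. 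I would dominate the sixth-moment term via Cauchy--Schwarz by $\sqrt{C_t D_t}/B^2$, then absorb it into $C_t + D_t/B^2$ using AM--GM so that only $C_t, D_t$ remain.

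The core technical work is the new fourth-moment consensus-error bound, the analogue of \Cref{lem:consensus_error_second_recursion_main_body}. Between communications, expand $x_t - x_t^m = -\eta\sum_{s=\delta(t)}^{t-1}\nabla f(x_s^m;z_s^m) + (\text{average})$ and separate into a drift piece controlled by \Cref{ass:tau,ass:phi_star} (replacing $\nabla F_m(x_s) - \nabla F_n(x_s)$ by $\tau\norm{x_s - x_s^m}$ plus a boundary term at $\delta(t)$ that introduces $\phi_\star^2$ and $\ee[\norm{x_{\delta(t)}-x^\star}^2]$) and a martingale of stochastic noise. Raising to the fourth power and applying Burkholder--Davis--Gundy (or direct expansion with \Cref{ass:stoch_bounded_fourth_moment}) gives the $\sigma_4^4 K^2 \ln^2(K)$-type term. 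The result will be of the form
\begin{align*}
D_t \;\le\; c\,\eta^4 H^4 K^4 \zeta_\star^4 + c\,\eta^4 \tau^4 K^4\bigl(A_{\delta(t)} + \phi_\star^2\bigr)^2 + c\,\eta^4 \sigma_4^4 K^2 \ln^2(K) + c\,\eta^6 \tau^4 K^4 \sigma_2^4/\mu^2 \enspace,
\end{align*}
with a matching, tighter bound for $C_t$ lifted from \Cref{lem:consensus_error_second_recursion_main_body}. The main obstacle is this step: getting the correct $\ln^2(K)$ factor on the noise and, more subtly, avoiding a spurious $\phi_\star^4$ blow-up by peeling boundary contributions one communication round at a time so that fourth powers of $\phi_\star$ only appear paired with the small factor $\eta^4\tau^4$.

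Finally, I would substitute both consensus-error bounds into the $A_{t+1}$ and $B_{t+1}$ recursions, producing a two-dimensional linear recursion in $(A_t, B_t)$ whose non-contracting perturbations are multiplied by $1 + Q^2 B^2/\mu^2 + H^4/\mu^4$, i.e.\ exactly $\kappa'$. Telescoping over a round of length $K$, then across $R$ rounds, and choosing $\eta = \min\{1/H,\ \tilde O(1/(\mu KR))\}$ as in the proofs of \Cref{thm:UB_Sconvex_wo_Q,thm:UB_Sconvex_quad} yields each term in the displayed bound. The condition $R = \tilde\Omega(\tau\sqrt{\kappa'}/\mu)$ is exactly the threshold at which the $\tau^2 C_t/\mu$ and $Q^2 D_t/\mu$ feedback into $A_t$ is dominated by the $(1-\eta\mu)$ contraction, and similarly for $B_t$; below this threshold the system is not contracting and no clean bound is possible.
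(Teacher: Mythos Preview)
Your high-level strategy matches the paper's: track the four quantities $(A_t,B_t,C_t,D_t)$, derive coupled recursions, stack $(A_t,B_t)$ into a vector, and unroll a two-dimensional linear system across rounds. The paper does exactly this, writing $\mathbf{a}(t)=(A(t),B(t)/B^2)^\top$, $\mathbf{c}(t)=(C(t),D(t)/B^2)^\top$, obtaining $\mathbf{a}(t+1)\le P\mathbf{a}(t)+Q\mathbf{c}(t)+N$ and $\mathbf{c}(t)\le U\mathbf{a}(\delta(t))+V$, and then bounding $\|\mathbf{a}(KR)\|$ via spectral norms of $P^K+\bar PQU$. So the architecture is right.

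However, two of your concrete steps diverge from the paper in ways that matter. First, for the $B_{t+1}$ recursion you propose to Taylor-expand with $Q$, producing an $\ee[\|x_t-x_t^m\|^6]$ term that you then handle by Cauchy--Schwarz and AM--GM. The paper does \emph{not} do this: on the fourth-power deviation it simply uses $H$-smoothness directly, $\|\nabla F_m(x_t)-\nabla F_m(x_t^m)\|^4\le H^4\|x_t-x_t^m\|^4$, which yields an $\eta H^4 D(t)/\mu^3$ term and avoids any moment higher than the fourth. This is precisely where the $H^4/\mu^4$ in $\kappa'$ comes from; the $Q,\tau$ decomposition is only applied to the \emph{second}-order cross terms (those multiplied by $\sigma_2^2$). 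Your route would not reproduce the stated $\kappa'$, and the sixth-moment detour is unnecessary.

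Second, your displayed bound $D_t\lesssim \eta^4\tau^4K^4(A_{\delta(t)}+\phi_\star^2)^2+\cdots$ is quadratic in $A_{\delta(t)}$, which would make the $(A,B)$ system \emph{nonlinear} and break the matrix-norm unrolling. The paper instead derives $D(t)$ as a \emph{linear} function of $A(\delta(t))$ and $B(\delta(t))$ separately: it controls $\ee[\|x_t^n-x_n^\star\|^4]$ via a single-machine fourth-moment SGD lemma in terms of $\ee[\|x_{\delta(t)}-x_n^\star\|^4]$, then relates the latter to $B(\delta(t))+\phi_\star^4$. That linearity is what allows the stacked recursion to close. You should replace the $(A+\phi_\star^2)^2$ structure with separate linear dependence on $A(\delta(t))$ and $B(\delta(t))$; this also gives the $\phi_\star^4$ term in the final bound the correct $\tau^4/(\mu^4 B^2 R^4)$ prefactor rather than something larger.
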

We can see that the above convergence rate improves with smaller $\tau$ and $Q$, via the constant $\kappa'$, and the effect of a low third-order smoothness is most pronounced when $B/\mu^2$ is large relative to $\kappa^4$. To prove the above theorem, we first derive new fourth-moment bounds on the consensus error and one-step progress in \Cref{app:recursions,app:double_recursions}. Solving the resulting four coupled recursions directly is challenging, so we stack the iterate and consensus recursions into two vectors and apply matrix algebra, leading to a cleaner proof in \Cref{app:together_3}. A similar strategy was employed by \citet{yuan2020federated}, but in the much simpler homogeneous setting, where they did not need to address coupled recursions. A limitation of our analysis is that the final bound is expressed in terms of the norm of a stacked vector that includes both second and fourth-moment errors. Since bounding the fourth moment of the iterate error is not strictly necessary, this may have introduced extraneous terms in the upper bound. For instance, the upper bound does not recover the quadratic convergence rate in \Cref{thm:UB_Sconvex_quad} when $Q=0$. We therefore believe that \Cref{thm:UB_Sconvex_w_Q} could be further improved through a more refined analysis of the underlying matrix inequalities.

Before we end this section, it would be helpful to state the following convergence guarantee in terms of \Cref{ass:smooth_third,ass:zeta,ass:tau} to highlight what the best version of the \Cref{thm:UB_Sconvex_w_Q} might look like. 
\begin{theorem}[Informal, Iterate Error with $Q,\ \zeta,\ \tau$]\label{thm:UB_Sconvex_w_Q_zeta}
    Assume a problem instance satisfies \Cref{ass:strongly_convex,ass:smooth_second,ass:smooth_third,ass:stoch_bounded_second_moment,ass:stoch_bounded_fourth_moment,ass:bounded_optima,ass:zeta,ass:tau}. Then, for a suitable choice of step-size $\eta$, Local SGD initialized at $x_0 = 0$ outputs $x_{KR}$ satisfying:
    \begin{align*}
        \ee\sb{\norm{x_{KR}-x^\star}^2} &= \tilde\ooo\Bigg(e^{-KR/2\kappa} B^2 + \frac{Q^2  H^4 \zeta^4}{\mu^6R^4} + \frac{Q^2 \sigma_2^4}{\mu^6K^2R^4} +\frac{Q^2 \sigma_4^4}{\mu^6K^3R^4}+\frac{\tau^2  H^2 \zeta^2}{\mu^4R^2}+\frac{\tau^2 \sigma_2^2}{\mu^4 KR^2}+\frac{\sigma_2^2}{\mu^2 MKR}\Bigg)\enspace.
    \end{align*}
\end{theorem}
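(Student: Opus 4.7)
The plan is to combine the modified one-step progress bound of \Cref{lem:iterate_error_second_recursion_main_body_w_Q}, which couples $\ee\sb{\norm{x_t-x^\star}^2}$ to both the second and the fourth moments of the consensus error, with two \emph{uniform} consensus-error bounds made possible by the stronger \Cref{ass:zeta}. Because \Cref{ass:zeta} controls $\norm{\nabla F_m(x)-\nabla F_n(x)}$ for every $x$, the consensus moments no longer need to be coupled to the iterate error through $\phi_\star$ and the most recent synchronization time as they were in \Cref{lem:consensus_error_second_recursion_main_body}. Each consensus moment can instead be bounded purely in terms of $K,\eta,\zeta,\sigma_2,\sigma_4$, which decouples the system and reduces everything to a single linear recursion in $\ee\sb{\norm{x_t-x^\star}^2}$, removing any need for a lower bound on $R$ of the kind $\tilde\Omega(H\tau/\mu^2)$ that appeared in the weaker heterogeneity regime.

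The first ingredient is the classical second-moment consensus bound \eqref{eq:old_consensus_error_UB}, already available under \Cref{ass:zeta}. The second, and the main new ingredient I would derive, is an analogous fourth-moment bound
\[
\tfrac{1}{M}\sum_{m}\ee\sb{\norm{x_t-x_t^m}^4}= \tilde\ooo\rb{K^4\eta^4H^4\zeta^4 + K^2\eta^4\sigma_2^4 + K\eta^4\sigma_4^4}\enspace.
\]
To prove this I would unroll $x_t^m-x_t$ as $-\eta\sum_{k=\delta(t)}^{t-1}\rb{\nabla F_m(x_k^m)-\tfrac{1}{M}\sum_n\nabla F_n(x_k^n)}-\eta\sum_{k}\rb{\tfrac{1}{M}\sum_n\xi_k^n-\xi_k^m}$, split drift and noise, and bound each in $L^4$. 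Under \Cref{ass:zeta} the drift is at most $K\eta H\zeta$ per machine by the triangle inequality, with a residual $H\norm{x_k^m-x_k^n}$ term absorbed by bootstrapping from the second-moment bound; this yields the $K^4\eta^4H^4\zeta^4$ contribution. The noise is a martingale, and a Rosenthal/BDG-type inequality combined with \Cref{ass:stoch_bounded_second_moment,ass:stoch_bounded_fourth_moment} gives the $K^2\eta^4\sigma_2^4+K\eta^4\sigma_4^4$ piece.

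With both consensus bounds in hand, substituting them into \Cref{lem:iterate_error_second_recursion_main_body_w_Q} yields a linear recursion $a_{t+1}\leq(1-\eta\mu)a_t+b$ whose constant term $b$ collects $\tfrac{\eta^2\sigma_2^2}{M}$, $\tfrac{2\eta\tau^2}{\mu}\cdot(\text{2nd-moment})$, and $\tfrac{2\eta Q^2}{\mu}\cdot(\text{4th-moment})$. Unrolling over $T=KR$ steps divides every $b$-term by $\mu\eta$, and the standard tuned step-size $\eta=\min\cb{\tfrac{1}{2H},\ \tfrac{c\log(\cdot)}{\mu KR}}$ in the style of \citet{stich2019unified} converts the geometric factor into $e^{-KR/(2\kappa)}B^2$ and makes each surviving $1/\eta^{j-1}$ become $(\mu KR)^{j-1}$ up to logs. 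A direct substitution then produces exactly the seven terms claimed: the initialisation term, the $\sigma_2^2/(\mu^2MKR)$ statistical-variance term, the two $\tau^2/\mu^4$ second-moment consensus terms, and the three $Q^2/\mu^6$ fourth-moment consensus terms with the advertised $K$-scalings $R^{-4}$, $K^{-2}R^{-4}$, $K^{-3}R^{-4}$.

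The main obstacle I expect is the derivation of the fourth-moment consensus bound. Unlike the second moment, one cannot simply expand $\ee\norm{\sum_k\xi_k}^4$ and exploit pairwise independence; the cross terms require a genuine martingale moment inequality, and the local iterates $x_k^m$ are correlated through their shared history. The drift decomposition also introduces lower-order $H\norm{x_k^m-x_k^n}$ pieces that must be reabsorbed via the second-moment consensus bound, and verifying that this bootstrap closes without losing a multiplicative factor of $K$ is the delicate step---essentially the heterogeneous analogue of the homogeneous third-order analysis of \citet{yuan2020federated}, which had the luxury of $\zeta=0$ and therefore did not need to bootstrap drift against a consensus bound.
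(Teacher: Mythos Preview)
Your proposal is correct and follows essentially the same high-level structure as the paper: combine the modified one-step lemma (\Cref{lem:iterate_error_second_recursion_main_body_w_Q}) with uniform second- and fourth-moment consensus bounds under \Cref{ass:zeta}, obtain a decoupled linear recursion in $A(t)=\ee\sb{\norm{x_t-x^\star}^2}$, unroll, and tune $\eta=\min\{1/(2H),\,\tfrac{1}{\mu KR}\ln(B^2/\epsilon)\}$.

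The one place where your route diverges from the paper is the derivation of the fourth-moment consensus bound. You propose to unroll $x_t^m-x_t$ directly to a martingale sum and invoke a Rosenthal/BDG inequality for the noise part. The paper (\Cref{lem:cons_error_fourth}) instead works with the pairwise difference $x_t^m-x_t^n$, expands $\ee\sb{\norm{x_t^m-x_t^n}^4}$ term by term (no martingale inequality), and builds a one-step recursion with multiplicative factor $(1+\tfrac{1}{K-1})^3$; the cross terms involving $\ee\sb{\norm{x_{t-1}^m-x_{t-1}^n}^2}$ are then controlled by plugging in the already-proven second-moment bound \eqref{eq:old_consensus_error_UB}---exactly the bootstrap you anticipated. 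Both routes land on the same $\tilde\ooo(K^4\eta^4H^4\zeta^4+K^2\eta^4\sigma_2^4+K\eta^4\sigma_4^4)$ bound, so the difference is purely technical: your approach isolates the noise more cleanly via martingale machinery, while the paper's recursive expansion avoids any appeal to Rosenthal at the cost of slightly messier constants.
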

We prove the above theorem in \Cref{app:zeta_iterate}. Note that, unlike \Cref{thm:UB_convex_w_Q}, the above theorem recovers the homogeneous extreme communication efficiency when $Q,\ \tau=0$. 
\begin{remark}[Low Second-order Heterogeneity]
When only $\tau=0$ we recall that due to \Cref{prop:zeta_bound} we can effectively replace $\zeta$ by $\zeta_\star$. This means we can recover the following convergence guarantee in terms of $\zeta_\star,\ Q$,
\begin{align*}
        \ee\sb{\norm{x_{KR}-x^\star}^2} &= \tilde\ooo\Bigg(e^{-KR/2\kappa} B^2 + \frac{Q^2  H^4 \zeta_\star^4}{\mu^6R^4} + \frac{Q^2 \sigma_2^4}{\mu^6K^2R^4} +\frac{Q^2 \sigma_4^4}{\mu^6K^3R^4}+\frac{\sigma_2^2}{\mu^2 MKR}\Bigg)\enspace.
\end{align*}
It is worth noting that \Cref{thm:UB_convex_w_Q} has many other extra terms in the convergence rate in this regime. This makes us further suspect that  \Cref{thm:UB_convex_w_Q} can be improved further. 
\end{remark}
\begin{remark}[Low Third-order Smoothness]
When only $Q=0$ we can recover the following convergence guarantee in terms of $\zeta,\ \tau$,
\begin{align*}
        \ee\sb{\norm{x_{KR}-x^\star}^2} &= \tilde\ooo\Bigg(\textcolor{blue}{e^{-KR/2\kappa} B^2} + \frac{\tau^2  H^2 \zeta^2}{\mu^4R^2}+\textcolor{blue}{\frac{\tau^2 \sigma_2^2}{\mu^4 KR^2}}+\textcolor{blue}{\frac{\sigma_2^2}{\mu^2 MKR}}\Bigg)\enspace.
    \end{align*}
We can compare the convergence guarantee to \Cref{rem:no_phi_star_UB_Sconvex_quad} for quadratic functions. We note that the \textcolor{blue}{blue} terms in the rate above are shared with \Cref{rem:no_phi_star_UB_Sconvex_quad}. Furthermore, replacing $\zeta$ by $\zeta_\star$ matches the fourth term in \Cref{rem:no_phi_star_UB_Sconvex_quad}. Based on this, we conjecture that the convergence rate in \Cref{rem:no_phi_star_UB_Sconvex_quad} (and we suspect also in \Cref{thm:UB_Sconvex_quad}) is nearly tight. 
\end{remark}
\begin{remark}[High Kurtosis Distributions]
    Across all the upper bounds above we note that the terms with $\sigma_4$ usually decay much faster in $R$ or $K$ or both, which highlights the advantage of differentiating between the second and fourth moments of noise in \Cref{ass:stoch_bounded_second_moment,ass:stoch_bounded_fourth_moment}. In particular, in \Cref{thm:UB_Sconvex_w_Q_zeta} we notice that increasing the local updates $K$ reduces the fourth moment term much more, implying that local updates can be beneficial for fat-tailed distributions where $\sigma_4 >> \sigma_2$. To the best of our knowledge, this benefit of local updates has not been previously highlighted (cf. the rates due to \citet{yuan2020federated}). 
\end{remark}

\section{Convex Setting}\label{sec:ch5.3}

To obtain convergence guarantees in the convex setting under \Cref{ass:zeta_star,ass:phi_star,ass:tau}, one natural approach is to begin with \Cref{thm:UB_function_wo_Q} and apply a convex-to-strongly-convex reduction via regularization. However, this strategy imposes overly stringent constraints on the heterogeneity constants and the number of communication rounds, which cannot be simultaneously satisfied. Similarly, deriving a function-value analogue of \Cref{thm:UB_Sconvex_w_Q} proves challenging due to the presence of multiple coupled recursions. While we suspect that the techniques from the strongly convex setting could be extended to address this case, we leave such an investigation to future work. Instead, in this section, we present results under the more restrictive \Cref{ass:zeta} where we can give uniform upper bounds on consensus error terms.

Our proof proceeds in two steps: (i) we first establish the function-value analogue of \Cref{thm:UB_Sconvex_w_Q_zeta}; and then (ii) we apply a convex-to-strongly-convex reduction by running Local SGD on a suitably regularized objective. The proof of the strongly convex guarantee in (i) mirrors the structure of the analysis in the previous section. We begin by proving a one-step lemma---analogous to \Cref{lem:iterate_error_second_recursion_main_body_w_Q}---for function sub-optimality that accounts for both $\tau$ and $Q$. We then use the basic consensus error bound from \eqref{eq:old_consensus_error_UB}, together with a new bound on the fourth moment of the consensus error in terms of $\zeta$, to complete the argument.

Bypassing the reduction to strongly convex optimization remains an open question. In particular, it would require establishing a one-step lemma for general convex functions that still incorporates the higher-order terms $Q$ and $\tau$---a direction that we leave for future work.

Our one-step recursion lemma in the strongly convex setting (proved in \Cref{app:function_value}) is stated below.

\begin{lemma}\label{lem:function_error_hard_main}
    Assume the problem instance satisfies \Cref{ass:strongly_convex,ass:smooth_second,ass:smooth_third,ass:stoch_bounded_second_moment,ass:tau}. Then, for step-size $\eta < \frac{1}{H}$ and all $t \in [0, T-1]$, the iterates of Local SGD satisfy (for some $x^\star \in S^\star$):
    \begin{align*}
        \ee\sb{F(x_t)} - F(x^\star) &\leq \left(\frac{1}{\eta} - \frac{\mu}{2}\right)\ee\sb{\norm{x_t - x^\star}^2} - \frac{1}{\eta}\ee\sb{\norm{x_{t+1} - x^\star}^2} + \frac{\eta \sigma_2^2}{M} \\
        &\quad + \frac{8\tau^2}{\mu}\cdot \textcolor{blue}{\frac{1}{M} \sum_{m \in [M]} \ee\sb{\norm{x_t - x_t^m}^2}} + \frac{2Q^2}{\mu} \cdot \textcolor{red}{\frac{1}{M} \sum_{m \in [M]} \ee\sb{\norm{x_t - x_t^m}^4}}\enspace.
    \end{align*}
\end{lemma}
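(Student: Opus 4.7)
The plan is to combine a standard descent-lemma expansion with a careful Taylor argument that isolates the role of $\tau$ and $Q$. The essential structural fact I will exploit throughout is that the averaged iterate $x_t \eqdef \frac{1}{M}\sum_{m\in[M]} x_t^m$ evolves identically whether or not a synchronization happens, namely $x_{t+1} = x_t - \eta \bar g_t$ with $\bar g_t \eqdef \frac{1}{M}\sum_m \nabla f(x_t^m; z_t^m)$, and that consequently $\frac{1}{M}\sum_m (x_t^m - x_t) = 0$ at every step.

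First I would expand $\norm{x_{t+1}-x^\star}^2 = \norm{x_t-x^\star}^2 - 2\eta\inner{\bar g_t}{x_t - x^\star} + \eta^2\norm{\bar g_t}^2$, take the conditional expectation, and use independence of the per-machine noises together with \Cref{ass:stoch_bounded_second_moment} to replace the stochastic gradient squared norm by $\norm{\bar g_t^{\text{det}}}^2 + \sigma_2^2/M$, where $\bar g_t^{\text{det}} \eqdef \frac{1}{M}\sum_m \nabla F_m(x_t^m)$. For the cross term I would split $\bar g_t^{\text{det}} = \nabla F(x_t) + (\bar g_t^{\text{det}} - \nabla F(x_t))$, use $\mu$-strong convexity on $\inner{\nabla F(x_t)}{x_t - x^\star}$, and apply Young's inequality on the residual with weights tuned so that half of the $\tfrac{\mu}{2}\norm{x_t-x^\star}^2$ is preserved, leaving a leading coefficient $(1-\tfrac{\eta\mu}{2})$ on $\norm{x_t-x^\star}^2$ and a $\tfrac{2\eta}{\mu}\norm{\bar g_t^{\text{det}} - \nabla F(x_t)}^2$ penalty.

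Next I would dispose of the $\eta^2\norm{\bar g_t^{\text{det}}}^2$ term by splitting $\norm{\bar g_t^{\text{det}}}^2 \leq 2\norm{\nabla F(x_t)}^2 + 2\norm{\bar g_t^{\text{det}} - \nabla F(x_t)}^2$ and invoking the self-bounding property from \Cref{rem:self_bounding}, which gives $\norm{\nabla F(x_t)}^2 \leq 2H(F(x_t)-F(x^\star))$. With the step-size restriction $\eta \leq \tfrac{1}{4H}$ this $O(\eta^2 H)$ term is absorbed into the $-2\eta[F(x_t) - F(x^\star)]$ produced by strong convexity, leaving a net coefficient of at least $-\eta$ in front of the function sub-optimality. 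Dividing through by $\eta$ would then produce the template $F(x_t) - F(x^\star) \leq (\tfrac{1}{\eta} - \tfrac{\mu}{2})\norm{x_t-x^\star}^2 - \tfrac{1}{\eta}\norm{x_{t+1}-x^\star}^2 + \tfrac{4}{\mu}\norm{\bar g_t^{\text{det}} - \nabla F(x_t)}^2 + \tfrac{\eta\sigma_2^2}{M}$.

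The main obstacle, and the step that determines the coefficients $8\tau^2/\mu$ and $2Q^2/\mu$ in the final statement, is converting the residual $\norm{\bar g_t^{\text{det}} - \nabla F(x_t)}^2$ into the consensus-error expressions. For this I would Taylor-expand each $\nabla F_m(x_t^m)$ around $x_t$ to second order, writing $\nabla F_m(x_t^m) = \nabla F_m(x_t) + \nabla^2 F_m(x_t)(x_t^m - x_t) + R_m$ with $\norm{R_m} \leq (Q/2)\norm{x_t^m - x_t}^2$ by \Cref{ass:smooth_third}. Averaging over $m$ and crucially using $\frac{1}{M}\sum_m(x_t^m - x_t) = 0$ lets me subtract $\nabla^2 F(x_t)\cdot \frac{1}{M}\sum_m(x_t^m - x_t) = 0$ from the linear term, reducing it to $\frac{1}{M}\sum_m[\nabla^2 F_m(x_t) - \nabla^2 F(x_t)](x_t^m - x_t)$ whose operator factor is bounded by $\tau$ via \Cref{ass:tau}. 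A final application of $(a+b)^2 \leq 2a^2+2b^2$ and Jensen's inequality then produces $\norm{\bar g_t^{\text{det}} - \nabla F(x_t)}^2 \leq 2\tau^2 \cdot \frac{1}{M}\sum_m \norm{x_t^m - x_t}^2 + \frac{Q^2}{2}\cdot \frac{1}{M}\sum_m \norm{x_t^m - x_t}^4$. Plugging in and taking a full expectation yields the claimed bound. The subtle point is precisely this cancellation: without it the linear term would force the coefficient of the second-moment consensus error to scale like $H^2/\mu$ rather than $\tau^2/\mu$, which would defeat the entire purpose of the $\tau$-dependent analysis downstream.
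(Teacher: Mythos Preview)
Your proposal is correct and follows essentially the same approach as the paper's proof (given in the appendix as \Cref{lem:function_error_hard}). Both arguments expand $\norm{x_{t+1}-x^\star}^2$, isolate the deterministic gradient $\bar g_t^{\text{det}}$ via the noise-variance bound, use strong convexity plus the self-bounding property to produce the $(\tfrac{1}{\eta}-\tfrac{\mu}{2})$ and $-\tfrac{1}{\eta}$ coefficients, and then control the residual $\norm{\bar g_t^{\text{det}}-\nabla F(x_t)}^2$ by exploiting the zero-mean identity $\tfrac{1}{M}\sum_m(x_t^m-x_t)=0$ to replace the full Hessian by the Hessian \emph{difference} $\nabla^2 F_m-\nabla^2 F$, which is what yields the $\tau$-dependence. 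The paper phrases this last step as add--subtract $\nabla F(x_t^m)$ and then separately applies $\tau$-smoothness of $F_m-F$ and $Q$-smoothness of $F$, whereas you Taylor-expand each $\nabla F_m$ directly and subtract $\nabla^2 F(x_t)\cdot 0$; these are algebraically equivalent and produce constants in the same range. One small note: your absorption step requires $\eta\le \tfrac{1}{4H}$ (the paper's version needs $\eta\le\tfrac{1}{2H}$), which is slightly stronger than the $\eta<\tfrac{1}{H}$ stated in the lemma; this is harmless for the downstream use but worth flagging.
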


This recursion simultaneously tracks both function sub-optimality and iterate error. As a result, a careful telescoping argument is required to cancel out the iterate error terms. Similar to \Cref{lem:iterate_error_second_recursion_main_body_w_Q}, the recursion features two types of consensus error: the \textcolor{blue}{second moment} and the \textcolor{red}{fourth moment}. These arise from incorporating both \Cref{ass:smooth_third,ass:tau} into the analysis.

We prove the following upper bound on the fourth moment of the consensus error (see \Cref{lem:cons_error_fourth})\footnote{Compared to \eqref{eq:old_consensus_error_UB}, the fourth moment bound depends differently on $\sigma_2$ and $\sigma_4$. In particular, since $\sigma_4$ can be larger than $\sigma_2$ in general, the additional $K$ factor in the third term of \eqref{eq:consensus_fourth_specific} may be non-negligible. However, if $\sigma_2$ and $\sigma_4$ are of similar magnitude, then \eqref{eq:consensus_fourth_specific} implies \eqref{eq:old_consensus_error_UB} up to constant factors.}:

\begin{equation}\label{eq:consensus_fourth_specific}
    \textcolor{red}{\frac{1}{MT} \sum_{m \in [M],\ t \in [0, T-1]} \ee\sb{\norm{x_t - x_t^m}^4}} \leq 2620\eta^4 K^4 H^4 \zeta^4 + 5000\eta^4 K^2 \sigma_2^4 + 320\eta^4 \sigma_4^4 K\enspace.
\end{equation}

Combining \Cref{lem:function_error_hard_main} with the consensus error bounds from \eqref{eq:old_consensus_error_UB} and \eqref{eq:consensus_fourth_specific} yields the following result (proved in \Cref{app:zeta_func}), which serves as the function-value analogue of \Cref{thm:UB_Sconvex_w_Q_zeta}.

\begin{theorem}[Informal, Function Sub-optimality with $Q,\ \zeta,\ \tau$]\label{thm:UB_Sconvex_func_w_Q_zeta}
    Assume the problem instance satisfies \Cref{ass:strongly_convex,ass:smooth_second,ass:smooth_third,ass:stoch_bounded_second_moment,ass:stoch_bounded_fourth_moment,ass:bounded_optima,ass:zeta,ass:tau}. Then, for a suitable choice of step-size $\eta$, a weighted Local SGD iterate $\hat{x}$ with initialization $x_0 = 0$ satisfies for large enough $R$ and $K$\footnote{For precise constraints on $R$ and $K$, see the full statement in \Cref{app:zeta_func}.}:
    \begin{align*}
        \ee\sb{F(\hat{x})} - F(x^\star) &= \tilde{\mathcal{O}}\Bigg(
            \mu B^2 e^{-KR / 4\kappa}
            + \frac{Q^2 H^4 \zeta^4}{\mu^5 R^4}
            + \frac{Q^2 \sigma_2^4}{\mu^5 K^2 R^4}
            + \frac{Q^2 \sigma_4^4}{\mu^5 K^3 R^4}
            + \frac{\tau^2 H^2 \zeta^2}{\mu^3 R^2}
            + \frac{\tau^2 \sigma_2^2}{\mu^3 K R^2}\\
            &\qquad\qquad\qquad + \frac{\sigma_2^2}{\mu M K R}
        \Bigg)\enspace.
    \end{align*}
\end{theorem}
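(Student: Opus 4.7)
The plan is to combine the one-step recursion in \Cref{lem:function_error_hard_main}, which mixes function sub-optimality with iterate error and both the second- and fourth-moment consensus errors, with the uniform consensus error bounds in \eqref{eq:old_consensus_error_UB} and \eqref{eq:consensus_fourth_specific}. Because those uniform bounds only hold under \Cref{ass:zeta}, we will work in that regime and not in the relaxed setting of \Cref{ass:zeta_star,ass:phi_star}, which would require solving coupled recursions as in \Cref{thm:UB_Sconvex_w_Q}. Substituting the consensus bounds into \Cref{lem:function_error_hard_main} yields, at every $t\in[0,T-1]$, an inequality of the schematic form
\begin{align*}
\ee\sb{F(x_t)}-F(x^\star) &\leq \rb{\tfrac{1}{\eta}-\tfrac{\mu}{2}}\ee\sb{\norm{x_t-x^\star}^2} - \tfrac{1}{\eta}\ee\sb{\norm{x_{t+1}-x^\star}^2} + \tfrac{\eta\sigma_2^2}{M}\\
&\quad + \tfrac{\tau^2}{\mu}\rb{\eta^2 K^2 H^2\zeta^2 + \eta^2 K\sigma_2^2} + \tfrac{Q^2}{\mu}\rb{\eta^4 K^4 H^4\zeta^4 + \eta^4 K^2\sigma_2^4 + \eta^4 K\sigma_4^4}\enspace.
\end{align*}

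Next, I would telescope this inequality using a geometrically weighted sum of the $T=KR$ steps, with weights $w_t\propto(1-\eta\mu/2)^{-t}$, so that the $\rb{\tfrac{1}{\eta}-\tfrac{\mu}{2}}\ee\sb{\norm{x_t-x^\star}^2}$ and $-\tfrac{1}{\eta}\ee\sb{\norm{x_{t+1}-x^\star}^2}$ cancel telescopically and the iterate error at $t=0$ contributes the contraction term $\tfrac{1}{\eta}B^2(1-\eta\mu/2)^T$. Setting $\hat x$ to be the corresponding weighted average of $\{x_t\}_{t=0}^{T-1}$ and applying Jensen's inequality to the convex function $F$ gives a bound of the form
\begin{align*}
\ee\sb{F(\hat x)}-F(x^\star) = \tilde\ooo\rb{\tfrac{B^2}{\eta}e^{-\eta\mu T/2} + \tfrac{\eta\sigma_2^2}{M} + \tfrac{\tau^2\eta^2 K^2 H^2\zeta^2 + \tau^2\eta^2 K\sigma_2^2}{\mu} + \tfrac{Q^2\eta^4 K^4 H^4\zeta^4 + Q^2\eta^4 K^2\sigma_2^4 + Q^2\eta^4 K\sigma_4^4}{\mu}}\enspace.
\end{align*}

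Finally, I would tune $\eta$ by balancing the contraction term against each of the remaining terms separately, taking the minimum over the resulting candidate step-sizes (as in the unified analysis of \citet{stich2019unified}). Each choice of $\eta\leq 1/H$ produces one of the seven terms in the stated bound, up to logarithmic factors: the exponential contraction term, the statistical $\sigma_2^2/(\mu MKR)$ term from variance, two terms of order $\tau^2 H^2\zeta^2/(\mu^3 R^2)$ and $\tau^2\sigma_2^2/(\mu^3 KR^2)$ from the $\tau$-dependent consensus error, and three terms of order $Q^2 H^4\zeta^4/(\mu^5 R^4)$, $Q^2\sigma_2^4/(\mu^5 K^2 R^4)$ and $Q^2\sigma_4^4/(\mu^5 K^3 R^4)$ from the $Q$-dependent fourth-moment consensus error. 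The constraints on $R$ and $K$ asserted in the theorem arise precisely from requiring $\eta\leq 1/H$ in each of these balancing choices.

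The main technical obstacle is the telescoping step: the one-step lemma from \Cref{lem:function_error_hard_main} introduces iterate errors with coefficient $\tfrac{1}{\eta}-\tfrac{\mu}{2}$ on the left and $\tfrac{1}{\eta}$ on the right, so the natural weights must be $(1-\eta\mu/2)^{-t}$ rather than the simpler $(1-\eta\mu)^{-t}$; keeping track of this extra factor of $2$ when extracting the final exponential contraction rate is delicate. A secondary subtlety is that the fourth-moment consensus bound in \eqref{eq:consensus_fourth_specific} is an average over time rather than a pointwise bound; this is compatible with the weighted-sum telescoping only because the weights $w_t$ are bounded within a constant factor over the last geometric horizon of steps, absorbing the discrepancy into the $\tilde\ooo$.
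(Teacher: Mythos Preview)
Your proposal is correct and follows essentially the same approach as the paper: plug the uniform consensus bounds into \Cref{lem:function_error_hard_main}, telescope with weights $w_t=(1-\eta\mu/2)^{T-1-t}$ (equivalent to your $w_t\propto(1-\eta\mu/2)^{-t}$), apply Jensen, and tune $\eta=\min\{1/(2H),\,\tfrac{2}{\mu KR}\ln(\mu B^2/\epsilon)\}$ with $\epsilon$ chosen as the dominant non-exponential term.

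One small correction: your ``secondary subtlety'' is not actually an issue. While the display \eqref{eq:consensus_fourth_specific} in the main text is written as a time average, the underlying lemma (\Cref{lem:cons_error_fourth}) gives the bound $D(t)\leq 2620\eta^4K^4H^4\zeta^4+5000\eta^4K^2\sigma_2^4+320\eta^4\sigma_4^4K$ pointwise for every $t$, so it plugs directly into the weighted sum without any argument about weights being comparable over a geometric window. Also, the constraints on $R$ and $K$ in the paper arise not from enforcing $\eta\leq 1/H$ in each balancing choice, but from ensuring $(1-\eta\mu/2)^{T}\leq 1/2$ so that the normalizing sum $W$ in the denominator is bounded below; this yields $KR\geq 4\kappa\ln 2$ and $\epsilon\leq \mu B^2/2$.
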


We do not repeat the full discussion of the convergence behavior, as it closely mirrors that of \Cref{thm:UB_Sconvex_w_Q_zeta}. One notable distinction, however, is that the guarantee here is for a \textit{weighted average} of the iterates across all machines---not the final iterate. Such averaging is standard in convex optimization~\citep{lacoste2012simpler}. Establishing similarly strong guarantees for the final iterate, even in the convex setting, remains an active area of research~\citep{liu2023revisiting}. Addressing this is beyond the scope of this thesis.

We now apply Local SGD to the regularized objective on each machine $m\in[M]$: 
\(
F_{m,\mu}(x) := F_m(x) + \frac{\mu}{2}\norm{x}^2
\).
Using \Cref{thm:UB_Sconvex_func_w_Q_zeta}, we obtain a convergence guarantee for the regularized average objective $F_\mu(x) := \frac{1}{M} \sum_{m \in [M]} F_{m,\mu}(x)$. To translate this guarantee into one for the original (unregularized) objective $F(x)$, we invoke the following standard inequality (proved in \Cref{app:zeta_func}):
\[
F(\hat{x}) - F(x^\star) \leq F_\mu(\hat{x}) - \min_{x_\mu^\star \in \rr^d} F_\mu(x_\mu^\star) + \frac{\mu}{2} \norm{x^\star}^2\enspace,
\]
where $x^\star \in S^\star$ denotes an optimum of the original objective $F$.

The regularization strength $\mu$ presents a trade-off: increasing $\mu$ improves the conditioning of $F_\mu$ and accelerates convergence, but also worsens the approximation error due to the $\frac{\mu}{2} \norm{x^\star}^2$ term. To obtain the final guarantee, we optimize this trade-off by carefully tuning $\mu$, which leads to the following result.

\begin{theorem}[Informal, Function Sub-optimality with $Q$ in the Convex Setting]\label{thm:UB_convex_w_Q}
    Assume the problem instance satisfies \Cref{ass:convex,ass:smooth_second,ass:smooth_third,ass:stoch_bounded_second_moment,ass:stoch_bounded_fourth_moment,ass:bounded_optima,ass:tau,ass:zeta}. Then, for a suitable step-size $\eta$, an appropriate regularization strength $\mu$, and a weighted Local SGD iterate $\hat{x}$ initialized at $x_0 = 0$, we have (for some $x^\star \in S^\star$):
    \begin{align*}
        \ee\sb{F(\hat{x})} - F(x^\star) = \tilde{\mathcal{O}}\Bigg(&\frac{H B^2}{K R} + \frac{\tau^{1/2} H^{1/2} \zeta^{1/2} B^{3/2}}{R^{1/2}} + \frac{\tau^{1/2} \sigma_2^{1/2} B^{3/2}}{K^{1/4} R^{1/2}} + \frac{Q^{1/3} B^{5/3} H^{2/3} \zeta^{2/3}}{R^{2/3}} \\
        &+ \frac{Q^{1/3} B^{5/3} \sigma_2^{2/3}}{K^{1/3} R^{2/3}} + \frac{Q^{1/3} B^{5/3} \sigma_4^{2/3}}{K^{1/2} R^{2/3}} + \frac{\sigma_2 B}{\sqrt{M K R}} \Bigg)\enspace,
    \end{align*}
    provided that
    \[
    R = \Omega\left(
        \frac{1}{K} 
        + \frac{\sigma_2^2}{H^2 B^2 M K} 
        + \frac{\tau \zeta}{B H} 
        + \frac{Q \zeta^2}{H B}
        + \frac{\tau \sigma_2}{H^2 B \sqrt{K}} 
        + \frac{\sigma_2 Q^{1/2}}{H^{3/2} \sqrt{B K}} 
        + \frac{Q^{1/2} \sigma_4}{H^{3/2} \sqrt{B} K^{3/4}}
    \right)\enspace.
    \]
\end{theorem}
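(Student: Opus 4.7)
The plan is to derive the convex guarantee via a regularization reduction, applying \Cref{thm:UB_Sconvex_func_w_Q_zeta} to the regularized per-client objectives $F_{m,\mu}(x) := F_m(x) + \frac{\mu}{2}\norm{x}^2$ and then tuning $\mu$ to balance the regularization bias against the convergence rate. First I would verify that adding a common quadratic regularizer preserves all the relevant structure: each $F_{m,\mu}$ is $\mu$-strongly convex and $(H+\mu)$-smooth, while $Q$, $\tau$, $\zeta$, $\sigma_2$, and $\sigma_4$ are all unchanged because the regularizer is deterministic, quadratic, and identical across clients; a standard comparison argument also gives $\norm{x_\mu^\star} \leq \norm{x^\star} \leq B$ for the minimizer $x_\mu^\star$ of $F_\mu := \frac{1}{M}\sum_m F_{m,\mu}$. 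Combining this with the regularization inequality stated earlier,
\[
F(\hat x) - F(x^\star) \leq F_\mu(\hat x) - F_\mu(x_\mu^\star) + \frac{\mu B^2}{2}\enspace,
\]
and invoking \Cref{thm:UB_Sconvex_func_w_Q_zeta} on $F_\mu$ with condition number $\kappa = O(H/\mu)$ reduces the task to choosing $\mu$ that minimizes an expression of the form $\mu B^2 e^{-\mu K R / 4H} + \sum_i T_i/\mu^{\alpha_i} + \mu B^2$, where each $T_i/\mu^{\alpha_i}$ is one of the six non-exponential terms in the strongly convex rate.

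Second, for each $i$ I would compute the balancing value $\mu_i$ solving $T_i/\mu_i^{\alpha_i} = \mu_i B^2$. For instance, balancing $Q^2 H^4 \zeta^4/(\mu^5 R^4)$ gives $\mu_1 = Q^{1/3} H^{2/3} \zeta^{2/3}/(B^{1/3} R^{2/3})$ and contributes $Q^{1/3} H^{2/3} \zeta^{2/3} B^{5/3}/R^{2/3}$; balancing $\sigma_2^2/(\mu M K R)$ gives $\mu_6 = \sigma_2/(B\sqrt{MKR})$ and contributes $\sigma_2 B/\sqrt{MKR}$. The analogous balances for $Q^2 \sigma_2^4/(\mu^5 K^2 R^4)$, $Q^2 \sigma_4^4/(\mu^5 K^3 R^4)$, $\tau^2 H^2 \zeta^2/(\mu^3 R^2)$, and $\tau^2 \sigma_2^2/(\mu^3 K R^2)$ produce the remaining four terms in the claimed rate. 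To additionally neutralize the exponential factor, I impose a logarithmic floor $\mu \geq c H \log(KR)/(KR)$, which makes $\mu B^2 e^{-\mu K R / 4H}$ at most $\tilde{\mathcal{O}}(H B^2/(KR)^2)$ while contributing only $\tilde{\mathcal{O}}(HB^2/KR)$ through the bias term $\mu B^2$---this is the origin of the leading $HB^2/(KR)$ term in the stated rate. I would then set $\mu = \max\{c H \log(KR)/(KR),\mu_1,\ldots,\mu_6\}$ and use the monotonicity (in $\mu$) of each term in the strongly convex bound to conclude that every term is dominated by its balanced contribution, whose sum is exactly the claimed rate.

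The main obstacle will be translating the $R$-condition of \Cref{thm:UB_Sconvex_func_w_Q_zeta}---which depends implicitly on $\mu$ and $\kappa = O(H/\mu)$---into the explicit lower bound on $R$ stated in \Cref{thm:UB_convex_w_Q}, since the chosen $\mu$ itself depends on $R$. I would handle this by substituting each $\mu_i$ and the logarithmic floor back into the strongly convex precondition and simplifying term by term: each of the seven summands in the stated lower bound on $R$ emerges from one such substitution (e.g.\ $Q\zeta^2/(HB)$ arises from $\mu_1$, $\tau\zeta/(HB)$ from $\mu_4$, $\sigma_2^2/(H^2 B^2 M K)$ from $\mu_6$, and $1/K$ from the logarithmic floor). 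A secondary sanity check is that each $\mu_i$ stays below $H$ in the stated $R$-regime so that the approximation $(H+\mu) = O(H)$ used throughout is valid; this follows from the same term-by-term inspection, since the condition on $R$ precisely ensures $\mu_i \lesssim H$ for every $i$.
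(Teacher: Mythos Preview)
Your proposal is correct and follows the same regularization-reduction strategy as the paper. The only notable difference is the order of parameter tuning: you apply the already-$\eta$-tuned \Cref{thm:UB_Sconvex_func_w_Q_zeta} and then balance $\mu$ term by term, whereas the paper's detailed proof starts from the pre-$\eta$-tuned bound $2HB^2 e^{-\eta\mu T/2} + \Phi'/\mu + \eta\sigma_2^2/M$, first chooses $\mu = \max\{\mu_1(\eta),\mu_2(\eta)\}$ (with $\mu_1 = \tfrac{2}{\eta KR}\ln(2\eta HKR)$ to kill the exponential and $\mu_2 = \sqrt{2\Phi'/B^2}$ to balance the $\Phi'$-term against the bias), simplifies to a bound purely in $\eta$, and then tunes $\eta$ over seven candidate values---the $R$-condition there emerges from the technical requirement $\eta \geq 2/(HKR)$ at each candidate step-size rather than from $\mu_i \lesssim H$. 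Both orderings are equivalent up to logarithmic factors and produce the same final rate; your single-layer balancing is conceptually cleaner, while the paper's two-stage tuning makes the provenance of each constraint on $R$ a bit more transparent.
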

\begin{remark}[Extreme Communication Efficiency]
    Observe that in the quadratic homogeneous setting, i.e., when $\tau = \zeta = Q = 0$, our upper bound recovers extreme communication efficiency, consistent with the intuition provided by the lower bound. More generally, in the regime when $K$ is large enough we can write the communication complexity in terms of the target accuracy $\epsilon$,
    \begin{align*}
        R = \tilde\ooo\rb{\frac{\tau \zeta}{HB} + \frac{Q\zeta^2}{HB} + \frac{\tau H\zeta B^3}{\epsilon^2} + \frac{Q^{1/2}B^{5/2}H\zeta}{\epsilon^{3/2}}}\enspace.
    \end{align*}
    Notably the communication complexity improves with both smaller second-order heterogeneity and third-order smoothness. Unfortunately, since we rely on \Cref{ass:zeta}, the upper bound can not expose the dependence on $\phi_\star$ and $\zeta_\star$. Having said that when $\tau=0$, we can replace $\zeta$ by $\zeta_\star$ in the above communication complexity due to \Cref{prop:zeta_bound}.
\end{remark}

\begin{remark}[Comparison to Existing Results]
     In the homogeneous setting when $\sigma_2 = \sigma_4$ our rate recovers the upper bound of \citet{yuan2020federated}, which also incorporates third-order smoothness $Q$. Although we lack a matching lower bound, we suspect this rate is tight in the homogeneous regime. 
     
     We can also compare our result with the upper bound of \citet{woodworth2020minibatch}, which does not account for dependence on $\tau$ or $Q$ but depends on $\zeta$. To facilitate this comparison, it is helpful to interpret $\zeta$ not just as the constant from \Cref{ass:zeta}, but as a measure of actual gradient heterogeneity across clients. We then compare the required upper bounds on $\zeta$ for achieving a target suboptimality $\epsilon$. Assuming $K$ is large enough to ignore terms involving $1/K$, the bounds become:
    \begin{align*}
        \zeta_{\text{old}} &= \ooo\left(\frac{\epsilon^{3/2} R}{H^{3/2} B^2}\right)
        \qquad \text{vs} \qquad
        \zeta_{\text{ours}} = \ooo\left(\min\left\{
        \frac{\epsilon^{3/2} R}{(Q B)^{1/2} HB^2},\;
        \frac{\epsilon^2 R}{\tau H B^3}
        \right\}\right)\enspace.
    \end{align*}
    In the regime where $Q$ and $\tau$ are small, our requirements on $\zeta$—that is, the gradient heterogeneity—are significantly less stringent.
\end{remark}

While we do not fully resolve the min-max complexity of Local SGD under \Cref{ass:convex,ass:smooth_second,ass:smooth_third,ass:stoch_bounded_second_moment,ass:stoch_bounded_fourth_moment,ass:zeta_star,ass:phi_star,ass:tau}, the results in this chapter represent tangible progress toward understanding Local SGD's convergence behavior under higher-order assumptions. We hope these insights will guide future investigations.

\section{Empirical Study: Distributed Linear Regression}\label{sec:ch5.4}

\begin{figure}[H]
  \centering
  \begin{subfigure}[t]{0.48\textwidth}
    \centering
    \includegraphics[width=\linewidth]{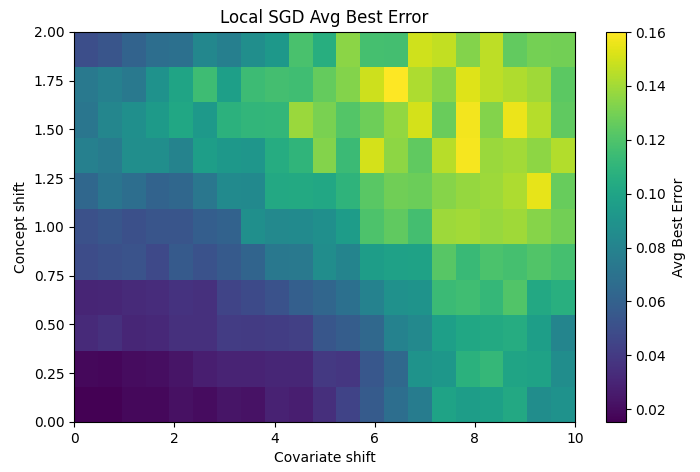}
    \caption{
      Heatmap of the average best final $\ell_2$ error of Local SGD after $R=5$ communication rounds as a function of covariate shift $\tau$ (horizontal axis) and concept shift $\zeta_\star$ (vertical axis).   
    }
    \label{fig:heatmap}
  \end{subfigure}
  \quad
  \begin{subfigure}[t]{0.48\textwidth}
    \centering
    \includegraphics[width=\linewidth]{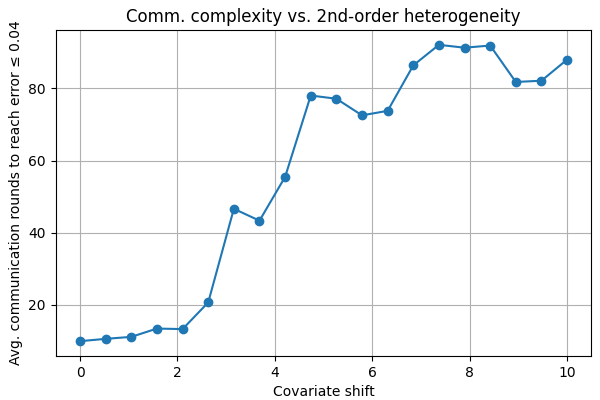}
    \caption{
      Communication complexity of Local SGD versus covariate shift $\tau$, for a fixed concept shift $\zeta_\star=1.0$ to reach an $\ell_2$ error $\le0.04$. We allow up to $R_{\max}=100$ rounds and plot the mean number of rounds to target.
    }
    \label{fig:comm_rounds}
  \end{subfigure}
  \caption{\textbf{Impact of First- and Second-Order Heterogeneity on Local SGD.} In both figures, we use $d = 5$, $M = 20$ clients, $K = 10$ local steps, and a noise level of $\sigma_{noise} = 0.1$. The step-size is tuned over a logarithmic grid in $[10^{-3}, 10^{-1}]$, and the error is averaged over multiple trials. For (a), we report the mean error over $n_{runs} = 20$ trials for each $(\tau, \zeta_\star)$ pair, tuning the step-size separately in each trial. Similarly, in (b), we average over $n_{runs} = 20$ trials for each $\tau$, again tuning the step-size independently per trial. We discuss in \Cref{app:experiments} how to interpret the numerical values of $\tau$, $\zeta_\star$ in our plots' axes.}
  \label{fig:combined_local_sgd}
\end{figure}

We consider a linear regression task, where for each client $m \in [M]$, the data consists of covariate-label pairs $z_m := (\beta_m, y_m) \sim \mathcal{D}_m$ with Gaussian covariates $\beta_m \sim \mathcal{N}(\mu_m, I_d) \in \rr^d$ and labels $y_m \sim \inner{x_m^\star}{\beta_m} + \mathcal{N}(0, \sigma_{\text{noise}}^2)$  generated using a ground truth model $x_m^\star \in \rr^d$. Each client minimizes the mean squared error, $f(x; (\beta_m, y_m)) = \frac{1}{2}(y_m - \inner{x}{\beta_m})^2$ leading to an expected loss:
    \[
    F_m(x) = \frac{1}{2}(x - x_m^\star)^\top (\mu_m \mu_m^\top + I_d)(x - x_m^\star) + \frac{1}{2} \sigma_{\text{noise}}^2 \enspace.
    \]
    Under suitable bounds on $\mu_m$, $\Sigma_m$, and $\sigma_{\text{noise}}$, this problem satisfies \Cref{ass:strongly_convex,ass:smooth_second,ass:stoch_bounded_second_moment,ass:bounded_optima} for bounded $x$. Furthermore, we have $\norm{\nabla^2 F_m(x) - \nabla^2 F_n(x)} \leq (\norm{\mu_m} + \norm{\mu_n}) \cdot \norm{\mu_m - \mu_n}$ for any $m,n\in[M]$. So \Cref{ass:tau} quantifies the \textbf{covariate shift} across clients. Meanwhile, \Cref{ass:zeta_star} reflects the \textbf{concept shift} via the bound $\norm{x_m^\star - x_n^\star} \leq \zeta_\star$. 

    In \Cref{fig:combined_local_sgd}, we examine the convergence behavior of Local SGD on the synthetic linear regression task. In \Cref{fig:heatmap}, we decouple first- and second-order heterogeneity by independently varying the means $\mu_m$ and the ground truths $x_m^\star$. We observe that Local SGD performs well only when both types of heterogeneity are small. This highlights why earlier works that did not account for second-order heterogeneity (\Cref{ass:tau}) were unable to explain the effectiveness of Local SGD fully. In \Cref{fig:comm_rounds}, we fix the first-order heterogeneity and plot the communication complexity required to reach a target accuracy as a function of $\tau$. As expected, we find a monotonic relationship, further reinforcing the connection between second-order heterogeneity and the communication efficiency of Local SGD. 
    
    Importantly, when varying the heterogeneity, we ensure we do not inadvertently make the individual optimization problems harder, for example, by increasing the condition number $\kappa$ or the radius $B$. In \Cref{app:experiments}, we describe how we control for this and include additional experiments.

\textbf{Practical Implications for Federated Learning.}  
Our results highlight that the performance of Local SGD depends critically on the structure of data heterogeneity. In practice, this suggests distinguishing between heterogeneity in optimal predictors (first-order, measured by $\zeta_\star$ and $\phi_\star$) and curvature or feature distributions (second-order, measured by $\tau$). For example, $\zeta_\star$ may be small for learning in overparameterized settings while $\tau$ remains significant. Large local steps ($K$) can still yield good performance and communication savings in such cases. But when $\tau$ is very large, aggressive local updates with a fixed step-size can cause instability. We recommend tuning $\eta$ as a function of $K$ and using diagnostic signals---such as consensus error growth or curvature estimates---to adjust training parameters. Estimating $\tau$ from local and running statistics could help guide such choices in practice.

\chapter{Local Update Algorithms for Non-Convex Functions}\label{ch:non_convex}

In this chapter, we propose a new local-update algorithm for the non-convex setting, which can be interpreted as a variance-reduced extension of Local SGD. We provide both upper bounds for the convergence of our algorithm and lower bounds for distributed zero-respecting algorithms (cf.~\Cref{def:zero_respecting}), thereby demonstrating that our method is nearly minimax optimal. The overarching aim of this chapter is to demonstrate that the second-order heterogeneity assumption (\Cref{ass:tau}) remains a critical factor in the non-convex regime. Specifically, we argue that small values of $\tau$ are essential for local updates to offer algorithmic advantages. Our main contributions are as follows:
\begin{enumerate}
    \item We establish a new lower bound in \Cref{thm:lb} for distributed zero-respecting algorithms, which explicitly depends on the heterogeneity parameters $\zeta$ and $\tau$ (\Cref{ass:zeta,ass:tau}). Analogous to the convex setting, this result indicates that a low $\tau$ can lead to improved communication complexity.

    \item We also prove a lower bound for centralized algorithms in \Cref{thm:lb_cent}, showing that their communication complexity does not improve even when heterogeneity is bounded. Moreover, this bound is tight---it matches the convergence rate of existing mini-batch algorithms (cf.~\Cref{sec:minibatch_storm}).

    \item We introduce a new communication-efficient algorithm, \textsc{CE-LSGD} and show that \textsc{CE-LSGD} is \emph{minimax optimal} under deterministic gradient oracles and \emph{nearly optimal} under stochastic oracles (see \Cref{thm:alg1,thm:lb,thm:alg1_full_main} and the discussion in \Cref{subsec:other_pers}).

    \item Finally, we analyze the trade-off between oracle and communication complexity in the regime of low target accuracy $\epsilon$---relevant to overparameterized deep learning models---showing \textsc{CE-LSGD} achieves optimal complexity trade-offs with a simpler variance-reduction structure (\Cref{fig:3region}, \Cref{fig:intermittent}).
\end{enumerate}

\subsection*{Outline and Relevant References}

The results in this chapter are based on our joint work~\citep{patel2022towards} with co-authors Lingxiao Wang, Blake Woodworth, Brian Bullins, and Nathan Srebro.

\Cref{sec:ch6.1} introduces the additional assumptions required in the non-convex setting. The overall setup follows a standard formulation, widely studied in prior work on distributed non-convex optimization~\citep{karimireddy2019error, karimireddy2020mime, murata2021bias}. This section also presents a new \emph{centralized lower bound} for heterogeneous objectives. While the bound is novel in the distributed context, it is derived by adapting the serial lower bound of~\citet{arjevani2019lower}.

\Cref{sec:ch6.2} presents our algorithm, \textsc{CE-LSGD}, along with its convergence analysis. The algorithm is closely related to \textsc{BVR-LSGD}~\citep{murata2021bias}, but improves upon it by requiring fewer and lighter heavy-batch computations. Our convergence guarantee, together with a new lower bound for distributed zero-respecting algorithms, establishes that \textsc{CE-LSGD} is \emph{minimax optimal} under exact oracles and \emph{nearly optimal} in the stochastic case. The construction of our lower bound builds on the non-convex hard instance proposed by~\citet{carmon2020lower} for serial optimization and leverages techniques from~\citet{arjevani2015communication}, which have also been employed in other works~\citep{woodworth2020minibatch, zhang2020fedpd}. A detailed comparison of related algorithms and their assumptions appears in \Cref{table:comparision}. Notably,~\citet{karimireddy2020mime} were among the first to highlight the role of second-order heterogeneity in distributed non-convex optimization.

Finally, \Cref{sec:exps} presents an empirical evaluation of \textsc{CE-LSGD}, demonstrating that its performance aligns with our theoretical predictions.

\section{Additional Assumptions and a Centralized Lower Bound}\label{sec:ch6.1}
We first recall that in the non-convex setting, optimization guarantees for solving problem \eqref{prob:scalar} are stated in terms of the stationarity of the outputted model on the average objective $F$. In particular, throughout this chapter, when we say a model $\hat x$ satisfies $\epsilon$ sub-optimality, when $\ee\|\nabla F(\hat{x})\|_2^2\leq \epsilon$. In the non-convex setting it is also common to assume the following function value equivalent of \Cref{ass:bounded_optima}. 

\begin{assumption}[Bounded Function Sub-optimality]\label{ass:bounded_func_subopt}
    We assume that for all $x^\star\in S^\star$ we have $$F(0) - F(x^\star)\leq \Delta\enspace.$$
\end{assumption}

\begin{table*}[!thp]
\centering
        \begin{tabular}{ll}
			\toprule
			\toprule
			Method (Reference) & \multirow{2}{*}{Convergence Rate, i.e. $\ee\norm{\nabla F(\hat{x})}^2 \preceq $}  \\
			(Oracles used)&\\
			\midrule
                \midrule
			\textsc{SCAFFOLD}$^\dagger$, \textsc{MB-SGD}$^\dagger$ \citep{karimireddy2020scaffold}
			& \multirow{2}{*}{$\frac{\Delta H}{R} + \left(\frac{\sigma_2^2\Delta L}{MKR}\right)^{1/2}$}  \\ 
			(Stochastic) &\\
			\midrule
			 \textsc{MB-STORM} (Theorem \ref{thm:minibatch_storm}) \citep{ cutkosky2019momentum}
			 & \multirow{2}{*}{$\frac{\Delta H}{R} + \frac{\sigma_2^2}{MKR} + \left(\frac{\sigma_2\Delta H}{MKR}\right)^{2/3}$} \\
			(Stochastic)&\\
			\midrule 
			Lower Bound (Centralized)
			& \multirow{2}{*}{$\frac{\Delta H}{R} + \frac{\sigma_2^2}{MKR} + \left(\frac{\sigma_2\Delta H}{MKR}\right)^{2/3} $}\\ 
			(Theorem \ref{thm:lb_cent})&\\
			\midrule
			\midrule
            \textsc{STEM}~\citep{khanduri2021stem}
			& \multirow{2}{*}{$(\Delta H + \sigma_2^2 + H^2\zeta^2)\left(\frac{1}{R} + \frac{1}{(MKR)^{2/3}}\right)$}\\ 
			(Stochastic)&\\
			\midrule
          \textsc{BVR-L-SGD}*~\citep{murata2021bias}, \textsc{CE-LSGD} (Theorem \ref{thm:alg1})
			& \multirow{2}{*}{$\frac{\Delta\tau}{R} + \textcolor{brown}{\frac{\Delta H}{\sqrt{K}R}} + \frac{\sigma^2}{MKR} + \left(\frac{\sigma\Delta H}{MKR}\right)^{2/3}$} \\ 
		   (Stochastic) &\\
		    \midrule
            \textsc{CE-LGD} (Theorem \ref{thm:alg1})		   
			& \multirow{2}{*}{$\frac{\Delta\tau}{R} + \textcolor{brown}{\frac{\Delta H}{KR}}$} \\ 
		   (Exact)&\\
		    \midrule
            Lower Bound
			& \multirow{2}{*}{$\min\left\{\frac{\Delta\tau}{R}, \frac{H^2\zeta^2}{R}\right\} + \textcolor{brown}{\frac{\Delta L}{KR}} + \frac{\sigma^2}{MKR} + \left(\frac{\sigma\Delta H}{MKR}\right)^{2/3}$}\\ 
			(Theorem \ref{thm:lb})&\\
			\bottomrule
			\bottomrule
		\end{tabular}

    \caption{Comparison of convergence rates for various algorithms in the intermittent communication setting (cf. \Cref{fig:IC}). The quantities $\zeta$ and $\tau$ refer to the heterogeneity assumptions (\Cref{ass:zeta,ass:tau}); note that $\tau \leq 2H$ and can be significantly smaller than $H$ in practice. *See Section \ref{subsec:other_pers} for a detailed comparison with BVR-L-SGD. $^\dagger$The variance term is optimal in these rates, as the corresponding analyses do not rely on the mean-squared smoothness assumption (cf.~\Cref{def:oracle_first_multi_point}).}
\label{table:comparision} 
\end{table*}

We will also assume access to a more powerful stochastic gradient oracle in the non-convex setting, which is necessary for implementing variance-reduced algorithms.

\begin{definition}[Stochastic Multi-point First-order Oracle]\label{def:oracle_first_multi_point}
For each machine $m \in [M]$, we assume access to an oracle $\mathcal{O}_m : (\mathbb{R}^d)^{\otimes n} \times \Delta(\mathcal{Z}) \to (\mathbb{R}^d)^{\otimes n}$, such that for any $x_1, \dots, x_n \in \mathbb{R}^d$, the oracle samples a \emph{random datum} $z \sim \mathcal{D}_m$ and returns
\[
\left(\{s_z(x_i)\}_{i \in [n]}, \{g_z(x_i)\}_{i \in [n]}\right),
\]
satisfying the following properties for all $i \in [n]$:
\begin{align*}
    \mathbb{E}[s_z(x_i)\,|\,x_i] &= F_m(x_i)\enspace, \\
    \mathbb{E}[g_z(x_i)\,|\,x_i] &= \nabla F_m(x_i)\enspace, \\
    \mathbb{E}\left[\|g_z(x_i) - \nabla F_m(x_i)\|_2^2\,|\,x_i\right] &\leq \sigma_2^2\enspace.
\end{align*}
Moreover, the gradients satisfy $H$-\emph{mean smoothness}, i.e., for all $x, y \in \mathbb{R}^d$,
\[
\mathbb{E}_{z \sim \mathcal{D}_m}\left[\|g(x;z) - g(y;z)\|_2|x,y\right] \leq H\|x - y\|_2\enspace.
\]
\end{definition}

\begin{remark}[Smoothness v/s Mean-smoothness]
The mean-smoothness property is essential for achieving an oracle complexity of $\mathcal{O}(1/\epsilon^{3/2})$ in the serial setting ($M = 1$) for finding an $\epsilon$-stationary point~\citep{arjevani2019lower}, as opposed to the $\mathcal{O}(1/\epsilon^2)$ complexity of standard SGD. While it is common to distinguish the oracle's $\bar{H}$-mean-smoothness from the objective's $H$-smoothness~\citep{arjevani2019lower}, we do not make this distinction here.

For example, consider the square loss function discussed in \Cref{subsec:square_loss}. For any $z = (a, b) \in \supp{\mathcal{D}_m}$ with $a \in \mathbb{R}^d$ and $b \in \mathbb{R}$, we have
\(
\nabla_x f^{\text{square}}(x; (a,b)) = a\left(\inner{a}{x} - b\right)
\).
Suppose the oracle returns $g(x; z) = \nabla_x f^{\text{square}}(x; (a,b))$. Then for all $x, y \in \mathbb{R}^d$,
\begin{align*}
    \mathbb{E}_{(a,b) \sim \mathcal{D}_m} \left[\left\|a\left(\inner{a}{x} - b\right) - a\left(\inner{a}{y} - b\right)\right\|_2|x,y\right] 
    &= \mathbb{E}_{(a,b)} \left[\left\|aa^\top (x - y)\right\|_2|x,y\right]\enspace,\\
    &\leq \mathbb{E}_{(a,b)\sim \ddd_m}\left[\|aa^\top\|_2\right] \cdot \|x - y\|_2\enspace.
\end{align*}
This shows that the mean-smoothness constant is given by $\mathbb{E}_{(a,b) \sim \mathcal{D}_m}[\|aa^\top\|_2]$, while the smoothness constant of the objective $F_m$ would be $\|\mathbb{E}_{(a,b)}[aa^\top]\|_2$. These two quantities can differ significantly; in particular, the smoothness of $F_m$ can be much larger than the mean-smoothness of the oracle.
\end{remark}

\begin{remark}
\citet{arjevani2019lower} showed that if a first-order oracle satisfies only the bounded variance condition---without the stronger mean-squared smoothness property---then any algorithm requires at least $\Omega(1/\epsilon^2)$ oracle queries to find an $\epsilon$-stationary point. This lower bound explains the suboptimal oracle complexity of distributed algorithms such as Local SGD, SCAFFOLD~\citep{karimireddy2020scaffold}, and mini-batch SGD, which are typically analyzed under this weaker oracle model (cf.~\Cref{table:comparision}).
\end{remark}

With this definition in hand, we will first state the following lower bound for all centralized zero-respecting algorithms (cf. \Cref{def:zero_respecting}).

\begin{theorem}[Centralized Lower Bound]\label{thm:lb_cent}
For any problem instance satisfying \Cref{ass:smooth_second,ass:bounded_func_subopt,ass:tau,ass:zeta} every algorithm $A\in \aaa^{cent}_{ZR}$ equipped with a two-point stochastic oracle on all machines (cf. \Cref{def:oracle_first_multi_point}) must output $x^{A}_R$ such that (for a numerical constant $c_{10}$), $$\ee\left[\norm{\nabla F(x^A_R)}^2\right] \geq c_{10}\cdot\rb{\frac{\Delta H}{R} + \frac{\sigma_2^2}{MKR} + \left(\frac{\sigma_2 \Delta H}{MKR}\right)^{2/3}}\enspace.$$  
\end{theorem}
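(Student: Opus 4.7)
The plan is to establish each of the three terms in the lower bound via a separate hard instance, then combine them by taking the maximum (which, up to a factor of three, gives the sum). Throughout, I will use a \emph{homogeneous} construction in which $\ddd_m = \ddd$ for every $m \in [M]$, so that \Cref{ass:tau,ass:zeta} are trivially satisfied with $\tau = \zeta = 0$ and the problem reduces to serial stochastic optimization of a single objective $F$. The key structural observation is that, by definition of $\aaa_{ZR}^{\text{cent}}$, every algorithm queries all $MK$ oracle calls within a single round at a common point $I_r$; hence from an information-theoretic standpoint the centralized algorithm is no more powerful than a serial zero-respecting algorithm that makes $R$ adaptive queries to a mini-batch oracle of batch size $MK$, equivalently one that consumes $N \coloneqq MKR$ total stochastic gradient samples distributed across only $R$ adaptive query locations.

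For the first term $\Omega(\Delta H/R)$ I will invoke the standard chain-like hard instance of Carmon et al.\ for zero-respecting smooth non-convex optimization. A zero-respecting algorithm can activate at most one new chain coordinate per distinct query point, so after $R$ distinct query points the next unactivated coordinate still contributes $\Omega(\Delta H/R)$ to the squared gradient norm; stochastic noise can only provide extra information, but the ZR property already forbids the algorithm from touching the unactivated tail of the chain, so this deterministic bound survives. For the second term $\Omega(\sigma_2^2/(MKR))$ I will take a scaled quadratic $F(x) = \tfrac{H}{2}\norm{x - x^\star}^2$ with $\Delta = \tfrac{H}{2}B^2$, equipped with a Gaussian stochastic oracle of variance $\sigma_2^2$. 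A standard Le~Cam two-point argument shows that any estimator of $x^\star$ built from $N = MKR$ iid samples incurs squared error $\Omega(\sigma_2^2/(H^2 N))$, which translates directly into a squared gradient norm $\Omega(\sigma_2^2/N)$.

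The most delicate part is the variance-reduction term $\Omega((\sigma_2 \Delta H/(MKR))^{2/3})$, for which the plan is to invoke the stochastic chain lower bound of Arjevani et al.~\cite{arjevani2019lower} for mean-smooth stochastic non-convex optimization. Their bound is stated in terms of the \emph{total} number of oracle queries $N$, so taking $N = MKR$ gives the desired term---provided one verifies that their progress-potential argument, which tracks how many chain coordinates have been activated, still applies when queries are grouped into $R$ rounds at a common point. This is the step I expect to be the main obstacle: the worry is that mini-batching $MK$ iid queries at a single point might let the algorithm bypass the chain's sequential discovery rate via variance reduction. The resolution will be that advancing by one coordinate in the probabilistic zero-chain construction requires $\Omega(\sigma_2^2/\text{signal}^2)$ samples regardless of how they are scheduled across rounds; summing this cost over the coordinates needed to push the squared gradient norm below the target threshold reproduces the $(\sigma_2 \Delta H/N)^{2/3}$ scaling with $N = MKR$. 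Combining the three hard instances then yields the stated bound for a universal constant $c_{10}$.
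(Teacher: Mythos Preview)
Your proposal is correct and follows essentially the same strategy as the paper: a homogeneous instance ($\ddd_m=\ddd$, so $\tau=\zeta=0$ trivially), then Carmon et al.'s chain for the $\Delta H/R$ term (using that a centralized algorithm has only $O(R)$ distinct query points regardless of $M,K$), and Arjevani et al.'s stochastic chain for the noise-dependent terms with $N=\Theta(MKR)$ total samples.

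Two minor deviations worth noting. First, your separate Le~Cam argument for the $\sigma_2^2/(MKR)$ term is unnecessary: the Arjevani et al.\ lower bound already delivers both $\sigma_2^2/\epsilon$ and $\sigma_2\Delta H/\epsilon^{3/2}$ simultaneously, so the paper obtains the last two terms in one stroke. Second, your anticipated ``main obstacle''---whether mini-batching $MK$ queries at a common point could defeat the probabilistic zero-chain---is handled more cleanly in the paper than by reopening the progress-potential argument. The paper invokes the \emph{active oracle} lower bound from Section~5.2 of \citet{arjevani2019lower}, in which the algorithm controls both the query point \emph{and} the random seed. Since a two-point stochastic oracle with $MKR$ calls can be simulated by $2MKR$ active-oracle calls, and the active-oracle bound is a pure sample-count statement, the scheduling concern evaporates: $2MKR \gtrsim \sigma_2^2/\epsilon + \sigma_2\Delta H/\epsilon^{3/2}$ directly. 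This is a tighter reduction than arguing inside the construction that each coordinate needs $\Omega(\sigma_2^2/\text{signal}^2)$ samples, though your intuition for why that would also work is sound.
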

The proof of this theorem follows the known oracle complexity lower bounds \citep{carmon2020lower, arjevani2019lower}, and we include it in \Cref{app:chap6}. This theorem shows that, mini-batch SARAH/STORM, which are centralized algorithms, already achieve the optimal communication and oracle complexity (see Table \ref{table:comparision}) for algorithms in $\aaa_{ZR}^{cent}$ optimizing smooth non-convex problems. Note that the lower bound result holds for all $\tau,\ \zeta$, which highlights the limitation of the centralized baselines, showing they \textbf{can not} improve with lower heterogeneity (also see \Cref{rem:mb_rate_is_tight}). Specific existing \textit{local-update} algorithms, such as \textsc{MimeMVR} \citep{karimireddy2020mime} and \textsc{BVR-L-SGD} \citep{murata2021bias}, can indeed improve upon centralized algorithms in the low-heterogeneity regime. In the next section, we will quantify this improvement and demonstrate that our algorithm strictly outperforms the centralized baselines and nearly matches our lower bound for algorithms in $\aaa_{ZR}$.

\section{Our New Local Update Algorithm}\label{sec:ch6.2}

\begin{algorithm}[!hpt]
	\caption{Communication Efficient Local Stochastic Gradient Descent (CE-LSGD)}\label{alg:fed_vr}
	\begin{algorithmic}[1]
		\INPUT Initialization $x_0$, iteration number $R$, step size $\eta$, parameters $b_0$, $b$, $P$ and $\beta\in[0,1]$
        \STATE Let $x_{-1}=x_0$
		\FOR{$r=0,1,\ldots, R-1$} 
		\STATE \textbf{if} $r=0$ set $\rho=1$, $Q=1$, $B=b_0$ \textbf{else} set $\rho=\beta$, $Q=P$, $B=Q$
        \STATE \textbf{Communicate (send)} $(x_{r},x_{r-1})$ to clients
		\FORC{$m \in [M]$} 
		\STATE Sample $\bbb_{r}^m\sim \mathcal{D}_m^{\otimes B}$, get $\nabla F_{m,\bbb_{{r}}^m}(x_{r})$, $\nabla F_{m,\bbb_{{r}}^m}(x_{r-1})$, where $|\bbb_r^m|=B$
		\STATE \textbf{Communicate (rec)} $\big(\nabla F_{m,\bbb_{{r}}^m}(x_{r}),\nabla F_{m,\bbb_{{r}}^m}(x_{r-1})\big)$ to the server
		\ENDFORC
		\STATE $v_{r}=\frac{1}{M}\sum_{m=1}^M\nabla F_{m,\bbb_{{r}}^m}(x_{r})+(1-\rho)\left(v_{r-1}-\frac{1}{M}\sum_{m=1}^M\nabla F_{m,\bbb_{{r}}^m}(x_{r-1})\right)$
		\STATE \textbf{Communicate (send)} $(x_r,v_{r})$ to client $\tilde m_r$, where $\tilde m_r \sim Unif\left([M]\right)$
		\FORC{$\tilde m_r$} 
		\STATE $w^{\tilde m_r}_{r+1,1}:=w^{\tilde m_r}_{r+1,0}:=x_{r},v_{r,0}^{\tilde m_r}:=v_{r}$ 
		\FOR{$k=1,\ldots, Q$}
		    \STATE Sample $\bbb_{r,k}^{\tilde m}\sim \mathcal{D}_{\tilde m}^{\otimes b}$, get $\nabla F_{\tilde m,\bbb_{{r,k}}^{\tilde m}}(w^{\tilde m_r}_{r+1,k})$, $\nabla F_{\tilde m,\bbb_{{r,k}}^{\tilde m}}(w^{\tilde m_r}_{r+1,k-1})$, where $|\bbb_{r,k}^{\tilde m}|=b$
        	\STATE $v_{r,k}^{\tilde m_r}=v_{r,k-1}^{\tilde m_r}+\nabla F_{\tilde m,\bbb_{{r,k}}^{\tilde m}}(w^{\tilde m_r}_{r+1,k})-\nabla F_{\tilde m,\bbb_{{r,k}}^{\tilde m}}(w^{\tilde m_r}_{r+1,k-1})$
		\STATE $w^{\tilde m_r}_{r+1,k+1}=w^{\tilde m_r}_{r+1,k}-\eta v_{r,k}^{\tilde m_r}$
		\ENDFOR
		\STATE \textbf{Communicate (rec)} $\big(w^{\tilde m_r}_{r+1,Q+1}\big)$ to the server
		\ENDFORC
		\STATE  Let $x_{r+1}=w^{\tilde m_r}_{r+1,Q+1}$
		\ENDFOR
        \OUTPUT  Choose $\tilde{x}$ uniformly from $\{w^{\tilde m_r}_{r,k}\}_{r\in[R],k\in[Q]}$
	\end{algorithmic}
\end{algorithm}

In this section, we introduce our communication-efficient algorithm, denoted \textsc{CE-LSGD}, and describe it in Algorithm~\ref{alg:fed_vr}. For each machine $m \in [M]$, we define the mini-batch stochastic gradient as
\[
\nabla F_{m,\mathcal{B}^m}(x) := \frac{1}{|\mathcal{B}^m|} \sum_{l \in \mathcal{B}^m} g(x; z_l \sim \mathcal{D}_m),
\]
where $\mathcal{B}^m$ denotes a mini-batch of size $|\mathcal{B}^m|$ obtained by querying the oracle $\mathcal{O}_m$.

At each iteration of Algorithm~\ref{alg:fed_vr}, the algorithm performs \textbf{two rounds} of communication—i.e., two back-and-forth exchanges between the server and all clients. The additional communication round, captured in lines 4 to 9, is used to update the variance-reduced gradient $v_r$ using the current and previous server models, $x_r$ and $x_{r-1}$, respectively. In the rest of this section, we use the iteration index $R$ and the communication complexity of Algorithm~\ref{alg:fed_vr} interchangeably.

To implement Algorithm~\ref{alg:fed_vr} in the intermittent communication (IC) setting with $K$ local steps between two communication rounds, we choose the input parameters as $P = K$ and $b = 1$ (see line 14 of Algorithm~\ref{alg:fed_vr}). We assume this setting throughout the section. As discussed previously, mini-batch algorithms such as mini-batch \textsc{STORM} can also operate in the IC setting by making $K$ oracle calls at the same point in each communication round. Notably, our method reduces to mini-batch \textsc{STORM} when the number of local updates is set to $Q = 1$ (see Appendix~\ref{sec:minibatch_storm}).

The core of our proposed method lies in the construction of the variance-reduced gradient $v_r$ and the local gradient estimator $v_{r,k}^m$ (lines 9 and 15 of Algorithm~\ref{alg:fed_vr}). This construction is inspired by the variance reduction techniques used in SARAH~\citep{nguyen2017sarah} and SPIDER~\citep{fang2018spider}. Intuitively, the estimation error between $v_{r,k}^m$ and the true gradient $\nabla F(w_{r+1,k}^m)$ can be decomposed into two dominant terms:
\begin{itemize}
    \item \colorbox{pink}{$\mathbb{E}[\|v_r - \nabla F(x_r)\|^2]$}, the error due to stale information in the global variance-reduced gradient;
    \item \colorbox{yellow}{$\tau^2 K \sum_{k=1}^K \mathbb{E}[\|w_{r+1,k}^m - w_{r+1,k-1}^m\|^2]$}, which quantifies the accumulated local drift due to data heterogeneity.
\end{itemize}
The first term is controlled by momentum-based variance reduction~\citep{cutkosky2019momentum}, and is dominated by a term that vanishes as the iterates converge: \colorbox{pink}{$H^2 \mathbb{E}[\|x_r - x_{r-1}\|^2]$}. The second term also vanishes during convergence and scales with $\tau^2$, indicating that lower heterogeneity enables more aggressive local updates and faster convergence.

\vspace{1em}
We now state the convergence guarantees of \textsc{CE-LSGD} in the intermittent communication setting.

\begin{theorem}[Convergence of \textsc{CE-LSGD}]\label{thm:alg1}
Suppose the problem instance satisfies \Cref{ass:smooth_second,ass:bounded_func_subopt,ass:tau,ass:zeta}. Then:
\begin{itemize}[leftmargin=1em]
    \item[(a)] If each client $m \in [M]$ has access to a stochastic two-point oracle (cf.~\Cref{def:oracle_first_multi_point}) and $\frac{\Delta H}{R} = \mathcal{O} \left( \frac{\sigma_2^2}{\sqrt{MK}} \right)$, then Algorithm~\ref{alg:fed_vr}, with
    \begin{align*}
        \beta = \max\left\{\frac{1}{R}, \frac{(\Delta H)^{2/3}(MK)^{1/3}}{\sigma_2^{4/3} R^{2/3}}\right\}\enspace,\quad 
        b_0 = K R\enspace,\quad \text{and}\quad 
        \eta = \min\left\{ \frac{1}{H}, \frac{1}{K\tau}, \frac{(\beta M)^{1/2}}{H K^{1/2}} \right\}\enspace,
    \end{align*}
    outputs $\tilde{x}$ satisfying
    \[
    \mathbb{E}\left[\|\nabla F(\tilde{x})\|^2\right] \leq c_{11} \cdot \left( \frac{\Delta \tau}{R} + \frac{\Delta H}{\sqrt{K} R} + \frac{\sigma_2^2}{MKR} + \left( \frac{\sigma_2 \Delta H}{MKR} \right)^{2/3} \right)\enspace.
    \]
    
    \item[(b)] If each client $m \in [M]$ has a deterministic two-point oracle, then using $\beta = 1$ and $\eta = \min\left\{ \frac{1}{H}, \frac{1}{K\tau} \right\}$, Algorithm~\ref{alg:fed_vr} satisfies
    \[
    \mathbb{E}\left[\|\nabla F(\tilde{x})\|^2\right] \leq c_{12} \cdot \left( \frac{\Delta \tau}{R} + \frac{\Delta H}{K R} \right)\enspace,
    \]
\end{itemize}
where $c_{11}, c_{12}$ are numerical constants.
\end{theorem}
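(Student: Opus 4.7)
The plan is to combine a standard smoothness-based descent lemma with two variance-reduction recursions: one for the global STORM-style estimator $v_r$ (built from the two calls $\nabla F_{m,\bbb_{r}^m}(x_r)$ and $\nabla F_{m,\bbb_r^m}(x_{r-1})$ in lines 6--9), and one for the local SARAH/SPIDER-style estimator $v_{r,k}^{\tilde m_r}$ (lines 13--17). The starting point is the $H$-smoothness inequality on the average objective $F$ along the local trajectory on the sampled client $\tilde m_r$, which yields, for the sum over $k\in[Q]$, a one-round descent of the form
\begin{align*}
\Exp\sb{F(x_{r+1})} \leq \Exp\sb{F(x_r)} - \frac{\eta}{2}\sum_{k=1}^{Q}\Exp\sb{\norm{\nabla F(w^{\tilde m_r}_{r+1,k})}^2} + \frac{\eta}{2}\sum_{k=1}^{Q}\Exp\sb{\norm{v^{\tilde m_r}_{r,k} - \nabla F(w^{\tilde m_r}_{r+1,k})}^2}
\end{align*}
once $\eta H \leq 1$ and $\eta K\tau \leq 1$ (the latter makes the local drift negligible relative to the descent). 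The goal is to control the estimation error term and telescope it over $r\in[R]$.

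Next I would decompose the estimation error via the identity
\begin{align*}
v^{\tilde m}_{r,k} - \nabla F(w^{\tilde m}_{r+1,k}) = \underbrace{\rb{v_r - \nabla F(x_r)}}_{\text{global error}} + \underbrace{\rb{v^{\tilde m}_{r,k} - v_r - \nabla F_{\tilde m}(w^{\tilde m}_{r+1,k}) + \nabla F_{\tilde m}(x_r)}}_{\text{local SARAH residual}} + \underbrace{\rb{\nabla F_{\tilde m}(w^{\tilde m}_{r+1,k}) - \nabla F_{\tilde m}(x_r) - \rb{\nabla F(w^{\tilde m}_{r+1,k}) - \nabla F(x_r)}}}_{\text{heterogeneity drift}},
\end{align*}
and bound each piece separately. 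The global error satisfies the standard STORM recursion: conditioning on $\mathcal{F}_{r-1}$ and expanding the definition of $v_r$ in line 9 gives
\begin{align*}
\Exp\sb{\norm{v_r - \nabla F(x_r)}^2} \leq (1-\beta)^2\Exp\sb{\norm{v_{r-1}-\nabla F(x_{r-1})}^2} + \frac{2(1-\beta)^2 H^2}{MP}\Exp\sb{\norm{x_r-x_{r-1}}^2} + \frac{2\beta^2 \sigma_2^2}{MP},
\end{align*}
with the initialization error $\sigma_2^2/(Mb_0)$ handled by the large warm-up batch $b_0 = KR$. The local SARAH residual telescopes using $H$-mean-smoothness of the stochastic gradients on client $\tilde m$ (\Cref{def:oracle_first_multi_point}): each inner step adds at most $\frac{H^2}{b}\Exp\sb{\norm{w^{\tilde m}_{r+1,k}-w^{\tilde m}_{r+1,k-1}}^2}$ to the residual, and with $b=1$ and $Q=K$ this accumulates to a term proportional to $\eta^2 H^2 K\sum_k \Exp\sb{\norm{v^{\tilde m}_{r,k}}^2}$. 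The heterogeneity drift is exactly where \Cref{ass:tau} enters: averaging over the uniformly random client $\tilde m_r$ and using the $\tau$-second-order heterogeneity assumption gives a bound $\tau^2\Exp\sb{\norm{w^{\tilde m}_{r+1,k}-x_r}^2} \leq \eta^2\tau^2 K\sum_{j\leq k}\Exp\sb{\norm{v^{\tilde m}_{r,j}}^2}$.

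The core technical step---and the main obstacle---is then to set up a potential function of the form $\Phi_r = \Exp\sb{F(x_r) - F(x^\star)} + c_1\eta\Exp\sb{\norm{v_r - \nabla F(x_r)}^2}/\beta$ that absorbs all three error contributions above, so that both the local SARAH residual and the heterogeneity drift can be absorbed into the descent term by exploiting $\norm{v^{\tilde m}_{r,k}}^2 \leq 2\norm{\nabla F(w^{\tilde m}_{r+1,k})}^2 + 2\norm{v^{\tilde m}_{r,k}-\nabla F(w^{\tilde m}_{r+1,k})}^2$. This absorption requires the three-way balance $\eta H\leq 1$, $\eta K\tau \leq 1$, and $\eta^2 H^2 K \leq \beta M$, which is precisely the choice of $\eta$ in the theorem statement. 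Telescoping $\Phi_r$ over $r\in[R]$, dividing by $KR$, and recalling that $\tilde x$ is drawn uniformly from $\{w^{\tilde m_r}_{r,k}\}$ yields
\begin{align*}
\Exp\sb{\norm{\nabla F(\tilde x)}^2} \lesssim \frac{\Delta}{\eta KR} + \frac{\sigma_2^2}{\beta M P R} + \beta \sigma_2^2,
\end{align*}
into which substituting the theorem's choices of $\eta$ and $\beta$ (and $P=K$, $b_0 = KR$) produces the four advertised terms for part (a). Part (b) follows by simply setting $\beta = 1$, $\sigma_2 = 0$, and reusing the same potential with only the first and last descent terms surviving; here $\eta = \min\{1/H, 1/(K\tau)\}$ directly gives the claimed $\Delta\tau/R + \Delta H/(KR)$ rate. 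The delicate point throughout is tracking the $K$ factor: the local SARAH residual and the heterogeneity drift each carry a $K$, and it is the choice $b=1$ together with the two-point oracle that prevents these from overwhelming the $\Delta/(\eta KR)$ descent, which is what ultimately yields the $1/\sqrt{K}$ improvement over mini-batch STORM in the $\tau$-dominant regime.
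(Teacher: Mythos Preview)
Your proposal is correct and follows essentially the same approach as the paper: smoothness descent on the local trajectory, a three-way decomposition of $v_{r,k}^{\tilde m}-\nabla F(w_{r+1,k}^{\tilde m})$ into a global STORM error, a local SARAH residual, and a $\tau$-controlled heterogeneity drift, then combining under the step-size constraints $\eta H\le 1$, $\eta K\tau\le 1$, $\eta^2 H^2 K\le \beta M$. The paper does not set up an explicit Lyapunov function $\Phi_r$; instead it first bounds $\frac{1}{K}\sum_k\Exp\|v_{r,k}^{\tilde m}-\nabla F(w_{r+1,k}^{\tilde m})\|^2$ in terms of $\Exp\|v_r-\nabla F(x_r)\|^2$ and $\frac{1}{K}\sum_k\Exp\|\nabla F(w_{r+1,k}^{\tilde m})\|^2$, substitutes this into the descent inequality, and only afterward sums the STORM recursion over $r$ separately. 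Both routes are standard and equivalent.

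One small slip to flag: in your final displayed bound the $\beta\sigma_2^2$ term should be $\beta\sigma_2^2/(MK)$ (the paper's $\beta\sigma_2^2/(MKb)$ with $b=1$); without that denominator, substituting the stated $\beta$ will not recover the $(\sigma_2\Delta H/(MKR))^{2/3}$ term.
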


The proof, presented in Appendix~\ref{sec:app_proof_alg1}, follows from a careful tuning of the parameters $\beta$ and $b_0$ while controlling the two dominant terms mentioned above. To demonstrate that our convergence rate is nearly optimal, we establish the following lower bound (proved in \Cref{sec:non_convex_lb}):

\begin{theorem}[Lower Bound]\label{thm:lb}
Let the problem instance satisfy \Cref{ass:smooth_second,ass:bounded_func_subopt,ass:tau,ass:zeta}. Then any algorithm $A \in \mathcal{A}_{\text{zr}}$, using two-point first-order oracles on all machines (cf.~\Cref{def:oracle_first_multi_point}), outputs $x^A_R$ satisfying
\[
\mathbb{E}\left[\|\nabla F(x^A_R)\|^2\right] \geq c_{13} \cdot \left( \min\left\{ \frac{H^2 \zeta^2}{R}, \frac{\Delta \tau}{R} \right\} + \frac{\Delta H}{K R} + \frac{\sigma_2^2}{MKR} + \left( \frac{\sigma_2 \Delta H}{MKR} \right)^{2/3} \right),
\]
for some universal constant $c_{13}$.
\end{theorem}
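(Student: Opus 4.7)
\textbf{Proof Proposal for \Cref{thm:lb}.} The bound has four additive terms and my plan is to prove each one separately by constructing a hard instance that saturates that term, then combine via the fact that the maximum of any two of the bounds is at least half their sum. For the three ``centralized'' terms $\Delta H/(KR)$, $\sigma_2^2/(MKR)$, and $(\sigma_2\Delta H/(MKR))^{2/3}$, I would use fully homogeneous instances (same $\mathcal{D}$ on every machine), so that \Cref{ass:tau,ass:zeta} are trivially satisfied with $\tau=\zeta=0$, and reduce to known serial lower bounds. Concretely, the deterministic $\Delta H/(KR)$ term comes from embedding Carmon et al.'s~\citep{carmon2020lower} chained non-convex hard instance in the homogeneous distributed setting: since the algorithm is in $\mathcal{A}_{ZR}$, each machine's iterates lie in a nested subspace, and even with $M$ parallel machines the collection of zero-respecting coordinates grows by at most one per oracle call, so after $KR$ oracle calls per machine the iterates have at most $KR$ non-zero coordinates, giving the desired lower bound on the chained Welch function. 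The two stochastic terms $\sigma_2^2/(MKR)$ and $(\sigma_2\Delta H/(MKR))^{2/3}$ reduce similarly to the serial variance lower bounds of~\citep{arjevani2019lower}: on the homogeneous instance, the joint information of all $M$ machines after $R$ rounds of $K$ local steps is $MKR$ i.i.d.\ oracle calls, and plugging this count into the serial stochastic non-convex lower bound yields both terms.

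For the communication-dependent heterogeneous term $\min\{H^2\zeta^2/R,\ \Delta\tau/R\}$, I would follow the splitting strategy of~\citep{arjevani2015communication} adapted to the non-convex chain construction. Specifically, I partition the $M$ clients into two equal groups and assign to group~1 a function that only ``activates'' odd-indexed coordinates of the Carmon--Welch chain and to group~2 a function that only activates even-indexed coordinates; these are implemented by supporting each $\mathcal{D}_m$ on a single deterministic point (so the variance part is independent of the first term). The key invariant I will maintain by induction on $r$ is that after $r$ communication rounds, zero-respecting iterates on both groups share the same support of size at most $r$, because within a round each group can only advance one coordinate of its own parity before needing information from the other group. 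After $R$ rounds, the support of every iterate is contained in the first $R$ coordinates of the chain, which forces $\|\nabla F(x)\|^2 = \Omega(\lambda)$ for an appropriate scale parameter~$\lambda$ controlling the chain's amplitude.

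The amplitude $\lambda$ is then calibrated to the heterogeneity constraints: because the two group objectives differ only in which parity of coordinates they activate, a direct computation gives $\|\nabla^2 F_1(x)-\nabla^2 F_2(x)\|=\Theta(\lambda)$ and $\|\nabla F_1(x)-\nabla F_2(x)\|=\Theta(\lambda\cdot B)$ where $B$ is a uniform bound on iterates. Setting $\lambda$ as large as possible while satisfying \Cref{ass:tau} (i.e.\ $\lambda \lesssim \tau$) gives $\|\nabla F\|^2 = \Omega(\Delta\tau/R)$ after accounting for the sub-optimality scaling $\Delta$, while setting $\lambda$ to saturate \Cref{ass:zeta} gives the competing $H^2\zeta^2/R$ rate; taking whichever constraint binds first yields the $\min$. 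I also need to separately ensure $F_m$ remains $H$-smooth and $F(0)-F(x^\star)\le\Delta$, which the standard Carmon construction provides by rescaling the domain and range.

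The main obstacle I anticipate is the induction step for the heterogeneous term: the Carmon--Welch components $\Psi,\Phi$ are not linear and the zero-chain property needs to be verified not just for a single non-convex chain, but for the specific \emph{parity-split} chain, where cross-terms like $\Psi(x_{2i-1})\Phi(x_{2i})$ have to be assigned consistently to exactly one group without violating the smoothness/heterogeneity bookkeeping. A secondary subtlety is that the distributed zero-respecting class in \Cref{def:zero_respecting} allows machines to communicate arbitrary vectors within the support of past oracle outputs, so I must verify that the parity invariant is preserved through communication---this uses the fact that before the first round, both groups can only see coordinate~1, and coordinate $r+1$ is only ``zero-chain reachable'' from coordinate $r$ in the opposite group, so one full round advances the joint support by exactly one coordinate.
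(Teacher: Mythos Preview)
Your proposal is correct and follows essentially the same approach as the paper: the parity-split of the Carmon--Duchi--Hinder--Sidford chain across two groups of machines (yielding the induction invariant that after $r$ rounds the iterates lie in $E_r$, hence the $\min\{\Delta\tau/R,\ H^2\zeta^2/R\}$ term via two separate rescalings), together with homogeneous instances that reduce the remaining three terms to the serial lower bounds of \citet{carmon2020lower} and \citet{arjevani2019lower}. The only cosmetic difference is that the paper treats the $\tau$- and $\zeta$-constrained constructions as two separate lemmas with explicit rescaling constants (and adds a harmless shared quadratic $\tfrac{H}{4}x_{d+1}^2$ in the $\tau$-construction to make the instance genuinely $H$-smooth rather than only $\tau$-smooth), whereas you describe a single amplitude parameter $\lambda$ tuned to whichever constraint binds first; both routes lead to the same bound.
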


\begin{remark}
By comparing the upper and lower bounds under the \Cref{ass:smooth_second,ass:stoch_bounded_second_moment,ass:bounded_func_subopt,ass:tau,ass:zeta}, we make two key observations:
\begin{enumerate}
    \item In the \textbf{deterministic setting} ($\sigma_2 = 0$), our upper bound matches the lower bound exactly, implying that CE-LSGD is \textbf{minimax optimal}. This improves upon all existing methods in this setting.
    
    \item In the \textbf{stochastic setting} ($\sigma_2 > 0$), the convergence bound of CE-LSGD is optimal up to the second term, where our upper bound includes a $\Delta H / (\sqrt{K} R)$ term, while the lower bound achieves $\Delta H / (K R)$. We discuss this discrepancy in more detail in Section~\ref{sec:stochastic_gap}.
\end{enumerate}
\end{remark}

Our construction for Theorem~\ref{thm:lb} builds on the non-convex hard instance proposed by~\citet{carmon2020lower} for serial non-convex optimization, by partitioning the instance across machines carefully. This construction technique has previously been employed to give lower bounds in the heterogeneous setting~\citep{arjevani2015communication, woodworth2020minibatch, zhang2020fedpd}. 

While \textsc{BVR-L-SGD}~\citep{murata2021bias} achieves a similar upper bound to our method (see Table~\ref{table:comparision}), this is expected, as several variance-reduced algorithms—e.g., SPIDER~\citep{fang2018spider}, SARAH~\citep{nguyen2017sarah}, and momentum-based variants~\citep{cutkosky2019momentum}—are simultaneously optimal in the sequential setting. Nevertheless, our algorithm requires fewer and lighter-weight variance reduction steps, making it more scalable in distributed settings. In the next section, we further compare these methods in detail.

\subsection{The Perspective of Reducing Communication}\label{subsec:other_pers}
\begin{figure*}[!thp]
\centering
    \includegraphics[width=0.8\textwidth]{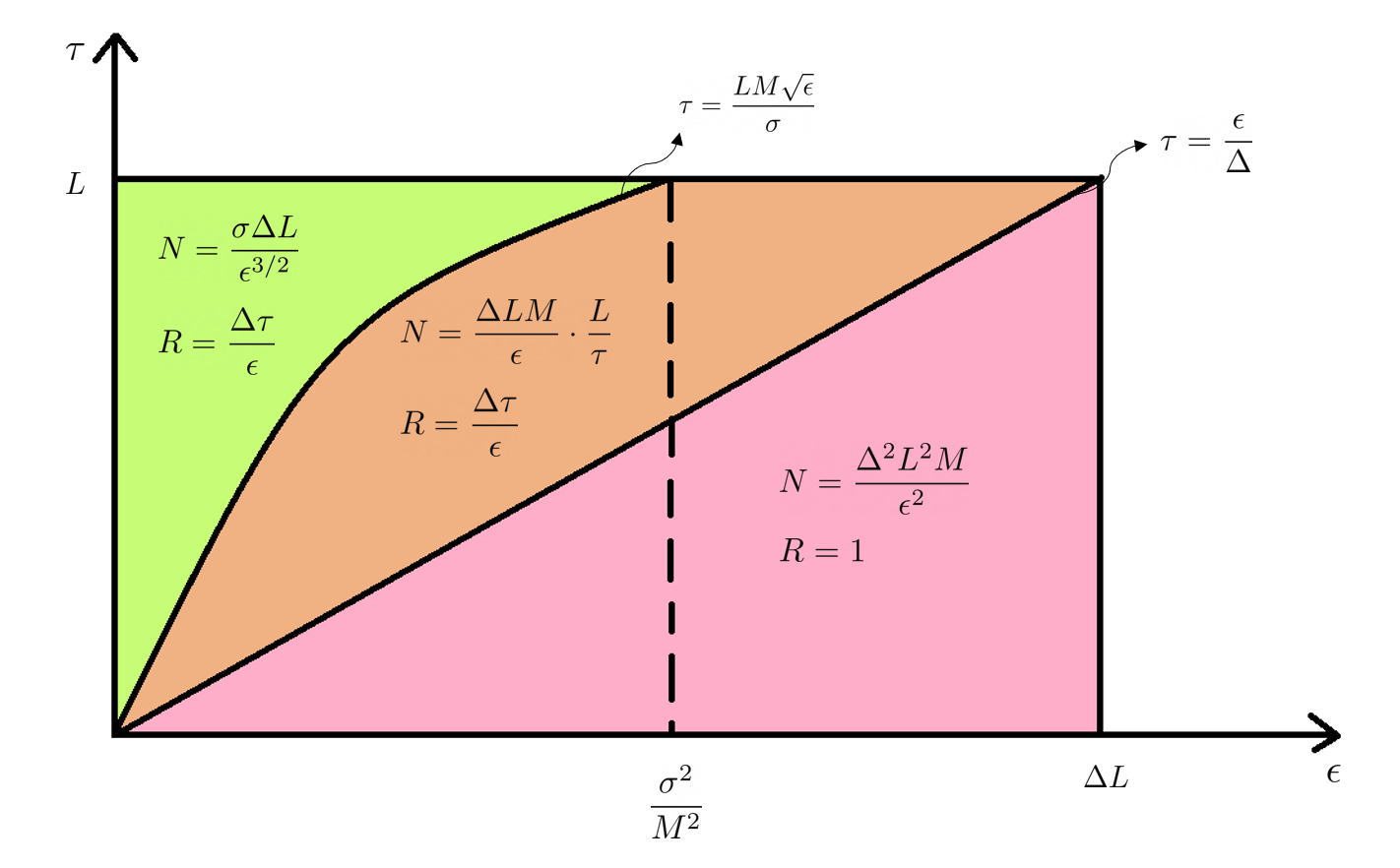}
        \caption{
      Illustration of the best communication complexity $R$ and oracle complexity $N$ that our method can obtain for different $\epsilon$ and $\tau$. Green regime: Our method can obtain optimal communication and oracle complexities. Orange regime: Our method achieves optimal communication with a larger oracle complexity. Red regime: Our method requires only one round of communication, achieving a larger oracle complexity. $H$ and $\tau$ are the smoothness and second-order heterogeneity parameters, respectively. \textbf{TODO: change smoothness constant}
    } \label{fig:3region}
\end{figure*}

Thus far, we have analyzed convergence rates under the intermittent communication (IC) model, assuming fixed values of $K$ and $R$. An alternative and often more practical perspective is to minimize the overall communication complexity required to reach an $\epsilon$-approximate stationary point, while still achieving the optimal oracle complexity. Using our convergence guarantees, both communication and oracle complexities----denoted by $R$ and $N$ respectively---can be expressed as functions of $\epsilon$, which facilitates this analysis.

This perspective is particularly relevant in federated learning (FL), where communication often dominates the wall-clock time due to device heterogeneity and intermittent availability, which slows down synchronous updates. Motivated by this, we summarize the communication and oracle complexities achieved by our method (\textsc{CE-LSGD}) and by \textsc{BVR-L-SGD}~\citep{murata2021bias} in Figure~\ref{fig:3region}, focusing on optimization with stochastic oracles.

The figure illustrates three regimes based on the scaling of the heterogeneity parameter $\tau$ relative to $\epsilon$. Our primary focus is on the green regime, defined by the condition $\epsilon^{1/2} \in (0, \tau\sigma_2 / (H M)]$, which is particularly relevant in deep learning. In such settings, overparameterization often leads to very small target accuracies $\epsilon$, making this regime practically significant across a wide range of values of $\tau$.\footnotemark

\footnotetext{We discuss the remaining regimes in the proof of~\Cref{thm:alg1_full}, presented in~\Cref{sec:app_proof_alg1}.}

In the green regime, both \textsc{CE-LSGD} and \textsc{BVR-L-SGD} require $K = \sigma_2 H / (\tau M \epsilon^{1/2})$ local steps to attain optimal communication and oracle complexities. However, the two methods differ significantly in how variance reduction is implemented:

\begin{itemize}
    \item \textsc{BVR-L-SGD} requires multiple heavy-batch stochastic gradient computations per machine during $S = H \Delta / (\sigma_2 \sqrt{\epsilon})$ communication rounds (cf.~\cite{murata2021bias} for details of their algorithm). Specifically, it uses batch size $b_{\max}$, with the ratio 
    \[
    \rho_{\text{BVR}} := \frac{b_{\max}}{K} = \frac{\sigma_2 \tau}{H \epsilon^{1/2}}
    \]
    quantifying the computational overhead compared to a standard mini-batch, or even compared to the lighter communication rounds within \textsc{BVR-L-SGD} itself.

    \item \textsc{CE-LSGD}, in contrast, requires only a single heavy-batch gradient computation per machine with batch size $b_0 = \sigma_2^3 / (H \Delta M \epsilon^{1/2})$, giving
    \[
    \rho_{\text{our}} := \frac{b_0}{K} = \frac{\sigma_2^2 \tau}{H^2 \Delta}\enspace.
    \]
    Crucially, this satisfies
    \[
    \frac{\rho_{\text{our}}}{\rho_{\text{BVR}}} = \frac{\sigma_2 \epsilon^{1/2}}{H \Delta} \leq 1\enspace,
    \]
    indicating that \textsc{CE-LSGD} not only incurs fewer heavy-batch operations, but each such operation is also lighter.
\end{itemize}

\begin{figure}[!thp]
\centering
    \includegraphics[width=0.8\textwidth]{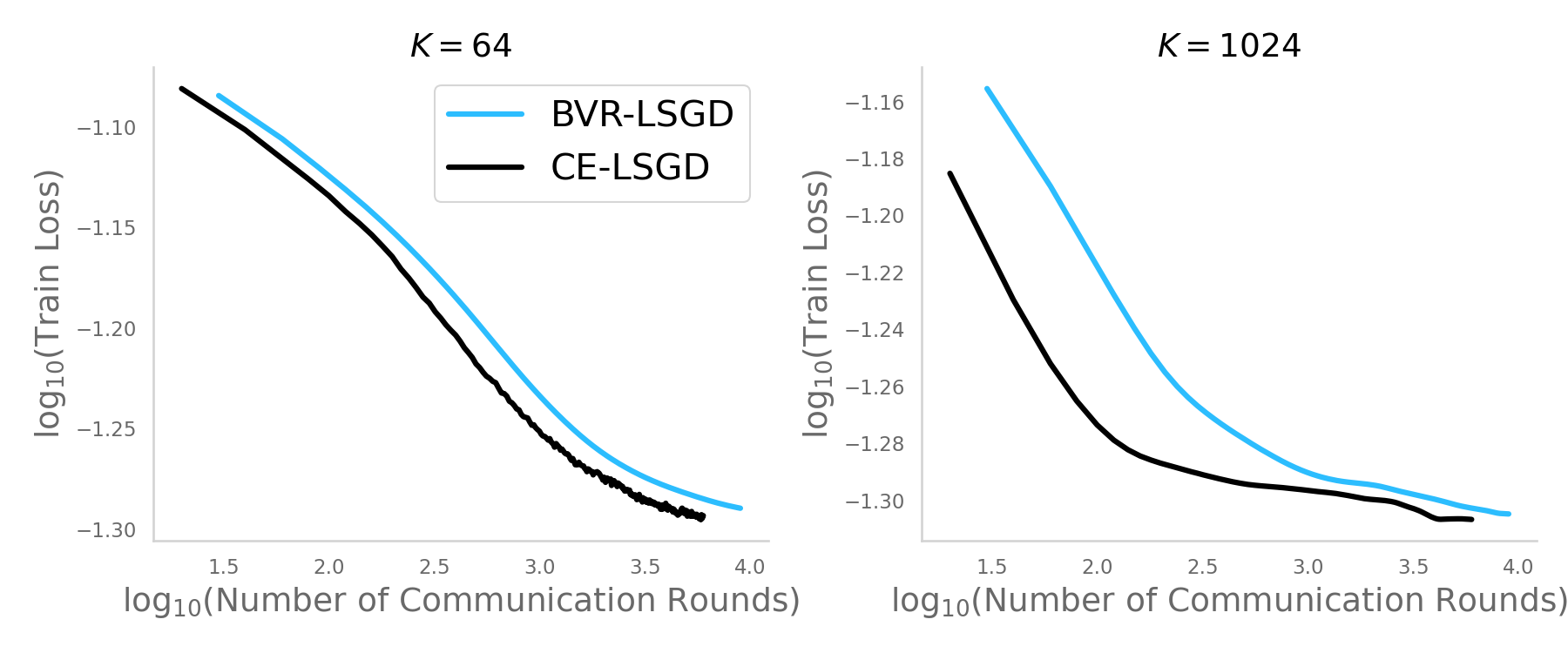}
  \caption{Training loss of CE-LSGD and BVR-L-SGD on CIFAR-10 data-set versus the number of communication rounds in the intermittent communication setting with different local-updates $K$. We use $M=10$ machines, and synthetically generate heterogeneous data-sets (see Section \ref{sec:exps}) with $q=0.1$. All oracle queries use a mini-batch of size $b=16$, i.e., each machine has $Kb$ oracle queries between two communication rounds. We note that our method exhibits faster convergence in all settings, highlighting its communication efficiency. Fixed step-sizes $\eta$ for both the methods were tuned in $\{0.001, 0.005, 0.01, 0.05, 0.1, 0.5\}$ (to obtain best loss) following \citep{murata2021bias}, our method set the momentum $\beta=0.3$, $b_{max}^{our}=K$, while $b_{max}^{BVR} = 5000$ according to \citep{murata2021bias}.}\label{fig:intermittent}
\end{figure}

If one were to implement both methods in the IC model by distributing the large-batch computation across multiple rounds while respecting the local budget $K = \sigma_2 H / (\tau M \epsilon^{1/2})$, then both algorithms would exhibit an effective communication complexity of $\mathcal{O}(\Delta \tau / \epsilon)$. In theory, this washes out the computational differences up to numerical constants.

However, as demonstrated in \Cref{fig:intermittent}, this asymptotic equivalence does not translate to practice: \textsc{CE-LSGD} consistently converges faster than \textsc{BVR-L-SGD}, owing to its fewer and lighter heavy-batch operations.

\subsection{The Gap in the Stochastic Setting}\label{sec:stochastic_gap} According to the results in Table \ref{table:comparision}, there is a gap between the convergence rates of \textsc{CE-LSGD} and \textsc{CE-LGD}, which doesn't go away when $\sigma_2=0$. In particular, the \textcolor{brown}{brown} term in \textsc{CE-LGD}'s upper bound, which doesn't depend on $\sigma_2$, matches the corresponding term in the lower bound, but the \textcolor{brown}{brown} term in \textsc{CE-LSGD}'s upper bound is worse by a factor of $1/\sqrt{K}$. This result comes from a more pessimistic choice of step size in the stochastic setting. 

To further elucidate this, consider a more general communication model. Recall that each machine makes $K$ queries in the IC setting between two communication rounds. We can instead consider the model where each machine is allowed to make $Kb$ queries, but at most $K$ different inputs. Centralized algorithms will make just $Kb$ queries at the same input. For instance, in this model, \textsc{MB-SGD} or \textsc{MB-STORM} will make $R$ updates with batch size $MKb$. However, local update algorithms can make $K$ \textit{``mini-batch''} style queries, i.e., make $b$ repeated queries at the current local iterate. This oracle model has been studied for hierarchical parallelism \citep{lin2018don}. For instance, let's say each machine has access to a GPU. Then, each local update should use the largest batch size $b=b_{max}$ that saturates the GPU's capacity (such as its memory) without requiring additional parallel run-time compared to $b=1$. Modern specialized hardware for deep learning (including FPGAs, TPUs, etc.) is designed with such parallelism, and $b_{max}$ is usually much larger than $1$ \citep{shallue2018measuring}. Thus, if energy usage (i.e., the number of oracle queries) is a non-concern and achieving an accurate solution as quickly as possible is the primary goal, then it is beneficial to consider this hierarchical setting. We can attain the following convergence guarantee for \textsc{CE-LSGD} in this setting. 

\begin{theorem}\label{thm:alg1_full_main}
Suppose we have a problem instance satisfying \Cref{ass:smooth_second,ass:bounded_func_subopt,ass:tau,ass:zeta}, each client $m\in[M]$ has a stochastic two-point oracle (cf. \Cref{def:oracle_first_multi_point}) which it uses through $b$-calls for every single query, and assume that $\frac{\Delta H}{R} \leq \frac{\sigma_2^2}{\sqrt{MKb}}$. Then the output $\tilde x$ of Algorithm \ref{alg:fed_vr} using $\beta=\max\left\{\frac{1}{R}, \frac{(\Delta H)^{2/3}(MKb)^{1/3}}{\sigma_2^{4/3}R^{2/3}}\right\}$, $b_0=KbR$ and $\eta=\min\left\{\frac{1}{H}, \frac{1}{K\tau},\frac{\sqrt{b}}{\sqrt{K}H}, \frac{(\beta MKb)^{1/2}}{HK}\right\}$, satisfies the following
$$\ee\|\nabla F(\tilde x)\|^2\leq c_{14}\cdot\rb{ \frac{\Delta \tau}{R} +\frac{\Delta H}{KR}+\frac{\Delta H}{R\sqrt{Kb}}+\left(\frac{\sigma_2\Delta  H}{MKbR}\right)^{2/3}+\frac{\sigma_2^2}{MKbR}}\enspace.$$
\end{theorem}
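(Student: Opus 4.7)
The plan is to follow the proof of \Cref{thm:alg1}(a) essentially verbatim, but carefully tracking how the inner batch size $b$ propagates through the variance bounds. The key observation is that each oracle query in line 14 of Algorithm~\ref{alg:fed_vr} now averages over $b$ samples, which reduces the per-query stochastic variance from $\sigma_2^2$ to $\sigma_2^2/b$ (by independence across the mini-batch). Similarly, line 6's heavy-batch estimator uses $b_0 = KbR$ samples per machine, so its initial variance is $\sigma_2^2/(Mb_0) = \sigma_2^2/(MKbR)$. The ``effective'' number of stochastic gradients per round per machine becomes $Kb$, so throughout the analysis, whenever a $K$ appears inside a stochastic variance term, it should be replaced by $Kb$, while wherever $K$ arises from the smoothness/heterogeneity drift across local iterates, it remains as $K$.

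\textbf{Key Steps.} First, I would invoke the standard descent lemma for $H$-smooth non-convex functions applied to the iterate sequence $\{w_{r+1,k}^{\tilde m_r}\}$, which yields $F(w_{r+1,k+1}^{\tilde m_r}) \le F(w_{r+1,k}^{\tilde m_r}) - \tfrac{\eta}{2}\|\nabla F(w_{r+1,k}^{\tilde m_r})\|^2 + \eta\|v_{r,k}^{\tilde m_r} - \nabla F(w_{r+1,k}^{\tilde m_r})\|^2$ (up to a $\tfrac{H\eta^2}{2}\|v_{r,k}^{\tilde m_r}\|^2$ term absorbed by $\eta\le 1/H$). Second, I would decompose the gradient estimation error along the lines sketched in \Cref{sec:ch6.2}:
\begin{align*}
\mathbb{E}\bigl[\|v_{r,k}^{\tilde m_r} - \nabla F(w_{r+1,k}^{\tilde m_r})\|^2\bigr]
\;\lesssim\; \underbrace{\mathbb{E}\bigl[\|v_r - \nabla F(x_r)\|^2\bigr]}_{\text{(A) stale global estimator}} \;+\; \underbrace{\tau^2 K \sum_{j\le k}\mathbb{E}\|w_{r+1,j}^{\tilde m_r} - w_{r+1,j-1}^{\tilde m_r}\|^2}_{\text{(B) local drift}} \;+\; \underbrace{\tfrac{H^2}{b}\sum_{j\le k}\mathbb{E}\|w_{r+1,j}^{\tilde m_r}-w_{r+1,j-1}^{\tilde m_r}\|^2}_{\text{(C) SARAH-type inner variance}}.
\end{align*}
The third term is where the batch size $b$ enters: mean-smoothness combined with a size-$b$ mini-batch gives per-step variance $H^2\|w_{r+1,j}^{\tilde m_r}-w_{r+1,j-1}^{\tilde m_r}\|^2/b$ instead of the unbatched $H^2\|\cdot\|^2$. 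For (A), I would expand the STORM-style recursion $v_r - \nabla F(x_r) = (1-\beta)(v_{r-1}-\nabla F(x_{r-1})) + (\text{martingale noise of variance } \le \beta^2\sigma_2^2/(Mb) + (1-\beta)^2 H^2\|x_r-x_{r-1}\|^2/(Mb))$, then telescope.

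\textbf{Parameter Balancing.} Summing the descent inequality over $r\in[R]$ and $k\in[Q]=[K]$ and dividing by $KR$ gives
\[
\mathbb{E}\|\nabla F(\tilde x)\|^2 \;\lesssim\; \frac{\Delta}{\eta KR} + \frac{\beta\sigma_2^2}{Mb} + \frac{\sigma_2^2}{MKbR} + \eta^2 K^2\tau^2 \mathbb{E}\|\nabla F(\tilde x)\|^2 + \eta^2 \tfrac{KH^2}{b}\mathbb{E}\|\nabla F(\tilde x)\|^2 + \text{(STORM terms)},
\]
where I use $\mathbb{E}\|w^{\tilde m_r}_{r+1,j}-w^{\tilde m_r}_{r+1,j-1}\|^2 \le \eta^2 \mathbb{E}\|v^{\tilde m_r}_{r,j-1}\|^2$ and relate the latter to $\mathbb{E}\|\nabla F\|^2$ plus error. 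The constraints $\eta\le 1/(K\tau)$ and $\eta \le \sqrt{b}/(\sqrt{K}H)$ are precisely what make the fourth and fifth terms absorbable into the LHS; the constraint $\eta\le (\beta MKb)^{1/2}/(HK)$ controls the STORM self-coupling. Plugging $b_0=KbR$ into the initial variance and optimizing $\beta = \max\{1/R, (\Delta H)^{2/3}(MKb)^{1/3}/(\sigma_2^{4/3}R^{2/3})\}$ by setting the stale-estimator contribution $\beta\Delta/(\eta KR) + \beta\sigma_2^2/(Mb)$ optimally yields the advertised rate, with the term $\Delta H/(KR)$ coming from $\eta \le 1/(K\tau)$ and $\eta \le 1/H$, $\Delta H/(R\sqrt{Kb})$ from $\eta \le \sqrt{b}/(\sqrt{K}H)$, and the remaining two terms from the STORM variance budget.

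\textbf{Main Obstacle.} The subtlety is not in any single step but in closing the coupled recursion cleanly: the bound on (A) depends on $\mathbb{E}\|x_r - x_{r-1}\|^2$, which in turn depends on $\eta^2\|v\|^2$, which depends on (A) again. I expect the most delicate bookkeeping will be verifying that the new $\sqrt{b}/(\sqrt{K}H)$ restriction on $\eta$ is exactly what is needed so that both the local-drift recursion from (B) and the inner SARAH recursion from (C) contract, and that no cross-term between $\tau$ and $b$ survives the telescoping. Once this is done, the precise form of the four-way minimum in $\eta$ follows by setting each of the four resulting error terms equal to $\Delta/(\eta KR)$ and taking the worst case.
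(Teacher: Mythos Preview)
Your proposal is correct and follows essentially the same approach as the paper's proof: the same descent lemma, the same three-way decomposition of the local gradient estimator error (stale global term, $\tau^2 K$ heterogeneity drift, and $H^2/b$ SARAH-type inner variance), the same STORM recursion for $\|v_r-\nabla F(x_r)\|^2$, and the same four-way step-size minimum arising from absorbing each of these into the left-hand side. Two minor slips to fix when you write it out: the outer STORM variance should be $\beta^2\sigma_2^2/(MKb)$ (not $/(Mb)$), since line~6 uses batch size $B=K$ per machine with $b$-calls; and in your attribution of rate terms, $\Delta\tau/R$ comes from $\eta\le 1/(K\tau)$ while $\Delta H/(KR)$ comes from $\eta\le 1/H$, not both together.
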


When $b=1$, this reduces to Theorem \ref{thm:alg1} since the third term in the upper bound always dominates the second term. In the exact setting as we show in Appendix \ref{sec:app_proof_alg1}, the last three terms go away altogether. Using arguments similar to the ones given in Appendix \ref{sec:non_convex_lb} (to prove Theorem \ref{thm:lb}), we can show that every term except the third term is tight in Theorem \ref{thm:alg1_full_main}. We currently don't know how to eliminate the loose third term, but as is apparent from the theorem, setting $b=K$ suffices to recover the min-max optimal guarantee, even in the stochastic setting.

\section{Empirical Study}\label{sec:exps}

We evaluate the performance of our method by optimizing a two-layer fully connected network for multi-class classification on the CIFAR-10 \citep{krizhevsky2009learning} dataset. Since we are in a heterogeneous setting, we need to generate a dataset artificially. We follow the same data processing procedure as in \citet{murata2021bias}. We first make sure that all the ten classes in CIFAR-10 have the same number of samples (roughly around 5000), and assign $q\times 100\%$ of class $m$'s samples to client $m\in[10]$, where $q$ is chosen from $\{0.1,0.35,0.6,0.85\}$. For each class $m$, we evenly split the remaining $(1-q)\times 100\%$ samples to the other nine clients except client $m$. Thus, $q$ controls the heterogeneity of our dataset, with small $q$ corresponding to small heterogeneity.

\begin{figure*}[!ht]%
  \begin{center}
    \includegraphics[width=0.6\textwidth]{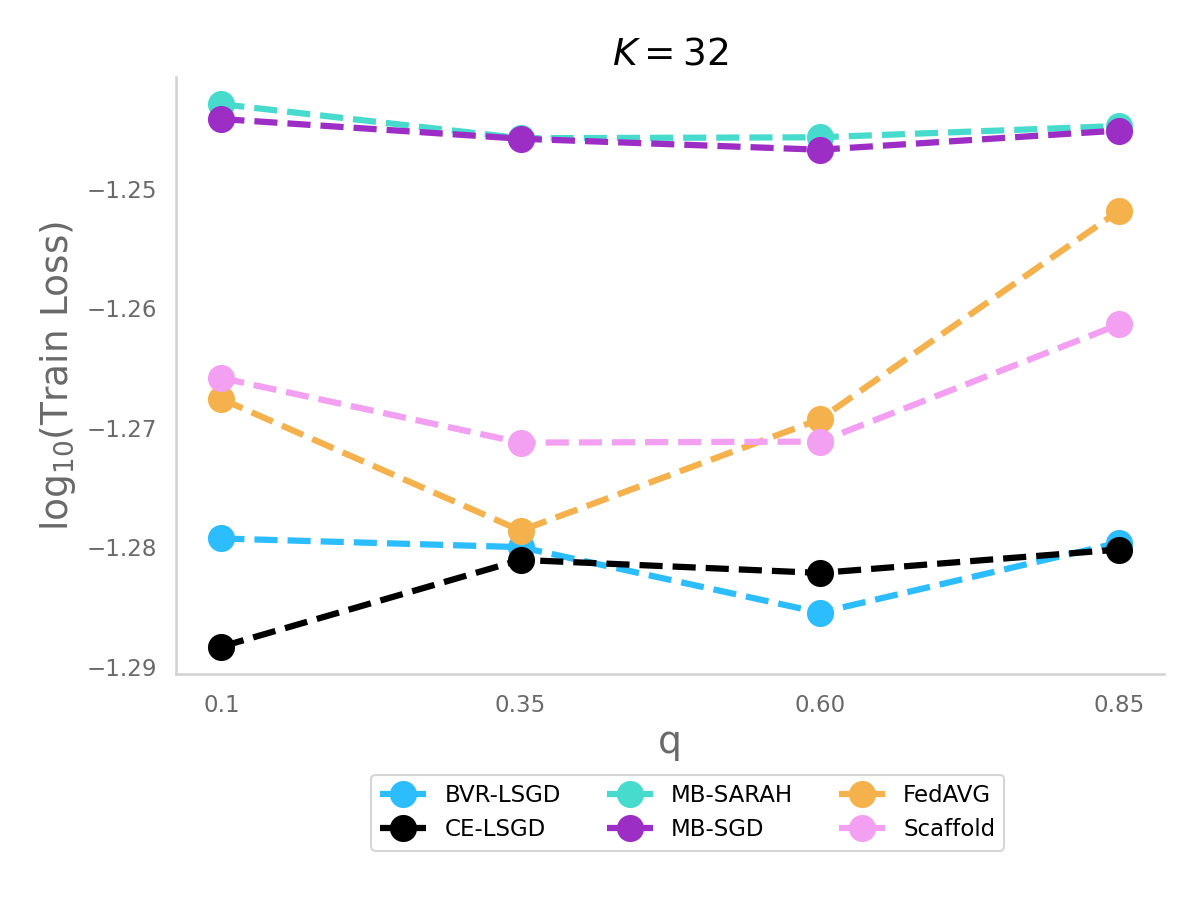}
  \end{center}
  \caption{Comparing CE-LSGD to centralized and local-update methods, for fixed $K=32$ and varying heterogeneity controlled by $q$ on CIFAR-10 data-set. Like Figure \ref{fig:intermittent}, we use mini-batch size $b=16$ for each oracle query. Thus, each method makes $Kb$ oracle queries every round per machine. All the methods for different $q$ are tuned separately, following a similar hyperparameter search as in Figure \ref{fig:intermittent}.}
\end{figure*}

We perform two different experiments. In the first experiment, we directly compare our method, i.e., \textsc{CE-LSGD}, with \textsc{BVR-L-SGD} in the intermittent communication setting (see Figure \ref{fig:intermittent}). We observe that while both methods converge to a similar quality of solution eventually, our method is more communication-efficient. In the second experiment, we compare our method with BVR-L-SGD \citep{murata2021bias} as well as \textsc{FedAvg} \citep{mcmahan2016communication}, \textsc{SCAFFOLD} \citep{karimireddy2020scaffold}, \textsc{MB-SARAH} \citep{nguyen2017sarah} and \textsc{MB-SGD} \citep{dekel2012optimal} for the same number of updates/iterations. The last two methods are centralized baselines, and we use the local computation to compute a mini-batch stochastic gradient. We again observe that \textsc{CE-LSGD} and \textsc{BVR-L-SGD} have comparable performance, which is superior to that of all the other methods. 

This concludes our discussion for the non-convex setting. The key takeaway from the non-convex setting is that second-order heterogeneity can help us characterize the communication complexity of optimization, much like the convex case in \Cref{ch:fixed_point,ch:upper_bounds}.

\chapter{Distributed Online and Bandit Convex Optimization}\label{ch:online}

So far in this thesis, we have assumed that the data distribution on each machine remains fixed over time: for each machine \( m \in [M] \), the distribution \( \mathcal{D}_m \)  does not vary across different queries to the oracle (as defined in \Cref{def:oracle_first}). This fixed-distribution setup, formalized in Problem~\eqref{prob:scalar}, underpins much of the theory in distributed stochastic optimization. However, many real-world applications---such as mobile keyboard prediction~\citep{hard2018federated, chen2019federated, google_ai_blog_2021}, autonomous driving~\citep{elbir2020federated, nguyen2022deep}, voice assistants~\citep{hao2020apple, googleVoice}, and recommendation systems~\citep{shi2021federated, liang2021fedrec++, khan2021payload}---involve \emph{sequential} decision-making in dynamic environments. In these settings, data is generated in real-time and often cannot be stored due to memory or privacy constraints. Moreover, these services must improve continuously while deployed, which requires that all models remain reasonably accurate at all times.

To address these challenges, this chapter develops a systematic theory of \emph{federated online optimization}. We identify settings where collaboration among clients provably helps and characterize when such benefits do not arise. Our contributions are as follows:
\begin{enumerate}
    \item In Theorems~\ref{thm:first_lip} and~\ref{thm:first_smth}, we show that collaboration offers no worse-case benefit when each machine has access to full gradients (first-order feedback) and simply running online gradient descent independently on each client is already min-max optimal.

    \item Motivated by the limitations of first-order feedback, we study the federated adversarial linear bandits problem. We propose a new one-point feedback algorithm, \textsc{FedPOSGD} (Algorithm~\ref{alg:fed_pogd}), and prove that collaboration improves regret in high-dimensional settings (Theorem~\ref{thm:favlb}), outperforming non-collaborative baselines.

    \item Next, we extend our analysis to general convex cost functions using two-point bandit feedback. We analyze a natural online variant of Local SGD, \textsc{FedOSGD} (Algorithm~\ref{alg:fed_ogd}), and prove that collaboration reduces stochastic gradient variance, enabling tighter regret bounds (Theorems~\ref{thm:bd_grad_first_stoch} and~\ref{thm:smooth_first_stoch}). We also demonstrate that two-point feedback strictly improves regret, even for linear objectives (Corollary~\ref{coro:linear}), showing that multi-point feedback can outperform one-point methods in federated adversarial bandits.
\end{enumerate}

\subsection*{Outline and Relevant References}

Although many recent attempts \citep{Wang2020Distributed,dubey2020differentially,huang2021federated,li2022asynchronous,he2022simple,gauthier2022resource, gogineni2022communication, dai2022addressing, kuh2021real, mitra2021online} have been made towards tackling online optimization for FL, most existing theoretical works \citep{Wang2020Distributed,dubey2020differentially,huang2021federated,li2022asynchronous} study \textit{``stochastic"} adversaries. The results in this chapter are the first of their kind, as they tackle fully adaptive adversaries and are based on our work \cite{patel2023federated}, co-authored with Lingxiao Wang, Aadirupa Saha, and Nathan Srebro. 

In \Cref{sec:ch7.1,sec:ch7.2} we describe our regret minimization problem, which is a direct extension of the classic online convex optimization problem~\cite{hazan2016introduction} to the distributed setting. Other theoretical works that have considered stochastic adversaries~\citep{Wang2020Distributed,dubey2020differentially,huang2021federated,li2022asynchronous} have examined similar regret notions, differing primarily in the distinction between regret and pseudo regret. \Cref{sec:first_does_not_help} discusses new lower bounds for our problem under first-order feedback based on careful reductions to folklore lower bounds and a recent lower bound due to \citet{woodworth2021even} for stochastic optimization.

\Cref{sec:falb} provides our algorithm with a single zeroth-order feedback, which takes inspiration from the classical zeroth-order algorithm of~\citet{flaxman2004online} as well as lazy mirror descent methods~\citep{nesterov2009primal,bubeck2015convex,yuan2021federated}. Finally, \Cref{sec:2falb} discusses our other new algorithm, which utilizes two-point bandit feedback, along with improved regret guarantees. Two-point feedback is well-studied in the single-agent setting, and our algorithm and analyses are indeed inspired by existing results, due to \citet{duchi2015optimal,shamir2017optimal}.

\section{Distributed Regret Minimization}\label{sec:ch7.1}
The challenges of an online environment call for new methods that can enable collaboration across machines while being robust to changing data distributions—i.e., distribution shift. We formalize this setting, in line with the rest of the thesis, through the following distributed regret minimization problem over \( M \) machines and a time horizon of length \( T \):
\begin{align}\label{eq:online_problem}
\frac{1}{MT}\sum_{m \in [M],\ t \in [T]} f_t^m(x_t^m) - \min_{\|x^\star\|_2 \leq B} \frac{1}{MT}\sum_{m \in [M],\ t \in [T]} f_t^m(x^\star)\enspace,
\end{align}
where \( f_t^m \) is a convex cost function revealed to machine \( m \) at time \( t \), \( x_t^m \) is the model selected by that machine based on available history, and the comparator \( x^\star \), shared across all machines and time steps, satisfies \( \|x^\star\|_2 \leq B \) (cf. \Cref{ass:bounded_optima}). We study this problem under the \emph{intermittent communication} setting (see \Cref{fig:IC}), where machines play fresh models at every time step but are allowed to communicate only \( R \) times over the \( T \) steps, with \( K = T/R \) steps between communication rounds. This formulation generalizes the classical federated optimization setup~\citep{mcmahan2016communication, kairouz2019advances} in Problem~\eqref{prob:scalar}, introducing new challenges arising from sequential decision-making and potentially adversarial cost sequences. Unlike standard federated learning, which aims to learn a single high-quality consensus model, the objective here is to generate a sequence of models that perform well at every round.

While many recent works~\citep{Wang2020Distributed,dubey2020differentially,huang2021federated,li2022asynchronous,he2022simple,gauthier2022resource,gogineni2022communication,dai2022addressing,kuh2021real,mitra2021online} have tackled Problem~\eqref{eq:online_problem}, most focus on the \emph{``stochastic online''} setting, where the functions \( \{f_t^m\} \) are sampled from distributions fixed at time $t=0$. This assumption fails to capture various real-world scenarios involving unmodeled perturbations, distribution shifts, or even adaptively chosen cost sequences. The stochastic online setup is not far from the static-distribution formulation of Problem~\eqref{prob:scalar}, where machines interact with first-order oracles as defined in \Cref{def:oracle_first}.

Although several recent works~\citep{gauthier2022resource, gogineni2022communication, dai2022addressing, kuh2021real, mitra2021online, he2022simple} have underscored the significance of adaptive settings, our theoretical understanding of regret guarantees for Problem~\eqref{eq:online_problem}—particularly under intermittent communication—remains limited. The objective of this chapter is to advance this understanding by studying distributed online and bandit convex optimization against \emph{adaptive} adversaries capable of generating worst-case sequences of cost functions.

In the next section, we begin by generalizing some of the core assumptions from \Cref{ch:setting} to the online setting.

\section{Our Setting and Some Baselines}\label{sec:ch7.2}

We denote the average cost function at any time step \( t \in [T] \) by \( f_t(\cdot) := \frac{1}{M} \sum_{m \in [M]} f_t^m(\cdot) \). We use \( \mathbb{I}_A \) to denote the indicator function for an event \( A \), and \( \mathbb{B}_2(B) \subset \mathbb{R}^d \) to denote the \( \ell_2 \)-ball of radius \( B \) centered at the origin.

\subsection{Regularity Conditions}

As the name of this chapter suggests, we focus on convex cost functions.

\begin{assumption}\label{ass:online_convex}
    For all \( t \in [T] \) and \( m \in [M] \), the function \( f_t^m: \mathbb{R}^d \to \mathbb{R} \) is differentiable and convex.
\end{assumption}

We also consider two types of Lipschitz conditions.

\begin{assumption}[Bounded Gradients]\label{ass:online_bounded_gradients}
    For all \( t \in [T] \) and \( m \in [M] \), the function \( f_t^m \) is \( G \)-Lipschitz, i.e.,
    \begin{align*}
        |f_t^m(x) - f_t^m(y)| \leq G \|x - y\|_2\enspace, \qquad \forall x, y \in \mathbb{R}^d\enspace.
    \end{align*}
    For differentiable functions, this is equivalent to assuming \( \|\nabla f_t^m(x)\| \leq G \) for all \( x \in \mathbb{R}^d \).
\end{assumption}

\begin{assumption}[Lipschitz Gradients]\label{ass:online_smooth_second}
    For all \( t \in [T] \) and \( m \in [M] \), the function \( f_t^m \) has \( H \)-Lipschitz gradients, i.e.,
    \begin{align*}
        \|\nabla f_t^m(x) - \nabla f_t^m(y)\| \leq H \|x - y\|_2\enspace, \qquad \forall x, y \in \mathbb{R}^d\enspace.
    \end{align*}
    This is equivalent to each \( f_t^m \) being \( H \)-second-order smooth.
\end{assumption}

These assumptions generalize \Cref{ass:bounded_gradients,ass:smooth_second} to the online setting. We will also consider the following special case:

\begin{assumption}[Linear Cost Functions]\label{ass:online_linear}
    For all \( t \in [T] \) and \( m \in [M] \), the function \( f_t^m \) is linear, i.e.,
    \begin{align*}
        \nabla f_t^m(a x + b y) = a \nabla f_t^m(x) + b \nabla f_t^m(y)\enspace, \qquad \forall a, b \in \mathbb{R},\ x, y \in \mathbb{R}^d\enspace.
    \end{align*}
\end{assumption}

This assumption is satisfied in adversarial linear bandit problems, one of the most commonly studied settings in online optimization. Note that linear functions satisfy \Cref{ass:online_smooth_second} with \( H = 0 \), making them the ``smoothest'' convex functions.

\subsection{Adversary Model}

In the most general setting, each machine may face arbitrary functions from a function class \( \mathcal{F} \) at each time step—for example, the class of convex and smooth functions satisfying \Cref{ass:online_convex,ass:online_smooth_second}. We analyze algorithms under this general setting, often referred to as an \textbf{\textit{adaptive}} adversary model.

Specifically, we allow the adversary to generate functions based on the history of past models but not on the internal randomness of the learning algorithms. Formally, define the filtration at time \( t \in [T] \) as\footnote{We use \( \sigma(S) \) to denote the sigma-algebra generated by a set \( S \).}
\begin{align*}
    \mathcal{H}_t := \sigma\left( \left\{ \{x_l^n\}_{l \in [t-1]}^{n \in [M]}, \{f_l^n\}_{l \in [t-1]}^{n \in [M]} \right\} \right)\enspace.
\end{align*}
Let \( P \in \mathcal{P} \) denote an adversary in a given class \( \mathcal{P} \)\footnote{Compare to the problem class discussed in \Cref{sec:min_max}.}. We assume the adversary outputs a distribution \( \mathcal{E}_t \in \Delta(\mathcal{F}^{\otimes M}) \) over functions at each time step:
\[
    P(\mathcal{H}_t, A) = \mathcal{E}_t \quad \text{and} \quad \{f_t^m\}_{m \in [M]} \sim \mathcal{E}_t\enspace,
\]
where \( A \in \mathcal{A} \) denotes the learning algorithm. In Section~\ref{sec:related}, we will discuss when randomization helps or does not help the adversary.

In contrast to the \emph{adaptive} setting, a \emph{stochastic adversary} must fix a joint distribution \( \mathcal{E} \in \Delta(\mathcal{F}^{\otimes M}) \) ahead of time and sample independently at each round:
\[
    \{f_t^m\}_{m \in [M]} \sim \mathcal{E} \quad \text{for all } t \in [T]\enspace.
\]
We denote the class of such stochastic adversaries by \( \mathcal{P}_{\text{stoc}} \). A detailed discussion of this distinction appears in Section~\ref{sec:related}. In particular, note that for Problem~\eqref{prob:scalar} $\eee = \ddd_1\times \dots \times \ddd_M$, i.e., the product distribution of the machines' data distributions.

As in the stochastic setting, it is useful to impose a notion of data heterogeneity to limit the adversary’s power. We use the following assumption, which controls the variation of gradients across machines.

\begin{assumption}[Bounded First-Order Heterogeneity]\label{ass:online_zeta}
    Suppose \( f_t^m \) satisfies \Cref{ass:online_convex,ass:online_smooth_second} for all \( t \in [T] \) and \( m \in [M] \). Then there exists \( \hat{\zeta} \leq 2G \) such that for all \( x \in \mathbb{R}^d \),
    \[
        \frac{1}{M} \sum_{m \in [M]} \| \nabla f_t^m(x) - \nabla f_t(x) \|_2^2 \leq \hat{\zeta}^2\enspace.
    \]
\end{assumption}
\begin{remark}
    The parameter \( \hat{\zeta} \) quantifies the degree of allowable heterogeneity across machines and thus controls the power of the adversary. A larger value of \( \hat{\zeta} \) permits the adversary to choose more diverse cost functions across machines. In particular, when \( \hat{\zeta} \geq 2G \), the constraint in \Cref{ass:online_zeta} becomes vacuous, allowing the adversary to select any collection of \( M \) functions satisfying \Cref{ass:online_convex,ass:online_bounded_gradients}. In contrast, when \( \hat{\zeta} = 0 \), the gradients of all functions must be identical at every point, forcing the adversary to assign the same cost function to every machine at each time step.
\end{remark}

\begin{remark}[Comparison to \Cref{ass:zeta}]
    In the stochastic setting, \Cref{ass:zeta} bounds the \emph{expected} gradient variance across machines, where each cost function is of the form \( f_t^m(x) = f(x; z_t^m) \) with \( z_t^m \sim \mathcal{D}_m \). That is, the expectation is taken over the randomness of sampling from \( \mathcal{D}_m \). In contrast, \Cref{ass:online_zeta} imposes a \emph{pointwise} bound on the deviation of gradients across machines at each time step. To see the relationship between the two assumptions, observe that:
    \begin{align*}
        &\ee\sb{\frac{1}{M} \sum_{m \in [M]} \| \nabla f_t^m(x) - \nabla f_t(x) \|_2^2}= \ee\sb{\frac{1}{M} \sum_{m \in [M]} \| \nabla f(x; z_t^m) - \nabla F(x) \|_2^2}\enspace,\\
        &\quad= \ee\sb{\frac{1}{M} \sum_{m \in [M]} \| \nabla f(x; z_t^m) - \nabla F_m(x) + \nabla F_m(x) - \nabla F(x) \|_2^2}\enspace,\\
        &\quad\leq 2\ee\sb{\frac{1}{M} \sum_{m \in [M]} \norm{\nabla f(x; z_t^m) - \nabla F_m(x)}} + \frac{2}{M}\sum_{m \in [M]}\norm{\nabla F_m(x) - \nabla F(x)}^2\enspace,\\
        &\quad\leq^{\text{(\Cref{ass:stoch_bounded_second_moment})}} 2\sigma_2^2 + \frac{2}{M^2}\sum_{m,n \in [M]}\norm{\nabla F_m(x) - \nabla F_n(x)}^2\enspace,\\
        &\quad\leq^{\text{(\Cref{ass:zeta})}} 2\sigma_2^2 + 2H^2\zeta^2\enspace.
    \end{align*}
    Thus, even in the homogeneous case where \( \mathcal{D}_m = \mathcal{D} \) for all \( m \), the parameter \( \zeta \) in \Cref{ass:zeta} is zero, but \( \hat{\zeta} \) in \Cref{ass:online_zeta} may still be nonzero unless \( \mathcal{D} \) is a Dirac distribution (i.e., has zero variance). In this sense, \Cref{ass:online_zeta} provides a stricter, distribution-free control on cross-machine heterogeneity that holds deterministically at each round, rather than in expectation over the data.
\end{remark}

We now introduce an assumption on the average optimal function value, analogous to \Cref{ass:bounded_optima,ass:bounded_func_subopt} in the stochastic setting.

\begin{assumption}[Average Value at Optima]\label{ass:online_bounded_optimal}
    For all \( x^\star \in \arg\min_{x \in \mathbb{B}_2(B)} \sum_{t \in [T]} f_t(x) \), we have\footnote{Note that \( F_\star \) can always be defined retrospectively once all cost functions have been revealed.}
    \[
        \frac{1}{T} \sum_{t \in [T]} f_t(x^\star) \leq F_\star\enspace.
    \]
\end{assumption}

\begin{remark}
    For non-negative functions satisfying \Cref{ass:online_convex,ass:online_smooth_second}, \Cref{ass:online_bounded_optimal} implies a bound on the average squared gradient norm at the global optimum:
    \begin{align*}
        \frac{1}{T} \sum_{t \in [T]} \| \nabla f_t(x^\star) \|_2^2 
        \leq^{\text{(\Cref{ass:online_smooth_second})}} \frac{1}{T} \sum_{t \in [T]} 2H \left(f_t(x^\star) - \min_{x_t^\star \in \mathbb{R}^d} f_t(x_t^\star)\right) \leq 2H F_\star\enspace.
    \end{align*}
    Thus, \Cref{ass:online_bounded_optimal} serves as an online analogue of \Cref{ass:zeta_star,ass:phi_star}, capturing a form of first-order regularity at the optimum across time (cf. \Cref{rem:other_first_order_ass}).
\end{remark}

\subsection{Oracle Model} 

We consider three types of oracle access to the cost functions in this paper. At each time step \( t \in [T] \), every machine \( m \in [M] \) interacts with its local cost function \( f_t^m \) through one of the following modes of feedback:

\begin{enumerate}
    \item \textbf{Gradient access}: the machine receives the gradient \( \nabla f_t^m(x_t^m) \) at a single point \( x_t^m \in \mathbb{R}^d \); this is referred to as \emph{first-order feedback}.
    \item \textbf{Single function value}: the machine receives the function value \( f_t^m(x_t^m) \) at a single point \( x_t^m \); this corresponds to \emph{one-point bandit feedback}.
    \item \textbf{Two function values}: the machine receives function values \( \rb{f_t^m(x_t^{m,1}), f_t^m(x_t^{m,2})} \) at two points \( x_t^{m,1}, x_t^{m,2} \in \mathbb{R}^d \); this is referred to as \emph{two-point bandit feedback}.
\end{enumerate}

The oracle model formally specifies how the learner (agent) interacts with the environment. We define these oracles below.

\begin{definition}[Online First-Order Oracle]\label{def:online_first}
    Each machine \( m \in [M] \) is equipped with an oracle \( \mathcal{O}_m : \mathbb{R}^d \times [T] \to \mathbb{R}^d \) such that, for any time \( t \in [T] \), querying the oracle with \( x_t^m\in\rr^d \) yields
    \[
        \mathcal{O}_m(x_t^m; t) = \nabla f_t^m(x_t^m)\enspace.
    \]
\end{definition}

This model parallels the stochastic first-order oracle in \Cref{def:oracle_first}. In particular, in the stochastic setting where \( f_t^m(x) = f(x; z_t^m) \), querying either oracle yields the same gradient.\footnote{Strictly speaking, the stochastic oracle in \Cref{def:oracle_first} may return any unbiased estimate of \( \nabla F_m(x) \), but in typical learning problems this corresponds to \( \nabla f(x; z_t^m) \).}

We next define a generalization of the bandit oracle to the online setting, analogous to the multi-point oracle in \Cref{def:oracle_first_multi_point}, but now for zeroth-order information:

\begin{definition}[Online Bandit Multi-Point Oracle]\label{def:online_zero_n}
    Each machine \( m \in [M] \) is equipped with an oracle 
    \[
        \mathcal{O}_m : (\mathbb{R}^d)^{\otimes n} \times [T] \to \mathbb{R}^n
    \]
    such that, for any \( t \in [T] \), querying the oracle with \( x_t^{m,1}, \dots, x_t^{m,n} \in \rr^d\) returns
    \[
        \mathcal{O}_m\rb{x_t^{m,1}, \dots, x_t^{m,n}; t} = \rb{f_t^m(x_t^{m,1}), \dots, f_t^m(x_t^{m,n})}\enspace.
    \]
\end{definition}

In this paper, we consider the case \( n = 1 \) (one-point feedback) and \( n = 2 \) (two-point feedback). Note that we always evaluate regret at the points where the oracle is queried, consistent with our deployment-centric view: querying an oracle is equivalent to deploying a model and thus incurring the corresponding cost.

Under one-point feedback, Problem~\eqref{eq:online_problem} remains well-defined: machine \( m \) simply queries the oracle at \( x_t^m \), and incurs loss \( f_t^m(x_t^m) \). In the two-point feedback setting, if machine \( m \) queries points \( x_t^{m,1} \) and \( x_t^{m,2} \) at time \( t \), it incurs the cumulative cost 
\(
    f_t^m(x_t^{m,1}) + f_t^m(x_t^{m,2}),
\)
(cf. \Cref{thm:bd_grad_first_stoch}).

\subsection{Algorithm Class}

We assume that the algorithm on each machine may depend on its entire local history, as well as any information shared through communication. Using the notation that the input and output of the oracle on machine $m\in[M]$ and time $t\in[T]$ is denoted by $I_t^m$ $O_t^m$ respectively we can formally define the information available to machine \( m\in[M] \) at time $t\in[T]$ as
\[
    \mathcal{G}_t^m := \sigma\rb{\left\{
        \{I_l^n\}_{n \in [M],\ l \in [\delta(t-1)]},\ 
        \{O_l^n\}_{n \in [M],\ l \in [\delta(t-1)]},\ 
        \{I_l^m\}_{l \in [t-1]},\ 
        \{O_l^m\}_{l \in [t-1]}
    \right\}}\enspace,
\]
where \( \delta(t) = t - t \bmod K \) denotes the most recent communication round before or at time \( t \). This construction captures the information structure of the intermittent communication (IC) setting.

We denote the class of online algorithms with the above information structure by \( \mathcal{A}_{\text{online-IC}} \), and further specify the type of oracle access using superscripts:
\begin{itemize}
    \item \( \mathcal{A}_{\text{online-IC}}^1 \): first-order feedback (gradient access),
    \item \( \mathcal{A}_{\text{online-IC}}^0 \): one-point bandit feedback,
    \item \( \mathcal{A}_{\text{online-IC}}^{0,2} \): two-point bandit feedback.
\end{itemize}

Formally, for algorithms \( \{A_m\}_{m \in [M]} \in \mathcal{A}_{\text{online-IC}}^1 \) or \( \mathcal{A}_{\text{online-IC}}^0 \), each machine’s model at time \( t \) is given by
\[
    A_m(\mathcal{G}_t^m) = X_t^m \in \Delta(\mathbb{R}^d)\enspace,
\]
i.e., a randomized selection of a single point in \( \mathbb{R}^d \). In contrast, for algorithms \( \{A_m\}_{m \in [M]} \in \mathcal{A}_{\text{online-IC}}^{0,2} \), which operate under two-point bandit feedback, the output at time \( t \) is a randomized pair of points:
\[
    A_m(\mathcal{G}_t^m) = (X_t^{m,1}, X_t^{m,2}) \in \Delta(\mathbb{R}^d \times \mathbb{R}^d)\enspace.
\]

In both cases, the variables \( X \) represent the internal randomization used by the algorithm to determine the model(s) \( x \) played by the machine. As we will observe later in this chapter, such randomization is essential in the bandit-feedback setting to ensure low regret.

\subsection{Min-Max Regret}

We now have all the components in place to define the appropriate notion of min-max complexity for Problem~\eqref{eq:online_problem} (cf. \Cref{sec:min_max}). Let \( \mathcal{P} \) denote a class of adversaries and \( \mathcal{A} \) a class of algorithms with single-point feedback. The corresponding min-max regret is defined as:
\begin{align}\label{eq:P2}
    \mathcal{R}(\mathcal{P}, \mathcal{A}) := \min_{A \in \mathcal{A}} \max_{P \in \mathcal{P}} 
    \mathbb{E}_{P,A}\left[
        \frac{1}{MT} \sum_{t \in [T],\ m \in [M]} f_t^m(x_t^m) 
        - \min_{x^\star \in \mathbb{B}_2(B)} \frac{1}{MT} \sum_{t \in [T],\ m \in [M]} f_t^m(x^\star)
    \right]\enspace,
\end{align}
where the expectation is taken over both sources of randomness: (1) the algorithm's internal randomization in selecting models \( \{x_t^m \sim A(\mathcal{G}_t^m)\} \), and (2) the adversary’s selection of functions \( \{\{f_t^m\}_{m \in [M]} \sim \mathcal{P}(\mathcal{H}_t, A)\}_{t \in [T]} \).\footnote{Recall that \( \mathcal{H}_t \) denotes the full history (or sigma algebra) of functions and models played by all machines up to time \( t-1 \).}

In the case of two-point feedback, the min-max regret is similarly defined as:
\begin{align*}
    \mathcal{R}(\mathcal{P}, \mathcal{A}) := \min_{A \in \mathcal{A}} \max_{P \in \mathcal{P}} 
    \mathbb{E}_{P,A}\left[
        \frac{1}{2MT} \sum_{t \in [T],\ m \in [M],\ j \in [2]} f_t^m(x_t^{m,j}) 
        - \min_{x^\star \in \mathbb{B}_2(B)} \frac{1}{MT} \sum_{t \in [T],\ m \in [M]} f_t^m(x^\star)
    \right]\enspace,
\end{align*}
where the expectation over the algorithm is with respect to the random choice of \( (x_t^{m,1}, x_t^{m,2}) \sim A(\mathcal{G}_t^m) \) for all \( t \in [T] \) and \( m \in [M] \).

The goal of this chapter is to characterize (up to numerical constants) the min-max regret \( \mathcal{R}(\mathcal{P}, \mathcal{A}) \) for the different adversary and algorithm classes introduced earlier in this section.

\begin{remark}[Randomization and Min-Max Games]
    In this min-max formulation, the second player—the adversary—does not benefit from randomization. That is, the worst-case regret is attained by a deterministic choice of functions. Hence, we can restrict attention to deterministic adversary classes \( \mathcal{P} \) and drop the expectation with respect to \( P \) in the above definitions.\footnote{It becomes meaningful to retain adversarial randomization when comparing to weaker benchmarks, such as those in Problem (P3) in \Cref{sec:related}.}

    It is also important to note that the max player (the adversary) does not have access to the internal randomness of the min player (the algorithm). This asymmetry is crucial in the analysis and matches the standard formulation of online learning games (cf. Problem (P1) in \Cref{sec:related}).
\end{remark}
In the next section we will first study online optimization with first-order feedback. 

\section{Collaboration Does Not Help with First-Order Feedback}\label{sec:first_does_not_help}

In this section, we demonstrate that collaboration does not improve the min-max regret for the adaptive online optimization Problem~\eqref{eq:online_problem} under first-order feedback---even though it is known to help in the stochastic setting for solving Problem~\eqref{prob:scalar}. Specifically, we characterize the min-max complexity for optimizing cost functions satisfying \Cref{ass:online_bounded_gradients,ass:online_smooth_second} using the algorithm class \( \mathcal{A}_{\text{IC}}^1 \), i.e., algorithms that receive one gradient per cost function on each machine at each time step.

This problem is well understood in the serial (single-machine) setting, i.e., when \( M = 1 \). In particular, Online Gradient Descent (OGD)~\citep{zinkevich2010parallelized,hazan2016introduction} is known to attain the min-max regret for both Lipschitz and smooth cost functions~\citep{woodworth2021even}. A natural question is whether a distributed version of OGD remains min-max optimal when \( M > 1 \). Surprisingly, the answer is negative.

To make this precise, we introduce a simple non-collaborative algorithm: each machine runs OGD independently, without any communication (see Algorithm~\ref{alg:NC_OGD}). We show that this algorithm, which belongs to \( \mathcal{A}_{\text{IC}}^1 \), is in fact min-max optimal—implying that collaboration yields no improvement in this setting.

\vspace{1em}

{\LinesNumbered
\SetAlgoVlined
\begin{algorithm2e}[!ht]
	\caption{Non-Collaborative \textsc{OGD} (\( \eta \))}\label{alg:NC_OGD}
	Initialize \( x_0^m = 0 \) on all machines \( m \in [M] \)\;
	\For(\tcp*[f]{Across total time steps}){\( t \in \{0, \ldots, KR - 1\} \)}{
		\For(\tcp*[f]{Each machine runs independently}){\( m \in [M] \) \textbf{in parallel}}{
			Play model \( x_t^m \) and observe cost function \( f_t^m(\cdot) \)\;
			\textcolor{red}{\textbf{Incur loss} \( f_t^m(x_t^m) \)}\;
			Compute gradient \( \nabla f_t^m(x_t^m) \)\;
			Update: \( x_{t+1}^m \gets x_t^m - \eta \cdot \nabla f_t^m(x_t^m) \)\;
		}
	}
\end{algorithm2e}
}

\vspace{1em}

We now state the main theorems showing the optimality of Algorithm~\ref{alg:NC_OGD}.

\begin{theorem}[Optimality for Lipschitz Functions]\label{thm:first_lip}
    Let \( \mathcal{P} \) be a problem class satisfying \Cref{ass:online_convex,ass:online_bounded_gradients,ass:online_zeta}. Then the min-max regret satisfies
    \[
        \mathcal{R}(\mathcal{P}, \mathcal{A}_{\text{IC}}^1) = \Theta\left( \frac{GB}{\sqrt{T}} \right),
    \]
    and Algorithm~\ref{alg:NC_OGD} achieves this optimal rate.
\end{theorem}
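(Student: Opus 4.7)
\medskip
\noindent\textbf{Proof Plan for \Cref{thm:first_lip}.}

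\medskip

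For the upper bound, I would invoke the classical online gradient descent analysis (see, e.g., \cite{zinkevich2010parallelized,hazan2016introduction}) separately on each machine. Since each machine $m$ runs OGD in isolation on the sequence $\{f_t^m\}_{t \in [T]}$, and \Cref{ass:online_convex,ass:online_bounded_gradients} hold for these cost functions, choosing step size $\eta = B/(G\sqrt{T})$ yields
\[
\frac{1}{T}\sum_{t \in [T]} f_t^m(x_t^m) - \min_{\|x^\star\| \leq B} \frac{1}{T}\sum_{t \in [T]} f_t^m(x^\star) \leq \frac{GB}{\sqrt{T}}\enspace.
\]
Averaging over $m \in [M]$ and using that the min of averages is at least the average of mins delivers the matching upper bound on the federated regret defined in \eqref{eq:P2}.

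\medskip

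For the lower bound, my plan is to exhibit a homogeneous hard instance that collapses the federated problem to the well-known one-dimensional adversarial lower bound, so that collaboration cannot be leveraged by any algorithm in $\mathcal{A}_{\text{online-IC}}^{1}$. Specifically, I would appeal to Yao's minimax principle and consider a randomized adversary that draws $g_1, \ldots, g_T$ i.i.d.\ uniformly from $\{-G, +G\}$, and sets $f_t^m(x) = g_t \cdot \langle e_1, x\rangle$ for every machine $m \in [M]$. This instance satisfies \Cref{ass:online_convex,ass:online_bounded_gradients} with constant $G$, and since all machines receive identical functions it trivially satisfies \Cref{ass:online_zeta} with $\hat\zeta = 0$.

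\medskip

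The key step, and the main conceptual point, is that for any algorithm $A \in \mathcal{A}_{\text{online-IC}}^1$, the played iterate $x_t^m$ is measurable with respect to $\mathcal{G}_t^m$, which depends only on $g_1, \ldots, g_{t-1}$ (within a round, machine $m$ only sees its own gradients of past $f_l$, which are deterministic functions of $g_l$ for $l < t$; across rounds it receives aggregated information about the same past $g_l$'s). Hence $g_t$ is independent of $x_t^m$, and $\ee[g_t \langle e_1, x_t^m\rangle] = 0$ for every $t, m$. On the other hand, the comparator attains
\[
\min_{\|x^\star\| \leq B} \frac{1}{T}\sum_{t \in [T]} g_t \langle e_1, x^\star\rangle = -\frac{B}{T}\Big|\sum_{t \in [T]} g_t\Big|\enspace,
\]
and a standard anti-concentration bound for simple random walks (Khintchine/Paley–Zygmund) gives $\ee\big|\sum_t g_t\big| = \Theta(G\sqrt{T})$. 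Combining these two facts produces an expected regret of $\Omega(GB/\sqrt{T})$, matching the upper bound up to constants and proving that non-collaborative OGD is min-max optimal.

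\medskip

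The main (minor) obstacle will be a careful bookkeeping of the filtration $\mathcal{G}_t^m$ under intermittent communication to justify the independence $g_t \perp x_t^m$; once this is in place, the argument reduces to the classical one-dimensional stochastic lower bound and no collaboration-based improvement is possible because the adversary's randomness for the current time step is fresh regardless of how many machines pool their past gradients.
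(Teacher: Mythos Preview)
Your proposal is correct and follows essentially the same approach as the paper: the upper bound via per-machine OGD with the comparator-weakening inequality is identical, and the lower bound via a coordinated random-sign linear adversary is the same construction (the paper uses a $d$-dimensional $\beta_t\sim\frac{G}{\sqrt{d}}\mathrm{Unif}(\{\pm1\}^d)$ rather than your one-dimensional $g_t\in\{\pm G\}$, and phrases the argument through its reduction chain $(P2)\gtrsim(P4)\gtrsim(P5)$ rather than invoking Yao directly, but these differences are cosmetic).
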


\begin{theorem}[Optimality for Smooth Functions]\label{thm:first_smth}
    Let \( \mathcal{P} \) be a problem class satisfying \Cref{ass:online_convex,ass:online_smooth_second,ass:online_bounded_optimal,ass:online_zeta}. Then the min-max regret satisfies
    \[
        \mathcal{R}(\mathcal{P}, \mathcal{A}_{\text{IC}}^1) = \Theta\left( \frac{HB^2}{T} + \frac{\sqrt{HF_\star}B}{\sqrt{T}} \right),
    \]
    and Algorithm~\ref{alg:NC_OGD} achieves this optimal rate.
\end{theorem}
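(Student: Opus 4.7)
The plan is to establish matching $\Theta$-bounds, with Algorithm~\ref{alg:NC_OGD} attaining the upper bound and a two-part hard-instance construction realizing the lower bound.

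For the upper bound I would analyze independent OGD on each machine. Starting from the standard one-step identity $\|x_{t+1}^m - x^\star\|^2 \leq \|x_t^m - x^\star\|^2 - 2\eta \inner{\nabla f_t^m(x_t^m)}{x_t^m - x^\star} + \eta^2\|\nabla f_t^m(x_t^m)\|^2$, applying convexity together with the self-bounding property $\|\nabla f_t^m(x)\|^2 \leq 2H(f_t^m(x) - \min f_t^m)$ from \Cref{rem:self_bounding} (valid after a WLOG non-negativity shift of each $f_t^m$), and telescoping over $t \in [T]$ with $\eta H \leq 1/2$, yields
\[
\sum_{t \in [T]}(f_t^m(x_t^m) - f_t^m(x^\star)) \leq \frac{B^2}{\eta} + 2\eta H \sum_{t \in [T]} f_t^m(x^\star)\,.
\]
Averaging over $m \in [M]$, using \Cref{ass:online_bounded_optimal} together with Jensen's inequality to pass the square root outside the average, and finally tuning $\eta$ as the smaller of $1/(2H)$ and $B/\sqrt{2HTF_\star}$, recovers the upper bound $\ooo(HB^2/T + B\sqrt{HF_\star/T})$ on the average regret.

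For the lower bound I would employ two oblivious adversaries (which are a sub-class of adaptive ones, so this yields a valid lower bound for our min-max problem). To match $HB^2/T$, the adversary picks $v$ uniformly at random from $\{-Be_1, +Be_1\}$ and sets $f_t^m(x) = (H/2)\|x - v\|^2$ for every $t$ and $m$; this trivially satisfies \Cref{ass:online_zeta} with $\hat\zeta = 0$, has $F_\star = 0$, and forces $\Omega(HB^2)$ total regret by a first-round symmetry argument, i.e., $\Omega(HB^2/T)$ average. To match $B\sqrt{HF_\star/T}$, my plan is to reduce to the serial stochastic-online lower bound due to \citet{woodworth2021even}: let the adversary sample a single random $f_t$ per round from a calibrated distribution of shifted quadratics whose comparator value equals $F_\star$, and have it hand the \emph{same} $f_t$ to every machine. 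Then $\hat\zeta = 0$, and because gradients are exact, the shared-function structure ensures that any distributed algorithm in $\mathcal{A}_{\text{IC}}^1$ can be simulated by a serial algorithm observing only the realized sequence $f_1,\ldots,f_T$. The serial lower bound of $\Omega(B\sqrt{HF_\star T})$ then transfers directly.

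The main obstacle will be making the second lower-bound reduction fully rigorous. Concretely, I need to show that when all machines see the same $f_t$ at round $t$, the filtration $\mathcal{G}_t^m$ available to each machine contains no more information than the serial filtration generated by $(f_1,\ldots,f_{t-1})$, even after arbitrary cross-machine communication. The key observation is that for any collaborative choice of query points $\{x_t^m\}_{m \in [M]}$, all observations $\{\nabla f_t(x_t^m)\}_{m \in [M]}$ are deterministic functions of the single random draw $f_t$, so $M$-fold querying yields no variance reduction in the underlying stochasticity. Once this information-theoretic equivalence is established, the serial lower bound of \citet{woodworth2021even} transports verbatim to our distributed setting, and combining the two hard instances via the standard max-of-two argument completes the matching $\Omega(HB^2/T + B\sqrt{HF_\star/T})$ bound.
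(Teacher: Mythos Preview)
Your approach aligns with the paper's: upper bound via optimistic rates for serial OGD (the paper cites \citet{srebro2010optimistic}; your explicit derivation is fine, though the Jensen step is unnecessary since averaging over $m$ is linear and directly gives $\frac{1}{M}\sum_m\sum_t f_t^m(x^\star)=\sum_t f_t(x^\star)\leq TF_\star$), and lower bound via a coordinated attack with $\hat\zeta=0$ combined with \citet{woodworth2021even}.

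One refinement is needed in your lower-bound reduction for the $\sqrt{HF_\star}B/\sqrt{T}$ term. Your simulation claim---that a distributed algorithm in $\aaa_{\text{online-IC}}^1$ can be simulated by a serial online algorithm observing $f_1,\ldots,f_T$---is not quite right as stated, since the distributed algorithm gets $M$ gradient queries to each $f_t$, strictly more than a serial online learner with one query per round. The paper sidesteps this by reducing further along the chain $(\text{P2}) \gtrsim (\text{P4}) \gtrsim (\text{P5}) \gtrsim (\text{P6}) \gtrsim (\text{P11})$, where the last step weakens regret to stochastic \emph{optimization} (output a single $\hat x$ after seeing $T$ i.i.d.\ samples). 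Woodworth--Srebro's Theorem~4 is a sample-complexity bound for (P11) that holds irrespective of how many oracle queries are made per sample, so the $M$-query issue dissolves. Your intuition that ``$M$-fold querying yields no variance reduction in the underlying stochasticity'' is exactly the right idea, but the clean formalization goes through the regret-to-optimization reduction rather than a direct serial simulation. Your separate $HB^2/T$ construction is correct but becomes unnecessary once you invoke the full Woodworth--Srebro bound via (P11), which already contains both terms.
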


Proofs of both these theorems are provided in \Cref{app:first_order_online_lb} and follow simply from our observations about related problem classes in \Cref{sec:related}. These results establish that, under first-order feedback, collaboration across machines offers no benefit: the non-collaborative baseline is already min-max optimal. Notably, the heterogeneity parameter \( \hat{\zeta} \) in \Cref{ass:online_zeta} has no impact on the min-max complexity.

The key idea behind these proofs is to construct adversarial instances in which all machines observe the same cost function at each time step. In this way, even though machines act independently, they receive identical information from their oracle queries—effectively simulating a centralized algorithm. This "coordinated attack" renders collaboration superfluous. 

For example, in the proof of \Cref{thm:first_lip}, we place the same linear function on every machine at each time step,\footnote{This also implies that assuming both \Cref{ass:online_bounded_gradients} and \Cref{ass:online_smooth_second}, or alternatively \Cref{ass:online_linear}, does not help: linear functions have the smallest possible second-order smoothness—namely, zero.} ensuring that all machines receive identical gradients. Since \( \hat{\zeta} = 0 \) in this construction, no assumption on heterogeneity can alter the min-max complexity. 

Interestingly, these hard instances (in \Cref{thm:first_lip,thm:first_smth}) are themselves stochastic---they belong to \( \mathcal{P}_{\text{stoc}} \)---and the only adversarial power exploited is the ability to coordinate identical cost functions across machines. This example also highlights a broader insight: when machines have access to exact first-order oracles, there is little to be gained from collaboration. In contrast, in the stochastic setting, collaboration was beneficial in part because of the variance in gradient estimates. 

This motivates us to consider weaker oracle models---particularly those in which the learner receives only partial or noisy feedback. The most natural such model in the online setting is bandit feedback, which we investigate in the next section.

\section{Collaboration Helps with Bandit Feedback}\label{sec:falb}

In this section, we turn to the more challenging setting where machines receive only bandit (zeroth-order) feedback. We begin by studying a significant instance of Problem~\eqref{eq:online_problem}, namely the setting of \emph{federated adversarial linear bandits}. We then extend our results to the more general setting of \emph{federated bandit convex optimization with two-point feedback} in the next section.

\subsection{Federated Adversarial Linear Bandits}

Federated linear bandits represent an essential application of the general formulation in~\eqref{eq:online_problem}, and have recently garnered significant attention. However, most existing works~\citep{Wang2020Distributed,huang2021federated,li2022asynchronous,he2022simple} focus on the stochastic setting, and do not address the more challenging case of adaptive adversaries—leaving it unclear whether collaboration can improve performance under worst-case cost sequences.

We propose and analyze the setting of \emph{federated adversarial linear bandits}, a natural extension of the classical single-agent adversarial linear bandit problem~\citep{bubeck2012regret} to the federated environment. Formally, at each time step \( t \in [T] \), each machine \( m \in [M] \) selects an action \( x_t^m \in \mathbb{R}^d \), while simultaneously the environment selects a loss vector \( \beta_t^m \in \mathbb{B}_2(G) \subset \mathbb{R}^d \). The machine then suffers linear loss: \( f_t^m(x_t^m) = \langle \beta_t^m, x_t^m \rangle \).

The goal is to generate a sequence \( \{x_t^m\}_{t \in [T],\ m \in [M]} \) that minimizes the expected regret:
\begin{align}\label{eq:problem_falb}
    \mathbb{E} \left[
        \sum_{m, t} \langle \beta_t^m, x_t^m \rangle - 
        \min_{\|x^\star\| \leq B} \sum_{m, t} \langle \beta_t^m, x^\star \rangle
    \right]\enspace,
\end{align}
where the expectation is over the algorithm’s internal randomness.

To address this problem, we propose a new algorithm, \textsc{FedPOSGD} (Federated Projected Online Stochastic Gradient Descent), which operates with one-point bandit feedback. The algorithm is described in detail in Algorithm~\ref{alg:fed_pogd}.

\vspace{1em}
{\LinesNumbered
\SetAlgoVlined
\begin{algorithm2e}[!t]
	\caption{\textsc{FedPOSGD} (\( \eta, \delta \)) with One-Point Bandit Feedback}\label{alg:fed_pogd}
	Initialize \( x_0^m = 0 \) on all machines \( m \in [M] \)\;
	\For(\tcp*[f]{Across total time steps}){\( t \in \{0, \ldots, KR-1\} \)}{
		\For(\tcp*[f]{Each machine runs in parallel}){\( m \in [M] \)}{
			Project to feasible region: \( w_t^m = \textbf{Proj}(x_t^m) \)\;
			Sample direction \( u_t^m \sim \text{Unif}(\mathbb{S}^{d-1}) \)\;
			Query function at \( w_t^{m,1} = w_t^m + \delta u_t^m \)\;
			\textcolor{red}{\textbf{Incur loss} \( f_t^m(w_t^{m,1}) \)}\;
			Estimate gradient: \( g_t^m = df(w_t^{m,1}) u_t^m / \delta \)\;
			\eIf{\( (t+1) \bmod K = 0 \)}{
				\textcolor{blue}{\textbf{Send to server:}} \( x_t^m - \eta g_t^m \)\;
				Server computes average: \( x_{t+1} = \frac{1}{M} \sum_{m \in [M]} (x_t^m - \eta g_t^m) \)\;
				\textcolor{blue}{\textbf{Broadcast to clients:}} \( x_{t+1}^m \gets x_{t+1} \)\;
			}{
				Local update: \( x_{t+1}^m \gets x_t^m - \eta g_t^m \)\;
			}
		}
	}
\end{algorithm2e}
}

\paragraph{Gradient Estimator.}
The estimator \( g_t^m \) in line 8 is based on the one-point bandit gradient method of \citet{flaxman2004online}, but adapted to operate at the projected point \( w_t^m = \textbf{Proj}(x_t^m) = \arg\min_{\|w\| \leq B} \|w - x_t^m\| \). For linear cost functions \( f_t^m(x) = \langle \beta_t^m, x \rangle \), this estimator is unbiased:
\[
    \mathbb{E}_{u_t^m}[g_t^m] = \nabla f_t^m(x)\enspace,
\]
and its variance is bounded by~\citep{hazan2016introduction}, for a constant \( c_{15} \),
\begin{align}\label{eq:single_feedback_variance}
    \mathbb{E}_{u_t^m} \left[ \| g_t^m - \nabla f_t^m(x) \|_2^2 \right] 
    \leq c_{15} \cdot \left( \frac{d \|\beta_t^m\|_2 (\|x\|_2 + \delta)}{\delta} \right)^2\enspace.
\end{align}
Thus, the projection step is crucial to keep the variance bounded. However, it also complicates aggregation across machines. To address this, we perform the updates in the \emph{unprojected} space (line 14), inspired by lazy mirror descent methods~\citep{nesterov2009primal,bubeck2015convex,yuan2021federated}.

We now state the main guarantee for \textsc{FedPOSGD}.

\begin{theorem}[Regret of \textsc{FedPOSGD} for Federated Adversarial Linear Bandits]\label{thm:favlb}
Assume we have cost functions satisfying \Cref{ass:online_convex,ass:online_bounded_gradients,ass:online_linear,ass:online_zeta}. Let \( \eta = \frac{B}{G\sqrt{T}} \cdot \min\left\{1, \frac{\sqrt{M}}{dB}, \frac{1}{\mathbb{I}_{K > 1} \sqrt{dB} K^{1/4}}, \frac{\sqrt{G}}{\mathbb{I}_{K > 1} \sqrt{\hat{\zeta} K}} \right\} \) and \( \delta = B \). Then, for a constant \( c_{16} \), the average regret at the queried points \( \{w_t^{m,1}\} \) satisfies:
\[
    \frac{1}{MKR} \sum_{t \in [KR],\ m \in [M]} \mathbb{E} \left[ f_t^m(w_t^{m,1}) - f_t^m(x^\star) \right] 
    \leq c_{16} \cdot \left( \frac{GB}{\sqrt{KR}} + \frac{GBd}{\sqrt{MKR}} 
    + \mathbb{I}_{K > 1} \left[ \frac{GB\sqrt{d}}{K^{1/4} \sqrt{R}} + \frac{\sqrt{G \hat{\zeta}} B}{\sqrt{R}} \right] \right)\enspace,
\]
where \( x^\star \in \arg\min_{x \in \mathbb{B}_2(B)} \sum_{t \in [KR]} f_t(x) \), and the expectation is over the algorithm's internal randomness.
\end{theorem}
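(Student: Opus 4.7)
My plan is to split the regret into two pieces---an SGD-type regret for the averaged iterate $\bar x_t := \frac{1}{M}\sum_m x_t^m$, plus a consensus-error term capturing the drift of local models between communications---in close analogy to the consensus-error analyses in \Cref{ch:upper_bounds}, but now handling zeroth-order noise and an adaptive linear adversary. The first reduction I would make is to argue that regret at the perturbed query point $w_t^{m,1} = w_t^m + \delta u_t^m$ equals regret at the projected center $w_t^m$ in expectation: by linearity of $f_t^m$ and since $u_t^m \sim \text{Unif}(\ss^{d-1})$ is mean-zero and independent of $\ggg_t^m$, we obtain $\ee[f_t^m(w_t^{m,1})\mid\ggg_t^m] = f_t^m(w_t^m)$. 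The gradient estimator $g_t^m = (d/\delta)f_t^m(w_t^{m,1})u_t^m$ is then an unbiased estimate of $\beta_t^m$ (using $\ee[uu^\top] = I/d$), and its variance is $\ooo(d^2 G^2)$ once $\delta = B$ is substituted into \eqref{eq:single_feedback_variance}.

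Second, I would write $\ee[\inner{\beta_t^m}{w_t^m - x^\star}] = \ee[\inner{g_t^m}{\bar x_t - x^\star}] + \ee[\inner{g_t^m}{w_t^m - \bar x_t}]$ and sum over $m$. Because the intermittent update preserves $\bar x_{t+1} = \bar x_t - \eta\bar g_t$ at both local and synchronization steps, the first sum is just $M\sum_t \ee[\inner{\bar g_t}{\bar x_t - x^\star}]$, and a standard OGD descent lemma on $\bar x_t$ yields a bound of the form $\tfrac{B^2}{\eta T} + \eta G^2 + \eta d^2 G^2/M$ after dividing by $MT$, where the factor-$M$ variance reduction in the last term is precisely the benefit of collaboration: $\ee[\norm{\bar g_t - \bar\beta_t}^2] = \ooo(d^2 G^2/M)$ by independence of $u_t^m$ across clients. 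Tuning $\eta$ against $B^2/(\eta T)$ on each of the noise terms recovers the first two target terms $GB/\sqrt{KR}$ and $GBd/\sqrt{MKR}$.

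Third, for the consensus term I would unroll local updates to obtain $x_t^m - \bar x_t = -\eta\sum_{l=\delta(t)}^{t-1}(g_l^m - \bar g_l)$, decompose each $g_l^m - \bar g_l$ into a deterministic heterogeneity drift $\beta_l^m - \bar\beta_l$ (controlled by \Cref{ass:online_zeta}) and a martingale-difference noise $\xi_l^m - \bar\xi_l$ with $\xi_l^m := g_l^m - \beta_l^m$ mean-zero given history, and bound
\begin{align*}
\sum_m \ee\sb{\norm{x_t^m - \bar x_t}^2} \leq \ooo\rb{\eta^2 K^2 M \hat\zeta^2 + \eta^2 K M d^2 G^2}\enspace.
\end{align*}
Non-expansiveness of projection, $\norm{w_t^m - \bar x_t} \leq \norm{x_t^m - \bar x_t}$ (modulo a slack discussed below), together with $\norm{g_t^m} = \ooo(dG)$ and Cauchy--Schwarz then converts this into a per-time-step regret contribution of $\ooo(\eta G K\hat\zeta + \eta d G^2 \sqrt{K})$. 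Balancing these against $B^2/(\eta T)$ produces the remaining two terms $GB\sqrt{d}/(K^{1/4}\sqrt{R})$ and $\sqrt{G\hat\zeta}\,B/\sqrt{R}$; the minimum of the four resulting candidate step sizes (as in the theorem statement) delivers the claimed rate.

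The main obstacle I anticipate is the interplay between the lazy projection and the averaging: $\bar x_t$ is not guaranteed to lie in $\bb_2(B)$, so the clean non-expansive inequality $\norm{w_t^m - \bar x_t} \leq \norm{x_t^m - \bar x_t}$ needs to be replaced by an argument that either projects $\bar x_t$ and absorbs the slack $\norm{\bar x_t} - \norm{\text{Proj}(\bar x_t)}$ into the descent lemma (dual-averaging style, as in \cite{nesterov2009primal,yuan2021federated}), or tracks the projected averaged iterate directly. A secondary subtlety is ensuring that the martingale cross-terms across machines and time truly vanish when the $\{u_l^m\}_m$ are only conditionally independent within each communication round; this requires careful bookkeeping of the filtration generated by all machines' randomness between synchronizations, but should cause no fundamental issue because each $u_l^m$ is drawn fresh and independent of $\ggg_l^m$.
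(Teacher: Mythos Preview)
Your high-level plan---SGD-type analysis on the averaged iterate plus a consensus term, with noise variance $\sigma^2 = \Theta(d^2G^2)$ from \eqref{eq:single_feedback_variance}---matches the paper, and you have correctly identified the crux: the naive descent lemma on $\bar x_t$ leaves you with a consensus piece $\inner{\beta_t^m}{w_t^m - \bar x_t}$ that is \emph{not} controllable, because $\bar x_t$ can drift to norm $\Theta(B\sqrt{T})$ and the inequality $\norm{w_t^m - \bar x_t}\le\norm{x_t^m-\bar x_t}$ simply fails when $\bar x_t\notin\bb_2(B)$.

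The paper's proof is precisely the dual-averaging fix you anticipate. Concretely, it introduces the potential
\[
d(x,y)\;=\;\tfrac{1}{2}\norm{x}^2-\tfrac{1}{2}\norm{\hat y}^2-\inner{y}{x-\hat y},\qquad \hat y:=\textbf{Proj}(y),
\]
which satisfies $d(x,y)\ge\tfrac12\norm{x-\hat y}^2$ for $\norm{x}\le B$, and telescopes $d(x^\star,\bar x_{t+1})-d(x^\star,\bar x_t)$ instead of $\norm{\bar x_t-x^\star}^2$. The key payoff is that the consensus term emerging from this decomposition is $\norm{\bar w_t - w_t^m}$ with $\bar w_t:=\textbf{Proj}(\bar x_t)$---\emph{both} points projected---so the honest non-expansiveness bound $\norm{\bar w_t-w_t^m}\le\norm{\bar x_t-x_t^m}$ applies and your consensus estimate $\ooo(\eta K\hat\zeta+\eta\sqrt{K}\,dG)$ goes through unchanged. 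The noise term $I_{21}$ is handled slightly differently (the paper completes the square against a $-\tfrac14\norm{\bar w_{t+1}-\bar w_t}^2$ term rather than bounding $\eta\norm{\bar g_t}^2$ directly), but this yields the same $\eta\sigma^2/M$ contribution. Your step-size balancing is then correct as written. Your secondary concern about cross-machine martingale terms is a non-issue: each $u_l^m$ is fresh given $\ggg_l^m$, so the per-round noise increments are genuine martingale differences.
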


\paragraph{Implications of Theorem~\ref{thm:favlb}.} Compared to the non-collaborative baseline—where each machine independently runs SCRiBLe~\citep{hazan2016introduction} or Algorithm~\ref{alg:NC_OGD} with one-point feedback—the regret is \( \mathcal{O}(GBd / \sqrt{KR}) \). When \( d = \mathcal{O}(\sqrt{K}) \), \textsc{FedPOSGD} outperforms the baseline. In particular, when \( d = \Omega(\sqrt{K}M) \), the regret is dominated by the \( \mathcal{O}(GBd / \sqrt{MKR}) \) term, implying a \emph{linear speedup} in the number of machines. Although the regret also decreases with smaller \( \hat{\zeta} \), this benefit is negligible in high dimensions, since the \( \hat{\zeta} \)-dependent term becomes dominated when \( d = \Omega(\sqrt{K}) \).

\paragraph{Limitations of Algorithm~\ref{alg:fed_pogd}.} While \textsc{FedPOSGD} achieves meaningful gains, it suffers from three key limitations:
\begin{enumerate}
    \item It requires an additional \emph{projection step} before querying the function.
    \item Its regret bound scales linearly with \( d \), which can be prohibitive in high-dimensional settings.
    \item It does not fully exploit low heterogeneity (\( \hat{\zeta} \)) in regimes where collaboration yields improvements.
\end{enumerate}
To address these issues, we now turn to two-point feedback algorithms in the next section.

\section{Better Rates with Two-Point Bandit Feedback}\label{sec:2falb}
We now consider distributed bandit convex optimization with \emph{two-point feedback}, where at each time step, machines may query their cost functions at two locations (but do not have access to gradients). We show improved regret guarantees for general Lipschitz smooth functions, and specialize these results to both adversarial linear bandits and functions satisfying second-order smoothness (\Cref{ass:online_smooth_second}).

Two-point feedback is well-studied in the single-agent setting, where it enables optimal horizon dependence for regret using simple algorithms~\citep{duchi2015optimal,shamir2017optimal}. Here, we go beyond linear losses and consider general convex cost functions. Our proposed method is an online variant of the \textsc{FedAvg} or \textsc{Local-SGD} algorithm, adapted to work with two-point bandit feedback. We refer to this algorithm as \textsc{FedOSGD} and describe it in Algorithm~\ref{alg:fed_ogd}.

{\LinesNumbered
\SetAlgoVlined
\begin{algorithm2e}[!thp]
	\caption{\textsc{FedOSGD} ($ \eta, \delta$) with two-point bandit feedback}
    \label{alg:fed_ogd}
		Initialize $x_0^m=0$ on all machines $m\in[M]$\\
            \For{$t\in\{0, \ldots, KR-1\}$}{
		      \For{$m \in [M]$ \textbf{in parallel}}{
		        Sample $u_t^m\sim Unif(\ss_{d-1})$, i.e., a random unit vector\\          
                    Query function $f_t^m$ at points $(x_t^{m,1}, x_t^{m,2}):=(x_t^{m} + \delta u_t^m, x_t^{m} - \delta u_t^m)$\\
                    \textcolor{red}{\textbf{Incur loss} $(f_t^m(x_t^{m} + \delta u_t^m) + f_t^m(x_t^{m} - \delta u_t^m))$}\\
                    Compute stochastic gradient at point $x_t^m$ as $g_t^m = \frac{d(f(x_t^m+\delta u_t^m) - f(x_t^m-\delta u_t^m))u_t^m}{2\delta}$ \\  
                    \If{$(t+1)\mod K =0$}{ 
                    \textcolor{blue}{\textbf{Communicate to server:}} $\rb{x_t^m - \eta\cdot g_t^m}$\\
                    On server $x_{t+1} \gets \frac{1}{M}\sum_{m\in[M]}\left(x_t^m - \eta\cdot g_t^m\right)$\\
                    \textcolor{blue}{\textbf{Communicate to machine:}} $x_{t+1}^m \gets x_{t+1}$}
                    \Else{
                    $x_{t+1}^m \gets x_{t}^m - \eta\cdot g_t^m$
                    }
                }
		}	
\end{algorithm2e}
}

The key idea in \textsc{FedOSGD} is that the estimator in line 7, originally proposed by~\citet{shamir2017optimal}, is an unbiased estimate of the gradient of the smoothed function 
\[
    \hat{f}_t^m(x) := \mathbb{E}_{u_t^m}[f_t^m(x + \delta u_t^m)]\enspace,
\]
i.e., \( \mathbb{E}_{u_t^m}[g_t^m] = \nabla \hat{f}_t^m(x) \), with bounded variance:
\[
    \mathbb{E}_{u_t^m} \left[ \| g_t^m - \nabla \hat{f}_t^m(x) \|^2 \right] \leq d G^2\enspace,
\]
where \( G \) is the Lipschitz constant of \( f_t^m \)~\citep[Lemmas 3, 5]{shamir2017optimal}.

Equipped with this gradient estimator, we can prove the following guarantee for Lipschitz cost functions using \textsc{FedOSGD}.

\begin{theorem}[Regret of \textsc{FedOSGD} for Lipschitz Functions]
\label{thm:bd_grad_first_stoch}
Assume we have cost functions satisfying \Cref{ass:online_convex,ass:online_bounded_gradients,ass:online_zeta}. Let $\eta = \frac{B}{G\sqrt{T}}\cdot\min\left\{1, \frac{\sqrt{M}}{\sqrt{d}}, \frac{1}{\ii_{K>1}\sqrt{K}d^{1/4}}\right\}$, and $\delta = \frac{Bd^{1/4}}{\sqrt{R}}\rb{1+ \frac{d^{1/4}}{\sqrt{MK}}}$, then the queried points $\{x_t^{m,j}\}_{t,m,j=1}^{T,M,2}$ of Algorithm \ref{alg:fed_ogd} satisfy (for some numerical constant $c_{17}$):
\begin{align*}
   \frac{1}{2MKR}\sum_{t\in [KR], m\in[M], j\in[2]}\ee\sb{f_t^m(x_t^{m,j})-f_t^m(x^\star)} \leq c_{17}\cdot\rb{  
        \frac{GB}{\sqrt{KR}} + \frac{GB\sqrt{d}}{\sqrt{MKR}} + \ii_{K>1}\cdot\frac{GBd^{1/4}}{\sqrt{R}}}\enspace,
\end{align*}
    where $x^\star \in \arg\min_{x\in\bb_2(B)}\sum_{t\in [KR]}f_t(x)$, and the expectation is w.r.t. the choice of function-value queries.
\end{theorem}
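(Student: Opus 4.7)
The plan is to first reduce the expected played regret to regret on the smoothed surrogates $\hat f_t^m(x) := \ee_{u\sim\mathrm{Unif}(\ss^{d-1})}[f_t^m(x+\delta u)]$. Using $G$-Lipschitzness and the standard bias bound $|\hat f_t^m(x) - f_t^m(x)| \leq G\delta$, one gets $\tfrac{1}{2}(f_t^m(x_t^{m,1}) + f_t^m(x_t^{m,2})) - f_t^m(x^\star) \leq \hat f_t^m(x_t^m) - \hat f_t^m(x^\star) + \ooo(G\delta)$. Since the estimator $g_t^m$ in line~7 of Algorithm~\ref{alg:fed_ogd} is unbiased for $\nabla\hat f_t^m(x_t^m)$ with conditional variance at most $dG^2$ \citep{shamir2017optimal}, this reduces the task to a distributed online convex optimization problem on $\{\hat f_t^m\}$ with unbiased first-order stochastic feedback, modulo an $\ooo(G\delta)$ smoothing bias that I will absorb by an appropriate choice of $\delta$.

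Next, I would carry out a Local SGD style analysis on the virtual averaged sequence $\bar x_t := \frac{1}{M}\sum_m x_t^m$, which evolves as $\bar x_{t+1} = \bar x_t - \eta\bar g_t$ with $\bar g_t := \frac{1}{M}\sum_m g_t^m$ having conditional variance at most $dG^2/M$ by cross-machine independence of the sampling directions $\{u_t^m\}$---this is the source of the $M$-fold variance reduction in the bound. Expanding $\ee\sb{\norm{\bar x_{t+1} - x^\star}^2}$, using convexity of each $\hat f_t^m$ together with $G$-Lipschitzness to convert $\la\frac{1}{M}\sum_m \nabla\hat f_t^m(x_t^m), \bar x_t - x^\star\ra$ into the desired regret plus a consensus penalty $\frac{G}{M}\sum_m\norm{x_t^m - \bar x_t}$, and telescoping over $t$ yields
\begin{align*}
\frac{1}{MT}\sum_{t,m}\ee\sb{\hat f_t^m(x_t^m) - \hat f_t^m(x^\star)} \lesssim \frac{B^2}{\eta T} + \eta\rb{G^2 + \frac{dG^2}{M}} + \frac{G}{MT}\sum_{t,m}\ee\sb{\norm{x_t^m - \bar x_t}}\enspace.
\end{align*}
Optimizing $\eta$ against the first two contributions produces the $\ooo(GB/\sqrt{KR})$ and $\ooo(GB\sqrt{d/(MKR)})$ terms of the bound.

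The main obstacle is to bound the consensus error tightly enough. Writing $x_t^m - \bar x_t = -\eta\sum_{s=\delta(t)}^{t-1}(g_s^m - \bar g_s)$ and splitting each $g_s^m$ into its conditional mean $\nabla\hat f_s^m(x_s^m)$ and zero-mean noise $\xi_s^m$, cross-time and cross-machine independence of $\{u_s^m\}$ bounds the noise contribution to $\ee\sb{\norm{x_t^m - \bar x_t}^2}$ by $\ooo(\eta^2 K dG^2)$; however, the $G$-boundedness of the conditional means only yields an $\ooo(\eta^2 K^2 G^2)$ bound on the gradient-bias part, which if used as is would degrade the leading rate to $\ooo(GB/\sqrt{R})$ instead of $\ooo(GB/\sqrt{KR})$. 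To circumvent this, I would handle the bias drift via a coupled recursion that absorbs it back into the primary OGD progress term, in the spirit of \Cref{lem:iterate_error_second_recursion_main_body,lem:consensus_error_second_recursion_main_body}, leaving only an $\ooo(\eta\sqrt{Kd}G^2)$ stochastic consensus penalty. Choosing $\delta = \ooo(Bd^{1/4}/\sqrt{R})$ to balance the smoothing bias against the residual regret, and $\eta$ as the three-way minimum in the theorem statement (with the third option active only when $K>1$) to trade off distance, variance-reduced noise, and the stochastic consensus penalty, would then produce the stated rate; the indicator $\ii_{K>1}$ on the last term reflects that the consensus error vanishes under synchronous communication.
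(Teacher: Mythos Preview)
Your overall structure---reduce to the smoothed surrogates $\hat f_t^m$ via the $G\delta$ bias bound, run the virtual-average analysis on $\bar x_t$ with the two-point estimator of variance $dG^2$, telescope, and then bound the consensus penalty---is exactly the paper's approach. The paper packages the first-order part as Lemma~\ref{lemma:bd_grad_first_stoch} (generic variance $\sigma^2$) and then specializes to $\sigma^2 = c\,dG^2$, but substantively your plan matches.

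The one place you go astray is the consensus term. You correctly compute that the gradient-bias part of $\ee\norm{x_t^m - \bar x_t}^2$ is $\ooo(\eta^2 K^2 G^2)$ and observe this gives a $GB/\sqrt{R}$ contribution to the average regret. You then call this a problem and propose to ``circumvent'' it via a coupled recursion in the spirit of \Cref{lem:iterate_error_second_recursion_main_body,lem:consensus_error_second_recursion_main_body}. But there is no obstacle here: the target bound already contains the term $\ii_{K>1}\cdot GBd^{1/4}/\sqrt{R}$, and since $d\ge 1$ this dominates $GB/\sqrt{R}$. The paper simply uses the crude bound $\frac{1}{M}\sum_m \ee\norm{x_t^m - \bar x_t} \le 2(\sigma + G)K\eta$ (with $\sigma \asymp G\sqrt{d}$), accepts both a $GB/\sqrt{R}$ and a $\sqrt{G\sigma}B/\sqrt{R} = GBd^{1/4}/\sqrt{R}$ term, and merges the former into the latter.

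Your proposed fix is also risky on its own terms: \Cref{lem:iterate_error_second_recursion_main_body,lem:consensus_error_second_recursion_main_body} rely on strong convexity and on fixed per-machine objectives with optima $x_m^\star$ (the constants $\zeta_\star,\phi_\star,\tau$ are defined through them). None of that structure is available in the adaptive-adversary Lipschitz setting of this theorem, so the coupled recursion does not transplant directly. Drop that detour, use the crude consensus bound, and your argument goes through with the stated step size and $\delta$.
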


\textbf{Implication of Theorem~\ref{thm:bd_grad_first_stoch}.}
When $K=1$, the average regret reduces to the first two terms, which match the known optimal rates for two-point bandit feedback~\citep{duchi2015optimal,hazan2016introduction} (see \Cref{app:two_pt_lb}), establishing the optimality of \textsc{FedOSGD} in this setting.

For $K > 1$, we compare our result against a non-collaborative baseline, which runs Algorithm~\ref{alg:NC_OGD} on each machine using the two-point gradient estimator of~\citet{shamir2017optimal}. This baseline achieves an average regret of $\mathcal{O}(GB\sqrt{d}/\sqrt{KR})$. Therefore, when $d = \Omega(K^2)$, \textsc{FedOSGD} outperforms the non-collaborative approach. Moreover, when $d = \Omega(K^2M^2)$, the regret of \textsc{FedOSGD} is dominated by $\mathcal{O}(GB\sqrt{d}/\sqrt{MKR})$, implying a \emph{linear speed-up} in the number of machines compared to the non-collaborative baseline.

We emphasize that the Lipschitz continuity assumption is crucial for bounding the variance of the gradient estimator used in Algorithm~\ref{alg:fed_ogd}. While alternative estimators exist that do not rely on Lipschitzness or bounded gradients~\citep{flaxman2004online}, they typically require stronger assumptions—such as bounded function values—or incur additional complexity, such as projection steps (cf.\ Algorithm~\ref{alg:fed_pogd}).

Despite these benefits, one limitation remains: the regret bound does not improve with smaller $\hat{\zeta}$ (cf.\ \Cref{ass:online_zeta}). To address this, we now turn our attention to cost functions that satisfy both \Cref{ass:online_bounded_gradients,ass:online_smooth_second}, for which we will derive refined guarantees.

\begin{theorem}[Informal, Regret of \textsc{FedOSGD} for Smooth Functions]
\label{thm:smooth_first_stoch}
Assume we have cost functions satisfying \Cref{ass:online_convex,ass:online_bounded_gradients,ass:online_smooth_second,ass:online_bounded_optimal,ass:online_zeta}. If we choose appropriate $\eta, \delta$ (c.f., Lemma \ref{lem:smooth_first_stoch} in Appendix \ref{app:smth}), the queried points $\{x_t^{m,j}\}_{t,m,j=1}^{T,M,2}$ of Algorithm \ref{alg:fed_ogd} satisfy (for a numerical constant $c_{18}$):
    \begin{align*}
       \frac{1}{2MT}&\sum_{t\in [T], m\in[M], j\in[2]}\ee\sb{f_t^m(x_t^{m,j})-f_t^m(x^\star)} \leq c_{18}\cdot \Bigg( 
        \frac{HB^2}{KR} + \frac{\sqrt{HF_\star}B}{\sqrt{KR}} + \frac{GB}{\sqrt{KR}}+ \frac{ GB\sqrt{d}}{\sqrt{MKR}} \\
        &+\ii_{K>1}\cdot\min\cb{\frac{H^{1/3}B^{4/3}G^{2/3}d^{1/3}}{K^{1/3}R^{2/3}} + \frac{H^{1/3}B^{4/3}\hat\zeta^{2/3}}{R^{2/3}} + \frac{\sqrt{\hat\zeta G}Bd^{1/4}}{K^{1/4}\sqrt{R}} + \frac{\hat\zeta B}{\sqrt{R}}, \frac{GBd^{1/4}}{K^{1/4}\sqrt{R}} + \frac{\sqrt{G\hat\zeta}B}{\sqrt{R}}}\Bigg)\enspace,    
    \end{align*}
    where $x^\star \in \arg\min_{x\in \bb_2(B)}\sum_{t\in [KR]}f_t(x)$, and the expectation is w.r.t. the choice of function-value queries.
\end{theorem}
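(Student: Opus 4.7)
The plan is to reduce the two-point bandit online problem to an online first-order optimization problem on the smoothed cost functions $\hat f_t^m(x) := \mathbb{E}_{u \sim \text{Unif}(\mathbb{S}^{d-1})}[f_t^m(x + \delta u)]$, and then to invoke a local-update OGD analysis that mirrors the consensus-error framework of \Cref{ch:upper_bounds}. First I would decompose the regret at the queried points by writing $\frac{1}{2}(f_t^m(x_t^{m,1}) + f_t^m(x_t^{m,2})) \leq f_t^m(x_t^m) + H\delta^2/2$ using \Cref{ass:online_smooth_second}, and then $f_t^m(x_t^m) - f_t^m(x^\star) \leq \hat f_t^m(x_t^m) - \hat f_t^m(x^\star) + 2G\delta$ using Lipschitzness (or the tighter $H\delta^2$ bound from smoothness when advantageous). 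This isolates a smoothing bias that will contribute the $HB^2/(KR)$ and $GB/\sqrt{KR}$ terms once $\delta$ is tuned.

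The core of the argument is to bound the regret on the smoothed sequence $\{\hat f_t^m\}$. Since $\hat f_t^m$ inherits convexity and $H$-smoothness from $f_t^m$, and since the bandit estimator satisfies $\mathbb{E}[g_t^m \mid x_t^m] = \nabla \hat f_t^m(x_t^m)$ with conditional variance at most $dG^2$, the update in \Cref{alg:fed_ogd} is exactly Local SGD run on the time-varying sequence $\hat f_t^m$ with per-round noise variance $dG^2/M$ after averaging across machines. Convexity lets me upper bound $\sum_t (\hat f_t^m(x_t^m) - \hat f_t^m(x^\star))$ by an inner product of stochastic gradients with iterate errors, and a one-step progress lemma analogous to \Cref{lem:iterate_error_second_recursion_main_body} then yields a recursion on $\mathbb{E}\|\bar x_t - x^\star\|^2$. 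The self-bounding property (\Cref{rem:self_bounding}) applied to the averaged function at the comparator, together with \Cref{ass:online_bounded_optimal}, replaces the naive $G$ in the $\sqrt{KR}$ noise term by $\sqrt{HF_\star}$, producing the $\sqrt{HF_\star}B/\sqrt{KR}$ term; meanwhile variance averaging across $M$ machines yields the optimal $GB\sqrt{d}/\sqrt{MKR}$ term.

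For $K>1$ the consensus error must be controlled directly. I would carry out two parallel analyses. The first bounds the consensus error pathwise using only Lipschitzness of $\hat f_t^m$, together with \Cref{ass:online_zeta}, giving an analogue of the bound in \eqref{eq:old_consensus_error_UB} with $\sigma^2 = dG^2$ and $H\zeta$ replaced by $\hat\zeta$; plugging this into the OGD progress lemma and tuning $\eta$ yields the second branch of the $\min$, namely $GBd^{1/4}/(K^{1/4}\sqrt{R}) + \sqrt{G\hat\zeta}B/\sqrt{R}$. The second analysis additionally exploits $H$-smoothness in bounding the consensus error and translating it into iterate progress (as in the derivation of \Cref{thm:UB_convex_w_Q}), which produces the $H^{1/3}$-type terms by balancing $\eta^2$ against consensus decay; this gives the other branch of the $\min$. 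Taking the better of the two branches after tuning $\delta$ and $\eta$ yields the claimed regret.

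The main obstacle will be carrying out the consensus error analysis in the genuinely online (not stochastic) setting: unlike \Cref{ch:upper_bounds}, the functions $\hat f_t^m$ change each round, so I cannot use an unconditional one-step contraction. Instead I must track a coupled recursion between the expected regret and the expected consensus error at each round, and handle the fact that the ``gradient noise'' $g_t^m - \nabla \hat f_t^m(x_t^m)$ is correlated with the random smoothing direction $u_t^m$ used to evaluate $f_t^m$. Careful conditioning on $\mathcal{G}_t^m$ and on the history of $u$'s will be needed to cleanly separate the variance $dG^2/M$ from the heterogeneity term $\hat\zeta$ and the self-bounding gradient-at-optimum term, and then to telescope across the $K$ local steps without accumulating extraneous factors.
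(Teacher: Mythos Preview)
Your proposal is essentially correct and follows the same architecture as the paper: reduce to the smoothed functions $\hat f_t^m$, treat the two-point bandit estimator as a stochastic gradient with variance $dG^2$, run a local-OGD telescoping analysis on $\|\bar x_t - x^\star\|^2$, and bound the consensus term in two ways (Lipschitz-only vs.\ smoothness-based) to obtain the two branches of the $\min$. The self-bounding step and the transfer from $\hat f$ back to $f$ via $G\delta$ are exactly what the paper does.

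Two places where you over-anticipate difficulty. First, the analogue you invoke is \Cref{lem:iterate_error_second_recursion_main_body}, a strongly-convex contraction; the paper does not use contraction here at all. It uses the standard convex OGD inequality $\langle \nabla f_t^m(x_t^m), x_t^m - x^\star\rangle \ge f_t^m(x_t^m) - f_t^m(x^\star)$ and simply telescopes $\|\bar x_t - x^\star\|^2$, which works equally well when the functions change each round. Second, your worry about ``carrying out the consensus error analysis in the genuinely online setting'' and needing ``coupled recursions'' is unfounded: the paper bounds the consensus error directly and uniformly within each communication round, using only $\|\nabla \hat f_t^m\|\le G$ (or smoothness plus \Cref{ass:online_zeta}) and the noise variance---exactly as in \Cref{lem:consensus_error_second_zeta}. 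None of this requires the functions to be time-invariant, and no recursion between regret and consensus is needed; the conditioning on the $u_t^m$'s is routine because the estimator is unbiased conditional on $x_t^m$. In short, your plan will go through, and more easily than you expect.
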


The regret bound in Theorem~\ref{thm:smooth_first_stoch} is somewhat technical due to the generality of the smooth setting. To better interpret its implications, we consider the simpler case of linear functions with bounded gradients. Specifically, we assume the cost functions additionally satisfy \Cref{ass:online_linear}, which implies $H = 0$. Under this setting, we obtain the following informal corollary (see Appendix~\ref{sec:modify}):

\begin{corollary}[Informal, Regret of \textsc{FedOSGD} for Linear Functions]
\label{coro:linear}
Suppose the cost functions satisfy \Cref{ass:online_convex,ass:online_bounded_gradients,ass:online_linear,ass:online_bounded_optimal,ass:online_zeta}. If we choose the same step-size $\eta$ and smoothing parameter $\delta$ as in Theorem~\ref{thm:smooth_first_stoch} with $H=0$, then the queried points $\{x_t^{m,j}\}_{t,m,j=1}^{T,M,2}$ of Algorithm~\ref{alg:fed_ogd} satisfy (for some numerical constant $c_{19}$):
\begin{align*}
    \frac{1}{2MT} \sum_{\substack{t \in [T], m \in [M], \\ j \in [2]}} \mathbb{E} \left[ f_t^m(x_t^{m,j}) - f_t^m(x^\star) \right] 
    \leq c_{19} \cdot \left(
        \frac{GB}{\sqrt{KR}} 
        + \frac{GB\sqrt{d}}{\sqrt{MKR}} 
        + \mathbb{I}_{K > 1} \cdot \left[
            \frac{\sqrt{G\hat\zeta}\,Bd^{1/4}}{K^{1/4} \sqrt{R}} 
            + \frac{\hat\zeta B}{\sqrt{R}}
        \right]
    \right)\enspace,
\end{align*}
where $x^\star \in \arg\min_{x \in \bb_2(B)} \sum_{t \in [KR]} f_t(x)$, and the expectation is w.r.t. the choice of function-value queries.
\end{corollary}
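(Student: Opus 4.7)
The plan is to specialize Theorem~\ref{thm:smooth_first_stoch} to the linear setting and verify that the $H$-dependent terms in its regret bound vanish without breaking the underlying consensus-error analysis. The first step is to observe that Assumption~\ref{ass:online_linear} allows us to take $H=0$ in Assumption~\ref{ass:online_smooth_second}, and, more strongly, that linearity yields an exact identity for the smoothed surrogate: for $f_t^m(x) = \langle \beta_t^m, x\rangle$ with $\|\beta_t^m\|\leq G$,
\[
    \hat{f}_t^m(x) := \ee_{u_t^m \sim \mathrm{Unif}(\ss_{d-1})}\sb{f_t^m(x + \delta u_t^m)} = f_t^m(x)\enspace,
\]
because $\ee[u_t^m] = 0$. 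Consequently the two-point estimator $g_t^m$ in Algorithm~\ref{alg:fed_ogd} is an \emph{exactly} unbiased estimate of $\nabla f_t^m(x_t^m)$ (not merely of a smoothed gradient), while its second moment is still bounded by $dG^2$ via~\citet{shamir2017optimal}.

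Next, I would revisit the proof of Theorem~\ref{thm:smooth_first_stoch} (in Appendix~\ref{app:smth}) and track which pieces of the regret are driven by what. The bound decomposes into three conceptual parts: (i) a Local-SGD-style optimization term which, once the $H$-smoothness contribution in the one-step descent lemma is set to zero, leaves the standard stochastic-convex-optimization rate $\ooo(GB/\sqrt{KR} + GB\sqrt d/\sqrt{MKR})$; (ii) a bias term from the smoothing surrogate $\hat f_t^m - f_t^m$, which is identically zero by the linearity argument above; and (iii) a consensus-error term that benefits from the bounded first-order heterogeneity $\hat\zeta$ and still produces the $\ii_{K>1}\cdot\bigl[\sqrt{G\hat\zeta}\,Bd^{1/4}/(K^{1/4}\sqrt R) + \hat\zeta B/\sqrt R\bigr]$ contribution from the first argument of the minimum in Theorem~\ref{thm:smooth_first_stoch}. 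The $\sqrt{HF_\star}\,B/\sqrt{KR}$ term is proportional to $\sqrt H$ and therefore also drops out. Choosing $\eta$ and $\delta$ as prescribed in Theorem~\ref{thm:smooth_first_stoch} with $H=0$ then delivers exactly the bound claimed in the corollary.

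The main obstacle is a careful book-keeping step: ensuring that every appearance of $H$ in the proof of Theorem~\ref{thm:smooth_first_stoch}---especially in the one-step descent lemma, and in the place where the self-bounding property (Remark~\ref{rem:self_bounding}) is invoked to turn function sub-optimality at $x^\star$ into a bound on $\|\nabla f_t(x^\star)\|^2$---degenerates gracefully when $H = 0$, rather than being divided by $H$. In particular, one step in the consensus-error analysis likely uses $H$-smoothness to absorb a drift term; with $H = 0$ this absorption must instead be carried out directly using the $G$-Lipschitz bound together with convexity. Since the gradient-variance bound $dG^2$ already scales with $G$ rather than with $H$, this replacement should go through cleanly, but it is the one spot where the linear case is not just a naive $H\to 0$ limit of Theorem~\ref{thm:smooth_first_stoch} and where the modifications flagged in Appendix~\ref{sec:modify} are actually needed.
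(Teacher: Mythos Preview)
Your plan matches the paper's approach in Appendix~\ref{sec:modify}: set $H=0$, use that $\hat f_t^m \equiv f_t^m$ for linear costs, and take the first branch of the minimum in Theorem~\ref{thm:smooth_first_stoch} for the consensus term. One clarification is worth making. You say the $\sqrt{HF_\star}B/\sqrt{KR}$ term ``is proportional to $\sqrt{H}$ and therefore also drops out,'' and that the self-bounding step ``degenerates gracefully when $H=0$.'' That is not quite the mechanism. The self-bounding inequality $\|\nabla f\|^2 \le 2H(f(x)-\min f)$ is \emph{vacuous} for linear functions because they are unbounded below; if you naively set $H=0$ in the $\min\{GB/\sqrt{KR},\ \sqrt{HF_\star}B/\sqrt{KR}\}$ appearing in Lemma~\ref{lem:smooth_first_stoch}, you would conclude that this piece of the bound is zero, which is false. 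The paper's fix (and what your final paragraph effectively lands on) is to discard the self-bounding branch \eqref{eq:blue_bd_two} entirely for the ``blue'' term and rely solely on the $G$-Lipschitz bound \eqref{eq:blue_bd_one}, which yields $GB/\sqrt{KR}$; for the ``red'' consensus term you keep only \eqref{eq:red_bd_two} with $H=0$. So the bookkeeping you flag as the main obstacle is exactly right, but the reason is non-negativity failing, not an $H\to 0$ limit.
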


\textbf{Implications of Theorem~\ref{thm:smooth_first_stoch} and Corollary~\ref{coro:linear}:}
For linear cost functions, the last two terms in the average regret bound vanish when $\hat\zeta = 0$, and the bound improves monotonically as $\hat\zeta$ decreases. In fact, when $K = 1$ or $\hat\zeta = 0$, the regret simplifies to $\ooo\left(GB/\sqrt{KR} + GB\sqrt{d}/\sqrt{MKR}\right)$, which is optimal (cf.~Appendix~\ref{app:two_pt_lb}). This shows that \textsc{FedOSGD} can effectively exploit small heterogeneity in the system.

More broadly, whenever 
\[
K \leq \min\left\{\frac{G^2}{\hat\zeta^2d},\ \frac{G^2d}{\hat\zeta^2M^2}\right\} = \frac{G^2}{\hat\zeta^2}\cdot\min\cb{\frac{1}{d},\ \frac{d}{M^2}}\enspace,
\]
\textsc{FedOSGD} again achieves the optimal regret rate of $\ooo\left(GB/\sqrt{KR} + GB\sqrt{d}/\sqrt{MKR}\right)$. In particular, the smaller the instantaneous first-order heterogeneity $\hat\zeta$, the higher the local updates can be made while obtaining optimal regret. In comparison, the non-collaborative baseline~\citep{shamir2017optimal} achieves only $\ooo\left(GB\sqrt{d}/\sqrt{KR}\right)$, so collaboration provides a clear advantage, especially in high-dimensional settings.

\paragraph{\textbf{Single vs.\ Two-Point Feedback.}} For federated adversarial linear bandits, Algorithm~\ref{alg:fed_ogd} (\textsc{FedOSGD}) achieves the regret bound:
\begin{align*}
    \frac{GB}{\sqrt{KR}} + \frac{ GB\sqrt{d}}{\sqrt{MKR}} + \mathbb{I}_{K>1} \cdot \left( \frac{\sqrt{G\hat\zeta}\,Bd^{1/4}}{K^{1/4}\sqrt{R}} + \frac{\hat\zeta B}{\sqrt{R}} \right).
\end{align*}
In contrast, Algorithm~\ref{alg:fed_pogd} (\textsc{FedPOSGD}) with one-point feedback yields:
\begin{align*}
    \frac{GB}{\sqrt{KR}} + \frac{GB d}{\sqrt{MKR}} + \mathbb{I}_{K>1} \cdot \left( \frac{GB\sqrt{d}}{K^{1/4}\sqrt{R}} + \frac{\sqrt{G\hat\zeta}B}{\sqrt{R}} \right).
\end{align*}
Thus, \textsc{FedOSGD} achieves strictly better regret bounds in both $d$ and $\hat\zeta$, while also avoiding the need for projection steps. These improvements demonstrate that access to richer feedback (via two-point queries) can substantially enhance performance in federated adversarial linear bandit settings.

\chapter{Conclusion}\label{ch:conclusion}

This thesis presents a unified and heterogeneity-aware theory of local update algorithms in distributed optimization. The core contributions span conceptual, technical, and algorithmic dimensions:

\begin{itemize}
    \item We develop a min-max complexity framework that distinguishes between first- and second-order heterogeneity, offering sharper and more interpretable models for analyzing distributed learning algorithms.
    \item Across both convex and non-convex settings, we establish that small second-order heterogeneity is necessary and sufficient for local update algorithms like Local SGD to outperform centralized or mini-batch methods. This insight underpins our lower and upper bounds, unifying their conclusions under a common theme.
    \item In some regimes where local updates are suboptimal, we prove the min-max optimality of classical methods such as mini-batch SGD, clarifying the boundary between algorithmic effectiveness and structural limitations.
    \item We contribute a new fixed-point perspective on Local SGD, revealing a heterogeneity-aware form of implicit bias and conditioning that affects its behavior.
    \item Our consensus-error analysis framework enables improved upper bounds, especially under third-order smoothness, and accommodates more relaxed assumptions than previous approaches.
    \item We design and analyze \textsc{CE-LSGD}, a new communication-efficient local update algorithm for the non-convex setting, and prove its near-optimality.
    \item Finally, we establish a theory of federated online optimization, showing when collaboration helps (and when it doesn't) in both full-information and bandit feedback regimes.
\end{itemize}

Together, these contributions offer a principled and comprehensive picture of the role local updates can play in federated and distributed optimization. They bring clarity to longstanding questions about when local computation helps, why it helps, and how much can be gained in different regimes of heterogeneity.

\paragraph{Outlook and Open Problems.} Several avenues remain open for future work. Technically, it would be valuable to refine the consensus-error framework further, especially for algorithms with adaptive step sizes, compression, or partial participation. Extending our upper bounds to general convex settings, where we currently cannot completely get rid of \Cref{ass:zeta} is another important challenge. Conceptually, a better understanding of structured heterogeneity---such as clustering, task similarity, or adversarial noise---could inspire new adaptive algorithms.

More broadly, local updates raise pressing questions about fairness, privacy, and personalization, especially as federated learning is deployed in systems that must respect data sovereignty and user constraints. Bridging optimization theory with these broader considerations will be a central task in the years ahead. It is my hope that this thesis helps lay a theoretical foundation for these future explorations.





\appendix
\chapter{Additional Details for Chapter 2}\label{app:chap2}
\section{Proof of Equation \ref{eq:motivation}}
\begin{proof}
    Note that the update on machine $m$ leading up to communication round $r$ is as follows for $k\in[0,K-1]$ and $m=1$,
    \begin{align*}
     &x_{r,k+1}^1[1] = x_{r,k}^1[1] - \eta H(x_{r,k}^1[1]-x^\star[1]),\\
     &\Rightarrow x_{r,k+1}^1[1] = x^\star[1] + (1-\eta H)^{k+1}(x_{r,0}^1[1]-x^\star[1]),\\
     &\Rightarrow x_{r,K}^1[1] = x^\star[1] + (1-\eta H)^{K}(x_{r-1}[1]-x^\star[1]),\\
     &\Rightarrow x_{r,K}^1[1] - x_{r-1}[1] = (1-(1-\eta H)^{K})(x^\star[1]-x_{r-1}[1]).
    \end{align*}
    On the second dimension, the iterates don't move at all for $m=1$, 
    \begin{align*}
        x_{r,K}^1[2] - x_{r-1}[2] = 0.
    \end{align*}
    Writing a similar expression for the second machine and averaging these updates we get,
    \begin{align*}
        \frac{1}{2}\sum_{m\in[2]}(x_{r,K}^m-x_{r-1}) = \frac{1}{2}(1-(1-\eta H)^{K})(x^\star-x_{r-1}).
    \end{align*}
    This gives the update for communication round $r$ as follows,
    \begin{align*}
        &x_r = x_{r-1} + \frac{\beta}{2}(1-(1-\eta H)^{K})(x^\star-x_{r-1}),\\
        &\Rightarrow x_r - x^\star = \rb{1 - \frac{\beta}{2}(1-(1-\eta H)^{K})}(x_{r-1}-x^\star),\\
        &\Rightarrow x_R = x^\star + \rb{1 - \frac{\beta}{2}(1-(1-\eta H)^{K})}^R(x_{0}-x^\star),\\
        &\Rightarrow x_R = \rb{1 - \rb{1 - \frac{\beta}{2}(1-(1-\eta H)^{K})}^R}x^\star,
    \end{align*}
    which finishes the proof.
\end{proof}

\section{Proof of \texorpdfstring{\Cref{prop:zeta_limitation}}{TEXT}}
\begin{proof}
Note the following for any $m\in[M]$ using triangle inequality,
\begin{align*}
    sup_{x\in \rr^d}\norm{\nabla F_m(x) - \nabla F(x)} &= sup_{x\in \rr^d}\norm{(A_m-A)x + b_m-b},\\
    &\geq sup_{x\in \rr^d}\norm{(A_m-A)x} - \norm{b_m-b}.
\end{align*}
Denote the matrix $C_m:= A_m-A = [c_{m,1}, \dots, c_{m,d}]$ using its column vectors. Then take $x = \delta e_i$ where $e_i$ is the $i$-th standard basis vector to note in the above inequality,
\begin{align*}
        sup_{x\in \rr^d}\norm{\nabla F_m(x) - \nabla F(x)} &\geq \delta \norm{(A_m-A)e_i} - \norm{b_m-b},\\
        &\geq \delta \norm{c_{m,i}} - \norm{b_m}-\norm{b}.
    \end{align*}
    Assuming $\norm{b_m}, \norm{b}$ are finite, since we can take $\delta\to \infty$ we must have $\norm{c_{m,i}}=0$ for all $i\in[d]$ if $\zeta < \infty$. This implies that $c_{m,i} = 0$ for all $i\in[d]$, or in other words $A_m = A$. Since this is true for all $m\in[M]$, the machines must have the same Hessians, and thus they can only differ upto linear terms.
\end{proof}
\chapter{Additional Details for Chapter 3}\label{app:chap3}
\section{Proof of \texorpdfstring{\Cref{lem:condition_number,thm:new_LSGD_lower_bound}}{TEXT}}
We will first prove \Cref{lem:condition_number}.
\begin{proof}
Let $A$ be the Hessian of $F$. Observe that we have $F(x) - F(x^\star) = \frac{1}{2}(x - x^\star)^TA(x - x^\star)$.

Let $v_1$ and $v_2$ be the eigenvectors of norm $1$ of $A$ with the greatest and least eigenvalues, respectively. Assume $x^\star := -B\left(\frac{v_1 + v_2}{\sqrt{2}}\right)$, which ensures $\|x^\star\|_2 = B$. Then, solving for the GD iterates in closed form, we have 
\begin{align*}
    x_R - x^\star &= x_{R-1} - x^\star - \eta A\rb{x_{R-1} - x^\star}\enspace,\\
    &= \rb{I - \eta A}\rb{x_{R-1} - x^\star}\enspace,\\
    &=^{\text{(a)}} \rb{v_1v_1^T + v_2v_2^T - \eta Hv_1v_1^T - \eta \mu v_2v_2^T}^R\rb{x_{0} - x^\star}\enspace,\\
    &= \rb{ (1- \eta H)v_1v_1^T + (1 - \eta \mu)v_2v_2^T}^R\rb{x_{0} - x^\star}\enspace,\\
    &= \rb{ (1- \eta H)^Rv_1v_1^T + (1 - \eta \mu)^Rv_2v_2^T}\rb{- x^\star}\enspace,\\
    &= \frac{B}{\sqrt{2}}\left(1 - \eta H\right)^Rv_1 + \frac{B}{\sqrt{2}}\left(1 - \eta \frac{H}{\kappa}\right)^Rv_2\enspace. 
\end{align*}
where in (a) we use the eigenvalue decomposition of $A = Hv_1v_1^T + \mu v_2v_2^T$ and the fact that for orthonormal vectors $v_1,\ v_2$ we have $I_2 = v_1v_1^T + v_2v_2^T$. 
Observe that if $\eta \geq \frac{3}{H}$, then the iterates explode and we have $F(x_R) \geq F(x_0) \geq \Omega\left(HB^2\right)$.

If $\eta \leq \frac{3}{H}$, then using the fact that $\kappa \geq 6$, we have
\begin{align*}
    F(x_R) - F(x^\star) &\geq^{\text{(a)}} \frac{1}{2}\left(\frac{B}{\sqrt{2}}\left(1 - \frac{3}{\kappa}\right)^Rv_2\right)^T A\left(\frac{B}{\sqrt{2}}\left(1 - \frac{3}{\kappa}\right)^Rv_2\right)\enspace,\\
    &= \frac{B^2}{4}\left(1 -  \frac{3}{\kappa}\right)^{2R}v_2^TAv_2 \enspace,\\
    &= \frac{B^2}{4}\left(1 - \frac{3}{\kappa}\right)^{2R}\frac{H}{\kappa}\enspace,\\
    &\geq^{\text{(b)}}\frac{HB^2}{4R}\rb{1-\frac{6R}{\kappa}}\enspace,
\end{align*}
where in (a) we lower bound by the function sub-optimality only in the second component corresponding to $v_2$; and in (b) we assume $\kappa\geq 3$ and Bernoulli's inequality. Finally using $\kappa = 12 R$ we get the lower bound $\frac{HB^2}{8R}$. The result follows.
\end{proof}
 
Now we are ready to prove the lower bound in \Cref{thm:new_LSGD_lower_bound}. 
\begin{proof}
    First we will see how to get the leading and most important term $\frac{HB^2}{R}$ in the lower bound. 
    
    We will consider a two-dimensional problem in the noiseless setting for this proof, as we do not want to understand the dependence on $\sigma$ or $d$. Define $A_1 := \left(\begin{array}{cc}
            1 & 0 \\
            0 & 0
        \end{array}\right)$ and $A_2:=vv^T$, where $v=(\alpha,\sqrt{1-\alpha^2})$ and $\alpha\in (0,1)$. For even $m$, let 
    \begin{equation*}
        F_m(x):= \frac{H}{2}(x-x^*)^TA_1(x-x^*)\enspace.
    \end{equation*}
    For odd $m$, let 
    \begin{equation*}
        F_m(x):= \frac{H}{2}(x-x^*)^TA_2(x-x^*)\enspace.
    \end{equation*}
    Note that $A_1$ and $A_2$ are rank-1 and have eigenvalues $0$ and $1$, and thus they satisfy Assumption \ref{ass:smooth_second}. Furthermore, both the functions have a shared optimizer $x^\star$. It is easy to verify that,
    \begin{equation*}
        (I-\eta HA_i)^K = I-(1-(1-\eta H)^K)A_i=: I-\widetilde{\eta}HA_i,
    \end{equation*}
    where $\widetilde{\eta}:=(1-(1-\eta H)^K) / H$. Note that the above property will be crucial for our construction, and we can not satisfy this property if our matrices are not ranked one. For any $x$, let us denote the centered iterate by $\tilde{x} := x - x^\star$. Then, for any $r\in[R]$  and $m\in[M]$ we have
    \begin{equation*}
        \tilde{x}^{m}_{r,K} = (I-\eta HA_m)^K \tilde{x}_{r-1} = (I-\widetilde{\eta}HA_m)\tilde{x}_{r-1}.
    \end{equation*}
    Using this, we can write the updates between two communication rounds as,
    \begin{align*}
        \tilde{x}_{r} &= \tilde{x}_{r-1} + \frac{\beta}{M}\sum_{m\in[M]}\rb{\tilde{x}_{r,K}^m - \tilde{x}_{r-1}},\\
        &= \tilde{x}_{r-1} - \frac{\beta}{M}\sum_{m\in[M]}\widetilde{\eta}HA_m\tilde{x}_{r-1},\\
        &= \rb{I - \beta \widetilde{\eta}H A}\tilde{x}_{r-1},\\
        &= \tilde{x}_{r-1} - \beta \widetilde{\eta}\nabla F(x_{r-1}),
    \end{align*}
    where we used that $F(x)=\frac{H}{2}(x-x^\star)^TA(x-x^\star)$, for $A = (1-a)A_1 + aA_2$ and 
    \begin{equation*}
        a := \left\{
            \begin{array}{ll}
                1/2 &  \text{if $M$ is even}, \\
                (M+1)/2M & \text{otherwise}.
            \end{array}
        \right.
    \end{equation*}
    This implies the iterates of local GD across communication rounds are equivalent to GD on $F(x)$ with step size $\beta (1-(1-\eta H)^K) / H$. Combining this observation with Lemma \ref{lem:condition_number} about the function sub-optimality of gradient descent updates will finish the proof. To use the lemma, however, we need to verify our average function $F$ has condition number $\Omega(R)$. We can explicitly compute the eigenvalues of $A$ as follows,
    \begin{align*}
        \lambda_1 &= \frac{1}{2} + \sqrt{\frac{1}{4}-(a-a^2)(1-\alpha^2)},\\ 
        \lambda_2 &= \frac{1}{2} - \sqrt{\frac{1}{4}-(a-a^2)(1-\alpha^2)}.
    \end{align*}
    Note that $\lim_{\alpha \rightarrow 1} \lambda_1 = 1$, and  $\lim_{\alpha \rightarrow 1} \lambda_2 = 0$ and thus $\lim_{\alpha \rightarrow 1} \lambda_1/\lambda_2 = \infty$. Since $\lambda_1/\lambda_2=1$ when $\alpha=0$, by the intermediate value theorem, we can choose $\alpha$ to get $\kappa=\Omega(R)$ for the average objective $F$. Thus, we can use Lemma \ref{lem:condition_number} and finish the proof.

    Now we will combine this lower bound of $HB^2/R$ with the previous hard instance of \citet{glasgow2022sharp} using \Cref{ass:phi_star}. To do so, we place the two instances on disjoint coordinates, increasing the dimensionality of our hard instance. This is a standard technique to combine lower bounds. We first recall the lower bound due to \citet{glasgow2022sharp} (up to numerical constant)\footnote{ \citet{glasgow2022sharp} do not state their lower bound in terms of \Cref{ass:phi_star}, but a weaker first-order heterogeneity. However their hard instance satisfies \Cref{ass:phi_star} and can thus be translated to our setting.},
    \begin{align*}
        \frac{HB^2}{KR} + \frac{\sigma_2 B}{\sqrt{MKR}} + \min\cb{\frac{\sigma_2 B}{\sqrt{KR}}, \frac{H^{1/2}\sigma_2^{2/3}B^{4/3}}{K^{1/3}R^{2/3}}} + \min\cb{\frac{\phi_\star^2}{H}, \frac{H^{1/3}\phi_\star^{2/3}B^{4/3}}{R^{2/3}}}\enspace.
    \end{align*}
    
    To get rid of the terms with the minimum function in the lower bound of \cite{glasgow2022sharp}, we note the following,
    \begin{itemize}
        \item $\frac{\sigma B}{\sqrt{KR}} \geq \frac{(H\sigma_2^2B^4)^{1/3}}{K^{1/3}R^{2/3}}$ implies that $\frac{\sigma_2 B}{\sqrt{KR}} \leq \frac{HB^2}{R}$, and
        \item $\frac{\zeta_\star^2}{H} \geq \frac{(H\zeta_\star^2B^4)^{1/3}}{R^{2/3}}$ implies that $\frac{\zeta_\star^2}{H} \leq \frac{HB^2}{R}$.
    \end{itemize}
    These observations allow us to avoid the minimum operations, thus concluding the proof of the theorem.
\end{proof}

\section{Proof of Theorem~\ref{thm:AIlb_zeta0}}
For  even $m$, let 
\begin{equation}
    F_m(x) := \frac{H}{2}\left((q^2 + 1)(q - x_1)^2 + \sum_{i = 1}^{\lfloor{(d - 1)/2}\rfloor} (qx_{2i} - x_{2i + 1})^2\right),
\end{equation}
and for odd $m$, let 
\begin{equation}
    F_m(x) =  \frac{H}{2}\left(\sum_{i = 1}^{\lfloor{d/2}\rfloor} (qx_{2i - 1} - x_{2i})^2\right).
\end{equation}
Thus we have
\begin{equation}
    F(x) = \mathbb{E}_m[F_m(x)] = \frac{H}{2}\left((q^2 + 1)(q - x_1)^2 + \sum_{i = 1}^{\d} (qx_{i} - x_{i + 1})^2\right).
\end{equation}

Observe that the optimum of $F$ is attained at $x^\star$, where $x^\star_i = q^i$. 
Theorem~\ref{thm:AIlb_zeta0} improves on the previous best lower bounds by introducing the term $\frac{HB^2}{R^2}$. Combining the following lemma with standard arguments to achieve the $\frac{\sigma B}{\sqrt{MKR}}$ suffices to prove Theorem~\ref{thm:AIlb_zeta0}.

\begin{lemma}\label{lem:AI}
For any $K \geq 2, R, M, H, B, \sigma$, there exist $f(x; xi)$ and distributions $\{\mathcal{D}_m\}$, each satisfying \Cref{ass:smooth_second,ass:bounded_optima,ass:stoch_bounded_second_moment}, and together satisfying $\frac{1}{M} \sum_{m=1}^M \|\nabla F_m(x^{\star})\|_2^2 = 0$, such that for initialization $x^{(0, 0)}=0$, the final iterate $\hat{x}$ of any zero-respecting with $R$ rounds of communication and $KR$ gradient computations per machine satisfies
\begin{align}
 \mathbb{E}\left[F(\hat{x})\right] - F(x^{\star}) \succeq \frac{HB^2}{R^2}.
\end{align}
\end{lemma}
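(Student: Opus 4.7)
The plan is to follow the classical Nesterov/Nemirovski chain lower bound blueprint, but adapted to capture the bipartite parity structure of the even/odd machines. First, I would verify the regularity of the instance: each $F_m$ is a convex quadratic whose Hessian can be shown (by Gershgorin, or a direct computation on the tridiagonal-type structure) to satisfy $\|\nabla^2 F_m\| \leq H$ after an appropriate constant rescaling, and the shared minimizer $x^\star_i = q^i$ gives $F(x^\star)=0$ and $\nabla F_m(x^\star)=0$ for every $m$, so that the claimed heterogeneity condition $\frac{1}{M}\sum_m\|\nabla F_m(x^\star)\|^2 = 0$ is automatic. I would then choose $q = 1 - c/R$ and $d = \Theta(R)$, so that $\|x^\star\|^2 = \sum_{i=1}^d q^{2i} = \Theta(R)$; rescaling the whole function (and hence $B$) gives the target norm $\|x^\star\|\leq B$ while preserving smoothness.

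The technical core is a \emph{support-growth lemma} stating that for any distributed zero-respecting algorithm in the IC model, the iterate after $r$ communication rounds has support contained in $\{1, 2, \ldots, r+O(1)\}$, independent of $K$. The key observation is parity alternation: terms of $F_m$ for even $m$ have the form $(q-x_1)^2$ and $(qx_{2i}-x_{2i+1})^2$, so the gradient can only first activate a coordinate $x_{2i+1}$ once $x_{2i}$ has become nonzero, while for odd $m$, gradients only first activate $x_{2i}$ from $x_{2i-1}$. Consequently, within a single round, each machine's local iterates can extend its frontier by at most one coordinate, because after a single new activation the next needed gradient lives on the opposite parity of machine---the remaining $K-1$ local steps on the same machine cannot advance further. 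Communication between rounds merely takes the union of supports, so at most one new coordinate is added per round globally.

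Given the support bound $T := r + O(1) \leq O(R)$, I would close the argument with the standard truncated-chain calculation. Restricting $F$ to vectors supported on $\{1,\ldots,T\}$ yields the tridiagonal quadratic
\begin{equation*}
F_T(x) = \tfrac{H}{2}\Bigl((q^2+1)(q-x_1)^2 + \sum_{i=1}^{T-1}(qx_i-x_{i+1})^2 + q^2 x_T^2\Bigr)\enspace,
\end{equation*}
whose explicit minimum (computed by solving the associated linear recurrence) is bounded below by $\Omega(Hq^{2T})$. With $q = 1 - c/R$ and $T = O(R)$, we have $q^{2T} = \Theta(1)$, and after the rescaling that normalizes $\|x^\star\|$ to $B$, this translates to a lower bound $\Omega(HB^2/R^2)$ on $\mathbb{E}[F(\hat x)]-F(x^\star)$. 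The additive variance term $\sigma B/\sqrt{MKR}$ is obtained separately by embedding a mean-estimation-style hard instance on disjoint coordinates and invoking a standard information-theoretic argument; combining the two constructions on disjoint coordinate blocks yields the full statement.

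The main obstacle will be making the support-growth lemma fully rigorous. The zero-respecting definition allows arbitrary deterministic processing of previously received gradients, so I must argue that no internal manipulation can synthesize nonzero components in coordinates that never appeared in any gradient computed so far on \emph{any} machine. The bipartite parity of the chain is the crucial ingredient: one must track, for each machine and each local step within a round, the precise set of coordinates on which that machine's gradient can be nonzero given the current iterate's support, and verify that repeated local updates by a fixed machine saturate after a single new activation. Once this invariant is established round-by-round, the global support bound follows immediately from the union taken at each communication step.
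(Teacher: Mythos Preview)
Your construction and the parity-based support-growth lemma are exactly the paper's, and that is the heart of the proof. The divergence is in how you close the argument, and there is a quantitative slip you should fix.

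You assert $\min F_T = \Omega(Hq^{2T})$. Solving the recurrence (or using $F_T(x_{\min}) = F_T(0) + \tfrac{1}{2}\nabla F_T(0)^\top x_{\min}$, since only $\partial_{x_1}F_T(0)$ is nonzero) actually gives
\[
\min F_T \;=\; \frac{H(q^2+1)\,q^{2T+2}\,(1-q^2)}{2\bigl((q^2+1) - 2q^{2T+2}\bigr)} \;=\; \Theta\!\bigl(H q^{2T}(1-q)\bigr) \;=\; \Theta(H/R),
\]
one factor of $R$ smaller than you claim. Since your rescaling to $\|x^\star\|=B$ multiplies function values by $\lambda^2 = B^2/\|x^\star\|^2 = \Theta(B^2/R)$, the \emph{corrected} calculation does yield $\Theta(H/R)\cdot\Theta(B^2/R) = \Theta(HB^2/R^2)$, the target. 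As written, though, your numbers would produce $\Omega(HB^2/R)$, contradicting the accelerated mini-batch upper bound---so the slip is not cosmetic.

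The paper avoids this computation entirely. Rather than start at $0$ and rescale, it translates the instance so that the first $t = \tfrac{1}{2}\log_q(B^2/R)$ coordinates of the initialization already sit at $q^i$, giving $\|x_0 - x^\star\| \le B$ directly (with $d$ taken large). The support-growth bound then forces $\|\hat x - x^\star\|^2 \ge \sum_{i>t+R}q^{2i} = \Omega(B^2)$, and the argument closes via strong convexity: the tridiagonal Toeplitz Hessian of $F$ has smallest eigenvalue $\Omega(H(1-q)^2) = \Omega(H/R^2)$, so $F(\hat x)-F(x^\star)\ge \tfrac{\mu}{2}\|\hat x - x^\star\|^2 = \Omega(HB^2/R^2)$. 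This distance-plus-strong-convexity route is shorter and sidesteps the recurrence altogether.
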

\begin{proof}
    Consider the division of functions onto machines described above for some sufficiently large $d$.

    Let $q = 1 - \frac{1}{R}$, and let $t = \frac{1}{2}\log_q\left(\frac{B^2}{R}\right)$. We begin at the iterate $x_0$, where the coordinate $(x_0)_i = q^i$ for all $i < t$, and $(x_0)_i = 0$ for $i \geq t$. Observe that $\|x_0 - x^\star\|^2 \leq \sum_{i = t}^{\infty} q^{2i} \leq \frac{q^{2t}}{1 - q^2} \leq R q^{2t} \leq B^2$.
    
    Observe that for any zero-respecting algorithm, on odd machines, if for any $i$, we have $x_{2i}^m = x_{2i + 1}^m = 0$, then after any number of local computations, we still have $x_{2i + 1} = 0$. Similarly, on even machines, if for any $i$, we have $x_{2i - 1}^m = x_{2i}^m = 0$, then after any number of local computations, we still have $x_{2i} = 0$. 

    Thus, after $R$ rounds of communication, on all machines, we have $x_i^m = 0$, for all $i > t + R$. Thus for $d$ sufficiently large, we have $\|\hat{x} - x^\star\|^2 \geq \sum_{i = t +  R + 1}^d{q^{2i}} \geq \frac{q^{2t + 2R + 2} - q^{2d}}{1 - q^2} = \Omega\left(B^2 q^{2R + 2}\right) = \Omega(B^2)$ since $q = 1 - \frac{1}{R}$.

    Now observe that the Hessian of $F$ is a tridiagonal Toeplitz matrix with diagonal entries $H (q^2 + 1)$ and off-diagonal entries $-Hq$. It is well-known (see e.g., \cite{golub2005cme}) that the $d$ eigenvalues of $\tilde{M}$ are 
    $(1 + q^2)H + 2qH\cos\left(\frac{i \pi}{d + 1}\right)$ for $i = 1, \ldots, d$. Thus since $\cos(x) \geq -1$, we know that $F$ has strong-convexity parameter at least $H(q^2 + 1 - 2q) = \Omega\left(\frac{H}{R^2}\right) $, so we have $F(\hat{x}) - F(x^*) \geq \Omega\left(B^2\right)\Omega\left(\frac{H}{R^2}\right),$ which gives the desired result.
\end{proof}

\section{Proof of \texorpdfstring{\Cref{thm:new_LSGD_lower_bound_with_tau}}{TEXT}}

To prove the theorem, we will first show the following lemma.
\begin{lemma}\label{lem:just_tau_lb}
    There exists a convex quadratic function for $x\in\rr^3$ satisfying \Cref{ass:smooth_second,ass:bounded_optima,ass:tau}, such the Local SGD iterate $\bar x_{R}$, when initialized at zero and for any choice of step-sizes $\eta,\ \beta >0$ must have $F(\bar x_{R}) - F(x^\star) = \Omega\rb{\frac{\tau B^2}{R}}$.
\end{lemma}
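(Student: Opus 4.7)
The plan is to extend the hard instance from the proof of \Cref{thm:new_LSGD_lower_bound} by replacing $H$ with $\tau$ in the two-dimensional block that drives the lower bound, and to add a third coordinate that carries the full smoothness $H$ but is \emph{shared} between the two machines (so it does not contribute to second-order heterogeneity). Concretely, pick a unit vector $v = (\alpha, \sqrt{1-\alpha^2}, 0) \in \rr^3$ for $\alpha \in (0,1)$ to be chosen below, define
\begin{align*}
    A_1 := \tau\, e_1 e_1^T + H\, e_3 e_3^T, \qquad A_2 := \tau\, vv^T + H\, e_3 e_3^T,
\end{align*}
set $x^\star \in \rr^3$ with $\|x^\star\| = B$ and $x^\star_3 = 0$, and let $F_m(x) := \tfrac{1}{2}(x - x^\star)^T A_m (x - x^\star)$. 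A direct check gives $0 \preceq A_m \preceq H\cdot I$ and $\|A_1 - A_2\|_{op} = \tau\|e_1 e_1^T - vv^T\|_{op} \leq \tau$, so \Cref{ass:smooth_second,ass:tau} hold, and $\|x^\star\| = B$ gives \Cref{ass:bounded_optima}.

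Because $A_1$ and $A_2$ are each a sum of commuting rank-one projections, the matrix power admits a closed form: for any $\eta < 1/H$,
\begin{align*}
    (I - \eta A_m)^K = I - \bigl(1 - (1-\eta\tau)^K\bigr) A_m^\tau / \tau - \bigl(1 - (1-\eta H)^K\bigr) e_3 e_3^T,
\end{align*}
where $A_m^\tau$ denotes the rank-one $\tau$-part of $A_m$. Consequently, in the noiseless setting and for any choice of $\eta,\beta > 0$, one full round of Local SGD reduces to a single centralized gradient-descent step on $F$, with effective step-sizes $\tilde\eta_\tau := \beta(1-(1-\eta\tau)^K)/\tau$ on the first two coordinates and $\tilde\eta_H := \beta(1-(1-\eta H)^K)/H$ on the third. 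Since both machines share the same $H\, e_3 e_3^T$ block and $x^\star_3 = 0$, Local SGD initialized at zero stays at zero in the third coordinate, and that direction contributes nothing to the suboptimality.

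On the first two coordinates, Local SGD is therefore equivalent to $R$ steps of gradient descent on the quadratic $\tfrac{\tau}{2}(x-x^\star)^T \bar A (x-x^\star)$ with $\bar A := (e_1 e_1^T + vv^T)/2$ and some positive step-size. A short eigenvalue computation gives $\lambda_{1,2}(\alpha) = \tfrac{1}{2} \pm \tfrac{1}{2}\sqrt{\alpha^2}$, so $\lambda_1/\lambda_2 \to \infty$ as $\alpha \to 1$ while $\lambda_1 \to 1$. By continuity in $\alpha$, we can tune $\alpha$ so that the condition number of $\tau \bar A$ equals exactly $12R$, with smoothness $\Theta(\tau)$ and strong-convexity $\Theta(\tau/R)$. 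Placing the restriction of $x^\star$ to the first two coordinates along the worst-case eigenvector from \Cref{lem:condition_number} and invoking that lemma (which already handles any gradient-descent step-size) yields
\begin{align*}
    F(\bar x_R) - F(x^\star) \geq \Omega\!\rb{\frac{\tau B^2}{R}},
\end{align*}
uniformly in $\eta,\beta > 0$, as claimed.

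The main obstacle is designing the 2D block so that (i) both Hessians are rank-one projections, enabling the closed-form reduction of $K$ local updates to a single effective step and thereby removing dependence on $\beta$, and (ii) the average Hessian has a tunable condition number of order $R$ while keeping the individual spectral norms and the Hessian difference bounded by $\tau$. This is essentially the rank-one rotation trick already used for \Cref{thm:new_LSGD_lower_bound}; the only genuinely new point is to scale it by $\tau$ rather than $H$ and graft on a shared $H$-direction to respect \Cref{ass:smooth_second} without creating extra heterogeneity.
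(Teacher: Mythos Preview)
Your proposal is correct and follows essentially the same approach as the paper: the same rank-one rotation construction in the first two coordinates scaled by $\tau$, a shared $H$-direction in the third coordinate, the same closed-form reduction of $K$ local steps to one effective GD step on the averaged 2D Hessian, and the same appeal to \Cref{lem:condition_number} after tuning $\alpha$ to get condition number $\Theta(R)$. Your choice $x^\star_3=0$ is a mild simplification (it makes the third coordinate identically zero rather than requiring you to drop a nonnegative term), and you implicitly take $M=2$ whereas the paper splits general $M$ into even/odd machines, but this extends trivially and does not affect the argument.
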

\begin{proof}
We consider the quadratic functions defined by the following two Hessians for $\tau \leq H$,
\begin{align*}
    A_1 = \begin{bmatrix}
        \tau \hat A_1 & 0\\
        0 & H
    \end{bmatrix} \quad \text{and} \quad A_2 = \begin{bmatrix}
        \tau \hat A_2 & 0\\
        0 & H
    \end{bmatrix}\enspace,
\end{align*}
where we for some $\alpha\in(0,1)$,
\begin{align*}
    \hat A_1 &:= \begin{bmatrix}
        1 & 0\\
        0 & 0
    \end{bmatrix}\enspace, \quad \text{and}\\
    \hat A_2 &:= vv^T = (\alpha, \sqrt{1-\alpha^2})(\alpha, \sqrt{1-\alpha^2})^T = \begin{bmatrix}
        \alpha^2 & \alpha\sqrt{1-\alpha^2}\\
        \alpha\sqrt{1-\alpha^2} & 1- \alpha^2
    \end{bmatrix}\enspace. 
\end{align*}
Note about the spectrum of $A_1$,
\begin{align*}
    \text{Spec}\rb{A_1} = \cb{0, \tau, H}\enspace.
\end{align*}
Similarly for $A_2$ we note that,
\begin{align*}
    &\text{det}\rb{\hat A_2 - \lambda I_2} = 0\enspace,\\
    &\Rightarrow (\lambda - \alpha^2)(\lambda -1 + \alpha^2) = \alpha^2(1-\alpha^2)\enspace,\\
    &\Rightarrow \lambda^2 -\rb{\alpha^2 + 1 - \alpha^2}\lambda = 0\enspace,\\
    &\Rightarrow \lambda \in \cb{0,\ 1}\enspace.
\end{align*}
which implies that also for, 
\begin{align*}
    \text{Spec}\rb{A_2} = \cb{0, \tau, H}\enspace.
\end{align*}
Thus objectives defined by both these Hessians $A_1$ and $A_2$ are $H$-smooth. Further, we can notice the following about the difference between these Hessians,
\begin{align*}
    \text{Spec}\rb{A_1 - A_2} &= \tau\cdot\text{Spec}\rb{\hat A_1 - \hat A_2}\cup \cb{0}\enspace,\\
    &= \tau\cdot\text{Spec}\rb{\begin{bmatrix}
        1-\alpha^2 & -\alpha\sqrt{1-\alpha^2}\\
        -\alpha\sqrt{1-\alpha^2} & -(1-\alpha^2)
    \end{bmatrix}}\cup \cb{0}\enspace,\\
    &= \cb{-\tau\sqrt{1-\alpha^2}, 0, \tau\sqrt{1-\alpha^2}}\cup \cb{0}\enspace,
\end{align*}
which implies that,
\begin{align*}
    \norm{A_1 - A_2} = \tau \sqrt{1-\alpha^2} \leq \tau\enspace.
\end{align*}
Now we shall split the objectives on each machine as follows,
For even $m$, let 
\begin{equation*}
    F_m(x):= \frac{1}{2}(x-x^*)^TA_1(x-x^*)\enspace.
\end{equation*}
For odd $m$, let 
\begin{equation*}
    F_m(x):= \frac{1}{2}(x-x^*)^TA_2(x-x^*)\enspace.
\end{equation*}
Note that the iterates after $K$ local updates leading up to communication round $r$ on machine $m$ gives,
\begin{align*}
    \tilde x_{r,K}^m &= \rb{I-\eta A_i}^K\tilde x_{r-1}\enspace,
\end{align*}
where we denote $\tilde x_{r,k}^m = x_{r,k}^m - x^\star$ for all $k\in[0,K]$ and $\tilde x_r = \bar x_r-x^\star$ for all $r\in[0,R]$. For odd machines it is straightforward that,
\begin{align*}
    \rb{I-\eta A_1}^K = \begin{bmatrix}
       (1-\eta\tau)^K & 0 & 0 \\
       0 & 1 & 0\\
       0 & 0 & (1-\eta H)^K
    \end{bmatrix} = \begin{bmatrix}
       I_2 - \rb{\frac{1-(1-\eta\tau)^K}{\tau}}\tau \hat A_1 & 0 \\
       0 & (1-\eta H)^K
    \end{bmatrix} \enspace.
\end{align*}
For even machines, the above can also be noted using $v_{\perp}$ as the unit vector orthogonal to $v$,
\begin{align*}
    \rb{I-\eta A_2}^K &= \begin{bmatrix}
       I_2 - \eta \tau \hat A_2 & 0 \\
       0 & 1-\eta H
    \end{bmatrix}^K = \begin{bmatrix}
       (I_2 - \eta \tau vv^T)^K & 0 \\
       0 & (1-\eta H)^K
    \end{bmatrix}\enspace,\\
    &= \begin{bmatrix}
        \rb{vv^T + v_{\perp}v_{\perp}^T - \eta\tau vv^T}^K & 0\\
        0 & (1-\eta H)^K
    \end{bmatrix}\enspace,\\ 
    &=^{\text{(a)}} \begin{bmatrix}
        (1-\eta\tau)^Kvv^T + v_{\perp}v_{\perp}^T & 0\\
        0 & (1-\eta H)^K
    \end{bmatrix}\enspace,\\
    &= \begin{bmatrix}
        I_2 - vv^T + (1-\eta\tau)^Kvv^T & 0\\
        0 & (1-\eta H)^K
    \end{bmatrix}\enspace,\\
    &= \begin{bmatrix}
        I_2 - \rb{\frac{1-(1-\eta\tau)^K}{\tau}}\tau \hat A_2 & 0\\
        0 & (1-\eta H)^K
    \end{bmatrix}\enspace,
\end{align*}
where in (a) we note that $\rb{(1-\eta\tau)vv^T + v_{\perp}v_{\perp}^T}^2 = (1-\eta\tau)^2vv^T + v_{\perp}v_{\perp}^T$. This implies for the local updates with $\tilde\eta := \rb{\frac{1-(1-\eta\tau)^K}{\tau}}$ for all $m\in[M]$, 
\begin{align*}
    \tilde{x}_{r,K}^m = \begin{bmatrix}
        I_2 - \tilde\eta\tau \hat A_m & 0\\
        0 & (1-\eta H)^K
    \end{bmatrix}\tilde{x}_{r-1} = \begin{bmatrix}
        \rb{I_2 - \tilde \eta\tau\hat A_i}\tilde{x}_{r-1}[1:2]\\
        (1-\eta H)^K\tilde{x}_{r-1}[3]
    \end{bmatrix}\enspace.
\end{align*}
Now, using the calculations so far, we can write the updates between two communication rounds as,
\begin{align*}
    \tilde{x}_{r} &= \tilde{x}_{r-1} + \frac{\beta}{M}\sum_{m\in[M]}\rb{\tilde{x}_{r,K}^m - \tilde{x}_{r-1}}\enspace,\\
    &= \tilde{x}_{r-1} - \frac{\beta}{M}\sum_{m\in[M]}\begin{bmatrix}
        \tilde \eta \tau \hat A_{(m-1)\ \text{mod}(2) + 1}\tilde{x}_{r-1}[1:2]\\
        (1-(1-\eta H)^K)\tilde{x}_{r-1}[3]
    \end{bmatrix}\enspace,\\
    &= \begin{bmatrix}
        \rb{I_2 - \beta\tilde\eta A[1:2;1:2]}\tilde{x}_{r-1}[1:2]\\
        \rb{1- \beta\rb{1-(1-\eta H)^K}}\tilde{x}_{r-1}[3]
    \end{bmatrix}\enspace.
\end{align*}
The above calculation implies that the third coordinate evolves as synchronized gradient descent with $KR$ iterations, while the first two coordinates evolve with step size $\beta\tilde \eta$ and a hessian matrix of $A[1:2;1:2]$ (i.e., the top-left $2\times 2$ block of $A$ the average Hessian) for $R$ iterations\footnote{We don't need to restrict the step-sizes because \Cref{lem:condition_number} works for any step-size.}. Now note that $A[1:2; 1:2] = \tau(1-a)\hat A_1 + \tau a\hat A_2$ and 
\begin{equation*}
    a := \left\{
        \begin{array}{ll}
            1/2 &  \text{if $M$ is even}, \\
            (M+1)/2M & \text{otherwise}.
        \end{array}
    \right.
\end{equation*}
Now, all we need to do is apply Lemma \ref{lem:condition_number} to the first two dimensions. To be able to do so we need to be able to choose a condition number $\kappa = \Omega(R)$ for $A[1:2;1:2]$, in particular $\Omega(\kappa)$. Let us first consider the case with even machines, i.e., when $a=1/2$. Then note that,
\begin{align*}
    A[1:2;1:2] = \tau\frac{\hat A_1 + \hat A_2}{2} = \frac{\tau}{2}\begin{bmatrix}
        1+\alpha^2 & \alpha\sqrt{1-\alpha^2}\\
        \alpha\sqrt{1-\alpha^2} & 1-\alpha^2
    \end{bmatrix}\enspace,
\end{align*}
which implies for the spectrum of the matrix,
\begin{align*}
    \text{Spec}(A[1:2;1:2]) = \frac{\tau}{2}\cb{1-\alpha, 1+\alpha}\enspace,
\end{align*}
which in turn guarantees that,
\begin{align*}
    \kappa\rb{A[1:2;1:2]} = \frac{1+\alpha}{1-\alpha}\enspace,
\end{align*}
which can indeed be made $\Omega(R)$ by picking an $\alpha$ close enough to $1$. Now let us look at the case when $M$ is odd and $a= \frac{M+1}{2M}$,
\begin{align*}
    A[1:2;1:2] = \frac{\tau}{2M}\begin{bmatrix}
        M-1 + (M+1)\alpha^2 & (M+1)\alpha\sqrt{1-\alpha^2}\\
        (M+1)\alpha\sqrt{1-\alpha^2} & (M+1)\rb{1-\alpha^2}
    \end{bmatrix}\enspace,
\end{align*}
which using simple calculations as before implies for the spectrum of the matrix,
\begin{align*}
    \text{Spec}\rb{A[1:2;1:2]} = \frac{\tau}{2M}\cb{M-\sqrt{1-\alpha^2 + M^2\alpha^2}, M + \sqrt{1-\alpha^2 + M^2\alpha^2}}\enspace,
\end{align*}
which implies that,
\begin{align*}
    \kappa\rb{A[1:2;1:2]} = \frac{M + \sqrt{1-\alpha^2 + M^2\alpha^2}}{M - \sqrt{1-\alpha^2 + M^2\alpha^2}}\enspace,
\end{align*}
which can which can indeed be made $\Omega(R)$ by picking an $\alpha$ close enough to $1$. Finally this allows us to use \Cref{lem:condition_number} which implies that the progress on the first two coordinates is lower bounded by $\frac{\tau B^2}{R}$ for any choice of hyperparameters. To make this more explicit, note the following for any model $\hat x$,
\begin{align*}
    F(\hat x)  - F(x^\star) &= \frac{1}{2}x^TAx - (Ax^\star)^Tx\enspace,\\
    &= \frac{1}{2}x^TAx - (x^\star)^TAx\enspace,\\
    &= \frac{1}{2}x[1:2]^TA[1:2;1:2]x[1:2] - (x^\star[1:2])^TA[1:2;1:2]x[1:2]\\
    &\quad + \frac{H}{2}x[3]^2 - H x^\star[3]x[3]\enspace,\\
    &\geq \frac{1}{2}x[1:2]^TA[1:2;1:2]x[1:2] - (x^\star[1:2])^TA[1:2;1:2]x[1:2]\enspace,\\
    &=: F_{1:2}(\hat x)  - F_{1:2}(x^\star)\enspace,
\end{align*}
where we define a different quadratic objective $F_{1:2}:\rr^2\to\rr^2$ using the top left two-dimensional block of the Hessian $A$. This implies that we can lower bound the sub-optimality $F(x_R) - F(x^\star)$ by $\frac{\tau B^2}{R}$, which finishes the proof of the lemma.
\end{proof}

Now to conclude the proof of \Cref{thm:new_LSGD_lower_bound_with_tau}, we first note the following tight lower bound for the homogeneous setting due to Glasgow et al.~\cite{glasgow2022sharp} for the local SGD iterate $\bar x_R$, which we recall also uses a quadratic hard instance satisfying \Cref{ass:convex,ass:smooth_second,ass:stoch_bounded_second_moment,ass:bounded_optima},
\begin{align*}
    F(\bar x_{R}) - F(x^\star)  &= \Omega\Bigg( \frac{H B^2}{K R} + \frac{\sigma_2 B}{\sqrt{M K R}} + \min\left\{ \frac{\sigma_2 B}{\sqrt{K R}}, \frac{H^{1/3} \sigma_2^{2/3} B^{4/3}}{K^{1/3} R^{2/3}} \right\} \Bigg)\enspace.
\end{align*}
We also recall the heterogeneous lower bound due to Glasgow et al.~\cite{glasgow2022sharp} using a quadratic hard instance satisfying \Cref{ass:convex,ass:smooth_second,ass:stoch_bounded_second_moment,ass:bounded_optima,ass:phi_star}, and apply it on $\tau$-smooth problems (instead of $H$-smooth in their construction, as they do not decouple $\tau$ and $H$ in their construction),
\begin{align*}
        F(\bar x_{R}) - F(x^\star)  &= \Omega\Bigg(\min\left\{ \tau\phi_\star^2, \frac{\tau \phi_\star^{2/3} B^{4/3}}{R^{2/3}} \right\} \Bigg)\enspace.
\end{align*}
To translate their bound to our setting we also set $\zeta_\star$ (in their lower bound, not to be confused with our \Cref{ass:zeta_star}) as $\phi_\star$ to account for the different definitions of first-order heterogeneity in their paper and ours (c.f., \Cref{ass:phi_star}). Combining \Cref{lem:just_tau_lb} with the above two lower bounds from Glasgow et al.~\cite{glasgow2022sharp} by placing different hard instances on disjoint co-ordinates and noting the independent evolution in the gradient descent iterates (which is made explicit in \Cref{lem:just_tau_lb}) completes the proof of \Cref{thm:new_LSGD_lower_bound_with_tau}.
\chapter{Additional Details for Chapter 4}\label{app:chap4}
\section{Some Technical Lemmas}
\begin{lemma}\label{lemma: matrix telescope}
    Let $A$ and $B$ be two positive-semidefinite matrices. We have: 
    \begin{align*}
        A^k - B^k = \sum_{j=0}^{k-1} A^{k-1-j}(A-B)B^j
    \end{align*}
\end{lemma}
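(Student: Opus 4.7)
The plan is to observe that the identity is purely algebraic (the positive-semidefiniteness hypothesis is not needed; it presumably appears because of how the lemma is used elsewhere) and follows from a telescoping sum. The key observation I would make first is that each summand admits the decomposition
\begin{align*}
    A^{k-1-j}(A-B)B^j = A^{k-j}B^j - A^{(k-1)-j}B^{j+1}\enspace,
\end{align*}
which is just multiplying out the $(A-B)$ factor and noting that the powers of $A$ on the left and the powers of $B$ on the right commute with themselves (though $A$ and $B$ need not commute with each other).

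With this decomposition in hand, summing over $j=0,\ldots,k-1$ gives a telescoping series. Letting $T_j := A^{k-j}B^j$, the right-hand side of the lemma becomes $\sum_{j=0}^{k-1}\rb{T_j - T_{j+1}} = T_0 - T_k = A^k - B^k$, which is exactly the left-hand side. This completes the proof.

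As a sanity check and an alternative presentation, I would also note that the identity can be proved by induction on $k$: the base case $k=1$ is the trivial identity $A-B=A-B$, and the inductive step follows from
\begin{align*}
    A^{k+1} - B^{k+1} = A(A^k - B^k) + (A-B)B^k\enspace,
\end{align*}
after plugging in the inductive hypothesis and re-indexing. I expect no real obstacle in this proof; the only care needed is to preserve the order of matrix products throughout, since $A$ and $B$ do not commute in general.
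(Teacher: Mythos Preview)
Your proposal is correct. Your primary argument---the direct telescoping via $T_j := A^{k-j}B^j$---is slightly more streamlined than the paper's proof, which proceeds by induction on $k$ using exactly the decomposition $A^{k+1}-B^{k+1} = A(A^k-B^k) + (A-B)B^k$ that you list as your alternative sanity check. The telescoping approach has the minor advantage of making the structure of the sum transparent in one line, while the induction makes the recursive buildup explicit; both require the same care with non-commuting factors, and neither uses positive-semidefiniteness, as you correctly note.
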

\begin{proof}
    we prove by induction. For $k=1$ we have: 
    \begin{align*}
        A - B = \sum_{j=0}^0 A^{-j} (A-B)B^j = A-B
    \end{align*}
    for $k+1$ we have: 
    \begin{align*}
        A^{k+1} - B^{k+1} = AA^k - BB^k = AA^k - AB^k + AB^k - BB^k = A(A^k-B^k) + (A-B)B^k
    \end{align*}
    for the first term in the above equality we have: 
    \begin{align*}
        A(A^k-B^k) = A \sum_{j=0}^{k-1} A^{k-1-j}(A-B)B^j = \sum_{j=0}^{k-1} A^{k-j}(A-B)B^j
    \end{align*}
    By adding the second term we have: 
    \begin{align*}
        A^{k+1} - B^{k+1} = \sum_{j=0}^{k-1} A^{k-j}(A-B)B^j + (A-B)B^k = \sum_{j=0}^{k} A^{k-j}(A-B)B^j
    \end{align*}
    which completes the proof. 
\end{proof}

\begin{lemma}\label{lem:g_non_increasing}
    Let $g(K) = \frac{1-(1-\eta H)^{K}}{1-(1-\eta \mu)^K}$, where $\eta < 1/H$ and $0<\mu \leq H$, then $g$ is a non-increasing function.
\end{lemma}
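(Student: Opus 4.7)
Set $a := 1-\eta H$ and $b := 1-\eta\mu$. The hypotheses $\eta < 1/H$ and $0 < \mu \leq H$ give $0 < a \leq b < 1$, so the numerator and denominator of $g(K) = (1-a^K)/(1-b^K)$ are strictly positive for every integer $K \geq 1$. The degenerate case $a=b$ (equivalently $\mu = H$) yields $g \equiv 1$ trivially, so I assume $a < b$ below. The strategy is to reduce the monotonicity of $g$ to an elementary algebraic inequality obtained via the geometric-series factorization $1-x^K = (1-x)\sum_{j=0}^{K-1} x^j$.

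\textbf{Reduction to a pointwise inequality.} Cross-multiplying $g(K+1) \leq g(K)$ by the positive denominators, applying the geometric-series identity to each of the four factors $1-a^{K+1}$, $1-a^K$, $1-b^{K+1}$, $1-b^K$, and cancelling the common positive factor $(1-a)(1-b)$, I reduce the claim to the inequality
\[
a^K \sum_{j=0}^{K-1} b^j \;\leq\; b^K \sum_{j=0}^{K-1} a^j.
\]
Rewriting each summand as $a^K b^j - a^j b^K = a^j b^j\bigl(a^{K-j} - b^{K-j}\bigr)$, I observe that every such term is non-positive because $a \leq b$ and $K-j \geq 1$ for $j \in \{0,\dots,K-1\}$. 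Summing over $j$ gives the displayed inequality, and hence $g(K+1) \leq g(K)$ for all $K \geq 1$.

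\textbf{Obstacle and remarks.} There is no genuine obstacle; the argument is purely algebraic once the geometric-series rewrite is recognised. The only care required is to verify positivity of $a$, $b$, $1-a^K$ and $1-b^K$ in order to justify the sign manipulations, which is exactly where the hypotheses $\eta < 1/H$ and $\mu > 0$ are used. If an extension to real $K \geq 1$ is ever needed, the same result follows by writing $g(K) = \kappa \cdot \sum_{j=0}^{K-1}(a/b)^j b^j \big/ \sum_{j=0}^{K-1} b^j$, recognising the second factor as a weighted average of the non-increasing sequence $c_j := (a/b)^j$ with weights $b^j$, and invoking the standard fact that appending a smaller element to such a weighted average can only decrease it; this perspective also explains intuitively why $g$ is non-increasing, interpolating between $g(1)=\kappa$ and $\lim_{K\to\infty} g(K)=1$.
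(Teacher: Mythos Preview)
Your proof is correct and follows essentially the same approach as the paper: both set $a=1-\eta H$, $b=1-\eta\mu$, use the geometric-series factorization $1-x^K=(1-x)\sum_{j=0}^{K-1}x^j$, and reduce $g(K+1)\le g(K)$ to the inequality $a^K\sum_{j=0}^{K-1}b^j \le b^K\sum_{j=0}^{K-1}a^j$, which is then verified termwise using $a\le b$. The only cosmetic difference is that the paper factors out $a^Kb^K$ and compares negative powers $a^{i-K}-b^{i-K}$, whereas you factor out $a^jb^j$ and compare positive powers $a^{K-j}-b^{K-j}$; these are equivalent rearrangements of the same sum.
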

\begin{proof}
    To see this note for $k\in\zz_{\geq 1}$, while denoting $0< a := 1-\eta H \leq 1-\eta\mu =: b <1$,
\begin{align*}
    g(k) &= \frac{1-a^K}{1-b^K}\enspace,\\
    &= \frac{1-a}{1-b}\cdot \frac{1+a + \dots + a^{k-1}}{1 + b + \dots + b^{k-1}}\enspace,\\
    &=: \frac{1-a}{1-b}\cdot \frac{S_k(a)}{S_k(b)}\enspace,
\end{align*}
where we defined the geometric sum $S_k(\cdot)$ for ease of notation. Using this we get that,
\begin{align*}
    \rb{g(k) - g(k+1)}\frac{1-b}{1-a} &= \frac{S_k(a)}{S_k(b)} - \frac{S_{k+1}(a)}{S_{k+1}(b)}\enspace,\\
    &= \frac{S_k(a)}{S_k(b)} - \frac{S_{k}(a) + a^k}{S_{k}(b) +b^k}\enspace,\\
    &= \frac{S_k(a)(S_{k}(b) +b^k) - (S_{k}(a) + a^k)S_k(b)}{S_k(b)(S_{k}(b) +b^k)}\enspace,\\
    &= \frac{a^kb^k}{S_k(b)(S_{k}(b) +b^k)}\rb{\frac{S_k(a)}{a^k} - \frac{S_k(b)}{b^k}}\enspace,\\
    &= \frac{a^kb^k}{S_k(b)(S_{k}(b) +b^k)}\sum_{i=0}^{k-1}\rb{\frac{a^i}{a^k} - \frac{b^i}{b^k}}\enspace,\\
    &= \frac{a^kb^k}{S_k(b)(S_{k}(b) +b^k)}\sum_{i=0}^{k-1}\rb{a^{i-k} - b^{i-k}}\enspace,\\
    &\geq^{\text{(a<b)}} 0\enspace.
\end{align*}
Thus $g(\cdot)$ is a non-increasing function proving our earlier claim.
\end{proof}

We will need the following lemma about the Lipschitzness of a specific matrix polynomial.

\begin{lemma}\label{lem:R_Lipschitz}
Let $A_m,A_n\in\mathbb{R}^{d\times d}$ be symmetric positive-definite
matrices whose spectra lie inside the interval $[\mu ,H]\subset(0,1/\eta)$,
with $0<\mu\le H$ and $0<\eta<1/H$.
Fix an integer $K\ge 1$ and define the polynomial
$$
   R(\lambda)=1-\bigl(1-\eta\lambda\bigr)^{K}-\eta K\lambda,
   \qquad \lambda\in\mathbb{R}.
$$
Extend $R$ to symmetric matrices by functional calculus,
$R(X)=I-\bigl(I-\eta X\bigr)^{K}-\eta KX$.
Then
$$
   \bigl\|R(A_m)-R(A_n)\bigr\|_2
   \;\le\;
   L\,\bigl\|A_m-A_n\bigr\|_2,
   \qquad
   L=\eta K\!\left[1-(1-\eta H)^{K-1}\right].
$$
\end{lemma}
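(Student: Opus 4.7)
The plan is to reduce the matrix inequality to a scalar derivative estimate by passing through the Fréchet derivative and a diagonalization. I would first write
\[
R(A_m) - R(A_n) \;=\; \int_0^1 DR(X_t)[A_m - A_n]\,dt, \qquad X_t := (1-t)A_n + tA_m,
\]
which is legitimate because the line segment $\{X_t\}_{t\in[0,1]}$ stays in the convex set of symmetric matrices with spectrum in $[\mu,H]$ and $R$ is a matrix polynomial. A direct product-rule computation, or equivalently differentiation of the telescope identity of \Cref{lemma: matrix telescope} applied to $(I-\eta A_n)^K - (I-\eta A_m)^K$, yields
\[
DR(X)[E] \;=\; \eta \sum_{j=0}^{K-1} (I-\eta X)^j E (I-\eta X)^{K-1-j} \;-\; \eta K\,E.
\]
So the theorem reduces to showing $\|DR(X)[E]\|_2 \le L\,\|E\|_2$ for every symmetric $X$ with spectrum in $[\mu,H]$ and every symmetric $E$; integrating and applying the triangle inequality for Bochner integrals then gives the claim.

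To bound the derivative I would diagonalize $X = U\Lambda U^{*}$ with $\Lambda = \operatorname{diag}(\lambda_1,\dots,\lambda_d)$ and $\lambda_i\in[\mu,H]$. Setting $\tilde E := U^{*} E U$ and $y_i := 1-\eta\lambda_i$, the geometric-series identity $\sum_{j=0}^{K-1} y_p^{j}\,y_q^{K-1-j} = (y_p^K - y_q^K)/(y_p - y_q)$ collapses the Fréchet derivative in the eigenbasis to a Schur (Hadamard) multiplication by the divided-difference matrix of $R$,
\[
(DR(X)[E])_{pq} \;=\; \tilde E_{pq}\cdot\frac{R(\lambda_p) - R(\lambda_q)}{\lambda_p - \lambda_q},
\]
with the divided difference interpreted as $R'(\lambda_p)$ when $\lambda_p = \lambda_q$. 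A direct computation gives $R'(\lambda) = -\eta K\bigl[1-(1-\eta\lambda)^{K-1}\bigr]$; since $\lambda\mapsto 1-(1-\eta\lambda)^{K-1}$ is monotone increasing on $[\mu,H]\subset(0,1/\eta)$, one obtains $\sup_{[\mu,H]}|R'| = L$, and the mean value theorem delivers the pointwise entrywise bound $|(R(\lambda_p)-R(\lambda_q))/(\lambda_p-\lambda_q)| \le L$ for every pair $(p,q)$.

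The main obstacle is promoting this entrywise bound into the operator-norm bound $\|DR(X)[E]\|_2 \le L\,\|E\|_2$, since entrywise-bounded Schur multipliers are not in general operator-norm bounded. I would invoke the Daleckii-Krein framework: for divided differences of a $C^1$ function on a compact interval, the Schur-multiplier norm is controlled by $\|f'\|_\infty$, and for polynomials (or more generally for functions analytic on a neighborhood of the interval) this bound is sharp and holds without a logarithmic overhead. Concretely, using the integral representation
\[
\frac{R(\lambda_p)-R(\lambda_q)}{\lambda_p-\lambda_q} \;=\; \int_0^1 R'\!\bigl((1-s)\lambda_q + s\lambda_p\bigr)\,ds,
\]
together with the binomial expansion $((1-s)y_q + sy_p)^{K-1} = \sum_{k=0}^{K-1}\binom{K-1}{k}(1-s)^{K-1-k}s^{k}\,y_q^{K-1-k} y_p^{k}$ and the spectral contractivity $\|(I-\eta X)^k\|_2 \le 1$ for every $k\ge 0$, one writes $DR(X)[E]$ as a Bochner integral whose integrands have uniform operator norm at most $|R'|$ evaluated at an intermediate eigenvalue, hence at most $L$. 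Integrating in $s$ and then in $t$ and applying the Bochner triangle inequality twice yields $\|R(A_m)-R(A_n)\|_2 \le L\,\|A_m - A_n\|_2$. It is worth flagging that a naive alternative---expanding $R(X) = \sum_{k=2}^{K} \binom{K}{k}(-1)^{k+1}\eta^{k} X^{k}$ and bounding each $\|A_m^k - A_n^k\|_2 \le k H^{k-1}\|A_m-A_n\|_2$ via the telescope---produces only the strictly weaker constant $\eta K\bigl[(1+\eta H)^{K-1}-1\bigr]$, so the cancellation among the alternating-sign coefficients, captured exactly by the Daleckii-Krein step, is genuinely required to reach the sharp constant $L = \eta K[1-(1-\eta H)^{K-1}]$ claimed in the lemma.
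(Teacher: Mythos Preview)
Your overall architecture---integrate $DR(X_t)[A_m-A_n]$ along the segment and write the Fr\'echet derivative via Daleckii--Krein as a Schur product with the L\"owner matrix $M_{pq}=\tfrac{R(\lambda_p)-R(\lambda_q)}{\lambda_p-\lambda_q}$---is exactly the paper's. The gap is at the Schur-multiplier step. The general principle you invoke (``for divided differences of a $C^1$ function on a compact interval the Schur-multiplier norm is controlled by $\|f'\|_\infty$, and for polynomials this bound is sharp and holds without a logarithmic overhead'') is not a theorem: this is precisely the operator-Lipschitz problem, and an entrywise bound on $M$ does not by itself control its Schur-multiplier norm. Your concrete workaround via the integral-in-$s$ representation and the binomial expansion does not deliver the sharp constant either. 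Carried out honestly, the integrand at fixed $s$ is
\[
\eta K\sum_{k=0}^{K-1}\binom{K-1}{k}s^{k}(1-s)^{K-1-k}\bigl[Y^{k}\tilde E\,Y^{K-1-k}-\tilde E\bigr],\qquad Y=I-\eta X,
\]
and the natural operator-norm bound on each bracket is $\bigl\|Y^{k}\tilde E\,Y^{K-1-k}-\tilde E\bigr\|\le\bigl(2-(1-\eta H)^{k}-(1-\eta H)^{K-1-k}\bigr)\|\tilde E\|$; summing with the binomial weights and integrating over $s\in[0,1]$ yields the constant $2\eta K-2H^{-1}\bigl(1-(1-\eta H)^{K}\bigr)$. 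This coincides with $L$ at $K=2$ but strictly exceeds $L=\eta K\bigl[1-(1-\eta H)^{K-1}\bigr]$ for every $K\ge 3$ (already by $\eta(\eta H)^{2}$ at $K=3$). The cancellation that produces the sharp $L$ at the scalar level does not survive the triangle inequality at the operator level, and averaging in $s$ does not restore it; so your route does not establish the claimed constant.

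The paper closes this step by a structural observation you do not make: it asserts that $-R$ is operator monotone on $[\mu,H]$, which forces the L\"owner matrix to be semidefinite, and then invokes the standard fact that a semidefinite Schur multiplier has norm equal to its maximal-magnitude diagonal entry, namely $\max_i|R'(\lambda_i)|\le L$. That definiteness---not merely the entrywise bound $|M_{pq}|\le L$ you correctly established---is the ingredient the paper uses to reach the sharp constant.
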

\begin{proof}

\noindent\emph{Step 1: A scalar Lipschitz constant.}
Direct differentiation gives
$$
   R'(\lambda)=\eta K\bigl[(1-\eta\lambda)^{K-1}-1\bigr],
$$
which is non-positive and increasing on $[\mu ,H]$.
Hence
$$
   L=\sup_{\lambda\in[\mu ,H]}|R'(\lambda)|
     =\eta K\!\left[1-(1-\eta H)^{K-1}\right].
$$

\medskip
\noindent\emph{Step 2: Fréchet derivative.}
Write $X=U\operatorname{diag}(\lambda_1,\dots,\lambda_d)U^{\top}$ and set
$F=U^{\top}EU$ for any symmetric perturbation $E$.
The Daleckii–Krein formula yields
$$
   D R[X](E)\;=\;U\!\left(M\odot F\right)U^{\top},
   \qquad
   M_{ij}=\frac{R(\lambda_i)-R(\lambda_j)}{\lambda_i-\lambda_j}.
$$
Because $-R$ is operator-monotone on $[\mu ,H]$, the matrix $M$ is
positive-semidefinite and its entries satisfy $|M_{ij}|\le L$.

\medskip
\noindent\emph{Step 3: Schur-multiplier estimate.}
If a PSD matrix $M$ has entries bounded by $L$, then for every
$G\in\mathbb{R}^{d\times d}$
$$
   \|M\odot G\|_2\;\le\;(\max_i M_{ii})\,\|G\|_2
   \;\le\;L\,\|G\|_2.
$$
Applying this with $G=F$ gives
$$
   \|D R[X](E)\|_2\;\le\;L\,\|E\|_2.
$$

\medskip
\noindent\emph{Step 4: Integration along a line segment.}
Set $\Delta:=A_m-A_n$ and $A(t):=A_n+t\Delta$ for $t\in[0,1]$.
Define $\Phi(t):=R\!\bigl(A(t)\bigr)$.
Step 3 implies $\|\Phi'(t)\|_2\le L\|\Delta\|_2$ for all $t$, so
$$
   \bigl\|R(A_m)-R(A_n)\bigr\|_2
   =\bigl\|\Phi(1)-\Phi(0)\bigr\|_2
   \le\int_0^1\|\Phi'(t)\|_2\,dt
   \le L\,\|\Delta\|_2.
$$
This is precisely the claimed bound.
\end{proof}


\begin{lemma}\label{lem:fact_C_m_A_m}
Let \( A_1, \dots, A_M \in \mathbb{R}^{d \times d} \) be symmetric positive semidefinite matrices, and let \( 1/H>\eta > 0 \) and \( K \in \mathbb{N} \). Define
\[
C_m := I - (I - \eta A_m)^K, \quad C := \frac{1}{M} \sum_{m=1}^M C_m \enspace.
\]
Suppose the kernel intersection is trivial:
\[
\bigcap_{m=1}^M \ker(A_m) = \{0\} \enspace.
\]
Assume further that \( \eta < 1/\lambda_{\max}(A_m) \) for all \( m \) (where \( \lambda_{\max}(A_m)\leq H \) denotes the largest eigenvalue of \( A_m \).

Then:
\begin{enumerate}
    \item For each \( m \), \( \ker(C_m) = \ker(A_m) \).
    \item The matrix \( C \) is full rank: \( \operatorname{im}(C) = \mathbb{R}^d \), i.e., \( \ker(C) = \{0\} \).
\end{enumerate}
\end{lemma}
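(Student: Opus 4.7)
The plan is to reduce everything to a spectral statement via simultaneous diagonalization of each $A_m$, and then use a standard argument that the kernel of a sum of positive semidefinite matrices equals the intersection of their kernels.

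First I would establish (1). Since $A_m$ is symmetric and PSD, write $A_m = U_m \Lambda_m U_m^\top$ with $U_m$ orthogonal and $\Lambda_m = \operatorname{diag}(\lambda_{m,1}, \dots, \lambda_{m,d})$ where $\lambda_{m,i} \in [0, H]$. By functional calculus, $C_m = U_m\bigl(I - (I - \eta \Lambda_m)^K\bigr)U_m^\top$, so the $i$-th eigenvalue of $C_m$ equals $1 - (1 - \eta \lambda_{m,i})^K$, with the same eigenvector as $A_m$. Because $\eta < 1/H$, we have $1 - \eta\lambda_{m,i} \in (0, 1]$, hence $(1 - \eta\lambda_{m,i})^K = 1$ if and only if $\lambda_{m,i} = 0$, and otherwise $1 - (1 - \eta\lambda_{m,i})^K \in (0,1)$. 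Thus the zero-eigenspace of $C_m$ coincides with the zero-eigenspace of $A_m$, i.e.\ $\ker(C_m) = \ker(A_m)$. In particular each $C_m$ is PSD.

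For (2), the key observation is that a sum of PSD matrices has kernel equal to the intersection of the individual kernels. Concretely, suppose $v \in \ker(C)$. Then $v^\top C v = 0$, which rewrites as
\begin{align*}
    0 \;=\; v^\top C v \;=\; \frac{1}{M}\sum_{m\in[M]} v^\top C_m v \enspace.
\end{align*}
Since each $C_m \succeq 0$, every summand $v^\top C_m v$ is nonnegative, so all of them vanish. As $C_m$ is PSD, $v^\top C_m v = 0$ forces $C_m v = 0$, i.e.\ $v \in \ker(C_m) = \ker(A_m)$ by part (1). Applying this for every $m$ gives $v \in \bigcap_{m \in [M]} \ker(A_m) = \{0\}$ by the hypothesis. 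Hence $\ker(C) = \{0\}$, which is equivalent to $\operatorname{im}(C) = \mathbb{R}^d$ since $C$ is symmetric.

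There is no real technical obstacle here; the only subtlety is the step-size condition $\eta < 1/H$, which guarantees that the polynomial $\lambda \mapsto 1 - (1-\eta\lambda)^K$ vanishes on $[0, H]$ only at $\lambda = 0$. Without this condition, $(1-\eta\lambda)^K$ could equal $1$ at nonzero eigenvalues (e.g.\ $\eta\lambda = 2$ with $K$ even), breaking the identification $\ker(C_m) = \ker(A_m)$ and allowing directions in $\ker(C)$ that are not in $\bigcap_m \ker(A_m)$.
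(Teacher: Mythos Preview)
Your proof is correct and follows essentially the same approach as the paper. The only cosmetic difference is in Part (1): you argue via explicit diagonalization of $A_m$ to read off the eigenvalues of $C_m$, whereas the paper shows $\ker(A_m)\subseteq\ker(C_m)$ from the polynomial expansion of $C_m$ and the reverse inclusion via the strict contraction $\|(I-\eta A_m)^K v\|<\|v\|$ for $v\notin\ker(A_m)$; Part (2) is identical in both.
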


\begin{proof}
\textbf{Part (1):} Since \( A_m \succeq 0 \), its eigenvalues lie in \( [0, \lambda_{\max}(A_m)] \). Then \( I - \eta A_m \) has eigenvalues in \( [1 - \eta \lambda_{\max}(A_m), 1] \subset (0, 1] \), so:
\[
C_m = I - (I - \eta A_m)^K = \sum_{j=1}^K \binom{K}{j} (-\eta A_m)^j \enspace,
\]
a matrix polynomial in \( A_m \). Due to the polynomial structure it is easy to see that,
\[
\ker(C_m) \supseteq \ker(A_m) \enspace.
\]
To see the other side note, we will prove the contrapositive. Suppose that $v\not\in \ker(A_m)$, but $v\in \ker(C_m)$, then
\begin{align*}
    C_mv = v - (I-\eta A_m)^Kv = 0\quad \Rightarrow \quad \norm{v} = \norm{(I-\eta A_m)^Kv} < \norm{v}\enspace,
\end{align*}
which is a contradiction. Thus $v\not\in \ker(A_m)$ implies that, $v\not\in \ker(C_m)$, or in other words,
\[
\ker(C_m) \subseteq \ker(A_m) \enspace.
\]
This proves the first part of the statement that $\ker(A_m) = \ker(C_m)$.

\textbf{Part (2):} Now suppose for contradiction that \( C v = 0 \) for some \( v \ne 0 \). Then:
\[
\sum_{m=1}^M C_m v = 0 \quad \Rightarrow \quad \langle C v, v \rangle = \frac{1}{M} \sum_{m=1}^M \langle C_m v, v \rangle = 0 \enspace.
\]
Since each \( C_m \succeq 0 \), it must be that \( \langle C_m v, v \rangle = 0 \Rightarrow C_m v = 0 \Rightarrow v \in \ker(C_m) = \ker(A_m) \) for all \( m \). So:
\[
v \in \bigcap_{m=1}^M \ker(A_m) = \{0\} \enspace,
\]
contradicting \( v \ne 0 \). Hence \( \ker(C) = \{0\} \), and since \( C \) is symmetric, \( \operatorname{im}(C) = \mathbb{R}^d \).
\end{proof}
\chapter{Additional Details for Chapter 5}\label{app:chap5}
This appendix is a self-contained guide on analyzing Local SGD using a bound on its consensus error. We will first begin by establishing the notation we will use throughout the appendix. 

\section{Notation and Outline of the Upper Bounds' Proofs}\label{app:outline}
Recall that the algorithm we would like to analyze is local SGD in the intermittent communication setting. In particular, we assume the algorithm runs over $R\in\nn$ communication rounds, with $K\in\nn$ local update steps between each communication round and total $T=KR$ time steps. We also assume we have $M\in\nn$ machines/clients/agents with each agent $m\in[M]$ sampling from their data distribution $\ddd_m \in \Delta(\zzz)$. These samples from the data distribution are used to calculate the stochastic gradients for each machine for each time step. In particular, at time $t\in[0,T]$ agent $m$ calculated $g_t^m := \nabla f(x_t^m; z_t^m)$ where $z_t^m\sim \ddd_m$. We recall the local SGD updates that use these stochastic gradients for all $t\in[0,T-1]$ and $m\in[M]$,
\begin{align*}
    x_{t+1}^m &:= x_t^m - \eta g_t^m && \text{if}\qquad t+1\mod K \neq 0\enspace,\\
    x_{t+1}^m &:= \frac{1}{M}\sum_{n\in[M]}\rb{x_t^n - \eta g_t^n } && \text{if}\qquad t+1\mod K = 0\enspace.
\end{align*}
We will often denote the stochastic noise by $\xi_{t}^m := \nabla F_m(x_t^m) - g_t^m$. We will also define the ``ghost iterate'' for all times $t\in[0,T]$ which may or may not be physically computed depending on the time $t$,
\begin{align*}
    x_t &:= \frac{1}{M}\sum_{m\in[M]}x_t^m\enspace.
\end{align*}
Considering these iterations, we will define several quantities in the analyses throughout the appendix. We include this notation in \Cref{tab:notation} for ease of reference. 
\begin{table}[H]
\centering
\renewcommand{\arraystretch}{1.5}
\setlength{\arrayrulewidth}{0.5mm}
\begin{tabular}{c l}
\toprule
\textbf{Symbol} & \textbf{Definition} \\ \midrule
$A(t)$ & $\ee\sb{\norm{x_t - x^\star}^2},\ \forall\ t\in[0,T]$\\ \midrule
$B(t)$ & $\ee\sb{\norm{x_t - x^\star}^4},\ \forall\ t\in[0,T]$\\ \midrule
$C(t)$ & $\frac{1}{M^2}\sum_{m,n\in[M]}\ee\sb{\norm{x_t^m - x_t^n}^2},\ \forall\ t\in[0,T]$\\ \midrule
$D(t)$ & $\frac{1}{M^2}\sum_{m,n\in[M]}\ee\sb{\norm{x_t^m - x_t^n}^4},\ \forall\ t\in[0,T]$\\ \midrule
$E(t)$ & $\ee\sb{F(x_t)} - \min_{x^\star\in \rr^d}F(x^\star),\ \forall\ t\in[0,T]$\\ \midrule
$\delta(t)$ & $t - t\mod{(K)},\ \forall\ t\in[0,T]$\\ \midrule
$g_t^m$ & $\nabla f(x_t^m; z_t^m),\ z_t^m\sim \ddd_m,\ \forall\ t\in[0,T],\ m\in[M]$\\ \midrule
$\xi_t^m$ & $\nabla F_m(x_t^m) - \nabla f(x_t^m; z_t^m),\ z_t^m\sim \ddd_m,\ \forall\ t\in[0,T],\ m\in[M]$\\ \midrule
$g_t$ & $g_t := \frac{1}{M}\sum_{m\in[M]}g_t^m,\ \forall\ t\in[0,T]$\\\midrule
$\xi_t$ & $\xi_t := \frac{1}{M}\sum_{m\in[M]} \xi_t^m,\ \forall\ t\in[0,T]$\\ \midrule
$\hhh_t$ & $\sigma\rb{\cb{z_0^m}_{m=1}^{M}, \dots, \cb{z_{t-1}^m}_{m=1}^{M}},\ \forall\ t\in[1,T]$\\
\bottomrule
\end{tabular}
\vspace{2mm}
\caption{Summary of the notation used in the appendix.}
\label{tab:notation}
\end{table}
With the above notation in mind, our analysis aims to provide upper bounds for $A(KR)$ and $E(KR)$ as a function of problem-dependent parameters that appear in all our assumptions. To do this: 
\begin{itemize}
    \item We will first state some technical lemmas in \Cref{app:technical}. 
    \item Then in \Cref{app:recursions} we state recursions across communication rounds for the sequences $A(\cdot)$, $B(\cdot)$, and $E(\cdot)$ in terms of the consensus error sequences $C(\cdot)$ and $D(\cdot)$. These recursions\footnote{We note that we are less explicit about randomness in the proof of these recursions and the following results. In particular, we often omit repetitive steps using the tower rule and conditional expectations to shorten the already complex proofs. We urge the reader to familiarize themselves with applying these techniques by first reading the proof of \Cref{lem:iterate_error_second_recursion}.} highlight the need to control the consensus error sequences $C(\cdot)$ and $D(\cdot)$. 
    \item In \Cref{app:zeta_results} we first control the consensus error by relying on the strongest \Cref{ass:zeta}. In the following sections, we relax this need for the $\zeta$ assumption and do a more fine-grained analysis of the consensus error.
    \item In \Cref{app:double_recursions} we provide more fine-grained recursions for $C(\cdot)$ and $D(\cdot)$, which depend on $A(\cdot)$ and $B(\cdot)$. These recursions are coupled and our main technical contribution is unrolling them carefully and simplying to provide new upper bounds. 
    \item \Cref{app:together} then brings together the results from \Cref{app:recursions} and \Cref{app:double_recursions} and provides convergence guarantees in terms of the step size $\eta$. Then we tune the step-size and obtain all the upper bounds from the main body of the thesis.
\end{itemize}

\section{Useful Technical Lemmas}\label{app:technical}
We will first prove some standard technical lemmas that are useful in the analysis of first-order algorithms.
\subsection{Simple Analytical Lemmas}

We will also use the following inequality several times, essentially a variant of the A.M.-G.M. inequality.
\begin{lemma}\label{lem:mod_am_gm}
    For any $a,b\in\rr$ and $\gamma >0$ we have,
    \begin{align*}
        (a+b)^2 &\leq \rb{1 + \frac{1}{\gamma}}a^2 + \rb{1+\gamma}b^2\enspace,\\
        (a+b)^4 &\leq \rb{1 + \frac{1}{\gamma}}^3a^4 + \rb{1+\gamma}^3b^4\enspace.
    \end{align*}
    \begin{proof}
        Note the following,
        \begin{align*}
            (a+b)^2 &= a^2 + b^2 + 2ab\enspace,\\
            &= a^2 + b^2 + 2\rb{\frac{a}{\sqrt{\gamma}}}\rb{\sqrt{\gamma} b}\enspace,\\
            &\leq^{\text{(A.M.-G.M. Inequality)}} a^2 + b^2 + \frac{a^2}{\gamma} + \gamma b^2\enspace,\\
            &\leq \rb{1 + \frac{1}{\gamma}}a^2 + \rb{1 + \gamma}b^2\enspace,
        \end{align*}
        which proves the first statement of the lemma. To get the second statement we will just apply the first statement twice as follows,
        \begin{align*}
            (a+b)^4 &\leq \rb{\rb{1 + \frac{1}{\gamma}}a^2 + \rb{1+\gamma}b^2}^2\enspace,\\
            &\leq \rb{1 + \frac{1}{\gamma}}\rb{\rb{1 + \frac{1}{\gamma}}a^2}^2 + \rb{1+\gamma}\rb{\rb{1+\gamma}b^2}^2\enspace,\\
            &=  \rb{1 + \frac{1}{\gamma}}^3a^4 + \rb{1+\gamma}^3b^4\enspace,
        \end{align*}
        which proves the second statement of the lemma.
    \end{proof}
\end{lemma}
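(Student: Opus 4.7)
The plan is to reduce the second inequality to the first, so the main work is proving the scalar Young-type bound $(a+b)^2 \le (1+1/\gamma)a^2 + (1+\gamma)b^2$.

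First I would expand $(a+b)^2 = a^2 + 2ab + b^2$ and handle the cross term $2ab$ via the A.M.-G.M. inequality applied after rescaling: write
\[
2ab = 2\cdot\frac{a}{\sqrt{\gamma}}\cdot\sqrt{\gamma}\,b \le \frac{a^2}{\gamma} + \gamma b^2,
\]
which is valid for any $\gamma>0$ (note we only need an upper bound, so the sign of $a,b$ does not matter because A.M.-G.M. gives $2xy\le x^2+y^2$ for real $x,y$). Substituting back yields the first claim directly.

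Next I would bootstrap the first inequality to obtain the fourth-power version. Since $(a+b)^4 = \bigl((a+b)^2\bigr)^2$, I apply the first bound once to the inside to get
\[
(a+b)^4 \le \Bigl(\bigl(1+\tfrac{1}{\gamma}\bigr)a^2 + \bigl(1+\gamma\bigr)b^2\Bigr)^2,
\]
and then apply the first bound again, with the \emph{same} $\gamma$, to the pair $\bigl(A,B\bigr) := \bigl((1+1/\gamma)a^2,\,(1+\gamma)b^2\bigr)$. This yields
\[
(A+B)^2 \le \bigl(1+\tfrac{1}{\gamma}\bigr)A^2 + \bigl(1+\gamma\bigr)B^2 = \bigl(1+\tfrac{1}{\gamma}\bigr)^3 a^4 + \bigl(1+\gamma\bigr)^3 b^4,
\]
which is exactly the second claim.

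There is no real obstacle here; the only thing to be careful about is choosing the \emph{same} $\gamma$ in both applications so that the exponents line up to $3$ rather than producing a bound with two different free parameters. Everything else is algebra.
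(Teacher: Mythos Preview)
Your proposal is correct and follows essentially the same approach as the paper: expand the square, bound the cross term via the rescaled A.M.--G.M. inequality $2(a/\sqrt{\gamma})(\sqrt{\gamma}b)\le a^2/\gamma+\gamma b^2$, and then bootstrap the first inequality twice with the same $\gamma$ to obtain the fourth-power bound.
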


\begin{lemma}\label{lem:mod_am_gm_three_terms}
    For any $a,b,c\in\rr$ we have,
    \begin{align*}
        (a+b+c)^2 &\leq 3a^2 + 3b^2 +3c^2\enspace,\\
        (a+b+c)^4 &\leq 27 a^4 + 27 b^4 + 27c^4\enspace.
    \end{align*}
\end{lemma}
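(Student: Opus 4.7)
The plan is to prove both inequalities by reducing to Jensen's inequality (equivalently, Cauchy--Schwarz) applied to the convex function $x \mapsto x^2$. First I would establish the quadratic bound, then bootstrap it to obtain the quartic bound by squaring twice.

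For the first inequality, I would write
\[
(a+b+c)^2 = 9 \left(\frac{a+b+c}{3}\right)^2 \leq 9 \cdot \frac{a^2 + b^2 + c^2}{3} = 3(a^2 + b^2 + c^2)\enspace,
\]
where the inequality uses convexity of $x \mapsto x^2$ (equivalently, Cauchy--Schwarz applied to the vectors $(1,1,1)$ and $(a,b,c)$). This gives the first claim directly.

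For the fourth-power inequality, I would apply the first inequality twice. First, bound $(a+b+c)^4 = ((a+b+c)^2)^2 \leq (3(a^2+b^2+c^2))^2 = 9(a^2+b^2+c^2)^2$. Then apply the first inequality again with $(a^2,b^2,c^2)$ in place of $(a,b,c)$ to get $(a^2+b^2+c^2)^2 \leq 3(a^4+b^4+c^4)$. Combining, $(a+b+c)^4 \leq 9 \cdot 3(a^4+b^4+c^4) = 27(a^4+b^4+c^4)$, as desired.

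There is no real obstacle here; the only thing to be careful about is that the constants $3$ and $27$ are tight for this style of argument, corresponding to $3^1$ and $3^3$ respectively, matching the pattern seen in \Cref{lem:mod_am_gm} (with $\gamma$-dependent constants for the two-term case, replaced by the equal-weights choice for the three-term case). The argument generalizes verbatim to any number $n$ of terms with constants $n$ and $n^3$, though we only need $n=3$ here.
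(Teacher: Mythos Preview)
Your proof is correct and essentially the same as the paper's: both prove the quadratic bound by an elementary inequality (you use Jensen/Cauchy--Schwarz, the paper expands and applies $2xy\le x^2+y^2$ to each cross term) and then square and re-apply the quadratic bound to obtain the quartic one. The two justifications for the first inequality are equivalent in content and effort.
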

\begin{proof}
    We note the following,
    \begin{align*}
        (a+b+c)^2 &= a^2 + b^2 + c^2 + 2ab + 2bc +2ca\enspace,\\
        &\leq^{\text{(A.M.-G.M. inequality)}} a^2 + b^2 + c^2 + (a^2 + b^2) + (b^2 + c^2) + (c^2 + a^2)\enspace,\\
        &= 3(a^2 + b^2 + c^2)\enspace,
    \end{align*}
    which proves the first statement. For the second statement using the first statement note the following,
    \begin{align*}
        (a+b+c)^4 &\leq \rb{3a^2 + 3b^2 +3c^2}^2\enspace,\\
        &\leq 3\rb{3a^2}^2 + 3\rb{3b^2}^2 + 3\rb{3c^2}^2\enspace,\\
        &= 27 a^4 + 27 b^4 + 27c^4\enspace,
    \end{align*}
    which proves the lemma.
\end{proof}

\begin{lemma}\label{lem:series_bound_one}
    Let $x\in(0,1)$ and $K>1$ then we have
    \begin{align*}
        \sum_{i=1}^{K-1}x^{i-1}i^2&\leq \frac{K}{(1-x)^2}\enspace.
    \end{align*}
\end{lemma}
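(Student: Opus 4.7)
The plan is to use a simple two-step reduction: first bound one factor of $i$ by $K$ to linearize the sum, then extend the finite sum to an infinite geometric-type series with a known closed form.

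More concretely, the first step observes that for every index $i$ in the range $1 \leq i \leq K-1$, we have $i \leq K-1 < K$, so $i^2 \leq K \cdot i$. This gives
\begin{align*}
\sum_{i=1}^{K-1} x^{i-1} i^2 \;\leq\; K \sum_{i=1}^{K-1} x^{i-1} i \;\leq\; K \sum_{i=1}^{\infty} i\, x^{i-1},
\end{align*}
where the last inequality uses that all summands are nonnegative since $x \in (0,1)$.

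The second step is to evaluate the infinite series in closed form. Since $x \in (0,1)$ the geometric series $\sum_{i=0}^{\infty} x^i = \tfrac{1}{1-x}$ converges absolutely, so term-by-term differentiation is justified and yields
\begin{align*}
\sum_{i=1}^{\infty} i\, x^{i-1} \;=\; \frac{d}{dx}\!\left(\sum_{i=0}^{\infty} x^i\right) \;=\; \frac{1}{(1-x)^2}.
\end{align*}
Substituting this into the previous display gives the claimed bound $\sum_{i=1}^{K-1} x^{i-1} i^2 \leq \frac{K}{(1-x)^2}$.

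There is no real obstacle here; the argument is essentially a one-line calculation. The only care needed is to note that the bound $i^2 \leq K i$ is loose (one could sharpen it using the exact identity $\sum_{i=1}^\infty i^2 x^{i-1} = \tfrac{1+x}{(1-x)^3}$), but the weaker $K/(1-x)^2$ bound is what matches the form used later in bounding consensus-error recursions, so the proposed reduction is both the simplest and the most directly useful.
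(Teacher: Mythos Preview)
Your proof is correct and follows essentially the same approach as the paper: both bound $i^2 \leq K i$ and then control $\sum_i i\,x^{i-1}$ via differentiation of a geometric series. The only cosmetic difference is that the paper differentiates the \emph{finite} sum $\sum_{i=1}^{K-1} x^i$, obtains the exact closed form, and then bounds the numerator by $1$, whereas you pass directly to the infinite series; your route is slightly cleaner but the idea is identical.
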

\begin{proof}
    Note the following,
    \begin{align*}
        \sum_{i=1}^{K-1}x^{i-1}i^2 &\leq K\sum_{i=1}^{K-1}ix^{i-1}\enspace,\\
        &= K\nabla_x\rb{\sum_{i=1}^{K-1}x^{i}}\enspace,\\
        &=K\nabla_x\rb{x\frac{1-x^K}{1-x}}\enspace,\\
        &= K\frac{1-x^K}{1-x} + Kx\frac{1-Kx^{K-1} + (K-1)x^K}{(1-x)^2}\enspace,\\
        &= K\frac{1-x^K-x+x^{K+1}}{(1-x)^2} + K\frac{x-Kx^{K} + (K-1)x^{K+1}}{(1-x)^2}\enspace,\\
        &=K\frac{1-(K+1)x^K + Kx^{K+1}}{(1-x)^2}\enspace,\\
        &\leq \frac{K}{(1-x)^2}\enspace, 
    \end{align*}
    where in the last inequality we just note that $1-(K+1)x^K + Kx^{K+1}\leq 1$.
    This proves the lemma.
\end{proof}

\subsection{Useful Facts about Stochastic Noise}
Throughout this sub-section, we will assume \Cref{ass:stoch_bounded_second_moment}. Recall the following standard lemmas about the stochastic gradient noise,
\begin{lemma}[Averaged Stochastic Noise Second Moment]\label{lem:stoch_noise_second}
    For $t\in[0, T-1]$ we have,
    \begin{align*}
        \ee\sb{\norm{\xi_t}^2} &\leq \frac{\sigma_2^2}{M}\enspace.
    \end{align*}
\end{lemma}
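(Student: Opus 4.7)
The plan is to expand the squared norm of the average noise into diagonal and cross terms, and then use the independence of the stochastic samples across machines together with the unbiasedness and variance bound of each local oracle.

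First I would write
\[
\mathbb{E}\!\left[\|\xi_t\|^2\right] \;=\; \frac{1}{M^2}\,\mathbb{E}\!\left[\Big\|\sum_{m\in[M]}\xi_t^m\Big\|^2\right] \;=\; \frac{1}{M^2}\sum_{m\in[M]}\mathbb{E}\!\left[\|\xi_t^m\|^2\right] \;+\; \frac{1}{M^2}\sum_{\substack{m,n\in[M]\\m\neq n}}\mathbb{E}\!\left[\langle \xi_t^m,\xi_t^n\rangle\right].
\]
Next I would condition on the filtration $\mathcal{H}_t$ defined in Table~\ref{tab:notation}. Under this conditioning, the iterates $x_t^m$ are deterministic, and the fresh samples $\{z_t^m\}_{m\in[M]}$ are drawn independently across machines from their respective distributions $\mathcal{D}_m$. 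By the definition of the stochastic first-order oracle (\Cref{def:oracle_first}), $\mathbb{E}[\xi_t^m \mid \mathcal{H}_t] = \mathbb{E}[\nabla F_m(x_t^m) - \nabla f(x_t^m; z_t^m) \mid \mathcal{H}_t] = 0$ for each $m$. Hence for $m\neq n$, using independence conditional on $\mathcal{H}_t$,
\[
\mathbb{E}\!\left[\langle \xi_t^m,\xi_t^n\rangle \mid \mathcal{H}_t\right] \;=\; \big\langle \mathbb{E}[\xi_t^m \mid \mathcal{H}_t],\,\mathbb{E}[\xi_t^n \mid \mathcal{H}_t]\big\rangle \;=\; 0,
\]
so the cross terms vanish after taking outer expectations via the tower rule.

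For the diagonal terms, \Cref{ass:stoch_bounded_second_moment} gives $\mathbb{E}[\|\xi_t^m\|^2 \mid x_t^m] \leq \sigma_{2,m}^2$, which after another application of the tower rule yields $\mathbb{E}[\|\xi_t^m\|^2] \leq \sigma_{2,m}^2$. Summing,
\[
\mathbb{E}\!\left[\|\xi_t\|^2\right] \;\leq\; \frac{1}{M^2}\sum_{m\in[M]}\sigma_{2,m}^2 \;=\; \frac{\bar\sigma_2^2}{M} \;\leq\; \frac{\sigma_2^2}{M},
\]
where the last step uses the notational convention that $\sigma_2^2$ denotes a uniform upper bound on the per-machine variances (so $\bar\sigma_2^2 \leq \sigma_2^2$). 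No step here is a real obstacle; the only care required is being explicit about conditioning on $\mathcal{H}_t$ to justify both the zero-mean property and the independence across machines in the cross-term cancellation, which the paper's outline in \Cref{app:outline} explicitly flagged as a subtlety that is often compressed in such proofs.
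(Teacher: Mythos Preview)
Your proof is correct and follows essentially the same approach as the paper: expand the squared norm, condition on $\mathcal{H}_t$ to exploit conditional independence and zero mean of the $\xi_t^m$ across machines (killing the cross terms), then apply the per-machine variance bound to the diagonal terms. The only cosmetic difference is that you write out the diagonal and cross terms separately and keep track of the machine-specific constants $\sigma_{2,m}^2$ before passing to the uniform $\sigma_2^2$, whereas the paper collapses directly to the sum of squared norms and uses $\sigma_2^2$ uniformly.
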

\begin{remark}
    As the following proof will highlight we can also prove a higher upper bound of $\frac{\sigma_4^4}{M^3} + \frac{3(M-1)\sigma_2^4}{M^3}$ which can be much tighter when $\sigma_4 >> \sigma_2$. However, in this thesis we do not consider those regimes, and hence choose to upper bound $\sigma_2$ by $\sigma_4$. 
\end{remark}
\begin{proof}
    Recall that at any time step $t\in[0, T-1]$,
    \begin{align*}
        \ee\sb{\norm{\xi_t}^2} &= \ee\sb{\norm{\frac{1}{M}\sum_{m\in[M]}\xi_t^m}^2}\enspace,\\
        &=^{\text{(Tower rule)}} \ee\sb{\ee\sb{\norm{\frac{1}{M}\sum_{m\in[M]}\rb{g_t^m - \nabla f(x_t^m; z_t^m)}}^2|\hhh_t}}\enspace,\\
        &=^{\text{(a)}} \ee\sb{\frac{1}{M^2}\sum_{m\in[M]}\ee\sb{\norm{\rb{g_t^m - \nabla f(x_t^m; z_t^m)}}^2|\hhh_t}}\enspace,\\
        &\leq^{\text{(\Cref{ass:stoch_bounded_second_moment})}} \frac{1}{M^2}\sum_{m\in[M]}\sigma_2^2 = \frac{\sigma_2^2}{M}\enspace,
    \end{align*}
    where (a) uses the fact that for all $m\neq n$, $z_t^m \perp z_t^n|\hhh_t$, i.e., $\xi_t^1, \dots, \xi_t^M$ are independent conditioned on the history $\hhh_t$.
\end{proof}
We can also give the following stronger bound on the fourth moment of the stochastic noise.
\begin{lemma}[Averaged Stochastic Noise Fourth Moment]\label{lem:stoch_noise_fourth}
    For $t\in[0, T-1]$ we have,
    \begin{align*}
        \ee\sb{\norm{\xi_t}^4} &\leq \frac{3\sigma_4^4}{M^2}\enspace.
    \end{align*}
\end{lemma}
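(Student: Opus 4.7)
The plan is to expand the fourth moment directly as a quadruple sum and prune it using conditional independence. Writing
\[
\|\xi_t\|^4 = \frac{1}{M^4}\sum_{m,n,p,q\in[M]} \langle \xi_t^m,\xi_t^n\rangle\,\langle \xi_t^p,\xi_t^q\rangle,
\]
I would condition on $\hhh_t$ and use two properties of $\{\xi_t^m\}_{m\in[M]}$: each has conditional mean zero, and they are mutually independent given $\hhh_t$ (the same $z_t^m \perp z_t^n \mid \hhh_t$ independence used in the proof of Lemma \ref{lem:stoch_noise_second}). Any term in the sum where some index appears exactly once must vanish, since that factor can be pulled out of the conditional expectation as $\ee[\xi_t^\cdot \mid \hhh_t]=0$. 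Consequently, only those tuples $(m,n,p,q)$ in which every distinct index value appears at least twice contribute.

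Enumerating the survivors, there are exactly two regimes. First, all four indices coincide ($m=n=p=q$), producing $M$ diagonal terms of the form $\ee[\|\xi_t^m\|^4 \mid \hhh_t]$. Second, the multiset $\{m,n,p,q\}$ consists of two distinct indices each repeated twice; this splits into the three pairings of four slots into two unordered pairs: $(m=n)\wedge(p=q)$ with $m\neq p$ contributes $\ee[\|\xi_t^m\|^2\|\xi_t^p\|^2\mid\hhh_t]$, while $(m=p)\wedge(n=q)$ and $(m=q)\wedge(n=p)$, both with $m\neq n$, each contribute $\ee[\langle\xi_t^m,\xi_t^n\rangle^2\mid\hhh_t]$. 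On the inner-product terms I would apply Cauchy--Schwarz, $\langle\xi_t^m,\xi_t^n\rangle^2 \leq \|\xi_t^m\|^2\|\xi_t^n\|^2$, factor expectations via conditional independence, and control the second moment by the fourth through Jensen: $\ee[\|\xi_t^m\|^2\mid\hhh_t] \leq \sqrt{\ee[\|\xi_t^m\|^4\mid\hhh_t]}\leq \sigma_{4,m}^2 \leq \sigma_4^2$, invoking Assumption \ref{ass:stoch_bounded_fourth_moment}.

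Assembling the three types of surviving contributions would yield the conditional bound
\[
\ee[\|\xi_t\|^4\mid\hhh_t] \leq \frac{M\sigma_4^4 + M(M-1)\sigma_4^4 + 2M(M-1)\sigma_4^4}{M^4} = \frac{(3M-2)\sigma_4^4}{M^3} \leq \frac{3\sigma_4^4}{M^2},
\]
after which the tower rule delivers the unconditional statement. The main obstacle is essentially combinatorial bookkeeping: cleanly enumerating which pairings of the quadruple sum survive, and making sure the vanishing argument also covers asymmetric cases like $m=n=p\neq q$, which die because $q$ appears exactly once and thus factors out a zero conditional mean. A minor secondary issue is reconciling the symbol $\sigma_4$ in the lemma with the per-machine $\sigma_{4,m}$ and the average $\bar\sigma_4$ of Assumption \ref{ass:stoch_bounded_fourth_moment}; since the cross-term sum is symmetric in the machines, Cauchy--Schwarz converts $\sum_m \sigma_{4,m}^2$ into a $\bar\sigma_4^4$ bound at the cost of at most a constant absorbed in the $3$.
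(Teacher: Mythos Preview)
Your proposal is correct and follows essentially the same approach as the paper: expand $\|\xi_t\|^4$ as a quadruple sum, use conditional independence and the zero-mean property to eliminate all terms in which some index appears exactly once, then bound the $M$ diagonal terms and the $3M(M-1)$ two-value pairings via Cauchy--Schwarz and the moment assumptions. The paper presents the same combinatorial count slightly less explicitly (it groups all $3M(M-1)$ cross terms together and first invokes $\sigma_2$ before upgrading to $\sigma_4$ via Jensen), but the logic and final bound are identical.
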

\begin{proof}
    Recall that at any time step $t\in[0, T-1]$,
    \begin{align*}
        \ee\sb{\norm{\xi_t}^4} &= \ee\sb{\norm{\frac{1}{M}\sum_{m\in[M]}\xi_t^m}^4}\enspace,\\
         &= \ee\sb{\rb{\norm{\frac{1}{M}\sum_{m\in[M]}\xi_t^m}^2}^2} = \ee\sb{\rb{\frac{1}{M^2}\sum_{m,n\in[M]}\inner{\xi_t^m}{\xi_t^n}}^2}\enspace,\\
         &= \frac{1}{M^4}\sum_{l,m,n,o\in[M]}\ee\sb{\inner{\xi_t^l}{\xi_t^m}\inner{\xi_t^n}{\xi_t^o}}\enspace,\\
         &=^{\text{(Tower Rule)}}\frac{1}{M^4}\sum_{l,m,n,o\in[M]}\ee\sb{\ee\sb{\inner{\xi_t^l}{\xi_t^m}\inner{\xi_t^n}{\xi_t^o}|\hhh_t}}\enspace,
    \end{align*}
    Recall that for all $m\neq n$, $z_t^m \perp z_t^n|\hhh_t$, i.e., $\xi_t^1, \dots, \xi_t^M$ are independent conditioned on the history $\hhh_t$. In the above sum, the only non-zero terms are the ones where either $l=m=n=o$, or where the set $\cb{l,m,n,o}$ has two distinct values, each repeated twice. There are $M$ terms of the first kind, and $3M(M-1)$ terms of the second kind (first choose two colours out of $M$, then choose two indices out of $\cb{l,m,n,o}$ which divides into two groups, i.e., total $\frac{M(M-1)}{2}\times \frac{4!}{2!2!}$). Using this we get,
    \begin{align*}
        \ee\sb{\norm{\xi_t}^4} &= \frac{1}{M^4}\rb{\sum_{l\in[M]}\ee\sb{\norm{\xi_t^l}^4} + 3\sum_{l\neq m\in[M]}\ee\sb{\norm{\xi_t^l}^2}\ee\sb{\norm{\xi_t^m}^2}}\enspace,\\
        &\leq^{\text{(\Cref{ass:stoch_bounded_second_moment,ass:stoch_bounded_fourth_moment})}} \frac{1}{M^4}\rb{M\sigma_4^4 + 3M(M-1)\sigma_2^4}\enspace,\\
        &\leq \frac{3\sigma_4^4}{M^2}\enspace,
    \end{align*}
    where we use that $\ee\sb{\norm{x}^2}\leq \sqrt{\ee\sb{\norm{x}^4}}$ due to Jensen's inequality to upper bound $\sigma_2$ by $\sigma_4$. This proves the lemma.
\end{proof}

\begin{lemma}[Averaged Stochastic Noise Third Moment]\label{lem:stoch_noise_third}
    For $t\in[0, T-1]$ we have,
    \begin{align*}
        \ee\sb{\norm{\xi_t}^3} &\leq \frac{\sqrt{3}\sigma_4^2\sigma_2}{M^{3/2}}\enspace.
    \end{align*}
\end{lemma}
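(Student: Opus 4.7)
The plan is to deduce the third-moment bound directly from the second-moment bound in \Cref{lem:stoch_noise_second} and the fourth-moment bound in \Cref{lem:stoch_noise_fourth}, via Cauchy--Schwarz. Concretely, write
\[
\ee\sb{\norm{\xi_t}^3} = \ee\sb{\norm{\xi_t}\cdot\norm{\xi_t}^2}\enspace,
\]
and apply the Cauchy--Schwarz inequality (equivalently, Hölder with exponents $(2,2)$ on the factors $\norm{\xi_t}$ and $\norm{\xi_t}^2$) to obtain
\[
\ee\sb{\norm{\xi_t}^3} \le \sqrt{\ee\sb{\norm{\xi_t}^2}\cdot\ee\sb{\norm{\xi_t}^4}}\enspace.
\]

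Plugging in \Cref{lem:stoch_noise_second} and \Cref{lem:stoch_noise_fourth} yields
\[
\ee\sb{\norm{\xi_t}^3} \le \sqrt{\frac{\sigma_2^2}{M}\cdot\frac{3\sigma_4^4}{M^2}} = \frac{\sqrt{3}\,\sigma_2\sigma_4^2}{M^{3/2}}\enspace,
\]
which is exactly the claimed bound.

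There is essentially no obstacle here: the entire argument is a one-line interpolation between the $L^2$ and $L^4$ norms of $\norm{\xi_t}$, and the constants $1/M$ and $3/M^2$ from the previous two lemmas combine to give the stated $\sqrt{3}/M^{3/2}$ factor. One could also derive the bound via Hölder with exponents $(4/3,4)$ applied to $\ee[\norm{\xi_t}^3] = \ee[\norm{\xi_t}^{3/2}\cdot\norm{\xi_t}^{3/2}]$, but the Cauchy--Schwarz route above is the shortest and reuses exactly the two preceding results, keeping the appendix self-contained.
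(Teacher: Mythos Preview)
Your proposal is correct and matches the paper's proof essentially line for line: the paper also writes $\ee\sb{\norm{\xi_t}^3} = \ee\sb{\norm{\xi_t}^2\norm{\xi_t}}$, applies Cauchy--Schwarz, and plugs in \Cref{lem:stoch_noise_second,lem:stoch_noise_fourth} to obtain the bound.
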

\begin{proof}
    This result follows from simply noting the previous two lemmas, and the fact that,
    \begin{align*}
        \ee\sb{\norm{\xi_t}^3} &= \ee\sb{\norm{\xi_t}^2\norm{\xi_t}}\leq^{\text{(Cauchy Shwartz)}} \sqrt{\ee\sb{\norm{\xi_t}^4}}\sqrt{\ee\sb{\norm{\xi_t}^2}}\enspace,\\
        &\leq^{\text{(\Cref{lem:stoch_noise_second,lem:stoch_noise_fourth})}} \sqrt{\frac{\sigma_2^2}{M}}\sqrt{\frac{3\sigma_4^4}{M^2}} \leq \frac{\sqrt{3}\sigma_4^2\sigma_2}{M^{3/2}}\enspace,
    \end{align*}
    which proves the lemma. 
\end{proof}

We can also note the following about the difference of the stochastic noise on two machines.
\begin{lemma}[Second Moment of Difference]\label{lem:stoch_diff_second}
    For $t\in[0,T-1]$ and for $m\neq n\in[M]$ we have,
    \begin{align*}
        \ee\sb{\norm{\xi_t^m - \xi_t^n}^2} &\leq 2\sigma_2^2\enspace.
    \end{align*}
\end{lemma}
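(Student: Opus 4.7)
The plan is to prove this directly from the definitions, using the assumed conditional independence of the stochastic noise across machines and the bounded second moment assumption (\Cref{ass:stoch_bounded_second_moment}).

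First, I would condition on the history $\hhh_t$. Since $x_t^m$ and $x_t^n$ are $\hhh_t$-measurable (they depend only on data up to time $t-1$), and since $z_t^m \perp z_t^n \mid \hhh_t$ for $m \neq n$, the noise terms $\xi_t^m$ and $\xi_t^n$ are conditionally independent. Moreover, by the oracle definition in \Cref{def:oracle_first}, $\ee[\xi_t^m \mid \hhh_t] = \nabla F_m(x_t^m) - \ee[g_t^m \mid \hhh_t] = 0$, and likewise $\ee[\xi_t^n \mid \hhh_t] = 0$.

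Next, I would expand the squared norm as
\begin{align*}
    \norm{\xi_t^m - \xi_t^n}^2 = \norm{\xi_t^m}^2 + \norm{\xi_t^n}^2 - 2\inner{\xi_t^m}{\xi_t^n}\enspace,
\end{align*}
and take conditional expectation given $\hhh_t$. By conditional independence,
\begin{align*}
    \ee\sb{\inner{\xi_t^m}{\xi_t^n} \mid \hhh_t} = \inner{\ee\sb{\xi_t^m \mid \hhh_t}}{\ee\sb{\xi_t^n \mid \hhh_t}} = 0\enspace,
\end{align*}
so the cross term vanishes. Applying \Cref{ass:stoch_bounded_second_moment} to bound $\ee[\norm{\xi_t^m}^2 \mid \hhh_t] \leq \sigma_2^2$ and similarly for $n$, then taking the tower expectation, yields the claimed bound of $2\sigma_2^2$.

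There is no real obstacle here; the statement is essentially a one-line consequence of conditional independence and the bounded variance assumption. The only minor thing to be careful about is writing the argument using the filtration $\hhh_t$ rather than simply saying ``the two noises are independent,'' since independence holds only conditionally on the history (because $x_t^m$ and $x_t^n$ are correlated through the algorithm's dynamics between communication rounds). This mirrors exactly the conditioning argument used in the proof of \Cref{lem:stoch_noise_second}.
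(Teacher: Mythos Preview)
Your proposal is correct and follows essentially the same approach as the paper's proof: expand the squared norm, use the tower rule together with conditional independence $\xi_t^m \perp \xi_t^n \mid \hhh_t$ and $\ee[\xi_t^m \mid \hhh_t] = 0$ to kill the cross term, and then apply \Cref{ass:stoch_bounded_second_moment} to each of the remaining squared-norm terms.
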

\begin{proof}
    Note the following for $m\neq n\in[M]$, and for $t\in[0,T-1]$
    \begin{align*}
        \ee\sb{\norm{\xi_t^m - \xi_t^n}^2} &= \ee\sb{\norm{\xi_t^m}^2 + \norm{\xi_t^m}^2 + 2\inner{\xi_t^m}{\xi_t^n}}\enspace,\\
        &=^\text{(a), (Tower Rule)} \ee\sb{\norm{\xi_t^m}^2} + \ee\sb{\norm{\xi_t^n}^2} + 2\ee\sb{\inner{\ee\sb{\xi_t^m|\hhh_t}}{\ee\sb{\xi_t^n|\hhh_t}}}\enspace,\\
        &\leq^{\text{(\Cref{ass:stoch_bounded_second_moment}), (b)}} 2\sigma_2^2\enspace,
    \end{align*}
    where in (a) we used that $\xi_t^m\perp\xi_t^n|\hhh_t$; and in (b) we used that $\ee\sb{\xi_t^m|\hhh_t} = \ee\sb{\xi_t^n|\hhh_t} =0$. This proves the lemma.
\end{proof}

\begin{lemma}[Fourth Moment of Difference]\label{lem:stoch_diff_fourth}
    For $t\in[0,T-1]$ and for $m\neq n\in[M]$ we have,
    \begin{align*}
        \ee\sb{\norm{\xi_t^m - \xi_t^n}^4} &\leq 8\sigma_4^4\enspace.
    \end{align*}
\end{lemma}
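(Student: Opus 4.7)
The plan is to mirror the structure of the proof of Lemma \ref{lem:stoch_diff_second} on the second moment, but to avoid the naïve triangle-inequality bound $\|\xi_t^m-\xi_t^n\|^4\leq 8(\|\xi_t^m\|^4+\|\xi_t^n\|^4)$, which would only yield $16\sigma_4^4$ after applying Assumption \ref{ass:stoch_bounded_fourth_moment}. To sharpen the constant to $8$, I will expand the quartic so that the zero-mean and conditional-independence structure of $\xi_t^m,\xi_t^n$ given $\hhh_t$ can be exploited, just as in Lemma \ref{lem:stoch_diff_second}.

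First I would write $\|\xi_t^m-\xi_t^n\|^4 = \bigl(\|\xi_t^m\|^2+\|\xi_t^n\|^2-2\inner{\xi_t^m}{\xi_t^n}\bigr)^2$, denote $X:=\|\xi_t^m\|^2+\|\xi_t^n\|^2$ and $Y:=\inner{\xi_t^m}{\xi_t^n}$, and decompose the expectation as $\ee[X^2]-4\ee[XY]+4\ee[Y^2]$. The square term $X^2$ is handled by $(a+b)^2\leq 2(a^2+b^2)$ together with Assumption \ref{ass:stoch_bounded_fourth_moment}, giving $\ee[X^2]\leq 2\ee[\|\xi_t^m\|^4]+2\ee[\|\xi_t^n\|^4]\leq 4\sigma_4^4$.

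Next I would show $\ee[XY]=0$ by conditioning on $\hhh_t$: since $\xi_t^m\perp\xi_t^n\mid\hhh_t$ and $\ee[\xi_t^n\mid\hhh_t]=0$, factoring gives $\ee[\|\xi_t^m\|^2\inner{\xi_t^m}{\xi_t^n}\mid\hhh_t]=\inner{\ee[\|\xi_t^m\|^2\xi_t^m\mid\hhh_t]}{\ee[\xi_t^n\mid\hhh_t]}=0$, and symmetrically for the other term. Finally, to bound $\ee[Y^2]$, I would use pointwise Cauchy--Schwarz $\inner{\xi_t^m}{\xi_t^n}^2\leq\|\xi_t^m\|^2\|\xi_t^n\|^2$ together with conditional independence to write $\ee[Y^2\mid\hhh_t]\leq\ee[\|\xi_t^m\|^2\mid\hhh_t]\,\ee[\|\xi_t^n\|^2\mid\hhh_t]\leq\sigma_2^4$, which by Jensen (as in the fourth-moment estimate on $\sigma_2$) is at most $\sigma_4^4$. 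Summing yields $\ee[\|\xi_t^m-\xi_t^n\|^4]\leq 4\sigma_4^4+4\sigma_4^4=8\sigma_4^4$.

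The main obstacle, and the only subtle point, is justifying that the cross terms $\ee[XY]$ vanish: one has to be careful to condition on $\hhh_t$ before factoring the product, because unconditionally $\|\xi_t^m\|^2$ and $\xi_t^m$ are of course not independent. Everything else reduces to $(a+b)^2\leq 2(a^2+b^2)$, Cauchy--Schwarz, and a single invocation of Assumption \ref{ass:stoch_bounded_fourth_moment}; no new assumption beyond those already used in Lemma \ref{lem:stoch_diff_second} is needed.
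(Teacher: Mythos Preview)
Your proposal is correct and follows essentially the same route as the paper: expand the quartic, use conditional independence and the zero-mean property to kill the $\ee[XY]$ cross terms, and bound the inner-product-squared term via Cauchy--Schwarz plus independence. The only cosmetic difference is that the paper keeps the full expansion $\ee[X^2]=\ee[\|\xi_t^m\|^4]+\ee[\|\xi_t^n\|^4]+2\ee[\|\xi_t^m\|^2]\ee[\|\xi_t^n\|^2]$ (yielding the slightly sharper intermediate $2\sigma_4^4+6\sigma_2^4$) whereas you apply $(a+b)^2\le 2(a^2+b^2)$ to $X^2$; both arrive at $8\sigma_4^4$.
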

\begin{proof}
    Note the following for $m\neq n\in[M]$, and for $t\in[0,T-1]$
    \begin{align*}
        \ee\sb{\norm{\xi_t^m - \xi_t^n}^4} &= \ee\sb{\rb{\norm{\xi_t^m}^2 + \norm{\xi_t^m}^2 + 2\inner{\xi_t^m}{\xi_t^n}}^2}\enspace,\\
        &=^{\text{(a)}} \ee\sb{\norm{\xi_t^m}^4} + \ee\sb{\norm{\xi_t^n}^4} +4\ee\sb{\rb{\inner{\xi_t^m}{\xi_t^n}}^2}\\
        &\quad + 2\ee\sb{\norm{\xi_t^m}^2}\ee\sb{\norm{\xi_t^n}^2} + 2\ee\sb{\norm{\xi_t^m}^2\xi_t^m}^T\cancelto{0}{\ee\sb{\xi_t^n}}\\
        &\quad + 2\ee\sb{\norm{\xi_t^n}^2\xi_t^n}^T\cancelto{0}{\ee\sb{\xi_t^m}}\enspace,\\
        &\leq^{\text{(Cauchy Shwartz)}} \ee\sb{\norm{\xi_t^m}^4} + \ee\sb{\norm{\xi_t^n}^4} + 6\ee\sb{\norm{\xi_t^m}^2}\ee\sb{\norm{\xi_t^n}^2}\enspace,\\
        &\leq^{\text{(\Cref{ass:stoch_bounded_second_moment,ass:stoch_bounded_fourth_moment})}} 2\sigma_4^4 + 6\sigma_2^4\enspace,\\
        &\leq 8\sigma_4^4\enspace,
    \end{align*}
    where in (a) we used that $\xi_t^m\perp\xi_t^n|\hhh_t$ along with tower rule several times like in previous lemmas.  This finishes the proof.
\end{proof}

\section{Deriving Round-wise Recursions for Errors}\label{app:recursions}
In this section, we derive several recursions that prove useful later in the analysis and form the core of our proof. An informed reader would note that the ideas and in some cases the entire recursions occur in previous works \citep{patel2024limits,woodworth2020minibatch,koloskova2020unified,stich2018local}. 
\subsection{Second Moment of the Error in Iterates}
The main result of this sub-section is the following result, which relates $A(\cdot)$ to $C(\cdot)$ and $D(\cdot)$. 
\begin{lemma}\label{lem:iterate_error_second_recursion}
    Assume we have a problem instance satisfying \Cref{ass:strongly_convex,ass:smooth_second,ass:smooth_third,ass:stoch_bounded_second_moment,ass:tau}. Then assuming $\eta < \frac{1}{H}$ we have for all $t\in[0,T-1]$,
    \begin{align*}
        A(t+1) \leq \rb{1-\eta\mu}A(t) + \frac{\eta}{\mu}\cdot\min\cb{\textcolor{red}{2Q^2D(t) + 2\tau^2C(t)}, \textcolor{blue}{H^2C(t)}}+ \frac{\eta^2\sigma_2^2}{M}\enspace.
    \end{align*}
    This also implies that for all $r\in[R]$,
    \begin{align*}
        A(Kr) &\leq \rb{1-\eta\mu}^KA(K(r-1))+ \rb{1-\rb{1-\eta\mu}^K}\frac{\eta\sigma^2}{\mu M}\\
        &\quad + \frac{\eta }{\mu}\sum_{j=K(r-1)}^{Kr-1}(1-\eta\mu)^{Kr-1-j} \min\cb{\textcolor{red}{2Q^2D(j) + 2\tau^2C(j)}, \textcolor{blue}{H^2C(j)}} \enspace.
    \end{align*}
\end{lemma}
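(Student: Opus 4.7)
\textbf{Proposal for proving \Cref{lem:iterate_error_second_recursion}.} The argument proceeds in four stages. First, I would write $x_{t+1} = x_t - \eta g_t$, expand $\|x_{t+1}-x^\star\|^2$, and take conditional expectation with respect to $\mathcal{H}_t$. Writing $g_t = \bar g_t - \xi_t$ with $\bar g_t := \frac{1}{M}\sum_m \nabla F_m(x_t^m)$ and using the conditional unbiasedness and independence of $\xi_t^1,\dots,\xi_t^M$, \Cref{lem:stoch_noise_second} controls the noise term and produces the $\eta^2\sigma_2^2/M$ contribution. The remaining deterministic part of the update is $\|x_t - x^\star - \eta \bar g_t\|^2$.

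Second, I would split $\bar g_t = \nabla F(x_t) + e_t$ where $e_t := \frac{1}{M}\sum_m [\nabla F_m(x_t^m) - \nabla F_m(x_t)]$. Applying the inequality $\|a-b\|^2 \leq (1+\gamma)\|a\|^2 + (1+1/\gamma)\|b\|^2$ from \Cref{lem:mod_am_gm} with a suitable $\gamma = \Theta(\eta\mu)$, together with the standard contraction $\|x_t - \eta\nabla F(x_t) - x^\star\|^2 \leq (1-\eta\mu)\|x_t-x^\star\|^2$ valid for $\eta \leq 1/H$ and a $\mu$-strongly convex, $H$-smooth $F$, yields
\begin{align*}
  \|x_t - x^\star - \eta \bar g_t\|^2 \;\leq\; (1-\eta\mu)\|x_t - x^\star\|^2 + \frac{\eta}{\mu}\|e_t\|^2,
\end{align*}
after absorbing constants into the main coefficient. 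Taking expectation reduces the problem to bounding $\mathbb{E}\|e_t\|^2$ by the desired minimum.

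Third, I would establish the two alternative bounds on $\mathbb{E}\|e_t\|^2$. The \textcolor{blue}{blue} bound follows by Jensen and \Cref{ass:smooth_second}: $\|e_t\|^2 \leq \frac{1}{M}\sum_m H^2\|x_t^m - x_t\|^2 \leq H^2 C(t)$, using that $\frac{1}{M}\sum_m \|x_t^m - x_t\|^2 \leq \frac{1}{M^2}\sum_{m,n}\|x_t^m - x_t^n\|^2 = C(t)$ via $x_t^m - x_t = \frac{1}{M}\sum_n(x_t^m - x_t^n)$. For the \textcolor{red}{red} bound, I would use the Taylor identity $\nabla F_m(x_t^m) - \nabla F_m(x_t) = \nabla^2 F_m(x_t)(x_t^m - x_t) + R_t^m$ with $\|R_t^m\| \leq \frac{Q}{2}\|x_t^m - x_t\|^2$ by \Cref{ass:smooth_third}. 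Since $\frac{1}{M}\sum_m (x_t^m - x_t) = 0$, the Hessian term equals $\frac{1}{M}\sum_m [\nabla^2 F_m(x_t) - \nabla^2 F(x_t)](x_t^m - x_t)$, whose norm is bounded by $\tau \cdot \frac{1}{M}\sum_m \|x_t^m - x_t\|$ via \Cref{ass:tau}, yielding a $\tau^2 C(t)$ contribution after Cauchy–Schwarz. The remainder term contributes $\frac{Q^2}{4}\cdot\frac{1}{M}\sum_m\|x_t^m-x_t\|^4 \leq \frac{Q^2}{4}D(t)$ by Jensen on the fourth power. Combining via \Cref{lem:mod_am_gm} gives $\|e_t\|^2 \leq 2\tau^2 C(t) + 2Q^2 D(t)$ up to harmless constants.

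Finally, for the second statement, I would unroll the one-step bound across a single communication round $t \in [K(r-1), Kr-1]$, during which no averaging occurs and the recursion is contractive with factor $(1-\eta\mu)$. The geometric sum $\sum_{k=0}^{K-1}(1-\eta\mu)^k = \frac{1-(1-\eta\mu)^K}{\eta\mu}$ collapses the accumulated $\eta^2\sigma_2^2/M$ noise into $(1-(1-\eta\mu)^K)\frac{\eta\sigma^2}{\mu M}$, while the consensus terms remain inside the sum as stated. The only delicate step in the entire argument is calibrating the Young's-inequality parameter $\gamma$ so that the coefficient of $A(t)$ remains $(1-\eta\mu)$ while the error factor stays $\eta/\mu$ up to constants; this is where $\eta < 1/H$ is used to absorb higher-order terms in $\eta\mu$.
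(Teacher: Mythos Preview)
Your proposal is correct and follows essentially the same approach as the paper: separate the noise via conditional expectation, apply Young's inequality (\Cref{lem:mod_am_gm}) to split the deterministic update into the contractive gradient-step part and the consensus error $e_t$, then bound $\|e_t\|^2$ either directly via smoothness (blue) or via a second-order expansion exploiting $\sum_m(x_t^m-x_t)=0$ to invoke $\tau$ and $Q$ (red). The paper uses the mean-value form $\nabla F_m(x_t)-\nabla F_m(x_t^m)=\nabla^2 F_m(\hat x_t^m)(x_t-x_t^m)$ and then decomposes $\nabla^2 F_m(\hat x_t^m)$, whereas you Taylor-expand at $x_t$ with an explicit remainder; these are equivalent, and your version even yields slightly smaller constants. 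One minor point: the paper uses the sharper contraction $\|x_t-\eta\nabla F(x_t)-x^\star\|^2\le(1-\eta\mu)^2\|x_t-x^\star\|^2$ and takes $\gamma=\frac{1-\eta\mu}{\eta\mu}$ so that $(1+1/\gamma)(1-\eta\mu)^2=(1-\eta\mu)$ and $(1+\gamma)\eta^2=\eta/\mu$ hold \emph{exactly}, removing the need to absorb higher-order terms you flag at the end.
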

\begin{remark}
    Note that the above lemma implies that if $Q$ and $\tau$ are both zero---i.e., we have a quadratic problem with no second-order heterogeneity---then we will achieve extreme communication efficiency, matching the convergence rate of mini-batch SGD, with $KR$ communication rounds. As such, the trade-off between the \textcolor{red}{red} and \textcolor{blue}{blue} upper bounds is that the former allows us to exploit higher-order assumptions, but we need to be able to bound the fourth moment of the consensus error. In contrast, the latter only requires a bound on the second moment of the consensus error.
\end{remark}
\begin{proof}
    We note the following about the progress made in a single iteration for $t\in[0,T-1]$,
    \begin{align*}
        &A(t+1) = \ee\sb{\norm{x_{t+1} - x^\star}^2}\enspace,\\
        &=^{\text{(Tower rule)}} \ee\sb{\ee\sb{\norm{x_t - x^\star - \frac{\eta}{M}\sum_{m\in[M]}g_t^m}^2\bigg|\hhh_t}}\enspace,\\
        &=^{\text{(a)}} \ee\sb{\norm{x_t - x^\star - \frac{\eta}{M}\sum_{m\in[M]}\nabla F_m(x_t^m)}^2} + \eta^2\ee\sb{\norm{\frac{1}{M}\sum_{m\in[M]}\xi_t^m}^2}\enspace,\\
        &\leq^{\text{(\Cref{lem:stoch_noise_second})}} \ee\sb{\norm{x_t - \eta \nabla F(x_t) -x^\star + \eta \nabla F(x_t) - \frac{\eta}{M}\sum_{m\in[M]}\nabla F_m(x_t^m)}^2} + \frac{\eta^2\sigma^2}{M}\enspace,\\
        &\leq^\text{(\Cref{lem:mod_am_gm}), (b)} \rb{1 + \frac{\eta\mu}{1-\eta\mu}}\rb{1-\eta\mu}^2\ee\sb{\norm{x_t - x^\star}^2}\\
        &\quad + \rb{1 + \frac{1-\eta\mu}{\eta\mu}}\eta^2\ee\sb{\norm{\frac{1}{M}\sum_{m\in[M]}\rb{\nabla F_m(x_t) - \nabla F_m(x_t^m)}}^2}+ \frac{\eta^2\sigma_2^2}{M}\enspace,\\
        &= \rb{1-\eta\mu}\ee\sb{\norm{x_t - x^\star}^2} + \frac{\eta}{\mu}\cdot\ee\sb{\norm{\frac{1}{M}\sum_{m\in[M]}\rb{\nabla F_m(x_t) - \nabla F_m(x_t^m)}}^2}+ \frac{\eta^2\sigma_2^2}{M}\enspace,\\
        &\leq^{\text{(c)}} \rb{1-\eta\mu}\ee\sb{\norm{x_t - x^\star}^2} + \frac{\eta}{\mu}\cdot\ee\sb{\norm{\frac{1}{M}\sum_{m\in[M]}\nabla^2 F_m(\hat x_t^m)(x_t - x_t^m)}^2}+ \frac{\eta^2\sigma_2^2}{M}\enspace,
        \end{align*}
        where in (a) we used the fact that $\xi_{t}^1, \dots, \xi_{t}^1\in m\hhh_t$ i.e., they are measurable/``non-random'' under $\hhh_t$ and zero-mean, which allows us to ignore the cross-terms while squaring; in (b) we use the fact that $\eta < 1/H$ which implies that $0\preceq I-\eta\nabla^2 F(\cdot) \preceq (1-\eta\mu)\cdot I_d$ and also that $(1-\eta\mu)>0$; and in (c) we note that due to the mean-value theorem there exists some $\hat x_{t}^m$ which is a convex combination of $x_t^m$ and $x_t$. From this point, we can proceed in two different ways. First, to get the \textcolor{blue}{blue} upper bound we just use smoothness as follows,   
        \begin{align*}
        A(t+1) &\leq \rb{1-\eta\mu}\ee\sb{\norm{x_t - x^\star}^2} + \textcolor{blue}{\frac{\eta}{\mu}\cdot\ee\sb{\norm{\frac{1}{M}\sum_{m\in[M]}\nabla^2 F_m(\hat x_t^m)(x_t - x_t^m)}^2}}+ \frac{\eta^2\sigma_2^2}{M}\enspace,\\
        &\leq^{\text{(Jensen's Inequality)}} \rb{1-\eta\mu}\ee\sb{\norm{x_t - x^\star}^2} + \textcolor{blue}{\frac{\eta}{\mu}\cdot\frac{1}{M}\sum_{m\in[M]}\ee\sb{\norm{\nabla^2 F_m(\hat x_t^m)(x_t - x_t^m)}^2}}\\
        &\quad + \frac{\eta^2\sigma_2^2}{M}\enspace,\\
        &\leq \rb{1-\eta\mu}\ee\sb{\norm{x_t - x^\star}^2} + \textcolor{blue}{\frac{\eta H^2}{\mu}\cdot\frac{1}{M}\sum_{m\in[M]}\ee\sb{\norm{x_t^m-x_t}^2}}+ \frac{\eta^2\sigma_2^2}{M}\enspace,\\
        &\leq^{\text{(Jensen's Inequality)}} \rb{1-\eta\mu}\ee\sb{\norm{x_t - x^\star}^2} + \textcolor{blue}{\frac{\eta H^2}{\mu}\cdot\frac{1}{M^2}\sum_{m,n\in[M]}\ee\sb{\norm{x_t^m-x_t^n}^2}}+ \frac{\eta^2\sigma_2^2}{M}\enspace,\\
        &= \rb{1-\eta\mu}A(t) + \textcolor{blue}{\frac{\eta H^2}{\mu}C(t)}+ \frac{\eta^2\sigma_2^2}{M}\enspace,
        \end{align*}
        which proves one part of the lemma. To get the \textcolor{red}{red} upper bound, we will use second-order heterogeneity and third-order smoothness as follows,
        \begin{align*}
        A(t+1)&\leq \rb{1-\eta\mu}\ee\sb{\norm{x_t - x^\star}^2} + \textcolor{red}{\frac{\eta}{\mu}\cdot\ee\sb{\norm{\frac{1}{M}\sum_{m\in[M]}\nabla^2 F_m(\hat x_t^m)(x_t - x_t^m)}^2}}+ \frac{\eta^2\sigma_2^2}{M}\enspace,\\
        &= \rb{1-\eta\mu}\ee\sb{\norm{x_t - x^\star}^2}\\
        &\quad +\textcolor{red}{\frac{\eta}{\mu}\cdot\ee\sb{\norm{\frac{1}{M}\sum_{m\in[M]}\rb{\nabla^2 F_m(\hat x_t^m) -\nabla^2 F_m(x_t) + \nabla^2 F_m(x_t) - \nabla^2 F(x_t)}(x_t - x_t^m)}^2}}\\
        &\quad + \frac{\eta^2\sigma_2^2}{M}\enspace,\\
        &\leq^{\text{(Jensen's Inequality), (\Cref{lem:mod_am_gm})}} \rb{1-\eta\mu}\ee\sb{\norm{x_t - x^\star}^2}\\
        &\quad + \textcolor{red}{\frac{2\eta}{\mu M}\sum_{m\in[M]}\ee\sb{\norm{\rb{\nabla^2 F_m(\hat x_t^m) -\nabla^2 F_m(x_t)}(x_t - x_t^m)}^2}}\\
        &\quad + \textcolor{red}{\frac{2\eta}{\mu M}\sum_{m\in[M]}\ee\sb{\norm{\rb{\nabla^2 F(x_t) - \nabla^2 F_m(x_t)}(x_t - x_t^m)}^2}}+ \frac{\eta^2\sigma_2^2}{M}\enspace,\\
        &\leq^{\text{(\Cref{ass:smooth_third,ass:tau})}} \rb{1-\eta\mu}\ee\sb{\norm{x_t - x^\star}^2} + \textcolor{red}{\frac{2\eta Q^2}{\mu M}\sum_{m\in[M]}\ee\sb{\norm{\hat x_t^m - x_t}^2\norm{x_t - x_t^m}^2}}\\
        &\quad + \textcolor{red}{\frac{2\eta \tau^2}{\mu M}\sum_{m\in[M]}\ee\sb{\norm{x_t - x_t^m}^2}}+ \frac{\eta^2\sigma_2^2}{M}\enspace,\\
        &\leq^{\text{(a)}}\rb{1-\eta\mu}\ee\sb{\norm{x_t - x^\star}^2} + \textcolor{red}{\frac{2\eta Q^2}{ \mu M}\sum_{m\in[M]}\ee\sb{\norm{x_t - x_t^m}^4}} + \textcolor{red}{\frac{2\eta \tau^2}{\mu M}\sum_{m\in[M]}\ee\sb{\norm{x_t - x_t^m}^2}}\\
        &\quad + \frac{\eta^2\sigma_2^2}{M}\enspace,\\
        &\leq^{\text{(Jensen's Inequality)}}\rb{1-\eta\mu}\ee\sb{\norm{x_t - x^\star}^2} + \textcolor{red}{\frac{2\eta Q^2}{ \mu M^2}\sum_{m,n\in[M]}\ee\sb{\norm{x_t^n - x_t^m}^4}}\\
        &\quad + \textcolor{red}{\frac{2\eta \tau^2}{\mu M^2}\sum_{m,n\in[M]}\ee\sb{\norm{x_t^n - x_t^m}^2}} + \frac{\eta^2\sigma^2}{M}\enspace,\\
        &= \rb{1-\eta\mu}A(t) + \textcolor{red}{\frac{2\eta Q^2D(t)}{\mu}} + \textcolor{red}{\frac{2\eta \tau^2C(t)}{\mu}} + \frac{\eta^2\sigma_2^2}{M}\enspace,
    \end{align*}
    where in (a) we use that $\norm{\hat x_t^m - x_t} \leq \norm{x_t^m - x_t}$ for all $m\in[M]$. This proves the second part of the upper bound, thus finishing the proof for the first statement of the lemma. Note that for $r\in [R]$ we can re-write this result as follows,
    \begin{align*}
        A(Kr) &\leq \rb{1-\eta\mu}A(Kr-1) + \frac{\eta}{\mu}\cdot\min\cb{\textcolor{red}{2Q^2D(Kr-1) + 2\tau^2C(Kr-1)}, \textcolor{blue}{H^2C(Kr-1)}} + \frac{\eta^2\sigma_2^2}{M}\enspace,\\
        &\leq \rb{1-\eta\mu}^KA(K(r-1)) + \frac{\eta }{\mu}\sum_{j=K(r-1)}^{Kr-1}(1-\eta\mu)^{Kr-1-j} \min\cb{\textcolor{red}{2Q^2D(j) + 2\tau^2C(j)}, \textcolor{blue}{H^2C(j)}} + \frac{\eta\sigma_2^2}{\mu M}\enspace,
    \end{align*}
    where in the second inequality we just unrolled the recursion till the time-step of the previous communication. This finishes the proof of the lemma.
\end{proof}

It would also be helpful to state the following lemma, which talks about the convergence on individual machines between two communication rounds. 
\begin{lemma}[Single Machine SGD Second Moment]\label{lem:single_second_recursion}
    Assume we have a problem instance satisfying \Cref{ass:strongly_convex,ass:stoch_bounded_second_moment}. Then for any machine $m\in[M]$, for $t\in[0,T]$, and for $k\geq0$ we have the following for $\eta< \frac{1}{H}$,
    \begin{align*}
        \ee\sb{\norm{x_{\delta(t)+k}^m - x^\star}^2} &\leq (1-\eta\mu)^{2k}\ee\sb{\norm{x_{\delta(t)} - x_m^\star}^2} + \rb{1 - (1-\eta\mu)^{2k}}\cdot\frac{\eta\sigma_2^2}{\mu}\enspace.
    \end{align*}
\end{lemma}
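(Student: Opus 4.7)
The plan is to unroll the single-machine SGD recursion on machine $m$ between communication rounds and reduce the claim to a textbook one-step contraction iterated $k$ times. Between consecutive communications, the update is $x_{\delta(t)+j+1}^m = x_{\delta(t)+j}^m - \eta g_{\delta(t)+j}^m$ with $g_{\delta(t)+j}^m = \nabla F_m(x_{\delta(t)+j}^m) - \xi_{\delta(t)+j}^m$ in the notation of Table~\ref{tab:notation}, and the initialization $x_{\delta(t)}^m = x_{\delta(t)}$ is inherited from the preceding synchronization. I will read both sides of the bound as being against the local optimum $x_m^\star$, which is the mathematically natural statement during a communication-free window (the hypotheses make no first-order heterogeneity assumption, so $x^\star$ is not available to the analysis).

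First I would derive the one-step recursion. Subtracting $x_m^\star$, expanding the squared norm, and conditioning on $\hhh_{\delta(t)+j}$ kills the noise cross-term because $\ee[\xi_{\delta(t)+j}^m \mid \hhh_{\delta(t)+j}] = 0$, while \Cref{ass:stoch_bounded_second_moment} bounds its conditional squared magnitude by $\sigma_2^2$. For the deterministic gradient step, under \Cref{ass:strongly_convex,ass:smooth_second} with $\eta \leq 1/H$ the operator $I - \eta\nabla^2 F_m$ has spectrum in $[1-\eta H,\,1-\eta\mu] \subseteq [0,\,1-\eta\mu]$, so a mean-value-theorem argument (or co-coercivity, to avoid twice differentiability) gives the $(1-\eta\mu)$-contraction of $x \mapsto x - \eta\nabla F_m(x)$ around $x_m^\star$. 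Together this yields
\[
\ee\!\left[\norm{x_{\delta(t)+j+1}^m - x_m^\star}^2 \mid \hhh_{\delta(t)+j}\right] \leq (1-\eta\mu)^2 \norm{x_{\delta(t)+j}^m - x_m^\star}^2 + \eta^2\sigma_2^2.
\]

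Next I would take total expectation and iterate $k$ times. The homogeneous part collapses into $(1-\eta\mu)^{2k}\,\ee\!\norm{x_{\delta(t)} - x_m^\star}^2$, and the inhomogeneous part sums as a geometric series $\eta^2\sigma_2^2\sum_{j=0}^{k-1}(1-\eta\mu)^{2j} = \eta^2\sigma_2^2 \cdot \frac{1-(1-\eta\mu)^{2k}}{\eta\mu(2-\eta\mu)}$. Since $\eta\mu \leq \eta H \leq 1$, we have $2-\eta\mu \geq 1$, so the noise contribution is bounded by $\frac{\eta\sigma_2^2}{\mu}\bigl(1-(1-\eta\mu)^{2k}\bigr)$, matching the claimed form exactly.

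The proof has no substantive obstacle: it is a standard strongly-convex SGD analysis applied over a single communication-free window. The two points worth careful handling are (i) conditioning appropriately so that each stochastic gradient contributes only through its variance and does not couple to earlier iterates via cross-terms, and (ii) confirming the squared contraction factor $(1-\eta\mu)^2$ rather than the weaker $(1-\eta\mu)$, which is precisely what the hypothesis $\eta \leq 1/H$ delivers via non-negativity of the spectrum of $I - \eta\nabla^2 F_m$.
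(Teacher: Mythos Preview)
Your proposal is correct and is precisely the standard strongly convex SGD analysis the paper invokes; the paper does not spell out a proof but simply states that the lemma ``follows the usual strongly convex analysis of SGD (see, for instance, \cite{stich2019unified}),'' and your derivation is that analysis. Your reading of the statement with $x_m^\star$ on both sides is also correct---the paper's subsequent use of the lemma (e.g., in the proof of \Cref{lem:consensus_error_second_recursion}) confirms this, so the $x^\star$ on the left-hand side is a typo.
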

The above lemma follows the usual strongly convex analysis of SGD (see, for instance, \cite{stich2019unified}), since we can rely on that between two communication rounds. 

\subsection{Fourth Moment of the Error in Iterates}\label{app:fourth_error_iterate}
It would also be useful to state the following recursion on the fourth moment of the iterates, as the recursion would appear in the analysis of the fourth moment of the consensus error. 
\begin{lemma}\label{lem:iterate_error_fourth_recursion}
    Assume we have a problem instance satisfying \Cref{ass:strongly_convex,ass:smooth_second,ass:smooth_third,ass:stoch_bounded_second_moment,ass:stoch_bounded_fourth_moment,ass:tau}. Then assuming $\eta < \frac{1}{H}$ we have for all $t\in[0,T-1]$,
    \begin{align*}
        B(t+1) &\leq (1-\eta\mu)B(t) + \rb{\frac{\eta H^4}{\mu^3} + \frac{16\eta^3\sigma_2^2Q^2}{\mu M}}D(t) + \frac{8\eta^2\sigma_2^2(1-\eta\mu)}{M}A(t) + \frac{16\eta^3\sigma_2^2\tau^2}{\mu M}C(t) + \frac{9\eta^4\sigma_4^4}{M^2}\enspace.
    \end{align*}
    This also implies that for $r\in[R]$ we have,
    \begin{align*}
        B(Kr) &\leq (1-\eta\mu)^KB(K(r-1)) + \rb{\frac{\eta H^4}{\mu^3} + \frac{16\eta^3\sigma_2^2Q^2}{\mu M}}\sum_{j=K(r-1)}^{Kr-1}(1-\eta\mu)^{Kr-1-j}D(j)\\
        &\quad + \frac{8\eta^2\sigma_2^2}{M}\sum_{j=K(r-1)}^{Kr-1}(1-\eta\mu)^{Kr-j}A(j) + \frac{16\eta^3\sigma_2^2\tau^2}{\mu M}\sum_{j=K(r-1)}^{Kr-1}(1-\eta\mu)^{Kr-1-j}C(j) + \frac{9\eta^3\sigma_4^4}{\mu M^2}\enspace.
    \end{align*}
\end{lemma}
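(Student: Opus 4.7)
Write the single-step update as
\[
x_{t+1} - x^\star \;=\; Y_t + \eta\,\xi_t, \qquad Y_t \;:=\; x_t - x^\star - \frac{\eta}{M}\sum_{m\in[M]}\nabla F_m(x_t^m),
\]
where $Y_t\in\mathcal{H}_t$ and $\mathbb{E}[\xi_t\mid \mathcal{H}_t]=0$. The plan is to expand the fourth power, eliminate the odd terms in $\xi_t$ via conditional expectation, absorb the single non-vanishing odd cross-term by AM--GM, and finally invoke the second-moment machinery from Lemma~\ref{lem:iterate_error_second_recursion} to control $\mathbb{E}[\|Y_t\|^2]$ and the fourth-moment AM--GM (Lemma~\ref{lem:mod_am_gm}) to control $\mathbb{E}[\|Y_t\|^4]$.

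For the expansion, writing $\|Y_t+\eta\xi_t\|^4=(\|Y_t\|^2+2\eta\langle Y_t,\xi_t\rangle+\eta^2\|\xi_t\|^2)^2$ and conditioning on $\mathcal{H}_t$, the two terms linear in $\xi_t$ vanish. The only troublesome term is the asymmetric cross term $4\eta^3\langle Y_t,\xi_t\rangle\|\xi_t\|^2$, for which $\mathbb{E}[\xi_t\|\xi_t\|^2\mid\mathcal{H}_t]$ need not vanish; I would absorb it by AM--GM,
\[
\bigl|4\eta^3\langle Y_t,\xi_t\rangle\|\xi_t\|^2\bigr| \;\leq\; 2\eta^2\langle Y_t,\xi_t\rangle^2 + 2\eta^4\|\xi_t\|^4,
\]
combine with the existing $4\eta^2\langle Y_t,\xi_t\rangle^2$ and $\eta^4\|\xi_t\|^4$ terms, and use $\mathbb{E}[\langle Y_t,\xi_t\rangle^2\mid\mathcal{H}_t]\leq \|Y_t\|^2\sigma_2^2/M$ together with Lemmas~\ref{lem:stoch_noise_second}--\ref{lem:stoch_noise_fourth}. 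This yields
\[
\mathbb{E}\!\left[\|x_{t+1}-x^\star\|^4\,\middle|\,\mathcal{H}_t\right] \;\leq\; \|Y_t\|^4 \;+\; \frac{8\eta^2\sigma_2^2}{M}\|Y_t\|^2 \;+\; \frac{9\eta^4\sigma_4^4}{M^2},
\]
where the coefficients $8$ and $9$ match the target exactly.

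For $\mathbb{E}[\|Y_t\|^4]$, I would decompose $Y_t = U_t + V_t$ with $U_t:=x_t-x^\star-\eta\nabla F(x_t)$ and $V_t:=\eta(\nabla F(x_t)-\frac1M\sum_m\nabla F_m(x_t^m))$, use $\|U_t\|\leq(1-\eta\mu)\|x_t-x^\star\|$ (from $\eta<1/H$ and $\mu I\preceq\nabla^2 F\preceq HI$), and apply the fourth-power AM--GM of Lemma~\ref{lem:mod_am_gm} with $\gamma=(1-\eta\mu)/(\eta\mu)$, so that $(1+\gamma^{-1})^3=(1-\eta\mu)^{-3}$ and $(1+\gamma)^3=(\eta\mu)^{-3}$; the first factor cancels three powers of $(1-\eta\mu)^4$, leaving the required $(1-\eta\mu)B(t)$. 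For $V_t$, the mean-value theorem plus $H$-smoothness give $\|V_t\|^2\leq \eta^2 H^2\cdot\frac1M\sum_m\|x_t-x_t^m\|^2$, so Jensen yields $\|V_t\|^4\leq \eta^4H^4\cdot\frac1M\sum_m\|x_t-x_t^m\|^4\leq \eta^4 H^4\,D(t)$, and the AM--GM weight $(\eta\mu)^{-3}$ produces the $\frac{\eta H^4}{\mu^3}D(t)$ term.

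For $\mathbb{E}[\|Y_t\|^2]$, I would reuse the ``red'' branch inside the proof of Lemma~\ref{lem:iterate_error_second_recursion} (before the $\eta^2\sigma_2^2/M$ noise term is added) to obtain $\mathbb{E}[\|Y_t\|^2]\leq (1-\eta\mu)A(t)+\frac{2\eta Q^2}{\mu}D(t)+\frac{2\eta\tau^2}{\mu}C(t)$. Multiplying by $8\eta^2\sigma_2^2/M$ and substituting into the one-step inequality produces exactly the target right-hand side. The round-wise statement then follows by unrolling the one-step recursion for $K$ consecutive steps $t\in\{K(r-1),\dots,Kr-1\}$: between two communication rounds the updates act coordinate-wise without averaging, so the factor $(1-\eta\mu)$ compounds geometrically and the noise term sums to $\sum_{j=K(r-1)}^{Kr-1}(1-\eta\mu)^{Kr-1-j}\cdot\frac{9\eta^4\sigma_4^4}{M^2}\leq \frac{9\eta^3\sigma_4^4}{\mu M^2}$. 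The main obstacle is the bookkeeping around the odd cross term $\langle Y_t,\xi_t\rangle\|\xi_t\|^2$: the AM--GM split must be chosen so that the combined $\|Y_t\|^2$-coefficient is exactly $8\eta^2\sigma_2^2/M$ and the $\|\xi_t\|^4$-coefficient is exactly $3\eta^4$ (yielding $9\eta^4\sigma_4^4/M^2$ after Lemma~\ref{lem:stoch_noise_fourth}); any looser split would inflate either the $A(t)$ constant or the $\sigma_4^4$ constant beyond the stated bounds.
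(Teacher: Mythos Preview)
Your proposal is correct and follows essentially the same route as the paper: expand $\|Y_t+\eta\xi_t\|^4$, kill the terms linear in $\xi_t$ by conditioning, then apply Lemma~\ref{lem:mod_am_gm} with $\gamma=(1-\eta\mu)/(\eta\mu)$ to both $\|Y_t\|^4$ and $\|Y_t\|^2$, using $H$-smoothness for the fourth-moment $V_t$-term and the ``red'' $(Q,\tau)$-branch of Lemma~\ref{lem:iterate_error_second_recursion} for the second-moment $V_t$-term.

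The one genuine difference is how you treat the odd cross term $4\eta^3\langle Y_t,\xi_t\rangle\|\xi_t\|^2$. The paper bounds it by $4\eta^3\|Y_t\|\,\|\xi_t\|^3$ via Cauchy--Schwarz, takes expectation, invokes the third-moment bound $\mathbb{E}[\|\xi_t\|^3]\leq\sqrt{3}\sigma_4^2\sigma_2/M^{3/2}$ (Lemma~\ref{lem:stoch_noise_third}), and only then applies AM--GM to the product $\sqrt{(3\eta^4\sigma_4^4/M^2)(\eta^2\sigma_2^2\mathbb{E}\|Y_t\|^2/M)}$. Your pointwise split $|4\eta^3\langle Y_t,\xi_t\rangle\|\xi_t\|^2|\leq 2\eta^2\langle Y_t,\xi_t\rangle^2+2\eta^4\|\xi_t\|^4$ is cleaner and avoids the third-moment lemma altogether; it lands on exactly the same constants $8$ and $9$ after combining with the $4\eta^2\langle Y_t,\xi_t\rangle^2+2\eta^2\|Y_t\|^2\|\xi_t\|^2+\eta^4\|\xi_t\|^4$ already present. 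One small quibble: your sentence ``between two communication rounds the updates act coordinate-wise without averaging'' is not the right justification for the unrolling---the one-step inequality already holds for every $t$ regardless of communication, and the round-wise form follows by straightforward iteration---but your conclusion and the noise-sum bound $\sum_{j}(1-\eta\mu)^{Kr-1-j}\cdot 9\eta^4\sigma_4^4/M^2\leq 9\eta^3\sigma_4^4/(\mu M^2)$ are correct.
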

\begin{proof}
    For $t\in[0,T-1]$ we note the following,
    \begin{align*}
        &\ee\sb{\norm{x_{t+1} - x^\star}^4}\\
        &= \ee\sb{\norm{x_t - x^\star - \frac{\eta}{M}\sum_{m\in[M]}\nabla F_m(x_t^m) + \frac{\eta}{M}\sum_{m\in[M]}\xi_{t}^m}^4}\enspace,\\
        &= \ee\Bigg[\bigg(\norm{x_t - x^\star - \frac{\eta}{M}\sum_{m\in[M]}\nabla F_m(x_t^m)}^2 + \norm{\frac{\eta}{M}\sum_{m\in[M]}\xi_{t}^m}^2\\
        &\qquad\qquad\qquad\qquad + 2\eta\inner{x_t - x^\star - \frac{\eta}{M}\sum_{m\in[M]}\nabla F_m(x_t^m)}{\frac{1}{M}\sum_{m\in[M]}\xi_{t}^m}\bigg)^2\Bigg]\enspace,\\
        &= \ee\sb{\norm{x_t - x^\star - \frac{\eta}{M}\sum_{m\in[M]}\nabla F_m(x_t^m)}^4} + \eta^4\ee\sb{\norm{\frac{1}{M}\sum_{m\in[M]}\xi_{t}^m}^4}\\
        &\quad + 4\eta^2\ee\sb{\rb{\inner{x_t - x^\star - \frac{\eta}{M}\sum_{m\in[M]}\nabla F_m(x_t^m)}{\frac{1}{M}\sum_{m\in[M]}\xi_{t}^m}}^2}\\
        &\quad + 2\eta^2\ee\sb{\norm{x_t - x^\star - \frac{\eta}{M}\sum_{m\in[M]}\nabla F_m(x_t^m)}^2\norm{\frac{1}{M}\sum_{m\in[M]}\xi_{t}^m}^2}\\
        &\quad + 4\eta\ee\sb{\norm{x_t - x^\star - \frac{\eta}{M}\sum_{m\in[M]}\nabla F_m(x_t^m)}^2\rb{x_t - x^\star - \frac{\eta}{M}\sum_{m\in[M]}\nabla F_m(x_t^m)}}^T \cancelto{0}{\ee\sb{\frac{1}{M}\sum_{m\in[M]}\xi_{t}^m}}\\
        &\quad + 4\eta^3\ee\sb{\norm{x_t - x^\star - \frac{\eta}{M}\sum_{m\in[M]}\nabla F_m(x_t^m)}\norm{\frac{1}{M}\sum_{m\in[M]}\xi_{t}^m}^3}\enspace,\\
        &\leq^{\text{(Cauchy Shwartz)}} \ee\sb{\norm{x_t - x^\star - \frac{\eta}{M}\sum_{m\in[M]}\nabla F_m(x_t^m)}^4} + \eta^4\ee\sb{\norm{\frac{1}{M}\sum_{m\in[M]}\xi_{t}^m}^4}\\
        &\quad + 4\eta^2\ee\sb{\rb{\norm{x_t - x^\star - \frac{\eta}{M}\sum_{m\in[M]}\nabla F_m(x_t^m)}\norm{\frac{1}{M}\sum_{m\in[M]}\xi_{t}^m}}^2}\\
        &\quad + 2\eta^2\ee\sb{\norm{x_t - x^\star - \frac{\eta}{M}\sum_{m\in[M]}\nabla F_m(x_t^m)}^2\norm{\frac{1}{M}\sum_{m\in[M]}\xi_{t}^m}^2}\\
        &\quad + 4\eta^3\ee\sb{\norm{x_t - x^\star - \frac{\eta}{M}\sum_{m\in[M]}\nabla F_m(x_t^m)}\norm{\frac{1}{M}\sum_{m\in[M]}\xi_{t}^m}^3}\enspace,\\
        &=^{\text{(Tower Rule)}} \ee\sb{\norm{x_t - x^\star - \frac{\eta}{M}\sum_{m\in[M]}\nabla F_m(x_t^m)}^4} + \eta^4\ee\sb{\norm{\frac{1}{M}\sum_{m\in[M]}\xi_{t}^m}^4}\\
        &\quad + 6\eta^2\ee\sb{\ee\sb{\norm{x_t - x^\star - \frac{\eta}{M}\sum_{m\in[M]}\nabla F_m(x_t^m)}^2\norm{\frac{1}{M}\sum_{m\in[M]}\xi_{t}^m}^2\Bigg|\hhh_t}}\\
        &\quad + 4\eta^3\ee\sb{\ee\sb{\norm{x_t - x^\star - \frac{\eta}{M}\sum_{m\in[M]}\nabla F_m(x_t^m)}\norm{\frac{1}{M}\sum_{m\in[M]}\xi_{t}^m}^3\Bigg|\hhh_t}}\enspace,\\
        &=^{\text{(a)}} \ee\sb{\norm{x_t - x^\star - \frac{\eta}{M}\sum_{m\in[M]}\nabla F_m(x_t^m)}^4} + \eta^4\ee\sb{\norm{\frac{1}{M}\sum_{m\in[M]}\xi_{t}^m}^4}\\
        &\quad + 6\eta^2\ee\sb{\norm{x_t - x^\star - \frac{\eta}{M}\sum_{m\in[M]}\nabla F_m(x_t^m)}^2\ee\sb{\norm{\frac{1}{M}\sum_{m\in[M]}\xi_{t}^m}^2\Bigg|\hhh_t}}\\
        &\quad + 4\eta^3\ee\sb{\norm{x_t - x^\star - \frac{\eta}{M}\sum_{m\in[M]}\nabla F_m(x_t^m)}\ee\sb{\norm{\frac{1}{M}\sum_{m\in[M]}\xi_{t}^m}^3\Bigg|\hhh_t}}\enspace,\\
    &\leq^{\text{(\Cref{lem:stoch_noise_second,lem:stoch_noise_fourth,lem:stoch_noise_third}), (b)}} \ee\sb{\norm{x_t - x^\star - \frac{\eta}{M}\sum_{m\in[M]}\nabla F_m(x_t^m)}^4} + \frac{3\eta^4\sigma_4^4}{M^2}\\ 
        &\quad + \frac{6\eta^2\sigma_2^2}{M}\ee\sb{\norm{x_t - x^\star - \frac{\eta}{M}\sum_{m\in[M]}\nabla F_m(x_t^m)}^2}+ \frac{4\sqrt{3}\eta^3\sigma_4^2\sigma_2}{M^{3/2}}\sqrt{\ee\sb{\norm{x_t - x^\star - \frac{\eta}{M}\sum_{m\in[M]}\nabla F_m(x_t^m)}^2}}\enspace,\\
        &= \ee\sb{\norm{x_t - x^\star - \frac{\eta}{M}\sum_{m\in[M]}\nabla F_m(x_t^m)}^4} + \frac{3\eta^4\sigma^4}{M^2}+ \frac{6\eta^2\sigma_2^2}{M}\ee\sb{\norm{x_t - x^\star - \frac{\eta}{M}\sum_{m\in[M]}\nabla F_m(x_t^m)}^2}\\
        &\quad + 4\sqrt{\rb{\frac{3\eta^4\sigma_4^4}{M^2}}\rb{\frac{\eta^2\sigma_2^2}{M}\ee\sb{\norm{x_t - x^\star - \frac{\eta}{M}\sum_{m\in[M]}\nabla F_m(x_t^m)}^2}}}\enspace,\\
        &\leq^{\text{(A.M.-G.M. Inequality)}} \ee\sb{\norm{x_t - x^\star - \frac{\eta}{M}\sum_{m\in[M]}\nabla F_m(x_t^m)}^4} + \frac{9\eta^4\sigma_4^4}{M^2}\\ 
        &\quad + \frac{8\eta^2\sigma_2^2}{M}\ee\sb{\norm{x_t - x^\star - \frac{\eta}{M}\sum_{m\in[M]}\nabla F_m(x_t^m)}^2}\enspace,\\
        &= \ee\sb{\norm{x_t - x^\star - \eta \nabla F(x_t) + \eta \nabla F(x_t) -\frac{\eta}{M}\sum_{m\in[M]}\nabla F_m(x_t^m)}^4} + \frac{9\eta^4\sigma_4^4}{M^2}\\ 
        &\quad + \frac{8\eta^2\sigma_4^2}{M}\ee\sb{\norm{x_t - x^\star - \eta \nabla F(x_t) + \eta \nabla F(x_t) - \frac{\eta}{M}\sum_{m\in[M]}\nabla F_m(x_t^m)}^2}\enspace,\\
        &=^{\text{(c)}} \ee\sb{\norm{\rb{I-\eta\nabla^2F(\hat x_t)}\rb{x_t - x^\star} + \eta \nabla F(x_t) -\frac{\eta}{M}\sum_{m\in[M]}\nabla F_m(x_t^m)}^4} + \frac{9\eta^4\sigma_4^4}{M^2}\\ 
        &\quad + \frac{8\eta^2\sigma_2^2}{M}\ee\sb{\norm{\rb{I-\eta\nabla^2F(\hat x_t)}\rb{x_t - x^\star} + \eta \nabla F(x_t) - \frac{\eta}{M}\sum_{m\in[M]}\nabla F_m(x_t^m)}^2}\enspace,\\
        &\leq^{\text{(\Cref{lem:mod_am_gm}), (d)}} \rb{1 + \frac{\eta\mu}{1-\eta\mu}}^3(1-\eta\mu)^4\ee\sb{\norm{x_t - x^\star}^4}\\ 
        &\quad + \rb{1 + \frac{1-\eta\mu}{\eta\mu}}^3\eta^4\ee\sb{\norm{\frac{1}{M}\sum_{m\in[M]}\rb{\nabla F_m(x_t)- \nabla F_m(x_t^m)}}^4} + \frac{9\eta^4\sigma_4^4}{M^2}\\ 
        &\quad + \frac{8\eta^2\sigma_2^2}{M}\rb{1 + \frac{\eta\mu}{1-\eta\mu}}(1-\eta\mu)^2\ee\sb{\norm{x_t - x^\star}^2}\\
        &\quad + \frac{8\eta^2\sigma_2^2}{M}\rb{1 + \frac{1-\eta\mu}{\eta\mu}}\eta^2\ee\sb{\norm{\frac{1}{M}\sum_{m\in[M]}\rb{\nabla F_m(x_t)- \nabla F_m(x_t^m)}}^2}\enspace,\\
        &\leq^{(\text{Jensen's Inequality)}} (1-\eta\mu)\ee\sb{\norm{x_t - x^\star}^4} + \frac{\eta}{\mu^3M}\sum_{m\in[M]}\ee\sb{\norm{\rb{\nabla F_m(x_t)- \nabla F_m(x_t^m)}}^4} + \frac{9\eta^4\sigma_4^4}{M^2}\\ 
        &\quad + \frac{8\eta^2\sigma_2^2(1-\eta\mu)}{M}\ee\sb{\norm{x_t-x^\star}^2} + \frac{8\eta^3\sigma_2^2}{\mu M}\ee\sb{\norm{\frac{1}{M}\sum_{m\in[M]}\rb{\nabla F_m(x_t)- \nabla F_m(x_t^m)}}^2}\enspace,\\
        &\leq^{\text{(\Cref{ass:smooth_second}), (d)}} (1-\eta\mu)B(t) + \frac{\eta H^4}{\mu^3}D(t) + \frac{8\eta^2\sigma_2^2(1-\eta\mu)}{M}A(t) + \frac{9\eta^4\sigma_4^4}{M^2}\\
        &\quad + \frac{8\eta^3\sigma_2^2}{\mu M}\ee\sb{\norm{\frac{1}{M}\sum_{m\in[M]}\rb{\nabla^2 F_m(\hat x_t^m) -\nabla^2 F_m(x_t) + \nabla^2 F_m(x_t) - \nabla^2 F(x_t)}(x_t - x_t^m)}^2}\enspace,\\
        &\leq^{\text{(\Cref{ass:smooth_third,ass:tau}), (e)}} (1-\eta\mu)B(t) + \frac{\eta H^4}{\mu^3}D(t) + \frac{8\eta^2\sigma_2^2(1-\eta\mu)}{M}A(t) + \frac{9\eta_4^4\sigma^4}{M^2}\\
        &\quad + \frac{8\eta^3\sigma_2^2}{\mu M}\rb{\frac{2Q^2}{M}\sum_{m\in[M]}\ee\sb{\norm{x_t-x_t^m}^4} + \frac{2\tau^2}{M}\sum_{m\in[M]}\ee\sb{\norm{x_t-x_t^m}^2}}\enspace,\\
        &\leq^{\text{(Jensen's Inequality)}} (1-\eta\mu)B(t) + \frac{\eta H^4}{\mu^3}D(t) + \frac{8\eta^2\sigma_2^2(1-\eta\mu)}{M}A(t)\\
        &\quad + \frac{9\eta^4\sigma_4^4}{M^2}+ \frac{8\eta^3\sigma_2^2}{\mu M}\rb{2Q^2D(t) + 2\tau^2C(t)}\enspace,\\
        &= (1-\eta\mu)B(t) + \rb{\frac{\eta H^4}{\mu^3} + \frac{16\eta^3\sigma_2^2Q^2}{\mu M}}D(t) + \frac{8\eta^2\sigma_2^2(1-\eta\mu)}{M}A(t) + \frac{16\eta^3\sigma_2^2\tau^2}{\mu M}C(t) + \frac{9\eta^4\sigma_4^4}{M^2}\enspace,
    \end{align*}
    where in (a) we used the fact that $x_t - x^\star - \frac{\eta}{M}\sum_{m\in[M]}\nabla F_m(x_t^m)\in m\hhh_t$; in (b) we used the Jensen's inequality $\ee\sb{\norm{y}}\leq \sqrt{\ee\sb{\norm{y}^2}}$; in (c) we use mean value theorem to conclude that there exists some $\hat x_t$ which is a convex combination of $x_t$ and $x^\star$ such that $\nabla F(x_t) = \nabla F(x^\star) + \nabla^2 F(\hat{x}_t)\rb{x_t - x_\star}$; in (d) we apply mean value theorem to find a $\hat x_t^m$ which is a convex combination of $x_t$ and $x_t^m$ such that $\nabla F_m(x_t) = \nabla F_m(x_t^m) + \nabla^2 F_m(\hat x_t^m)\cdot\rb{x_t-x_t^m}$; and in (e) we used the fact that $\norm{\hat x_t^m - x_t} \leq \norm{x_t^m - x_t}$. This finishes the proof of the first statement of the lemma. By letting $t+1=Kr$ for some $r\in[R]$, and unrolling till the previous communication round we get,
    \begin{align*}
        B(Kr) &\leq (1-\eta\mu)B(Kr-1) + \rb{\frac{\eta H^4}{\mu^3} + \frac{16\eta^3\sigma_2^2Q^2}{\mu M}}D(Kr-1)\\
        &\quad + \frac{8\eta^2\sigma_2^2(1-\eta\mu)}{M}A(Kr-1) + \frac{16\eta^3\sigma_2^2\tau^2}{\mu M}C(Kr-1) + \frac{9\eta^4\sigma_4^4}{M^2}\enspace,\\
        &\leq (1-\eta\mu)^KB(K(r-1)) + \rb{\frac{\eta H^4}{\mu^3} + \frac{16\eta^3\sigma_2^2Q^2}{\mu M}}\sum_{j=K(r-1)}^{Kr-1}(1-\eta\mu)^{Kr-1-j}D(j)\\
        &\quad + \frac{8\eta^2\sigma_2^2}{M}\sum_{j=K(r-1)}^{Kr-1}(1-\eta\mu)^{Kr-j}A(j) + \frac{16\eta^3\sigma_2^2\tau^2}{\mu M}\sum_{j=K(r-1)}^{Kr-1}(1-\eta\mu)^{Kr-1-j}C(j) + \frac{9\eta^3\sigma_4^4}{\mu M^2}\enspace,
    \end{align*}
    where in the last inequality for the last term we used that $\sum_{j=K(r-1)}^{Kr-1}(1-\eta\mu)^{Kr-1-j} \leq \frac{1- (1-\eta\mu)^K}{\eta\mu}\leq \frac{1}{\eta\mu}$. This proves the second statement of the lemma. 
\end{proof}

It would also be helpful to state the following lemma, which talks about the convergence on individual machines between two communication rounds measuring the fourth moment of the error. 
\begin{lemma}[Single Machine SGD Fourth Moment]\label{lem:single_fourth_recursion}
    For any machine $m\in[M]$, for $t\in[0,T]$, and for $k\geq0$ we have the following for $\eta< \frac{1}{H}$,
    \begin{align*}
        \ee\sb{\norm{x_{\delta(t)+k}^m - x^\star}^4} &\leq (1-\eta\mu)^{4k}\ee\sb{\norm{x_{\delta(t)} - x_m^\star}^4} + 8\eta^2\sigma_2^2k(1-\eta\mu)^{2k}\ee\sb{\norm{x_{\delta(t)} - x_m^\star}^2} + \frac{11\eta^2\sigma_4^4}{\mu^2}\enspace.
    \end{align*}
    We can also get the following simpler bound,
    \begin{align*}
        \ee\sb{\norm{x_{\delta(t)+k}^m - x^\star}^4} &\leq (1-\eta\mu)^{3k}\ee\sb{\norm{x_{\delta(t)} - x_m^\star}^4}+ \frac{16\eta\sigma_4^4}{\mu^3}\enspace.
    \end{align*}
\end{lemma}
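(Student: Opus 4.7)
The plan is to derive a one-step recursion for $\ee[\|x_{k+1}^m - x_m^\star\|^4]$ in the spirit of Lemma~\ref{lem:iterate_error_fourth_recursion}, then unroll it using Lemma~\ref{lem:single_second_recursion} for the second-moment terms that appear. Since only a single machine is involved, no heterogeneity ($\tau$, $Q$) or consensus-error terms arise, so the derivation is substantially cleaner than that of Lemma~\ref{lem:iterate_error_fourth_recursion}. First I would write $x_{k+1}^m - x_m^\star = A_k(x_k^m - x_m^\star) + \eta\xi_k^m$ via the mean-value theorem, where $A_k = I - \eta\nabla^2 F_m(\hat x_k^m)$ satisfies $\|A_k\|\le 1-\eta\mu$ by \Cref{ass:strongly_convex,ass:smooth_second} together with $\eta<1/H$. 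Expanding $\|\cdot\|^4 = (\|\cdot\|^2)^2$ produces six terms; the odd cross-terms of the form $\ee[\|A_k(x_k^m - x_m^\star)\|^2 \langle A_k(x_k^m - x_m^\star), \xi_k^m\rangle\mid \hhh_k]$ vanish by the Tower Rule and the zero-mean property of $\xi_k^m$; the surviving cross-term $\ee[\|\xi_k^m\|^2 \langle A_k(x_k^m - x_m^\star), \xi_k^m\rangle\mid \hhh_k]$ is controlled by Cauchy--Schwarz and the third-moment bound $\ee[\|\xi_k^m\|^3]\le \sigma_2\sigma_4^2$ (a direct single-machine specialization of Lemma~\ref{lem:stoch_noise_third}); finally an AM--GM step absorbs it into the other terms. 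This yields the one-step recursion
$$\ee[\|x_{k+1}^m - x_m^\star\|^4] \le (1-\eta\mu)^4 \ee[\|x_k^m - x_m^\star\|^4] + 8\eta^2(1-\eta\mu)^2\sigma_2^2\,\ee[\|x_k^m - x_m^\star\|^2] + 3\eta^4\sigma_4^4.$$

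For the first bound, I would unroll this recursion from $\delta(t)$ to $\delta(t)+k$ and substitute the second-moment bound from Lemma~\ref{lem:single_second_recursion}. The $(1-\eta\mu)^{4k}$ contraction on $\ee[\|x_{\delta(t)} - x_m^\star\|^4]$ falls out immediately. Splitting the cross-term into an $A_0$-dependent piece and a $\eta\sigma_2^2/\mu$ tail, the $A_0$-piece produces a sum $\sum_{j=0}^{k-1}(1-\eta\mu)^{4(k-1-j) + 2j + 2}$ whose exponent is minimized at $j=k-1$ (value $(1-\eta\mu)^{2k}$); bounding every term by this maximum yields the $k(1-\eta\mu)^{2k}$ factor. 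The constant-noise tail together with the $3\eta^4\sigma_4^4$ term is handled by $\sum_{j\ge0}(1-\eta\mu)^{4j}\le 1/(\eta\mu)$ and the elementary inequality $\sigma_2\le\sigma_4$ (Jensen), giving a tail of $11\eta^2\sigma_4^4/\mu^2$ once constants are tracked.

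For the second, looser bound, the idea is to absorb the mixed-moment term in the one-step recursion directly into the fourth-moment contraction. Applying Young's inequality with $\epsilon = \eta\mu/(1-\eta\mu)$ to the cross term gives $8\eta^2(1-\eta\mu)^2\sigma_2^2\|x_k^m - x_m^\star\|^2 \le \eta\mu(1-\eta\mu)^3\|x_k^m - x_m^\star\|^4 + 16\eta^3(1-\eta\mu)\sigma_2^4/\mu$, and since $(1-\eta\mu)^4 + \eta\mu(1-\eta\mu)^3 = (1-\eta\mu)^3$, the contraction factor sharpens to $(1-\eta\mu)^3$ at the cost of an additive $O(\eta^3\sigma_4^4/\mu)$ term. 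Unrolling and using $\sum_j(1-\eta\mu)^{3j}\le 1/(\eta\mu)$ and $\sigma_2\le\sigma_4$ yields the claimed simpler bound, with the $\mu^{-3}$ factor coming from combining the Young's-induced $1/\mu$ with the geometric-sum $1/(\eta\mu)$ and a further loose step $\eta^2\le \eta/\mu$ (valid under $\eta<1/H\le 1/\mu$). The main obstacle throughout is the cubic cross-term $\ee[\|\xi_k^m\|^2\langle\cdot,\xi_k^m\rangle]$, which fails to vanish in expectation and must be handled carefully via Cauchy--Schwarz together with a Young's step that preserves the favorable $(1-\eta\mu)^4$ (or $(1-\eta\mu)^3$) contraction; the remainder of the argument is essentially bookkeeping, exactly as in Lemma~\ref{lem:iterate_error_fourth_recursion}.
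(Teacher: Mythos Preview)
Your derivation of the one-step recursion and of the first bound are essentially identical to the paper's. For the second (simpler) bound, however, the paper takes a different route: rather than applying Young's inequality to the cross-term to degrade the contraction from $(1-\eta\mu)^4$ to $(1-\eta\mu)^3$ at the one-step level, the paper uses Jensen's inequality $\ee[\|x\|^2]\le\sqrt{\ee[\|x\|^4]}$ (together with $\sigma_2\le\sigma_4$ and loosening $3\eta^4\sigma_4^4$ to $16\eta^4\sigma_4^4$) to rewrite the one-step recursion as a perfect square,
\[
\ee\bigl[\|x_{t+1}^m-x_m^\star\|^4\bigr]\le\Bigl((1-\eta\mu)^2\sqrt{\ee[\|x_t^m-x_m^\star\|^4]}+4\eta^2\sigma_4^2\Bigr)^2,
\]
then takes square roots, unrolls the resulting linear recursion on $\sqrt{\ee[\|x\|^4]}$, and finally squares back via \Cref{lem:mod_am_gm} with $\gamma=(1-\eta\mu)/(\eta\mu)$. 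Your approach is arguably more direct (no square-root detour), but the paper's completing-the-square trick yields the exact constant $16$ in the residual $\tfrac{16\eta\sigma_4^4}{\mu^3}$, whereas your Young's step leaves an extra $3\eta^4\sigma_4^4$ term that, after the geometric sum and the loosening $\eta^2\le\eta/\mu$, pushes the constant to $19$. Both arguments are correct up to this minor constant discrepancy.
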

\begin{proof}
    For any machine $m\in[M]$ note the following for $t\geq\delta(t)$,
    \begin{align*}
        &\ee\sb{\norm{x_{t+1}^m - x_m^\star}^4}\\
        &= \ee\sb{\norm{x_t^m - x_m^\star - \eta \nabla F_m(x_t^m)  + \eta \xi_{t}^m}^4}\enspace,\\
        &= \ee\sb{\rb{\norm{x_t^m - x_m^\star - \eta \nabla F_m(x_t^m)}^2 + \eta^2\norm{\xi_{t}^m}^2 + 2\eta\inner{x_t^m - x_m^\star - \eta \nabla F_t^m(x_t^m)}{\xi_{t}^m}}^2}\enspace,\\
        &\leq \ee\sb{\norm{x_t^m - x_m^\star - \eta \nabla F_m(x_t^m)}^4} + \eta^4\ee\sb{\norm{\xi_{t}^m}^4} + 4\eta^2\ee\sb{\norm{x_t^m - x_m^\star - \eta \nabla F_m(x_t^m)}^2}\ee\sb{\norm{\xi_{t}^m}^2}\\
        &\quad + 2\eta^2\ee\sb{\norm{x_t^m - x_m^\star - \eta \nabla F_m(x_t^m)}^2}\ee\sb{\norm{\xi_{t}^m}^2} + 4\eta^3\ee\sb{\norm{x_t^m - x_m^\star - \eta \nabla F_m(x_t^m)}}\ee\sb{\norm{\xi_{t}^m}^3}\\
        &\quad + 4\eta\ee\sb{\norm{x_t^m - x_m^\star - \eta \nabla F_m(x_t^m)}^2(x_t^m - x_m^\star - \eta \nabla F_m(x_t^m))^T}\cancelto{0}{\ee\sb{\xi_{t}^m}}\enspace,\\
        &\leq^{\text(a)} \ee\sb{\norm{x_t^m - x_m^\star - \eta \nabla F_m(x_t^m)}^4} + \eta^4\sigma_4^4 + 6\eta^2\sigma_2^2\ee\sb{\norm{x_t^m - x_m^\star - \eta \nabla F_m(x_t^m)}^2}\\
        &\quad + 4\eta^3\sigma_4^2\sigma_2\ee\sb{\norm{x_t^m - x_m^\star - \eta \nabla F_m(x_t^m)}}\enspace,\\
        &\leq^{\text{(Jensen's Inequality)}} \ee\sb{\norm{x_t^m - x_m^\star - \eta \nabla F_m(x_t^m)}^4}  + 6\eta^2\sigma_2^2\ee\sb{\norm{x_t^m - x_m^\star - \eta \nabla F_m(x_t^m)}^2}\\
        &\quad + 4\eta^3\sigma_4^2\sigma_2\sqrt{\ee\sb{\norm{x_t^m - x_m^\star - \eta \nabla F_m(x_t^m)}^2}} + \eta^4\sigma_4^4\enspace,\\
        &\leq^{\text{(A.M.-G.M. Inequality)}} \ee\sb{\norm{x_t^m - x_m^\star - \eta \nabla F_m(x_t^m)}^4}  + 6\eta^2\sigma_2^2\ee\sb{\norm{x_t^m - x_m^\star - \eta \nabla F_m(x_t^m)}^2}\\
        &\quad + 4\eta^3\rb{\frac{\eta\sigma_4^4}{2} + \frac{\sigma_2^2}{2\eta}\ee\sb{\norm{x_t^m - x_m^\star - \eta \nabla F_m(x_t^m)}}^2} + \eta^4\sigma_4^4\enspace,\\
        &= \ee\sb{\norm{x_t^m - x_m^\star - \eta \nabla F_m(x_t^m)}^4} + 3\eta^4\sigma_4^4 + 8\eta^2\sigma_2^2\ee\sb{\norm{x_t^m - x_m^\star - \eta \nabla F_m(x_t^m)}^2},\\
        &\leq^{\text{(b)}} \textcolor{red}{(1-\eta\mu)^4\ee\sb{\norm{x_t^m - x_m^\star}^4} + 3\eta^4\sigma_4^4 + 8\eta^2\sigma_2^2(1-\eta\mu)^2\ee\sb{\norm{x_t^m - x_m^\star}^2}}\enspace,\\
        &\leq^{\text{(\Cref{lem:single_second_recursion})}} (1-\eta\mu)^4\ee\sb{\norm{x_t^m - x_m^\star}^4} + 3\eta^4\sigma_4^4\\
        &\quad + 8\eta^2\sigma_2^2(1-\eta\mu)^2\rb{(1-\eta\mu)^{2(t-\delta(t))}\ee\sb{\norm{x_{\delta(t)} - x_m^\star}^2} + \frac{\eta\sigma_2^2}{\mu}}\enspace,\\
        &= (1-\eta\mu)^4\ee\sb{\norm{x_t^m - x_m^\star}^4} + 3\eta^4\sigma_4^4 + 8\eta^2\sigma_2^2(1-\eta\mu)^{2(t+1-\delta(t))}\ee\sb{\norm{x_{\delta(t)} - x_m^\star}^2} + \frac{8\eta^3\sigma_2^4(1-\eta\mu)^2}{\mu} \enspace,\\
        &\leq^{\text{(c)}} (1-\eta\mu)^4\ee\sb{\norm{x_t^m - x_m^\star}^4} + 8\eta^2\sigma_2^2(1-\eta\mu)^{2(t+1-\delta(t))}\ee\sb{\norm{x_{\delta(t)} - x_m^\star}^2} + \frac{11\eta^3\sigma_4^4}{\mu} \enspace,\\
        &= (1-\eta\mu)^{4(t+1-\delta(t))}\ee\sb{\norm{x_{\delta(t)}^m - x_m^\star}^4} + \frac{11\eta^2\sigma_4^4}{\mu^2}\\
        &\quad + 8\eta^2\sigma_2^2\sum_{j=\delta(t)}^{t}(1-\eta\mu)^{4(t-j)}(1-\eta\mu)^{2(j+1-\delta(t))}\ee\sb{\norm{x_{\delta(t)} - x_m^\star}^2}\enspace,\\
        &\leq (1-\eta\mu)^{4(t+1-\delta(t))}\ee\sb{\norm{x_{\delta(t)} - x_m^\star}^4} + 8\eta^2\sigma_2^2\ee\sb{\norm{x_{\delta(t)} - x_m^\star}^2}\sum_{j=\delta(t)}^{t}(1-\eta\mu)^{2(t+1-\delta(t))} + \frac{11\eta^2\sigma_4^4}{\mu^2}\enspace,\\
        &\leq (1-\eta\mu)^{4(t+1-\delta(t))}\ee\sb{\norm{x_{\delta(t)} - x_m^\star}^4}\\
        &\quad + 8\eta^2\sigma_2^2\rb{t+1-\delta(t)}(1-\eta\mu)^{2(t+1-\delta(t))}\ee\sb{\norm{x_{\delta(t)} - x_m^\star}^2} + \frac{11\eta^2\sigma_4^4}{\mu^2}\enspace,
    \end{align*}
    where in (a) we used the fact that $\ee\sb{\norm{\xi_t^m}^3}\leq \sqrt{\ee\sb{\norm{\xi_t^m}^2}\ee\sb{\norm{\xi_t^m}^4}}$ and \Cref{ass:stoch_bounded_second_moment,ass:stoch_bounded_fourth_moment}; in (b) we used that $\|x_t^m - \nabla F_m(x_t^m) - x_m^\star\| \leq (1-\eta\mu)\norm{x_t^m - x_m^\star}$ for $\eta < \frac{1}{H}$; in (c) we use that $\eta < \frac{1}{H} \leq \frac{1}{\mu}$ which implies that $3\eta^4\sigma_4^4 \leq \frac{3\eta^3\sigma_4^4}{\mu}$ and that $\sigma_2\leq \sigma_4$. We gave the above analysis for $t+1>\delta(t)$, thus it can be translated for $k>0$ as follows,
    \begin{align*}
        \ee\sb{\norm{x_{\delta(t)+k}^m - x_m^\star}^4} &\leq (1-\eta\mu)^{4k}\ee\sb{\norm{x_{\delta(t)} - x_m^\star}^4} + 8\eta^2\sigma_2^2k(1-\eta\mu)^{2k}\ee\sb{\norm{x_{\delta(t)} - x_m^\star}^2} + \frac{11\eta^2\sigma_4^4}{\mu^2}\enspace.
    \end{align*}
    Since when $k=0$, this is still a valid upper bound, we have proven the first lemma statement. To get the simpler upper bound, we will complete the square, proceeding from the red term in the above analysis as follows,
    \begin{align*}
        \ee\sb{\norm{x_{t+1}^m - x_m^\star}^4} &\leq (1-\eta\mu)^4\ee\sb{\norm{x_t^m - x_m^\star}^4} + 3\eta^4\sigma_4^4 + 8\eta^2\sigma_2^2(1-\eta\mu)^2\ee\sb{\norm{x_t^m - x_m^\star}^2}\enspace,\\
        &\leq^{\text{(Jensen's Inequality), (a)}} (1-\eta\mu)^4\ee\sb{\norm{x_t^m - x_m^\star}^4} + 16\eta^4\sigma_4^4\\
        &\quad + 8\eta^2\sigma_4^2(1-\eta\mu)^2\sqrt{\ee\sb{\norm{x_t^m - x_m^\star}^4}}\enspace,\\
        &= \rb{(1-\eta\mu)^2\sqrt{\ee\sb{\norm{x_t^m - x_m^\star}^4}} + 4\eta^2\sigma_4^2}^2\enspace,
    \end{align*}
    where in (a) we used $\sigma_2\leq \sigma_4$. Taking the square root of both sides, we get,
    \begin{align*}
        \sqrt{\ee\sb{\norm{x_{t+1}^m - x_m^\star}^4}} &\leq (1-\eta\mu)^2\sqrt{\ee\sb{\norm{x_t^m - x_m^\star}^4}} + 4\eta^2\sigma_4^2\enspace,\\
        &\leq (1-\eta\mu)^{2(t+1-\delta(t))}\sqrt{\ee\sb{\norm{x_{\delta(t)}^m - x_m^\star}^4}} + \frac{4\eta\sigma_4^2}{\mu}\enspace.
    \end{align*}
    Finally using \Cref{lem:mod_am_gm} and taking a whole square we get,
    \begin{align*}
        \ee\sb{\norm{x_{t+1}^m - x_m^\star}^4} &\leq \rb{1 + \frac{\eta\mu}{1-\eta\mu}}(1-\eta\mu)^{4(t+1-\delta(t))}\ee\sb{\norm{x_{\delta(t)}^m - x_m^\star}^4} + \rb{1 + \frac{1-\eta\mu}{\eta\mu}}\frac{16\eta^2\sigma_4^4}{\mu^2}\enspace,\\
        &\leq (1-\eta\mu)^{3(t+1-\delta(t))}\ee\sb{\norm{x_{\delta(t)}^m - x_m^\star}^4} + \frac{16\eta\sigma_4^4}{\mu^3}\enspace. 
    \end{align*}
    We proved this for $t+1>\delta(t)$, but clearly it also holds when $t+1=\delta(t)$, which implies that for all $k\geq 0$,
    \begin{align*}
        \ee\sb{\norm{x_{\delta(t)+k}^m - x^\star}^4} &\leq (1-\eta\mu)^{3k}\ee\sb{\norm{x_{\delta(t)} - x_m^\star}^4}+ \frac{16\eta\sigma_4^4}{\mu^3}\enspace,
    \end{align*}
    thus finishing the proof of the lemma.
\end{proof}

\subsection{Function Value Error}\label{app:function_value}
The main result of this sub-section relates $E(\cdot)$ to $C(\cdot)$ and $D(\cdot)$.
\begin{lemma}[Section D.4, \citet{patel2024limits}]\label{lem:function_error_hard}
    Assume we have a problem instance satisfying \Cref{ass:strongly_convex,ass:smooth_second,ass:smooth_third,ass:stoch_bounded_second_moment,ass:tau}. Then assuming $\eta \leq \frac{1}{2H}$ we have for all $t\in[0,T-1]$,
    \begin{align*}
        E(t) \leq \rb{\frac{1}{\eta} - \frac{\mu}{2}} \ee \sb{\norm{x_t-x^\star}^2} - \frac{1}{\eta}\ee \sb{\norm{x_{t+1}-x^\star}^2}+ \frac{6\tau^2}{\mu}C(t) + \frac{6Q^2 }{\mu}D(t) + \frac{\eta \sigma_2^2}{M} \enspace .
    \end{align*}
\end{lemma}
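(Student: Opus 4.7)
The plan is to derive the inequality by analyzing a single step of Local SGD, decomposing the aggregate gradient into an ideal component along $\nabla F(x_t)$ and a drift term capturing the effect of local updates, and then applying strong convexity and smoothness to the ideal component while using \Cref{ass:smooth_third,ass:tau} to control the drift. Concretely, I would start from the update $x_{t+1} = x_t - \eta g_t$ where $g_t = \frac{1}{M}\sum_{m\in[M]} g_t^m$. Using the tower rule over $\hhh_t$, the fact that $\xi_{t}^m$ are zero-mean conditioned on $\hhh_t$, and \Cref{lem:stoch_noise_second}, I isolate the stochastic-noise contribution $\frac{\eta^2\sigma_2^2}{M}$ and reduce the task to bounding $\ee\sb{\|x_t - x^\star - \eta \bar v\|^2}$, where $\bar v := \frac{1}{M}\sum_{m}\nabla F_m(x_t^m)$.

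Next, I would write $\bar v = \nabla F(x_t) + w$ with $w := \bar v - \nabla F(x_t)$, and bound the step using Young's inequality with carefully chosen parameter $\eta\mu/2$:
\begin{align*}
\|x_t - x^\star - \eta \bar v\|^2 \leq \rb{1 + \tfrac{\eta\mu}{2}}\|x_t - x^\star - \eta \nabla F(x_t)\|^2 + \rb{\tfrac{2\eta}{\mu} + \eta^2}\|w\|^2\enspace.
\end{align*}
For the pure gradient-descent piece, I combine strong convexity (giving $\inner{\nabla F(x_t)}{x_t - x^\star} \geq E(t) + \tfrac{\mu}{2}\|x_t - x^\star\|^2$) with the self-bounding property $\|\nabla F(x_t)\|^2 \leq 2H E(t)$ from \Cref{rem:self_bounding}. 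This yields
\begin{align*}
\|x_t - x^\star - \eta \nabla F(x_t)\|^2 \leq (1-\eta\mu)\|x_t - x^\star\|^2 - 2\eta(1-\eta H)E(t) \leq (1-\eta\mu)\|x_t - x^\star\|^2 - \eta E(t)\enspace,
\end{align*}
where the last step uses the assumption $\eta \leq \frac{1}{2H}$. Multiplying by $(1+\eta\mu/2)$ still preserves both a $(1-\eta\mu/2)\|x_t-x^\star\|^2$ term and a $-\eta E(t)$ term (since $E(t) \geq 0$ and $(1+\eta\mu/2)(1-\eta\mu) \leq 1 - \eta\mu/2$).

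For the drift term $w$, I would use the mean-value theorem to write $\nabla F_m(x_t^m) - \nabla F_m(x_t) = \nabla^2 F_m(\hat x_t^m)(x_t^m - x_t)$ for some $\hat x_t^m$ on the segment between $x_t$ and $x_t^m$. By adding and subtracting $\nabla^2 F_m(x_t)$ and $\nabla^2 F(x_t)$ and noting that $\frac{1}{M}\sum_m \nabla^2 F(x_t)(x_t^m - x_t) = 0$, I split $w = w_1 + w_2$ with $w_1$ controlled by \Cref{ass:smooth_third} and $w_2$ controlled by \Cref{ass:tau}. A short computation using Jensen's inequality then gives $\ee\sb{\|w_1\|^2} \leq Q^2 D(t)$ and $\ee\sb{\|w_2\|^2} \leq \tau^2 C(t)$, so that $\ee\sb{\|w\|^2} \leq 2Q^2 D(t) + 2\tau^2 C(t)$. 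Since $\eta \leq \frac{1}{2H} \leq \frac{1}{2\mu}$ implies $\frac{2\eta}{\mu} + \eta^2 \leq \frac{3\eta}{\mu}$, the drift contribution becomes at most $\frac{6\eta Q^2}{\mu}D(t) + \frac{6\eta\tau^2}{\mu}C(t)$. Rearranging and dividing by $\eta$ gives the claimed bound.

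The main obstacle is the constant on $E(t)$: the naive bound $\|\bar v\|^2 \leq 2\|\nabla F(x_t)\|^2 + 2\|w\|^2$ forces the coefficient of $E(t)$ to become $-2\eta + 4\eta^2 H$, which only exceeds $\eta$ when $\eta \leq \frac{1}{4H}$, not $\frac{1}{2H}$. The fix, which is the key trick in the proof, is to expand $\|x_t - x^\star - \eta\bar v\|^2$ around $x_t - \eta \nabla F(x_t)$ via Young's inequality with parameter $\eta\mu/2$, so that strong convexity and self-bounding act solely on the pure gradient-descent term and preserve the $-\eta E(t)$ contribution under the weaker step-size condition.
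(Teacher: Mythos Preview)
Your proposal is correct and follows essentially the same approach as the paper's proof: both apply Young's inequality with parameter $\eta\mu/2$ to isolate the pure gradient-descent piece $\|x_t - x^\star - \eta\nabla F(x_t)\|^2$, use strong convexity plus the self-bounding property under $\eta \leq \tfrac{1}{2H}$ to extract the $-\eta E(t)$ term, and then control the drift via the mean-value theorem together with \Cref{ass:smooth_third,ass:tau}. The only cosmetic difference is that the paper first splits the drift at the gradient level (separating $\nabla F_m - \nabla F$ from $\nabla F(x_t^m) - \nabla F(x_t)$) and applies the Hessian argument only to the second piece, whereas you go directly to Hessians and split $\nabla^2 F_m(\hat x_t^m) - \nabla^2 F_m(x_t)$ from $\nabla^2 F_m(x_t) - \nabla^2 F(x_t)$; both routes yield the same constants.
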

\begin{proof}
The proof follows a similar approach to \cite{yuan2020federated}. Starting with the distance from the optimal point and taking the conditional expectation on the previous iterate $x^m_t, \forall m \in [M]$, we have: 
\begin{align}
    &\mathbb{E} \sb{\norm{ x_{t+1} - x^{\star}}^2|\hhh_t} \nonumber\\
    &= \mathbb{E} \sb{\norm{ x_{t} - x^{\star} - \frac{\eta}{M} \sum_{m=1} ^{M} \nabla F_m(x^m_t) + \frac{\eta}{M} \sum_{m=1} ^{M} \nabla F_m(x^m_t) - \frac{\eta}{M} \sum_{m=1} ^{M} g^m_t}^2|\hhh_t}\enspace, \nonumber\\
    &\leq^{(\text{\Cref{lem:stoch_noise_second}})}  \norm{ x_{t} - x^{\star} - \frac{\eta}{M} \sum_{m=1} ^{M} \nabla F_m(x^m_t)}^2 + \frac{\eta^2 \sigma_2^2}{M}\enspace, \nonumber\\
    &= \norm{x_{t} - x^{\star} - \eta \nabla F(x_t) + \eta \nabla F(x_t) - \frac{\eta}{M} \sum_{m=1} ^{M} \nabla F_m(x^m_t)}^2 + \frac{\eta^2 \sigma_2^2}{M}\enspace, \nonumber\\
    &\leq^{(\text{\Cref{lem:mod_am_gm}})} \left(1+\frac{\eta \mu}{2}\right)\norm{x_{t} - x^{\star} - \eta \nabla F(x_t)}^2 + \eta^2\left(1+\frac{2}{\eta \mu}\right) \norm{\nabla F(x_t) - \frac 1 M \sum_{m=1} ^{M} \nabla F_m(x^m_t)}^2+ \frac{\eta^2 \sigma_2^2}{M}  \enspace. \label{eq: first decrease lemma}
\end{align}
For the first term in (\ref{eq: first decrease lemma}) we have: 
\begin{align*}
    \norm{ x_{t} - x^{\star} - \eta \nabla F( x_t)}^2 = \norm{ x_{t} - x^{\star}}^2 + \eta^2 \norm{\nabla F(x_t)}^2 -2 \eta \Bigr \langle  x_{t} - x^{\star}, \nabla F( x_t)  \Bigr \rangle \,.
\end{align*}
For the second term in the above equation, we have: 
\begin{align*}
    \eta^2 \norm{\nabla F(x_t)}^2 &= \eta^2 \norm{\nabla F(x_t) - \nabla F(x^{\star})}^2 \\
    &\leq^{(\text{\Cref{ass:smooth_second,rem:self_bounding}})} 2H\eta^2 \Bigr[  F(x_t) - F(x^{\star}) \Bigr] \,.
\end{align*}
For the third term in the equality, we have using strong convexity, i.e., \Cref{ass:strongly_convex}: 
\begin{align*}
    -2 \eta \Bigr \langle x_{t} - x^{\star}, \nabla F( x_t)  \Bigr \rangle \leq - 2\eta \Bigr[ F( x_t) - F(x^{\star}) \Bigr] - \eta \mu  \norm{x_t - x^{\star}}^2\enspace.
\end{align*}
Now by putting everything together, we have: 
\begin{align*}
    \norm{x_{t} - x^{\star} - \eta \nabla F(x_t)}^2 &= \norm{x_{t} - x^{\star}}^2 + \eta^2 \norm{\nabla F(\bar x)}^2 -2 \eta \Bigr \langle x_{t} - x^{\star}, \nabla F(x_t)  \Bigr \rangle  \enspace,\\
    &\leq \norm{x_{t} - x^{\star}}^2 + 2H\eta^2 \Bigr[  F(x_t) - F(x^{\star}) \Bigr] - 2\eta \Bigr[ F(x_t) - F(x^{\star}) \Bigr] - \eta \mu  \norm{ x_t - x^{\star}}^2 \enspace.
\end{align*}
With the choice of $\eta \leq \frac{1}{2H}$ we have:
\begin{align*}
    \norm{x_{t} - x^{\star} - \eta \nabla F(x_t)}^2 \leq (1-\eta \mu) \norm{x_{t} - x^{\star}}^2 - \eta \Bigr[ F(x_t) - F(x^{\star}) \Bigr] \enspace.
\end{align*}
Multiplying both sides by $(1+\frac{\eta \mu}{2})$ we have: 
\begin{align*}
    \left(1+\frac{\eta \mu}{2}\right)\norm{ x_{t} - x^{\star} - \eta \nabla F( x_t)}^2 &\leq \left(1+\frac{\eta \mu}{2}\right)(1-\eta \mu) \norm{x_{t} - x^{\star}}^2 - \eta \left(1+\frac{\eta \mu}{2}\right) \Bigr[ F(x_t) - F(x^{\star}) \Bigr] \enspace,\\
    &\leq \left(1-\frac{\eta \mu}{2}\right)\norm{x_{t} - x^{\star}}^2 - \eta \Bigr[ F(x_t) - F(x^{\star}) \Bigr]\enspace.
\end{align*}
For the second term in (\ref{eq: first decrease lemma}) we have: 
\begin{align*}
    &\eta^2\left(1+\frac{2}{\eta \mu}\right) \norm{\frac 1 M \sum_{m=1} ^{M} \nabla F_m(x^m_t) - \nabla F( x_t)}^2 \\
    &\leq^{\text{(a)}} \frac{3\eta}{ \mu} \norm{\frac 1 M \sum_{m=1} ^{M} \nabla F_m(x^m_t) - \nabla F(x_t)}^2\enspace, \\
    &= \frac{3\eta}{ \mu}  \norm{\frac 1 M \sum_{m=1} ^{M}\Bigr(\nabla F_m(x^m_t) - \nabla F(x^m_t) + \nabla F(x_t) - \nabla F_m(x_t) \Bigr) + \frac 1 M \sum_{m=1} ^{M} \nabla F(x^m_t)-\nabla F(x_t)}^2\enspace, \\
    & \leq^{\text{(b)}}  \frac {6\eta}{\mu M} \sum_{m=1} ^{M} \norm{\nabla F_m(x^m_t) - \nabla F(x^m_t) + \nabla F(x_t) - \nabla F_m(\bar x_t)}^2 + \frac{8\eta}{\mu} \norm{\frac 1 M \sum_{m=1} ^{M} \nabla F(x^m_t)-\nabla F(x_t)}^2\enspace, \\
    &\leq^{\text{(c)}} \frac {6\eta\tau^2}{\mu M} \sum_{m=1} ^{M}\norm{x_t^m-x_t}^2 + \frac{6\eta}{\mu} \norm{\frac 1 M \sum_{m=1} ^{M} \nabla F(x^m_t)-\nabla F(x_t)}^2  \enspace,
\end{align*}
where in (a) we used that $\eta \leq 1/2H \leq 1/2\mu < 1/\mu$; in (b) we used Jensen's inequality and \Cref{lem:mod_am_gm}; and in (c) we used that the function $F_m-F$ is $\tau$-second-order-smooth. For the second term in the above inequality, we have: 
\begin{align*}
    & \frac{6\eta}{\mu} \norm{\frac 1 M \sum_{m=1} ^{M} \nabla F(x^m_t)-\nabla F(x_t)}^2 \\
    &= \frac{6\eta}{\mu}  \norm{\frac 1 M \sum_{m=1} ^{M}\Bigr(\nabla F(x^m_t)-\nabla F(x_t)-\nabla^2 F(x_t)^{\top}(x^m_t-x_t)\Bigr) + \underbrace{\frac 1 M \sum_{m=1} ^{M}\nabla^2 F(x_t)^{\top}(x^m_t-x_t)}_{=0}}^2\enspace, \\
    &= \frac{6\eta}{\mu}  \Biggr(\norm{\frac 1 M \sum_{m=1} ^{M}\Bigr(\nabla F(x^m_t)-\nabla F(x_t)-\nabla^2 F(x_t)^{\top}(x^m_t-x_t)\Bigr)}\Biggr)^2\enspace, \\
    &\leq^{(\text{Triangle Inequality})} \frac{6\eta}{\mu}  \Biggr(\frac{1}{M}\sum_{m=1} ^{M}\norm{\nabla F(x^m_t)-\nabla F(x_t)-\nabla^2 F(x_t)^{\top}(x^m_t-x_t)}\Biggr)^2\enspace, \\
    &=^{\text{(a)}} \frac{6\eta}{\mu}  \Biggr(\frac{1}{M}\sum_{m=1} ^{M}\norm{\nabla^2 F(\hat x^m_t)(x^m_t-x_t)-\nabla^2 F(x_t)^{\top}(x^m_t-x_t)}\Biggr)^2\enspace, \\
    &\leq^{\text{(\text{\Cref{ass:smooth_third}})}} \frac{6\eta}{\mu}  \Biggr(\frac{1}{M}\sum_{m=1} ^{M}Q \norm{\hat x^m_t-x_t}\norm{x^m_t-x_t} \Biggr)^2\enspace, \\
    &\leq \frac{6\eta}{\mu}  \Biggr(\frac{1}{M}\sum_{m=1} ^{M}Q \norm{x^m_t-x_t}^2 \Biggr)^2\enspace, \\
    &\leq^{(\text{Jensen's Inequality})} \frac{6Q^2 \eta}{\mu M} \sum_{m=1} ^{M} \norm{x^m_t-x_t}^4\enspace,
\end{align*}
where in (a) we use mean-value theorem on $\nabla F(\cdot)$ with $\hat x^m_t$ some point on the line-segment connecting $x_t^m$ and $x_t$.
Now by plugging everything back into (\ref{eq: first decrease lemma}) we have: 
\begin{align*}
    \mathbb{E} \sb{\norm{\bar x_{t+1} - x^{\star}}^2|\hhh_t} &\leq \left(1-\frac{\eta \mu}{2}\right)\norm{x_{t} - x^{\star}}^2 - \eta \Bigr[ F(x_t) - F(x^{\star}) \Bigr] + \frac {6\eta\tau^2}{\mu M} \sum_{m=1} ^{M}\norm{x_t^m-x_t}^2\\
    &\quad + \frac{6Q^2 \eta}{\mu M} \sum_{m=1}^M \norm{x^m_t - \bar x_t}^4  + \frac{\eta^2 \sigma_2^2}{M} \enspace.
\end{align*}
Finally, dividing both sides by $\eta$, rearranging the terms, taking the expectation, and recalling the definitions of $C(\cdot)$ and $D(\cdot)$, we get: 
\begin{align*}
    \mathbb{E} \Bigr[ F(x_t) - F(x^{\star}) \Bigr] &\leq \left(\frac 1 \eta -\frac{ \mu}{2}\right) \mathbb{E} \sb{\norm{x_{t} - x^{\star}}^2} - \frac 1 \eta \mathbb{E} \sb{\norm{x_{t+1} - x^{\star}}^2} + \frac{6 \tau^2}{\mu}  C(t) + \frac{6Q^2 }{\mu } D(t) + \frac{\eta \sigma_2^2}{M}\enspace.
\end{align*}
This finishes the proof.
\end{proof}

We also recall the more straightforward recursion, which does not explicitly depend on $Q$, used in several existing results. This can be derived using a similar and simpler proof strategy as in the above lemma.
\begin{lemma}[Lemma 7, \citet{woodworth2020minibatch}]\label{lem:function_error_easy}
    Assume we have a problem instance satisfying \Cref{ass:strongly_convex,ass:smooth_second,ass:stoch_bounded_second_moment}. Then assuming $\eta \leq \frac{1}{10H}$ we have for all $t\in[0,T-1]$,
    \begin{align*}
        E(t) \leq \rb{\frac{1}{\eta} - \mu} \ee \sb{\norm{x_t-x^\star}^2} - \frac{1}{\eta}\ee \sb{\norm{x_{t+1}-x^\star}^2}+ 2HC(t) + \frac{3\eta\sigma_2^2}{M} \enspace.
    \end{align*}
\end{lemma}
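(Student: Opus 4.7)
\textbf{Proof Proposal for \Cref{lem:function_error_easy}.} The plan is to establish a single-step descent inequality on $\ee\|x_{t+1}-x^\star\|^2$ and then rearrange. First, I would write
\[
x_{t+1} - x^\star = x_t - x^\star - \frac{\eta}{M}\sum_{m\in[M]} \nabla F_m(x_t^m) + \frac{\eta}{M}\sum_{m\in[M]} \xi_t^m,
\]
expand the square, and take the conditional expectation given $\hhh_t$. Since $\ee[\xi_t^m \mid \hhh_t]=0$ and the noises across machines are conditionally independent, the cross term vanishes and \Cref{lem:stoch_noise_second} yields the $\eta^2\sigma_2^2/M$ contribution. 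Denoting $g := (1/M)\sum_m \nabla F_m(x_t^m)$, this reduces the problem to controlling the deterministic expression $\|x_t-x^\star\|^2 - 2\eta \langle x_t-x^\star, g\rangle + \eta^2\|g\|^2$.

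Next, unlike the proof of \Cref{lem:function_error_hard}, I would avoid Young's inequality on the iterate update (which would introduce a $\kappa$-factor in front of $C(t)$) and instead bound the inner product and the squared-gradient term directly. For the inner product, I would split $\langle x_t-x^\star, \nabla F_m(x_t^m)\rangle = \langle x_t^m - x^\star, \nabla F_m(x_t^m)\rangle + \langle x_t - x_t^m, \nabla F_m(x_t^m)\rangle$. The first piece is bounded below by $F_m(x_t^m)-F_m(x^\star) + (\mu/2)\|x_t^m-x^\star\|^2$ via \Cref{ass:strongly_convex}; the second piece is bounded below by $F_m(x_t)-F_m(x_t^m) - (H/2)\|x_t-x_t^m\|^2$ via \Cref{ass:smooth_second}. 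Averaging over $m$, using Jensen's inequality in the forms $\|x_t-x^\star\|^2 \leq (1/M)\sum_m\|x_t^m-x^\star\|^2$ and $(1/M)\sum_m\|x_t-x_t^m\|^2 \leq C(t)$, and taking expectations yields
\[
-\frac{2\eta}{M}\sum_m \ee\langle x_t-x^\star, \nabla F_m(x_t^m)\rangle \leq -2\eta E(t) - \eta\mu A(t) + \eta H C(t).
\]

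For the quadratic term, I would write $g = \nabla F(x_t) + \Delta$ with $\Delta = (1/M)\sum_m (\nabla F_m(x_t^m)-\nabla F_m(x_t))$ and use $\|g\|^2 \leq 2\|\nabla F(x_t)\|^2 + 2\|\Delta\|^2$. The self-bounding property \Cref{rem:self_bounding} gives $\|\nabla F(x_t)\|^2 \leq 2H E(t)$, and \Cref{ass:smooth_second} combined with Jensen's inequality gives $\|\Delta\|^2 \leq H^2 C(t)$. Therefore $\eta^2 \ee\|g\|^2 \leq 4\eta^2 H E(t) + 2\eta^2 H^2 C(t)$.

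Combining both bounds gives
\[
\ee\|x_{t+1}-x^\star\|^2 \leq (1-\eta\mu) A(t) - 2\eta(1-2\eta H) E(t) + \eta H (1+2\eta H) C(t) + \eta^2\sigma_2^2/M.
\]
The final step is just to verify that the choice $\eta \leq 1/(10H)$ makes $2(1-2\eta H) \geq 8/5 \geq 1$ and $(1+2\eta H) \leq 6/5 \leq 2$, so the $E(t)$ coefficient is at least $\eta$ and the $C(t)$ coefficient is at most $2\eta H$. Rearranging and dividing by $\eta$ yields the stated inequality (with the slack absorbed into the $3\eta\sigma_2^2/M$ term). The only real obstacle, and the reason Young's inequality fails here, is avoiding a condition-number blow-up in the drift term's coefficient; using strong convexity \emph{at the local iterates} $x_t^m$ (rather than the averaged iterate $x_t$) sidesteps this cleanly.
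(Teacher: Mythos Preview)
Your proposal is correct and is precisely the standard argument for this lemma; the paper itself does not include a proof, only the remark that it follows ``using a similar and simpler proof strategy'' as \Cref{lem:function_error_hard} together with a citation to Lemma~7 of \citet{woodworth2020minibatch}, whose proof is essentially the one you outline. Your key observation---applying strong convexity at the \emph{local} iterates $x_t^m$ and smoothness to bound $\langle x_t - x_t^m,\nabla F_m(x_t^m)\rangle$---is exactly what yields the $2H\,C(t)$ coefficient (rather than the $H^2/\mu$ that Young's inequality with parameter $\Theta(1/(\eta\mu))$ would produce), and your arithmetic under $\eta \le 1/(10H)$ is clean; in fact the noise term comes out as $\eta\sigma_2^2/M \le 3\eta\sigma_2^2/M$ directly, so no absorption is needed.
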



\section{Uniform Control over the Consensus Error and Analysis using \texorpdfstring{\Cref{ass:zeta}}{TEXT}}\label{app:zeta_results}
In this section, we will use \Cref{ass:zeta} to derive uniform bounds on the consensus error, which we can then utilize in the recursions developed in the previous section to provide formal convergence guarantees.

\subsection{Upper Bound on Second Moment of Consensus Error}
In this subsection, we restate the upper bound on the second moment of consensus error from the work \cite{woodworth2020minibatch}. We do not claim any novelty and include this lemma for completeness. 
\begin{lemma}[Lemma 8 from \cite{woodworth2020minibatch}]\label{lem:consensus_error_second_zeta} 
    For all $t \in [0,T]$ under \Cref{ass:strongly_convex,ass:stoch_bounded_second_moment,ass:zeta} with a stepsize $\eta \leq \frac{1}{2H}$ and $K\geq 2$ we have,
    \begin{align}
        C(t) \leq 3 K^2 \eta^2 H^2 \zeta^2+  6 K \sigma_2^2 \eta^2  \enspace .
    \end{align}
\end{lemma}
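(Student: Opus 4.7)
The plan is to exploit two facts. First, all machines share $x_{\delta(t)}^m = x_{\delta(t)}^n$ at the previous synchronization, so the consensus error is built up from scratch over at most $K$ local steps. Second, under \Cref{ass:strongly_convex,ass:smooth_second} with $\eta \leq 1/(2H) \leq 1/H$, the gradient-descent operator $T_m y := y - \eta \nabla F_m(y)$ is a $(1-\eta\mu)$-contraction. I would fix a pair $m,n \in [M]$, track a one-step recursion for $U_s := \ee\sb{\norm{x_s^m - x_s^n}^2}$, and then average over pairs at the end.

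For any $s \in [\delta(t), \delta(t)+K-1]$, I would decompose the local SGD update as
\[
x_{s+1}^m - x_{s+1}^n \;=\; \underbrace{(T_m x_s^m - T_m x_s^n)}_{\text{contraction}} \;-\; \underbrace{\eta\bigl(\nabla F_m(x_s^n) - \nabla F_n(x_s^n)\bigr)}_{\text{heterogeneity drift}} \;+\; \underbrace{\eta(\xi_s^m - \xi_s^n)}_{\text{mean-zero noise}}.
\]
The first two summands are $\hhh_s$-measurable, so by the tower rule the cross term against the noise vanishes, giving $\ee\sb{\norm{x_{s+1}^m - x_{s+1}^n}^2}$ equal to the deterministic part plus $\eta^2 \ee\sb{\norm{\xi_s^m - \xi_s^n}^2}$. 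The contraction of $T_m$ yields $\norm{T_m x_s^m - T_m x_s^n} \leq (1-\eta\mu)\norm{x_s^m - x_s^n}$, while \Cref{ass:zeta} gives $\norm{\nabla F_m(x_s^n) - \nabla F_n(x_s^n)} \leq H\zeta$. Applying \Cref{lem:mod_am_gm} with $\gamma = 1/(K-1)$, together with \Cref{lem:stoch_diff_second} for the noise, produces the one-step recursion
\[
U_{s+1} \;\leq\; \tfrac{K}{K-1}(1-\eta\mu)^2\, U_s \;+\; K\eta^2 H^2\zeta^2 \;+\; 2\eta^2\sigma_2^2.
\]

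Since $(1-\eta\mu)^2 \leq 1$, the amplification factor $K/(K-1) \leq e^{1/(K-1)}$ compounds to at most $e$ over the at-most-$K$ steps separating $\delta(t)$ from $t$. Unrolling with the initial condition $U_{\delta(t)} = 0$ and summing the resulting geometric series yields $U_t \leq (e-1)(K-1)\bigl[K\eta^2 H^2\zeta^2 + 2\eta^2\sigma_2^2\bigr]$, which is at most $3K^2\eta^2 H^2\zeta^2 + 6K\eta^2\sigma_2^2$ after bounding numerical constants (noting $e - 1 < 2$). Averaging over all pairs $(m,n)\in[M]^2$ preserves this uniform bound and produces the stated estimate for $C(t)$.

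The main obstacle is preventing the recursion from blowing up exponentially in $K$ under the relatively mild step-size condition $\eta \leq 1/(2H)$. The crucial ingredient is the strong-convexity-driven contraction of $T_m$: without it, one would be forced to use a loose $(1 + \eta^2 K H^2)$-style amplification in place of $(1-\eta\mu)^2 \leq 1$, and a naive Gronwall argument would fail to deliver the claimed constants $3$ and $6$. Choosing the AM-GM parameter $\gamma = 1/(K-1)$ is what balances the heterogeneity-drift contribution $(1+1/\gamma)\eta^2 H^2\zeta^2 = K\eta^2 H^2\zeta^2$ against the $(1+\gamma) \leq e^{1/(K-1)}$ amplification, keeping the compounded factor bounded by $e$ rather than something that grows with $K$.
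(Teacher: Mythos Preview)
Your proposal is correct and follows essentially the same approach as the paper's proof: the same three-way decomposition into contraction, heterogeneity drift, and mean-zero noise; the same AM--GM split with parameter $K-1$ (you write $\gamma=1/(K-1)$, the paper writes $\gamma=K-1$, but the roles match); and the same unrolling from $U_{\delta(t)}=0$. The only cosmetic difference is that you sum the geometric series via the closed form $(r^n-1)/(r-1)=(K-1)(r^n-1)$, while the paper bounds each of the $t-\delta(t)\leq K$ terms by $(1+1/(K-1))^{K-1}\leq 3$; both routes reach the constants $3$ and $6$.
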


\begin{proof}
    Note the following about the second moment of the difference between the iterates on two machines $m,n\in[M]$ when $t > \delta(t)$,
    \begin{align*}
    &\ee\sb{\norm{x^m_t - x^n_t}^2}\\ 
    &= \ee \sb{\norm{x^m_{t-1} - x^n_{t-1} - \eta g^m_{t-1} + \eta g^n_{t-1}}^2} \enspace, \\
    &\leq \ee \sb{\norm{x^m_{t-1} - x^n_{t-1} - \eta \nabla F_m (x^m_{t-1}) + \eta \nabla F_n (x^n_{t-1})}^2} + 2\eta^2 \sigma_2^2 \enspace, \\
    &= \ee \sb{\norm{x^m_{t-1} - x^n_{t-1} - \eta \left(\nabla F_m (x^m_{t-1}) - \nabla F_m (x^n_{t-1}) \right) - \eta \left( \nabla F_m (x^n_{t-1}) - \nabla F_n (x^n_{t-1})\right)}^2} + 2\eta^2 \sigma_2^2 \enspace, \\
    &\leq^{\text{(a)}} \ee\sb{\norm{x^m_{t-1} - x^n_{t-1} - \eta \nabla^2 F_m(c) (x^m_{t-1} - x^n_{t-1})- \eta \left(\nabla F_m (x^n_{t-1})- \nabla F_n (x^n_{t-1}) \right)}^2}+ 2\eta^2 \sigma_2^2\enspace,\\
    &= \ee\sb{\norm{\left(I - \eta \nabla^2 F_m(c)\right) (x^m_{t-1} - x^n_{t-1}) - \eta \left(\nabla F_m (x^n_{t-1})- \nabla F_n (x^n_{t-1}) \right)}^2} + 2\eta^2 \sigma_2^2\enspace,\\
    &\leq^{(\text{b})} \rb{1 + \frac{1}{K-1}}\ee\sb{\norm{\left(I - \eta \nabla^2 F_m(c)\right) (x^m_{t-1} - x^n_{t-1})}^2}\\
    &\quad + K \eta^2 \ee\sb{\norm{\nabla F_m (x^n_{t-1})- \nabla F_n (x^n_{t-1})}^2} + 2\eta^2 \sigma_2^2\enspace,\\
    &\leq \rb{1 + \frac{1}{K-1}}(1-\eta \mu)^2\ee\sb{\norm{x^m_{t-1} - x^n_{t-1}}^2} + K \eta^2 H^2 \zeta^2 + 2\eta^2 \sigma_2^2\enspace, \\
    &\leq \rb{1 + \frac{1}{K-1}}\ee\sb{\norm{x^m_{t-1} - x^n_{t-1}}^2} + K \eta^2 H^2 \zeta^2 + 2\eta^2 \sigma_2^2\enspace,
\end{align*}
where in (a) we use the mean value theorem to find a $c$ between $x_{t-1}^m$ and $x_{t-1}^n$ such that $\nabla F_m(x_{t-1}^m) - \nabla F_m(x_{t-1}^n) = \nabla^2 F(c)\cdot(x_{t-1}^m - x_{t-1}^n)$; and in (b) we apply \Cref{lem:mod_am_gm} with $\gamma = K-1$. Unrolling the recursion gives us, 
\begin{align*}
    \ee\sb{\norm{x^m_t - x^n_t}^2} &\leq \rb{1 + \frac{1}{K-1}}^{t-\delta(t)}\ee\sb{\norm{x^m_{\delta(t)} - x^n_{\delta(t)}}^2}\\
    &\quad + \rb{1 + \frac{1}{K-1}}^{K-1}(t-\delta(t))\rb{K \eta^2 H^2 \zeta^2 + 2\eta^2 \sigma_2^2}\enspace,\\
    &\leq 3 K^2 \eta^2 H^2 \zeta^2 + 6 \eta^2 K \sigma_2^2\enspace,
\end{align*}
where in the last inequality we used that $\rb{1 + \frac{1}{K-1}}^{K-1} \leq 3$ for all $K$ and that at time $\delta(t)$ the machines last synchronized there models so $x_{\delta(t)}^m = x_{\delta(t)}^n$. Averaging this over $m,n$ finishes the proof.
\end{proof}

\subsection{Upper Bound on Fourth Moment of Consensus Error}

In this subsection, we prove a fourth-moment upper bound on consensus error using similar techniques as those in \citet{woodworth2020minibatch,yuan2020federated}. 
\begin{lemma}[Lemma 12 from \cite{patel2024limits}]\label{lem:cons_error_fourth}
    For all $t \in [0,T]$ under \Cref{ass:strongly_convex,ass:stoch_bounded_second_moment,ass:stoch_bounded_fourth_moment,ass:zeta} with a step-size $\eta \leq \frac{1}{2H}$ and $K\geq 2$ we have,
    \begin{align*}
        D(t) \leq 2620\eta^4K^4 H^4 \zeta^4 + 5000\eta^4 K^2\sigma_2^4 + 320\eta^4\sigma_4^4K\enspace.
    \end{align*}
\end{lemma}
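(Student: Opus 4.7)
The plan is to mirror the second-moment argument of Lemma~\ref{lem:consensus_error_second_zeta} at the fourth-power level, and to close the resulting system of inequalities by feeding the already-established second-moment bound back in for the cross terms. Fix a pair $m, n \in [M]$ with $t > \delta(t)$. Using the mean-value theorem I would first decompose, exactly as in the earlier proof,
\begin{align*}
x_{t+1}^m - x_{t+1}^n = A_t + B_t, \qquad A_t := (I - \eta \nabla^2 F_m(c))(x_t^m - x_t^n) - \eta(\nabla F_m(x_t^n) - \nabla F_n(x_t^n)), \quad B_t := \eta(\xi_t^m - \xi_t^n),
\end{align*}
with $A_t \in m\mathcal{H}_t$ and $\mathbb{E}[B_t \mid \mathcal{H}_t] = 0$. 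Expanding $\|A_t + B_t\|^4 = (\|A_t\|^2 + \|B_t\|^2 + 2\langle A_t, B_t\rangle)^2$ and taking the conditional expectation kills one cross term outright; after Cauchy--Schwarz and an AM--GM on the remaining mixed piece I expect the clean bound
\begin{align*}
\mathbb{E}[\|A_t+B_t\|^4 \mid \mathcal{H}_t] \leq \|A_t\|^4 + 8 \|A_t\|^2 \mathbb{E}[\|B_t\|^2 \mid \mathcal{H}_t] + 3 \mathbb{E}[\|B_t\|^4 \mid \mathcal{H}_t],
\end{align*}
which by Lemmas~\ref{lem:stoch_diff_second} and~\ref{lem:stoch_diff_fourth} contributes at most $16 \eta^2 \sigma_2^2 \|A_t\|^2 + 24 \eta^4 \sigma_4^4$ beyond the deterministic $\|A_t\|^4$ term.

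Next I would bound $\|A_t\|^4$ by the fourth-power version of Lemma~\ref{lem:mod_am_gm} with $\gamma = 3(K-1)$, applying Assumption~\ref{ass:zeta} to dominate the heterogeneity piece by $H\zeta$, and using $\eta \le 1/(2H)$ so that $\|I - \eta \nabla^2 F_m(c)\| \leq 1$. A second, analogous application to $\|A_t\|^2$ (with the same $\gamma$) takes care of the $8\eta^2 \sigma_2^2 \|A_t\|^2$ cross term, at the price of reintroducing a pairwise second-moment quantity $\mathbb{E}\|x_t^m - x_t^n\|^2$. This produces a one-step recursion of the schematic form
\begin{align*}
\mathbb{E}\|x_{t+1}^m - x_{t+1}^n\|^4 \leq \Bigl(1 + \tfrac{1}{3(K-1)}\Bigr)^3 \mathbb{E}\|x_t^m - x_t^n\|^4 + c_1 K^3 \eta^4 H^4 \zeta^4 + c_2 \eta^4 \sigma_4^4 + c_3 \eta^2 \sigma_2^2\, \mathbb{E}\|x_t^m - x_t^n\|^2
\end{align*}
plus lower-order $K \eta^4 \sigma_2^4$ contributions from the second split. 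I would then unroll this from $\delta(t)$ (where the two iterates coincide) to $t$, picking up at most $K - 1$ steps and a cumulative factor $(1 + 1/(3(K-1)))^{3(K-1)} \leq e$, and substitute the uniform bound from Lemma~\ref{lem:consensus_error_second_zeta} for the $\mathbb{E}\|x_t^m - x_t^n\|^2$ term. Summing the geometric series then yields the three claimed scalings: $K \cdot K^3 \eta^4 H^4 \zeta^4$ for the heterogeneity term, $K \cdot (K \eta^2 \sigma_2^2)(\eta^2 \sigma_2^2) = K^2 \eta^4 \sigma_2^4$ for the variance-squared term (arising from feeding $C(t)$ into the $\eta^2 \sigma_2^2 \|A_t\|^2$ cross term), and $K \cdot \eta^4 \sigma_4^4$ for the pure fourth-moment term. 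Averaging over $(m,n)$ preserves these bounds and gives $D(t)$.

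The main obstacle will be numerical bookkeeping: the target constants $2620, 5000, 320$ do not arise naturally from any single AM--GM split, and getting them (or anything comparable) requires choosing the Young weights carefully at each stage---specifically $\gamma = 3(K-1)$ for the fourth-power split so that $(1 + 1/\gamma)^{3(K-1)} \leq e$, and a smaller weight (likely $\gamma = K - 1$) for the induced second-power splits---and then tracking how the two recursions compose. Conceptually the step is routine once one accepts the coupled-recursion viewpoint; the only real subtlety is that the fourth-moment recursion depends linearly on the second-moment consensus error, so one must invoke Lemma~\ref{lem:consensus_error_second_zeta} rather than try to run a fully self-contained fourth-moment recursion.
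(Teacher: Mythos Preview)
Your proposal is correct and follows essentially the same route as the paper's proof: expand $\|A_t+B_t\|^4$, kill the mean-zero cross term, bound the remaining noise pieces via Cauchy--Schwarz and AM--GM, split $A_t$ with Lemma~\ref{lem:mod_am_gm}, substitute the second-moment bound of Lemma~\ref{lem:consensus_error_second_zeta} for the induced $\mathbb{E}\|x_t^m-x_t^n\|^2$ term, and unroll. Two small discrepancies worth noting: the paper uses $\gamma=K-1$ (not $3(K-1)$) for both the fourth- and second-power splits and simply bounds the cumulative factor by $(1+1/(K-1))^{3(K-1)}\leq 20$; and after unrolling, the paper has an explicit final AM--GM step to absorb the mixed term $\eta^4 K^3 \sigma_2^2 H^2\zeta^2$ into the $K^4 H^4\zeta^4$ and $K^2\sigma_2^4$ buckets, which is where the stated constants $2620$ and $5000$ actually come from---you gloss over this, but it is precisely the bookkeeping you flag as the main obstacle.
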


\begin{proof}
Note the following about the fourth moment of the difference between the iterates on two machines $m,n\in[M]$,
    \begin{align*}
        &\ee\sb{\norm{x_t^m - x_t^n}^4}\\
        &= \ee\sb{\norm{x_{t-1}^m - x_{t-1}^n - \eta g_{t-1}^m +\eta g_{t-1}^n}^4}\enspace,\\
        &= \ee\sb{\rb{\norm{x_{t-1}^m - x_{t-1}^n - \eta \nabla F_m(x_{t-1}^m) + \eta \nabla F_n(x_{t-1}^n) + \eta \xi_{t-1}^m - \eta \xi_{t-1}^n}^2}^2}\enspace,\\
        &= \ee\Bigg[\bigg(\norm{x_{t-1}^m - x_{t-1}^n - \eta \nabla F_m(x_{t-1}^m) + \eta \nabla F_n(x_{t-1}^n)}^2 + \eta^2 \norm{\xi_{t-1}^m -\xi_{t-1}^n}^2\\
        &\quad + 2\eta\inner{x_{t-1}^m - x_{t-1}^n - \eta \nabla F_m(x_{t-1}^m) + \eta \nabla F_n(x_{t-1}^n)}{\xi_{t-1}^m - \xi_{t-1}^n}\bigg)^2\Bigg]\enspace,\\
        &= \ee\sb{\norm{x_{t-1}^m - x_{t-1}^n - \eta \nabla F_m(x_{t-1}^m) + \eta \nabla F_n(x_{t-1}^n)}^4} + \eta^4 \ee\sb{\norm{\xi_{t-1}^m -\xi_{t-1}^n}^4}\\
        &\quad +4\eta^2\ee\sb{\rb{\inner{x_{t-1}^m - x_{t-1}^n - \eta \nabla F_m(x_{t-1}^m) + \eta \nabla F_n(x_{t-1}^n)}{\xi_{t-1}^m - \xi_{t-1}^n}}^2}\\
        &\quad + 2\eta^2\ee\sb{\norm{x_{t-1}^m - x_{t-1}^n - \eta \nabla F_m(x_{t-1}^m) + \eta \nabla F_n(x_{t-1}^n)}^2\norm{\xi_{t-1}^m -\xi_{t-1}^n}^2}\\
        &\quad +4\eta^3\ee\sb{\inner{x_{t-1}^m - x_{t-1}^n - \eta \nabla F_m(x_{t-1}^m) + \eta \nabla F_n(x_{t-1}^n)}{\xi_{t-1}^m - \xi_{t-1}^n}\norm{\xi_{t-1}^m - \xi_{t-1}^n}^2}\\
        &\quad + 4\eta\ee\Bigg[\norm{x_{t-1}^m - x_{t-1}^n - \eta \nabla F_m(x_{t-1}^m) + \eta \nabla F_n(x_{t-1}^n)}^2\\
        &\qquad\qquad\qquad \rb{x_{t-1}^m - x_{t-1}^n - \eta \nabla F_m(x_{t-1}^m) + \eta \nabla F_n(x_{t-1}^n)}\Bigg]\cdot \cancelto{0}{\ee\sb{\xi_{t-1}^m - \xi_{t-1}^n}}\enspace, \\
        &\leq^{\text{(C.S. Inequality, \Cref{lem:stoch_diff_fourth})}} \ee\sb{\norm{x_{t-1}^m - x_{t-1}^n - \eta \nabla F_m(x_{t-1}^m) + \eta \nabla F_n(x_{t-1}^n)}^4} + 8\sigma_4^4\eta^4\\
        &\quad + 6\eta^2\ee\sb{\norm{x_{t-1}^m - x_{t-1}^n - \eta \nabla F_m(x_{t-1}^m) + \eta \nabla F_n(x_{t-1}^n)}^2}\ee\sb{\norm{\xi_{t-1}^m -\xi_{t-1}^n}^2}\\
        &\quad + \textcolor{blue}{4\eta^3\ee\sb{\norm{x_{t-1}^m - x_{t-1}^n - \eta \nabla F_m(x_{t-1}^m) + \eta \nabla F_n(x_{t-1}^n)}}}\textcolor{red}{\ee\sb{\norm{\xi_{t-1}^m - \xi_{t-1}^n}^3}}\tag{a}\enspace,
\end{align*}
In order to bound the term $\textcolor{red}{\ee\sb{\norm{\xi_{t-1}^m - \xi_{t-1}^n}^3}}$ we use Cauchy-Schwarz Inequality:
\begin{align*}
    \ee \sb{\norm{\xi^m_{t-1} - \xi^n_{t-1}}^3} &= \ee \sb{\norm{\xi^m_{t-1} - \xi^n_{t-1}}\cdot \norm{\xi^m_{t-1} - \xi^n_{t-1}}^2} \\
    &\leq \sqrt{\ee \sb{\norm{\xi^m_{t-1} - \xi^n_{t-1}}^2} \ee \sb{\norm{\xi^m_{t-1} - \xi^n_{t-1}}^4}} \stackrel{\text{(\Cref{lem:stoch_diff_second,lem:stoch_diff_fourth})}}{\leq} 4\sqrt{\sigma_4^4\sigma_2^2}\enspace.
\end{align*}
Also the term $\textcolor{blue}{4\eta^3\ee\sb{\norm{x_{t-1}^m - x_{t-1}^n - \eta \nabla F_m(x_{t-1}^m) + \eta \nabla F_n(x_{t-1}^n)}}}$ can be bounded as: 
\begin{align*}
    &\ee\sb{\norm{x_{t-1}^m - x_{t-1}^n - \eta \nabla F_m(x_{t-1}^m) + \eta \nabla F_n(x_{t-1}^n)}} \\
    &\quad\stackrel{\text{(Jensen's Inequality)}}{\leq} \sqrt{\ee\sb{\norm{x_{t-1}^m - x_{t-1}^n - \eta \nabla F_m(x_{t-1}^m) + \eta \nabla F_n(x_{t-1}^n)}^2}}
\end{align*}
Putting everything back into (a) gives us: 
\begin{align*}
        \ee \sb{\norm{x^n_t - x^m_t}^4} &\leq^{\text{(\Cref{ass:stoch_bounded_second_moment})}} \ee\sb{\norm{x_{t-1}^m - x_{t-1}^n - \eta \nabla F_m(x_{t-1}^m) + \eta \nabla F_n(x_{t-1}^n)}^4} + 8\eta^4 \sigma_4^4\\
        &\quad +12\eta^2\sigma_2^2\ee\sb{\norm{x_{t-1}^m - x_{t-1}^n - \eta \nabla F_m(x_{t-1}^m) + \eta \nabla F_n(x_{t-1}^n)}^2}\\
        &\quad +\textcolor{blue}{16\eta^3\sqrt{\sigma_4^4\sigma_2^2 \ee\sb{\norm{x_{t-1}^m - x_{t-1}^n - \eta \nabla F_m(x_{t-1}^m) + \eta \nabla F_n(x_{t-1}^n)}^2}}}\enspace,
\end{align*}
To bound the \textcolor{blue}{third} term in the above inequality, we use the A.M. - G.M. Inequality $\sqrt{ab} \leq \frac{a}{2\gamma} + \frac{\gamma b}{2}$ for $\gamma >0$. Let $\gamma = \eta, a=\sigma_2^2 \ee\sb{\norm{x_{t-1}^m - x_{t-1}^n - \eta \nabla F_m(x_{t-1}^m) + \eta \nabla F_n(x_{t-1}^n)}^2}, b=\sigma_4^4$. We have:
\begin{align*}
    &16\eta^3\sqrt{\sigma_4^4\sigma_2^2 \ee\sb{\norm{x_{t-1}^m - x_{t-1}^n - \eta \nabla F_m(x_{t-1}^m) + \eta \nabla F_n(x_{t-1}^n)}^2}} \\
    &= 16\eta^3\sqrt{(\sigma_4^4) \left(\sigma_2^2 \ee\sb{\norm{x_{t-1}^m - x_{t-1}^n - \eta \nabla F_m(x_{t-1}^m) + \eta \nabla F_n(x_{t-1}^n)}^2}\right)}\enspace, \\
    &\leq 16\eta^3 \left(\frac{\eta \sigma_4^4}{2} + \frac{\sigma_2^2}{2\eta}\ee\sb{\norm{x_{t-1}^m - x_{t-1}^n - \eta \nabla F_m(x_{t-1}^m) + \eta \nabla F_n(x_{t-1}^n)}^2} \right)\enspace,
\end{align*}
So we have: 
\begin{align*}
        &\ee \sb{\norm{x^n_t - x^m_t}^4} \\
        &\leq \ee\sb{\norm{x_{t-1}^m - x_{t-1}^n - \eta \nabla F_m(x_{t-1}^m) + \eta \nabla F_n(x_{t-1}^n)}^4} + 8\eta^4 \sigma_4^4 \\
        &\quad +12\eta^2\sigma_2^2\ee\sb{\norm{x_{t-1}^m - x_{t-1}^n - \eta \nabla F_m(x_{t-1}^m) + \eta \nabla F_n(x_{t-1}^n)}^2}\\
        &\quad +16\eta^3\rb{\frac{\eta\sigma_4^4}{2} +  \frac{\sigma_2^2}{2\eta}\ee\sb{\norm{x_{t-1}^m - x_{t-1}^n - \eta \nabla F_m(x_{t-1}^m) + \eta \nabla F_n(x_{t-1}^n)}^2}}\enspace,\\
        &= \ee\sb{\norm{x_{t-1}^m - x_{t-1}^n - \eta \nabla F_m(x_{t-1}^m) + \eta \nabla F_n(x_{t-1}^n)}^4}  \\ 
        &\quad +20\eta^2\sigma_2^2\ee\sb{\norm{x_{t-1}^m - x_{t-1}^n - \eta \nabla F_m(x_{t-1}^m) + \eta \nabla F_n(x_{t-1}^n)}^2} + 16\eta^4 \sigma_4^4 \enspace,\\
        &= \ee\sb{\norm{x_{t-1}^m - x_{t-1}^n - \eta \rb{\nabla F_m(x_{t-1}^m) - \nabla F_m(x_{t-1}^n)} + \eta \rb{\nabla F_n(x_{t-1}^n) -\nabla F_m(x_{t-1}^n) }}^4} \\
        &\quad +20\eta^2\sigma_2^2\ee\sb{\norm{x_{t-1}^m - x_{t-1}^n - \eta \rb{\nabla F_m(x_{t-1}^m) - \nabla F_m(x_{t-1}^n)} + \eta \rb{\nabla F_n(x_{t-1}^n) -\nabla F_m(x_{t-1}^n) }}^2}\\
        &\quad + 16\eta^4 \sigma_4^4 \enspace,
\end{align*}
Now by using \Cref{lem:mod_am_gm} with $\gamma=K-1$ we have: 
\begin{align*}
        \ee\sb{\norm{x^m_t - x^n_t}^4} &\leq \textcolor{blue}{\rb{1+\frac{1}{K-1}}^3\ee\sb{\norm{x_{t-1}^m - x_{t-1}^n - \eta \rb{\nabla F_m(x_{t-1}^m) - \nabla F_m(x_{t-1}^n)}}^4}}\\ 
        &\quad + \eta^4 K^3\ee\sb{\norm{\nabla F_n(x_{t-1}^n) -\nabla F_m(x_{t-1}^n)}^4}\\
        &\quad + \textcolor{red}{20\eta^2\sigma_2^2\rb{1+ \frac{1}{K-1}}\ee\sb{\norm{x_{t-1}^m - x_{t-1}^n - \eta \rb{\nabla F_m(x_{t-1}^m) - \nabla F_m(x_{t-1}^n)}}^2}}\\
        &\quad + 20\eta^4\sigma_2^2 K \ee\sb{\norm{\nabla F_n(x_{t-1}^n) -\nabla F_m(x_{t-1}^n)}^2}+ 16\eta^4 \sigma_4^4 \enspace,
\end{align*}
From the mean-value theorem we know that $\nabla F(x) - \nabla F(y) = \nabla^2 F(c)(x-y)$ for some $c = \lambda x + (1-\lambda) y$ and $\lambda\in[0,1]$. By applying this to the \textcolor{blue}{first} and \textcolor{red}{third} term of the above inequality we have: 
\begin{align*}
    &\rb{1+\frac{1}{K-1}}^3\ee\sb{\norm{x_{t-1}^m - x_{t-1}^n - (\eta \nabla F_m(x_{t-1}^m) - \eta\nabla F_m(x_{t-1}^n))}^4} \\
    &= \rb{1+\frac{1}{K-1}}^3\ee\sb{\norm{x_{t-1}^m - x_{t-1}^n - \eta \nabla^2 F_m(c)(x_{t-1}^m-x_{t-1}^n)}^4} \enspace, \\
    &= \rb{1+\frac{1}{K-1}}^3\ee\sb{\norm{(I- \eta \nabla^2 F_m(c))(x_{t-1}^m - x_{t-1}^n)}^4}\enspace, \\
    &\leq^{(\text{\Cref{ass:strongly_convex}})} \rb{1+\frac{1}{K-1}}^3 (1- \eta \mu)^4 \ee \sb{\norm{x_{t-1}^m-x_{t-1}^n}^4}\enspace,
\end{align*}
With the same approach for the \textcolor{red}{third} term we have: 
\begin{align*}
    &20\eta^2\sigma_2^2\rb{1+ \frac{1}{K-1}}\ee\sb{\norm{x_{t-1}^m - x_{t-1}^n - \eta\rb{ \nabla F_m(x_{t-1}^m) - \eta\nabla F_m(x_{t-1}^n)}}^2} \\
    &\leq 20\eta^2\sigma_2^2\rb{1+ \frac{1}{K-1}}(1-\eta \mu)^2 \ee\sb{\norm{x_{t-1}^m - x_{t-1}^n}^2}\enspace,
\end{align*}
Putting all of these bounds together gives us: 
\begin{align*}
        &\ee \sb{\norm{x^n_t - x^m_t}^4} \\
        &\leq \rb{1+\frac{1}{K-1}}^3(1-\eta\mu)^4\ee\sb{\norm{x_{t-1}^m - x_{t-1}^n}^4} + \eta^4 K^3\ee\sb{\norm{\nabla F_n(x_{t-1}^n) -\nabla F_m(x_{t-1}^n)}^4}\\
        &\quad + 20\eta^2\sigma^2\rb{1+ \frac{1}{K-1}}(1-\eta\mu)^2\ee\sb{\norm{x_{t-1}^m - x_{t-1}^n}^2} + 20\eta^4\sigma_2^2 K \ee\sb{\norm{\nabla F_n(x_{t-1}^n) -\nabla F_m(x_{t-1}^n)}^2}\\
        &\quad+ 16\eta^4 \sigma_4^4\enspace,\\
        &\leq^{(\text{\Cref{ass:zeta}})} \rb{1+\frac{1}{K-1}}^3(1-\eta\mu)^4\ee\sb{\norm{x_{t-1}^m - x_{t-1}^n}^4} + \eta^4K^3 H^4 \zeta^4\\
        &\quad + 20\eta^2\sigma_2^2\rb{1+ \frac{1}{K-1}}(1-\eta\mu)^2\ee\sb{\norm{x_{t-1}^m - x_{t-1}^n}^2} + 20\eta^4\sigma_2^2K H^2 \zeta^2+ 16\eta^4 \sigma_4^4\enspace,\\
        &\leq^{(\text{\Cref{lem:consensus_error_second_zeta}})} \rb{1+\frac{1}{K-1}}^3\ee\sb{\norm{x_{t-1}^m - x_{t-1}^n}^4} + \eta^4K^3 H^4 \zeta^4\\
        &\quad + 20\eta^2\sigma_2^2\rb{1+ \frac{1}{K-1}}(1-\eta\mu)^2\rb{3K\sigma_2^2\eta^2 + 6K^2\eta^2 H^2 \zeta^2} + 20\eta^4\sigma_2^2K H^2 \zeta^2+ 16\eta^4 \sigma_4^4\enspace,\\
        &\leq^{\text{(a)}} \rb{1+\frac{1}{K-1}}^3\ee\sb{\norm{x_{t-1}^m - x_{t-1}^n}^4} + \eta^4K^3 H^4 \zeta^4 + 120\eta^4\sigma_2^4K + 16\eta^4\sigma_4^4 +260 \eta^4\sigma_2^2K^2 H^2 \zeta^2\enspace,\\
        &\leq \rb{1+\frac{1}{K-1}}^{3(K-1)}\rb{\eta^4K^4 H^4 \zeta^4 + 120\eta^4\sigma_2^4K^2 + 16\eta^4\sigma_4^4K +260 \eta^4\sigma_2^2K^3 H^2 \zeta^2}\enspace,\\
        &\leq^{\text{(b)}} 20\rb{\eta^4K^4 H^4 \zeta^4 + 120\eta^4\sigma_2^4K^2 + 16\eta^4\sigma_4^4K +260 \eta^4\sigma_2^2K^3 H^2 \zeta^2}\enspace,\\
        &=20\rb{\eta^4K^4 H^4 \zeta^4 + 120\eta^4\sigma_2^4K^2 + 16\eta^4\sigma_4^4K +260 \sqrt{\eta^4K^4H^4\zeta^4}\sqrt{\eta^4\sigma_2^4K^2}}\enspace,\\
        &\leq^{\text{(A.M.-G.M. Inequality)}} 20\rb{131\eta^4K^4 H^4 \zeta^4 + 250\eta^4\sigma_2^4K^2 + 16\eta^4\sigma_4^4K}\enspace,\\
        &\leq 2620\eta^4K^4 H^4 \zeta^4 + 5000\eta^4 K^2\sigma_2^4 + 320\eta^4\sigma_4^4K\enspace,
    \end{align*}
    where in (a) we use $K\geq 2$ to bound $\frac{1}{K-1}$ by one; and in (b) we used that $(1+1/x)^x \leq 20$ for all $x\geq0$. 
    Finally averaging this over $m,n\in[M]$ implies,
    \begin{align*}
        \frac{1}{M}\sum_{m\in[M]}\ee\sb{\norm{x_t-x_t^m}^4} &\leq \frac{1}{M^2}\sum_{m,n\in[M]}\ee\sb{\norm{x_t^n-x_t^m}^4}\enspace,\\ 
        &\leq 2620\eta^4K^4 H^4 \zeta^4 + 5000\eta^4 K^2\sigma_2^4 + 320\eta^4\sigma_4^4K\enspace,
    \end{align*}
    which proves the lemma.
\end{proof}

\subsection{Convergence in Iterates}\label{app:zeta_iterate}

In this sub-section, we provide a convergence guarantee for the iterates of local SGD incorporating \Cref{ass:smooth_third,ass:tau}. We do so by using the \textcolor{red}{red} upper bound from \Cref{lem:iterate_error_second_recursion}.

\begin{lemma}[Convergence with $\zeta$, $\tau$ and $Q$]
    Assume we have a problem instance satisfying \Cref{ass:strongly_convex,ass:smooth_second,ass:smooth_third,ass:bounded_optima,ass:stoch_bounded_second_moment,ass:stoch_bounded_fourth_moment,ass:tau,ass:zeta} the Local SGD satisfies the following convergence guarantee assuming $\eta \leq 1/2H$: 
    \begin{align*}
        A(T) &\leq \rb{1-\eta\mu}^{KR} B^2 + \frac{5240 Q^2 \eta^4 K^4  H^4 \zeta^4}{\mu^2} + \frac{10000 Q^2 \eta^4 K^2 \sigma_2^4}{\mu^2}+\frac{640 Q^2 \eta^4 K \sigma_4^4}{\mu^2}+\frac{6\tau^2\eta^2 K^2  H^2 \zeta^2}{\mu^2}\\
        &\quad +\frac{12\tau^2\eta^2 K \sigma_2^2}{\mu^2}+\frac{\eta \sigma_2^2}{\mu M}\enspace.
    \end{align*}
    Furthermore choosing $\eta = \min\cb{\frac{1}{2H}, \frac{1}{\mu KR}\ln\rb{\frac{B^2}{\epsilon}}}$, where we define $$\epsilon = \max\cb{\frac{5240 Q^2  H^4 \zeta^4}{\mu^6R^4}, \frac{10000 Q^2 \sigma_2^4}{\mu^6K^2R^4}, \frac{640 Q^2 \sigma_4^4}{\mu^6K^3R^4}, \frac{6\tau^2  H^2 \zeta^2}{\mu^4R^2}, \frac{12\tau^2 \sigma_2^2}{\mu^4 KR^2}, \frac{\sigma_2^2}{\mu^2 MKR}, \epsilon_{target}}\enspace,$$
    where $\epsilon_{target}$ is some target precision greater than or equal to machine precision, we get the following convergence guarantee,
    \begin{align*}
        A(T) &= \tilde\ooo\Bigg(e^{-KR/2\kappa} B^2 + \frac{Q^2  H^4 \zeta^4}{\mu^6R^4} + \frac{Q^2 \sigma_2^4}{\mu^6K^2R^4} +\frac{Q^2 \sigma_4^4}{\mu^6K^3R^4}+\frac{\tau^2  H^2 \zeta^2}{\mu^4R^2}+\frac{\tau^2 \sigma_2^2}{\mu^4 KR^2}+\frac{\sigma_2^2}{\mu^2 MKR}\Bigg)\enspace.
    \end{align*}
\end{lemma}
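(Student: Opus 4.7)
The proof will proceed in two stages: first establish the explicit step-size dependent bound, then tune the step-size to obtain the final rate.

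The plan is to start from the one-step recursion in \Cref{lem:iterate_error_second_recursion}, specifically using the \textcolor{red}{red} alternative that exposes $Q$ and $\tau$: for $\eta<1/H$,
\begin{equation*}
    A(t+1) \leq (1-\eta\mu)A(t) + \frac{2\eta Q^2}{\mu}D(t) + \frac{2\eta\tau^2}{\mu}C(t) + \frac{\eta^2\sigma_2^2}{M}\enspace.
\end{equation*}
Since we are assuming \Cref{ass:zeta}, we can plug in the \emph{uniform} bounds on the consensus errors that we already established, namely the bound $C(t)\leq 3K^2\eta^2 H^2\zeta^2 + 6K\eta^2\sigma_2^2$ from \Cref{lem:consensus_error_second_zeta} and the bound $D(t)\leq 2620\eta^4 K^4 H^4\zeta^4 + 5000\eta^4 K^2\sigma_2^4 + 320\eta^4 K\sigma_4^4$ from \Cref{lem:cons_error_fourth}. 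This removes the coupling with the iterate error and turns the recursion into a plain geometric one driven by a fixed noise floor.

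Next, I would unroll the resulting recursion over the full horizon $T=KR$, starting from $A(0)=\norm{x_0-x^\star}^2\leq B^2$ by \Cref{ass:bounded_optima}. Using the standard geometric-sum bound $\sum_{s=0}^{T-1}(1-\eta\mu)^{T-1-s}\leq \frac{1}{\eta\mu}$ and collecting constants gives
\begin{equation*}
    A(T) \leq (1-\eta\mu)^{KR}B^2 + \frac{2Q^2}{\mu^2}\bigl[2620\eta^4 K^4 H^4\zeta^4 + 5000\eta^4 K^2\sigma_2^4 + 320\eta^4 K\sigma_4^4\bigr] + \frac{2\tau^2}{\mu^2}\bigl[3K^2\eta^2 H^2\zeta^2 + 6K\eta^2\sigma_2^2\bigr] + \frac{\eta\sigma_2^2}{\mu M}\enspace,
\end{equation*}
which is exactly the first statement of the lemma. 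No explicit handling of higher-order cross terms is required, since the consensus-error bounds are already uniform in $t$.

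For the second statement, I would tune the step-size using the standard strongly-convex schedule of \citet{stich2019unified}: $\eta=\min\{\tfrac{1}{2H},\tfrac{1}{\mu KR}\ln(B^2/\epsilon)\}$ with $\epsilon$ set to the maximum of all the noise terms on the right-hand side. Whichever branch of the minimum is active, the contraction factor $(1-\eta\mu)^{KR}B^2$ is bounded by $\max\{e^{-KR/(2\kappa)}B^2,\epsilon\}$, and every noise term becomes $\tilde\ooo(\epsilon)$ because it is of the form $\eta^\alpha\cdot(\text{noise})$ with the corresponding noise chosen precisely to be the dominating candidate for $\epsilon$. Summing the six $\tilde\ooo(\epsilon)$ contributions with the exponential term gives the claimed rate.

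The routine calculations will be the collection of constants in the unrolling and the case analysis when verifying that each of the six noise terms in the final bound indeed reduces to $\tilde\ooo$ of itself after the step-size substitution (the logarithmic factor in $\eta$ becomes the $\tilde\ooo$). The main conceptual subtlety — the only nontrivial point — is that the step-size must be chosen to simultaneously balance all six heterogeneous noise terms of different orders in $\eta$ ($\eta^1$, $\eta^2$, $\eta^4$); using $\epsilon$ as the max of the fully-tuned noise levels sidesteps the need to solve a messy balancing equation explicitly, at the (acceptable) cost of hiding a $\ln(B^2/\epsilon)$ factor inside the $\tilde\ooo$.
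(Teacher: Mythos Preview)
Your proposal is correct and matches the paper's approach essentially step for step: invoke the \textcolor{red}{red} branch of \Cref{lem:iterate_error_second_recursion}, substitute the uniform consensus bounds from \Cref{lem:consensus_error_second_zeta} and \Cref{lem:cons_error_fourth}, unroll the resulting geometric recursion, and then tune $\eta$ via the standard $\min\{1/(2H),\frac{1}{\mu KR}\ln(B^2/\epsilon)\}$ schedule. The only cosmetic difference is that the paper unrolls first within a communication round and then across rounds, whereas you unroll directly over the full horizon $T=KR$; since the consensus-error bounds are uniform in $t$, both paths give the same constants.
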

\begin{proof}
Use the \textcolor{red}{red} upper bound for one-step progress from \Cref{lem:iterate_error_second_recursion}. We first restate the one-step lemma using the \textcolor{red}{red} upper bound,
    \begin{align*}
        A(KR) &\leq \rb{1-\eta\mu}A(KR-1) + \frac{2\eta Q^2}{\mu}D(KR-1)+ \frac{2\eta \tau^2}{\mu}C(KR-1)+ \frac{\eta^2\sigma_2^2}{M}\enspace,\\
        &\leq \rb{1-\eta\mu}^KA(K(R-1))+ \frac{2\eta }{\mu}\sum_{j=K(R-1)}^{KR-1}(1-\eta\mu)^{KR-1-j} \rb{Q^2D(j)+\tau^2C(j)}\\
        &\quad + \rb{1-(1-\eta\mu)^K}\frac{\eta\sigma_2^2}{\mu M}\enspace,\\
        &\leq \rb{1-\eta\mu}^KA(K(R-1))\\
        &\quad +\frac{2Q^2\eta }{\mu}\sum_{j=K(R-1)}^{KR-1}(1-\eta\mu)^{KR-1-j} \rb{2620\eta^4K^4 H^4 \zeta^4 + 5000\eta^4 K^2\sigma_2^4 + 320\eta^4\sigma_4^4K} \\
        &\quad+\frac{2\tau^2\eta }{\mu}\sum_{j=K(R-1)}^{KR-1}(1-\eta\mu)^{KR-1-j} \rb{3K^2\eta^2  H^2 \zeta^2+6K\eta^2\sigma_2^2} + \rb{1-(1-\eta\mu)^K}\frac{\eta\sigma_2^2}{\mu M}\enspace,\\
        &\leq \rb{1-\eta\mu}^KA(K(R-1))\\
        &\quad+\rb{\frac{5240 Q^2 \eta^5 K^4  H^4 \zeta^4}{\mu}+ \frac{10000 Q^2 \eta^5 K^2 \sigma_2^4}{\mu} + \frac{640 Q^2 \eta^5 K \sigma_4^4}{\mu}}\sum_{j=K(R-1)}^{KR-1}(1-\eta\mu)^{KR-1-j}\\
        &\quad+\rb{\frac{6\tau^2\eta^3 K^2  H^2 \zeta^2}{\mu}+\frac{12\tau^2\eta^3 K \sigma_2^2}{\mu}}\sum_{j=K(R-1)}^{KR-1}(1-\eta\mu)^{KR-1-j} + \rb{1-(1-\eta\mu)^K}\frac{\eta\sigma_2^2}{\mu M}\enspace.\tag{a}
    \end{align*}
    Note that we can simplify the summation as follows,
    \begin{align*}
        &\sum_{j=K(R-1)}^{KR-1}(1-\eta\mu)^{KR-1-j}= \sum_{i=0}^{K-1}(1-\eta \mu)^i = \frac{1-(1-\eta \mu)^K}{\eta \mu}\enspace.
    \end{align*}
    Plugging the above result back into (a) gives us, 
    \begin{align*}
        A(KR) &\leq \rb{1-\eta\mu}^KA(K(R-1)) + \rb{1-(1-\eta\mu)^K}\rb{\frac{5240 Q^2 \eta^4 K^4  H^4 \zeta^4}{\mu^2} + \frac{10000 Q^2 \eta^4 K^2 \sigma_2^4}{\mu^2}}\\
        &+\quad  \rb{1-(1-\eta\mu)^K}\rb{\frac{640 Q^2 \eta^4 K \sigma_4^4}{\mu^2} +\frac{6\tau^2\eta^2 K^2  H^2 \zeta^2}{\mu^2} +\frac{12\tau^2\eta^2 K \sigma_2^2}{\mu^2}+\frac{\eta \sigma_2^2}{\mu M}} \enspace,
    \end{align*}
    Now we unroll the above inequality over $R$ rounds and we have, 
    \begin{align*}
        A(KR) &\leq \rb{1-\eta\mu}^{KR} B^2 + \frac{5240 Q^2 \eta^4 K^4  H^4 \zeta^4}{\mu^2} + \frac{10000 Q^2 \eta^4 K^2 \sigma_2^4}{\mu^2}+\frac{640 Q^2 \eta^4 K \sigma_4^4}{\mu^2}+\frac{6\tau^2\eta^2 K^2  H^2 \zeta^2}{\mu^2}\\
        &\quad +\frac{12\tau^2\eta^2 K \sigma_2^2}{\mu^2}+\frac{\eta \sigma_2^2}{\mu M}\enspace.
    \end{align*}
    This proves the first statement of the lemma. We can further simplify the upper bound as follows,
    \begin{align*}
        A(KR) &\leq e^{-\eta\mu KR} B^2 + \frac{5240 Q^2 \eta^4 K^4  H^4 \zeta^4}{\mu^2} + \frac{10000 Q^2 \eta^4 K^2 \sigma_2^4}{\mu^2}+\frac{640 Q^2 \eta^4 K \sigma_4^4}{\mu^2}+\frac{6\tau^2\eta^2 K^2  H^2 \zeta^2}{\mu^2}\\
        &\quad +\frac{12\tau^2\eta^2 K \sigma_2^2}{\mu^2}+\frac{\eta \sigma_2^2}{\mu M}\enspace.
    \end{align*}
    To achieve the final bound, we need to tune the step size. First, note that besides the first term, all the other terms are increasing functions of $\eta$. This means if we pick the step-size as described in the lemma statement, we will recover all but the first term in the convergence rate (up to logarithmic powers in $\ln\rb{\frac{B^2}{\epsilon}}$) by choosing the upper bound given by $\eta= \frac{1}{\mu KR}\ln(B^2/\epsilon)$ . The choice of $\epsilon$ is such that this logarithmic term never blows up, and is determined by the dominating term in the convergence rate (barring the first term). Now, for the first term, since it is a decreasing function in $\eta$, we can't choose one of the upper bounds implied by the choice of the step-size. We need to consider two cases\footnote{This step-size tuning is fairly common to get a convergence rate for SGD in the strongly convex setting. For instance, see other works such as \citet{ghadimi2012optimal,nemirovski1994efficient,bach2010self,stich2019unified}.}:
    \begin{itemize}
        \item When $\frac{1}{2H}\leq \frac{1}{\mu KR}\ln\rb{\frac{B^2}{\epsilon}}$, then we get first term in the convergence rate, so clearly the upper bound in the lemma statement is valid.
        \item When $\frac{1}{2H}\geq \frac{1}{\mu KR}\ln\rb{\frac{B^2}{\epsilon}}$, then the first term $e^{-\eta\mu KR}B^2 = e^{-\ln(B^2/\epsilon)}B^2 = \epsilon$. Since $\epsilon$ always matches one of the terms in the convergence rate (up to numerical constants) or the target accuracy $\epsilon_{target}$ (whichever is larger), we can upper bound the first term in the convergence rate with one of the other terms in the rate. This makes the upper bound in the lemma statement valid. 
    \end{itemize}
    The $\tilde\ooo()$ hides all the numerical constants and logarithmic powers in $\ln\rb{\frac{B^2}{\epsilon}}$. This proves the second statement of the lemma.
\end{proof}

\subsection{Convergence in Function Value}\label{app:zeta_func}
In this subsection we will five upper bounds in terms of the function-sub-optimality, using the uniform upper bounds we have developed on the consensus error in the previous sub-section. Specifically, we will combine \Cref{lem:function_error_hard} with \Cref{lem:consensus_error_second_zeta,lem:cons_error_fourth}, resulting in the following theorem:
\begin{lemma}[Convergence with $\zeta$, $\tau$ and $Q$]
    Assume we have a problem instance satisfying \Cref{ass:strongly_convex,ass:smooth_second,ass:smooth_third,ass:bounded_optima,ass:stoch_bounded_second_moment,ass:stoch_bounded_fourth_moment,ass:tau,ass:zeta} and $KR\geq 4\kappa \ln 2$. Choose $\eta := \min\cb{\frac{1}{2H}, \frac{2}{\mu KR}\ln\rb{\frac{\mu B^2}{\epsilon}}}$, where we define $$\epsilon = \max\cb{ \frac{6\tau^2}{\mu}\rb{\frac{3 H^2 \zeta^2}{\mu^2R^2}+  \frac{6 \sigma_2^2}{\mu^2KR^2}} + \frac{6Q^2}{\mu}\rb{\frac{2620 H^4 \zeta^4}{\mu^4R^4} + \frac{5000 \sigma_2^4}{\mu^4K^2R^4} + \frac{320\sigma_4^4}{\mu^4K^3R^4}} + \frac{\sigma_2^2}{\mu MKR}, \epsilon_{target}}\enspace,$$
    where $\epsilon_{target}$ is some target precision greater than or equal to machine precision. We assume $\epsilon \leq \frac{\mu B^2}{2}$. Also define the weighted Local SGD iterate $\hat x := \frac{1}{W}\sum_{t\in[0,T-1]}w_{t}x_t$ where $w_t := \rb{1-\frac{\eta\mu}{2}}^{T-1-t}$ and their sum $W := \sum_{t=0}^{T-1}w_t$. Then we can get the following convergence guarantee for $\hat x$ (where $x_0=0$ and $x^\star\in S^\star$),
    \begin{align*}
         &\ee\sb{F\rb{\hat x}} - F(x^\star)\\ 
         &\qquad= \tilde\ooo\rb{\mu B^2e^{-KR/4\kappa}+ \frac{\tau^2H^2 \zeta^2}{\mu^3R^2}+  \frac{\tau^2\sigma_2^2}{\mu^3KR^2} + \frac{Q^2H^4 \zeta^4}{\mu^5R^4} + \frac{Q^2\sigma_2^4}{\mu^5K^2R^4} + \frac{Q^2\sigma_4^4}{\mu^5K^3R^4} + \frac{\sigma_2^2}{\mu MKR}}\enspace.
    \end{align*}
\end{lemma}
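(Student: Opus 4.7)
The plan is to combine the function-value one-step recursion in \Cref{lem:function_error_hard} with the uniform consensus-error bounds in \Cref{lem:consensus_error_second_zeta,lem:cons_error_fourth}, and then telescope using the geometric weights $w_t = (1-\eta\mu/2)^{T-1-t}$. Concretely, the weights are chosen precisely so that the $A(t)$ and $A(t+1)$ coefficients in \Cref{lem:function_error_hard} collapse: since $(1-\eta\mu/2) w_t = w_{t-1}$, multiplying the one-step bound by $w_t$ yields an $A(t)$ coefficient of $w_{t-1}/\eta$ and an $A(t+1)$ coefficient of $-w_t/\eta$, so summing over $t\in[0,T-1]$ telescopes to $w_{-1} A(0)/\eta - w_{T-1} A(T)/\eta \leq (1-\eta\mu/2)^T B^2/\eta$ (using $A(0)\leq B^2$ and $A(T)\geq 0$).

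After this telescoping step, dividing by $W = \sum_t w_t = (1-(1-\eta\mu/2)^T)/(\eta\mu/2)$ and applying Jensen's inequality to pull the average inside $F$ gives
\begin{align*}
\ee[F(\hat x)] - F(x^\star) \leq \frac{(1-\eta\mu/2)^T}{\eta W}B^2 + \frac{1}{W}\sum_{t=0}^{T-1} w_t\left[\frac{6\tau^2}{\mu}C(t) + \frac{6Q^2}{\mu}D(t)\right] + \frac{\eta \sigma_2^2}{M}\enspace.
\end{align*}
The leading coefficient simplifies to $\mu(1-\eta\mu/2)^T / (2(1-(1-\eta\mu/2)^T))$, and here is where the hypothesis $KR\geq 4\kappa\ln 2$ enters: since $\eta\mu \leq 1/(2\kappa)$, this hypothesis guarantees $(1-\eta\mu/2)^T \leq 1/2$, so the denominator is at least $1/2$ and the whole term is bounded by $\mu B^2 e^{-\eta\mu KR/2}$. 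For the consensus-error sums, since the uniform bounds in \Cref{lem:consensus_error_second_zeta,lem:cons_error_fourth} do not depend on $t$, the averaging weights $w_t/W$ disappear and we are left with the right-hand sides of those lemmas.

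Substituting these uniform bounds yields an intermediate bound of the form
\begin{align*}
\ee[F(\hat x)] - F(x^\star) \leq \mu B^2 e^{-\eta\mu KR/2} &+ \tfrac{18\tau^2 K^2 H^2 \zeta^2 \eta^2}{\mu} + \tfrac{36\tau^2 K \sigma_2^2 \eta^2}{\mu} + \tfrac{15720 Q^2 \eta^4 K^4 H^4 \zeta^4}{\mu} \\
&+ \tfrac{30000 Q^2 \eta^4 K^2 \sigma_2^4}{\mu} + \tfrac{1920 Q^2 \eta^4 K \sigma_4^4}{\mu} + \tfrac{\eta \sigma_2^2}{M}\enspace,
\end{align*}
and it remains only to tune $\eta$. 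All but the exponential term are monotone increasing in $\eta$, so one follows the standard two-case step-size tuning (as in the iterate-error proof in \Cref{app:zeta_iterate}): in the ``small horizon'' case $\eta = 1/(2H)$ and the exponential term becomes $\mu B^2 e^{-KR/(4\kappa)}$, while in the ``large horizon'' case $\eta = (2/(\mu KR))\ln(\mu B^2/\epsilon)$ so the exponential collapses to $\mu B^2 \cdot \epsilon/(\mu B^2) = \epsilon$, which by the definition of $\epsilon$ is absorbed into the other rate terms. Plugging this choice of $\eta = \tilde\Theta(1/(\mu KR))$ into the remaining six terms reproduces exactly the $\tau^2/R^2$, $Q^2/R^4$, and variance terms claimed in the lemma, with the $\tilde\ooo$ absorbing logarithmic factors in $\ln(\mu B^2/\epsilon)$.

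The main obstacle is the bookkeeping in the telescoping step: one must verify that the particular choice $w_t = (1-\eta\mu/2)^{T-1-t}$ matches the coefficient $(1/\eta - \mu/2)$ appearing in \Cref{lem:function_error_hard} so that the $A(t)$ terms collapse cleanly, and that the residual boundary term $(1-\eta\mu/2)^T B^2/(\eta W)$ can be bounded by $\mu B^2 e^{-\eta\mu KR/2}$. This requires the condition $KR\geq 4\kappa\ln 2$ stated in the lemma, which ensures the denominator $1-(1-\eta\mu/2)^T$ in $W$ is bounded away from zero. A secondary subtlety is ensuring $\epsilon\leq \mu B^2/2$ (so that the logarithm in the large-horizon step size is positive), which is explicit in the statement. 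Once these are handled, the remaining computations are routine algebraic simplifications matching the definition of $\epsilon$ to the dominant non-exponential terms.
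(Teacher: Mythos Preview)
Your proposal is correct and follows essentially the same approach as the paper: combine \Cref{lem:function_error_hard} with the uniform consensus bounds \Cref{lem:consensus_error_second_zeta,lem:cons_error_fourth}, telescope with the geometric weights $w_t=(1-\eta\mu/2)^{T-1-t}$, and then do the standard two-case step-size tuning. Your intermediate $\Phi$ matches the paper's exactly.

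One small correction on the denominator argument: the claim ``since $\eta\mu\le 1/(2\kappa)$, the hypothesis $KR\ge 4\kappa\ln 2$ guarantees $(1-\eta\mu/2)^T\le 1/2$'' has the inequality the wrong way---a \emph{smaller} $\eta\mu$ makes $(1-\eta\mu/2)^T$ \emph{larger}, not smaller. The correct case split (which the paper does) is: when $\eta=1/(2H)$, then $\eta\mu T/2 = KR/(4\kappa)\ge\ln 2$ gives the bound; when $\eta=(2/\mu KR)\ln(\mu B^2/\epsilon)$, then $(1-\eta\mu/2)^T\le e^{-\ln(\mu B^2/\epsilon)}=\epsilon/(\mu B^2)\le 1/2$ follows from $\epsilon\le\mu B^2/2$. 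So the condition $\epsilon\le\mu B^2/2$ is doing double duty---it both makes the logarithm positive \emph{and} controls the denominator in the second case---rather than being a separate ``secondary subtlety.'' Once you fix this case analysis, the rest goes through exactly as you describe.
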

\begin{proof}
    We first recall the recursion from \Cref{lem:function_error_hard} and then upper bound the consensus error terms from \Cref{lem:consensus_error_second_zeta,lem:cons_error_fourth},
    \begin{align*}
        E(t) &\leq \rb{\frac{1}{\eta} - \frac{\mu}{2}} \ee \sb{\norm{x_t-x^\star}^2} - \frac{1}{\eta}\ee \sb{\norm{x_{t+1}-x^\star}^2}+ \frac{6\tau^2}{\mu}C(t) + \frac{6Q^2 }{\mu}D(t) + \frac{\eta \sigma_2^2}{M} \enspace,\\
        &\leq^{\text{(\Cref{lem:consensus_error_second_zeta,lem:cons_error_fourth})}}  \rb{\frac{1}{\eta} - \frac{\mu}{2}} \ee \sb{\norm{x_t-x^\star}^2} - \frac{1}{\eta}\ee \sb{\norm{x_{t+1}-x^\star}^2}+ \frac{6\tau^2}{\mu}\rb{3 K^2 \eta^2 H^2 \zeta^2+  6 K \sigma_2^2 \eta^2}\\
        &\quad \frac{6Q^2 }{\mu}\rb{2620\eta^4K^4 H^4 \zeta^4 + 5000\eta^4 K^2\sigma_2^4 + 320\eta^4\sigma_4^4K} + \frac{\eta \sigma_2^2}{M} \enspace,\\
        &=:\rb{\frac{1}{\eta} - \frac{\mu}{2}} \ee \sb{\norm{x_t-x^\star}^2} - \frac{1}{\eta}\ee \sb{\norm{x_{t+1}-x^\star}^2}+ \Phi\enspace,
    \end{align*}
    where $\Phi:= \frac{6\tau^2}{\mu}\rb{3 K^2 \eta^2 H^2 \zeta^2+  6 K \sigma_2^2 \eta^2} + \frac{6Q^2 }{\mu}\rb{2620\eta^4K^4 H^4 \zeta^4 + 5000\eta^4 K^2\sigma_2^4 + 320\eta^4\sigma_4^4K} + \frac{\eta \sigma_2^2}{M}$. Now for all $t\in[0,T-1]$ define weights $w_t = \rb{1-\frac{\eta\mu}{2}}^{T-1-t}$ and their sum $W = \sum_{t=0}^{T-1}w_t = \frac{2(1 - (1-\eta\mu/2)^T)}{\eta\mu}$. With this in hand we consider the function sub-optimality of the weighted average of the ghost iterates as follows\footnote{Note that while not all ghost iterates are computed at a given time step, we can always compute them post training, i.e., at time step $T$.},
    \begin{align*}
        &\ee\sb{F\rb{\frac{1}{W}\sum_{t\in[0,T-1]}w_tx_t}} - F(x^\star)\\ 
        &\leq^{\text{(Jensen's Inequality)}} \frac{1}{W}\sum_{t\in[0,T-1]}w_t\rb{\ee\sb{F(x_t)} - F(x^\star)}\enspace,\\
        &\leq \frac{1}{\eta W}\sum_{t\in[0,T-1]}w_t\rb{\rb{1 - \frac{\eta\mu}{2}} \ee \sb{\norm{x_t-x^\star}^2} - \ee \sb{\norm{x_{t+1}-x^\star}^2}} +\Phi\enspace,\\
        &= \frac{\mu}{2(1 - (1-\eta\mu/2)^T)}\sum_{t\in[0,T-1]}\rb{w_{t-1}\ee \sb{\norm{x_t-x^\star}^2} - w_t\ee \sb{\norm{x_{t+1}-x^\star}^2}} +\Phi\enspace,\\
        &= \frac{\mu}{2(1 - (1-\eta\mu/2)^T)}\rb{w_0\norm{x^\star}^2 - w_{T-1}\ee \sb{\norm{x_{T}-x^\star}^2}}+\Phi\enspace,\\
        &\leq \frac{\mu (1-\eta\mu/2)^{T-1}}{2(1 - (1-\eta\mu/2)^T)}B^2+\Phi\enspace,\\
        &\leq \mu B^2e^{-\eta\mu T/2}\cdot\frac{1}{(1-\eta\mu/2)2(1 - (1-\eta\mu/2)^T)} + \Phi\enspace.
    \end{align*}
    Now note that $\eta\leq \frac{1}{2H}\leq \frac{1}{2\mu}$ which implies that $\frac{1}{1-\eta\mu/2}\leq \frac{4}{3}$. Furthermore, assuming that the exponential term in the denominator is small, i.e., $\textcolor{red}{(1-\eta\mu/2)^T \leq e^{-\eta\mu T/2} \leq 1/2}$ we can simplify the upper bound as,
    \begin{align*}
        \ee\sb{F\rb{\frac{1}{W}\sum_{t\in[0,T-1]}w_tx_t}} - F(x^\star) &\leq 2\mu B^2e^{-\eta\mu T/2} + \Phi\enspace.
    \end{align*}
    Now to tune the step-size we will use a similar strategy as in the previous lemmas in this section. We first note that all the terms in $\Phi$ are increasing in $\eta$, so we can choose any choice of our step-size in the theorem to bound them. We will choose $\eta = \frac{2}{\mu KR}\ln\rb{\frac{\mu B^2}{\epsilon}}$ and then ignoring logarithmic powers of $\ln\rb{\frac{\mu B^2}{\epsilon}}$ this gives us an upper bound on $\Phi$, which also matches the theorem statement (barring the exponential term) up to numerical constants,
    \begin{align*}
        \frac{6\tau^2}{\mu}\rb{\frac{12 H^2 \zeta^2}{\mu^2R^2}+  \frac{24 \sigma_2^2}{\mu^2KR^2}} + \frac{96Q^2}{\mu}\rb{\frac{2620 H^4 \zeta^4}{\mu^4R^4} + \frac{5000 \sigma_2^4}{\mu^4K^2R^4} + \frac{320\sigma_4^4}{\mu^4K^3R^4}} + \frac{2\sigma_2^2}{\mu MKR} =: \bar\Phi\enspace.
    \end{align*}
    Using this we define our $\epsilon = \max\cb{\bar{\Phi}, \epsilon_{target}}$ where $\epsilon_{target}$ is the target accuracy. Now to bound the exponential term, we again consider two cases,
    \begin{itemize}
        \item When $\frac{1}{2H}\leq \frac{2}{\mu KR}\ln\rb{\frac{\mu B^2}{\epsilon}}$, then we get first term in the convergence rate, so clearly the upper bound in the lemma statement is valid.
        \item When $\frac{1}{2H}\geq \frac{2}{\mu KR}\ln\rb{\frac{\mu B^2}{\epsilon}}$, then the first term $e^{-\eta\mu KR}\mu B^2 = e^{-\ln(\mu B^2/\epsilon)}\mu B^2 = \epsilon$. Since $\epsilon$ always matches the rest of the convergence terms (up to logarithmic factors) or the target accuracy $\epsilon_{target}$ (whichever is larger), we can upper bound the first term in the convergence rate with one of the other terms in the rate. This makes the upper bound in the lemma statement valid. 
    \end{itemize}
    Finally it remains to check how to satisfy the \textcolor{red}{red constraint}. We note that the following two conditions are sufficient to ensure it, for either choice of the step-size,
    \begin{itemize}
        \item If $\eta = \frac{1}{2H}$ then $e^{-KR/4\kappa} \leq \frac{1}{2}$ is implied by assuming $KR \geq 4\kappa\ln(2)$.
        \item If $\eta = \frac{2}{\mu KR}\ln(\mu B^2/\epsilon)$ then $e^{-\ln(\mu B^2/\epsilon)} \leq \frac{1}{2}$ is implies by $\epsilon \leq \frac{\mu B^2}{2}$. 
    \end{itemize}
    We precisely assume these two conditions in the theorem statement which finishes the proof. 
\end{proof}

\subsubsection{Convergence in Function Value in the Convex Setting}
We will finally derive the analogue of the previous lemma, in the general convex setting, i.e., when $\mu=0$. To do so we will use the general convex to strongly convex reduction using $l_2$ regularization. This technique is standard in the literature, see e.g.~\citet{hazan2016introduction}. For the sake of completeness, we repeat the argument in the following proof.
\begin{lemma}[Convergence in Function Value with $\zeta$, $\tau$ and $Q$]
    Assume we have a problem instance satisfying \Cref{ass:convex,ass:smooth_second,ass:smooth_third,ass:bounded_optima,ass:stoch_bounded_second_moment,ass:stoch_bounded_fourth_moment,ass:tau,ass:zeta} and $$R\geq \max \cb{\frac{4}{K}, \frac{\sigma_2^2}{4H^2B^2MK}, \frac{4\tau \zeta}{BH}, \frac{Q\zeta^2}{8HB}, \frac{\tau \sigma_2}{4H^2 B\sqrt{K}}, \frac{\sigma_2 Q^{1/2}}{\sqrt{8BK}H^{3/2}}, \frac{Q^{1/2}\sigma_4}{\sqrt{8B}H^{3/2}K^{3/4}}}\enspace.$$ 
    We run Local SGD to optimize the $\mu$-strongly convex objective $F(x) + \frac{\mu \norm{x}^2}{2}$, where we pick $$\mu = \max\cb{\frac{2}{\eta KR}\ln\rb{2\eta H KR}, \sqrt{\frac{2\Phi'}{B^2}}}\enspace,$$ for
    $$\Phi' := 6\tau^2\rb{3 K^2 \eta^2 H^2 \zeta^2+  6 K \sigma_2^2 \eta^2} + 6Q^2\rb{2620\eta^4K^4 H^4 \zeta^4 + 5000\eta^4 K^2\sigma_2^4 + 320\eta^4\sigma_4^4K}\enspace,$$
    and use the step-size,
    $$\eta = \min \cb{\frac{1}{2H}, \sqrt{\frac{B}{\tau K^2 RH\zeta}}, \sqrt{\frac{B}{\tau K^{3/2}R\sigma_2}}, \sqrt[3]{\frac{B}{QK^3RH^2\zeta^2}}, \sqrt[3]{\frac{B}{QK^2R\sigma_2^2}}, \sqrt[3]{\frac{B}{QK^{3/2}R\sigma_4^2}}, \sqrt{\frac{B^2M}{\sigma_2^2KR}}}\enspace.$$
    Also define the weighted Local SGD iterate $\hat x := \frac{1}{W}\sum_{t\in[0,T-1]}w_{t}x_t$ where $w_t := \rb{1-\frac{\eta\mu}{2}}^{T-1-t}$ and their sum $W := \sum_{t=0}^{T-1}w_t$. Then we can get the following guarantee for $\hat x$ (where $x_0=0$ and $x^\star\in S^\star$),
    \begin{align*}
        \ee\sb{F\rb{\hat x}} - F(x^\star) &= \tilde\ooo\Bigg(\frac{HB^2}{KR} + \frac{\sqrt{\tau H\zeta B^3}}{R^{1/2}} + \frac{\sqrt{\tau\sigma_2B^3}}{K^{1/4}R^{1/2}} + \frac{Q^{1/3}B^{5/3}H^{2/3}\zeta^{2/3}}{R^{2/3}}+ \frac{Q^{1/3}B^{5/3}\sigma_2^{2/3}}{K^{1/3}R^{2/3}}\\ 
        &\qquad\qquad  + \frac{Q^{1/3}B^{5/3}\sigma_4^{2/3}}{K^{1/2}R^{2/3}} + \frac{\sigma_2 B}{\sqrt{MKR}}\Bigg)\enspace.
    \end{align*}
\end{lemma}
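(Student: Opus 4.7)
The strategy is the standard convex-to-strongly-convex reduction via $\ell_2$ regularization. First I would define the regularized per-machine objectives $F_{m,\mu}(x) := F_m(x) + \frac{\mu}{2}\|x\|^2$, so that $F_\mu(x) = \frac{1}{M}\sum_m F_{m,\mu}(x) = F(x) + \frac{\mu}{2}\|x\|^2$ is $\mu$-strongly convex, $(H+\mu)$-smooth, and inherits the third-order smoothness $Q$, the second-order heterogeneity $\tau$, and the first-order heterogeneity $\zeta$ unchanged (since adding the same regularizer on every machine does not change cross-machine differences in Hessians or gradients). The noise and fourth-moment assumptions are also unchanged. Provided that $\mu \leq H$ and that the hypotheses $KR \geq 4\kappa \ln 2$ and $\epsilon \leq \mu B^2/2$ of the preceding strongly convex lemma hold, I may apply that lemma to $F_\mu$ with unique minimizer $x_\mu^\star$ and output $\hat x$, obtaining
\begin{align*}
\mathbb{E}[F_\mu(\hat x)] - F_\mu(x_\mu^\star) &= \tilde{\mathcal{O}}\!\left(\mu B^2 e^{-KR/4\kappa} + \frac{\tau^2 H^2 \zeta^2}{\mu^3 R^2} + \frac{\tau^2 \sigma_2^2}{\mu^3 KR^2} + \frac{Q^2 H^4 \zeta^4}{\mu^5 R^4} + \frac{Q^2 \sigma_2^4}{\mu^5 K^2 R^4} + \frac{Q^2 \sigma_4^4}{\mu^5 K^3 R^4} + \frac{\sigma_2^2}{\mu M K R}\right).
\end{align*}

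Next, I would translate this to a bound on the unregularized sub-optimality using the elementary inequality $F(\hat x) - F(x^\star) \leq F_\mu(\hat x) - F_\mu(x_\mu^\star) + \frac{\mu}{2}\|x^\star\|^2 \leq F_\mu(\hat x) - F_\mu(x_\mu^\star) + \frac{\mu B^2}{2}$, which is quoted in the surrounding text and follows from $F(\hat x) \leq F_\mu(\hat x)$ and $F_\mu(x_\mu^\star) \leq F_\mu(x^\star) = F(x^\star) + \frac{\mu}{2}\|x^\star\|^2$. Adding $\frac{\mu B^2}{2}$ dominates the exponential term (since $\mu B^2 e^{-KR/4\kappa} \leq \mu B^2$), so after this step every remaining contribution is either a positive power of $\mu$ times $B^2$ (statistical bias of the regularizer) or a negative power of $\mu$ times a heterogeneity/noise term.

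The main technical step is then to tune $\mu$ term-by-term. For each heterogeneity/noise term $\mu^{-k}\Psi$, balancing against $\mu B^2$ gives the optimal $\mu \propto (\Psi/B^2)^{1/(k+1)}$ with optimized value $\propto B^{2k/(k+1)} \Psi^{1/(k+1)}$. Applying this to the second-order heterogeneity and third-order smoothness terms with $k=3$ and $k=5$ respectively yields the powers $1/2$ and $1/3$ displayed in the theorem: for instance $\mu^{-3}\tau^2 H^2\zeta^2/R^2$ gives $\sqrt{\tau H \zeta B^3 / R^{1/2}}^{\,\cdot 2} \!/\ldots$ which produces $\sqrt{\tau H \zeta B^3}/R^{1/2}$, and similarly the $Q$-terms yield the $Q^{1/3} B^{5/3}/R^{2/3}$ terms with the appropriate $K$-dependence. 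The variance term $\sigma_2^2/(\mu M K R)$ is balanced with $k=1$ to produce $\sigma_2 B/\sqrt{MKR}$. The $HB^2/(KR)$ term arises from a residual contribution when the exponential $e^{-KR/4\kappa}$ is controlled by choosing $\mu$ so that the step-size $\eta = 1/(2H)$ is saturated and the optimization error term dominates. Formally, I pick $\mu$ to be the maximum of all these optimized choices (this is exactly what the theorem prescribes via the $\max$ over $\sqrt{2\Phi'/B^2}$ and $(\ln \cdot)/(\eta K R)$), so that $\mu B^2$ upper-bounds each negative-power-of-$\mu$ term after optimization.

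The main obstacle will be bookkeeping rather than ideas: one must verify that the prescribed choice of $\mu$ simultaneously (i) is at most $H$ so that smoothness is preserved with the same constant up to factors of $2$, (ii) makes $\kappa = H/\mu$ small enough that the condition $KR \geq 4\kappa \ln 2$ is met (this is why the theorem imposes its lower bound on $R$), (iii) makes $\epsilon \leq \mu B^2 / 2$, and (iv) makes the step-size choice $\eta$ in the theorem compatible with the bound $\eta \leq 1/(2H)$ used in the strongly convex lemma. Each of the listed lower bounds on $R$ in the hypothesis corresponds to one of these feasibility constraints for one of the terms in the convergence rate (for example $R \geq \tau\zeta/(BH)$ ensures $\sqrt{2\Phi'/B^2}$ from the $\tau$-terms is at most $H$). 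With these verifications in place, combining the regularized bound with $\mu B^2/2$ and simplifying term-by-term using the optimized $\mu$ yields the stated rate, where $\tilde{\mathcal{O}}$ absorbs logarithmic factors from the step-size tuning in the strongly convex lemma.
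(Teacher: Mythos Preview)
Your high-level plan---regularize, invoke the strongly convex guarantee on $F_\mu$, add $\frac{\mu B^2}{2}$, and balance---is exactly what the paper does. The execution diverges in a way that matters for matching the theorem's prescribed hyperparameters, though. You propose to start from the \emph{final} $\tilde{\mathcal O}$ output of the strongly convex lemma (where $\eta$ has already been tuned as a function of $\mu$) and then optimize $\mu$; this would yield the same rates but an $\eta$ that depends on $\mu$, contradicting the theorem statement in which $\eta$ is $\mu$-independent while $\mu$ is a function of $\eta$. The paper instead reaches back into the strongly convex proof and extracts the \emph{pre-step-size-tuning} inequality
\[
\mathbb{E}[F_\mu(\hat x)]-F_\mu(x_\mu^\star)\;\le\;2HB^2e^{-\eta\mu KR/2}+\tfrac{1}{\mu}\Phi'+\tfrac{\eta\sigma_2^2}{M}\,,
\]
chooses $\mu=\max\{\mu_1(\eta),\mu_2(\eta)\}$ to balance the exponential and the $\Phi'/\mu$ terms against $\mu B^2/2$, obtains $\frac{B^2}{\eta KR}(1+\ln(2\eta HKR))+\sqrt{2\Phi'B^2}+\frac{\eta\sigma_2^2}{M}$, expands $\sqrt{\Phi'B^2}$ term-by-term (each term is $\eta$ or $\eta^2$ times a constant), and only \emph{then} tunes $\eta$ against $B^2/(\eta KR)$. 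This order is what produces the $\mu$-independent $\eta$ listed in the statement.

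Your attribution of the constraints on $R$ is also off. They do not come from $KR\geq 4\kappa\ln 2$ or from enforcing $\mu\leq H$; they come from the requirement that the exponential contraction term be at most $1/2$, which with the choice $\mu_1=\frac{2}{\eta KR}\ln(2\eta HKR)$ reduces to $\eta\geq \frac{2}{HKR}$. Imposing this on each candidate $\eta_i$ in the min defining $\eta$ yields exactly the seven lower bounds on $R$ in the hypothesis. Once you reorder the tuning and trace the constraints this way, the proof goes through as stated.
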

\begin{proof}
Let $F(x)$ be a convex function. We construct a regularized version of this function $F_\mu(x)$ as:
\begin{align*}
    F_\mu(x) = F(x) + \frac{\mu}{2} \norm{x-x_0}^2 \enspace,
\end{align*}
where $\mu >0$. Next we define: 
\begin{align*}
    x^\star_\mu &= \underset{x}{\arg\min} \hspace{1mm}F_\mu(x)  \,,\\
    x^\star &\in \underset{x}{\arg\min}\hspace{1mm} F(x) \,.
\end{align*}
Note that we have $F_\mu(x^\star_\mu) \leq F_\mu(x^\star)$. Then we upper bound the function sub-optimality for the function $F$ for some point $\hat x\in\rr^d$:
\begin{align*}
    F(\hat x) - F(x^\star) &= F_\mu(\hat x) - \frac{\mu}{2} \norm{\hat x - x_0}^2 - F_\mu(x^\star) + \frac{\mu}{2} \norm{x^\star - x_0}^2\enspace,\\
    &\leq F_\mu(\hat x)- F_\mu(x^\star)+ \frac{\mu}{2} \norm{x^\star - x_0}^2\enspace,\\
    &\leq F_\mu(\hat x)- F_\mu(x^\star_\mu)+ \frac{\mu}{2} \norm{x^\star - x_0}^2\enspace,\\
    &\leq F_\mu(\hat x)- F_\mu(x^\star_\mu)+ \frac{\mu}{2}B^2\enspace,
\end{align*}
where in the last inequality we use \Cref{ass:bounded_optima} and $x_0=0$ which matches the setting in which we choose to run Local SGD. Since the choice of $\epsilon$ was arbitrary we can tune $\mu$ in the above upper bound to get the tightest possible upper bound. 

We first recall the upper bound from the previous lemma for optimizing $F_\mu$ before tuning $\eta$,
\begin{align*}
    \ee\sb{F_\mu\rb{\frac{1}{W}\sum_{t\in[0,T-1]}w_tx_t}} - F_\mu(x_\mu^\star) &\leq 2\mu B^2e^{-\eta\mu T/2} + \Phi\enspace,\\
    &\leq^{(H\geq \mu)} 2HB^2e^{-\eta\mu T/2} + \frac{1}{\mu}\Phi' + \frac{\eta\sigma_2^2}{M} \enspace,
\end{align*}
where we define $$\Phi' := 6\tau^2\rb{3 K^2 \eta^2 H^2 \zeta^2+  6 K \sigma_2^2 \eta^2} + 6Q^2\rb{2620\eta^4K^4 H^4 \zeta^4 + 5000\eta^4 K^2\sigma_2^4 + 320\eta^4\sigma_4^4K}\enspace,$$ and note that $\Phi'$ does not depend on $\mu$. This leaves us with two terms in the upper bounds to balance with $\frac{\mu B^2}{2}$. To balance the first term with $\frac{\mu B^2}{2}$ we need to choose $\mu_1 = \frac{2}{\eta KR}\ln\rb{2\eta H KR}$. And to balance the second term with $\frac{\mu B^2}{2}$ we choose $\mu_2 = \sqrt{\frac{2\Phi'}{B^2}}$. This motivates us to pick $\mu = \max\cb{\mu_1, \mu_2}$.  
This choice of $\mu$ gives us the following upper bound,
\begin{align*}
    \ee\sb{F\rb{\frac{1}{W}\sum_{t\in[0,T-1]}w_tx_t}} - F(x^\star) &\leq \ee\sb{F_\mu\rb{\frac{1}{W}\sum_{t\in[0,T-1]}w_tx_t}} - F_\mu(x_\mu^\star) + \frac{\mu B^2}{2}\enspace,\\
    &\leq 2HB^2e^{-\eta\mu T/2} + \frac{1}{\mu}\Phi'+ \frac{\mu B^2}{2}+ \frac{\eta\sigma_2^2}{M}\enspace,\\
    &\leq^{\text{(a)}} \textcolor{red}{2HB^2e^{-\eta\mu_1 T/2} + \frac{1}{\mu_2}\Phi' + \frac{\mu_1 B^2}{2}+ \frac{\mu_2 B^2}{2}+ \frac{\eta\sigma_2^2}{M}}\enspace,
\end{align*}
where in order to see why (a) is true, note that there are two cases, 
\begin{itemize}
    \item When $\mu_1\geq \mu_2$, then we pick $\mu = \mu_1$ and $\frac{1}{\mu_2}\Phi' \leq \frac{\mu_2B^2}{2}$, so the second term is upper bounded by the fourth term in the \textcolor{red}{red upper bound}. The first and the third term in the \textcolor{red}{red upper bound} simply appear from the choice of $\mu=\mu_1$. This makes the upper bound valid. 
    \item Similarly, when $\mu_2\geq \mu_1$, then we pick $\mu = \mu_2$ and $2HB^2e^{-\eta\mu_1 T/2}\leq \frac{\mu_1B^2}{2}$, so the first term is upper bounded by the third term in the \textcolor{red}{red upper bound}. The second and the fourth term in the \textcolor{red}{red upper bound} simply appear from the choice of $\mu=\mu_2$. This makes the upper bound valid. 
\end{itemize}
We can further simplify the \textcolor{red}{red upper bound} as follows,
\begin{align*}
    &\ee\sb{F\rb{\frac{1}{W}\sum_{t\in[0,T-1]}w_tx_t}} - F(x^\star)\\ 
    &\quad\leq 2HB^2e^{-\eta\mu_1 T/2} + \frac{1}{\mu_2}\Phi' + \frac{\mu_1 B^2}{2}+ \frac{\mu_2 B^2}{2}+ \frac{\eta\sigma_2^2}{M}\enspace,\\
    &\quad\leq \frac{B^2}{\eta KR}\rb{1 + \ln\rb{2\eta HKR}} + \sqrt{2\Phi'B^2}+ \frac{\eta\sigma_2^2}{M}\enspace,\\
    &\quad\leq^{\text{(Triangle Inequality)}} \frac{B^2}{\eta KR}\rb{1 + \ln\rb{2\eta HKR}} + 6\eta\tau K H\zeta B + 12\eta \tau \sqrt{K}\sigma_2B + 178\eta^2QBK^2H^2\zeta^2\\ 
    &\qquad + 245\eta^2QBK\sigma_2^2 + 62\eta^2 QB\sqrt{K}\sigma_4^2+ \frac{\eta\sigma_2^2}{M}\enspace.
\end{align*}
We again have all terms increasing in $\eta$ except for the first term. Balancing all terms, and recalling that $\eta \leq \frac{1}{2H}$ we choose the step-size as,
\begin{align*}
    \eta &= \min \cb{\frac{1}{2H}, \sqrt{\frac{B}{\tau K^2 RH\zeta}}, \sqrt{\frac{B}{\tau K^{3/2}R\sigma_2}}, \sqrt[3]{\frac{B}{QK^3RH^2\zeta^2}}, \sqrt[3]{\frac{B}{QK^2R\sigma_2^2}}, \sqrt[3]{\frac{B}{QK^{3/2}R\sigma_4^2}}, \sqrt{\frac{B^2M}{\sigma_2^2KR}}}\enspace,\\
    &=: \min\cb{\eta_1,\eta_2,\eta_3,\eta_4,\eta_5,\eta_6,\eta_7}
\end{align*}
Now to get the final convergence for each term which is increasing in $\eta$, we bound it using the step-size choice that balances it with the first term, and for the first term we upper bound it by summing across all possible step-size choices,
\begin{align*}
    &\ee\sb{F\rb{\frac{1}{W}\sum_{t\in[0,T-1]}w_tx_t}} - F(x^\star)\\
    &\quad\leq \sum_{i=1}^{7}\frac{B^2}{\eta_i KR}\rb{1 + \ln\rb{2\eta_iHKR}} + 6\eta_2\tau K H\zeta B + 12\eta_3 \tau \sqrt{K}\sigma_2B + 178\eta_4^2QBK^2H^2\zeta^2\\ 
    &\qquad + 245\eta_5^2QBK\sigma_2^2 + 62\eta_6^2 QB\sqrt{K}\sigma_4^2+ \frac{\eta_7\sigma_2^2}{M}\enspace,\\
    &\quad= \frac{2HB^2}{KR}\rb{1 + \ln\rb{KR}} + \frac{\sqrt{\tau H\zeta B^3}}{R^{1/2}}\rb{7 + \ln\rb{\frac{2\sqrt{BHR}}{\sqrt{\tau \zeta}}}}\\ 
    &\qquad + \frac{\sqrt{\tau\sigma_2B^3}}{K^{1/4}R^{1/2}}\rb{13+ \ln\rb{\frac{2\sqrt{BH^2}K^{1/4}R^{1/2}}{\sqrt{\tau\sigma_2}}}}+ \frac{Q^{1/3}B^{5/3}H^{2/3}\zeta^{2/3}}{R^{2/3}}\rb{179 + \ln\rb{\frac{2\sqrt[3]{HBR^2}}{Q^{1/3}\zeta^{2/3}}}} \\
    &\qquad  + \frac{Q^{1/3}B^{5/3}\sigma_2^{2/3}}{K^{1/3}R^{2/3}}\rb{246 + \ln\rb{\frac{2HK^{1/3}R^{2/3}B^{1/3}}{Q^{1/3}\sigma_2^{2/3}}}} + \frac{Q^{1/3}B^{5/3}\sigma_4^{2/3}}{K^{1/2}R^{2/3}}\rb{63 + \ln\rb{\frac{2HK^{1/2}R^{2/3}B^{1/3}}{Q^{2/3}\sigma_4^{2/3}}}}\\
    &\qquad  + \frac{\sigma_2 B}{\sqrt{MKR}}\rb{2 + \ln\rb{\frac{2HB\sqrt{MKR}}{\sigma_2}}}\enspace.
\end{align*}
It is worth noting that in the above upper bound in the interesting regimes when $Q,\ \tau,\ \zeta,\ \sigma_2,\ \sigma_4$ tend to zero or $K,\ R$ tend to infinity, the bound doesn't blow up. This would be important when we discuss extreme regimes in the main body of the thesis. Now ignoring the numerical constants and the logarithmic terms results in the following rate,
\begin{align*}
    \ee\sb{F\rb{\frac{1}{W}\sum_{t\in[0,T-1]}w_tx_t}} - F(x^\star)&= \tilde\ooo\Bigg(\frac{HB^2}{KR} + \frac{\sqrt{\tau H\zeta B^3}}{R^{1/2}} + \frac{\sqrt{\tau\sigma_2B^3}}{K^{1/4}R^{1/2}} + \frac{Q^{1/3}B^{5/3}H^{2/3}\zeta^{2/3}}{R^{2/3}}\\ 
    &\quad + \frac{Q^{1/3}B^{5/3}\sigma_2^{2/3}}{K^{1/3}R^{2/3}} + \frac{Q^{1/3}B^{5/3}\sigma_4^{2/3}}{K^{1/2}R^{2/3}} + \frac{\sigma_2 B}{\sqrt{MKR}}\Bigg)\enspace,
\end{align*}
which proves the lemma's upper bound. As a final step, recall that the previous lemma's proof assumed that $(1-\eta\mu/2)^{T/2}\leq \frac{1}{2}$. To see how to ensure this note that it is enough to prove $e^{-\eta \mu KR/4}\leq \frac{1}{2}$. This is a decreasing function in $\eta$ and $\mu$. Since we pick maximum value of $\mu$ out of $\mu_1$ and $\mu_2$, it is enough to show that $e^{-\eta \mu_1 KR/4}\leq \frac{1}{2}$, i.e., $e^{-\ln(2\eta HKR)/2} \leq 1/2$. Simplifying further, this reduces to $\eta \geq \frac{2}{HKR}$. Now potential choice of $\eta$ will result in some constraint as follows:
\begin{enumerate}
    \item $\eta_1 \geq \frac{2}{HKR}$ which gives the constraint $KR \geq 4$;
    \item $\eta_2 \geq \frac{2}{HKR}$ which gives the constraint $R\geq \frac{4\tau \zeta}{BH}$;
    \item $\eta_3 \geq \frac{2}{HKR}$ which gives the constraint $K^{1/2}R \geq \frac{\tau \sigma_2}{4H^2 B}$;
    \item $\eta_4 \geq \frac{2}{HKR}$ which gives the constraint $R\geq \frac{Q\zeta^2}{8HB}$;
    \item $\eta_5 \geq \frac{2}{HKR}$ which gives the constraint $K^{1/2}R \geq \frac{\sigma_2 Q^{1/2}}{\sqrt{8B}H^{3/2}}$;
    \item $\eta_6 \geq \frac{2}{HKR}$ which gives the constraint $K^{3/4}R \geq \frac{Q^{1/2}\sigma_4}{\sqrt{8B}H^{3/2}}$; and
    \item $\eta_7 \geq \frac{2}{HKR}$ which gives the constraint $KR \geq \frac{\sigma_2^2}{4H^2B^2M}$.
\end{enumerate}
This finishes the proof.
\end{proof}


\section{Double Recursions for Consensus Error}\label{app:double_recursions}
In this section, we will relate the consensus error to the iterate errors of the previous communication round. This would allow us to obtain more fine-grained upper bounds on consensus error, which would decay over time and with increased communication. More importantly, this would allow us to remove the dependence on $\zeta$, i.e., \Cref{ass:zeta}.

\subsection{Second Moment of the Consensus Error}\label{app:second_consensus}
We can prove the following bound on the second moment of the consensus error using \Cref{ass:zeta_star,ass:tau}.
\begin{lemma}\label{lem:consensus_error_second_recursion}
Assume we have a problem instance satisfying \Cref{ass:strongly_convex,ass:smooth_second,ass:stoch_bounded_second_moment,ass:zeta_star,ass:phi_star,ass:tau} with continuously doubly differentiable objective functions. Then for all $t\in[0,T]$ assuming $\eta < \frac{1}{H}$ we have for the Local SGD iterates,
    \begin{align*}
        C(t)&\leq 2\eta^2 H^2(t-\delta(t))^2\zeta_\star^2 + \frac{2\eta^3\tau^2(t-\delta(t))^2)\sigma_2^2}{\mu} + 2\eta^2\sigma_2^2(t-\delta(t))\ln(t-\delta(t))\\
        &\quad + 4\eta^2\tau^2\rb{t-\delta(t)}^2(1-\eta \mu)^{2(t-1-\delta(t))}\rb{A(\delta(t)) + \phi_\star^2}\enspace,\\
    &\leq 2\eta^2 H^2K^2\zeta_\star^2 + \frac{2\eta^3\tau^2K^2\sigma_2^2}{\mu} + 2\eta^2\sigma_2^2K\ln(K)+ 4\eta^2\tau^2K^2\rb{A(\delta(t)) + \phi_\star^2}\enspace.
    \end{align*}
    This also implies that for $r\in[R]$,
    \begin{align*}
        \sum_{j=K(r-1)}^{Kr-1}(1-\eta\mu)^{Kr-1-j}C(j) &\leq \frac{1-(1-\eta\mu)^K}{\eta\mu}\rb{2\eta^2 H^2K^2\zeta_\star^2 + \frac{2\eta^3\tau^2K^2\sigma_2^2}{\mu} + 2\eta^2\sigma_2^2K\ln(K)}\\
        &\quad + \frac{1-(1-\eta\mu)^K}{\eta\mu}4\eta^2\tau^2K^2(1-\eta\mu)^{K-2}\rb{A(K(r-1)) + \phi_\star^2}\enspace.
    \end{align*}
\end{lemma}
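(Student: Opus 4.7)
The plan is to derive a one-step recursion for $\mathbb{E}\|x_t^m - x_t^n\|^2$ between communication rounds, bound the gradient-discrepancy term by Taylor-expanding $\nabla F_m - \nabla F_n$ around the local minimizer $x_n^\star$, control the resulting iterate-error by \Cref{lem:single_second_recursion}, and finally unroll from the last synchronization. Concretely, starting from the Local SGD update
\[
    x_{t+1}^m - x_{t+1}^n = (x_t^m - x_t^n) - \eta\bigl(\nabla F_m(x_t^m) - \nabla F_n(x_t^n)\bigr) + \eta(\xi_t^n - \xi_t^m),
\]
I would split $\nabla F_m(x_t^m) - \nabla F_n(x_t^n) = \nabla^2 F_m(c_t^{m,n})(x_t^m - x_t^n) + [\nabla F_m - \nabla F_n](x_t^n)$ via the mean value theorem, use $\|I - \eta\nabla^2 F_m(c_t^{m,n})\| \leq 1-\eta\mu$ from \Cref{ass:strongly_convex}, handle the independent zero-mean noise via \Cref{lem:stoch_diff_second}, and apply \Cref{lem:mod_am_gm} to obtain
\[
    \mathbb{E}\|x_{t+1}^m - x_{t+1}^n\|^2 \leq (1+\gamma)(1-\eta\mu)^2\mathbb{E}\|x_t^m - x_t^n\|^2 + \bigl(1+1/\gamma\bigr)\eta^2\,\mathbb{E}\|[\nabla F_m - \nabla F_n](x_t^n)\|^2 + 2\eta^2\sigma_2^2.
\]

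The key heterogeneity bound comes from Taylor-expanding $\nabla F_m - \nabla F_n$ around $x_n^\star$. Using $\nabla F_n(x_n^\star)=0$ gives
\[
[\nabla F_m - \nabla F_n](x_t^n) = \nabla F_m(x_n^\star) + \int_0^1 \bigl[\nabla^2 F_m - \nabla^2 F_n\bigr]\bigl(x_n^\star + s(x_t^n - x_n^\star)\bigr)(x_t^n - x_n^\star)\,ds,
\]
and since $\nabla F_m(x_n^\star) = \nabla F_m(x_n^\star) - \nabla F_m(x_m^\star)$, \Cref{ass:smooth_second,ass:zeta_star,ass:tau} yield
\[
\|[\nabla F_m - \nabla F_n](x_t^n)\|^2 \leq 2H^2\zeta_{\star,m,n}^2 + 2\tau^2\|x_t^n - x_n^\star\|^2.
\]
\Cref{lem:single_second_recursion} then bounds the second factor by $(1-\eta\mu)^{2(t-\delta(t))}\|x_{\delta(t)}-x_n^\star\|^2 + \eta\sigma_2^2/\mu$, and \Cref{ass:phi_star} together with $\|x_{\delta(t)} - x_n^\star\|^2 \leq 2A(\delta(t)) + 2\phi_{\star,n}^2$ replaces the initial term by the round-start iterate error plus $\phi_\star^2$. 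Averaging over $m,n\in[M]$ cleanly produces $\zeta_\star^2$ and $\phi_\star^2$ at the correct places.

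Next, I would unroll the one-step recursion from $\delta(t)$, using $C(\delta(t))=0$ since all machines synchronize. The canonical choice $\gamma = 1/(K-1)$ gives $(1+\gamma)^{t-1-\delta(t)}\leq e$ and $(1+1/\gamma)\leq K$; the geometric decay factor $(1-\eta\mu)^{2(t-1-s)}\cdot(1-\eta\mu)^{2(s-\delta(t))} = (1-\eta\mu)^{2(t-1-\delta(t))}$ in the $A(\delta(t))+\phi_\star^2$ contribution is independent of $s$, so summing yields the fourth term with the $(t-\delta(t))^2$ prefactor. The constant contributions from $\zeta_\star^2$ and from $\eta\sigma_2^2/\mu$ likewise multiply by $(t-\delta(t))$ after summation, producing the first two claimed terms. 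The final statement for $\sum_j(1-\eta\mu)^{Kr-1-j}C(j)$ follows by weighted telescoping using $\sum_{j=K(r-1)}^{Kr-1}(1-\eta\mu)^{Kr-1-j} \leq (1-(1-\eta\mu)^K)/(\eta\mu)$ and monotonicity of the round-wise bound.

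The main technical obstacle I anticipate is the $\sigma_2^2(t-\delta(t))\ln(t-\delta(t))$ term: a naive unroll with constant $\gamma=1/(K-1)$ produces a $K\cdot(t-\delta(t))$ noise contribution that is loose by a factor of $K/\ln K$. To recover the tighter $\ln K$ factor, I would instead separate the pure-noise contribution out of the recursion at each step (since it does not enter the $\gamma$-based amplification of the drift) and analyze it through a direct martingale argument on $\eta\sum_{s=\delta(t)}^{t-1}(\xi_s^m-\xi_s^n)$, carefully re-merging its cross-interaction with the drift terms via a time-varying $\gamma_s$ that converts $\sum_s 1/(s-\delta(t))$ into the harmonic $\ln(t-\delta(t))$. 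A secondary bookkeeping challenge is tracking the $\phi_\star^2$-versus-$\zeta_\star^2$ distinction cleanly; the Taylor expansion around $x_n^\star$ (rather than around $x^\star$) is what forces $\zeta_\star$---not $\phi_\star$---to appear in the dominant first-order-heterogeneity term, which matches the lower-bound structure from \Cref{ch:baselines_and_lower_bounds}.
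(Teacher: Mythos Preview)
Your proposal is correct and follows essentially the same route as the paper: the mean-value decomposition, the Taylor expansion of $\nabla F_m-\nabla F_n$ around $x_n^\star$ to extract $H\zeta_{\star,m,n}$ and $\tau\|x_t^n-x_n^\star\|$, the appeal to \Cref{lem:single_second_recursion}, and the time-varying $\gamma$ are all exactly what the paper does. One simplification: you do not need a separate martingale extraction for the noise---the paper simply takes $\gamma_j=j-\delta(t)$ (your $1/\gamma_s$) from the outset, and because the $2\eta^2\sigma_2^2$ term already sits outside the $(1+\gamma_j)$ amplification, unrolling gives it only the telescoping factor $\prod_{i>j}(1+1/\gamma_i)=(t-\delta(t))/(j+1-\delta(t))$, whose sum over $j$ is precisely the harmonic $\ln(t-\delta(t))$.
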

\begin{proof}
Note the following about the difference of iterates on two machines $m,n\in[M]$ for some time $t>\delta(t)$ (for $t=\delta(t)$ the l.h.s. is zero),
\begin{align*}
    \ee\sb{\norm{x^m_t - x^n_t}^2} &= \ee\sb{\norm{x^m_{t-1} - x^n_{t-1} - \eta g^m_{t-1} + \eta g^n_{t-1}}^2}\enspace,\\
    &\leq^{\text{(\Cref{lem:stoch_diff_second}), (a)}} \ee\sb{\norm{x^m_{t-1} - x^n_{t-1} - \eta \left(\nabla F_m (x^m_{t-1}) - \nabla F_n (x^n_{t-1})\right)}^2} + 2\eta^2 \sigma_2^2\enspace,\\
    &=^{\text{(b)}} \ee\sb{\norm{x^m_{t-1} - x^n_{t-1} - \eta \left( \nabla F_m (x^m_{t-1}) - \nabla F_m (x^n_{t-1}) \right) - \eta \left(\nabla F_m (x^n_{t-1})- \nabla F_n (x^n_{t-1} \right)}^2}\\ 
    &\quad + 2\eta^2 \sigma_2^2\enspace,
\end{align*}
where in (a) we exploited the fact that $\xi_t^m\perp\xi_t^n|\hhh_t$ and $x_{t-1}^m, x_{t-1}^n\in m\hhh_t$ as well as used tower rule to introduce conditional expectation; and in (b) we added and subtracted the term $\nabla F_m(x^n_{t-1})$. By mean value theorem we know that there exists a $c = x^n_{t-1} + \theta (x^m_{t-1} - x^n_{t-1})$ for some $\theta \in [0,1]$ such that: 
\begin{align*}
    \nabla F_m (x^m_{t-1}) - \nabla F_m (x^n_{t-1}) = \nabla^2 F_m(c) (x^m_{t-1} - x^n_{t-1})
\end{align*}
Using this in the above inequality, we get: 
\begin{align*}
    \ee\sb{\norm{x^m_t - x^n_t}^2} &\leq \ee\sb{\norm{x^m_{t-1} - x^n_{t-1} - \eta \nabla^2 F_m(c) (x^m_{t-1} - x^n_{t-1})- \eta \left(\nabla F_m (x^n_{t-1})- \nabla F_n (x^n_{t-1}) \right)}^2}+ 2\eta^2 \sigma_2^2\enspace,\\
    &= \ee\sb{\norm{\left(I - \eta \nabla^2 F_m(c)\right) (x^m_{t-1} - x^n_{t-1}) - \eta \left(\nabla F_m (x^n_{t-1})- \nabla F_n (x^n_{t-1}) \right)}^2} + 2\eta^2 \sigma_2^2\enspace,\\
    &\leq^{\text{(a)}} \rb{1+\frac{1}{\gamma_{t-1}}}\ee\sb{\norm{\left(I - \eta \nabla^2 F_m(c)\right) (x^m_{t-1} - x^n_{t-1})}^2}\\ 
    &\quad + \rb{1+ \gamma_{t-1}}\eta^2 \ee\sb{\norm{\nabla F_m (x^n_{t-1})- \nabla F_n (x^n_{t-1})}^2} + 2\eta^2 \sigma_2^2\enspace,\\
    &\leq^{\text{(b)}} \rb{1+\frac{1}{\gamma_{t-1}}}(1-\eta \mu)^2 \ee\sb{\norm{x^m_{t-1} - x^n_{t-1}}^2}\\ 
    &\quad + \rb{1+ \gamma_{t-1}}\eta^2 \ee\sb{\norm{\nabla F_m (x^n_{t-1})- \nabla F_n (x^n_{t-1})}^2} + 2\eta^2 \sigma_2^2\enspace,\\
    &=\rb{1+\frac{1}{\gamma_{t-1}}}(1-\eta \mu)^2\ee\sb{\norm{x^m_{t-1} - x^n_{t-1}}^2} + 2\eta^2 \sigma_2^2\\ 
    &\quad + \rb{1+ \gamma_{t-1}}\eta^2 \ee\sb{\norm{\nabla F_m (x^n_{t-1}) - \nabla F_m (x_n^\star) - \nabla F_n (x^n_{t-1}) + \nabla F_m (x_n^\star)}^2}\enspace,\\
    &\leq^{\text{(c)}} \rb{1+\frac{1}{\gamma_{t-1}}}(1-\eta \mu)^2\ee\sb{\norm{x^m_{t-1} - x^n_{t-1}}^2}\\ 
    &\quad +2\rb{1+ \gamma_{t-1}}\eta^2  \ee\sb{\norm{\nabla F_m (x^n_{t-1}) - \nabla F_m (x_n^\star) - \nabla F_n (x^n_{t-1}) + \nabla F_n(x_n^\star)}^2}\\
    &\quad + 2\rb{1+ \gamma_{t-1}}\eta^2 \ee\sb{\norm{\nabla F_m(x_n^\star) - \nabla F_m(x_m^\star)}^2} + 2\eta^2 \sigma_2^2\enspace,\\
    &\leq^{\text{(\Cref{ass:smooth_second,ass:zeta_star})}} \rb{1+\frac{1}{\gamma_{t-1}}}(1-\eta \mu)^2\ee\sb{\norm{x^m_{t-1} - x^n_{t-1}}^2}\\
    &\quad + \textcolor{blue}{2\rb{1+ \gamma_{t-1}}\eta^2\ee\sb{\norm{\nabla F_m (x^n_{t-1}) - \nabla F_n (x^n_{t-1}) - \rb{\nabla F_m (x_n^\star) - \nabla F_n(x_n^\star)}}^2}}\\
    &\quad + 2\rb{1+ \gamma_{t-1}}\eta^2H^2\zeta_{\star,m,n}^2 + 2\eta^2\sigma_2^2\enspace,
\end{align*}
where in (a) and (c) we used \Cref{lem:mod_am_gm}; and in (b) we used \Cref{ass:strongly_convex} and the fact that $\eta < 1/H$. We will again use the mean value theorem for the \textcolor{blue}{blue} term in the above inequality. For $v := x^n_{t-1} - x_n^\star$ we have: 
\begin{align*}
    &\nabla F_m (x^n_{t-1}) - \nabla F_n (x^n_{t-1}) - \rb{\nabla F_m (x_n^\star) - \nabla F_n(x_n^\star)}\\
    &\qquad = \int_0^1 \rb{\nabla^2 F_m(x_n^\star + tv) - \nabla^2 F_n(x_n^\star + tv)}v dt \enspace,\\
    &\Rightarrow \norm{\nabla F_m (x^n_{t-1}) - \nabla F_n (x^n_{t-1}) - \rb{\nabla F_m (x_n^\star) - \nabla F_n(x_n^\star)}}\\
    &\qquad\leq^{(\text{\Cref{ass:smooth_second,ass:smooth_third}})} \int_0^1 \norm{\nabla^2 F_m(x_n^\star + tv) - \nabla^2 F_n(x_n^\star + tv)}\norm{v} dt \enspace,\\
    &\Rightarrow \norm{\nabla F_m (x^n_{t-1}) - \nabla F_n (x^n_{t-1}) - \rb{\nabla F_m (x_n^\star) - \nabla F_n(x_n^\star)}} \leq^{(\text{\Cref{ass:tau}})} \tau\norm{x_{t-1}^n - x_n^\star}\enspace,
\end{align*}
where in the first implication above we use the fact that (i) $F_m(x_n^\star + tv) - F_n(x_n^\star + tv)$ is twice-continuously-differentiable\footnote{Recall that this is implied by \Cref{ass:smooth_third} as well.}; (ii) $\norm{\nabla^2 F_m(\cdot) - \nabla^2 F_n(\cdot)}$ is upper bounded due to \Cref{ass:smooth_second} and (iii) we are integrating over a finite domain which implies that the the function $\norm{\rb{\nabla^2 F_m(\cdot) - \nabla^2 F_n(\cdot)}v}$ is integrable over the finite domain $[0,1]$.

Plugging this into the inequality above gives the following,
\begin{align*}
    &\ee\sb{\norm{x^m_t - x^n_t}^2} \\
    &\leq \rb{1+\frac{1}{\gamma_{t-1}}}(1-\eta \mu)^2\ee\sb{\norm{x^m_{t-1} - x^n_{t-1}}^2}\\
    &\quad + 2\rb{1+ \gamma_{t-1}}\eta^2\tau^2\ee\sb{\norm{x_{t-1}^n - x_n^\star}^2} + 2\rb{1+ \gamma_{t-1}}\eta^2H^2\zeta_{\star,m,n}^2 + 2\eta^2\sigma_2^2\enspace,\\
    &\leq \rb{1+\frac{1}{\gamma_{t-1}}}(1-\eta \mu)^2\ee\sb{\norm{x^m_{t-1} - x^n_{t-1}}^2} + 2\rb{1+ \gamma_{t-1}}\eta^2H^2\zeta_{\star,m,n}^2 + 2\eta^2\sigma_2^2\\
    &\quad + 2\rb{1+ \gamma_{t-1}}\eta^2\tau^2\textcolor{red}{\rb{(1-\eta\mu)^{2(t-1-\delta(t))}\ee\sb{\norm{x_{\delta(t)} - x_n^\star}^2} + \rb{1- (1-\eta\mu)^{2(t-1-\delta(t))}}\frac{\eta\sigma_2^2}{\mu}}}\enspace,
    \end{align*}
    where in the last inequality above we just used an upper bound for the convergence of SGD on a single machine $n\in[M]$ (cf. \Cref{lem:single_second_recursion}). As a sanity check note that if $t-1=\delta(t)$ then the \textcolor{red}{red} term becomes $\ee\sb{\norm{x_{\delta(t)} - x_n^\star}^2}$. Continuing further and choosing $\gamma_j = j-\delta(j)$ (note that the term with $1/\gamma_{t-1}$ becomes becomes zero when $t-1=\delta(t)$ as $x_{t-1}^m = x_{t-1}^n$, making this choice well defined), this leads to,
    \begin{align}
    &\ee\sb{\norm{x^m_t - x^n_t}^2} \nonumber\\
    &\leq \prod_{j=\delta(t)}^{t-1}\rb{1 + \frac{1}{\gamma_{j}}}(1-\eta \mu)^2\ee\sb{\norm{x_{\delta(t)} - x_{\delta(t)}}^2}\nonumber\\
    &\quad + 2\eta^2\sum_{j=\delta(t)}^{t-1}\rb{\prod_{i=j+1}^{t-1}\rb{1 + \frac{1}{\gamma_{i}}}(1-\eta \mu)^2}\rb{(1+\gamma_{j})H^2\zeta_{\star,m,n}^2 + \sigma_2^2}\nonumber\\
    &\quad + 2\eta^2\tau^2\sum_{j=\delta(t)}^{t-1}\rb{\prod_{i=j+1}^{t-1}\rb{1 + \frac{1}{\gamma_{i}}}(1-\eta \mu)^2}\rb{1+ \gamma_{j}}(1-\eta\mu)^{2(j-\delta(t))}\ee\sb{\norm{x_{\delta(t)} - x_n^\star}^2}\nonumber\\
    &\quad + \frac{2\eta^3\tau^2\sigma_2^2}{\mu}\sum_{j=\delta(t)}^{t-1}\rb{\prod_{i=j+1}^{t-1}\rb{1 + \frac{1}{\gamma_{i}}}(1-\eta \mu)^2}\rb{1+ \gamma_{j}} \rb{1- (1-\eta\mu)^{2(j-\delta(t))}}\enspace,\nonumber\\
    &=  2\eta^2\sum_{j=\delta(t)}^{t-1}\rb{\prod_{i=j+1}^{t-1}\rb{1 + \frac{1}{\gamma_{i}}}}(1-\eta \mu)^{2(t-1-j)}\rb{(1+\gamma_{j})H^2\zeta_{\star,m,n}^2 + \sigma_2^2}\nonumber\\
    &\quad + 2\eta^2\tau^2\sum_{j=\delta(t)}^{t-1}\rb{\prod_{i=j+1}^{t-1}\rb{1 + \frac{1}{\gamma_{i}}}}(1-\eta \mu)^{2(t-1-j)}\rb{1+ \gamma_{j}}(1-\eta\mu)^{2(j-\delta(t))}\ee\sb{\norm{x_{\delta(t)} - x_n^\star}^2}\nonumber\\
    &\quad + \frac{2\eta^3\tau^2\sigma_2^2}{\mu}\sum_{j=\delta(t)}^{t-1}\rb{\prod_{i=j+1}^{t-1}\rb{1 + \frac{1}{\gamma_{i}}}}(1-\eta \mu)^{2(t-1-j)}\rb{1+ \gamma_{j}} \rb{1- (1-\eta\mu)^{2(j-\delta(t))}}\enspace,\nonumber\\
    &= 2\eta^2\sum_{j=\delta(t)}^{t-1}\rb{\prod_{i=j+1}^{t-1}\frac{i+1-\delta(t)}{i-\delta(t)}}(1-\eta \mu)^{2(t-1-j)}\rb{(j+1-\delta(t))H^2\zeta_{\star,m,n}^2 + \sigma_2^2}\nonumber\\
    &\quad + 2\eta^2\tau^2\sum_{j=\delta(t)}^{t-1}\rb{\prod_{i=j+1}^{t-1}\frac{i+1-\delta(t)}{i-\delta(t)}}\rb{j+1-\delta(t)}(1-\eta \mu)^{2(t-1-\delta(t))}\ee\sb{\norm{x_{\delta(t)} - x_n^\star}^2}\nonumber\\
    &\quad + \frac{2\eta^3\tau^2\sigma_2^2}{\mu}\sum_{j=\delta(t)}^{t-1}\rb{\prod_{i=j+1}^{t-1}\frac{i+1-\delta(t)}{i-\delta(t)}}\rb{j+1-
    \delta(t)} \rb{(1-\eta \mu)^{2(t-1-j)}- (1-\eta\mu)^{2(t-1-\delta(t))}}\enspace,\nonumber\\
    &= 2\eta^2\sum_{j=\delta(t)}^{t-1}\frac{t-\delta(t)}{j+1-\delta(t)}(1-\eta \mu)^{2(t-1-j)}\rb{(j+1-\delta(t))H^2\zeta_{\star,m,n}^2 + \sigma_2^2}\nonumber\\
    &\quad + 2\eta^2\tau^2\sum_{j=\delta(t)}^{t-1}\rb{t-\delta(t)}(1-\eta \mu)^{2(t-1-\delta(t))}\ee\sb{\norm{x_{\delta(t)} - x_n^\star}^2}\nonumber\\
    &\quad + \frac{2\eta^3\tau^2\sigma_2^2}{\mu}\sum_{j=\delta(t)}^{t-1}\rb{t-\delta(t)}\rb{(1-\eta \mu)^{2(t-1-j)}- (1-\eta\mu)^{2(t-1-\delta(t))}}\enspace,\nonumber\\
     &= 2\eta^2(t-\delta(t))\sum_{j=\delta(t)}^{t-1}(1-\eta \mu)^{2(t-1-j)}\rb{H^2\zeta_{\star,m,n}^2 + \frac{\sigma_2^2}{j+1-\delta(t)}}\nonumber\\
     &\quad + 2\eta^2\tau^2\rb{t-\delta(t)}^2(1-\eta \mu)^{2(t-1-\delta(t))}\ee\sb{\norm{x_{\delta(t)} - x_n^\star}^2}\nonumber\\
    &\quad + \frac{2\eta^3\tau^2\sigma_2^2}{\mu}\sum_{j=\delta(t)}^{t-1}\rb{t-\delta(t)}\rb{(1-\eta \mu)^{2(t-1-j)}- (1-\eta\mu)^{2(t-1-\delta(t))}}\enspace,\nonumber\\
    &\leq \textcolor{red}{2\eta^2(t-\delta(t))^2H^2\zeta_{\star,m,n}^2} + 2\eta^2(t-\delta(t))\sigma_2^2\sum_{j=\delta(t)}^{t-1}\frac{1}{j+1-\delta(t)}\nonumber\\
    &\quad + 2\eta^2\tau^2\rb{t-\delta(t)}^2(1-\eta \mu)^{2(t-1-\delta(t))}\ee\sb{\norm{x_{\delta(t)} - x_n^\star}^2} + \textcolor{red}{\frac{2\eta^3\tau^2\sigma_2^2\rb{t-\delta(t)}^2}{\mu}}\enspace,\nonumber\\
    &\leq^{\text{(a)}} \textcolor{red}{(t-\delta(t))^2\rb{2\eta^2 H^2\zeta_{\star,m,n}^2 + \frac{2\eta^3\tau^2\sigma_2^2}{\mu}}} + 2\eta^2\sigma_2^2(t-\delta(t))\ln(t-\delta(t))\nonumber\\
    &\quad + 4\eta^2\tau^2\rb{t-\delta(t)}^2(1-\eta \mu)^{2(t-1-\delta(t))}\rb{\ee\sb{\norm{x_{\delta(t)} - x^\star}^2} + \norm{x^\star-x_n^\star}^2}\enspace,\nonumber\\
    &\leq^{(\text{\Cref{ass:phi_star}})} (t-\delta(t))\rb{2\eta^2 H^2K\zeta_{\star,m,n}^2 + \frac{2\eta^3\tau^2K\sigma_2^2}{\mu} + 2\eta^2\sigma_2^2\ln(K)}\nonumber\\
    &\quad + 4\eta^2\tau^2\rb{t-\delta(t)}^2(1-\eta \mu)^{2(t-1-\delta(t))}\rb{A(\delta(t)) + \phi_{\star,n}^2}\enspace,\label{eq:consensus_second_specific}\\
    &\leq 2\eta^2 H^2K^2\zeta_{\star,m,n}^2 + \frac{2\eta^3\tau^2K^2\sigma_2^2}{\mu} + 2\eta^2\sigma_2^2K\ln(K) + 4\eta^2\tau^2\rb{t-\delta(t)}^2(1-\eta \mu)^{2(t-1-\delta(t))}\rb{A(\delta(t)) + \phi_{\star,n}^2}\enspace,\nonumber
\end{align}
where in (a) we combined the \textcolor{red}{red} terms into one, and used the fact that $$\frac{1-(1-\eta\mu)^{2(t-\delta(t))}}{1-(1-\eta\mu)^2} \leq \frac{1}{\eta\mu(2-\eta\mu)}\leq \frac{1}{\eta\mu}\enspace,$$ because $\eta < 1/H$ and used \Cref{lem:mod_am_gm}. As a sanity check, note that the above bound has the property that when $t=\delta(t)$, it automatically becomes zero (we adopt the notation that $0\cdot(-\infty)$ in the second term becomes $0$). Thus, we can safely drop the assumption that $t>\delta(t)$, making the above bound valid for all values of $t$. Finally, averaging the upper bound over $m,n\in[M]$ and noting the last few inequalities in the calculation proves the lemma's main upper bound. To get the other result, we will use this upper bound with some simplifications. In particular noting that $\delta(j)=K(r-1)$ we get,
\begin{align*}
    &\sum_{j=K(r-1)}^{Kr-1}(1-\eta\mu)^{Kr-1-j}C(j)\\
    &\leq \sum_{j=K(r-1)}^{Kr-1}(1-\eta\mu)^{Kr-1-j}\rb{2\eta^2 H^2K^2\zeta_\star^2 + \frac{2\eta^3\tau^2K^2\sigma_2^2}{\mu} + 2\eta^2\sigma_2^2K\ln(K)}\\
    &\quad + \sum_{j=K(r-1)}^{Kr-1}(1-\eta\mu)^{Kr-1-j}\rb{4\eta^2\tau^2\rb{j-K(r-1)}^2(1-\eta \mu)^{2(j-1-K(r-1))}\rb{A(K(r-1)) + \phi_\star^2}}\enspace,\\
    &= \frac{1-(1-\eta\mu)^K}{\eta\mu}\rb{2\eta^2 H^2K^2\zeta_\star^2 + \frac{2\eta^3\tau^2K^2\sigma_2^2}{\mu} + 2\eta^2\sigma_2^2K\ln(K)}\\
    &\quad + 4\eta^2\tau^2(1-\eta\mu)^{K-2}\rb{A(K(r-1)) + \phi_\star^2}\sum_{j=K(r-1)}^{Kr-1}(1-\eta\mu)^{j-1-K(r-1)}\rb{j-K(r-1)}^2\enspace,\\
    &\leq \frac{1-(1-\eta\mu)^K}{\eta\mu}\rb{2\eta^2 H^2K^2\zeta_\star^2 + \frac{2\eta^3\tau^2K^2\sigma_2^2}{\mu} + 2\eta^2\sigma_2^2K\ln(K)}\\
    &\quad + \frac{1-(1-\eta\mu)^K}{\eta\mu}4\eta^2\tau^2K^2(1-\eta\mu)^{K-2}\rb{A(K(r-1)) + \phi_\star^2}\enspace.
\end{align*}
This finishes the proof. 
\end{proof}


\subsection{Fourth Moment of the Consensus Error}\label{app:fourth_consensus}
\begin{lemma}\label{lem:consensus_error_fourth_recursion}
    Assume we have a problem instance satisfying \Cref{ass:strongly_convex,ass:smooth_second,ass:stoch_bounded_second_moment,ass:zeta_star,ass:phi_star,ass:tau} with continuously doubly differentiable objective functions. Then for all $t\in[0,T]$ assuming $\eta < 1/H$ we have,
    \begin{align*}
        &D(t)\leq \rb{\frac{128\eta^5\tau^4\sigma_2^2}{\mu}(t-\delta(t)) + 320\eta^4\sigma_2^2\tau^2 }(t-\delta(t))^3(1-\eta\mu)^{t-1-\delta(t)}\rb{A(\delta(t)) + \phi_\star^2}\\
        &\quad + 64\eta^4\tau^4(t-\delta(t))^4(1-\eta\mu)^{t-1-\delta(t)}\rb{B(\delta(t)) + \phi_\star^4}\\
        &\quad + \rb{\frac{8\eta^3H^4\zeta_\star^4}{\mu} + \frac{88\eta^5\tau^4\sigma_4^4}{\mu^3} + 160\eta^4K\sigma_2^2H^2\zeta_\star^2 + \frac{160\eta^5\tau^2K\sigma_2^4}{\mu} + 112\eta^4\sigma_4^4\ln(K)}(t-\delta(t))^3\enspace,\\
        &\quad\leq \rb{\frac{128\eta^5\tau^4K^4\sigma_2^2}{\mu} + 320\eta^4\sigma_2^2\tau^2K^3}\rb{A(\delta(t)) + \phi_\star^2} + 64\eta^4\tau^4K^4\rb{B(\delta(t)) + \phi_\star^4}\\
        &\quad + \frac{8\eta^3K^3H^4\zeta_\star^4}{\mu} + \frac{88\eta^5K^3\tau^4\sigma_4^4}{\mu^3} + 160\eta^4K^4\sigma_2^2H^2\zeta_\star^2 + \frac{160\eta^5\tau^2K^4\sigma_2^4}{\mu} + 112\eta^4K^3\sigma_4^4\ln(K)\enspace.
    \end{align*}
    This also implies that for $r\in[R]$,
    \begin{align*}
    &\sum_{j=K(r-1)}^{Kr-1}(1-\eta\mu)^{Kr-1-j}D(j)\\
    &\leq \rb{1-\rb{1-\eta\mu}^K}\rb{\frac{128\eta^4K^4\tau^4\sigma_2^2}{\mu^2} + \frac{320\eta^3K^3\sigma_2^2\tau^2}{\mu}}(1-\eta\mu)^{K-3}\rb{A(K(r-1)) + \phi_\star^2}\\
    &\quad + \rb{1-\rb{1-\eta\mu}^K}\frac{64\eta^3K^4\tau^4}{\mu}(1-\eta\mu)^{K-5}\rb{B(K(r-1)) + \phi_\star^4}\\
    &\quad \rb{1-\rb{1-\eta\mu}^K}\\
    &\quad \times\rb{\frac{8\eta^2K^3H^4\zeta_\star^4}{\mu^2} + \frac{88\eta^4K^3\tau^4\sigma_4^4}{\mu^4} + \frac{160\eta^3K^4\sigma_2^2H^2\zeta_\star^2}{\mu} + \frac{160\eta^4\tau^2K^4\sigma_2^4}{\mu^2} + \frac{112\eta^3K^3\sigma_4^4\ln(K)}{\mu}}\enspace.
    \end{align*}
\end{lemma}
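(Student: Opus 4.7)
My plan is to mirror the strategy used for \Cref{lem:consensus_error_second_recursion}, lifting every step to fourth powers while keeping track of the second-moment cross terms that the higher-order expansion inevitably introduces. As in the second-moment case, the proof will derive a one-step recursion on $\ee\sb{\norm{x_t^m - x_t^n}^4}$ that couples to $\ee\sb{\norm{x_t^m - x_t^n}^2}$ and to the single-machine errors $\ee\sb{\norm{x_{t-1}^n - x_n^\star}^2}$ and $\ee\sb{\norm{x_{t-1}^n - x_n^\star}^4}$. The latter will then be replaced by the values at the last synchronization time $\delta(t)$ via \Cref{lem:single_second_recursion,lem:single_fourth_recursion}, and the recursion will be unrolled over the round $[\delta(t), t]$ using a telescoping choice of the free parameter in \Cref{lem:mod_am_gm}. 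Finally, a triangle-inequality step and \Cref{ass:phi_star} will convert $\norm{x^\star - x_n^\star}$ factors into $\phi_{\star,n}$, and averaging over $m,n \in [M]$ will produce $D(t)$.

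For the one-step estimate, I would write $x_t^m - x_t^n = a + b$ with $a := x_{t-1}^m - x_{t-1}^n - \eta(\nabla F_m(x_{t-1}^m) - \nabla F_n(x_{t-1}^n))$ and $b := -\eta(\xi_{t-1}^m - \xi_{t-1}^n)$. Since $a \in m\hhh_t$ and the noise is conditionally mean-zero and cross-independent across machines, the expansion of $\norm{a+b}^4 = (\norm{a}^2 + \norm{b}^2 + 2\inner{a}{b})^2$, together with Cauchy--Schwarz on $\inner{a}{b}^2$ and $\norm{b}^2\inner{a}{b}$, yields
\begin{align*}
    \ee\sb{\norm{a+b}^4} &\leq \ee\sb{\norm{a}^4} + \ee\sb{\norm{b}^4} + 6\,\ee\sb{\norm{a}^2 \ee\sb{\norm{b}^2|\hhh_t}} + 4\,\ee\sb{\norm{a}\,\ee\sb{\norm{b}^3|\hhh_t}}\enspace.
\end{align*}
Then \Cref{lem:stoch_diff_second,lem:stoch_diff_fourth}, together with Cauchy--Schwarz for the third-moment factor, bound the $b$-only contributions by $\eta^4\sigma_4^4$, $\eta^2\sigma_2^2\,\ee\norm{a}^2$, and $\eta^3\sigma_4^2\sigma_2\,\sqrt{\ee\norm{a}^2}$ respectively, exactly as in \Cref{lem:iterate_error_fourth_recursion}; the last is then split by A.M.--G.M. to separate a pure $\eta^4\sigma_4^4$ term from an $\eta^2\sigma_2^2\,\ee\norm{a}^2$ term. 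For $\ee\sb{\norm{a}^4}$ I would use mean-value theorem to write $a = (I-\eta\nabla^2 F_m(c))(x_{t-1}^m - x_{t-1}^n) - \eta(\nabla F_m(x_{t-1}^n) - \nabla F_n(x_{t-1}^n))$ and apply \Cref{lem:mod_am_gm} (fourth-power form) with a free parameter $\gamma_{t-1}$, which yields a contractive $(1+1/\gamma_{t-1})^3(1-\eta\mu)^4\ee\norm{x_{t-1}^m - x_{t-1}^n}^4$ piece and a heterogeneity piece $\eta^4(1+\gamma_{t-1})^3\ee\norm{\nabla F_m(x_{t-1}^n) - \nabla F_n(x_{t-1}^n)}^4$. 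The heterogeneity piece is handled exactly as in the second-moment proof: adding and subtracting $\nabla F_m(x_n^\star) - \nabla F_n(x_n^\star)$ and invoking the same Taylor-integral trick reduces it to $\tau^4\,\ee\norm{x_{t-1}^n - x_n^\star}^4 + H^4\zeta_{\star,m,n}^4$ (with another A.M.--G.M. split). The same argument at second order — which we need because of the $\ee\norm{a}^2$ terms from the noise cross-terms — reproduces the bound from \Cref{lem:consensus_error_second_recursion}, giving $\tau^2\norm{x_{t-1}^n - x_n^\star}^2 + H^2\zeta_{\star,m,n}^2$ with an extra $\sigma_2^2$ prefactor.

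With the one-step inequality in hand, I would choose $\gamma_j = j - \delta(j)$, which makes $\prod_{i=j+1}^{t-1}(1+1/\gamma_i)^3$ telescope into $((t-\delta(t))/(j+1-\delta(t)))^3$, and then unroll. Summing the geometric decay of $(1-\eta\mu)^{2(t-1-j)}$, and using the single-machine recursions \Cref{lem:single_second_recursion,lem:single_fourth_recursion} to replace $\ee\norm{x_{t-1}^n - x_n^\star}^2$ and $\ee\norm{x_{t-1}^n - x_n^\star}^4$ by $\ee\norm{x_{\delta(t)} - x_n^\star}^2$, $\ee\norm{x_{\delta(t)} - x_n^\star}^4$ plus $\sigma_2^2$ and $\sigma_4^4$ accumulation, exposes the couplings to $A(\delta(t))$ and $B(\delta(t))$ via $\norm{x_{\delta(t)} - x_n^\star} \leq \norm{x_{\delta(t)} - x^\star} + \norm{x^\star - x_n^\star}$ and \Cref{ass:phi_star}. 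The harmonic sum $\sum_{j=\delta(t)}^{t-1} 1/(j+1-\delta(t))$ contributes the $\ln(K)$ factor that multiplies $\sigma_4^4$ in the statement. Averaging over $m,n$ and upper bounding $(t-\delta(t)) \leq K$ and $(1-\eta\mu)^{\cdot} \leq 1$ yields the two stated per-$t$ bounds; the summation-over-a-round bound then follows by the same $\sum_{j} (1-\eta\mu)^{Kr-1-j} \cdot (j-K(r-1))^q$ calculation used at the end of \Cref{lem:consensus_error_second_recursion}, combined with the simple estimate $\sum_j (1-\eta\mu)^{Kr-1-j} \leq (1-(1-\eta\mu)^K)/(\eta\mu)$. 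The main obstacle is bookkeeping: the fourth-moment expansion produces several genuinely distinct cross terms, and one has to track simultaneously an $A(\delta(t))$-coupled branch (coming from $\ee\norm{a}^2\ee\norm{b}^2$ together with the second-order heterogeneity term) and a $B(\delta(t))$-coupled branch (coming from the $\ee\norm{a}^4$ piece), while making sure the telescoping choice $\gamma_j = j-\delta(j)$ is compatible with both branches and that the correct $(1-\eta\mu)^{t-1-\delta(t)}$ factors land in front of the $A$- and $B$-terms, rather than being lost inside the $\gamma_j$-products.
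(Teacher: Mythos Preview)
Your proposal is correct and follows essentially the same approach as the paper's proof: the fourth-power expansion of $\norm{a+b}^4$ with the Cauchy--Schwarz and A.M.--G.M. treatment of the noise cross terms, the mean-value-theorem split via \Cref{lem:mod_am_gm} with $\gamma_j=j-\delta(j)$, the Taylor-integral heterogeneity argument at both second and fourth order, and the use of \Cref{lem:single_second_recursion,lem:single_fourth_recursion} to push the single-machine errors back to $\delta(t)$ all match the paper line by line. The only cosmetic difference is that where you speak of ``reproducing the bound from \Cref{lem:consensus_error_second_recursion}'' for the $\ee\norm{a}^2$ contribution, the paper simply substitutes the already-derived intermediate bound \eqref{eq:consensus_second_specific} for $C(t-1)$ into the $D$-recursion before unrolling; this is the same thing, but citing the precomputed $C$-bound keeps the unrolling one-dimensional in $D$ rather than two-dimensional.
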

\begin{proof}
    Note the following about the fourth moment of the difference between the iterates on two machines $m,n\in[M]$ for $t>\delta(t)$ (for $t=\delta(t)$ the l.h.s. is zero),
    \begin{align*}
        &\ee\sb{\norm{x_t^m - x_t^n}^4}\\
        &= \ee\sb{\norm{x_{t-1}^m - x_{t-1}^n - \eta g_{t-1}^m +\eta g_{t-1}^n}^4}\enspace,\\
        &= \ee\sb{\rb{\norm{x_{t-1}^m - x_{t-1}^n - \eta \nabla F_m(x_{t-1}^m) + \eta \nabla F_n(x_{t-1}^n) + \eta \xi_{t-1}^m - \eta \xi_{t-1}^n}^2}^2}\enspace,\\
        &= \ee\Bigg[\bigg(\norm{x_{t-1}^m - x_{t-1}^n - \eta \nabla F_m(x_{t-1}^m) + \eta \nabla F_n(x_{t-1}^n)}^2 + \eta^2 \norm{\xi_{t-1}^m -\xi_{t-1}^n}^2\\
        &\quad + 2\eta\inner{x_{t-1}^m - x_{t-1}^n - \eta \nabla F_m(x_{t-1}^m) + \eta \nabla F_n(x_{t-1}^n)}{\xi_{t-1}^m - \xi_{t-1}^n}\bigg)^2\Bigg]\enspace,\\
        &=^{\text{(a)}} \ee\sb{\norm{x_{t-1}^m - x_{t-1}^n - \eta \nabla F_m(x_{t-1}^m) + \eta \nabla F_n(x_{t-1}^n)}^4} + \eta^4 \ee\sb{\norm{\xi_{t-1}^m -\xi_{t-1}^n}^4}\\
        &\quad +4\eta^2\ee\sb{\rb{\inner{x_{t-1}^m - x_{t-1}^n - \eta \nabla F_m(x_{t-1}^m) + \eta \nabla F_n(x_{t-1}^n)}{\xi_{t-1}^m - \xi_{t-1}^n}}^2}\\
        &\quad + 2\eta^2\ee\sb{\norm{x_{t-1}^m - x_{t-1}^n - \eta \nabla F_m(x_{t-1}^m) + \eta \nabla F_n(x_{t-1}^n)}^2\norm{\xi_{t-1}^m -\xi_{t-1}^n}^2}\\
        &\quad +4\eta^3\ee\sb{\inner{x_{t-1}^m - x_{t-1}^n - \eta \nabla F_m(x_{t-1}^m) + \eta \nabla F_n(x_{t-1}^n)}{\xi_{t-1}^m - \xi_{t-1}^n}\norm{\xi_{t-1}^m - \xi_{t-1}^n}^2}\enspace,\\
        &\leq^{\text{(\Cref{lem:stoch_diff_fourth}), (b)}} \ee\sb{\norm{x_{t-1}^m - x_{t-1}^n - \eta \nabla F_m(x_{t-1}^m) + \eta \nabla F_n(x_{t-1}^n)}^4} + 8\eta^4\sigma_4^4\\
        &\quad + 6\eta^2\ee\sb{\norm{x_{t-1}^m - x_{t-1}^n - \eta \nabla F_m(x_{t-1}^m) + \eta \nabla F_n(x_{t-1}^n)}^2}\ee\sb{\norm{\xi_{t-1}^m -\xi_{t-1}^n}^2}\\
        &\quad +4\eta^3\textcolor{blue}{\ee\sb{\norm{x_{t-1}^m - x_{t-1}^n - \eta \nabla F_m(x_{t-1}^m) + \eta \nabla F_n(x_{t-1}^n)}}}\textcolor{red}{\ee\sb{\norm{\xi_{t-1}^m - \xi_{t-1}^n}^3}}\enspace,
\end{align*}
where in (a) we use the fact that $\ee\sb{\xi_{t-1}^m - \xi_{t-1}^n|\hhh_{t-1}} = 0$ and the conditional indepence of stochastic noise i.e., $\cb{\xi_{t-1}^m,\xi_{t-1}^n}~\perp~\cb{x_{t-1}^m, x_{t-1}^n}~|~\hhh_{t-1}$ allowing us to ignore one of the terms while expanding the square; and in (b) we again used this fact about the randomness along with an application of Cauchy Shwartz inequality.

In order to bound the term $\textcolor{red}{\ee \sb{\norm{\xi^m_{t-1} - \xi^n_{t-1}}^3}}$ we use Cauchy-Schwarz Inequality:
\begin{align*}
    \ee \sb{\norm{\xi^m_{t-1} - \xi^n_{t-1}}^3} &= \ee \sb{\norm{\xi^m_{t-1} - \xi^n_{t-1}}\cdot \norm{\xi^m_{t-1} - \xi^n_{t-1}}^2} \\
    &\leq \sqrt{\ee \sb{\norm{\xi^m_{t-1} - \xi^n_{t-1}}^2} \ee \sb{\norm{\xi^m_{t-1} - \xi^n_{t-1}}^4}}\enspace,\\
    &\leq^{\text{(\Cref{lem:stoch_diff_second,lem:stoch_diff_fourth})}} 4\sqrt{\sigma_2^2\sigma_4^4} = \textcolor{red}{4\sigma_2\sigma_4^2}\enspace.
\end{align*}
Also the term $\textcolor{blue}{\ee\sb{\norm{x_{t-1}^m - x_{t-1}^n - \eta \nabla F_m(x_{t-1}^m) + \eta \nabla F_n(x_{t-1}^n)}}}$ can be bounded as\footnote{Technically to apply Jensen the right hand side should be finite. While we do not explicitly show this, by for instance using \Cref{ass:bounded_gradients}, we use this without loss of generality because if the upper bound is not finite it would be reflected in our guarantees.}: 
\begin{align*}
    &\ee\sb{\norm{x_{t-1}^m - x_{t-1}^n - \eta \nabla F_m(x_{t-1}^m) + \eta \nabla F_n(x_{t-1}^n)}}\\ 
    &\quad \stackrel{\text{(Jensen's Inequality)}}{\leq} \textcolor{blue}{\sqrt{\ee\sb{\norm{x_{t-1}^m - x_{t-1}^n - \eta \nabla F_m(x_{t-1}^m) + \eta \nabla F_n(x_{t-1}^n)}^2}}}\enspace.
\end{align*}
Putting everything back together gives us: 
\begin{align*}
        \ee \sb{\norm{x^n_t - x^m_t}^4} &\leq^{\text{(\Cref{lem:stoch_diff_second})}} \ee\sb{\norm{x_{t-1}^m - x_{t-1}^n - \eta \nabla F_m(x_{t-1}^m) + \eta \nabla F_n(x_{t-1}^n)}^4} + 8\eta^4 \sigma_4^4\\
        &\quad +12\eta^2\sigma_2^2\ee\sb{\norm{x_{t-1}^m - x_{t-1}^n - \eta \nabla F_m(x_{t-1}^m) + \eta \nabla F_n(x_{t-1}^n)}^2}\\
        &\quad +16\eta^3\textcolor{orange}{\sqrt{\sigma_2^2\sigma_4^4 \ee\sb{\norm{x_{t-1}^m - x_{t-1}^n - \eta \nabla F_m(x_{t-1}^m) + \eta \nabla F_n(x_{t-1}^n)}^2}}}\enspace.
\end{align*}
To bound the third term in the above inequality, we use the A.M. - G.M. Inequality $\sqrt{ab} \leq \frac{a}{2\gamma} + \frac{\gamma b}{2}$ for $\gamma >0$. Let $\gamma = \eta, a=\sigma_2^2 \ee\sb{\norm{x_{t-1}^m - x_{t-1}^n - \eta \nabla F_m(x_{t-1}^m) + \eta \nabla F_n(x_{t-1}^n)}^2}, b=\sigma_4^4$. We have:
\begin{align*}
    &16\eta^3\sqrt{\sigma_2^2\sigma_4^4 \ee\sb{\norm{x_{t-1}^m - x_{t-1}^n - \eta \nabla F_m(x_{t-1}^m) + \eta \nabla F_n(x_{t-1}^n)}^2}}\\
    &= 16\eta^3\sqrt{(\sigma_4^4) \left(\sigma_2^2 \ee\sb{\norm{x_{t-1}^m - x_{t-1}^n - \eta \nabla F_m(x_{t-1}^m) + \eta \nabla F_n(x_{t-1}^n)}^2}\right)} \\
    &\leq \textcolor{orange}{16\eta^3 \left(\frac{\eta \sigma_4^4}{2} + \frac{\sigma_2^2}{2\eta}\ee\sb{\norm{x_{t-1}^m - x_{t-1}^n - \eta \nabla F_m(x_{t-1}^m) + \eta \nabla F_n(x_{t-1}^n)}^2} \right)}
\end{align*}
Plugging this upper bound and following a similar strategy as in \Cref{lem:consensus_error_second_recursion} we get 
\begin{align*}
        &\ee \sb{\norm{x^n_t - x^m_t}^4} \\
        &\leq \ee\sb{\norm{x_{t-1}^m - x_{t-1}^n - \eta \nabla F_m(x_{t-1}^m) + \eta \nabla F_n(x_{t-1}^n)}^4} + 8\eta^4 \sigma_4^4\\ 
        &\quad +12\eta^2\sigma_2^2\ee\sb{\norm{x_{t-1}^m - x_{t-1}^n - \eta \nabla F_m(x_{t-1}^m) + \eta \nabla F_n(x_{t-1}^n)}^2}\\
        &\quad +16\eta^3\rb{\frac{\eta\sigma_4^4}{2} +  \frac{\sigma_2^2}{2\eta}\ee\sb{\norm{x_{t-1}^m - x_{t-1}^n - \eta \nabla F_m(x_{t-1}^m) + \eta \nabla F_n(x_{t-1}^n)}^2}}\enspace,\\
        &= \ee\sb{\norm{x_{t-1}^m - x_{t-1}^n - \eta \nabla F_m(x_{t-1}^m) + \eta \nabla F_n(x_{t-1}^n)}^4}\\ 
        &\quad +20\eta^2\sigma_2^2\ee\sb{\norm{x_{t-1}^m - x_{t-1}^n - \eta \nabla F_m(x_{t-1}^m) + \eta \nabla F_n(x_{t-1}^n)}^2} + 16\eta^4 \sigma_4^4\enspace,\\
        &= \ee\sb{\norm{x_{t-1}^m - x_{t-1}^n - \eta \nabla F_m(x_{t-1}^m) + \eta \nabla F_m(x_{t-1}^n) - \eta \nabla F_m(x_{t-1}^n) + \eta \nabla F_n(x_{t-1}^n)}^4} + 16\eta^4 \sigma_4^4\\
        &\quad +20\eta^2\sigma_2^2\ee\sb{\norm{x_{t-1}^m - x_{t-1}^n - \eta \nabla F_m(x_{t-1}^m) + \eta \nabla F_m(x_{t-1}^n) - \eta \nabla F_m(x_{t-1}^n) + \eta \nabla F_n(x_{t-1}^n)}^2}\enspace,\\
        &\leq^{\text{(\Cref{lem:mod_am_gm}), (a)}} \rb{1+\frac{1}{\gamma_{t-1}}}^3(1-\eta\mu)^4\ee\sb{\norm{x_{t-1}^m - x_{t-1}^n}^4}\\ 
        &\quad + \rb{1+\gamma_{t-1}}^3\eta^4\ee\sb{\norm{\nabla F_m(x_{t-1}^n)- \nabla F_n(x_{t-1}^n) - \nabla F_m(x_n^\star) + \nabla F_m(x_n^\star)}^4} \\
        &\quad +20\eta^2\sigma_2^2\rb{1+\frac{1}{\gamma_{t-1}}}(1-\eta\mu)^2\ee\sb{\norm{x_{t-1}^m - x_{t-1}^n}^2}\\
        &\quad + 20\eta^4\sigma_2^2(1+\gamma_{t-1})\ee\sb{\norm{\nabla F_m(x_{t-1}^n)- \nabla F_n(x_{t-1}^n) - \nabla F_m(x_n^\star) + \nabla F_m(x_n^\star)}^2} + 16\eta^4 \sigma_4^4\enspace,\\
        &\leq^{\text{(\Cref{lem:mod_am_gm,ass:zeta_star,ass:tau}), (b)}} \rb{1+\frac{1}{\gamma_{t-1}}}^3(1-\eta\mu)^4\ee\sb{\norm{x_{t-1}^m - x_{t-1}^n}^4}\\
        &\quad + 8\rb{1+\gamma_{t-1}}^3\eta^4\rb{\tau^4\ee\sb{\norm{x_{t-1}^n - x_n^\star}^4} + H^4\zeta_{\star,m,n}^4} +20\eta^2\sigma_2^2\rb{1+\frac{1}{\gamma_{t-1}}}(1-\eta\mu)^2\ee\sb{\norm{x_{t-1}^m - x_{t-1}^n}^2}\\
        &\quad + 40\eta^4\sigma_2^2(1+\gamma_{t-1})\rb{\tau^2\ee\sb{\norm{x_{t-1}^n - x_n^\star}^2} + H^2\zeta_{\star,m,n}^2} + 16\eta^4 \sigma_4^4\enspace,
\end{align*}
where in (a) and (b) we again use mean-value theorem along with \Cref{ass:strongly_convex,ass:tau} just as in the proof of \Cref{lem:consensus_error_second_recursion}.
Averaging this over $m,n\in[M]$ we have for all $t>\delta(t)$,
\begin{align*}
    D(t) &\leq \rb{1+\frac{1}{\gamma_{t-1}}}^3(1-\eta\mu)^4D(t-1) + 8\rb{1+\gamma_{t-1}}^3\eta^4\tau^4\textcolor{red}{\frac{1}{M}\sum_{n\in[M]}\ee\sb{\norm{x_{t-1}^n - x_n^\star}^4}}\\
    &\quad + 8\rb{1+\gamma_{t-1}}^3\eta^4H^4\zeta_\star^4 +20\eta^2\sigma_2^2\rb{1+\frac{1}{\gamma_{t-1}}}(1-\eta\mu)^2\textcolor{orange}{C(t-1)}\\
    &\quad + 40\eta^4\sigma_2^2\tau^2(1+\gamma_{t-1})\textcolor{blue}{\frac{1}{M}\sum_{n\in[M]}\ee\sb{\norm{x_{t-1}^n - x_n^\star}^2}} + 40\eta^4\sigma_2^2(1+\gamma_{t-1})H^2\zeta_\star^2 + 16\eta^4 \sigma_4^4\enspace.
\end{align*}
Now we will use a couple of upper bounds that we already have for $\textcolor{red}{\ee\sb{\norm{x_{t-1}^n - x_n^\star}^4}}$ from \Cref{lem:single_fourth_recursion}, $\textcolor{blue}{\ee\sb{\norm{x_{t-1}^n - x_n^\star}^2}}$ from \Cref{lem:single_second_recursion} and $\textcolor{orange}{C(t-1)}$ for $t-1 \geq \delta(t)$ from \eqref{eq:consensus_second_specific} in the proof of \Cref{lem:consensus_error_second_recursion}. This gives us the following with $\gamma_j = j-\delta(j)=j-\delta(t)$ for $j\geq \delta(t)$,
\begin{align*}
    &D(t)\\ 
    &\leq \rb{1+\frac{1}{\gamma_{t-1}}}^3(1-\eta\mu)^4D(t-1) + 8\rb{1+\gamma_{t-1}}^3\eta^4H^4\zeta_\star^4 + \textcolor{red}{\rb{1+\gamma_{t-1}}^3\frac{88\eta^6\tau^4\sigma_4^4}{\mu^2}}\\
    &\quad + \textcolor{red}{8\rb{1+\gamma_{t-1}}^3\eta^4\tau^4\frac{1}{M}\sum_{n\in[M]}\rb{(1-\eta\mu)^{4(t-1-\delta(t))}\ee\sb{\norm{x_{\delta(t)} - x_n^\star}^4}}}\\
    &\quad + \textcolor{red}{8\rb{1+\gamma_{t-1}}^3\eta^4\tau^4\frac{1}{M}\sum_{n\in[M]}\rb{ 8\eta^2\sigma_2^2(t-1-\delta(t))(1-\eta\mu)^{2(t-1-\delta(t))}\ee\sb{\norm{x_{\delta(t)} - x_n^\star}^2}}}\\
    &\quad +\textcolor{orange}{20\eta^2\sigma_2^2\rb{1+\frac{1}{\gamma_{t-1}}}(1-\eta\mu)^2C(t-1)}\\ 
    &\quad + \textcolor{blue}{40\eta^4\sigma_2^2\tau^2(1+\gamma_{t-1})\frac{1}{M}\sum_{n\in[M]}\rb{(1-\eta\mu)^{2(t-1-\delta(t))}\ee\sb{\norm{x_{\delta(t)} - x_n^\star}^2} + \frac{\eta\sigma_2^2}{\mu}}}\\
    &\quad + 40\eta^4\sigma_2^2(1+\gamma_{t-1})H^2\zeta_\star^2 + 16\eta^4 \sigma_4^4\enspace,\\
    &\leq^{\text{(\Cref{lem:mod_am_gm,ass:phi_star,eq:consensus_second_specific})}} \rb{1+\frac{1}{\gamma_{t-1}}}^3(1-\eta\mu)^4D(t-1) + 8\rb{1+\gamma_{t-1}}^3\eta^4H^4\zeta_\star^4\\
    &\quad + 64\rb{1+\gamma_{t-1}}^3\eta^4\tau^4\rb{(1-\eta\mu)^{4(t-1-\delta(t))}\rb{B(\delta(t))+\phi_\star^4}} + \rb{1+\gamma_{t-1}}^3\frac{88\eta^6\tau^4\sigma_4^4}{\mu^2}\\
    &\quad+ 128\rb{1+\gamma_{t-1}}^3\eta^4\tau^4\rb{\eta^2\sigma_2^2(t-1-\delta(t))(1-\eta\mu)^{2(t-1-\delta(t))}\rb{A(\delta(t)) + \phi_\star^2}}\\
    &\quad+\textcolor{orange}{20\eta^2\sigma_2^2\rb{1+\frac{1}{\gamma_{t-1}}}(1-\eta\mu)^2\rb{4\eta^2\tau^2\rb{t-1-\delta(t)}^2(1-\eta \mu)^{2(t-2-\delta(t))}\rb{A(\delta(t)) + \phi_\star^2}}}\\
    &\quad+\textcolor{orange}{20\eta^2\sigma_2^2\rb{1+\frac{1}{\gamma_{t-1}}}(1-\eta\mu)^2\rb{2(t-1-\delta(t))\rb{\eta^2K H^2\zeta_\star^2 + \frac{\eta^3\tau^2K\sigma_2^2}{\mu} + \eta^2\sigma_2^2\ln(K)}}}\\
    &\quad+ 40\eta^4\sigma_2^2\tau^2(1+\gamma_{t-1})\rb{2(1-\eta\mu)^{2(t-1-\delta(t))}\rb{A(\delta(t)) + \phi_\star^2} + \frac{\eta\sigma_2^2}{\mu}}\\
    &\quad+ 40\eta^4\sigma_2^2(1+\gamma_{t-1})H^2\zeta_\star^2 + 16\eta^4 \sigma_4^4\enspace,\\
    &\leq \prod_{j=\delta(t)}^{t-1}\rb{1 + \frac{1}{\gamma_{j}}}^3(1-\eta \mu)^4\cancelto{0}{\ee\sb{\norm{x_{\delta(t)} - x_{\delta(t)}}^4}}\\
    &\quad + \rb{8\eta^4H^4\zeta_\star^4 + \frac{88\eta^6\tau^4\sigma_4^4}{\mu^2}}\sum_{j=\delta(t)}^{t-1}\rb{\prod_{i=j+1}^{t-1}\rb{1 + \frac{1}{\gamma_{i}}}^3(1-\eta \mu)^4}(1+\gamma_j)^3\\
    &\quad + 64\eta^4\tau^4\rb{B(\delta(t)) + \phi_\star^4}\sum_{j=\delta(t)}^{t-1}\rb{\prod_{i=j+1}^{t-1}\rb{1 + \frac{1}{\gamma_{i}}}^3(1-\eta \mu)^4}(1+\gamma_j)^3(1-\eta\mu)^{4(j-\delta(t))}\\
    &\quad + 128\eta^6\tau^4\sigma_2^2\rb{A(\delta(t)) + \phi_\star^2}\\
    &\qquad\qquad\qquad\times\sum_{j=\delta(t)}^{t-1}\rb{\prod_{i=j+1}^{t-1}\rb{1 + \frac{1}{\gamma_{i}}}^3(1-\eta \mu)^4}(1+\gamma_j)^3(j-\delta(t))(1-\eta\mu)^{2(j-\delta(t))}\\
    &\quad + 80\eta^4\sigma_2^2\tau^2\rb{A(\delta(t)) + \phi_\star^2}\sum_{j=\delta(t)}^{t-1}\rb{\prod_{i=j+1}^{t-1}\rb{1 + \frac{1}{\gamma_{i}}}^3(1-\eta \mu)^4}\rb{1 + \gamma_j}(j-\delta(t))(1-\eta\mu)^{2(j-\delta(t))}\\
    &\quad + 40\eta^2\sigma_2^2\rb{\eta^2K H^2\zeta_\star^2 + \frac{\eta^3\tau^2K\sigma_2^2}{\mu} + \eta^2\sigma_2^2\ln(K)}\sum_{j=\delta(t)}^{t-1}\rb{\prod_{i=j+1}^{t-1}\rb{1 + \frac{1}{\gamma_{i}}}^3(1-\eta \mu)^4}(1-\eta\mu)^2\rb{1+\gamma_j}\\
    &\quad + 80\eta^4\sigma_2^2\tau^2\rb{A(\delta(t)) + \phi_\star^2}\sum_{j=\delta(t)}^{t-1}\rb{\prod_{i=j+1}^{t-1}\rb{1 + \frac{1}{\gamma_{i}}}^3(1-\eta \mu)^4}(1+\gamma_j)(1-\eta\mu)^{2(j-
    \delta(t))}\\
    &\quad + 40\eta^4\sigma_2^2\rb{\frac{\eta \tau^2\sigma_2^2}{\mu} + H^2\zeta_\star^2}\sum_{j=\delta(t)}^{t-1}\rb{\prod_{i=j+1}^{t-1}\rb{1 + \frac{1}{\gamma_{i}}}^3(1-\eta \mu)^4}(1+\gamma_j)\\
    &\quad + 16\eta^4\sigma_4^4\sum_{j=\delta(t)}^{t-1}\rb{\prod_{i=j+1}^{t-1}\rb{1 + \frac{1}{\gamma_{i}}}^3(1-\eta \mu)^4}\enspace,\\
    &= \rb{8\eta^4H^4\zeta_\star^4 + \frac{88\eta^6\tau^4\sigma_4^4}{\mu^2}}\sum_{j=\delta(t)}^{t-1}(t-\delta(t))^3(1-\eta\mu)^{4(t-1-j)}\\
    &\quad + 64\eta^4\tau^4\rb{B(\delta(t)) + \phi_\star^4}\sum_{j=\delta(t)}^{t-1}(t-\delta(t))^3(1-\eta\mu)^{4(t-1-\delta(t))}\\
    &\quad + 128\eta^6\tau^4\sigma_2^2\rb{A(\delta(t)) + \phi_\star^2}\sum_{j=\delta(t)}^{t-1}(t-\delta(t))^3(j-\delta(t))(1-\eta\mu)^{4(t-1)-2j -2\delta(t))}\\
    &\quad + 80\eta^4\sigma_2^2\tau^2\rb{A(\delta(t)) + \phi_\star^2}\sum_{j=\delta(t)}^{t-1}\frac{(t-\delta(t))^3}{(j+1-\delta(t))^2}(1-\eta\mu)^{4(t-1)-2j -2\delta(t)}\\
    &\quad + 40\eta^2\sigma_2^2\rb{\eta^2K H^2\zeta_\star^2 + \frac{\eta^3\tau^2K\sigma_2^2}{\mu} + \eta^2\sigma_2^2\ln(K)}\sum_{j=\delta(t)}^{t-1}\frac{(t-\delta(t))^3}{(j+1-\delta(t))^2}(1-\eta\mu)^{4(t-j)-2}\\
    &\quad + 80\eta^4\sigma_2^2\tau^2\rb{A(\delta(t)) + \phi_\star^2}\sum_{j=\delta(t)}^{t-1}\frac{(t-\delta(t))^3}{(j+1-\delta(t))^2}(1-\eta\mu)^{4(t-1) -2j -2\delta(t)}\\
    &\quad + 40\eta^4\sigma_2^2\rb{\frac{\eta \tau^2\sigma_2^2}{\mu} + H^2\zeta_\star^2}\sum_{j=\delta(t)}^{t-1}\frac{(t-\delta(t))^3}{(j+1-\delta(t))^2}(1-\eta\mu)^{4(t-1-j)}\\
    &\quad + 16\eta^4\sigma_4^4\sum_{j=\delta(t)}^{t-1}\frac{(t-\delta(t))^3}{(j+1-\delta(t))^3}(1-\eta\mu)^{4(t-1-j)}\enspace,\\
    &\leq^{\text{(a)}} \rb{\frac{8\eta^3H^4\zeta_\star^4}{\mu} + \frac{88\eta^5\tau^4\sigma_4^4}{\mu^3}}(t-\delta(t))^3 + 64\eta^4\tau^4\rb{B(\delta(t)) + \phi_\star^4}(t-\delta(t))^4(1-\eta\mu)^{4(t-1-\delta(t))}\\
    &\quad + \frac{128\eta^5\tau^4\sigma_2^2}{\mu}\rb{A(\delta(t)) + \phi_\star^2}(t-\delta(t))^4(1-\eta\mu)^{2(t-1-\delta(t))}\\
    &\quad + \textcolor{cyan}{160\eta^4\sigma_2^2\tau^2\rb{A(\delta(t)) + \phi_\star^2}(t-\delta(t))^3(1-\eta\mu)^{2(t-1-\delta(t))}}\\
    &\quad + 80\eta^2\sigma_2^2\rb{\textcolor{red}{\eta^2 H^2K\zeta_\star^2} + \textcolor{blue}{\frac{\eta^3\tau^2K\sigma_2^2}{\mu}} + \textcolor{orange}{\eta^2\sigma_2^2\ln(K)}}(t-\delta(t))^3\\
    &\quad + \textcolor{cyan}{160\eta^4\sigma_2^2\tau^2\rb{A(\delta(t)) + \phi_\star^2}(t-\delta(t))^3(1-\eta\mu)^{2(t-1-\delta(t))}} + 80\eta^4\sigma_2^2\rb{\textcolor{blue}{\frac{\eta \tau^2\sigma_2^2}{\mu}} + \textcolor{red}{H^2\zeta_\star^2}}(t-\delta(t))^3\\
    &\quad  + \textcolor{orange}{32\eta^4\sigma_4^4(t-\delta(t))^3}\enspace,\\
    &\leq^{\text{(b)}} \rb{\frac{8\eta^3H^4\zeta_\star^4}{\mu} + \frac{88\eta^5\tau^4\sigma_4^4}{\mu^3} + \textcolor{red}{160\eta^4K\sigma_2^2H^2\zeta_\star^2} + \textcolor{blue}{\frac{160\eta^5\tau^2K\sigma_2^4}{\mu}} + \textcolor{orange}{112\eta^4\sigma_4^4\ln(K)}}(t-\delta(t))^3\\ 
    &\quad + 64\eta^4\tau^4(t-\delta(t))^4(1-\eta\mu)^{4(t-1-\delta(t))}\rb{B(\delta(t)) + \phi_\star^4}\\
    &\quad + \rb{\frac{128\eta^5\tau^4\sigma_2^2}{\mu}(t-\delta(t)) + \textcolor{cyan}{320\eta^4\sigma_2^2\tau^2}}(t-\delta(t))^3(1-\eta\mu)^{2(t-1-\delta(t))}\rb{A(\delta(t)) + \phi_\star^2}\enspace,
\end{align*}
where in (a) we used that $\sum_{j=\delta(t)}^{t-1}\frac{1}{(j+1-\delta(t))^3} < \sum_{j=\delta(t)}^{t-1}\frac{1}{(j+1-\delta(t))^2} \leq \frac{\pi^2}{6} < 2$; in (b) we used that $\eta < 1/H\leq 1/\mu$ to get the \textcolor{red}{red} and \textcolor{blue}{blue} terms. This finishes the proof of the lemma, once we note that when $t=\delta(t)$, the upper bound is zero, which means we can extend the proof to $t\geq \delta(t)$, which essentially means all $t$.

We can now use this bound to give the following bound for $r\in[R]$,
\begin{align*}
    &\sum_{j=K(r-1)}^{Kr-1}(1-\eta\mu)^{Kr-1-j}D(j)\\
    &\quad\leq \frac{128\eta^5\tau^4\sigma_2^2}{\mu}\rb{A(K(r-1)) + \phi_\star^2}\sum_{j=K(r-1)}^{Kr-1}(1-\eta\mu)^{Kr-1-j}(j-K(r-1))^4(1-\eta\mu)^{2(j-1-K(r-1))}\\
    &\quad + 320\eta^4\sigma_2^2\tau^2\rb{A(K(r-1)) + \phi_\star^2}\sum_{j=K(r-1)}^{Kr-1}(1-\eta\mu)^{Kr-1-j}(j-K(r-1))^3(1-\eta\mu)^{2(j-1-K(r-1))}\\
    &\quad + 64\eta^4\tau^4\rb{B(K(r-1)) + \phi_\star^4}\sum_{j=K(r-1)}^{Kr-1}(1-\eta\mu)^{Kr-1-j} (j-K(r-1))^4(1-\eta\mu)^{4(j-1-K(r-1))}\\
    &\quad \rb{\frac{8\eta^3H^4\zeta_\star^4}{\mu} + \frac{88\eta^5\tau^4\sigma_4^4}{\mu^3} + 160\eta^4K\sigma_2^2H^2\zeta_\star^2 + \frac{160\eta^5\tau^2K\sigma_2^4}{\mu} + 112\eta^4\sigma_4^4\ln(K)}\\
    &\qquad\qquad\qquad\times\sum_{j=K(r-1)}^{Kr-1}(1-\eta\mu)^{Kr-1-j}(j-K(r-1))^3\enspace,\\
    &\quad\leq \frac{128\eta^5K^4\tau^4\sigma_2^2}{\mu}(1-\eta\mu)^{K-3}\rb{A(K(r-1)) + \phi_\star^2}\sum_{j=K(r-1)}^{Kr-1}(1-\eta\mu)^{j-K(r-1)}\\
    &\quad + 320\eta^4K^3\sigma_2^2\tau^2(1-\eta\mu)^{K-3}\rb{A(K(r-1)) + \phi_\star^2}\sum_{j=K(r-1)}^{Kr-1}(1-\eta\mu)^{j-K(r-1)}\\
    &\quad + 64\eta^4K^4\tau^4(1-\eta\mu)^{K-5}\rb{B(K(r-1)) + \phi_\star^4}\sum_{j=K(r-1)}^{Kr-1}(1-\eta\mu)^{3(j-K(r-1))}\\
    &\quad \rb{\frac{8\eta^3K^3H^4\zeta_\star^4}{\mu} + \frac{88\eta^5K^3\tau^4\sigma_4^4}{\mu^3} + 160\eta^4K^4\sigma_2^2H^2\zeta_\star^2 + \frac{160\eta^5\tau^2K^4\sigma_2^4}{\mu} + 112\eta^4K^3\sigma_4^4\ln(K)}\\
    &\qquad\qquad\qquad\times\sum_{j=K(r-1)}^{Kr-1}(1-\eta\mu)^{Kr-1-j}\enspace,\\
    &\leq \rb{1-\rb{1-\eta\mu}^K}\frac{128\eta^4K^4\tau^4\sigma_2^2}{\mu^2}(1-\eta\mu)^{K-3}\rb{A(K(r-1)) + \phi_\star^2}\\
    &\quad + \rb{1-\rb{1-\eta\mu}^K}\frac{320\eta^3K^3\sigma_2^2\tau^2}{\mu}(1-\eta\mu)^{K-3}\rb{A(K(r-1)) + \phi_\star^2}\\
    &\quad + \rb{1-\rb{1-\eta\mu}^K}\frac{64\eta^3K^4\tau^4}{\mu}(1-\eta\mu)^{K-5}\rb{B(K(r-1)) + \phi_\star^4}\\
    &\quad \rb{1-\rb{1-\eta\mu}^K}\\
    &\quad \times\rb{\frac{8\eta^2K^3H^4\zeta_\star^4}{\mu^2} + \frac{88\eta^4K^3\tau^4\sigma_4^4}{\mu^4} + \frac{160\eta^3K^4\sigma_2^2H^2\zeta_\star^2}{\mu} + \frac{160\eta^4\tau^2K^4\sigma_2^4}{\mu^2} + \frac{112\eta^3K^3\sigma_4^4\ln(K)}{\mu}}\enspace,
\end{align*}
which proves the claim.
\end{proof}

\subsection{Should Consensus Error Explode for a Large Step-size?}\label{app:explode}
Note that the results in \Cref{lem:consensus_error_second_recursion,lem:consensus_error_fourth_recursion} suggest that when $K\to\infty$ we must pick $\eta = \ooo\rb{\frac{1}{K}}$ so that the consensus error does not explode. This small step-size was criticized by Wang et al. ~\cite{wang2022unreasonable} through experiments, which showed that even without such a small step-size, the consensus error did not blow up in the regime of large $K$. In the following lemma we show that even with $\eta= \theta\rb{\frac{1}{H}}$, consensus error does not blow up, and saturates to a value that depends on the data heterogeneity \Cref{ass:zeta_star,ass:phi_star,ass:tau}. The lemma relies on just the evolution of iterates on a single machine and the fact that it is decoupled between communication rounds. 

\begin{lemma}[Alternative Bounds on the Consensus Error ]
    Assume we have a problem instance satisfying \Cref{ass:strongly_convex,ass:smooth_second,ass:stoch_bounded_second_moment,ass:zeta_star,ass:phi_star,ass:tau} . Then for any $t\geq \delta(t)$ with $\eta < 1/H$ we have,
    \begin{align*}
        C(t) &\leq 12(1-\eta\mu)^{2(t-\delta(t))}\rb{A(\delta(t)) +\phi_\star^2} + \frac{6\eta\sigma_2^2}{\mu}+ 3\zeta_\star^2\enspace,\\ 
        D(t) &\leq 432(1-\eta\mu)^{3(t-\delta(t))}\rb{B(\delta(t)) + \phi_\star^4} + \frac{864\eta\sigma_4^4}{\mu^3} + 27\zeta_\star^4\enspace.
    \end{align*}
    In particular, when $t-\delta(t)\to\infty$ the upper bounds converge to $\frac{6\eta\sigma^2}{\mu}+ 3\zeta_\star^2$ and $\frac{864\eta\sigma^4}{\mu^3} + 27\zeta_\star^4$ respectively.
\end{lemma}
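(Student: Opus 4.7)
The plan is to bypass the coupled recursion approach of \Cref{lem:consensus_error_second_recursion,lem:consensus_error_fourth_recursion} entirely and instead bound the consensus error by a triangle-inequality decomposition through the machines' own optima $\{x_m^\star\}_{m\in[M]}$. Since between communication rounds each machine runs vanilla SGD on its own objective $F_m$, the single-machine convergence lemmas (\Cref{lem:single_second_recursion,lem:single_fourth_recursion}) directly control the distance to $x_m^\star$, and the remaining pieces will fall out of \Cref{ass:zeta_star,ass:phi_star}. This is the right approach for showing saturation (rather than blow-up) as $t - \delta(t) \to \infty$, because in the limit each machine converges to its own $x_m^\star$, and the residual consensus gap is exactly the discrepancy $\zeta_\star$ between optima plus the stochastic noise floor.

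First I would write, for any $m,n \in [M]$ and any choice of $x_m^\star \in S_m^\star$ and $x_n^\star \in S_n^\star$, the decomposition $x_t^m - x_t^n = (x_t^m - x_m^\star) + (x_m^\star - x_n^\star) + (x_n^\star - x_t^n)$, and apply \Cref{lem:mod_am_gm_three_terms} to get $\|x_t^m - x_t^n\|^2 \leq 3\|x_t^m - x_m^\star\|^2 + 3\|x_m^\star - x_n^\star\|^2 + 3\|x_n^\star - x_t^n\|^2$, respectively $\|x_t^m - x_t^n\|^4 \leq 27\|x_t^m - x_m^\star\|^4 + 27\|x_m^\star - x_n^\star\|^4 + 27\|x_n^\star - x_t^n\|^4$. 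Taking expectations and averaging over $m,n\in[M]$ (and choosing the $x_m^\star,x_n^\star$ achieving the infimum in \Cref{ass:zeta_star}) gives
\begin{align*}
    C(t) &\leq \frac{6}{M}\sum_{m\in[M]}\mathbb{E}\|x_t^m - x_m^\star\|^2 + 3\zeta_\star^2, \\
    D(t) &\leq \frac{54}{M}\sum_{m\in[M]}\mathbb{E}\|x_t^m - x_m^\star\|^4 + 27\zeta_\star^4,
\end{align*}
where for $D(t)$ I use $\tfrac{1}{M^2}\sum_{m,n}\zeta_{\star,m,n}^4 = \zeta_\star^4$ directly, and for $C(t)$ I use Cauchy--Schwarz/Jensen $\tfrac{1}{M^2}\sum_{m,n}\zeta_{\star,m,n}^2 \leq \zeta_\star^2$.

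Next I would apply \Cref{lem:single_second_recursion} to each term in the first sum, obtaining
\[
    \frac{1}{M}\sum_{m}\mathbb{E}\|x_t^m - x_m^\star\|^2 \leq (1-\eta\mu)^{2(t-\delta(t))}\cdot\frac{1}{M}\sum_{m}\mathbb{E}\|x_{\delta(t)} - x_m^\star\|^2 + \frac{\eta\sigma_2^2}{\mu},
\]
and similarly apply the simpler form of \Cref{lem:single_fourth_recursion} to get
\[
    \frac{1}{M}\sum_m \mathbb{E}\|x_t^m - x_m^\star\|^4 \leq (1-\eta\mu)^{3(t-\delta(t))}\cdot\frac{1}{M}\sum_m \mathbb{E}\|x_{\delta(t)} - x_m^\star\|^4 + \frac{16\eta\sigma_4^4}{\mu^3}.
\]
Now I reduce to the global optimum $x^\star$ by writing $x_{\delta(t)} - x_m^\star = (x_{\delta(t)} - x^\star) + (x^\star - x_m^\star)$ and using \Cref{lem:mod_am_gm} with $\gamma=1$ (so $(a+b)^2 \leq 2a^2+2b^2$ and $(a+b)^4 \leq 8a^4+8b^4$) together with \Cref{ass:phi_star}, which yields $\frac{1}{M}\sum_m \mathbb{E}\|x_{\delta(t)} - x_m^\star\|^2 \leq 2A(\delta(t)) + 2\phi_\star^2$ (using Jensen on $\phi_{\star,m}^2 \leq \sqrt{\phi_{\star,m}^4}$) and $\frac{1}{M}\sum_m \mathbb{E}\|x_{\delta(t)} - x_m^\star\|^4 \leq 8B(\delta(t)) + 8\phi_\star^4$.

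Finally I would collect constants: the $C(t)$ bound becomes $12(1-\eta\mu)^{2(t-\delta(t))}(A(\delta(t))+\phi_\star^2) + 6\eta\sigma_2^2/\mu + 3\zeta_\star^2$, and the $D(t)$ bound becomes $54\cdot 8 \cdot (1-\eta\mu)^{3(t-\delta(t))}(B(\delta(t))+\phi_\star^4) + 54\cdot 16\eta\sigma_4^4/\mu^3 + 27\zeta_\star^4 = 432(1-\eta\mu)^{3(t-\delta(t))}(B(\delta(t))+\phi_\star^4) + 864\eta\sigma_4^4/\mu^3 + 27\zeta_\star^4$, exactly as claimed. There is no genuine obstacle here --- the reason this proof is so short compared to \Cref{lem:consensus_error_second_recursion,lem:consensus_error_fourth_recursion} is that we give up any cancellation between machines (we bound $\|x_t^m - x_t^n\|$ by the sum of distances to optima rather than exploiting that the drift on both sides is driven by related gradients), which is exactly why this bound does not decay with $K$ or $R$ but also does not blow up with $\eta K$. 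The only minor subtlety is choosing to use the \emph{simpler} of the two bounds in \Cref{lem:single_fourth_recursion} (the one with decay rate $(1-\eta\mu)^{3k}$ and constant noise floor $16\eta\sigma_4^4/\mu^3$) rather than the tighter one, since the goal is to exhibit saturation rather than squeeze out logarithmic factors in $K$.
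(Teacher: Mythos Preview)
Your proposal is correct and follows essentially the same approach as the paper's proof: decompose $x_t^m - x_t^n$ through the machine optima $x_m^\star, x_n^\star$ via \Cref{lem:mod_am_gm_three_terms}, invoke the single-machine SGD lemmas (\Cref{lem:single_second_recursion} and the simpler bound in \Cref{lem:single_fourth_recursion}), then pass to $x^\star$ via \Cref{lem:mod_am_gm} and \Cref{ass:phi_star}. The only cosmetic difference is that the paper applies the single-machine lemmas before averaging over $m,n$ whereas you average first; the constants and structure are otherwise identical, and your explicit remarks on the Jensen steps for $\zeta_\star^2$ and $\phi_\star^2$ fill in justifications the paper leaves implicit.
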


\begin{proof}
    We note that for any and $m,n\in[M]$
    \begin{align*}
        \ee\sb{\norm{x_t^m - x_t^n}^2} &= \ee\sb{\norm{x_{t}^m -x_m^\star - x_{t}^n + x_n^\star + x_m^\star - x_n^\star}^2}\enspace,\\
        &\leq^{\text{(\Cref{lem:mod_am_gm_three_terms,ass:zeta_star})}} 3\ee\sb{\norm{x_{t}^m -x_m^\star}^2} + 3\ee\sb{\norm{x_{t}^n -x_n^\star}^2} + 3\zeta_{\star,m,n}^2\enspace,\\
        &\leq^{\text{(\Cref{lem:single_second_recursion})}} 3\rb{(1-\eta\mu)^{2(t-\delta(t))}\ee\sb{\norm{x_{\delta(t)} -x_m^\star}^2}+ \frac{\eta\sigma_2^2}{\mu}}\\
        &\quad + 3\rb{(1-\eta\mu)^{2(t-\delta(t))}\ee\sb{\norm{x_{\delta(t)} -x_n^\star}^4}+ \frac{\eta\sigma_2^2}{\mu}}+ 3\zeta_{\star,m,n}^2\enspace.
    \end{align*}
    Averaging this over $m,n\in[M]$,
    \begin{align*}
        C(t) &\leq 6(1-\eta\mu)^{2(t-\delta(t))}\frac{1}{M}\sum_{m\in[M]}\ee\sb{\norm{x_{\delta(t)} -x_m^\star}^2} + \frac{6\eta\sigma_2^2}{\mu}+ 3\zeta_\star^2\enspace,\\
        &\leq^{(\text{\Cref{lem:mod_am_gm}})} 12(1-\eta\mu)^{2(t-\delta(t))}\rb{\ee\sb{\norm{x_{\delta(t)} -x^\star}^2} +\phi_\star^2} + \frac{6\eta\sigma_2^2}{\mu}+ 3\zeta_\star^2\enspace,\\
        &= 12(1-\eta\mu)^{2(t-\delta(t))}\rb{A(\delta(t)) +\phi_\star^2} + \frac{6\eta\sigma_2^2}{\mu}+ 3\zeta_\star^2\enspace,
    \end{align*}
    which proves the first statement. 
    
    For the second result, we similarly note that for any $m,n\in[M]$ and $t\in[0,T]$,
    \begin{align*}
        \ee\sb{\norm{x_t^m - x_t^n}^4} &= \ee\sb{\norm{x_{t}^m -x_m^\star - x_{t}^n + x_n^\star + x_m^\star - x_n^\star}^4}\enspace,\\
        &\leq^{\text{(\Cref{lem:mod_am_gm_three_terms,ass:zeta_star})}} 27\ee\sb{\norm{x_{t}^m -x_m^\star}^4} + 27\ee\sb{\norm{x_{t}^n -x_n^\star}^4} + 27 \zeta_{\star,m,n}^4\enspace,\\
        &\leq^{\text{(\Cref{lem:single_fourth_recursion})}} 27\rb{(1-\eta\mu)^{3(t-\delta(t))}\ee\sb{\norm{x_{\delta(t)} -x_m^\star}^4}+ \frac{16\eta\sigma_4^4}{\mu^3}}\\
        &\quad + 27\rb{(1-\eta\mu)^{3(t-\delta(t))}\ee\sb{\norm{x_{\delta(t)} -x_n^\star}^4}+ \frac{16\eta\sigma_4^4}{\mu^3}}+ 27 \zeta_{\star,m,n}^4\enspace.
    \end{align*}
    Averaging this over $m,n\in[M]$,
    \begin{align*}
        D(t) &\leq  54(1-\eta\mu)^{3(t-\delta(t))}\frac{1}{M}\sum_{m\in[M]}\ee\sb{\norm{x_{\delta(t)} -x_m^\star}^4} + 27 \zeta_\star^4 + \frac{864\eta\sigma_4^4}{\mu^3}\enspace,\\
        &\leq^{\text{(\Cref{lem:mod_am_gm,ass:phi_star})}} 432(1-\eta\mu)^{3(t-\delta(t))}\rb{\ee\sb{\norm{x_{\delta(t)} -x^\star}^4} + \phi_\star^4} + 27 \zeta_\star^4 + \frac{864\eta\sigma_4^4}{\mu^3} \enspace,\\
        &= 432(1-\eta\mu)^{3(t-\delta(t))}\rb{A(\delta(t)) + \phi_\star^4}+ 27 \zeta_\star^4 + \frac{864\eta\sigma_4^4}{\mu^3}\enspace,
    \end{align*}
    which proves the second statement of the lemma. 
\end{proof}

The reason we do not use the above lemma over \Cref{lem:consensus_error_second_recursion,lem:consensus_error_fourth_recursion}, is that our step-size tuning in \Cref{app:together} dictates that we anyways need to use $\eta = \ooo\rb{\frac{1}{\mu KR}}$ to get our convergence guarantees which puts the issue of an exploding consensus error to rest. Having said that the above lemma offers reconciliation with the observations by Wang et al.~\cite{wang2022unreasonable} in the regime when $\eta = \theta\rb{\frac{1}{H}}$.

\section{Putting it All Together}\label{app:together}
In this section, we will combine the one-step recursions as well as the consensus error upper bounds that we developed in \Cref{app:double_recursions,app:second_consensus,app:fourth_consensus}.
\subsection{Convergence in Iterates without Third-order Smoothness}\label{app:together_1}
This subsection will essentially combine the weaker \textcolor{blue}{blue} upper bound from \Cref{lem:iterate_error_second_recursion} with the consensus error upper bound from \Cref{lem:consensus_error_second_recursion}. This would lead to an inequality that we can unroll across communication rounds.
\begin{lemma}
    Assume we have a problem instance satisfying \Cref{ass:strongly_convex,ass:smooth_second,ass:stoch_bounded_second_moment,ass:bounded_optima,ass:zeta_star,ass:phi_star,ass:tau}. Then using Local SGD with $\eta<1/H$ and such that $\rho_1=\rb{1-\eta\mu}^K + \rb{1-\rb{1-\eta\mu}^K}\frac{4\eta^2H^2\tau^2}{\mu^2}K^2(1-\eta\mu)^{K-2}<1$ we can get the following convergence guarantee with initialization $x_0=0$,
    \begin{align*}
        A(KR) &\leq \rho_1^RB^2 + \frac{1-\rb{1-\eta\mu}^K}{1-\rho_1}\cdot\frac{\eta\sigma_2^2}{\mu M} + \frac{1-\rb{1-\eta\mu}^K}{1-\rho_1}\cdot\frac{4\eta^2\tau^2H^2K^2(1-\eta\mu)^{K-2} \phi_\star^2}{\mu^2}\\
        &\quad + \frac{1-(1-\eta\mu)^K}{1-\rho_1}\rb{\frac{2\eta^2 H^4K^2\zeta_\star^2}{\mu^2} + \frac{2\eta^3H^2\tau^2K^2\sigma_2^2}{\mu^3} + \frac{2\eta^2H^2\sigma_2^2K\ln(K)}{\mu^2}}\enspace.
    \end{align*}
\end{lemma}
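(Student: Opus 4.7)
The plan is to combine the round-level version of \Cref{lem:iterate_error_second_recursion} (the \textcolor{blue}{blue} upper bound) with the round-level consensus error bound in \Cref{lem:consensus_error_second_recursion}, obtain a clean one-step geometric recursion in $r$, and then unroll it across $R$ communication rounds. All the tools are already in hand, so the argument is essentially bookkeeping.

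First, I would invoke the second part of \Cref{lem:iterate_error_second_recursion} (with $D$-term dropped in favor of the blue bound $H^2 C(j)$) to write, for each $r\in[R]$,
\begin{align*}
A(Kr) &\leq (1-\eta\mu)^K A(K(r-1)) + \bigl(1-(1-\eta\mu)^K\bigr)\frac{\eta\sigma_2^2}{\mu M} + \frac{\eta H^2}{\mu}\sum_{j=K(r-1)}^{Kr-1}(1-\eta\mu)^{Kr-1-j}C(j).
\end{align*}
Then I would substitute the second statement of \Cref{lem:consensus_error_second_recursion} for the weighted consensus sum. Multiplying the $1/(\eta\mu)$ coming from \Cref{lem:consensus_error_second_recursion} against the prefactor $\eta H^2/\mu$ turns it into $H^2/\mu^2$, and yields a term proportional to $A(K(r-1))+\phi_\star^2$ with multiplier $\frac{4\eta^2 H^2\tau^2 K^2(1-\eta\mu)^{K-2}}{\mu^2}\bigl(1-(1-\eta\mu)^K\bigr)$, plus constant-in-$r$ contributions carrying $\zeta_\star^2$, $\sigma_2^2$, and $\ln K$, each multiplied by $1-(1-\eta\mu)^K$.

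Next, I would collect every term involving $A(K(r-1))$ into a single coefficient. By the definition of $\rho_1$ in the statement, this yields the one-step recursion
\begin{align*}
A(Kr) \leq \rho_1\, A(K(r-1)) + C,
\end{align*}
where $C$ gathers the non-$A$ terms, all carrying the common factor $1-(1-\eta\mu)^K$:
\begin{align*}
C &= \bigl(1-(1-\eta\mu)^K\bigr)\left[\frac{\eta\sigma_2^2}{\mu M} + \frac{4\eta^2 H^2\tau^2 K^2(1-\eta\mu)^{K-2}\phi_\star^2}{\mu^2} + \frac{2\eta^2 H^4 K^2\zeta_\star^2}{\mu^2} + \frac{2\eta^3 H^2\tau^2 K^2\sigma_2^2}{\mu^3} + \frac{2\eta^2 H^2\sigma_2^2 K\ln K}{\mu^2}\right].
\end{align*}

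Finally, using $\rho_1<1$, I would unroll the geometric recursion across $R$ rounds and bound the partial geometric sum by the infinite one:
\begin{align*}
A(KR) \leq \rho_1^R A(0) + C\sum_{r=0}^{R-1}\rho_1^r \leq \rho_1^R A(0) + \frac{C}{1-\rho_1}.
\end{align*}
Since $x_0=0$ and $\|x^\star\|\leq B$ by \Cref{ass:bounded_optima}, $A(0)=\|x^\star\|^2\leq B^2$, and substituting the explicit form of $C$ gives the claimed bound term by term. There is no conceptual obstacle here; the only care needed is the arithmetic that the prefactor $(\eta H^2/\mu)\cdot(1-(1-\eta\mu)^K)/(\eta\mu)$ simplifies to $H^2(1-(1-\eta\mu)^K)/\mu^2$, so that the factor $(1-(1-\eta\mu)^K)$ can be pulled out of every summand of $C$ and matches the statement.
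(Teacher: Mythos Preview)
Your proposal is correct and follows essentially the same approach as the paper: start from the round-wise blue bound in \Cref{lem:iterate_error_second_recursion}, plug in the weighted consensus-error sum from \Cref{lem:consensus_error_second_recursion}, collect the $A(K(r-1))$ coefficient into $\rho_1$, and unroll the resulting geometric recursion using $A(0)\le B^2$. The arithmetic you highlight for the prefactor simplification is exactly what the paper does as well.
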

\begin{proof}
    First recall the round-wise recursion from \Cref{lem:iterate_error_second_recursion} for $r=R$,
    \begin{align*}
        A(KR) &\leq \rb{1-\eta\mu}^KA(K(R-1)) + \frac{\eta H^2}{\mu}\sum_{j=K(R-1)}^{KR-1}(1-\eta\mu)^{KR-1-j}C(j) + \rb{1-\rb{1-\eta\mu}^K}\frac{\eta\sigma_2^2}{\mu M}\enspace,\\
        &\leq^{\text{(\Cref{lem:consensus_error_second_recursion})}} \rb{1-\eta\mu}^KA(K(R-1)) + \rb{1-\rb{1-\eta\mu}^K}\frac{\eta\sigma_2^2}{\mu M}\\
        &\quad \frac{1-(1-\eta\mu)^K}{\mu^2}\rb{2\eta^2 H^4K^2\zeta_\star^2 + \frac{2\eta^3H^2\tau^2K^2\sigma_2^2}{\mu} + 2\eta^2H^2\sigma_2^2K\ln(K)}\\
        &\quad + \frac{1-(1-\eta\mu)^K}{\mu^2}4\eta^2\tau^2H^2K^2(1-\eta\mu)^{K-2}\rb{A(K(r-1)) + \phi_\star^2}\enspace,\\
        &= \rb{\rb{1-\eta\mu}^K + \rb{1-\rb{1-\eta\mu}^K}\frac{4\eta^2H^2\tau^2}{\mu^2}K^2(1-\eta\mu)^{K-2}}A(K(R-1)) + \rb{1-\rb{1-\eta\mu}^K}\frac{\eta\sigma_2^2}{\mu M}\\
        &\quad + \frac{1-(1-\eta\mu)^K}{\mu^2}4\eta^2\tau^2H^2K^2(1-\eta\mu)^{K-2} \phi_\star^2\\
        &\quad + \frac{1-(1-\eta\mu)^K}{\mu^2}\rb{2\eta^2 H^4K^2\zeta_\star^2 + \frac{2\eta^3H^2\tau^2K^2\sigma_2^2}{\mu} + 2\eta^2H^2\sigma_2^2K\ln(K)}\enspace,\\
        &\leq \rho_1^RB^2 + \frac{1-\rb{1-\eta\mu}^K}{1-\rho_1}\cdot\frac{\eta\sigma_2^2}{\mu M} + \frac{1-\rb{1-\eta\mu}^K}{1-\rho_1}\cdot\frac{4\eta^2\tau^2H^2K^2(1-\eta\mu)^{K-2} \phi_\star^2}{\mu^2}\\
        &\quad + \frac{1-(1-\eta\mu)^K}{1-\rho_1}\rb{\frac{2\eta^2 H^4K^2\zeta_\star^2}{\mu^2} + \frac{2\eta^3H^2\tau^2K^2\sigma_2^2}{\mu^3} + \frac{2\eta^2H^2\sigma_2^2K\ln(K)}{\mu^2}}\enspace,
    \end{align*}
    where we defined $\rho_1=\rb{1-\eta\mu}^K + \rb{1-\rb{1-\eta\mu}^K}\frac{4\eta^2H^2\tau^2}{\mu^2}K^2(1-\eta\mu)^{K-2}$. This proves the lemma. 
\end{proof}
We can tune the step-size in the above guarantee, using standard techniques while making sure that $\tau$ is small enough and $K$ is large enough. This gives the following result,
\begin{lemma}[Strongly Convex Functions Iterate Convergence with $\tau, \zeta_\star, \phi_\star$]\label{lem:UB_Sconvex_wo_Q}
    Assume we have a problem instance satisfying \Cref{ass:strongly_convex,ass:smooth_second,ass:stoch_bounded_second_moment,ass:bounded_optima,ass:zeta_star,ass:phi_star,ass:tau} and $R\geq \max\cb{\frac{3H\tau}{\mu^2}\ln\rb{\frac{B^2}{\epsilon}}, \frac{2H\tau}{\mu^2}\ln^{3/2}\rb{\frac{B^2}{\epsilon}}}$ and $K\geq 4$ we can get the following convergence guarantee for local SGD, initialized at $x_0=0$,
    \begin{align*}
        A(KR) &= \tilde\ooo\rb{e^{-\frac{\mu KR}{2H}}B^2 + \frac{\sigma_2^2}{\mu^2MKR} + \frac{\tau^2H^2\phi_\star^2}{\mu^4R^2} + \frac{H^4\zeta_\star^2}{\mu^4R^2} + \frac{H^2\tau^2\sigma_2^2}{\mu^6KR^3} + \frac{H^2\sigma_2^2\ln(K)}{\mu^4KR^2}}\enspace,
    \end{align*}
    where we pick the step-size,
    \begin{align*}
        \eta = \min\cb{\frac{1}{2H}, \frac{1}{\mu KR}\ln\rb{\frac{B^2}{\epsilon}}}\enspace,
    \end{align*}
    for the choice of $\epsilon$,
    \begin{align*}
        \epsilon := \max\cb{\frac{2\sigma_2^2}{\mu^2MKR}, \frac{8\tau^2H^2\phi_\star^2}{\mu^4R^2}, \frac{4H^4\zeta_\star^2}{\mu^4R^2}, \frac{4H^2\tau^2\sigma_2^2}{\mu^6KR^3}, \frac{4H^2\sigma_2^2\ln(K)}{\mu^4KR^2}, \epsilon_{target}}\enspace,
    \end{align*}
    where $\epsilon_{target}$ is a target, which is greater than or equal to the machine precision.
\end{lemma}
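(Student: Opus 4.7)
The plan is to start from the unrolled recursion proved in the previous lemma, namely
\begin{align*}
A(KR) &\leq \rho_1^R B^2 + \frac{1-(1-\eta\mu)^K}{1-\rho_1}\cdot\frac{\eta\sigma_2^2}{\mu M} + \frac{1-(1-\eta\mu)^K}{1-\rho_1}\cdot\frac{4\eta^2\tau^2H^2K^2(1-\eta\mu)^{K-2}\phi_\star^2}{\mu^2}\\
&\quad + \frac{1-(1-\eta\mu)^K}{1-\rho_1}\left(\frac{2\eta^2 H^4K^2\zeta_\star^2}{\mu^2} + \frac{2\eta^3H^2\tau^2K^2\sigma_2^2}{\mu^3} + \frac{2\eta^2H^2\sigma_2^2K\ln(K)}{\mu^2}\right),
\end{align*}
and tune $\eta$ to balance the geometric contraction against all the error terms. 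The first thing I would verify is that the proposed step-size $\eta=\min\{1/(2H),\ (\mu KR)^{-1}\ln(B^2/\epsilon)\}$ makes $\rho_1$ strictly contractive and gives the advertised rate $e^{-\mu KR/(2H)}$. To this end I will show that the assumption $R \geq \tfrac{3H\tau}{\mu^2}\ln(B^2/\epsilon)$ forces $\tfrac{4\eta^2 H^2\tau^2 K^2}{\mu^2}\leq \tfrac12$, so that $\rho_1 \leq (1-\eta\mu)^K + \tfrac12(1-(1-\eta\mu)^K) = \tfrac12(1+(1-\eta\mu)^K) \leq 1 - \tfrac12\eta\mu K \cdot c$ for an absolute constant $c$ (using $K\geq 4$ so that $(1-\eta\mu)^{K-2}\leq 1$ and $(1-\eta\mu)^K \leq 1 - \eta\mu K/2$ whenever $\eta\mu\leq 1/2$). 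Consequently $\rho_1^R \leq e^{-\eta\mu KR/c'}$ for some constant $c'$ and $\tfrac{1-(1-\eta\mu)^K}{1-\rho_1}\leq 2$.

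Having reduced the contraction factor to something resembling $e^{-\eta\mu KR/2}$ and having bounded the geometric prefactor by an absolute constant, I will then substitute the step-size in two regimes. In the regime $\eta = 1/(2H)$, the contraction term directly gives $e^{-\mu KR/(4H)}$, and since $(1-\eta\mu)^{K-2}\leq 1$ all remaining terms are monotone in $\eta$ and so are dominated by their values at $1/(2H)$, which in turn are dominated by the corresponding terms in the advertised rate once we recall that such bounds apply only when $1/(2H)\leq (\mu KR)^{-1}\ln(B^2/\epsilon)$. In the regime $\eta=(\mu KR)^{-1}\ln(B^2/\epsilon)$, the contraction term becomes $e^{-\ln(B^2/\epsilon)/c'}B^2$, which is $\tilde{\mathcal{O}}(\epsilon)$, and by the choice of $\epsilon$ (as the maximum over the other terms and the target precision) this is absorbed into the displayed convergence rate. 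The remaining stochastic/heterogeneity terms are then each directly bounded by substituting $\eta\lesssim (\mu KR)^{-1}\ln(B^2/\epsilon)$, incurring only polylogarithmic overhead absorbed by the $\tilde{\mathcal{O}}$.

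The second condition $R\geq \tfrac{2H\tau}{\mu^2}\ln^{3/2}(B^2/\epsilon)$ is there to accommodate the $\eta^3$ term $\tfrac{H^2\tau^2\sigma_2^2}{\mu^6 KR^3}$, which has a worse dependence on $\ln(B^2/\epsilon)$ when $\eta$ is tuned: plugging in $\eta=(\mu KR)^{-1}\ln(B^2/\epsilon)$, the term produces an extra $\ln^3(B^2/\epsilon)$ factor, and I need the $R$ assumption to guarantee that the corresponding logarithmic factor in the definition of $\epsilon$ is self-consistent (i.e.\ that the maximum used to set $\epsilon$ is actually well-defined). So I would separate the step-size tuning into two cases and verify that both sides of the max in the definition of $\epsilon$ only contribute polylogarithmic factors hidden in $\tilde{\mathcal{O}}$.

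The main obstacle is the bookkeeping around $\rho_1$, namely showing (i) $\rho_1<1$, (ii) $1/(1-\rho_1)\lesssim 1/(\eta\mu K)$ up to a constant, and (iii) $\rho_1^R \leq e^{-\Omega(\mu KR/H)}$, simultaneously, using only the stated lower bound on $R$ and the weak restriction $K\geq 4$. The factor $(1-\eta\mu)^{K-2}$ in $\rho_1$ is slightly awkward, but $K\geq 4$ and $\eta\leq 1/(2H)\leq 1/(2\mu)$ allow me to bound it uniformly by $1$ (upper) and $\tfrac12$ (lower) without loss. After this calibration the rest is mechanical substitution followed by the standard case-split on which branch of the $\min$ defines $\eta$, exactly as in \Cref{prop:conv_to_fixed} earlier in the thesis.
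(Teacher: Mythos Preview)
Your overall structure is right—start from the previous lemma's bound, show $\tfrac{1-(1-\eta\mu)^K}{1-\rho_1}\le 2$ via the first $R$ condition, then tune $\eta$ by case analysis—and that is exactly what the paper does. But there is a real gap in how you bound $\rho_1^R$.

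From $\rho_1\le \tfrac12\bigl(1+(1-\eta\mu)^K\bigr)$ you only get
\[
\rho_1^R \;\le\; \exp\!\Bigl(-\tfrac{R}{2}\bigl(1-(1-\eta\mu)^K\bigr)\Bigr),
\]
and the exponent \emph{saturates at $R/2$} once $\eta\mu K\gtrsim 1$ (since $1-(1-\eta\mu)^K\to 1$). This does \emph{not} give $e^{-\eta\mu KR/c'}$: when $\eta=\tfrac{1}{2H}$ and $K\gg\kappa$ you need the contraction term to be $e^{-\mu KR/(2H)}$, which is far smaller than $e^{-R/2}$. Nothing in the stated hypotheses rules this regime out (in particular, when $\tau$ is small the condition $R\ge\tfrac{3H\tau}{\mu^2}\ln(B^2/\epsilon)$ is essentially vacuous and does not force $\eta\mu K\le 1$). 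So the claim ``$\rho_1^R\le e^{-\eta\mu KR/c'}$'' does not follow from your averaging bound.

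The paper fixes this by factoring $\rho_1$ \emph{multiplicatively} rather than additively: using Bernoulli to replace $1-(1-\eta\mu)^K$ by $\eta\mu K$ and then pulling out $(1-\eta\mu)^{K-2}$, one gets
\[
\rho_1 \;\le\; \Bigl(1+\tfrac{4H^2\tau^2}{\mu^4 R^3}\ln^3(B^2/\epsilon)\Bigr)(1-\eta\mu)^{K-2},
\]
so that $\rho_1^R\le e^{-\eta\mu(K-2)R}\cdot e^{\frac{4H^2\tau^2}{\mu^4 R^2}\ln^3(B^2/\epsilon)}$. This is precisely where the \emph{second} condition $R\ge \tfrac{2H\tau}{\mu^2}\ln^{3/2}(B^2/\epsilon)$ is used: it makes the extra exponent $\le 1$, and $K\ge 4$ turns $K-2$ into $K/2$. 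Your explanation that the second condition handles ``self-consistency of the $\eta^3$ term'' is incorrect—it is needed for the contraction bound itself, not for the noise terms.
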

\begin{proof}
    We will pick our step-size as follows, where we will later specify our choice of $\epsilon$:
    \begin{align*}
        \eta = \min\cb{\frac{1}{2H}, \frac{1}{\mu KR}\ln\rb{\frac{B^2}{\epsilon}}}\enspace.
    \end{align*}
    We will first derive conditions that are enough to bound $\frac{1-(1-\eta\mu)^K}{1-\rho_1}$ by $2$. Note the following, 
    \begin{align*}
        \frac{1-(1-\eta\mu)^K}{1-\rho_1}\leq 2 &\Leftrightarrow \rho_1 \leq \frac{1 + (1-\eta\mu)^K}{2}\enspace,\\
        &\Leftrightarrow \rb{1-\rb{1-\eta\mu}^K}\frac{4\eta^2H^2\tau^2}{\mu^2}K^2(1-\eta\mu)^{K-2} \leq \frac{1 - (1-\eta\mu)^K}{2}\enspace,\\
        &\Leftrightarrow \frac{4\eta^2H^2\tau^2}{\mu^2}K^2(1-\eta\mu)^{K-2} \leq \frac{1}{2}\enspace,\\
        &\Leftarrow \frac{4H^2\tau^2}{\mu^4 R^2}\ln^2\rb{\frac{B^2}{\epsilon}} \leq \frac{1}{2}\enspace,\\
        &\Leftarrow R\geq \frac{3H\tau}{\mu^2}\ln\rb{\frac{B^2}{\epsilon}}\enspace,
    \end{align*}
    Hence it is sufficient to assume that $\textcolor{red}{R\geq \frac{3H\tau}{\mu^2}\ln\rb{\frac{B^2}{\epsilon}}}$.    
    
    This allows us to simplify the convergence rate from the previous lemma as follows,
    \begin{align*}
        A(KR) &\leq \rho_1^RB^2 + \frac{2\eta\sigma_2^2}{\mu M} + \frac{8\eta^2\tau^2H^2K^2(1-\eta\mu)^{K-2} \phi_\star^2}{\mu^2} + \frac{4\eta^2 H^4K^2\zeta_\star^2}{\mu^2} + \frac{4\eta^3H^2\tau^2K^2\sigma_2^2}{\mu^3}\\
        &\quad + \frac{4\eta^2H^2\sigma_2^2K\ln(K)}{\mu^2}\enspace,\\
        &\leq \rho_1^RB^2 + \frac{2\eta\sigma_2^2}{\mu M} + \frac{8\eta^2\tau^2H^2K^2\phi_\star^2}{\mu^2}+ \frac{4\eta^2 H^4K^2\zeta_\star^2}{\mu^2} + \frac{4\eta^3H^2\tau^2K^2\sigma_2^2}{\mu^3} + \frac{4\eta^2H^2\sigma_2^2K\ln(K)}{\mu^2}\enspace.
    \end{align*}
    Now, let us upper bound the exponential term more carefully. Recall that due to the choice of our step-size,
    \begin{align*}
        \rho_1 &=  \rb{1-\eta\mu}^K + \rb{1-\rb{1-\eta\mu}^K}\frac{4\eta^2H^2\tau^2}{\mu^2}K^2(1-\eta\mu)^{K-2}\enspace,\\
        &\leq^{\text{(a)}} \rb{1-\eta\mu}^K + \eta\mu K\frac{4\eta^2H^2\tau^2}{\mu^2}K^2(1-\eta\mu)^{K-2}\enspace,\\
        &\leq \rb{1-\eta\mu}^K + \frac{4H^2\tau^2}{\mu^4R^3}\ln^3\rb{\frac{B^2}{\epsilon}}(1-\eta\mu)^{K-2}\enspace,\\
        &\leq \rb{1-\eta\mu}^{K-2} + \frac{4H^2\tau^2}{\mu^4R^3}\ln^3\rb{\frac{B^2}{\epsilon}}(1-\eta\mu)^{K-2}\enspace,\\
        &\leq \rb{1 +  \frac{4H^2\tau^2}{\mu^4R^3}\ln^3\rb{\frac{B^2}{\epsilon}}}\rb{1-\eta\mu}^{K-2}\enspace,\\
        &\leq e^{-\eta\mu(K-2) + \frac{4H^2\tau^2}{\mu^4R^3}\ln^3\rb{\frac{B^2}{\epsilon}}}. 
    \end{align*}
    where in (a) we use Bernoulli's inequality, and the choice of the step-size which implies that $\eta\mu < 1$. 
    Assuming $K\geq 4$ which allows us to upper bound $K/2$ by $K-2$, and raising both sides to the power $R$ gives,
    \begin{align*}
        \rho_1^R \leq e^{-\frac{\eta\mu KR}{2} + \frac{4H^2\tau^2}{\mu^4R^2}\ln^3\rb{\frac{B^2}{\epsilon}}} \leq^{\text{(a)}} e^{-\frac{\eta\mu KR}{2} + 1}\enspace,
    \end{align*}
    where in (a) we assumed that \textcolor{red}{$R\geq \frac{2H\tau}{\mu^2}\ln^{3/2}\rb{\frac{B^2}{\epsilon}}$}. Finally, we will pick the $\epsilon$ as follows,
    \begin{align*}
        \epsilon := \max\cb{\frac{2\sigma_2^2}{\mu^2MKR}, \frac{8\tau^2H^2\phi_\star^2}{\mu^4R^2}, \frac{4H^4\zeta_\star^2}{\mu^4R^2}, \frac{4H^2\tau^2\sigma_2^2}{\mu^6KR^3}, \frac{4H^2\sigma_2^2\ln(K)}{\mu^4KR^2}, \epsilon_{target}}\enspace,
    \end{align*}
    where $\epsilon_{target}$ is a target, which is greater than or equal to the machine precision (say, floating point precision), thus implying that $\ln\rb{\frac{B^2}{\epsilon}}$ is a numerical constant. We note two things that justify this step-size,
    \begin{itemize}
        \item The largest $\epsilon$ will lead to the step size we end up using, and in particular govern which term dominates the convergence rate. For instance, let us assume that $\epsilon = \frac{2\sigma_2^2}{\mu^2 MKR}$. Furthermore, let $\frac{1}{2H}\geq \frac{1}{\mu KR}\ln\rb{\frac{B^2}{\epsilon}}$ which implies that $e^{-\frac{\mu KR}{2H}} \leq \frac{2\sigma_2^2}{\mu^2 MKR}$. With $\eta = \frac{1}{\mu KR}\ln\rb{\frac{B^2}{\epsilon}}$, this makes the convergence rate,
        \begin{align*}
            A(KR) \leq \frac{2\sigma_2^2}{\mu^2 MKR} + \frac{2\sigma_2^2}{\mu^2 MKR}\ln\rb{\frac{B^2}{\frac{2\sigma_2^2}{\mu^2 MKR}}} = \tilde\ooo\rb{e^{-\frac{\mu KR}{2H}} + \frac{\sigma_2^2}{\mu^2 MKR}}\enspace.
        \end{align*}
        \item On the other hand if $\frac{1}{2H}\leq \frac{1}{\mu KR}\ln\rb{\frac{B^2}{\epsilon}}$, then it implies that, $e^{-\frac{\mu KR}{2H}} \geq \frac{2\sigma_2^2}{\mu^2 MKR}$, which makes the convergence rate,
        \begin{align*}
            A(KR) \leq e^{-\frac{\mu KR}{2H}} + \frac{\sigma_2^2}{\mu HM} =  \tilde\ooo\rb{e^{-\frac{\mu KR}{2H}} + \frac{\sigma_2^2}{\mu^2 MKR}}\enspace.
        \end{align*}
    \end{itemize}
    Using the above logic for all possible choices of $\epsilon$ (and thus $\eta$) allows us to give the following convergence rate,
    \begin{align*}
        A(KR) &= \tilde\ooo\rb{e^{-\frac{\mu KR}{2H}}B^2 + \frac{\sigma_2^2}{\mu^2MKR} + \frac{\tau^2H^2\phi_\star^2}{\mu^4R^2} + \frac{H^4\zeta_\star^2}{\mu^4R^2} + \frac{H^2\tau^2\sigma_2^2}{\mu^6KR^3} + \frac{H^2\sigma_2^2\ln(K)}{\mu^4KR^2}}\enspace.
    \end{align*}
\end{proof}

When we assume the functions are quadratic we can replace some of the smoothness constants in the above convergence rate with $\tau$, by relying on the better \textcolor{red}{red} upper bound of \Cref{lem:iterate_error_second_recursion}, as with $Q=0$ we do not need to bound the fourth moment of consensus error. The proof follows the above lemma and results in the following rate for quadratics.
\begin{lemma}\label{lem:UB_Sconvex_quad}
    Assume we have a quadratic problem instance satisfying \Cref{ass:strongly_convex,ass:smooth_second,ass:stoch_bounded_second_moment,ass:bounded_optima,ass:zeta_star,ass:phi_star,ass:tau}. Then using $\eta<1/H$ and such that $\rho_2=\rb{1-\eta\mu}^K + \rb{1-\rb{1-\eta\mu}^K}\frac{4\eta^2\tau^4}{\mu^2}K^2(1-\eta\mu)^{K-2}<1$ we can get the following convergence guarantee with initialization $x_0=0$,
    \begin{align*}
        A(KR) &\leq \rho_2^RB^2 + \frac{1-\rb{1-\eta\mu}^K}{1-\rho_2}\cdot\frac{\eta\sigma_2^2}{\mu M} + \frac{1-\rb{1-\eta\mu}^K}{1-\rho_2}\cdot\frac{4\eta^2\tau^4K^2(1-\eta\mu)^{K-2} \phi_\star^2}{\mu^2}\\
        &\quad + \frac{1-(1-\eta\mu)^K}{1-\rho_2}\rb{\frac{2\eta^2 H^2\tau^2K^2\zeta_\star^2}{\mu^2} + \frac{2\eta^3\tau^4K^2\sigma_2^2}{\mu^3} + \frac{2\eta^2\tau^2\sigma_2^2K\ln(K)}{\mu^2}}\enspace.
    \end{align*}
\end{lemma}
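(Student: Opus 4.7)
The proof will follow the same template as the previous lemma (Lemma \ref{lem:UB_Sconvex_wo_Q}), but exploits the fact that for quadratic objectives we have $Q=0$, so we may use the tighter \textcolor{red}{red} upper bound of Lemma \ref{lem:iterate_error_second_recursion} instead of the \textcolor{blue}{blue} one. With $Q=0$, the fourth-moment consensus term vanishes, leaving
\[
A(t+1)\;\leq\;(1-\eta\mu)\,A(t)\;+\;\frac{2\eta\tau^{2}}{\mu}\,C(t)\;+\;\frac{\eta^{2}\sigma_{2}^{2}}{M}\enspace.
\]
Unrolling this one-step inequality across a single communication round (from $t=K(r-1)$ to $t=Kr-1$) and using the geometric sum $\sum_{j=K(r-1)}^{Kr-1}(1-\eta\mu)^{Kr-1-j}=\tfrac{1-(1-\eta\mu)^{K}}{\eta\mu}$ gives
\[
A(Kr)\;\leq\;(1-\eta\mu)^{K}\,A(K(r-1))\;+\;\frac{2\eta\tau^{2}}{\mu}\sum_{j=K(r-1)}^{Kr-1}(1-\eta\mu)^{Kr-1-j}C(j)\;+\;\bigl(1-(1-\eta\mu)^{K}\bigr)\frac{\eta\sigma_{2}^{2}}{\mu M}\enspace.
\]

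Next, I would invoke the second (round-wise) statement of Lemma \ref{lem:consensus_error_second_recursion}, which bounds $\sum_{j=K(r-1)}^{Kr-1}(1-\eta\mu)^{Kr-1-j}C(j)$ in terms of the problem-dependent constants plus a term proportional to $A(K(r-1))+\phi_\star^{2}$. Multiplying through by $\tfrac{2\eta\tau^{2}}{\mu}$ produces, up to numerical constants, the coefficients advertised in the lemma: the $A$-dependent piece becomes $\tfrac{4\eta^{2}\tau^{4}}{\mu^{2}}K^{2}(1-\eta\mu)^{K-2}$, and the remaining constant-driven pieces pick up the factors $\tfrac{2\eta^{2}H^{2}\tau^{2}K^{2}\zeta_\star^{2}}{\mu^{2}}$, $\tfrac{2\eta^{3}\tau^{4}K^{2}\sigma_{2}^{2}}{\mu^{3}}$, and $\tfrac{2\eta^{2}\tau^{2}\sigma_{2}^{2}K\ln(K)}{\mu^{2}}$. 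Grouping the $A(K(r-1))$ contributions into the contraction factor
\[
\rho_{2}\;=\;(1-\eta\mu)^{K}\;+\;\bigl(1-(1-\eta\mu)^{K}\bigr)\,\frac{4\eta^{2}\tau^{4}}{\mu^{2}}K^{2}(1-\eta\mu)^{K-2}\enspace,
\]
then yields a recursion of the shape $A(Kr)\leq \rho_{2}\,A(K(r-1))+\Psi$, where $\Psi$ collects all the non-$A$ terms.

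Finally, unrolling this geometric recursion $R$ times with initial condition $A(0)=\|x_{0}\|^{2}\leq B^{2}$ (since $x_{0}=0$ and $\|x^\star\|\leq B$) and using $\sum_{r=0}^{R-1}\rho_{2}^{r}\leq \tfrac{1}{1-\rho_{2}}$ yields the stated bound
\[
A(KR)\;\leq\;\rho_{2}^{R}B^{2}\;+\;\frac{1-(1-\eta\mu)^{K}}{1-\rho_{2}}\left(\frac{\eta\sigma_{2}^{2}}{\mu M}+\frac{4\eta^{2}\tau^{4}K^{2}(1-\eta\mu)^{K-2}\phi_\star^{2}}{\mu^{2}}+\cdots\right)\enspace.
\]
The only nontrivial step is ensuring that $\rho_{2}<1$ so that the geometric series converges, which is precisely the hypothesis of the lemma. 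I do not expect any real obstacle here: the proof is essentially a symbol-level substitution of the previous lemma's argument with $H^{2}$ replaced by $2\tau^{2}$ in the one-step recursion, and the main bookkeeping challenge is just tracking the constants that propagate through the multiplication of the two recursions.
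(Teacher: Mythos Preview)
Your proposal is correct and follows exactly the approach the paper indicates: it explicitly states that ``the proof follows the above lemma'' but uses the \textcolor{red}{red} upper bound of Lemma~\ref{lem:iterate_error_second_recursion} with $Q=0$ (so the fourth-moment term drops out) in place of the \textcolor{blue}{blue} bound, then plugs in Lemma~\ref{lem:consensus_error_second_recursion} and unrolls. The only thing to watch is the constant tracking you flagged---the red bound carries a prefactor $2\tau^{2}$ where the blue bound had $H^{2}$, so be careful when matching the stated $\rho_{2}$ and constant terms exactly.
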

One notable aspect is that for quadratics, when $\tau=0$, we can obtain a fast convergence guarantee that matches dense mini-batch SGD, i.e., with $KR$ communication rounds, or in other words, we can demonstrate the extreme communication efficiency of Local SGD. We do not get this for non-quadratics, which highlights the need to understand the effect of third-order smoothness. This is not surprising, as third-order smoothness is known to play a vital role in the convergence of local SGD, even in a homogeneous setting (cf. \Cref{ch:baselines_and_lower_bounds}). Just as in the strongly convex case, we can tune the step size to achieve the following convergence rate for quadratics.

\begin{lemma}[Quadratics Iterate Convergence with $\tau, \zeta_\star, \phi_\star$]
    Assume we have a quadratic problem instance satisfying \Cref{ass:strongly_convex,ass:smooth_second,ass:stoch_bounded_second_moment,ass:bounded_optima,ass:zeta_star,ass:phi_star,ass:tau}, $R\geq \max\cb{\frac{3\tau^2}{\mu^2}\ln\rb{\frac{B^2}{\epsilon}}, \frac{2\tau^2}{\mu^2}\ln^{3/2}\rb{\frac{B^2}{\epsilon}}}$ and $K\geq 4$. Then we can get the following convergence guarantee for local SGD, initialized at $x_0=0$ 
    \begin{align*}
        A(KR) &= \tilde\ooo\rb{e^{-\frac{\mu KR}{2H}}B^2 + \frac{\sigma_2^2}{\mu^2MKR} + \frac{\tau^4\phi_\star^2}{\mu^4R^2} + \frac{\tau^2H^2\zeta_\star^2}{\mu^4R^2} + \frac{\tau^4\sigma_2^2}{\mu^6KR^3} + \frac{\tau^2\sigma_2^2\ln(K)}{\mu^4KR^2}}\enspace,
    \end{align*}
    where we pick the step-size,
    \begin{align*}
        \eta = \min\cb{\frac{1}{2H}, \frac{1}{\mu KR}\ln\rb{\frac{B^2}{\epsilon}}}\enspace,
    \end{align*}
    for the choice of $\epsilon$,
    \begin{align*}
        \epsilon := \max\cb{\frac{2\sigma_2^2}{\mu^2MKR}, \frac{8\tau^4\phi_\star^2}{\mu^4R^2}, \frac{4\tau^2H^2\zeta_\star^2}{\mu^4R^2}, \frac{4\tau^4\sigma_2^2}{\mu^6KR^3}, \frac{4\tau^2\sigma_2^2\ln(K)}{\mu^4KR^2}, \epsilon_{target}}\enspace,
    \end{align*}
    where $\epsilon_{target}$ is a target, which is greater than or equal to the machine precision.
\end{lemma}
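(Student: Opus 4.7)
The plan is to follow the template of the preceding strongly-convex proof (\Cref{lem:UB_Sconvex_wo_Q}) nearly verbatim, exploiting the improvement that is available for quadratics: since $Q=0$, the \textcolor{red}{red} branch of \Cref{lem:iterate_error_second_recursion} replaces $H^2 C(t)$ by $2\tau^2 C(t)$, so one factor of $H^2$ becomes $\tau^2$ in every term that originates from the consensus error. Combining this sharper one-step recursion with the consensus-error bound in \Cref{lem:consensus_error_second_recursion} and unrolling across the $R$ communication rounds gives exactly the pre-tuned inequality already stated in \Cref{lem:UB_Sconvex_quad}, namely
\begin{align*}
    A(KR) &\leq \rho_2^R B^2 + \tfrac{1-(1-\eta\mu)^K}{1-\rho_2}\cdot \tfrac{\eta\sigma_2^2}{\mu M} + \tfrac{1-(1-\eta\mu)^K}{1-\rho_2}\cdot \tfrac{4\eta^2\tau^4 K^2(1-\eta\mu)^{K-2}\phi_\star^2}{\mu^2} \\
    &\quad + \tfrac{1-(1-\eta\mu)^K}{1-\rho_2}\rb{\tfrac{2\eta^2 H^2\tau^2 K^2\zeta_\star^2}{\mu^2} + \tfrac{2\eta^3\tau^4 K^2 \sigma_2^2}{\mu^3} + \tfrac{2\eta^2 \tau^2 \sigma_2^2 K\ln K}{\mu^2}}\enspace,
\end{align*}
with $\rho_2 = (1-\eta\mu)^K + (1-(1-\eta\mu)^K)\tfrac{4\eta^2\tau^4}{\mu^2}K^2(1-\eta\mu)^{K-2}$. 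This starting point is essentially ``free'' from the previous lemma and encodes the whole point of the quadratic setting: every coefficient of $\phi_\star^2$, $\sigma_2^2$, or $\ln K$ carries a $\tau^2$ or $\tau^4$ rather than an $H^2$ or $H^4$, and $\rho_2$ itself decays purely through $\tau$.

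Next, plug in the step size $\eta = \min\{\tfrac{1}{2H}, \tfrac{1}{\mu KR}\ln(B^2/\epsilon)\}$ and argue two simplifications in parallel to the strongly convex proof. First, to collapse the prefactor $\tfrac{1-(1-\eta\mu)^K}{1-\rho_2}$ to the constant $2$, it suffices that $\tfrac{4\eta^2\tau^4}{\mu^2}K^2(1-\eta\mu)^{K-2} \leq \tfrac12$; substituting the upper bound $\eta K \leq \tfrac{1}{\mu R}\ln(B^2/\epsilon)$ reduces this to $\tfrac{4\tau^4}{\mu^4 R^2}\ln^2(B^2/\epsilon) \leq \tfrac12$, which is precisely the first condition $R \geq \tfrac{3\tau^2}{\mu^2}\ln(B^2/\epsilon)$. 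Second, to bound $\rho_2^R$ by a clean exponential, use Bernoulli to get $1-(1-\eta\mu)^K \leq \eta\mu K$, so
\begin{align*}
\rho_2 \leq (1-\eta\mu)^{K-2}\rb{1 + \tfrac{4\tau^4}{\mu^4 R^3}\ln^3(B^2/\epsilon)}\enspace,
\end{align*}
and hence $\rho_2^R \leq \exp(-\eta\mu(K-2)R + \tfrac{4\tau^4}{\mu^4 R^2}\ln^3(B^2/\epsilon))$. The assumption $K\geq 4$ lets us replace $K-2$ with $K/2$, and the second condition $R \geq \tfrac{2\tau^2}{\mu^2}\ln^{3/2}(B^2/\epsilon)$ bounds the additive term in the exponent by $1$, giving $\rho_2^R \leq e \cdot e^{-\eta\mu KR/2}$.

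Finally, choose $\epsilon$ to be the maximum of the five non-exponential terms in the target rate (plus $\epsilon_{\text{target}}$), and handle the two branches of the $\min$ defining $\eta$ by the same case analysis used in the previous proof. If $\tfrac{1}{2H} \leq \tfrac{1}{\mu KR}\ln(B^2/\epsilon)$, we set $\eta = 1/(2H)$ and the exponential term directly yields $B^2 e^{-\mu KR/(4H)}$, which matches the first term in the rate; otherwise we set $\eta = \tfrac{1}{\mu KR}\ln(B^2/\epsilon)$, in which case $B^2 e^{-\eta\mu KR/2} = \epsilon$, which is dominated by one of the other terms in the convergence rate by the very definition of $\epsilon$. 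Substituting $\eta$ into the five remaining (monotone in $\eta$) error terms and absorbing the polylogarithmic factors $\ln(B^2/\epsilon)$ into the $\tilde{\mathcal{O}}(\cdot)$ notation yields the stated bound. I expect the only mildly delicate step to be keeping careful track of the exponent in the $\rho_2^R$ bound so that the ``leftover'' additive constant in the exponent is controlled by the second lower bound on $R$; everything else is book-keeping identical in structure to \Cref{lem:UB_Sconvex_wo_Q}.
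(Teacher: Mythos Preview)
Your proposal is correct and follows essentially the same approach as the paper: the paper itself merely remarks ``Just as in the strongly convex case, we can tune the step size to achieve the following convergence rate for quadratics,'' and your plan carries out precisely that, replacing $\rho_1$ by $\rho_2$ (i.e., $H^2\tau^2\to\tau^4$) in the proof of \Cref{lem:UB_Sconvex_wo_Q} and adjusting the lower bounds on $R$ accordingly. The two conditions on $R$ and the use of $K\geq 4$ arise in exactly the same places and for the same reasons as in the paper's proof of \Cref{lem:UB_Sconvex_wo_Q}.
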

It can be noted in the above convergence rate than when $\tau=0$ we recover the fast convergence rate of dense mini-batch SGD.

\subsection{Convergence in Function Value without Third-order Smoothness}\label{app:together_2}
\begin{lemma}[Strongly Convex Function Convergence with $\tau,\ \zeta_\star,\ \phi_\star$]\label{lem:UB_Sconvex_function_wo_Q}
     Assume we have a problem instance satisfying \Cref{ass:strongly_convex,ass:smooth_second,ass:stoch_bounded_second_moment,ass:bounded_optima,ass:zeta_star,ass:phi_star,ass:tau}, $R\geq \frac{4\tau\sqrt{\kappa}}{\mu}\max\cb{\ln\rb{\frac{\mu B^2}{\epsilon}}, \sqrt{\frac{2}{3}\ln^3\rb{\frac{\mu B^2}{\epsilon}}}}$ and $KR\geq 4\kappa$. Then we can get the following convergence guarantee for local SGD, initialized at $x_0=0$,
    \begin{align*}
        \ee\sb{F(\hat x)} - F(x^\star) &= \tilde\ooo\rb{e^{-\frac{\mu KR}{2H}}\mu B^2 + \frac{H^3\zeta_\star^2}{\mu^2R^2}+ \frac{H \tau^2\sigma_2^2}{\mu^4KR^3}+ \frac{H \sigma_2^2\ln(K)}{\mu^2 KR^2}+ \frac{H \tau^2 \phi_\star^2}{\mu^2R^2}+ \frac{\sigma_2^2}{\mu MKR}}\enspace,
    \end{align*}
    where we define $\hat x = \sum_{t=0}^{T-1}w_tx_t$ for the choice of weights $$w_t:= \frac{\rho_4^{R-1-\delta(t)/K}(1-\eta\mu)^{\delta(t) + K -1-t}}{W}$$ for $W=\frac{1-\rho_4^R}{1-\rho_4}\cdot \frac{1-(1-\eta\mu)^K}{\eta\mu}$ and $\rho_4 = \rb{1 - \eta \mu}^K + \rb{1-(1-\eta\mu)^K}\frac{8\eta^2 H \tau^2K^2}{\mu}(1-\eta\mu)^{K-2}$. And we pick the step-size as,
    \begin{align*}
        \eta = \min\cb{\frac{1}{2H}, \frac{1}{\mu KR}\ln\rb{\frac{\mu B^2}{\epsilon}}}\enspace,
    \end{align*}
    for the choice of $\epsilon$,
    \begin{align*}
        \epsilon = \min\cb{\max\cb{\frac{4 H^3\zeta_\star^2}{\mu^2R^2}, \frac{4H \tau^2\sigma_2^2}{\mu^4KR^3}, \frac{4H \sigma_2^2\ln(K)}{\mu^2 KR^2}, \frac{8 H \tau^2 \phi_\star^2}{\mu^2R^2}, \frac{3\sigma_2^2}{\mu MKR}, \epsilon_{target}}, \frac{\mu B^2}{6}}\enspace,
    \end{align*}
    where $\epsilon_{target}$ is a target, which is greater than or equal to the machine precision.
\end{lemma}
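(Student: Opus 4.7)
The plan is to mirror the iterate-convergence proof in \Cref{app:together_1}, but tracking function sub-optimality via a one-step bound that avoids third-order smoothness. I would start from \Cref{lem:function_error_easy},
\[
E(t) \leq \left(\tfrac{1}{\eta} - \mu\right) A(t) - \tfrac{1}{\eta} A(t+1) + 2H\,C(t) + \tfrac{3\eta\sigma_2^2}{M},
\]
multiply by the within-round weight $w_t^{\mathrm{inner}} := (1-\eta\mu)^{K-1-(t-\delta(t))}$, and sum over $t\in[Kr,K(r+1)-1]$. The $A$-telescope is the standard one I used in \Cref{lem:UB_Sconvex_wo_Q}: the adjacent terms cancel because $(1-\eta\mu) w_{t}^{\mathrm{inner}} = w_{t-1}^{\mathrm{inner}}$, leaving only the endpoints $(1-\eta\mu)^K A(Kr) - A(K(r+1))$.

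The consensus term is handled by \Cref{lem:consensus_error_second_recursion}, which splits into (i) round-constant additive terms contributing the $H^3\zeta_\star^2/(\mu^2 R^2)$, $H\tau^2\sigma_2^2/(\mu^4 K R^3)$, and $H\sigma_2^2\ln(K)/(\mu^2 K R^2)$ rates once paired with the factor $2H$, and (ii) the coupling term $4\eta^2\tau^2(t-Kr)^2(1-\eta\mu)^{2(t-1-Kr)}[A(Kr)+\phi_\star^2]$. The key calculation is that, after multiplying by $w_t^{\mathrm{inner}}$ and summing, the coupling term adds exactly $\tfrac{8H\eta^2\tau^2K^2(1-\eta\mu)^{K-2}}{\mu}(1-(1-\eta\mu)^K)\,[A(Kr)+\phi_\star^2]$ to the coefficient of $A(Kr)$. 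Combining with the telescope endpoint $(1-\eta\mu)^K A(Kr)/\eta$ yields the recursion
\[
\sum_{t=Kr}^{K(r+1)-1} w_t^{\mathrm{inner}} E(t) + \tfrac{1}{\eta} A(K(r+1)) \leq \tfrac{1}{\eta}\rho_4\, A(Kr) + \Phi,
\]
where $\rho_4$ is exactly the contraction factor defined in the lemma statement, and $\Phi$ collects the $\phi_\star^2$, $\zeta_\star^2$, noise, and stochastic terms (each carrying a factor $1-(1-\eta\mu)^K$).

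Next, I would multiply round $r$ by $\rho_4^{R-1-r}$ and sum $r=0,\dots,R-1$; the $A(\cdot)$ terms telescope across rounds, leaving $\tfrac{1}{\eta}\rho_4^R A(0) - \tfrac{1}{\eta}A(KR) \leq \tfrac{1}{\eta}\rho_4^R B^2$. The weights on the $E(t)$ terms become exactly the $w_t$ advertised in the lemma, whose sum $W = \tfrac{1-\rho_4^R}{1-\rho_4}\cdot\tfrac{1-(1-\eta\mu)^K}{\eta\mu}$ is also as stated. Jensen's inequality applied to the convex $F$ then gives $\mathbb{E}[F(\hat x)] - F(x^\star) \leq \tfrac{1}{W}\sum_t w_t E(t)$, and dividing by $W$ produces the six-term rate, with the transient term $\rho_4^R B^2/(\eta W)$ bounded by $\lesssim \rho_4^R \mu B^2$ when $\rho_4$ is well-separated from $1$.

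\paragraph{Main Obstacle.} The delicate step is establishing $\rho_4 \leq \tfrac{1+(1-\eta\mu)^K}{2}$ (equivalently $1-\rho_4 \geq \tfrac{1}{2}(1-(1-\eta\mu)^K)$) so that both $1/W$ and $(1-\rho_4)/(1-\rho_4^R)$ are well controlled, \emph{and} that $\rho_4^R \leq e^{-\eta\mu KR/2}$ (or similar) so the initial-condition term decays like $e^{-\mu KR/(2H)} \mu B^2$ after tuning $\eta=\min\{1/(2H),\ \tfrac{1}{\mu KR}\ln(\mu B^2/\epsilon)\}$. The first requires $\tfrac{8H\eta^2\tau^2 K^2}{\mu}(1-\eta\mu)^{K-2} \leq 1/2$, which under the tuned step-size reduces to $R \gtrsim \tfrac{\tau\sqrt{\kappa}}{\mu}\ln(\mu B^2/\epsilon)$; the second, after the estimate $\rho_4 \leq (1-\eta\mu)^{K-2}\exp(8H\eta^2\tau^2K^2/\mu)$ and raising to the $R^{\mathrm{th}}$ power, requires $R \gtrsim \tfrac{\tau\sqrt{\kappa}}{\mu}\ln^{3/2}(\mu B^2/\epsilon)$ to absorb the perturbation exponent into a constant. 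These two conditions combine to give the requirement on $R$ stated in the lemma. The remaining step-size tuning then proceeds exactly as in \Cref{lem:UB_Sconvex_wo_Q}: each of the five non-exponential terms is balanced by one branch of the min-step-size, and the choice of $\epsilon$ ensures the $\ln(\mu B^2/\epsilon)$ factors are $\tilde{\mathcal{O}}(1)$.
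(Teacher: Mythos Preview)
Your proposal is correct and matches the paper's proof essentially step for step: the paper also combines \Cref{lem:function_error_easy} with \Cref{lem:consensus_error_second_recursion}, unrolls within a round using the weights $(1-\eta\mu)^{Kr+K-1-t}$ to obtain the $\rho_4$-contraction on $A(Kr)$, then telescopes across rounds with $\rho_4^{R-1-r}$ and applies Jensen. The only cosmetic difference is that the paper first rearranges \Cref{lem:function_error_easy} into an $A$-recursion $A(t+1)\leq(1-\eta\mu)A(t)-\eta E(t)+2\eta H C(t)+3\eta^2\sigma_2^2/M$ and then unrolls (which implicitly produces the same weights), whereas you weight the $E(t)$ bound directly and telescope; these are algebraically identical. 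Your identification of the two conditions on $R$---one from $1-\rho_4\geq\tfrac{1}{2}(1-(1-\eta\mu)^K)$ and one from controlling $\rho_4^R$---and the final step-size tuning also agree with the paper's derivation.
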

\begin{proof}
The main task in this subsection is to combine \Cref{lem:function_error_easy,lem:consensus_error_second_recursion}. Recall \Cref{lem:function_error_easy} implies for all $t\in[0,T-1]$,
\begin{align*}
    A(t+1) \leq \rb{1 - \eta \mu} A(t) - \eta E(t) + 2\eta HC(t) + \frac{3\eta^2 \sigma_2^2}{M} \enspace . \tag{$\star$}
\end{align*}
Also recall the upper bound on the consensus error from \Cref{lem:consensus_error_second_recursion} for all $t\in[0,T]$,
\begin{align*}
    C(t) &\leq 2\eta^2 H^2K^2\zeta_\star^2 + \frac{2\eta^3\tau^2K^2\sigma_2^2}{\mu} + 2\eta^2\sigma_2^2K\ln(K)+ 4\eta^2\tau^2\rb{t-\delta(t)}^2(1-\eta \mu)^{2(t-1-\delta(t))}\rb{A(\delta(t)) + \phi_\star^2}\enspace.
\end{align*}
Plugging this upper bound into $(\star)$ gives us,
\begin{align*}
    A(t+1) &\leq \rb{1 - \eta \mu} A(t) - \eta E(t) + 4\eta^3 H^3 K^2\zeta_\star^2 + \frac{4\eta^4 H \tau^2K^2\sigma_2^2}{\mu}+ 4\eta^3 H \sigma_2^2K\ln(K) \\
    &\quad  + 8\eta^3 H \tau^2K^2(1-\eta\mu)^{2(t-1-\delta(t))}\rb{A(\delta(t)) + \phi_\star^2} + \frac{3\eta^2 \sigma_2^2}{M} \enspace.
\end{align*}
Unrolling the above recursion for over an arbitrary round $r\in[0,R-1]$ gives us (c.f., the calculations in \Cref{lem:consensus_error_second_recursion}),
\begin{align*}
    A(K(r+1)) &\leq \rb{1 - \eta \mu}^K A(Kr) - \eta \sum_{t=Kr}^{Kr+K-1} (1-\eta \mu)^{Kr+K-1-t} E(t)\\
    &\quad + \rb{1-(1-\eta\mu)^K}\frac{8\eta^2 H \tau^2K^2}{\mu} (1-\eta\mu)^{K-2}A(Kr) + \frac{1-(1-\eta\mu)^K}{\eta\mu}C_1 \enspace.
\end{align*}
Where $C_1$ is the sum of constant terms in the upper bound which do not depend on $t$ and is defined as,
\begin{align*}
    C_1 := 4\eta^3 H^3 K^2\zeta_\star^2 + \frac{4\eta^4 H \tau^2K^2\sigma_2^2}{\mu}+ 4\eta^3 H \sigma_2^2K\ln(K) + 8\eta^3 H \tau^2K^2 \phi_\star^2 + \frac{3\eta^2 \sigma_2^2}{M} \enspace.
\end{align*}
We also define the following constant,
\begin{align*}
    \rho_4 := \rb{1 - \eta \mu}^K + \rb{1-(1-\eta\mu)^K}\frac{8\eta^2 H \tau^2K^2}{\mu}(1-\eta\mu)^{K-2} \enspace.
\end{align*}
These notations allows us to re-write the above recursion as follows for $r\in[0,R-1]$,
\begin{align*}
    A(K(r+1)) \leq \rho_4 A(Kr)- \eta \sum_{t=Kr}^{Kr+K-1} (1-\eta \mu)^{Kr+K-1-t} E(t) + \frac{1-(1-\eta\mu)^K}{\eta\mu}C_1\enspace.
\end{align*}
Now unrolling the recursion over $R$ rounds gives us,
\begin{align*}
    A(KR) &\leq \rho_4^R A(0) - \eta \sum_{r=0}^{R-1} \rho_4^{R-1-r} \sum_{t=Kr}^{Kr+K-1} (1-\eta \mu)^{Kr+K-1-t} E(t) + \frac{1-(1-\eta\mu)^K}{\eta\mu}\sum_{r=0}^{R-1} \rho_4^{R-1-r} C_1\enspace,\\
    &\leq \rho_4^R A(0) - \eta \sum_{r=0}^{R-1} \rho_4^{R-1-r} \sum_{t=Kr}^{Kr+K-1} (1-\eta \mu)^{Kr+K-1-t} E(t) + \frac{1-(1-\eta\mu)^K}{\eta\mu}\cdot\frac{1-\rho_4^R}{1-\rho_4}\cdot C_1\enspace.
\end{align*}
We will now define the following sum of weights,
\begin{align*}
    W &:= \sum_{r=0}^{R-1} \rho_4^{R-1-r} \sum_{t=Kr}^{Kr+K-1} (1-\eta \mu)^{Kr+K-1-t}\enspace,\\
    &= \sum_{r=0}^{R-1} \rho_4^{R-1-r} \cdot \frac{1-(1-\eta\mu)^K}{\eta\mu}\enspace,\\
    &= \frac{1-\rho_4^R}{1-\rho_4}\cdot \frac{1-(1-\eta\mu)^K}{\eta\mu}\enspace.
\end{align*}
Dividing by $\eta W$ in the above recursion and re-arranging gives us the following,
\begin{align*}
    &\textcolor{blue}{\frac{1}{W}\sum_{r=0}^{R-1} \rho_4^{R-1-r} \sum_{t=Kr}^{Kr+K-1} (1-\eta \mu)^{Kr+K-1-t} E(t)} \\ 
    &\qquad\leq \frac{\rho_4^R}{\eta W} A(0) - \frac{A(KR)}{\eta W} + \frac{1}{\eta W}\cdot\frac{1-(1-\eta\mu)^K}{\eta\mu}\cdot\frac{1-\rho_4^R}{1-\rho_4}\cdot C_1\enspace,\\
    &\qquad\leq \frac{\rho_4^R}{1-\rho_4^R}\cdot\frac{1-\rho_4}{1-(1-\eta\mu)^K}\mu B^2 + \frac{C_1}{\eta}\enspace,\\
    &\qquad= \frac{\rho_4^R}{1-\rho_4^R}\rb{1 - \frac{8\eta^2H\tau^2K^2}{\mu}(1-\eta\mu)^{K-2}}\mu B^2 + 4\eta^2 H^3 K^2\zeta_\star^2 + \frac{4\eta^3 H \tau^2K^2\sigma_2^2}{\mu}+ 4\eta^2 H \sigma_2^2K\ln(K)\\
    &\qquad\quad  + 8\eta^2 H \tau^2K^2 \phi_\star^2 + \frac{3\eta \sigma_2^2}{M}\enspace.
\end{align*}
Now similar to the proof in the previous section we will pick the step-size as follows,
\begin{align*}
    \eta := \min\cb{\frac{1}{2H}, \frac{1}{\mu KR}\ln\rb{\frac{\mu B^2}{\epsilon}}}\enspace,
\end{align*}
where we will define $\epsilon$ later in the proof. Our goal now is to bound the term $\frac{\rho_4^R}{1-\rho_4^R}\rb{1 - \frac{8\eta^2H\tau^2K^2}{\mu}(1-\eta\mu)^{K-2}}$ so that it looks more like the exponential decay in usual convergence analyses. We first note the following,
\begin{align*}
    \frac{8H\tau^2}{\mu^3R^2}\ln^2\rb{\frac{\mu B^2}{\epsilon}} \leq \frac{1}{2}\enspace,
\end{align*}
by assuming $\textcolor{red}{R \geq \frac{4\tau}{\mu}\sqrt{\kappa}\ln\rb{\frac{\mu B^2}{\epsilon}}}$. This allows us to upper bound $\rb{1 - \frac{8\eta^2H\tau^2K^2}{\mu}(1-\eta\mu)^{K-2}}$ by $1$. Now we will upper bound $\frac{\rho_4^R}{1-\rho_4^R}$. To do this we first note the following, 
\begin{align*}
    \rho_4^R &= (1-\eta\mu)^{KR}\rb{1 + \rb{1-(1-\eta\mu)^K}\frac{8\eta^2 H \tau^2K^2}{\mu(1-\eta\mu)}}^R\enspace,\\
    &\leq^{\text{(a)}} e^{-\eta\mu KR}\rb{1 + \eta\mu K\frac{8\eta^2 H \tau^2K^2}{\mu(1-\eta\mu)^2}}^R\enspace,\\
    &\leq e^{-\eta\mu KR}\rb{1 + \frac{1}{R^3}\ln^3\rb{\frac{\mu B^2}{\epsilon}}\frac{8 H \tau^2}{\mu^3(1-\mu/2H)^2}}^R\enspace,\\
    &\leq e^{-\eta\mu KR + \frac{1}{R^2}\ln^3\rb{\frac{\mu B^2}{\epsilon}}\frac{8 H \tau^2}{\mu^3(1-1/(2\kappa)^2)}}\enspace,\\
    &\leq^{\text{($\kappa \geq 1$)}} e^{-\eta\mu KR + \frac{1}{R^2}\ln^3\rb{\frac{\mu B^2}{\epsilon}}\frac{32 H \tau^2}{3\mu^3}}\enspace,\\
    &\leq^{\text{(b)}} e^{-\eta\mu KR + 1}\enspace,
\end{align*}
where in (a) we use the Bernoulli's inequality after noting that $\eta\mu < 1$ for our choice of step-size; and in (b) we used $\textcolor{red}{R\geq \frac{\tau}{\mu}\sqrt{\ln^3\rb{\frac{\mu B^2}{\epsilon}}\frac{32 \kappa}{3}}}$. Now using this upper bound we get,
\begin{align*}
    \frac{\rho_4^R}{1-\rho_4^R} &\leq \frac{e^{-\eta\mu KR + 1}}{1 - e^{-\eta\mu KR + 1}}\enspace,\\
    &\leq^{\text{(a)}} 2e^{-\eta\mu KR + 1} \leq 6e^{-\eta\mu KR}\enspace,
\end{align*}
where in (a) we assume that $e^{-\eta\mu KR + 1}\leq \frac{1}{2}$ which can be verified to be true for both choices of step-sizes as follows,
\begin{align*}
    &(i)\ e^{-\frac{\mu KR}{2H} + 1} \leq \frac{1}{2} \Leftarrow 2e \leq e^{\frac{\mu KR}{2H}} \Leftarrow \textcolor{red}{4\kappa\leq KR}\enspace;\\
    &(ii)\ e^{-\ln\rb{\mu B^2/\epsilon} + 1}\leq \frac{1}{2} \Leftarrow \frac{e\epsilon}{\mu B^2} \leq \frac{1}{2} \Leftarrow \textcolor{red}{\epsilon \leq \frac{\mu B^2}{6}}\enspace.
\end{align*}
We are almost done, but we still need to choose an $\epsilon$. We do this the same way as in the previous section: we pick $\epsilon$ as the maximum of the target accuracy $\epsilon_{target}$ and the value of the convergence rate terms which are an increasing function of $\eta$, at $\eta' = \frac{1}{\mu KR}$. In particular we pick $\epsilon$ as,
\begin{align*}
    \epsilon = \min\cb{\max\cb{\frac{4 H^3\zeta_\star^2}{\mu^2R^2}, \frac{4H \tau^2\sigma_2^2}{\mu^4KR^3}, \frac{4H \sigma_2^2\ln(K)}{\mu^2 KR^2}, \frac{8 H \tau^2 \phi_\star^2}{\mu^2R^2}, \frac{3\sigma_2^2}{\mu MKR}, \epsilon_{target}}, \frac{\mu B^2}{6}}\enspace.
\end{align*}
Finally, note that the we have essentially used the weights on the models $\{x_0, \dots, x_{KR-1}\}$ defined by the \textcolor{blue}{blue term}. Rigorously for time step $t\in[0,T-1]$ we use the following weight,
\begin{align*}
    w_t &= \frac{\rho_4^{R-1-\delta(t)/K}(1-\eta\mu)^{\delta(t) + K -1-t}}{W}\enspace,
\end{align*}
and we bound the function sub-optimality of the point $\sum_{t=0}^{T-1}w_tx_t$ by using Jensen's inequality as follows,
\begin{align*}
    \ee\sb{F\rb{\sum_{t=0}^{T-1}w_tx_t}}- F(x^\star) \leq \sum_{t=0}^{T-1}w_t\rb{\ee\sb{F\rb{x_t}}- F(x^\star)}\enspace.
\end{align*}
Thus, our choice of $\epsilon$, $\eta$, and averaging weights proves the lemma statement, assuming the \textcolor{red}{highlighted required conditions}.
\end{proof}

In the following lemma, we state the result for strongly convex quadratics, by noting that in the proof of the above lemma, we simply replace the usage of \Cref{lem:function_error_easy} by \Cref{lem:function_error_hard} and note that $Q=0$ for quadratics, which allows us to replace several smoothness constants $H$ in the convergence rate by $\tau$. 
\begin{lemma}[Strongly Convex Function Convergence with $\tau,\ \zeta_\star,\ \phi_\star$ for Quadratics]\label{lem:UB_Sconvex_quad_function_wo_Q}
    Assume we have a quadratic problem instance satisfying \Cref{ass:strongly_convex,ass:smooth_second,ass:stoch_bounded_second_moment,ass:bounded_optima,ass:zeta_star,ass:phi_star,ass:tau} with, $$R\geq \frac{4\tau^2}{\mu^2}\max\cb{\ln\rb{\frac{\mu B^2}{\epsilon}}, \sqrt{\frac{2}{3}\ln^3\rb{\frac{\mu B^2}{\epsilon}}}} \quad \text{and}\quad KR\geq 4\kappa\enspace.$$ Then we can get the following convergence guarantee for local SGD, initialized at $x_0=0$,
    \begin{align*}
        \ee\sb{F(\hat x)} - F(x^\star) &= \tilde\ooo\rb{e^{-\frac{\mu KR}{2H}}\mu B^2 + \frac{\tau^2H^2\zeta_\star^2}{\mu^3R^2}+ \frac{\tau^4\sigma_2^2}{\mu^5KR^3}+ \frac{\tau^2 \sigma_2^2\ln(K)}{\mu^3 KR^2}+ \frac{\tau^4 \phi_\star^2}{\mu^3R^2}+ \frac{\sigma_2^2}{\mu MKR}}\enspace,
    \end{align*}
    where we define $\hat x = \sum_{t=0}^{T-1}w_tx_t$ for the choice of weights $$w_t:= \frac{\rho_4^{R-1-\delta(t)/K}(1-\eta\mu)^{\delta(t) + K -1-t}}{W}$$ for $W=\frac{1-\rho_4^R}{1-\rho_4}\cdot \frac{1-(1-\eta\mu)^K}{\eta\mu}$ and $\rho_4 = \rb{1 - \eta \mu}^K + \rb{1-(1-\eta\mu)^K}\frac{8\eta^2 \tau^4K^2}{\mu^2}(1-\eta\mu)^{K-2}$. And we pick the step-size as,
    \begin{align*}
        \eta = \min\cb{\frac{1}{2H}, \frac{1}{\mu KR}\ln\rb{\frac{\mu B^2}{\epsilon}}}\enspace,
    \end{align*}
    for the choice of $\epsilon$,
    \begin{align*}
        \epsilon = \min\cb{\max\cb{\frac{4 \tau^2H^2\zeta_\star^2}{\mu^3R^2}, \frac{4 \tau^4\sigma_2^2}{\mu^5KR^3}, \frac{4\tau^2 \sigma_2^2\ln(K)}{\mu^3 KR^2}, \frac{8 \tau^4 \phi_\star^2}{\mu^3R^2}, \frac{3\sigma_2^2}{\mu MKR}, \epsilon_{target}}, \frac{\mu B^2}{6}}\enspace,
    \end{align*}
    where $\epsilon_{target}$ is a target accuracy, which is greater than or equal to the machine precision.
\end{lemma}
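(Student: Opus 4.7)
The plan is to follow the proof of \Cref{lem:UB_Sconvex_function_wo_Q} almost verbatim, with the single substitution of \Cref{lem:function_error_hard} in place of \Cref{lem:function_error_easy}. Since $Q=0$ for quadratics, the fourth-moment consensus term $D(t)$ in \Cref{lem:function_error_hard} drops out, and the one-step recursion becomes
\begin{align*}
    E(t) \leq \rb{\frac{1}{\eta}-\frac{\mu}{2}}A(t) - \frac{1}{\eta}A(t+1) + \frac{6\tau^2}{\mu}C(t) + \frac{\eta\sigma_2^2}{M}\enspace.
\end{align*}
The crucial point is that the coefficient in front of $C(t)$ is now $6\tau^2/\mu$ instead of $2H$. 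Plugging in the upper bound on $C(t)$ from \Cref{lem:consensus_error_second_recursion} therefore produces the same four additive constants as in the non-quadratic proof, but with every factor of $H$ arising from the consensus multiplier replaced by $\tau^2/\mu$; this is precisely what converts the rate $\frac{H^3\zeta_\star^2}{\mu^2 R^2}, \frac{H\tau^2\sigma_2^2}{\mu^4 KR^3}, \frac{H\sigma_2^2\ln K}{\mu^2 KR^2}, \frac{H\tau^2\phi_\star^2}{\mu^2 R^2}$ into $\frac{\tau^2 H^2\zeta_\star^2}{\mu^3 R^2}, \frac{\tau^4\sigma_2^2}{\mu^5 KR^3}, \frac{\tau^2\sigma_2^2\ln K}{\mu^3 KR^2}, \frac{\tau^4\phi_\star^2}{\mu^3 R^2}$ as claimed.

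Next I would absorb the $A(\delta(t))$-dependent piece of $C(t)$ into the iterate recursion exactly as in the previous proof. After one round of unrolling between communications I get $A(K(r+1)) \leq \rho_4 A(Kr) - \eta\sum_{t=Kr}^{Kr+K-1}(1-\eta\mu)^{Kr+K-1-t}E(t) + \frac{1-(1-\eta\mu)^K}{\eta\mu}C_1$, where now
\begin{align*}
    \rho_4 = (1-\eta\mu)^K + \rb{1-(1-\eta\mu)^K}\frac{8\eta^2\tau^4K^2}{\mu^2}(1-\eta\mu)^{K-2}
\end{align*}
(again the extra $\tau^2/(H\mu)$ factor compared to the non-quadratic $\rho_4$) and $C_1$ is the corresponding accumulation of the constants above together with the $\eta^2\sigma_2^2/M$ term. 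Unrolling over $R$ rounds, dividing by $\eta W$ with $W=\frac{1-\rho_4^R}{1-\rho_4}\cdot\frac{1-(1-\eta\mu)^K}{\eta\mu}$, and invoking Jensen's inequality on the weighted iterate $\hat x$ gives the same master inequality as before, now phrased in $\tau$-scaled constants.

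The remaining work is step-size tuning, which is identical in structure to \Cref{lem:UB_Sconvex_function_wo_Q}. I would verify: (i) $\frac{8\eta^2\tau^4K^2}{\mu^2}(1-\eta\mu)^{K-2} \leq 1/2$, which under $\eta\leq \frac{1}{\mu KR}\ln(\mu B^2/\epsilon)$ is guaranteed by $R \geq \frac{4\tau^2}{\mu^2}\ln(\mu B^2/\epsilon)$; (ii) the Bernoulli bound on $\rho_4^R$ yields $\rho_4^R \leq e^{-\eta\mu KR+1}$ provided $R \geq \frac{4\tau^2}{\mu^2}\sqrt{(2/3)\ln^3(\mu B^2/\epsilon)}$; and (iii) the condition $e^{-\eta\mu KR+1}\leq 1/2$ holds under $KR\geq 4\kappa$ or $\epsilon \leq \mu B^2/6$ depending on which branch of the step-size is active. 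Both new conditions match precisely the hypotheses stated in the lemma. Then choosing $\epsilon$ as the maximum of the non-$H$-dependent convergence terms (and $\epsilon_{\text{target}}$), capped by $\mu B^2/6$, finishes the argument.

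The main obstacle is bookkeeping rather than anything conceptual: one must confirm that every appearance of $H$ in the previous analysis arose from the $2H\cdot C(t)$ coupling (and not from smoothness used elsewhere, e.g., in bounding gradient norms or in the $\rho_4$-to-exponential passage), so that replacing $2H$ by $6\tau^2/\mu$ consistently yields the stated $\tau$-dominated rate without inadvertently removing a necessary $H$ factor. A brief sanity check is that when $\tau\to 0$ the rate collapses to $e^{-\mu KR/(2H)}\mu B^2 + \sigma_2^2/(\mu MKR)$, matching the homogeneous mini-batch-like rate expected for quadratics with identical Hessians.
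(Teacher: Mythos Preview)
Your proposal is correct and mirrors the paper's own argument exactly: the paper states only that one should ``simply replace the usage of \Cref{lem:function_error_easy} by \Cref{lem:function_error_hard} and note that $Q=0$ for quadratics, which allows us to replace several smoothness constants $H$ in the convergence rate by $\tau$.'' Your detailed bookkeeping---tracking how the $2H$ consensus coefficient becoming $6\tau^2/\mu$ propagates through $\rho_4$, $C_1$, and the step-size constraints---is precisely the verification the paper leaves implicit.
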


\subsection{Convergence in Iterates with Third-order Smoothness}\label{app:together_3}
The main technical challenge in incorporating third-order smoothness (c.f., \Cref{ass:smooth_third}) in our upper bounds lies in bounding the sequence $D(\cdot)$ while working with the \textcolor{red}{upper bound} in \Cref{lem:iterate_error_second_recursion}. One natural approach is to mirror the analysis in the previous section: unroll the consensus error recursion back to the previous communication round, substitute that into the upper bound for $A(\cdot)$, and then iterate across rounds. However, this strategy quickly encounters difficulties. We need to control the fourth moment of the iterate error, $B(\cdot)$, and we lack a uniform upper bound for it.

To overcome this, we adopt a different strategy. As the following lemma shows, we analyze the pair $(A(\cdot), B(\cdot))$ together in terms of the pair $(C(\cdot), D(\cdot))$, treating them as components of a two-dimensional recursion. Once we do this, we can more or less use ideas similar to those before.

\begin{lemma}
Assume we have a problem instance satisfying \Cref{ass:strongly_convex,ass:smooth_second,ass:smooth_third,ass:stoch_bounded_second_moment,ass:bounded_optima,ass:zeta_star,ass:phi_star,ass:tau}. Then using $\eta<1/H$ and defining 
 \begin{align*}
     \rho_3 &:= (1-\eta\mu)^K + \Bigg(\rb{1-\rb{1-\eta\mu}^K}\\
     &\quad \times \rb{\frac{2\tau^2}{\mu^2} + \frac{2Q^2B^2}{\mu^2} + \frac{16\eta^2\sigma_2^2\tau^2}{\mu^2 M B^2} + \frac{H^4}{\mu^4} + \frac{16\eta^2\sigma_2^2Q^2}{\mu^2 M}}\rb{4\eta^2\tau^2K^2 + 64\eta^4\tau^4K^4}\Bigg)\enspace,\\
     \Psi &:= 4\eta^2\tau^2K^2\phi_\star^2 + \frac{128\eta^5\tau^4K^4\sigma_4^2}{\mu B^2}\phi_\star^2 + \frac{320\eta^4\sigma_2^2\tau^2K^3}{B^2}\phi_\star^2 + \frac{64\eta^4\tau^4K^4}{B^2}\phi_\star^4\\
    &\quad + 2\eta^2 H^2K^2\zeta_\star^2 + \frac{2\eta^3\tau^2K^2\sigma_2^2}{\mu} + 2\eta^2\sigma_2^2K\ln(K)+
         \frac{8\eta^3K^3H^4\zeta_\star^4}{\mu B^2} + \frac{88\eta^5K^3\tau^4\sigma_4^4}{\mu^3B^2}\\
         &\quad + \frac{160\eta^4K^4\sigma_2^2H^2\zeta_\star^2}{B^2} + \frac{160\eta^5\tau^2K^4\sigma_2^4}{\mu B^2} + \frac{112\eta^4K^3\sigma_4^4\ln(K)}{B^2}\enspace.
 \end{align*}
 we can get the following convergence guarantee with initialization $x_0=0$,
 \begin{align*}
     \max&\cb{A(KR), \frac{B(KR)}{B^2}} \leq 2\rho_3^RB^2 + \frac{1-\rb{1-\eta\mu}^K}{1-\rho_3}\rb{\frac{\eta\sigma_2^2}{\mu M} + \frac{9\eta^3\sigma_4^4}{\mu M^2B^2}}\\
    &\quad + \frac{1-\rb{1-\eta\mu}^K}{1-\rho_3}\rb{\frac{2\tau^2}{\mu^2} + \frac{2Q^2B^2}{\mu^2} + \frac{16\eta^2\sigma_2^2\tau^2}{\mu^2 M B^2} + \frac{H^4}{\mu^4} + \frac{16\eta^2\sigma_2^2Q^2}{\mu^2 M}}\Psi\enspace.
 \end{align*}
\end{lemma}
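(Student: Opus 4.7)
The goal is to couple the second‐moment recursion for $A(\cdot)$ (via the red upper bound in \Cref{lem:iterate_error_second_recursion}) with the fourth‐moment recursion for $B(\cdot)$ (from \Cref{lem:iterate_error_fourth_recursion}) and close them using the consensus-error bounds in \Cref{lem:consensus_error_second_recursion,lem:consensus_error_fourth_recursion}. Because the red one-step bound on $A$ now involves $D(t)$, we can no longer treat $A$ in isolation as in \Cref{app:together_1}; instead we must track $A$ and $B$ simultaneously. The natural object is the two-dimensional vector $(A(Kr),\,B(Kr)/B^2)^\top$, and the plan is to derive a $2\times 2$ affine inequality of the form
\[
\begin{pmatrix} A(Kr) \\ B(Kr)/B^2 \end{pmatrix}
\;\preceq\;
M\begin{pmatrix} A(K(r-1)) \\ B(K(r-1))/B^2 \end{pmatrix} + v,
\]
unroll it across $R$ rounds, and then pass to a scalar contraction factor $\rho_3$ and offset $\Psi$ by taking the maximum of the two coordinates. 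The normalization by $B^2$ in the second coordinate puts both sequences on the same scale (consistent with initialization $\|x_0-x^\star\|\le B$), so that a single $\rho_3<1$ controls both.

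\textbf{First step: assemble the coupled linear recursion.} Starting from the round-wise versions of \Cref{lem:iterate_error_second_recursion} (red branch) and \Cref{lem:iterate_error_fourth_recursion}, plug in the bounds on $\sum_{j=K(r-1)}^{Kr-1}(1-\eta\mu)^{Kr-1-j}C(j)$ and $\sum_{j=K(r-1)}^{Kr-1}(1-\eta\mu)^{Kr-1-j}D(j)$ from \Cref{lem:consensus_error_second_recursion,lem:consensus_error_fourth_recursion}. After dividing through by $\eta\mu$ and absorbing the $(1-(1-\eta\mu)^K)/(\eta\mu)$ factors, the $A$ recursion picks up multiplicative contributions proportional to $Q^2$ and $\tau^2$ acting on $B(K(r-1))$ and $A(K(r-1))$, plus a noise/heterogeneity offset; the $B$ recursion similarly becomes a combination of $A(K(r-1))$, $B(K(r-1))$, and offsets involving $\sigma_2$, $\sigma_4$, $\tau$, $H$, and $\zeta_\star$. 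Grouping like terms gives the coefficients of $M$ and $v$; in particular, the diagonal entries contribute $(1-\eta\mu)^K$ plus the $O(\eta^2)$ cross-terms that appear in $\rho_3$, while the off-diagonal entries carry the $Q^2$ and $\tau^4$ couplings.

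\textbf{Second step: reduce to a scalar recursion and define $\rho_3$, $\Psi$.} Rescale the $B$-component by $B^2$ and define $u_r:=\max\{A(Kr),\,B(Kr)/B^2\}$. A short calculation bounding row sums of $M$ (after the rescaling) shows that $u_r\le \rho_3 u_{r-1}+\Psi'$ for an effective offset $\Psi'$ equal to the larger of the two scaled constant terms. The key bookkeeping is (i) the $H^4/\mu^4$ coefficient that multiplies $D(K(r-1))$ in the $B$ recursion produces the $H^4/\mu^4$ summand inside $\rho_3$; (ii) the $\tau^2$ and $Q^2 B^2$ contributions inside the $A$ recursion, after being coupled with the $B/B^2$ normalization, produce the $2\tau^2/\mu^2 + 2Q^2B^2/\mu^2$ factors multiplying $4\eta^2\tau^2K^2 + 64\eta^4\tau^4K^4$; and (iii) the $\sigma_2^2$ terms on the $B$-side, scaled by $1/B^2$, yield the $16\eta^2\sigma_2^2\tau^2/(\mu^2 MB^2)$ and $16\eta^2\sigma_2^2Q^2/(\mu^2 M)$ summands. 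After matching, $\rho_3$ and $\Psi$ exactly agree with the statement (noting that $\phi_\star^4/B^2$ and $\zeta_\star^4/B^2$ arise precisely from the fourth-moment consensus bound divided by $B^2$).

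\textbf{Third step: unroll and conclude.} Iterating $u_r\le \rho_3 u_{r-1}+\Psi'$ for $R$ rounds, using $u_0\le B^2$ (since $x_0=0$ and $\|x^\star\|\le B$ gives $A(0)\le B^2$ and $B(0)/B^2\le B^2$), produces $u_R\le \rho_3^R B^2+\Psi'(1-\rho_3^R)/(1-\rho_3)$. A factor of $2$ absorbs the fact that we are bounding the max of two sequences that were each initialized by $B^2$. Grouping the offset terms as in the statement — separating the pure noise/Jensen contributions $\eta\sigma_2^2/(\mu M)+9\eta^3\sigma_4^4/(\mu M^2 B^2)$ from the heterogeneity contributions multiplied by the bracket $(2\tau^2/\mu^2+2Q^2B^2/\mu^2+\cdots)\Psi$ — yields the claimed bound. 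The main obstacle will be step two: correctly tracking which terms belong inside $\rho_3$ (those that multiply the previous-round errors and must be driven below $1$ by tuning $\eta,R$) versus which belong inside the constant $\Psi$, while maintaining the $B^2$-normalization consistently across all $\phi_\star$, $\zeta_\star$, $\sigma_2$, and $\sigma_4$ dependent pieces coming out of \Cref{lem:consensus_error_fourth_recursion}. Once that bookkeeping is done, the unrolling is a standard geometric-series argument.
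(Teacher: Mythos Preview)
Your overall strategy is correct and matches the paper: stack $A$ and $B/B^2$ into a two-dimensional vector, derive a matrix recursion over communication rounds by combining the one-step lemmas with the consensus-error bounds, then reduce to a scalar contraction and unroll geometrically. The normalization by $B^2$ is exactly the paper's device for putting both moments on the same scale.

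The one place where you diverge is in how you pass from the matrix recursion to a scalar one. You propose to form the combined round-to-round matrix explicitly and bound its row sums (i.e., work in the $\ell_\infty$ norm via $u_r=\max\{A(Kr),B(Kr)/B^2\}$). The paper instead takes the Euclidean norm of the stacked vector and never multiplies the matrices out: it writes the recursion as $\aa(Kr)\preceq (P^K+\bar P\,Q\,U)\,\aa(K(r-1))+\bar P(QV+N)$ and then bounds $\|P^K\|+\|\bar P\|\,\|Q\|\,\|U\|$ factor by factor, using that $P^K$, $\bar P$, and $U$ are lower-triangular (so their norms are read off the diagonals) and bounding $\|Q\|$ by its Frobenius norm. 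This is what produces the specific form of $\rho_3$ in the statement as a product of a bracket (the entries of $Q$ divided by $\eta$) times $(4\eta^2\tau^2K^2+64\eta^4\tau^4K^4)$ (the diagonal entries of $U$). Likewise the offset is obtained as $\|\bar P\|\,\|Q\|\,\|V\|+\|\bar P\|\,\|N\|$, with $\|V\|$ bounded by the triangle inequality; this yields exactly the displayed $\Psi$ and the separate noise term $\eta\sigma_2^2/(\mu M)+9\eta^3\sigma_4^4/(\mu M^2 B^2)$.

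Your row-sum approach would also give a valid scalar recursion, but it requires multiplying $\bar P Q U$ out entry by entry, and the resulting contraction factor and offset need not coincide literally with the stated $\rho_3$ and $\Psi$ (they could even be slightly tighter). If your goal is to reproduce the lemma as stated, switch to the paper's norm-factorization trick; if you only need some bound of this shape, your route is fine. Also note that under the $2$-norm, $\|\aa(0)\|\le\sqrt{2}\,B^2$, which is where the leading constant $2$ in $2\rho_3^R B^2$ comes from; under your $\ell_\infty$ approach you would get $u_0\le B^2$ directly, so the factor of $2$ is an artifact of the paper's norm choice rather than of ``bounding the max of two sequences.''
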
 
\begin{proof}
We will denote the following vectors for all $t\in[0,T]$,
\begin{align*}
    \aa(t) := \begin{bmatrix}
        A(t)\\
        B(t)/B^2
    \end{bmatrix} \qquad \text{and} \qquad \cc(t) := \begin{bmatrix}
        C(t)\\
        D(t)/B^2
    \end{bmatrix}\enspace,
\end{align*}
where note that $B$ comes from \Cref{ass:bounded_optima} and we divide the sequences $B(t),\ C(t)$ by $B^2$ to make them ``dimensionally consistent" or similarly scale-variant as the sequences $A(t),\ C(t)$. Based on the recursions we have developed in \Cref{lem:iterate_error_second_recursion,lem:iterate_error_fourth_recursion} we get the following vector recursion,
\begin{align*}
    \aa(t+1) &\leq (1-\eta\mu)\begin{bmatrix}
        1 & 0\\
        \frac{8\eta^2\sigma_2^2}{MB^2} & 1
    \end{bmatrix}\aa(t) + \begin{bmatrix}
        \frac{2\eta\tau^2}{\mu} & \frac{2\eta Q^2B^2}{\mu}\\
        \frac{16\eta^3\sigma_2^2\tau^2}{\mu MB^2} & \frac{\eta H^4}{\mu^3} + \frac{16\eta^3\sigma_2^2Q^2}{\mu M}
    \end{bmatrix}\cc(t) + \begin{bmatrix}
        \frac{\eta^2\sigma_2^2}{M}\\
        \frac{9\eta^4\sigma_4^4}{M^2B^2}
    \end{bmatrix}\enspace,\\
    &=: P\aa(t) + Q\cc(t) + N\enspace,\\
    &\leq P^{t+1-\delta(t)}\aa(\delta(t)) + \sum_{j=\delta(t)}^{t}P^{t-j}\rb{Q\cc(j) + N}\enspace,
\end{align*}
where we define $P, Q\in\rr^{2\times 2}$ and $N\in\rr^2$ to simplify the calculations. Let us also recall the recursion we get for the consensus error terms based on \Cref{lem:consensus_error_second_recursion,lem:consensus_error_fourth_recursion},
\begin{align*}
    \cc(t) &\leq \begin{bmatrix}
        4\eta^2\tau^2K^2 & 0\\
        \frac{128\eta^5\tau^4K^4\sigma_4^2}{\mu B^2} + \frac{320\eta^4\sigma_2^2\tau^2K^3}{B^2} & 64\eta^4\tau^4K^4
    \end{bmatrix}\aa(\delta(t))\\
    &\quad + \begin{bmatrix}
        4\eta^2\tau^2K^2\phi_\star^2\\
        \frac{128\eta^5\tau^4K^4\sigma_4^2}{\mu B^2}\phi_\star^2 + \frac{320\eta^4\sigma_2^2\tau^2K^3}{B^2}\phi_\star^2 + \frac{64\eta^4\tau^4K^4}{B^2}\phi_\star^4
    \end{bmatrix}\\
    &\quad + \begin{bmatrix}
        2\eta^2 H^2K^2\zeta_\star^2 + \frac{2\eta^3\tau^2K^2\sigma_2^2}{\mu} + 2\eta^2\sigma_2^2K\ln(K)\\
         \frac{8\eta^3K^3H^4\zeta_\star^4}{\mu B^2} + \frac{88\eta^5K^3\tau^4\sigma_4^4}{\mu^3B^2} + \frac{160\eta^4K^4\sigma_2^2H^2\zeta_\star^2}{B^2} + \frac{160\eta^5\tau^2K^4\sigma_2^4}{\mu B^2} + \frac{112\eta^4K^3\sigma_4^4\ln(K)}{B^2}
    \end{bmatrix}\enspace,\\
    &=: U\aa(\delta(t)) + V\enspace,
\end{align*}
where we define $U\in\rr^{2\times 2}$ and $V\in\rr^2$. Now we can plug in this upper bound in the inequality above, which gives us,
\begin{align*}
    \aa(t+1) &\leq P^{t+1-\delta(t)}\aa(\delta(t)) + \sum_{j=\delta(t)}^{t}P^{t-j}\rb{QU\aa(\delta(t)) + QV + N}\enspace.
\end{align*}
Now, let us denote $t = KR-1$ and unroll across communication rounds to get the following,
\begin{align*}
    \aa(KR) &\leq P^{K}\aa(K(R-1)) + \sum_{j=K(R-1)}^{KR-1}P^{KR-1-j}\rb{QU\aa(K(R-1)) + QV + N}\enspace,\\
    &=: P^{K}\aa(K(R-1)) + \bar P\rb{QU\aa(K(R-1)) + QV + N}\enspace,\\
    &= \rb{P^{K} + \bar P QU}\aa(K(R-1)) + \bar P\rb{QV + N}\enspace,
\end{align*}
where we define $\bar P = \sum_{j=K(R-1)}^{KR-1}P^{KR-1-j} \in \rr^{2\times 2}$. Taking the norm on both sides and using the triangle inequality, we get,
\begin{align*}
    \norm{\aa(KR)} &\leq \norm{\rb{P^{K} + \bar P QU}}\norm{\aa(K(R-1))} + \norm{\bar P Q}\norm{V} + \norm{\bar P}\norm{N}\enspace,\\
    &\leq \textcolor{red}{\rb{\norm{P^{K}} + \norm{\bar P}\norm{Q}\norm{U}}\norm{\aa(K(R-1))} + \norm{\bar P}\norm{Q}\norm{V} + \norm{\bar P}\norm{N}}\enspace.
\end{align*}
We will not individually upper bound these spectral norms.
 First note that due to $P$ being a lower triangular matrix,
\begin{align*}
    P^K = (1-\eta\mu)^K\begin{bmatrix}
        1 &0\\
        \frac{8\eta^2\sigma_2^2 K}{M B^2} & 1
    \end{bmatrix}\enspace.
\end{align*}
Since $P^K$ is a lower triangular matrix, its eigenvalues can be read off its diagonal. In particular, we note that $\norm{P^K} = (1-\eta\mu)^K$. We can use a similar idea to upper bound $\norm{\bar{P}}$ as follows,
\begin{align*}
    \bar P &= \begin{bmatrix}
        \sum_{i=0}^{K-1}(1-\eta\mu)^i & 0\\
        \frac{8\eta^2\sigma_2^2}{MB^2}\sum_{i=0}^{K-1}i(1-\eta\mu)^i & \sum_{i=0}^{K-1}(1-\eta\mu)^i
    \end{bmatrix}\enspace.
\end{align*}
This implies $\norm{\bar P} = \frac{1 - (1-\eta\mu)^K}{\eta\mu}$. We also note the following about $Q$, noting that the spectral norm is upper-bounded by the Frobenius norm,
\begin{align*}
    \norm{Q} &\leq \frac{2\eta\tau^2}{\mu} + \frac{2\eta Q^2B^2}{\mu} + \frac{16\eta^3\sigma_2^2\tau^2}{\mu M B^2} + \frac{\eta H^4}{\mu^3} + \frac{16\eta^3\sigma_2^2Q^2}{\mu M}\enspace.
\end{align*}
Finally, noting that $U$ is also lower diagonal, we note that,
\begin{align*}
    \norm{U} &= \max\cb{4\eta^2\tau^2K^2, 64\eta^4\tau^4K^4}\enspace,\\
    &\leq 4\eta^2\tau^2K^2 + 64\eta^4\tau^4K^4\enspace.
\end{align*}
Combining the upper bounds for $P^K$, $\bar P$, $Q$, $U$ we get,
\begin{align*}
    &\norm{P^{K}} + \norm{\bar P}\norm{Q}\norm{U}\\ 
    &\leq (1-\eta\mu)^K\\
    &\quad + \rb{1-\rb{1-\eta\mu}^K}\rb{\frac{2\tau^2}{\mu^2} + \frac{2Q^2B^2}{\mu^2} + \frac{16\eta^2\sigma_2^2\tau^2}{\mu^2 M B^2} + \frac{H^4}{\mu^4} + \frac{16\eta^2\sigma_2^2Q^2}{\mu^2 M}}\rb{4\eta^2\tau^2K^2 + 64\eta^4\tau^4K^4}\enspace,\\
    &=: \rho_3\enspace.
\end{align*}
Note that when $\tau=0$, then $\rho_3 = (1-\eta\mu)^K$, which will lead to the fast exponential decay we do get in the homogeneous setting. Using the above calculation, we can also conclude that,
\begin{align*}
    \norm{\bar P}\norm{Q}\norm{V} &\leq \rb{1-\rb{1-\eta\mu}^K}\rb{\frac{2\tau^2}{\mu^2} + \frac{2Q^2B^2}{\mu^2} + \frac{16\eta^2\sigma_2^2\tau^2}{\mu^2 M B^2} + \frac{H^4}{\mu^4} + \frac{16\eta^2\sigma_2^2Q^2}{\mu^2 M}}\norm{V}\enspace,\\
    \norm{\bar P}\norm{N} &\leq \rb{1-\rb{1-\eta\mu}^K}\rb{\frac{\eta\sigma_2^2}{\mu M} + \frac{9\eta^3\sigma_4^4}{\mu M^2B^2}}\enspace. 
\end{align*}
Plugging this back into the \textcolor{red}{red} inequality and then unrolling the recursion, we get,
\begin{align*}
    \norm{\aa(KR)} &\leq \rho_3^R \norm{\aa(K(R-1))}\\
    &\quad + \frac{1-\rb{1-\eta\mu}^K}{1-\rho_3}\rb{\frac{2\tau^2}{\mu^2} + \frac{2Q^2B^2}{\mu^2} + \frac{16\eta^2\sigma_2^2\tau^2}{\mu^2 M B^2} + \frac{H^4}{\mu^4} + \frac{16\eta^2\sigma_2^2Q^2}{\mu^2 M}}\norm{V}\\
    &\quad + \frac{1-\rb{1-\eta\mu}^K}{1-\rho_3}\rb{\frac{\eta\sigma_2^2}{\mu M} + \frac{9\eta^3\sigma_4^4}{\mu M^2B^2}},
\end{align*}
which proves our convergence rate upon applying the triangle inequality to note that,
\begin{align*}
    \norm{V} &\leq 4\eta^2\tau^2K^2\phi_\star^2 + \frac{128\eta^5\tau^4K^4\sigma_4^2}{\mu B^2}\phi_\star^2 + \frac{320\eta^4\sigma_2^2\tau^2K^3}{B^2}\phi_\star^2 + \frac{64\eta^4\tau^4K^4}{B^2}\phi_\star^4\\
    &\quad + 2\eta^2 H^2K^2\zeta_\star^2 + \frac{2\eta^3\tau^2K^2\sigma_2^2}{\mu} + 2\eta^2\sigma_2^2K\ln(K)+
         \frac{8\eta^3K^3H^4\zeta_\star^4}{\mu B^2} + \frac{88\eta^5K^3\tau^4\sigma_4^4}{\mu^3B^2}\\
         &\quad + \frac{160\eta^4K^4\sigma_2^2H^2\zeta_\star^2}{B^2} + \frac{160\eta^5\tau^2K^4\sigma_2^4}{\mu B^2} + \frac{112\eta^4K^3\sigma_4^4\ln(K)}{B^2}\enspace.
\end{align*}
This proves the lemma.
\end{proof}

We will now tune the step size using a similar approach to the one in the previous section to achieve the desired convergence rate. 

\begin{lemma}\label{lem:UB_Sconvex_w_Q}
    Assuming sufficiently many communication rounds,$$R\geq \frac{8\tau}{\mu}\cdot\max\cb{\ln^{2}\rb{\frac{B^2}{\epsilon}}\cdot\rb{\frac{4QB}{\mu} + \frac{5H^2}{\mu^2}}, \ln^{3/2}\rb{\frac{B^2}{\epsilon}}\rb{1 + \sqrt{\frac{QB}{\mu}} + \frac{H}{\mu}}, \frac{\ln(B^2/\epsilon)}{\ln(\ln(B^2/\epsilon))}},$$ $B^2>e\epsilon$, and $KR\geq \frac{8\sigma_2}{\mu^2\sqrt{M}}\ln\rb{\frac{B^2}{\epsilon}}\cdot \max\cb{\frac{\tau}{B},Q}$ we can get the following convergence guarantee for local SGD, initializing at $x_0=0$ and optimizing functions satisfying \Cref{ass:strongly_convex,ass:smooth_second,ass:smooth_third,ass:stoch_bounded_second_moment,ass:bounded_optima,ass:zeta_star,ass:phi_star,ass:tau},
    \begin{align*}
        \norm{\aa(KR)} &= \tilde\ooo\Bigg(e^{-\eta\mu KR}B^2 + \frac{\sigma_2^2}{\mu^2 MKR} + \frac{\sigma_4^4}{\mu^4K^3R^3 M^2B^2} +\kappa'\rb{\frac{\tau^2\phi_\star^2}{\mu^2R^2}+ \frac{\tau^4\sigma_4^2}{\mu^6 KR^5B^2}\phi_\star^2}\\
        &\quad +\kappa'\rb{\frac{\sigma_2^2\tau^2}{\mu^4KR^4B^2}\phi_\star^2 + \frac{\tau^4}{\mu^4B^2R^4}\phi_\star^4 + \frac{H^2\zeta_\star^2}{\mu^2 R^2} + \frac{\tau^2\sigma_2^2}{\mu^4KR^3} + \frac{\sigma_2^2\ln(K)}{\mu^2KR^2}}\\
        &\quad + \kappa'\rb{\frac{H^4\zeta_\star^4}{\mu^4R^3 B^2} +\frac{\tau^4\sigma_4^4}{\mu^8K^2R^5B^2} + \frac{\sigma_2^2H^2\zeta_\star^2}{\mu^4B^2R^4} + \frac{\tau^2\sigma_2^4}{\mu^6KR^5 B^2} + \frac{\sigma_4^4\ln(K)}{\mu^4KB^2R^4}}\Bigg)\enspace,
    \end{align*}
    where we define $\kappa' := 2 + \frac{4Q^2B^2}{\mu^2} + \frac{6H^4}{\mu^4}$ and we pick the step-size,
    \begin{align*}
        \eta = \min\cb{\frac{1}{2H}, \frac{1}{\mu KR}\ln\rb{\frac{B^2}{\epsilon}}}\enspace,
    \end{align*}
    with the choice of $\epsilon$ is given by 
    \begin{align*}
        \epsilon := \max\Bigg\{&\frac{\sigma^2}{\mu^2 MKR}, \frac{\sigma^4}{\mu^4K^3R^3 M^2B^2}, \frac{\tau^2\phi_\star^2\kappa'}{\mu^2R^2}, \frac{\tau^4\sigma_4^2\kappa'\phi_\star^2}{\mu^6 KR^5B^2}, \frac{\sigma_2^2\tau^2\kappa'\phi_\star^2}{\mu^4KR^4B^2}, \frac{\tau^4\kappa'\phi_\star^4}{\mu^4B^2R^4},\\
        &\quad  \frac{H^2\zeta_\star^2\kappa'}{\mu^2 R^2} + \frac{\tau^2\sigma_2^2}{\mu^4KR^3} + \frac{\sigma_2^2\ln(K)}{\mu^2KR^2},\frac{H^4\zeta_\star^4\kappa'}{\mu^4R^3 B^2}, \frac{\tau^4\sigma_4^4\kappa'}{\mu^8K^2R^5B^2}, \frac{\sigma_2^2H^2\zeta_\star^2\kappa'}{\mu^4B^2R^4},\\
        &\quad \frac{\tau^2\sigma_2^4\kappa'}{\mu^6KR^5 B^2}, \frac{\sigma_4^4\ln(K)\kappa'}{\mu^4KB^2R^4}, \epsilon_{target}\Bigg\}
    \end{align*}
    where $\epsilon_{target}$ is the target accuracy, greater than or equal to the machine precision.
\end{lemma}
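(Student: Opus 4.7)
The plan is to mirror the step-size tuning strategy used in the proofs of \Cref{lem:UB_Sconvex_wo_Q,lem:UB_Sconvex_function_wo_Q}, but applied to the vector-valued recursion from the preceding lemma. The starting point is the bound
\[
\norm{\aa(KR)} \leq 2\rho_3^R B^2 + \frac{1-(1-\eta\mu)^K}{1-\rho_3}\rb{\frac{\eta\sigma_2^2}{\mu M} + \frac{9\eta^3\sigma_4^4}{\mu M^2 B^2}} + \frac{1-(1-\eta\mu)^K}{1-\rho_3}\,\kappa''\,\Psi,
\]
where $\kappa'' := \frac{2\tau^2}{\mu^2} + \frac{2Q^2 B^2}{\mu^2} + \frac{16\eta^2\sigma_2^2\tau^2}{\mu^2 MB^2} + \frac{H^4}{\mu^4} + \frac{16\eta^2\sigma_2^2Q^2}{\mu^2 M}$. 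Using $\eta \leq 1/(2H) \leq 1/(2\mu)$, the last two terms of $\kappa''$ are absorbed by the first three (up to constants), so $\kappa'' \leq \kappa'/\mu^2$ for the stated $\kappa'$ provided the stochastic-noise dependent pieces are handled by the requirement $KR \geq \frac{8\sigma_2}{\mu^2\sqrt{M}}\ln(B^2/\epsilon)\max\{\tau/B,Q\}$.

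The first step will be to bound $\frac{1-(1-\eta\mu)^K}{1-\rho_3} \leq 2$, which requires showing that the extra term in $\rho_3$ beyond $(1-\eta\mu)^K$ is at most $\frac{1}{2}(1-(1-\eta\mu)^K)$. Since $\rho_3 - (1-\eta\mu)^K$ is proportional to $\kappa''\cdot(4\eta^2\tau^2 K^2 + 64\eta^4\tau^4 K^4)$, substituting $\eta K \leq \frac{1}{\mu R}\ln(B^2/\epsilon)$ and expanding $\kappa''$ (dropping $\sigma_2^2$-dependent terms using the $KR$ bound) yields sufficient conditions on $R$ of the form $R \geq \frac{\tau}{\mu}\ln(B^2/\epsilon)(QB/\mu + H^2/\mu^2)$ and $R \geq \tau \ln^2(B^2/\epsilon)(1+QB/\mu+H/\mu)$ from the $\tau^4 K^4$ term---both satisfied by the quadratic-log hypothesis on $R$.

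Next, to control the exponential factor, I will apply the same trick as in the earlier lemmas: write
\[
\rho_3 \leq (1-\eta\mu)^K\rb{1 + \tfrac{\kappa''(4\eta^2\tau^2 K^2 + 64\eta^4\tau^4 K^4)}{(1-\eta\mu)^K}},
\]
then raise to the $R$-th power, invoke $\ln(1+x)\leq x$, and argue that under the hypothesis on $R$ the excess in the exponent is at most $\ooo(1)$, so $\rho_3^R \leq e\cdot e^{-\eta\mu KR}$. Combined with the case split between the two branches of the step size (using $KR \geq \frac{\ln(B^2/\epsilon)}{\ln(\ln(B^2/\epsilon))}$ style conditions to control the multiplicative factor $\frac{e^{-\eta\mu KR+1}}{1-e^{-\eta\mu KR+1}}$), this gives the desired $e^{-\eta\mu KR}B^2$ contribution.

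Finally, for each of the remaining terms $T_i(\eta)$ in $\kappa''\Psi$ and the noise block, I will note that every such term is monotonically increasing in $\eta$, so it suffices to evaluate it at $\eta = \frac{1}{\mu KR}\ln(B^2/\epsilon)$; carrying this out and absorbing the $\ln$ factors into $\tilde\ooo$ yields each of the terms listed in the convergence rate. The choice of $\epsilon$ as the stated maximum ensures that in the regime where $\frac{1}{2H} \geq \frac{1}{\mu KR}\ln(B^2/\epsilon)$, the exponential term is upper bounded by one of the other terms in the rate (since $e^{-\ln(B^2/\epsilon)}B^2 = \epsilon$), which is the standard two-case reconciliation. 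The main obstacle will be bookkeeping: the $\Psi$ vector contains twelve distinct terms and $\kappa''$ multiplies all of them, so extreme care is needed to verify that each resulting summand matches the target rate after tuning, and that the constraints on $R$ and $KR$ are genuinely sufficient to justify the $\rho_3$-based bounds for all active step-size regimes simultaneously.
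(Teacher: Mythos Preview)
Your proposal is correct and follows essentially the same route as the paper: bound $\frac{1-(1-\eta\mu)^K}{1-\rho_3}\leq 2$ via the conditions on $R$ and $KR$, then show $\rho_3^R \leq e^{-\eta\mu KR+1}$, and finally substitute $\eta \leq \frac{1}{\mu KR}\ln(B^2/\epsilon)$ into each term of $\Psi$ with $\epsilon$ chosen to reconcile the two step-size branches. One small clarification: the condition $R\geq \frac{\ln(B^2/\epsilon)}{\ln\ln(B^2/\epsilon)}$ in the paper is used to bound $(B^2/\epsilon)^{1/R}\leq \ln(B^2/\epsilon)$ inside the $\rho_3^R$ estimate (since $e^{\eta\mu K}\leq (B^2/\epsilon)^{1/R}$ appears after factoring), not to control a fraction $\frac{e^{-\eta\mu KR+1}}{1-e^{-\eta\mu KR+1}}$---no such fraction arises here, unlike in the function-value lemma.
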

\begin{proof}
    We will pick the following step-size,
    \begin{align*}
        \eta = \min \cb{\frac{1}{2H}, \frac{1}{\mu KR}\ln \rb{\frac{B^2}{\epsilon}}}\enspace,
    \end{align*}
    where the choice of $\epsilon>0$ will be made explicit later. We will first identify the requirements on problem parameters to guarantee that,
    \begin{align*}
        &\frac{1-(1-\eta\mu)^K}{1-\rho_3} \leq 2\enspace,\\
        &\Leftrightarrow \frac{1-(1-\eta\mu)^K}{2} \leq (1-\rho_3)\enspace,\\
        &\Leftrightarrow \frac{1}{2} \leq 1- \rb{\frac{2\tau^2}{\mu^2} + \frac{2Q^2B^2}{\mu^2} + \frac{16\eta^2\sigma_2^2\tau^2}{\mu^2 M B^2} + \frac{H^4}{\mu^4} + \frac{16\eta^2\sigma_2^2Q^2}{\mu^2 M}}\rb{4\eta^2\tau^2K^2 + 64\eta^4\tau^4K^4}\enspace,\\
        &\Leftrightarrow \rb{\frac{2\tau^2}{\mu^2} + \frac{2Q^2B^2}{\mu^2} + \frac{16\eta^2\sigma_2^2\tau^2}{\mu^2 M B^2} + \frac{H^4}{\mu^4} + \frac{16\eta^2\sigma_2^2Q^2}{\mu^2 M}}\rb{4\eta^2\tau^2K^2 + 64\eta^4\tau^4K^4} \leq \frac{1}{2}\enspace,\\
        &\Leftarrow^{\text{(a)}} \rb{\frac{2Q^2B^2}{\mu^2} + \frac{16\sigma_2^2\tau^2}{\mu^4K^2R^2 M B^2}\ln^2\rb{\frac{B^2}{\epsilon}} + \frac{3H^4}{\mu^4} + \frac{16\sigma_2^2Q^2}{\mu^4K^2R^2 M}\ln^2\rb{\frac{B^2}{\epsilon}}}\\
        &\qquad \times \rb{\frac{4\tau^2}{\mu^2R^2}\ln^2\rb{\frac{B^2}{\epsilon}} + \frac{64\tau^4}{\mu^4 R^4}\ln^4\rb{\frac{B^2}{\epsilon}}} \leq \frac{1}{2}\enspace,\\
        &\Leftarrow (i)\ \rb{\frac{2Q^2B^2}{\mu^2} + \frac{3H^4}{\mu^4}}\rb{\frac{4\tau^2}{\mu^2R^2}\ln^2\rb{\frac{B^2}{\epsilon}} + \frac{64\tau^4}{\mu^4 R^4}\ln^4\rb{\frac{B^2}{\epsilon}}}\leq \frac{1}{4}\enspace; \quad \text{ and}\\
        &\quad  (ii)\ \rb{\frac{16\sigma_2^2\tau^2}{\mu^4K^2R^2 M B^2}\ln^2\rb{\frac{B^2}{\epsilon}}+ \frac{16\sigma_2^2Q^2}{\mu^4K^2R^2 M}\ln^2\rb{\frac{B^2}{\epsilon}}}\\
        &\qquad \times \rb{\frac{4\tau^2}{\mu^2R^2}\ln^2\rb{\frac{B^2}{\epsilon}} + \frac{64\tau^4}{\mu^4 R^4}\ln^4\rb{\frac{B^2}{\epsilon}}}\leq \frac{1}{4}\enspace,\\
        &\Leftarrow (i)\ \sqrt{\frac{16Q^2B^2}{\mu^2} + \frac{24H^4}{\mu^4}}\cdot\frac{2\tau}{\mu}\ln \rb{\frac{B^2}{\epsilon}} \leq R\enspace;\\
        &\quad (ii)\ \sqrt[4]{\frac{16Q^2B^2}{\mu^2} + \frac{24H^4}{\mu^4}}\cdot\frac{\sqrt[4]{64}\tau}{\mu}\ln \rb{\frac{B^2}{\epsilon}}\leq R\enspace;\\
        &\quad (iii)\ \frac{8\sigma_2\tau}{\mu^2 \sqrt{M} B}\ln\rb{\frac{B^2}{\epsilon}} \leq KR\enspace;\\
        &\quad (iv)\ \frac{8\sigma_2 Q}{\mu^2 \sqrt{M}}\ln\rb{\frac{B^2}{\epsilon}}\leq KR\enspace;\quad\text{ and}\\
        &\quad (v)\ \frac{4\tau}{\mu}\ln\rb{\frac{B^2}{\epsilon}}\leq R \enspace,\\
        &\Leftarrow (i)\ \textcolor{red}{KR\geq \frac{8\sigma_2}{\mu^2\sqrt{M}}\ln\rb{\frac{B^2}{\epsilon}}\cdot \max\cb{\frac{\tau}{B},Q}}\enspace; \quad \text{and}\\ 
        &\quad (ii)\ \textcolor{red}{R\geq \frac{3\tau}{\mu}\ln\rb{\frac{B^2}{\epsilon}}\cdot\rb{\frac{4QB}{\mu} + \frac{5H^2}{\mu^2}}}\enspace.
    \end{align*}
    where in (a) we used that $\tau^2/\mu^2 \leq H^2/\mu^2$. Now we will upper bound $\rho_3$ as follows,
    \begin{align*}
        \rho_3 &\leq^{\text{(a)}} (1-\eta\mu)^K + \frac{1}{R}\ln\rb{\frac{B^2}{\epsilon}}\rb{ \frac{2Q^2B^2}{\mu^2} + \frac{16\sigma_2^2}{\mu^4K^2R^2 M}\rb{\frac{\tau^2}{B^2}+1}\ln^2\rb{\frac{B^2}{\epsilon}} + \frac{3H^4}{\mu^4}}\\
        &\qquad \times \rb{\frac{4\tau^2}{\mu^2R^2}\ln^2\rb{\frac{B^2}{\epsilon}} + \frac{64\tau^4}{\mu^4 R^4}\ln^4\rb{\frac{B^2}{\epsilon}}}\enspace,\\
        &\leq^{\text{(b)}} e^{-\eta\mu K} + \frac{1}{R}\ln\rb{\frac{B^2}{\epsilon}}\rb{ \frac{2Q^2B^2}{\mu^2} + 1 + \frac{3H^4}{\mu^4}}\rb{\frac{4\tau^2}{\mu^2R^2}\ln^2\rb{\frac{B^2}{\epsilon}} + \frac{64\tau^4}{\mu^4 R^4}\ln^4\rb{\frac{B^2}{\epsilon}}}\enspace,\\
        &\leq e^{-\eta\mu K}\Bigg(1+\frac{e^{\eta\mu K}}{R}\ln\rb{\frac{B^2}{\epsilon}}\rb{ \frac{2Q^2B^2}{\mu^2} + 1 + \frac{3H^4}{\mu^4}}\rb{\frac{4\tau^2}{\mu^2R^2}\ln^2\rb{\frac{B^2}{\epsilon}} + \frac{64\tau^4}{\mu^4 R^4}\ln^4\rb{\frac{B^2}{\epsilon}}}\Bigg)\enspace,\\
        &\leq e^{-\eta\mu K}\exp\rb{\frac{e^{\eta\mu K}}{R}\ln\rb{\frac{B^2}{\epsilon}}\rb{ \frac{2Q^2B^2}{\mu^2} + 1 + \frac{3H^4}{\mu^4}}\rb{\frac{4\tau^2}{\mu^2R^2}\ln^2\rb{\frac{B^2}{\epsilon}} + \frac{64\tau^4}{\mu^4 R^4}\ln^4\rb{\frac{B^2}{\epsilon}}}}\enspace,\\
        &\leq e^{-\eta\mu K}\exp\rb{\frac{e^{1/R\ln(B^2/\epsilon)}}{R}\ln\rb{\frac{B^2}{\epsilon}}\rb{ \frac{2Q^2B^2}{\mu^2} + 1 + \frac{3H^4}{\mu^4}}\rb{\frac{4\tau^2}{\mu^2R^2}\ln^2\rb{\frac{B^2}{\epsilon}} + \frac{64\tau^4}{\mu^4 R^4}\ln^4\rb{\frac{B^2}{\epsilon}}}}\enspace,\\
         &\leq e^{-\eta\mu K}\exp\rb{\frac{1}{R}\rb{\frac{B^2}{\epsilon}}^{1/R}\ln\rb{\frac{B^2}{\epsilon}}\rb{ \frac{2Q^2B^2}{\mu^2} + 1 + \frac{3H^4}{\mu^4}}\rb{\frac{4\tau^2}{\mu^2R^2}\ln^2\rb{\frac{B^2}{\epsilon}} + \frac{64\tau^4}{\mu^4 R^4}\ln^4\rb{\frac{B^2}{\epsilon}}}}\enspace,
    \end{align*}
    where in (a) we use Bernoulli's Inequality and the choice of step-size, which implies $\eta\mu<1$ as well as the fact that $\tau^2/\mu^2 \leq H^4/\mu^4$; and in (b) we assumed that \textcolor{red}{the conditions derived} above to ensure $\frac{1-(1-\eta\mu)^K}{2} \leq 1-\rho_3$ are true, which allows us to conclude $\frac{16\sigma^2}{\mu^4K^2R^2 M}\rb{\frac{\tau^2}{B^2}+1}\ln^2\rb{\frac{B^2}{\epsilon}}\leq 1$. Raising both sides to the power $R$ gives us,
    \begin{align*}
        \rho_3^R &\leq e^{-\eta\mu KR}\exp\rb{\rb{\frac{B^2}{\epsilon}}^{1/R}\ln\rb{\frac{B^2}{\epsilon}}\rb{ \frac{2Q^2B^2}{\mu^2} + 1 + \frac{3H^4}{\mu^4}}\rb{\frac{4\tau^2}{\mu^2R^2}\ln^2\rb{\frac{B^2}{\epsilon}} + \frac{64\tau^4}{\mu^4 R^4}\ln^4\rb{\frac{B^2}{\epsilon}}}}\enspace,\\
        &\leq^{\text{(a)}} e^{-\eta\mu KR}\exp\rb{\ln^2\rb{\frac{B^2}{\epsilon}}\rb{ \frac{2Q^2B^2}{\mu^2} + 1 + \frac{3H^4}{\mu^4}}\rb{\frac{4\tau^2}{\mu^2R^2}\ln^2\rb{\frac{B^2}{\epsilon}} + \frac{64\tau^4}{\mu^4 R^4}\ln^4\rb{\frac{B^2}{\epsilon}}}}\enspace,\\
        &\leq^{\text{(b)}} e^{-\eta\mu KR+1}\enspace,
    \end{align*}
    where in (a) we assume that $\textcolor{red}{R\geq \frac{\ln(B^2/\epsilon)}{\ln(\ln(B^2/\epsilon))}}$; and in (b)
    we assume $\textcolor{red}{R\geq \frac{8\tau}{\mu}\ln^2\rb{\frac{B^2}{\epsilon}}\rb{1 + \frac{QB}{\mu} + \frac{H^2}{\mu^2}}}$ as well as $\textcolor{red}{R\geq \frac{8\tau}{\mu}\ln^{3/2}\rb{\frac{B^2}{\epsilon}}\rb{1 + \sqrt{\frac{QB}{\mu}} + \frac{H}{\mu}}}$. These observations, along with the \textcolor{red}{conditions} derived so far allow us to simplify the convergence rate as follows,
    \begin{align*}
        \norm{\aa(KR)} s&\leq e^{-\eta\mu KR + 1}\sqrt{2}B^2 + \frac{2\eta\sigma_2^2}{\mu M} + \frac{18\eta^3\sigma_4^4}{\mu M^2B^2} + \rb{2 + \frac{4Q^2B^2}{\mu^2} + \frac{6H^4}{\mu^4}}\norm{V}\enspace,\\
        &\leq 4e^{-\eta\mu KR}B^2 + \frac{2\sigma_2^2}{\mu^2 MKR}\ln\rb{\frac{B^2}{\epsilon}} + \frac{18\sigma_4^4}{\mu^4K^3R^3 M^2B^2}\ln^3\rb{\frac{B^2}{\epsilon}} +\kappa'\rb{\frac{4\tau^2\phi_\star^2}{\mu^2R^2}\ln^2\rb{\frac{B^2}{\epsilon}}}\\
        &\quad +\kappa'\rb{\frac{128\tau^4\sigma_4^2}{\mu^6 KR^5B^2}\phi_\star^2\ln^5\rb{\frac{B^2}{\epsilon}} +\frac{320\sigma_2^2\tau^2}{\mu^4KR^4B^2}\phi_\star^2\ln^4\rb{\frac{B^2}{\epsilon}} + \frac{64\tau^4}{\mu^4B^2R^4}\phi_\star^4\ln^4\rb{\frac{B^2}{\epsilon}}}\\
        &\quad +\kappa'\rb{\frac{2H^2\zeta_\star^2}{\mu^2 R^2}\ln^2\rb{\frac{B^2}{\epsilon}} + \frac{2\tau^2\sigma_2^2}{\mu^4KR^3}\ln^3\rb{\frac{B^2}{\epsilon}} + \frac{2\sigma_2^2\ln(K)}{\mu^2KR^2}\ln^2\rb{\frac{B^2}{\epsilon}}}\\
        &\quad + \kappa'\rb{\frac{8H^4\zeta_\star^4}{\mu^4R^3 B^2}\ln^3\rb{\frac{B^2}{\epsilon}} +\frac{88\tau^4\sigma_4^4}{\mu^8K^2R^5B^2}\ln^5\rb{\frac{B^2}{\epsilon}} + \frac{160\sigma^2H^2\zeta_\star^2}{\mu^4B^2R^4}\ln^4\rb{\frac{B^2}{\epsilon}}}\\
        &\quad +\kappa'\rb{\frac{160\tau^2\sigma_2^4}{\mu^6KR^5 B^2}\ln^5\rb{\frac{B^2}{\epsilon}} + \frac{112\sigma_4^4\ln(K)}{\mu^4KB^2R^4}\ln^4\rb{\frac{B^2}{\epsilon}}}\enspace,
    \end{align*}
    where we define $\kappa' := \rb{2 + \frac{4Q^2B^2}{\mu^2} + \frac{6H^4}{\mu^4}}$. We are almost done, but we need to define $\epsilon$. Our choice of $\epsilon$ is simply the maximum of all the terms (after removing the logarithmic factors) in the above convergence bound, except for the first exponential term and the target accuracy $\epsilon_{target}$, which is an input to the algorithm. Like in the previous lemmas' proofs, we recall that the term dominating in $\epsilon$ also dominates the final convergence rate.  This choice of $\epsilon$ and $\eta$, proves the lemma statement. 
\end{proof}

\section{More Details on the Experiments}\label{app:experiments}
In this appendix, we describe in full detail how we generated the synthetic data for each client and how we controlled first- and second-order heterogeneity without altering the inherent difficulty of the individual optimization problems (e.g.\ their condition numbers or solution norms) for the experiments in the main body.

\subsection{Data generation for each client}

We consider a linear regression problem with parameter dimension \(d\).  There are \(M\) clients, indexed by \(m=1,\dots,M\).  For each client \(m\), we generate i.i.d.\ data \((\beta_m,y_m)\sim\ddd_m\) with
\[
  \beta_m \;\sim\;\nnn(\mu_m,\,I_d), 
  \qquad
  y_m = \langle x_m^\star,\beta_m\rangle + \varepsilon,
  \;\;\varepsilon\sim\nnn(0,\sigma_{noise}^2)\enspace.
\]
The corresponding per-sample squared loss is
\[
  f(x;(\beta_m,y_m))
  = \tfrac12\bigl(y_m - \langle x,\beta_m\rangle\bigr)^2\enspace,
\]
and the population objective on client \(m\) is
\[
  F_m(x)
  = \ee_{(\beta,y)\sim\ddd_m}\bigl[f(x;(\beta,y))\bigr]
  = \tfrac12\,(x - x_m^\star)^\top\bigl(\mu_m\mu_m^\top + I_d\bigr)\,(x - x_m^\star)
    + \tfrac12\,\sigma_{noise}^2\enspace.
\]
Under suitable bounds on
\(\|\mu_m\|\), \(\sigma_{noise}\), and \(\|x_m^\star\|\), these objectives satisfy ~\Cref{ass:strongly_convex,ass:smooth_second,ass:stoch_bounded_second_moment,ass:bounded_optima} for all \(x\) in a bounded region.

\subsection{Controlling first-order (concept) heterogeneity}

We fix the norm of each true optimizer to \(\|x_m^\star\|=R_\star\).  To vary the maximum pairwise distance
\(\max_{m,n}\!\|x_m^\star - x_n^\star\| = \zeta_\star\), we sample each
\[
  x_m^\star 
  = R_\star\,v_m
  \quad\text{with}\quad
  v_m\in\rr^d,\;\|v_m\|=1\enspace,
\]
where \(v_m\) is drawn uniformly from the spherical cap of half-angle
\[
  \phi(\zeta_\star)
  = \arcsin\!\rb{\frac{\zeta_\star}{2R_\star}}\enspace,
\]
around a fixed ``central'' random unit vector \(v_0\).  This ensures
\(\|x_m^\star\|=R_\star\) for all \(m\), and
\(\max_{m,n}\|x_m^\star - x_n^\star\|=\zeta_\star\), so that larger \(\zeta_\star\)
increases concept heterogeneity purely by angular dispersion, without
changing the optimizer norms. This process is illustrated in \Cref{fig:spherical_cap}. In our experiments, we fix $R_\star=1$.

\begin{figure}[ht]
  \centering
  \includegraphics[width=0.6\textwidth]{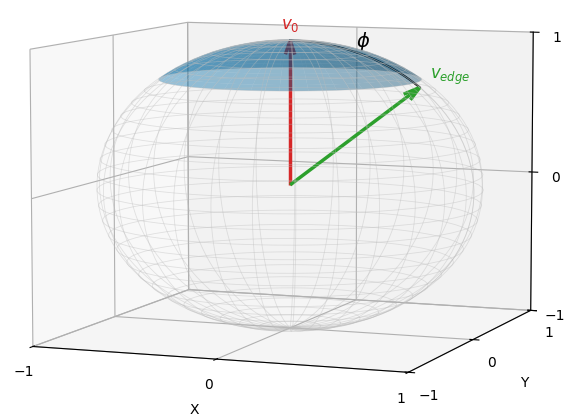}
  \caption{%
    Illustration of sampling unit vectors from a spherical cap.  We draw a cross-section of the unit sphere, mark the central axis $v_0$, and show the cap of half-angle $\phi(\zeta_\star)$ (shaded blue).
  }
  \label{fig:spherical_cap}
\end{figure}

\subsection{Controlling second-order (covariate) heterogeneity}

Likewise, we fix each covariance matrix to \(I_d\) and fix the norm of the
feature mean to \(\|\mu_m\|=\mu_0\).  To vary the maximum pairwise mean
distance \(\max_{m,n}\|\mu_m - \mu_n\| = \tau\), we sample
\[
  \mu_m = \mu_0\,u_m,
  \quad
  u_m\in\rr^d,\;\|u_m\|=1\enspace,
\]
with \(u_m\) drawn uniformly from the spherical cap of half-angle
\[
  \theta(\tau)
  = \arcsin\!\bigl(\tau/(2\,\mu_0)\bigr)\enspace,
\]
around the same central direction \(v_0\).  Again, this rotates the means
without altering \(\|\mu_m\|\) or the eigenvalues of the Hessians
\(\nabla^2 F_m = \mu_m\mu_m^\top + I_d\), whose condition number remains
\(1+\mu_0^2\). In our experiments, we fix $\mu_0=5$.



\subsection{Hyper-parameter tuning and metrics}

For every experimental setting \((\tau,\zeta_\star)\) (or every \(\tau\) in the
communication-complexity study) we first sample $v_0$ and sample $\{x_m^\star\}$, then we perform $20$ independent
trials with fresh draws of \(\{\mu_m\}\).  In each trial, we
search over a logarithmic grid of step-sizes
\(\eta\in[10^{-3},10^{-1}]\) and record either:
\begin{itemize}
  \item The final \(\ell_2\) error \(\|x^R - \bar x^\star\|\) after \(R\)
        rounds (for the heatmap in Figure~\ref{fig:heatmap}), or
  \item The minimum number of rounds \(r\le R_{\max}\) needed to reach
        \(\|x^r - \bar x^\star\|\le \epsilon\) (for the communication plot in
        Figure~\ref{fig:comm_rounds}).
\end{itemize}
We then average these quantities over the \(n_{runs}\) trials to obtain the plotted heatmaps and curves. 

\subsection{Ensuring fixed problem difficulty}

By sampling \(\{x_m^\star\}\) and \(\{\mu_m\}\) on fixed-radius spheres
and using identity covariances, we keep every client’s Hessian condition
number and solution norm constant, so that any change in convergence or
communication cost is attributable purely to the angular dispersion
(i.e.\ heterogeneity) parameters \(\tau\) and \(\zeta\), not to changes in
problem conditioning or scale.

\chapter{Additional Details for Chapter 6}\label{app:chap6}
\section{Proof of Non-convex Lower Bounds}\label{sec:non_convex_lb}
In this section, we prove Theorems \ref{thm:lb_cent} and \ref{thm:lb}. Both these results share the communication complexity terms $\min\{\Delta \tau/\epsilon, H^2\zeta^2/\epsilon\}$. We'd show that any algorithm in $\aaa_{ZR}$, no matter whether it uses an exact or stochastic oracle, and for any number of oracle queries $K$ between communication rounds, must incur these many communication rounds. To do so, we'd use the non-convex hard instance proposed by \citet{carmon2020lower} and split it across different machines similar to \citet{arjevani2015communication, woodworth2020minibatch}. Specifically, we consider the following functions (where we assume for simplicity that $d$ is even):
\begin{align}
    F(x) &:= \frac{F_1(x) + F_2(x)}{2}\enspace,\\
    F_1(x) &:= - \psi(x)\phi(x_1) + \sum_{i=1}^{d/2-1}\sb{\psi(-x_{2i})\phi(-x_{2_i+1}) - \psi(x_{2i})\phi(x_{2i+1})}\enspace,\\
    F_2(x) &:= \sum_{i=1}^{d/2}\sb{\psi(-x_{2i-1})\phi(-x_{2i}) - \psi(x_{2i-1})\phi(x_{2i})}\enspace,
\end{align}
where the component functions $\psi(\cdot)$ and $\phi(\cdot)$ are defined as follows,
\begin{align}
    \psi(t) = \begin{cases} 0, & t\leq 1/2,\\ \exp\rb{1-\frac{1}{(2t-1)^2}}, & t>1/2.\end{cases} \quad\text{and}\quad \phi(t) = \sqrt{e}\int_{-\infty}^{t}e^{-\frac{1}{2}\tau^2}d\tau\enspace. 
\end{align}

The functions $F_1, F_2$ have the following interesting property: Let $E_k$ be the (co-ordinate) span of first $k$ basis vectors, i.e., $\text{span}(e_1, \dots, e_k)$. Note that when $x_k\in E_k$ and $k$ is odd, we have $$\nabla F_1(x_k)\in E_k \text{ and } \nabla F_2(x_k)\in E_{k+1}\enspace,$$ while when $k$ is even, $$\nabla F_1(x_k)\in E_{k+1} \text{ and } \nabla F_2(x_k)\in E_{k}\enspace.$$
In our construction, half of the machines will have the function $F_1$, and the other half will have the function $F_2$ (assuming $M$ is even; we will see later that it only changes the lower bound by a factor of $M-1/M$). First, we initialize all the $M$ machines at $0$ and optimize using any distributed zero-respecting algorithm (see Definition \ref{def:zero_respecting}). Then, the only way to access the next coordinate is to query the gradient of one of two functions---$F_1$ if the next coordinate is odd and $F_2$ if the next coordinate is even. This means that, between two rounds of communication, at least one set of machines can't make any progress, and the other set of machines only learns about at most one new coordinate. Thus, the machines are forced to communicate at least $d-1$ times to be able to span $\rr^d$. More formally, we can prove the following lemma:

\begin{lemma}\label{lem:output_in_ER}
For any vector $v\in \rr^d$, define $\supp{v} = \{i\in[d]: v_i\neq 0\}$. Let $x_R$ be the output of any algorithm $A\in \aaa_{ZR}$ equipped with oracles $\{\ooo_{F_m}\}_{m\in[M]}$ on each machine, initialized at $0$ and optimizing the problem with $F_1$ on the first half machines and $F_2$ on the secocnd half. Then after $R$ rounds of communication, $$\supp{x_R} \in E_R\enspace.$$
\end{lemma}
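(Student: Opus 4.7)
The plan is to induct on the communication round index $r$, establishing the stronger invariant that after $r$ communication rounds, the iterates on every machine (not just the consensus model) lie in $E_r$. The base case $r=0$ is immediate since every machine is initialized at $0 \in E_0$. Assuming the invariant for round $r$, I would analyze the progress during round $r+1$ by running an inner induction over the $K$ local oracle queries on each machine, combined with the zero-respecting property from Definition~\ref{def:zero_respecting}.

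The key tool is the ``one new coordinate per round'' phenomenon enabled by the parity structure of $F_1$ and $F_2$. Concretely, assume $x \in E_r$ at the start of round $r+1$. If $r$ is even, then $\nabla F_1(x) \in E_{r+1}$ while $\nabla F_2(x) \in E_r$; the machines holding $F_2$ therefore cannot introduce any new coordinate during this round (by zero-respecting), while the machines holding $F_1$ can at best reach $E_{r+1}$. Once an $F_1$-machine's iterate enters $E_{r+1}$, the parity flips: now $r+1$ is odd, so subsequent queries return $\nabla F_1 \in E_{r+1}$ and the span cannot grow further within the round. The odd-$r$ case is symmetric with the roles of $F_1, F_2$ swapped. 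This is exactly the sub-claim I would formalize as an inner induction on the local step index $k \in [0,K-1]$: at every local step, the iterate on each machine remains in $E_{r+1}$.

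After all $K$ local steps, every machine's model lies in $E_{r+1}$, and since $E_{r+1}$ is a linear subspace, the averaged consensus model produced at the end of round $r+1$ is also in $E_{r+1}$, closing the induction. The final output $x_R$ is a deterministic function of oracle outputs, all of which lie in $E_R$ by this invariant, so $x_R \in E_R$ as claimed. The simple adjustment when $M$ is odd only changes the bookkeeping by a factor of $(M-1)/M$ in the averaging, and since $E_R$ is closed under averaging, the invariant is unaffected.

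The main obstacle I expect is being careful about the zero-respecting property under stochastic oracles and the precise closure statement needed: we must argue not only about the \emph{support} of an iterate but that all intermediate oracle inputs $I_{r,k}^m$ (c.f.\ Definition~\ref{def:zero_respecting}) lie in $E_r$ or $E_{r+1}$. Because the definition allows $I_{r,k}^m$ to be any deterministic function of past oracle outputs (rather than just linear combinations), a bit of care is needed to ensure that the oracle being a subdifferential-respecting map for $F_1$ (resp.\ $F_2$) still forces outputs to lie in $E_{r+1}$. This is handled by appealing directly to the coordinate-wise structure of $\nabla F_1$ and $\nabla F_2$ at points in $E_k$, which was the whole point of the Carmon--Duchi--Hinder--Sidford construction and transfers cleanly to our two-machine split.
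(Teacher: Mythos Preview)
Your proposal is correct and matches the standard approach: the paper itself does not give a self-contained proof but simply cites Lemma~9 of \citet{woodworth2020minibatch}, whose argument is precisely the double induction (over communication rounds and local steps) you describe, driven by the parity property of $\nabla F_1,\nabla F_2$ on $E_k$. Your handling of the zero-respecting definition---arguing about supports of oracle outputs rather than linear spans---is the right level of care, and your closure-under-averaging remark is subsumed by the more general fact that any vector whose support is contained in $\{1,\dots,r+1\}$ lies in $E_{r+1}$.
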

The proof of this lemma is identical to Lemma 9 in \cite{woodworth2020minibatch}. We'd use this observation along with some properties of the hard instance to show our lower bound. In particular, we note the following properties for the function $F(\cdot)$. 
\begin{lemma}[Lemma 3 in \citet{carmon2020lower}]\label{lem:hard_function_facts}
The function $F$ satisfies the following:
\begin{enumerate}
    \item[i.] We have $F(0) - \inf_{x}F(x) \leq \Delta_0d$, where $\Delta_0=12$.
    \item[ii.] For all $x\in \rr^d$, $\norm{\nabla F(x)}\leq 23\sqrt{d}$.
    \item[iii.] For every $p\geq 1$, the $p$-th order derivatives of $F$ are $l_p$-Lipschitz continuous, where $$l_p\leq \exp\rb{\frac{5}{2}p\log p + cp}$$ for a numerical constant $c<\infty$. In particular, $l_1=152$ (c.f., Lemma 2.2 in \citet{arjevani2019lower}).
\end{enumerate}
\end{lemma}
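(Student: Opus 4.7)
Since this statement is explicitly attributed to Lemma 3 of \citet{carmon2020lower}, my plan is to sketch how the argument goes rather than rederive it from scratch; the full proof in the original paper is fairly involved, especially for part (iii). The common backbone of all three parts is that $F = (F_1+F_2)/2$ is a sum of $O(d)$ terms, each of the form $\pm\psi(\pm x_i)\phi(\pm x_{i+1})$, so it suffices to control one such term uniformly and multiply by the total number of terms.

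For part (i), I would first record the bounds $0 \le \psi(t) \le \psi(\infty) = e$ and $0 \le \phi(t) \le \phi(\infty) = \sqrt{2\pi e}$, together with the ``progress'' property $\phi(t) \ge \phi(-1/2) = c_0 > 0$ once its argument is at least $-1/2$ (used implicitly in later parts). At $x=0$ many cross terms vanish because $\psi(0)=0$ and $\psi(t)=0$ for $t\le 1/2$, so $F(0)$ is a bounded constant (independent of $d$). For the infimum, each of the $O(d)$ summands is bounded below by $-\psi(\infty)\phi(\infty) = -e\sqrt{2\pi e}$, so $\inf_x F(x) \ge -c_1 d$ for a universal $c_1$. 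Combining gives $F(0)-\inf_x F(x) \le \Delta_0 d$ with $\Delta_0 = 12$ after carrying through the explicit constants.

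For part (ii), I would differentiate coordinatewise. Each coordinate $x_j$ appears in at most two summands (once as the ``$\psi$-slot'' and once as the ``$\phi$-slot''), so $\partial F/\partial x_j$ is a sum of at most four terms of the form $\psi'(\pm x_j)\phi(\pm x_{j\pm 1})$ or $\psi(\pm x_{j\mp 1})\phi'(\pm x_j)$. Using uniform bounds $\|\psi'\|_\infty,\ \|\phi'\|_\infty,\ \|\psi\|_\infty,\ \|\phi\|_\infty \le C$ (all explicit; cf.\ Lemma 2 of \citet{carmon2020lower}), each partial is bounded by an absolute constant, so $\|\nabla F(x)\|^2 \le c_2 d$ and the stated constant $23$ comes from plugging in the explicit numerical values.

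Part (iii) is the real obstacle and the reason the result is non-trivial; the bound $l_p \le \exp(\tfrac{5}{2}p\log p + cp)$ cannot be obtained by naive recursion on Leibniz's formula. My plan would be to invoke the Faà di Bruno / tensor-product structure for derivatives of products of univariate functions: because $F$ is a sum of rank-one tensor products $\psi(\pm x_i)\phi(\pm x_{i+1})$, the $p$-th derivative tensor decomposes into a sum of at most $2^p$ pieces, each controlled by the product of a derivative of $\psi$ of order $\le p$ and a derivative of $\phi$ of order $\le p$. The key analytic input---and the hard part---is the growth of $\|\psi^{(k)}\|_\infty$ and $\|\phi^{(k)}\|_\infty$, which \citet{carmon2020lower} establish by bounding the Bruno coefficients arising from differentiating $\exp(1-(2t-1)^{-2})$; these give $\|\psi^{(k)}\|_\infty \le \exp(\tfrac{5}{2}k\log k + O(k))$, and $\phi^{(k)}$ is controlled directly via Hermite-polynomial bounds. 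Summing the $2^p$ pieces (absorbed into the $O(p)$ term) and multiplying by the $O(d)$ term count (absorbed into the Lipschitz constant after differentiation along a line) yields the claimed bound. The special case $l_1 = 152$ follows by direct computation using the explicit numerical bounds on $\psi,\psi',\phi,\phi'$ and is the one we will actually use later.
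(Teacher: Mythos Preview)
Your proposal is appropriate: the paper does not prove this lemma at all --- it is stated as a direct citation of Lemma 3 in \citet{carmon2020lower} (with the $l_1=152$ value attributed to \citet{arjevani2019lower}) and used as a black box in the subsequent lower-bound constructions. Your sketch of the underlying argument (term-counting for (i)--(ii), Faà di Bruno / Hermite-polynomial bounds for (iii)) accurately reflects the proof in the original source, so there is nothing to compare against here.
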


Note that these properties imply the following for $F$ (c.f., Lemma 2 in \citet{carmon2020lower}.). 
\begin{lemma}\label{lem:lower_bound_gradient_norm}
For all $x\in E_k$, where $k<d$, $\norm{\nabla F(x)} \geq 1$.
\end{lemma}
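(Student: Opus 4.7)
The plan is a direct appeal to Lemma 2 of \citet{carmon2020lower}, which states exactly this gradient lower bound for their canonical zero-chain hard instance. The only real work is verifying that $F=(F_1+F_2)/2$ matches (up to normalization) their construction, and that the hypothesis $x\in E_k$ with $k<d$ coincides with their ``unprogressed'' condition.

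First I would expand $F_1+F_2$ and check that the two function classes were partitioned exactly so that summing telescopes into the Carmon--Hinder--Duchi--Sidford chain. Concretely, $F_1$ contributes the initial boundary term $-\psi(1)\phi(x_1)$ together with the cross-terms on the pairs $(2i,2i+1)$, while $F_2$ contributes the cross-terms on the pairs $(2i-1,2i)$. Between them, every consecutive pair $(i,i+1)$ for $i=1,\dots,d-1$ is covered exactly once, so $F_1+F_2$ is (up to an overall scalar) the standard $d$-dimensional Carmon et al.\ hard function. This split is precisely what forces a distributed zero-respecting algorithm to alternate communication and local work: odd-indexed coordinates can only be unlocked by queries to $F_2$ and even-indexed coordinates only by $F_1$, which is the content of \Cref{lem:output_in_ER}.

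Next I would translate $x\in E_k$ into the Carmon--Hinder--Duchi--Sidford progress notion. By definition of $E_k$, every $x\in E_k$ with $k<d$ satisfies $x_{k+1}=\cdots=x_d=0$, so in particular $|x_d|=0<1/2$, i.e.\ the $d$-th coordinate is unprogressed in their sense. Their Lemma 2 then asserts $\|\nabla F(x)\|\geq 1$ whenever any coordinate with index $\leq d$ is unprogressed, which immediately yields the claim.

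The only obstacle, and it is purely bookkeeping rather than conceptual, is tracking the constant coming from the fact that $F$ is the \emph{average} rather than the sum of $F_1$ and $F_2$, so its gradient is half that of the un-averaged Carmon et al.\ function. This is harmlessly absorbed either by rescaling $F_1$ and $F_2$ by a factor of two at the outset or, equivalently, by adjusting the numerical constants $\Delta_0$ and $l_1$ in \Cref{lem:hard_function_facts}; the resulting threshold $\|\nabla F(x)\|\geq 1$ in \Cref{lem:lower_bound_gradient_norm} is chosen so that, after this rescaling, the Carmon et al.\ constant matches. No deeper difficulty arises.
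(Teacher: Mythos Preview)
Your proposal is correct and takes exactly the same approach as the paper, which simply cites Lemma 2 of \citet{carmon2020lower} without further argument. Your added verification that $F_1+F_2$ reconstructs the Carmon et al.\ chain and your flag about the factor-of-two from averaging are both accurate observations that the paper glosses over; as you say, the constant is absorbed in the downstream rescalings and only affects the final bound by a numerical factor.
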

In other words, if the model vector $x$ doesn't span $\rr^d$, it will be forced to have a large gradient. And our distributed problem structure forces the iterates to lie in $E_R$ after $R$ communication rounds, as highlighted in Lemma \ref{lem:output_in_ER}. Formalizing this idea results in the following communication complexity lower bound: 

\begin{theorem}[Communication complexity second-order]\label{thm:lb_second} Any algorithm $A \in \aaa_{zr}$ optimizing a problem instance satisfying \Cref{ass:smooth_second,ass:bounded_func_subopt,ass:tau} and with $K>0$ intermittent accesses to deterministic $n$-point oracles (cf. \Cref{def:oracle_first_multi_point}) on all the clients needs communication rounds, $$R\geq b_1\cdot\frac{\Delta \tau}{\epsilon}\enspace,$$ to output $x^{A}_R$ such that $\ee[\norm{\nabla F(x^A_R)}^2] \leq \epsilon$ where $\epsilon< b_2\tau\Delta$ and $b_1, b_2$ are numerical constants.
\end{theorem}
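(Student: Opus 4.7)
The plan is to instantiate the Carmon--Duchi--Hinder--Sidford hard instance at an appropriate scale and split it across the $M$ machines so that unlocking each new coordinate requires a full round of communication. I would assign $F_1$ to half of the machines and $F_2$ to the other half (a single-machine adjustment if $M$ is odd costs at most a constant factor), so the average objective is $F = (F_1 + F_2)/2$, exactly the non-convex hard function from Carmon et al. The key structural property already described before \Cref{lem:output_in_ER} is that when the current iterate has support in $E_k$, only one of the two function types can unlock coordinate $k+1$, so between two communication rounds at most one new coordinate becomes accessible. This argument is oblivious to the number of local queries $K$ and to the multi-point parameter $n$: a simultaneous query at $n$ inputs from $E_k$ produces outputs whose union of supports still lies in $E_{k+1}$, so the zero-respecting structure transports verbatim to deterministic $n$-point oracles.

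With this setup, \Cref{lem:output_in_ER} immediately gives $\supp{x^A_R} \subseteq E_R$ for any $A \in \aaa_{ZR}$; if $R < d$, \Cref{lem:lower_bound_gradient_norm} then forces $\norm{\nabla F(x^A_R)} \geq 1$. The remainder of the argument is a scaling step that matches the three target constants $\Delta$, $\tau$, $\epsilon$. Consider the rescaled family $\tilde F_m(x) := c \cdot F_m(x/\lambda)$ for scalars $c, \lambda > 0$. Under this rescaling, $\tilde F(0) - \inf \tilde F \leq 12\,c\,d$, the Hessian-Lipschitz constant of each $\tilde F_m$ scales as $c\,l_1/\lambda^2$ (so the second-order heterogeneity is at most $2\,c\,l_1/\lambda^2$), and whenever $\supp{x} \subseteq E_R$ with $R < d$, the scaled version of \Cref{lem:lower_bound_gradient_norm} yields $\norm{\nabla \tilde F(x)} \geq c/\lambda$.

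To match the theorem, I would enforce $c^2/\lambda^2 = \epsilon$ (gradient bound), $2\,c\,l_1/\lambda^2 \leq \tau$ (heterogeneity), and $12\,c\,d \leq \Delta$ (sub-optimality). Eliminating these gives $\lambda = 2\,l_1\sqrt{\epsilon}/\tau$, $c = 2\,l_1\,\epsilon/\tau$, and the admissible dimension $d = \lfloor \Delta\,\tau/(24\,l_1\,\epsilon) \rfloor$. Provided $\epsilon < b_2\,\tau\,\Delta$ with $b_2$ small enough to ensure $d \geq 2$, any $A \in \aaa_{ZR}$ that wants $\ee[\norm{\nabla \tilde F(x^A_R)}^2] \leq \epsilon$ must therefore use $R \geq d - 1 = \Omega(\Delta\,\tau/\epsilon)$, establishing the claim with $b_1 = \Theta(1/l_1)$.

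The main obstacle will be balancing the three constraints simultaneously through only two scaling parameters $c, \lambda$: the rates for $\Delta$, $\tau$, and $\sqrt{\epsilon}$ in $c$ and $\lambda$ are rigidly determined, and one must check that the resulting $d$ is integer-valued and at least $2$, which is precisely what the hypothesis $\epsilon < b_2\,\tau\,\Delta$ buys. A fortunate secondary point is that since $\tau \leq 2H$ is by construction the tighter constraint, enforcing the heterogeneity bound automatically keeps the smoothness within the permitted range, so no additional scaling condition on $H$ is needed.
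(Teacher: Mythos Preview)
Your proposal is correct and follows essentially the same route as the paper: rescale the Carmon--Duchi--Hinder--Sidford chain, split $F_1$ and $F_2$ across the machines, invoke \Cref{lem:output_in_ER} and \Cref{lem:lower_bound_gradient_norm}, and solve for the scaling so that the gradient, heterogeneity, and sub-optimality constraints line up. The only cosmetic difference is that the paper appends an extra coordinate $\tfrac{H}{4}x_{d+1}^2$ to both machines' objectives so that the instance is \emph{exactly} $H$-smooth rather than merely $\tau/2$-smooth; your observation that $\tau \leq 2H$ already keeps the smoothness within the allowed class is valid, so this addition is not needed for the argument to go through. (One terminological nit: what you call the ``Hessian-Lipschitz constant'' is actually the gradient-Lipschitz/smoothness constant---your computation is correct, only the label is off.)
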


\begin{proof}
Let $\Delta_0, l_1$ be the numerical constants as in Lemma \ref{lem:hard_function_facts}. Given accuracy parameter $0<\epsilon < \frac{\tau\Delta}{4\Delta_0l_1}$ we define the following functions defined on $\mathbb{R}^{d+1}\to \mathbb{R}$, $$F_1^\star(x):= \frac{\tau\lambda^2}{4l_1}F_1\rb{\frac{x_{1:d}}{\lambda}} + \frac{H}{4}x_{d+1}^2,\ F_2^\star(x):= \frac{\tau\lambda^2}{4l_1}F_2\rb{\frac{x_{1:d}}{\lambda}}+ \frac{L}{4}x_{d+1}^2\enspace,$$ where $\lambda:= \frac{4l_1}{\tau}\cdot \sqrt{\epsilon}$, and $x_{1:d}\in \mathbb{R}^{d}$ denotes $x\in\mathbb{R}^{d+1}$ restricted to the first $d$ dimensions. For $M > 2$, we place $F_1^\star$ on the first $\lfloor M/2\rfloor$ machines, $F_2^\star$ on the next $\lfloor M/2\rfloor$ machines, and if $M$ is odd, we place the zero function on the last machine. This only worsens the result by a factor of $\rb{\frac{M-1}{M}}^2$ as we'd see below, so we can assume without loss of generality that $M$ is even. We define $$F^\star(x) := \frac{F_1^\star(x) + F_2^\star(x)}{2} = \frac{\tau \lambda^2}{4l_1}F\left(\frac{x_{1:d}}{\lambda}\right) + \frac{H}{4}x_{d+1}^2\enspace,$$ as the average objective of $M$ machines. Further choosing $d = \left\lfloor \frac{\tau\Delta}{4\Delta_0l_1\epsilon}\right\rfloor \geq 1$ guarantees that (due to Lemma \ref{lem:hard_function_facts}), $$F^\star(0) - \inf_{x}F^\star(x)  = F(0) - \inf_{x\in \mathbb{R}^d}F(x)\leq \frac{\tau\lambda^2\Delta_0}{l_1}\cdot d = \frac{4l_1\epsilon\Delta_0}{\tau}\left\lfloor \frac{\tau\Delta}{4\Delta_0l_1\epsilon}\right\rfloor \leq \Delta\enspace.$$
Additionally, each of our objectives is $H$-smooth as $\tau \leq H$. The second order heterogeneity of our problem is bounded by $\tau$ as for all $x$, $$\frac{1}{2}\norm{\nabla^2F_1^{\star}(x) - \nabla^2F_2^{\star}(x)} = \frac{\tau}{8l_1}\norm{\nabla^2 F_1\rb{\frac{x}{\lambda}} - \nabla^2 F_2\rb{\frac{x}{\lambda}}} \leq \tau\enspace.$$

Thus, $F_1^\star, F_2^\star$ characterize a distributed optimization problem that satisfies all our assumptions. Now, we initialize our algorithm at $0$. Then, using Lemma \ref{lem:output_in_ER}, we know that for all $r\in[R]$, the output of the algorithm after $r$ communication rounds, i.e., $x_r\in E_r$. In particular for $r\in[d-1]$ using Lemma \ref{lem:lower_bound_gradient_norm} this implies that $$\ee\sb{\norm{\nabla F^\star(x_r)}^2} \geq \rb{\frac{\tau \lambda}{4l_1}}^2 \geq \epsilon.$$ Thus, if we want to achieve $\epsilon$-stationarity, we need to communicate at least $d-1$ times. In other words, $$R\geq d-1 \geq \frac{1}{8\Delta_0l_1}\cdot\frac{\tau \Delta}{\epsilon}\enspace.$$ This concludes the proof of the theorem with $b_1 = \frac{1}{8\Delta_0 l_1}$ and $b_2 =\frac{1}{4\Delta_0 l_1}$. 
\end{proof}

Similarly while optimizing problems that satisfy \Cref{ass:zeta} we can get the following communication lower bound.

\begin{theorem}[Communication complexity first-order]\label{thm:lb_first} Any algorithm $A \in \aaa_{zr}$ optimizing a problem instance satisying \Cref{ass:smooth_second,ass:bounded_func_subopt,ass:zeta} and with $K>0$ intermittent accesses to deterministic $n$-point oracles (cf. \Cref{def:oracle_first_multi_point}) on all the clients needs communication rounds, $$R\geq b_3\cdot\frac{H^2\zeta^2 }{\epsilon}\enspace,$$ to output $x^{A}_R$ such that $\ee[\norm{\nabla F(x^A_R)}^2] \leq \epsilon$ where $\epsilon< b_4H^2\zeta^2$ and $b_3, b_4$ are numerical constants.
\end{theorem}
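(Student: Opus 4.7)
The plan is to mirror the proof of \Cref{thm:lb_second}, reusing the same Carmon--Duchi--Hinder hard instance and the same splitting across machines, but rescaling the construction so that the first-order heterogeneity bound (rather than the second-order one) is what limits the dimension $d$. Concretely, we place the rescaled $F_1^\star$ on the first $\lfloor M/2 \rfloor$ machines and $F_2^\star$ on the next $\lfloor M/2 \rfloor$ machines (with the zero function on the last machine if $M$ is odd, contributing only an $((M-1)/M)^2$ factor to the constants, as in \Cref{thm:lb_second}). We will then invoke \Cref{lem:output_in_ER} and \Cref{lem:lower_bound_gradient_norm} verbatim to conclude that with fewer than $d-1$ communication rounds the output cannot be $\epsilon$-stationary.

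For scaling parameters $c_1, \lambda > 0$ to be fixed, define
\[
F_m^\star(x) := c_1 F_m(x_{1:d}/\lambda) + \tfrac{H}{4} x_{d+1}^2, \qquad m \in \{1,2\}.
\]
Four constraints must be balanced simultaneously. First, by \Cref{lem:lower_bound_gradient_norm}, any iterate supported in $E_r$ with $r < d$ satisfies $\|\nabla F^\star(x_r)\| \geq c_1/\lambda$, so we require $c_1/\lambda \geq \sqrt{\epsilon}$. Second, the first-order heterogeneity is $\|\nabla F_1^\star - \nabla F_2^\star\| \leq (c_1/\lambda)\,\|\nabla F_1 - \nabla F_2\| \leq 46\sqrt{d}\,(c_1/\lambda)$ by \Cref{lem:hard_function_facts}, so \Cref{ass:zeta} forces $c_1/\lambda \leq H\zeta/(46\sqrt{d})$. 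Third, the spectral norm of each $\nabla^2 F_m^\star$ is at most $c_1 l_1/\lambda^2 + H/2$, so $H$-smoothness demands $c_1 l_1/\lambda^2 \leq H/2$. Fourth, $F^\star(0) - \inf F^\star \leq c_1 \Delta_0 d$, so the sub-optimality budget $\Delta$ requires $c_1 \Delta_0 d \leq \Delta$. Setting $c_1/\lambda = \sqrt{\epsilon}$ (saturating the gradient and heterogeneity bounds), $\lambda = 2\sqrt{\epsilon}\,l_1/H$ (saturating smoothness), and $d = \lfloor H^2\zeta^2/(46^2\epsilon) \rfloor$ simultaneously meets the first three constraints whenever $\epsilon \leq H^2\zeta^2/46^2 =: b_4 H^2\zeta^2$, and the sub-optimality bound then reduces to $c_1 \Delta_0 d \leq l_1\Delta_0 H\zeta^2/46^2 \leq \Delta$ under the implicit regime $H\zeta^2 \lesssim \Delta$; in the complementary regime the centralized lower bound $\Delta H/\epsilon$ from \Cref{thm:lb_cent} already dominates and the $\min$ in the full \Cref{thm:lb} absorbs that case.

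Given this construction, \Cref{lem:output_in_ER} guarantees that after $R$ rounds the output of any $A \in \aaa_{ZR}$ has support in $E_R$. If $R < d-1$ then $\mathbb{E}[\|\nabla F^\star(x_R)\|^2] \geq (c_1/\lambda)^2 = \epsilon$, so reaching $\epsilon$-stationarity forces $R \geq d - 1 \geq b_3\, H^2\zeta^2/\epsilon$ for the numerical constant $b_3 := 1/(2\cdot 46^2)$. The main obstacle is getting the four coupled constraints (gradient lower bound, first-order heterogeneity, smoothness, and sub-optimality budget) to be jointly satisfied while keeping first-order heterogeneity the binding one; this naturally picks out the regime $H\zeta^2 \lesssim \Delta$, which is exactly the regime where the $H^2\zeta^2/\epsilon$ term is the active one in \Cref{thm:lb}. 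The auxiliary scalar quadratic $\tfrac{H}{4} x_{d+1}^2$ and the handling of odd $M$ play the same cosmetic role here as they do in \Cref{thm:lb_second}, and do not affect the rate.
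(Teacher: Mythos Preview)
Your proof is correct and follows the same approach as the paper: rescale the Carmon--Duchi--Hinder chain, split it across machines, and invoke \Cref{lem:output_in_ER} and \Cref{lem:lower_bound_gradient_norm}. The only real difference is in which scaling parameter you saturate. The paper takes $c_1 = H^2\zeta^2\lambda^2/(\Delta l_1)$ with $\lambda = \Delta l_1\sqrt{\epsilon}/(H^2\zeta^2)$ and drops the auxiliary quadratic in $x_{d+1}$; this makes the sub-optimality budget exactly $\Delta$ but gives smoothness constant $H^2\zeta^2/\Delta$, so the regime assumption $H\zeta^2 \leq \Delta$ enters via the smoothness check rather than via the sub-optimality check as in your version. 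Both scalings give the same $d \asymp H^2\zeta^2/\epsilon$ and hence the same bound, and your explicit discussion of why the regime $H\zeta^2 \lesssim \Delta$ is the relevant one (via the $\min$ in \Cref{thm:lb}) is clearer than the paper's one-line ``each of our objectives is $L$-smooth, as $H^2\zeta^2/\Delta \leq H$.''
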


\begin{proof}
Let $\Delta_0, l_1$ be the numerical constants as in Lemma \ref{lem:hard_function_facts}. Given accuracy parameter $0< \epsilon < \frac{H^2\zeta^2}{\Delta_0 l_1}$ we define the following functions, $$F_1^\star(x):= \frac{H^2\zeta^2\lambda^2}{\Delta l_1}F_1\rb{\frac{x}{\lambda}},\ F_2^\star(x):= \frac{H^2\zeta^2\lambda^2}{\Delta l_1}F_2\rb{\frac{x}{\lambda}}\enspace,$$ where $\lambda:= \frac{\Delta l_1}{H^2\zeta^2}\cdot \sqrt{\epsilon}$. For $M > 2$, we place $F_1^\star$ on the first $\lfloor M/2\rfloor$ machines, $F_2^\star$ on the next $\lfloor M/2\rfloor$ machines, and if $M$ is odd, we place the zero function on the last machine. This only worsens the result by a factor of $\rb{\frac{M-1}{M}}^2$ as we'd see below, so we can assume without loss of generality that $M$ is even. We define $$F^\star(x) := \frac{F_1^\star(x) + F_2^\star(x)}{2}\enspace,$$ as the average objective of $M$ machines. Further choosing $d = \left\lfloor \frac{e^cH^2\zeta^2}{\Delta_0l_1\epsilon}\right\rfloor \geq 1$ guarantees that (due to Lemma \ref{lem:hard_function_facts}), $$F^\star(0) - \inf_{x}F^\star(x) \leq \frac{H^2\zeta^2\lambda^2\Delta_0}{\Delta l_1}\cdot d = \frac{\Delta l_1\epsilon\Delta_0}{H^2\zeta^2}\left\lfloor \frac{H^2\zeta^2}{\Delta_0l_1\epsilon}\right\rfloor \leq \Delta\enspace.$$
Additionally, each of our objectives is $ L$-smooth, as $H^2\zeta^2/\Delta \leq H$. The first order heterogeneity of our problem is bounded by $H^2\zeta^2$ as for all $x$ (upto numerical constants),
\begin{align*}
\frac{1}{M}\sum_{m\in[M]}\norm{\nabla F_m(x) - F(x)}^2 &= \frac{1}{2}\norm{\nabla F_1^{\star}(x) - \nabla F_2^{\star}(x)}^2,\\
   &= \frac{\epsilon}{2}\norm{\nabla F_1\rb{\frac{x}{\lambda}} - \nabla  F_2\rb{\frac{x}{\lambda}}}^2\enspace,\\
   &\leq (23)^2\epsilon d\enspace,\\
   &= (23)^2\epsilon\left\lfloor \frac{H^2\zeta^2}{\Delta_0l_1\epsilon}\right\rfloor\enspace,\\
   &\leq \frac{(23)^2}{\Delta_0l_1} \cdot H^2\zeta^2 \leq H^2\zeta^2\enspace,  
\end{align*}
where the last step follows from noting that $\Delta_0 = 12, l_1=152$.

Thus, $F_1^\star, F_2^\star$ characterize a distributed optimization problem that satisfies all our assumptions. Now, we initialize our algorithm at $0$. Then, using Lemma \ref{lem:output_in_ER}, we know that for all $r\in[R]$, the output of the algorithm after $r$ communication rounds, i.e., $x_r\in E_r$. In particular for $r\in[d-1]$ using Lemma \ref{lem:lower_bound_gradient_norm} this implies that $$\ee\sb{\norm{\nabla F^\star(x_r)}^2} \geq \rb{\frac{H^2\zeta^2 \lambda}{\Delta l_1}}^2 \geq \epsilon\enspace.$$ Thus if we want to achieve, $\epsilon$-stationarity we need to communicate at least $d-1$ times. In other words, $$R\geq d-1 \geq \frac{1}{2\Delta_0l_1}\cdot\frac{H^2\zeta^2}{\epsilon}\enspace.$$ 

This concludes the proof of the theorem with $b_3 = \frac{1}{2\Delta_0 l_1}$ and $b_4 =\frac{1}{\Delta_0 l_1}$.
\end{proof}

Note that Theorems \ref{thm:lb_first} and \ref{thm:lb_second} imply a non-trivial lower bound even if the clients are allowed infinite oracle accesses between two communication rounds, i.e., $K\to \infty$ in the intermittent communication setting. Next, we combine these results with known first-order oracle complexity lower bounds to get the stated theorem statements. We begin by first re-stating theorem \ref{thm:lb}.

\begin{theorem}[General Lower Bound]\label{thm:general_lb} Any algorithm $A \in \aaa_{zr}$ optimizing a problem instance satisying \Cref{ass:smooth_second,ass:bounded_func_subopt,ass:tau,ass:zeta} and with $K>0$ intermittent accesses to stochastic $2$-point oracles (cf. \Cref{def:oracle_first_multi_point}) satisfying \Cref{ass:stoch_bounded_second_moment}, outputs $x^A_R$ after $R\geq c_2$ rounds such that,
$$\ee\left[\norm{\nabla F(x^A_R)}^2\right]\geq c_1\cdot\rb{  \min\left\{\frac{H^2\zeta^2}{R}, \frac{\Delta \tau}{R}\right\} + \frac{\Delta H}{KR} + \frac{\sigma_2^2}{MKR} + \left(\frac{\sigma_2 \Delta H}{MKR}\right)^{2/3}}\enspace,$$
where $c_1, c_2$ are numerical constants.
\end{theorem}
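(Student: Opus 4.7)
The plan is to combine three already-available lower bounds by placing hard instances on disjoint coordinate blocks, so that a distributed zero-respecting algorithm must simultaneously overcome each obstruction.

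First, the communication terms $\min\{H^2\zeta^2/R,\ \Delta\tau/R\}$ are already established in \Cref{thm:lb_first} and \Cref{thm:lb_second} respectively. These constructions use deterministic two-point oracles, split the Carmon--Hinder--Duchi--Sidford hard instance between machines, and exploit \Cref{lem:output_in_ER,lem:lower_bound_gradient_norm} to force $R = \Omega(d)$ rounds of communication before the iterate can leave a low-dimensional subspace on which $\|\nabla F\|^2$ is bounded below. Because the lower bound should reflect whichever heterogeneity assumption happens to be tighter on the instance, we take the minimum of the two.

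Second, for the three remaining terms $\Delta H/(KR)$, $\sigma_2^2/(MKR)$, and $(\sigma_2\Delta H/(MKR))^{2/3}$, we move to the homogeneous regime where $\zeta=0$ and $\tau=0$, so the heterogeneity-dependent construction cannot be used against the algorithm. In this regime the problem reduces to serial stochastic non-convex optimization with $MKR$ total oracle queries: the $M$ machines together query $MK$ (conditionally independent) stochastic gradients per communication round, and a zero-respecting distributed algorithm in this setting is no more powerful than a serial one with access to $MKR$ mean-squared-smooth stochastic gradients. Invoking the serial lower bounds of \citet{arjevani2019lower} for mean-squared-smooth stochastic non-convex optimization yields $\Omega(\sigma_2^2/(MKR) + (\sigma_2\Delta H/(MKR))^{2/3})$, and the classical deterministic lower bound of \citet{carmon2020lower}, applied with $MKR$ exact gradient queries across $R$ rounds gives $\Omega(\Delta H/(KR))$ once one notes that the information-theoretic lower bound is driven by the number of times the algorithm can extend its active coordinate subspace, which is at most $KR$ in the worst case.

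Third, we combine these independent lower bounds on the same instance by the standard direct-sum trick: place the heterogeneous hard instance (from the proof of whichever of \Cref{thm:lb_first} and \Cref{thm:lb_second} is tighter) on one block of coordinates, and place the homogeneous stochastic hard instance on a disjoint block; scale the two blocks so that each contributes the claimed sub-optimality independently. A zero-respecting algorithm's iterates decouple across the blocks, so the total squared gradient norm is the sum of the two block-wise squared gradient norms, and we retain the maximum of the individual lower bounds -- up to an absolute constant this is the same as their sum, giving the claimed four-term bound.

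The main obstacle is the third step, and in particular ensuring that the combined instance still lies in the problem class: we must verify that the sum of the blockwise objectives still satisfies \Cref{ass:smooth_second,ass:bounded_func_subopt,ass:tau,ass:zeta} with the \emph{same} parameters $H,\Delta,\tau,\zeta$ rather than with parameters that have been inflated by the combination. This is done by rescaling each block's copy of the hard instance (adjusting the $\lambda$ and dimension parameters in the proofs of \Cref{thm:lb_first,thm:lb_second}) so that each block individually uses only a constant fraction of the smoothness, sub-optimality, heterogeneity, and variance budget; then the combined instance remains in the class and each block contributes its lower bound up to a constant factor. The condition $R \geq c_2$ is needed so that the communication-complexity construction has $d \geq 1$ usable coordinates and so that the $(\sigma_2\Delta H/(MKR))^{2/3}$ term from \citet{arjevani2019lower} is in the regime where it is the binding constraint.
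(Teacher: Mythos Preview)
Your proposal correctly identifies the three ingredients and their origins: the heterogeneity-based communication bounds (\Cref{thm:lb_first,thm:lb_second}), Carmon et al.'s deterministic bound for $\Delta H/(KR)$, and Arjevani et al.'s stochastic bound for the last two terms. The paper proceeds in exactly this way.

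The one real difference is your step 3. The direct-sum combination on disjoint coordinate blocks is valid but unnecessary, and the paper does not do it. Since the theorem is a min-max lower bound, it suffices to exhibit a \emph{separate} hard instance for each of the four terms: for any fixed algorithm $A$, the worst-case instance among these forces error at least $\max_i t_i$, which is within a factor of four of $\sum_i t_i$. There is no need to glue the instances together, and hence no need to verify that a combined instance stays in the class with the original parameters --- which you correctly flag as the main obstacle of your own approach. Dropping step 3 removes that obstacle entirely.

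A minor imprecision in your second paragraph: for the $\Delta H/(KR)$ term, the paper's argument is cleaner than ``at most $KR$ subspace extensions.'' With the same exact-oracle function placed on every machine, the deterministic zero-respecting rule (condition 2 in \Cref{def:zero_respecting}) forces all $M$ machines to produce identical iterates, so the $M$ machines collapse to one machine making $2K$ single-point queries per round; Carmon et al.'s serial bound then gives $R \geq \Omega(\Delta H/(K\epsilon))$ directly. Your subspace-extension reasoning reaches the same conclusion, but the machine-collapse argument is what the paper actually uses.
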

\begin{proof}
Note that using Theorems \ref{thm:lb_first} and \ref{thm:lb_second} we've proven that, the communication complexity is lower bounded by $\min\left\{\frac{\Delta \tau}{\epsilon}, \frac{H^2\zeta^2}{\epsilon}\right\}$ when $\tau/2, H62\zeta^2/\Delta \leq H$ and $c_2\cdot \epsilon \leq \cdot\min\{\tau \Delta, H^2\zeta^2\}$ (where $1/c_2$ is the maximum of the numerical constants appearing in \ref{thm:lb_first} and \ref{thm:lb_second}). This implies the first two terms in the lower bound for $R\geq c_1$.

To obtain the second term, we apply the function $F$ to all the machines and equip them with exact oracles, i.e., $\sigma = 0$. Since the oracle is queried at the same input on all the machines, as well as returns the same fixed output, the $M$ machines can be simulated by a single machine. Furthermore, a single query to a two-point oracle at two different points $v, w\in \rr^d$ is equivalent to querying the single point oracle two times at $v,w$. Thus, we can implement any algorithm $A\in \aaa_{ZR}^{cent}$ which requires $K$ total intermittent accesses to a two-point oracle for all $m\in[M]$, by instead considering a single machine with $2K$ intermittent accesses to a single-point oracle (cf. \Cref{def:oracle_first_multi_point}). According to Carmon et al., the latter problem requires at least $\Delta H/\epsilon$ oracle calls, which implies that our parallel problem requires at least $\Delta H/(K\epsilon)$ communication rounds. This gives the second term.

Finally, due to \citet{arjevani2019lower}, any zero respecting algorithm optimizing $F$ requires at least $\sigma_2^2/\epsilon + \sigma_2 \Delta H/\epsilon^{3/2}$ stochastic oracle calls to an active oracle (i.e., an oracle which takes as input both the query point and the random seed, c.f., Section 5.2 in \citet{arjevani2019lower}) which is strictly more powerful than the two-point oracle in \Cref{def:oracle_first_multi_point}. Thus, if we put $F_m=F$ on all machines, and give each machine active oracles, then the oracle queries must be lower bounded by $2MKR \geq \sigma_2^2/\epsilon + \sigma_2 \Delta H/\epsilon^{3/2}$. This, in turn, proves a lower bound on the queries to the weaker two-point oracles and proves the final two terms. 

We choose $c_1$ as the minimum of the numerical constants coming from Theorems \ref{thm:lb_first}, \ref{thm:lb_second}, \citet{carmon2020lower} and  \citet{arjevani2019lower}.
\end{proof}

Similarly, we can prove Theorem \ref{thm:lb_cent}.

\begin{theorem}[Centralized Lower Bound]
Any algorithm $A \in \aaa_{zr}^{cent}$ optimizing a problem instance satisying \Cref{ass:smooth_second,ass:bounded_func_subopt,ass:tau,ass:zeta} and with $K>0$ intermittent accesses to stochastic $2$-point oracles (cf. \Cref{def:oracle_first_multi_point}) satisfying \Cref{ass:stoch_bounded_second_moment}, over $R\geq c_1$ communication rounds must output $x^{A}_R$ such that $$\ee\left[\norm{\nabla F(x^A_R)}^2\right] \geq c_2\cdot \left(\frac{\Delta H}{R} + \frac{\sigma_2^2}{MKR} + \left(\frac{\sigma_2 \Delta H}{MKR}\right)^{2/3}\right)\enspace,$$ where $c_1, c_2$ are numerical constants. 
\end{theorem}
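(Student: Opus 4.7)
The plan is to reduce the distributed centralized problem to its serial (single-machine) counterpart and then invoke known serial non-convex lower bounds. The key observation is that for any $A \in \aaa_{ZR}^{cent}$, by definition every machine is queried at the same input within a given communication round. So even in the homogeneous setting $F_m \equiv F$ for all $m$---which trivially satisfies \Cref{ass:tau,ass:zeta} with $\tau = 0$ and $\zeta = 0$---the algorithm enjoys no collaborative advantage that a serial algorithm would not also enjoy. I would therefore place a common hard instance on all $M$ machines and reason about the \emph{effective} number of oracle calls that an $R$-round run produces.

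For the first term $\Delta H/R$, I would use deterministic oracles (i.e., $\sigma_2 = 0$) and apply the construction/argument of \citet{carmon2020lower}. Because the oracle is noiseless and queried at identical points across all $M$ machines, the $MK$ oracle calls in a round collapse into at most two distinct gradient evaluations (the ``two-point'' feature of \Cref{def:oracle_first_multi_point} being the only source of query diversity). Hence $R$ communication rounds simulate at most $2R$ serial queries to a noiseless first-order oracle, and the Carmon--Duchi--Hinder--Sidford lower bound yields $\ee[\|\nabla F(x^A_R)\|^2] \gtrsim \Delta H / R$.

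For the remaining two terms, I would put $F_m = F$ on all machines but now equip them with stochastic two-point oracles satisfying \Cref{ass:stoch_bounded_second_moment}. Note that a stochastic two-point oracle is strictly weaker than the \emph{active} oracle model of \citet{arjevani2019lower}, in which the learner controls both the query point and the random seed; hence any lower bound for active oracles transfers to our setting. Counting total oracle accesses: $M$ machines $\times$ $K$ queries per round $\times$ $R$ rounds $\times$ $2$ points per query $= 2MKR$ calls. Applying the serial stochastic lower bound of \citet{arjevani2019lower}, which says $\Omega(\sigma_2^2/\epsilon + \sigma_2 \Delta H / \epsilon^{3/2})$ calls are needed for $\epsilon$-stationarity, and inverting for $\epsilon$ gives $\ee[\|\nabla F(x^A_R)\|^2] \gtrsim \sigma_2^2/(MKR) + (\sigma_2 \Delta H/(MKR))^{2/3}$. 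Combining the three constructions by placing them on disjoint coordinates (a standard direct-sum trick, as used for Theorem \ref{thm:general_lb} and in \citet{arjevani2015communication, woodworth2020minibatch}) produces the full bound.

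The main bookkeeping obstacle is justifying that these serial lower bounds apply verbatim to zero-respecting distributed algorithms in $\aaa_{ZR}^{cent}$. This requires showing that an algorithm in $\aaa_{ZR}^{cent}$ can be simulated by a serial zero-respecting algorithm that makes the stated number of active-oracle calls—which holds since all machines share inputs, the ``centralized'' restriction means the queried point each round is a deterministic function of shared history, and the active oracle can reproduce any weaker two-point oracle by internally generating $M$ i.i.d.\ random seeds. Once this simulation is in place, the three lower bounds apply directly, yielding the claimed rate with $c_{10}$ taken as the minimum of the constants from the underlying serial results.
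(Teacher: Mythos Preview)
Your proposal is correct and follows essentially the same route as the paper: reduce to the homogeneous case $F_m\equiv F$ (so that \Cref{ass:tau,ass:zeta} hold trivially), observe that a centralized algorithm with deterministic two-point oracles collapses to at most $2R$ serial first-order queries and invoke \citet{carmon2020lower} for the $\Delta H/R$ term, then count $2MKR$ stochastic oracle accesses and invoke \citet{arjevani2019lower} (noting that the two-point oracle is weaker than their active oracle) for the remaining two terms. The only cosmetic difference is that you propose combining the constructions on disjoint coordinates, whereas the paper simply establishes each term on a separate instance and takes the minimum of the resulting constants; since a sum of three nonnegative terms is at most three times the maximum, either device suffices.
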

\begin{proof}
The last two oracle complexity terms follow the same way as in Theorem \ref{thm:lb} due to \citet{arjevani2019lower}. We only need to show how to get the higher first term. For this, we use the argument in  \citet{carmon2020lower}. We apply the function $F$ to all the machines and equip them with exact oracles, i.e., $\sigma_2=0$; moreover, since this is a homogeneous problem, $\tau, \zeta=0$ for this distributed problem. Furthermore, since the oracle is queried with the same input on all the machines and returns the same fixed output, the $M$ machines can be simulated by a single machine with only one intermittent access. A single query to the two-point oracle at two different points $v, w\in \rr^d$ is equivalent to querying the single-point oracle two times at $v,w$. According to \citet{carmon2020lower}, the latter problem requires at least $\Delta H/\epsilon$ oracle calls, implying that our parallel problem requires at least $\Delta H/\epsilon$ communication rounds. This gives the first term of the lower bound.    
\end{proof}

\section{Proof of Theorem \ref{thm:alg1}}\label{sec:app_proof_alg1}
In this section, we provide the full statement of Theorem \ref{thm:alg1} and its corresponding proofs.  More specifically, we choose the input $T=K$ in Algorithm \ref{alg:fed_vr} and present the results accordingly. We first present the full statement of Theorem \ref{thm:alg1}.
\begin{theorem}\label{thm:alg1_full}
Suppose we have a problem instance satisying \Cref{ass:smooth_second,ass:bounded_func_subopt,ass:tau} then, 
\begin{itemize}[leftmargin=1em]
    \item[(a)] if each client $m\in[M]$ has a stochastic two-point oracle (cf. \Cref{def:oracle_first_multi_point}), and assuming $\frac{\Delta H}{R} \leq \frac{\sigma_2^2}{\sqrt{MKb}}$, then the output $\tilde x$ of Algorithm \ref{alg:fed_vr} using
    \begin{align*}
        \beta=\max\left\{\frac{1}{R}, \frac{(\Delta H)^{2/3}(MKb)^{1/3}}{\sigma^{4/3}R^{2/3}}\right\} ,b_0=KR,
        \eta=c_1\cdot\min\left\{\frac{1}{H}, \frac{1}{K\tau},\frac{1}{\sqrt{K}H}, \frac{(\beta MK)^{1/2}}{HK}\right\}\enspace,
    \end{align*}
    satisfies the following
$$\ee\|\nabla F(\tilde x)\|^2\leq c_2\cdot\bigg( \frac{\Delta \tau}{R} +\frac{\Delta H}{KR}+\frac{\Delta H}{R\sqrt{Kb}}+\left(\frac{\sigma_2\Delta H}{MKbR}\right)^{2/3}+\frac{\sigma_2^2}{MKbR}\bigg)\enspace.$$
    \item[(b)] if each client  $m\in[M]$ has a deterministic two-point oracle ($\sigma_2=0$ in \Cref{def:oracle_first_multi_point}), then the output $\tilde x$ of Algorithm \ref{alg:fed_vr} using $\beta=1$ and $\eta=\min\left\{\frac{1}{H},\frac{1}{K\tau}\right\}$ satisfies,
$$\ee\|\nabla F(\tilde x)\|^2\leq c_3\cdot\left(\frac{\Delta\tau}{R} + \frac{\Delta H }{KR}\right)\enspace,$$
\end{itemize}
where $c_1,c_2,c_3$ are numerical constants.

In addition, if we have $\epsilon^{1/2}\leq \sigma_2\tau/(HM)$, $\epsilon\sigma_2^2\leq (\Delta H)^2$, and $M\epsilon^{1/2}\leq \min\{\sigma_2,\sigma_2^3/(H\Delta)\}$, then Algorithm \ref{alg:fed_vr} using $K=\sigma_2 H/(M\tau\epsilon^{1/2})$, $b_0=\sigma_2^3/(H\Delta M\epsilon^{1/2})$, $\beta=H\epsilon^{1/2}/(\sigma_2\tau)$ can achieve the $\epsilon$-approximate stationary point with the following communication and gradient complexities
\begin{align*}
    R\leq c_4\frac{\Delta \tau}{\epsilon}~\text{and}~N\leq c_5\frac{\Delta H\sigma_2}{\epsilon^{3/2}}\enspace,
\end{align*}
where $c_4,c_5$ are numerical constants.
\end{theorem}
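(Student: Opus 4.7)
The plan is to prove both parts of the convergence result via a tightly coupled variance-reduction analysis, and then derive the corollary on complexities by plugging in the target accuracy and re-balancing the parameters. The overall architecture mirrors STORM/SARAH-type analyses but with two interacting variance-reduced sequences: the server-level STORM estimator $v_r$ produced at the beginning of each round, and the SARAH-style local estimator $v_{r,k}^{\tilde m_r}$ used inside the local loop on the randomly sampled machine $\tilde m_r$. I will first bound the estimation errors of these two sequences, then feed them into a smooth non-convex descent lemma, and finally tune $\eta,\beta,b_0,P$ to balance the resulting terms.

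First I would prove a STORM-type bound on the server estimator of the form
\[
\mathbb{E}\|v_r-\nabla F(x_r)\|^2 \;\le\; (1-\beta)^2\,\mathbb{E}\|v_{r-1}-\nabla F(x_{r-1})\|^2 + \frac{2\beta^2 \sigma_2^2}{MB} + \frac{2(1-\beta)^2 H^2}{MB}\,\mathbb{E}\|x_r-x_{r-1}\|^2,
\]
using mean-smoothness from \Cref{def:oracle_first_multi_point} to handle the correction term $\nabla F_{m,\mathcal B_r^m}(x_r)-\nabla F_{m,\mathcal B_r^m}(x_{r-1})$, and exploit the warm-up batch $b_0=KR$ at $r=0$ to make the initial error negligible. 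Next, for the local SARAH iterations, I would show inductively that
\[
\mathbb{E}\|v_{r,k}^{\tilde m_r}-\nabla F_{\tilde m_r}(w_{r+1,k})\|^2 \;\le\; \mathbb{E}\|v_r-\nabla F_{\tilde m_r}(x_r)\|^2 + \frac{H^2}{b}\sum_{j=1}^{k}\mathbb{E}\|w_{r+1,j}-w_{r+1,j-1}\|^2.
\]
Averaging the first term over the uniform draw $\tilde m_r\sim\mathrm{Unif}([M])$ decomposes it into $\mathbb{E}\|v_r-\nabla F(x_r)\|^2$ plus an inter-machine disagreement $\frac{1}{M}\sum_m\|\nabla F(x_r)-\nabla F_m(x_r)\|^2$; I will control the latter using \Cref{ass:tau} by a Taylor expansion around $x_r-x_{r-1}$ (so the heterogeneity cost scales with $\tau^2\|x_r-x_{r-1}\|^2$), since the gradient differences at a moving point can be rewritten using the $\tau$-Lipschitz Hessian difference. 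This is the step I expect to be most delicate: the whole benefit of $\tau$ vs.\ $H$ hinges on having no extra $\tau^2\|x_r\|^2$ term, so I have to keep the bound expressed in terms of successive differences $\|x_r-x_{r-1}\|^2$ that can later be absorbed into descent.

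Once both estimator errors are bounded, I would invoke the $H$-smooth descent inequality along each local step,
\[
F(w_{r+1,k+1}) \le F(w_{r+1,k}) - \tfrac{\eta}{2}\|\nabla F(w_{r+1,k})\|^2 + \tfrac{\eta}{2}\|v_{r,k}^{\tilde m_r}-\nabla F(w_{r+1,k})\|^2 - \tfrac{\eta}{2}(1-\eta H)\|v_{r,k}^{\tilde m_r}\|^2,
\]
sum over $k=1,\dots,K$ and over rounds $r=0,\dots,R-1$, take expectation over $\tilde m_r$, and plug in the two estimator bounds. The key is that the $\|w_{r+1,j}-w_{r+1,j-1}\|^2=\eta^2\|v_{r,j-1}\|^2$ drift terms as well as $\|x_r-x_{r-1}\|^2\le \eta^2 K\sum_k \|v_{r-1,k}\|^2$ can both be absorbed into the negative $\|v_{r,k}\|^2$ descent terms provided $\eta\le c\min\{1/H, 1/(K\tau), 1/(\sqrt{K}H), \sqrt{\beta MK}/(HK)\}$ — this is exactly the step-size condition in the theorem. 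Telescoping then yields
\[
\tfrac{1}{KR}\sum_{r,k}\mathbb{E}\|\nabla F(w_{r+1,k})\|^2 \;\lesssim\; \frac{\Delta}{\eta KR} + \frac{\tau^2\eta^2 K\cdot\text{(drift)}}{}+\text{variance residual}.
\]
Substituting $\eta$'s minimum definition distributes the leading term into four pieces exactly matching $\Delta\tau/R + \Delta H/(KR) + \Delta H/(R\sqrt{Kb}) + $ variance. For part~(b), setting $\sigma_2=0$ kills $B,b$ and $\beta=1$ collapses the STORM recursion to zero, so the bound reduces to $\Delta\tau/R + \Delta H/(KR)$. For part~(a), the STORM residual contributes $\beta\sigma_2^2/(MB) + \sigma_2^2/(MKb)$-type terms, and optimizing $\beta=\max\{1/R, (\Delta H)^{2/3}(MKb)^{1/3}/(\sigma_2^{4/3}R^{2/3})\}$ minimizes their sum, yielding the $(\sigma_2\Delta H/(MKbR))^{2/3}$ term.

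Finally, for the corollary on $R$ and $N$: given target $\epsilon$, I set each of the five terms in part~(a) to $\le\epsilon$ and solve. The dominant constraint under the stated regime $\epsilon^{1/2}\le \sigma_2\tau/(HM)$ is $\Delta\tau/R\le \epsilon$, giving $R\le c_4\Delta\tau/\epsilon$; choosing $K=\sigma_2 H/(M\tau\epsilon^{1/2})$ matches the third and fourth terms to the first, and $b_0=\sigma_2^3/(HM\Delta\epsilon^{1/2})$ ensures the warm-up contribution is absorbed. The total gradient budget is $N=2MKR+b_0R+P b R$; the assumed inequalities $\epsilon\sigma_2^2\le (\Delta H)^2$ and $M\epsilon^{1/2}\le\min\{\sigma_2,\sigma_2^3/(H\Delta)\}$ ensure the $MKR$ term dominates, producing $N\le c_5\Delta H\sigma_2/\epsilon^{3/2}$. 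The main obstacle throughout is juggling the four coupled sources of error — STORM stale gradient, SARAH local drift, machine-sampling variance, and heterogeneity drift — so that only $\tau$ and not $H$ appears in the communication-controlling term, which is what makes the $\tau$-dependent improvement over mini-batch methods visible.
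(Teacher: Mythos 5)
Your overall architecture — STORM recursion for the server estimator, SARAH-style recursion for the local estimator, smooth descent lemma, absorption of the drift terms via the four-way step-size minimum, and the final tuning of $\beta$, $b_0$, $K$ for the complexity corollary — matches the paper's proof closely, and those parts are sound. However, there is a genuine gap in your treatment of heterogeneity in the local estimator error, precisely at the step you flag as delicate. You propose to bound $\mathbb{E}\|v_{r,k}^{\tilde m_r}-\nabla F_{\tilde m_r}(w_{r+1,k})\|^2$ (error against the \emph{local} gradient) and then pay an inter-machine disagreement term $\frac{1}{M}\sum_m\|\nabla F(x_r)-\nabla F_m(x_r)\|^2$, which you hope to control via \Cref{ass:tau} and a Taylor expansion in $x_r-x_{r-1}$. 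This cannot work: \Cref{ass:tau} only bounds \emph{Hessian} differences, so a Taylor expansion of $\nabla F_m(x_r)-\nabla F(x_r)$ around $x_{r-1}$ leaves behind the base term $\nabla F_m(x_{r-1})-\nabla F(x_{r-1})$, and telescoping back to initialization never produces an anchor point where the local gradients agree. Under the theorem's hypotheses (\Cref{ass:smooth_second,ass:bounded_func_subopt,ass:tau}, with no first-order heterogeneity bound such as \Cref{ass:zeta}) the pointwise gradient disagreement $\frac1M\sum_m\|\nabla F_m(x_r)-\nabla F(x_r)\|^2$ is simply unbounded, so your decomposition cannot be closed. Note also that $v_{r,0}^{\tilde m_r}=v_r\approx\nabla F(x_r)$, not $\nabla F_{\tilde m_r}(x_r)$, so even initializing your local recursion already incurs this uncontrollable term; and the descent lemma ultimately needs the error against $\nabla F$ anyway.

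The fix, which is what the paper does, is to never measure the local estimator against the local gradient at all: track $\mathbb{E}\|v_{r,k}^j-\nabla F(w_{r+1,k}^j)\|^2$ directly and decompose the one-step increment as (previous error) $+$ (mini-batch noise, handled by mean-smoothness) $+$ $\bigl(\nabla F_j(w_{r+1,k}^j)-\nabla F_j(w_{r+1,k-1}^j)+\nabla F(w_{r+1,k-1}^j)-\nabla F(w_{r+1,k}^j)\bigr)$. That last bracket is a difference of the map $\nabla F_j-\nabla F$ evaluated at two successive local iterates; since $F_j-F$ has $\tau$-bounded Hessian by \Cref{ass:tau}, this is at most $\tau\|w_{r+1,k}^j-w_{r+1,k-1}^j\|$. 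Heterogeneity therefore only ever enters through \emph{differences} along the trajectory, each absorbable by the $\eta\le 1/(K\tau)$ constraint, and the absolute disagreement at any point never appears. With that substitution the rest of your argument goes through as written.
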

\begin{proof}[Proof of Theorem \ref{thm:alg1_full} and Three Regimes in Figure \ref{fig:3region}]
In the following proof, we assume that each client can use a mini-batch gradient with a batch size of $b$, which allows us to obtain a more general result.
First, we will bound the term $\|w_{r+1,k}^j-x_r\|^2$ for each client at local updates. Let's consider the local updates for client $j$. For $k>1$, we have
\begin{align*}
   \|w_{r+1,k}^j-x_r\|^2&=\|w_{r+1,k-1}^j-\eta v_{r,k-1}^j-x_r\|^2\\
   &\leq \bigg(1+\frac{1}{K}\bigg)\|w_{r+1,k-1}^j-x_r\|^2+(1+K)\eta^2\|v_{r,k-1}^j\|^2\enspace,\\
   &\leq \bigg(1+\frac{1}{K}\bigg)\|w_{r+1,k-1}^j-x_r\|^2+2(1+K)\eta^2\|v_{r,k-1}^j-\nabla F(w_{r+1,k-1}^j)\|^2\\
   &\qquad+2(1+K)\eta^2\|\nabla F(w_{r+1,k-1}^j)\|^2\enspace.
\end{align*}
Therefore, recursively using the above inequality and the fact that $w_{r+1,1}^j=x_r$, we can obtain
\begin{align}\label{fp_eq3}
    \|w_{r+1,k}^j-x_r\|^2&\leq 2(1+K)\eta^2\sum_{l=2}^k\bigg(1+\frac{1}{K}\bigg)^{k-l}\|v_{r,l-1}^j-\nabla F(w_{r+1,l-1}^j)\|^2\nonumber\\
    &\qquad+2(1+K)\eta^2\sum_{l=2}^k\bigg(1+\frac{1}{K}\bigg)^{k-l}\|\nabla F(w_{r+1,l-1}^j)\|^2\nonumber\enspace,\\
    &\leq 2e(1+K)\eta^2\sum_{k=2}^K\|v_{r,k-1}^j-\nabla F(w_{r+1,k-1}^j)\|^2+2e(1+K)\eta^2\sum_{k=2}^K\|\nabla F(w_{r+1,k-1}^j)\|^2\nonumber\enspace,\\
    &=2e(1+K)\eta^2\sum_{k=1}^{K-1}\|v_{r,k}^j-\nabla F(w_{r+1,k}^j)\|^2+2e(1+K)\eta^2\sum_{k=1}^{K-1}\|\nabla F(w_{r+1,k}^j)\|^2\enspace.
\end{align}
Next, we will bound the estimation error between the local gradient estimator and the full gradient $\ee\|v_{r,k}^j-\nabla F(w_{r+1,k}^j)\|^2$. According to the definition $v_{r,k}^j=\nabla F_{j,\bbb_{r,k}^j}(w^j_{r+1,k})+v_{r,k-1}^j-\nabla F_{j,\bbb_{r,k}^j}(w^j_{r+1,k-1})$, we have
\begin{align*}
    \ee\|v_{r,k}^j-&\nabla F(w_{r+1,k}^j)\|^2\\ 
     &=\ee\big\|\big(v_{r,k-1}^j-\nabla F(w_{r+1,k-1}^j)\big)\\
    &\qquad+\big(\nabla F_{j,\bbb_{r,k}^j}(w^j_{r+1,k})-\nabla F_{j,\bbb_{r,k}^j}(w^j_{r+1,k-1})-\nabla F_{j}(w^j_{r+1,k})+\nabla F_{j}(w^j_{r+1,k-1})\big)\\
    &\qquad+\big(\nabla F_{j}(w^j_{r+1,k})-\nabla F_{j}(w^j_{r+1,k-1})+\nabla F(w_{r+1,k-1}^j)-\nabla F(w_{r+1,k}^j)\big)\big\|^2\enspace,\\
    &= \ee\big\|\nabla F_{j,\bbb_{r,k}^j}(w^j_{r+1,k})-\nabla F_{j,\bbb_{r,k}^j}(w^j_{r+1,k-1})-\nabla F_{j}(w^j_{r+1,k})+\nabla F_{j}(w^j_{r+1,k-1})\big\|^2\\
    &\qquad+\ee\big\|\big(v_{r,k-1}^j-\nabla F(w_{r+1,k-1}^j)\big)\\
    &\qquad\qquad+\big(\nabla F_{j}(w^j_{r+1,k})-\nabla F_{j}(w^j_{r+1,k-1})+\nabla F(w_{r+1,k-1}^j)-\nabla F(w_{r+1,k}^j)\big)\big\|^2\enspace,\\
    &\leq \frac{L^2}{b}\ee\|w^j_{r+1,k}-w^j_{r+1,k-1}\|^2+\bigg(1+\frac{1}{K}\bigg)\ee\|v_{r,k-1}^j-\nabla F(w_{r+1,k-1}^j)\|^2\\
    &\qquad+(1+K)\ee\big\|\nabla F_{j}(w^j_{r+1,k})-\nabla F_{j}(w^j_{r+1,k-1})+\nabla F(w_{r+1,k-1}^j)-\nabla F(w_{r+1,k}^j)\big\|^2\enspace,
\end{align*}
where the second equality is due to the independence of the random variables, the inequality comes from the fact that the mini-batch gradients consist of $b$ i.i.d. samples, and each client $m\in[M]$ has the two-point stochastic oracle from \Cref{def:oracle_first_multi_point}. Therefore, using the above inequality recursively, we can get 
\begin{align}\label{eq:fp_2nd_hetero}
    &\ee\|v_{r,k}^j-\nabla F(w_{r+1,k}^j)\|^2\nonumber\\
    &\leq e\ee\|v_{r,0}^j-\nabla F(w_{r+1,0}^j)\|^2+\frac{eH^2}{b}\sum_{k=1}^K\ee\|w_{r+1,k}^j-w_{r+1,k-1}^j\|^2\nonumber\\
    &\qquad+e(1+K)\sum_{k=1}^K\ee\big\|\nabla F_{j}(w^j_{r+1,k})-\nabla F_{j}(w^j_{r+1,k-1})+\nabla F(w_{r+1,k-1}^j)-\nabla F(w_{r+1,k}^j)\big\|^2\enspace.
\end{align}
Due to \Cref{ass:tau} and Lemma 3 in \citet{karimireddy2020mime}, \eqref{eq:fp_2nd_hetero} implies that
\begin{align*}
    &\ee\|v_{r,k}^j-\nabla F(w_{r+1,k}^j)\|^2\\
    &\leq e\ee\|v_{r,0}^j-\nabla F(w_{r+1,0}^j)\|^2+\bigg(\frac{eH^2}{b}+8eK\tau^2\bigg)\sum_{k=1}^K\ee\|w_{r+1,k}^j-w_{r+1,k-1}^j\|^2\enspace,\\
    &\leq e\ee\|v_{r,0}^j-\nabla F(w_{r+1,0}^j)\|^2+2\eta^2\bigg(\frac{eKH^2}{b}+8eK^2\tau^2\bigg)\frac{1}{K}\sum_{k=1}^K\ee\|v_{r,k-1}^j-\nabla F(w_{r+1,k-1}^j)\|^2\\
    &\qquad+2\eta^2\bigg(\frac{eKH^2}{b}+8eK^2\tau^2\bigg)\frac{1}{K}\sum_{k=1}^K\ee\|\nabla F(w_{r+1,k-1}^j)\|^2\enspace,
\end{align*}
where the second inequality is due to the updating rule as well as adding and subtracting $\nabla F(w_{r+1,k-1}^j)$. As a result, if we choose $\eta \leq 1/(CK\tau)$ and $\eta \leq \sqrt{b}/(C^\prime\sqrt{K}H)$, and the fact that $w_{r+1,0}^j=w_{r+1,1}^j=x_r$, we can obtain
\begin{align}\label{eq:fp_var_b}
    \frac{1}{K}\sum_{k=1}^K\ee\|v_{r,k}^j-\nabla F(w_{r+1,k}^j)\|^2&\leq 2e\ee\|v_{r}-\nabla F(x_{r})\|^2+\frac{1}{6K}\sum_{k=1}^K\ee\|\nabla F(w_{r+1,k}^j)\|^2\enspace.
\end{align}

Given the above results, we are ready to establish the convergence guarantee of Algorithm \ref{alg:fed_vr}. For client $\tilde m$ sampled at $t$-th iteration for the local update, we have
\begin{align*}
        F(w_{r+1,k+1}^{\tilde m})&\leq F(w_{r+1,k}^{\tilde m})+\la\nabla F(w_{r+1,k}^{\tilde m}),w_{r+1,k+1}^{\tilde m}- w_{r+1,k}^{\tilde m}\rangle+\frac{H}{2}\|w_{r+1,k+1}^{\tilde m}-w_{r+1,k}^{\tilde m}\|^2\enspace,\nonumber\\
    &=F(w_{r+1,k}^{\tilde m})-\eta\la\nabla F(w_{r+1,k}^{\tilde m}),v_{r,k}^{\tilde m}\rangle+\frac{\eta^2H}{2}\|v_{r,k}^{\tilde m}\|^2\enspace,\nonumber\\
    &=F(w_{r+1,k}^{\tilde m})-\eta\la\nabla F(w_{r+1,k}^{\tilde m}),v_{r,k}^{\tilde m}-\nabla F(w_{r+1,k}^{\tilde m})+\nabla F(w_{r+1,k}^{\tilde m})\rangle+\frac{\eta^2H}{2}\|v_{r,k}^{\tilde m}\|^2\enspace,\nonumber\\
    &\leq F(w_{r+1,k}^{\tilde m})-\eta\|\nabla F(w_{r+1,k}^{\tilde m})\|^2-\eta\la\nabla F(w_{r+1,k}^{\tilde m}),v_{r,k}^{\tilde m}-\nabla F(w_{r+1,k}^{\tilde m})\rangle\nonumber\\
    &\qquad+\eta^2H\|v_{r,k}^{\tilde m}-\nabla F(w_{r+1,k}^{\tilde m})\|^2+\eta^2H\|\nabla F(w_{r+1,k}^{\tilde m})\|^2\enspace,\nonumber\\
    &\leq F(w_{r+1,k}^{\tilde m})-\eta\bigg(\frac{3}{4}-\eta H\bigg)\|\nabla F(w_{r+1,k}^{\tilde m})\|^2+\eta(1+\eta H)\|v_{r,k}^{\tilde m}-\nabla F(w_{r+1,k}^{\tilde m})\|^2\enspace,\nonumber\\
    &\leq F(w_{r+1,k}^{\tilde m})-\frac{\eta}{2}\|\nabla F(w_{r+1,k}^{\tilde m})\|^2+\frac{5}{4}\eta\|v_{r,k}^{\tilde m}-\nabla F(w_{r+1,k}^{\tilde m})\|^2\enspace,
\end{align*}
where the last inequality is due to the fact that $\eta\leq 1/(4H)$. Therefore, we can obtain that 
\begin{align*}
    \|\nabla F(w_{r+1,k}^{\tilde m})\|^2\leq \frac{2}{\eta}\big(F(w_{r+1,k}^{\tilde m})-F(w_{r+1,k+1}^{\tilde m})\big)+3\|v_{r,k}^{\tilde m}-\nabla F(w_{r+1,k}^{\tilde m})\|^2\enspace.
\end{align*}
Recall that $w_{r+1,1}^{\tilde m}=x_r$ and $w_{r+1,k+1}^{\tilde m}=x_{r+1}$, averaging from $k=1,\ldots K$, and taking expectation, we can get
\begin{align}\label{eq:fp_grad_b}
    \frac{1}{K}\sum_{k=1}^K\ee\|\nabla F(w_{r+1,k}^{\tilde m})\|^2\leq \frac{2}{K\eta}\big(\ee F(x_r)-\ee F(x_{r+1})\big)+\frac{3}{K}\sum_{k=1}^K\ee\|v_{r,k}^{\tilde m}-\nabla F(w_{r+1,k}^{\tilde m})\|^2\enspace.
\end{align}
Combining \eqref{eq:fp_var_b} and \eqref{eq:fp_grad_b}, we can obtain
\begin{align*}
    \frac{1}{K}\sum_{k=1}^K\ee\|\nabla F(w_{r+1,k}^{\tilde m})\|^2&\leq \frac{2}{K\eta}\big(\ee F(x_r)-\ee F(x_{r+1})\big)+6e\ee\|v_{r}-\nabla F(x_{r})\|^2+\frac{1}{2K}\sum_{k=1}^K\ee\|\nabla F(w_{r+1,k}^{\tilde m})\|^2\enspace,
\end{align*}
which implies that 
\begin{align}\label{eq:fp_grad_b2}
    \frac{1}{K}\sum_{k=1}^K\ee\|\nabla F(w_{r+1,k}^{\tilde m})\|^2&\leq\frac{4}{K\eta}\big(\ee F(x_r)-\ee F(x_{r+1})\big)+12e\ee\|v_{r}-\nabla F(x_{r})\|^2\enspace.
\end{align}

Averaging \eqref{eq:fp_grad_b2} from $t=0,\ldots,R-1$, we can obtain
\begin{align*}
    \frac{1}{RK}\sum_{r=0}^{R-1}\sum_{k=1}^K\ee\|\nabla F(w_{r+1,k}^{\tilde m})\|^2&\leq\frac{4}{RK\eta}\big(\ee F(x_0)-\ee F(x_{r})\big)
    +\frac{12e}{R}\sum_{r=0}^{R-1}\ee\|v_{r}-\nabla F(x_{r})\|^2\enspace,
\end{align*}
by the definition of $\tilde x$, we have 
\begin{align}\label{eq:fp_fb1}
    \ee\|\nabla F(\tilde x)\|^2&\leq\frac{4}{RK\eta}\big(\ee F(x_0)-\ee F(x_{R})\big)
    +\frac{12e}{R}\sum_{r=0}^{R-1}\ee\|v_{r}-\nabla F(x_{r})\|^2\enspace.
\end{align}
Next, we consider the estimation error between $v_{r}$ and $\nabla F(x_{r})$.
Recall that we have
\begin{align*}
  v_{r}=\frac{1}{M}\sum_{j=1}^M\nabla F_{j,\bbb_{r}^j}(x_{r})+(1-\beta)\bigg(v_{r-1}-\frac{1}{M}\sum_{j=1}^M\nabla F_{j,\bbb_{r}^j}(x_{r-1})\bigg)\enspace.
\end{align*}
Thus, we obtain that 
\begin{align*}
   v_{r}-\nabla F(x_{r})&=(1-\beta)\big(v_{r-1}-\nabla F(x_{r-1})\big)+\beta\bigg(\frac{1}{M}\sum_{j=1}^M\nabla F_{j,\bbb_{r}^j}(x_{r})-\nabla F(x_{r})\bigg)\\
   &\qquad+(1-\beta)\bigg(\frac{1}{M}\sum_{j=1}^M\nabla F_{j,\bbb_{r}^j}(x_{r})-\frac{1}{M}\sum_{j=1}^M\nabla F_{j,\bbb_{r}^j}(x_{r-1})+\nabla F(x_{r-1})-\nabla F(x_{r})\bigg)\enspace.
\end{align*}
Therefore, using the conditional expectation up to the $r$-th iteration, we have
\begin{align}\label{eq:fp_eq1}
    \ee_{r}\big\|v_{r}-\nabla F(x_{r})\big\|^2
    &\leq (1-\beta)^2\ee_{r}\big\|v_{r-1}-\nabla F(x_{r-1})\big\|^2+2\beta^2\ee_{r}\bigg\|\frac{1}{M}\sum_{j=1}^M\nabla F_{j,\bbb_{r}^j}(x_{r})-\frac{1}{M}\sum_{j=1}^M\nabla F_{j}(x_{r})\bigg\|^2
    \nonumber\\
    &\qquad+2(1-\beta)^2\frac{H^2}{MKb}\ee_{r}\big\|x_{r}-x_{r-1}\big\|^2\enspace,\nonumber\\
    &\leq (1-\beta)^2\ee_{r}\big\|v_{r-1}-\nabla F(x_{r-1})\big\|^2+2\beta^2\frac{\sigma_2^2}{MKb}+2(1-\beta)^2\frac{H^2}{MKb}\ee_{r}\big\|x_{r}-x_{r-1}\big\|^2\enspace,
\end{align}
where the first inequality is due to the fact that the mini-batch gradients consist of $b$ i.i.d. samples and each client has a two-point stochastic oracle, and the last inequality is due to the bounded variance assumption in \Cref{def:oracle_first_multi_point}. Therefore, taking expectations over all iterations for \eqref{eq:fp_eq1}, we can get
\begin{align}\label{eq:fp_contract_lemma9}
  \ee\big\|v_{r}-\nabla F(x_{r})\big\|^2 &\leq (1-\beta)^2\ee\big\|v_{r-1}-\nabla F(x_{r-1})\big\|^2+2\beta^2\frac{\sigma_2^2}{MKb}+2(1-\beta)^2\frac{H^2}{MKb}\ee\big\|x_{r}-x_{r-1}\big\|^2\enspace. 
\end{align}
Furthermore, we have
\begin{align*}
    \beta \sum_{r=0}^{R-1}\ee\big\|v_{r}-\nabla F(x_{r})\big\|^2 &=\sum_{r=0}^{R-1}\ee\big\|v_{r}-\nabla F(x_{r})\big\|^2-(1-\beta)\sum_{r=0}^{R-1}\ee\big\|v_{r}-\nabla F(x_{r})\big\|^2\enspace,\\
    &=\sum_{r=1}^{R}\ee\big\|v_{r}-\nabla F(x_{r})\big\|^2-(1-\beta)\sum_{r=0}^{R-1}\ee\big\|v_{r}-\nabla F(x_{r})\big\|^2-\ee\big\|v_{R}-\nabla F(x_{R})\big\|^2\\
    &\qquad+\ee\big\|v_{0}-\nabla F(x_{0})\big\|^2\enspace,\\
    &\leq \sum_{r=1}^{R}\ee\big\|v_{r}-\nabla F(x_{r})\big\|^2-(1-\beta)^2\sum_{r=0}^{R-1}\ee\big\|v_{r}-\nabla F(x_{r})\big\|^2-\ee\big\|v_{R}-\nabla F(x_{R})\big\|^2\\
    &\qquad+\ee\big\|v_{0}-\nabla F(x_{0})\big\|^2\enspace,\\
    &\leq 2(1-\beta)^2\frac{H^2}{MKb}\sum_{r=0}^{R-1}\ee\big\|x_{r+1}-x_{r}\big\|^2+2\beta^2R\frac{\sigma_2^2}{MKb}+\ee\big\|v_{0}-\nabla F(x_{0})\big\|^2\enspace,
\end{align*}
where the last inequality is due to \eqref{eq:fp_contract_lemma9}. Since we have 
\begin{align*}
    \ee\big\|v_{0}-\nabla F(x_{0})\big\|^2=\ee\bigg\|\frac{1}{M}\sum_{j=1}^M\nabla F_{j,\bbb_{0}^j}(x_{0})-\nabla F(x_0)\bigg\|^2\leq \frac{\sigma_2^2}{Mb_0}\enspace.
\end{align*}
Therefore, we have
\begin{align*}
    \beta \sum_{r=0}^{R-1}\ee\big\|v_{r}-\nabla F(x_{r})\big\|^2&\leq \frac{2(1-\beta)^2H^2}{MKb}\sum_{r=0}^{R-1}\ee\big\|x_{r+1}-x_{r}\big\|^2+2\beta^2R\frac{\sigma_2^2}{MKb}+\frac{\sigma_2^2}{Mb_0}\enspace.
\end{align*}
This implies that
\begin{align}\label{eq:fp_new_vr11}
    \frac{1}{R}\sum_{r=0}^{R-1}\ee\big\|v_{r}-\nabla F(x_{r})\big\|^2\leq \frac{2(1-\beta)^2H^2}{\beta MKb R}\sum_{r=0}^{R-1}\ee\big\|x_{r+1}-x_{r}\big\|^2+2\beta\frac{\sigma_2^2}{MKb}+\frac{\sigma_2^2}{\beta RMb_0}\enspace.
\end{align}
In addition, combining \eqref{fp_eq3} and \eqref{eq:fp_var_b}, we can get 
\begin{align}\label{eq:fp_para_dis1}
    \ee\|w_{r+1,k}^j-x_r\|^2&\leq 8e^2K^2\eta^2\ee\|v_{r}-\nabla F(x_{r})\|^2\nonumber\\
    &\qquad+\frac{2e(1+K)\eta^2}{6}\sum_{k=1}^K\ee\|\nabla F(w_{r+1,k}^j)\|^2+2e(1+K)\eta^2\sum_{k=1}^{K-1}\|\nabla F(w_{r+1,k}^j)\|^2\enspace\nonumber\\
    &\leq 8e^2K^2\eta^2\ee\|v_{r}-\nabla F(x_{r})\|^2+10eK^2\eta^2\frac{1}{K}\sum_{k=1}^{K-1}\ee\|\nabla F(w_{r+1,k}^j)\|^2\enspace.
\end{align}
Therefore, we have
\begin{align}\label{eq:fp_para_dis2}
   \ee\|x_{r+1}-x_r\|^2&= \ee\|w_{r+1,k+1}^{\tilde m}-x_r\|^2\nonumber\\
   &\leq 8e^2K^2\eta^2\ee\|v_{r}-\nabla F(x_{r})\|^2+10eK^2\eta^2\frac{1}{K}\sum_{k=1}^{K-1}\ee\|\nabla F(w_{r+1,k}^{\tilde m})\|^2\enspace.
\end{align}

Thus, plugging \eqref{eq:fp_para_dis2} into \eqref{eq:fp_new_vr11}, we can get
\begin{align*}
   \frac{1}{R}\sum_{r=0}^{R-1}\ee\big\|v_{r}-\nabla F(x_{r})\big\|^2&\leq \frac{160H^2K^2\eta^2}{\beta MKb }\frac{1}{R}\sum_{r=0}^{R-1}\bigg(\ee\|v_{r}-\nabla F(x_{r})\|^2+\frac{1}{K}\sum_{k=1}^{K-1}\ee\|\nabla F(w_{r+1,k}^{\tilde m})\|^2\bigg)\\
   &\qquad+2\beta\frac{\sigma_2^2}{MKb}+\frac{\sigma_2^2}{\beta RMb_0}\enspace,\\
   &\leq \frac{1}{24e+1}\frac{1}{R}\sum_{r=0}^{R-1}\bigg(\ee\|v_{r}-\nabla F(x_{r})\|^2+\frac{1}{K}\sum_{k=1}^{K-1}\ee\|\nabla F(w_{r+1,k}^{\tilde m})\|^2\bigg)\\
   &\qquad+2\beta\frac{\sigma_2^2}{MKb}+\frac{\sigma_2^2}{\beta RMb_0}\enspace,
\end{align*}
where the last inequality is due to the fact that $\eta\leq \sqrt{\beta MKb}/(C^{\prime \prime}LK)$. Thus, we have
\begin{align}\label{eq:fp_var_fb}
    \frac{1}{R}\sum_{r=0}^{R-1}\ee\big\|v_{r}-\nabla F(x_{r})\big\|^2\leq \frac{1}{24e}\frac{1}{R}\sum_{r=0}^{R-1}\frac{1}{K}\sum_{k=1}^{K-1}\ee\|\nabla F(w_{r+1,k}^{\tilde m})\|^2+4\beta\frac{\sigma_2^2}{MKb}+2\frac{\sigma_2^2}{\beta RMb_0}\enspace.
\end{align}
Combining \eqref{eq:fp_fb1} and \eqref{eq:fp_var_fb}, we can obtain
\begin{align*}
\ee\|\nabla F(\tilde x)\|^2&\leq\frac{4}{RK\eta}\big(\ee F(x_0)-\ee F(x_{R})\big)
+\frac{1}{2}\ee\|\nabla F(\tilde x)\|^2+48e\beta\frac{\sigma_2^2}{MKb}+24e\frac{\sigma_2^2}{\beta RMb_0}\enspace,
\end{align*}
which implies
\begin{align}\label{eq:fp_fb2}
\ee\|\nabla F(\tilde x)\|^2&\leq\frac{8}{RK\eta}\big( F(x_0)- F(x^*)\big)
+96e\beta\frac{\sigma_2^2}{MKb}+48e\frac{\sigma_2^2}{\beta RMb_0}\enspace.
\end{align}
Note that we have the following requirements for the stepsize $\eta$: $\eta \leq 1/(4H)$, $\eta \leq 1/(CK\tau)$, $\eta \leq \sqrt{b}/(C^\prime\sqrt{K}H)$, $\eta\leq \sqrt{\beta MKb}/(C^{\prime \prime}HK)$. Plugging these requirements, we can get
\begin{align}\label{eq:rebutal_ulb}
   \ee\|\nabla F(\tilde x)\|^2&\leq C_1\bigg( \frac{\Delta \tau}{R} +\frac{\Delta H}{KR}+\frac{\Delta H}{R\sqrt{Kb}}+\frac{\Delta H}{R\sqrt{\beta MKb}}+\beta\frac{\sigma_2^2}{MKb}+\frac{\sigma_2^2}{\beta RMb_0}\bigg)\enspace.
\end{align}
Therefore, if we choose $b_0=KR$ and
\begin{align*}
    \beta= \max\left\{\frac{1}{R}, \frac{(\Delta H)^{2/3}(MKb)^{1/3}}{\sigma_2^{4/3}R^{2/3}}\right\} =: \max\{\beta_1, \beta_2\}\enspace,
\end{align*}
we can obtain,
\begin{align*}
   \ee\|\nabla F(\tilde x)\|^2&\leq C_1\bigg( \frac{\Delta \tau}{R} +\frac{\Delta H}{KR}+\frac{\Delta H}{R\sqrt{Kb}}+\frac{\Delta H}{R\sqrt{\beta_2 MKb}}+(\beta_1 + \beta_2)\frac{\sigma_2^2}{MKb}+\frac{\sigma_2^2}{\beta_1 MKR^2}\bigg)\enspace.
\end{align*}
which simplifies to,
\begin{align}\label{eq:new_ub_thm3.1}
   \ee\|\nabla F(\tilde x)\|^2&\leq C_1\bigg( \frac{\Delta \tau}{R} +\frac{\Delta H}{KR}+\frac{\Delta H}{R\sqrt{Kb}}+\left(\frac{\sigma_2\Delta  H}{MKbR}\right)^{2/3}+\frac{\sigma_2^2}{MKbR}\bigg)\enspace.
\end{align}
Since we need to ensure that $\beta\leq 1$, we require the following assumption for $\beta_2\leq 1$ ($R\geq 1$ w.l.o.g.),
\begin{align*}
    \frac{\Delta H}{R} \leq \frac{\sigma_2^2}{\sqrt{MKb}}\enspace.
\end{align*}
This concludes the proof of Theorem \ref{thm:alg1_full} (a).

\noindent\textbf{Deterministic case:} Note that if each client $m\in[M]$ has a deterministic two-point oracle, we can choose $\beta=1$, and according to \eqref{eq:fp_fb1}, we can obtain
\begin{align}\label{eq:fp_fb1_deter}
    \ee\|\nabla F(\tilde x)\|^2&\leq\frac{4}{RK\eta}\big(\ee F(x_0)-\ee F(x_{R})\big)
    +\frac{12e}{R}\sum_{r=0}^{R-1}\ee\|v_{r}-\nabla F(x_{r})\|^2\enspace,
\end{align}
where we have the following requirements of stepsize $\eta$: $\eta \leq 1/(4H)$, $\eta \leq 1/(CK\tau)$. Furthermore, we have $\vbb_t=\nabla F(x_r)$, which implies that
\begin{align*}
   \ee\|\nabla F(\tilde x)\|^2&\leq C_4\bigg( \frac{\Delta \tau}{R} +\frac{\Delta H}{KR}\bigg)\enspace.
\end{align*}
This concludes the proof of Theorem \ref{thm:alg1_full} (b).

\noindent\textbf{Three regimes:} In the following, we discuss how to obtain the result in Figure \ref{fig:3region}. We always assume that $\tau\leq L$ and, without loss of generality, we assume $b=1$ and ignore all the dependence on constants. According to \eqref{eq:rebutal_ulb}, 
if we choose $\beta,b_0$ such that
\begin{align}\label{eq:fp_beta_b0}
    \beta\frac{\sigma_2^2}{MKb}\leq \epsilon\quad ~\text{and}\quad~\frac{\sigma_2^2}{\beta RM\epsilon}\leq b_0\enspace,
\end{align}
we can obtain
\begin{align}\label{eq:fp_final_rate}
   \ee\|\nabla F(\tilde x)\|^2&\leq C_5\bigg( \frac{\Delta \tau}{R} +\frac{\Delta H}{KR}+\frac{\Delta H}{R\sqrt{Kb}}+\frac{\Delta Hs}{R\sqrt{\beta MKb}}+\epsilon\bigg)\enspace.
\end{align}
Therefore, to achieve $\ee\|\nabla F(\tilde x)\|^2\leq \epsilon$, we need the following communication complexity
\begin{align*}
    R=C_3\bigg(\frac{\Delta \tau}{\epsilon} +\frac{\Delta H}{K\epsilon}+\frac{\Delta H}{\epsilon\sqrt{Kb}}+\frac{\Delta H}{\epsilon\sqrt{\beta MKb}}\bigg)\enspace.
\end{align*}
Furthermore,the gradient complexity of Algorithm \ref{alg:fed_vr} is $N=MbKR+bK+Mb_0$. If we have
\begin{align}\label{eq:beta_b0_1}
    Mb_0\leq N\enspace,
\end{align}
then we have the following gradient complexity:
\begin{align*}
    N=C_4MbKR=C_4\bigg(\frac{MbK\Delta \tau}{\epsilon} +\frac{Mb\Delta H}{\epsilon}+\frac{M\Delta H\sqrt{Kb}}{\epsilon}+\frac{\Delta H\sqrt{ MKb}}{\epsilon\sqrt{\beta}}\bigg)\enspace.
\end{align*}

Note that we want to keep the $R=\Delta \tau/\epsilon$ while minimizing $N$, i.e., to obtain $N$ close to $\Delta H \sigma/\epsilon^{3/2}$. Recall that we have
\begin{align*}
    R=\frac{\Delta \tau}{\epsilon} +\frac{\Delta H}{\epsilon\sqrt{K}}+\frac{\Delta H}{\epsilon\sqrt{\beta MK}}\quad~\text{and}\quad~N=\frac{MK\Delta \tau}{\epsilon} +\frac{M\Delta H\sqrt{K}}{\epsilon}+\frac{\Delta H\sqrt{ MK}}{\epsilon\sqrt{\beta}}\enspace.
\end{align*}
To achieve $R=\Delta \tau/\epsilon$, we need
\begin{align}\label{eq:k_req}
    K\geq\max\bigg\{\frac{H^2}{\tau^2},\frac{ H^2}{\beta M\tau^2}\bigg\}\enspace.
\end{align}
\\
\textbf{Green regime}: We want to achieve the best of both worlds, i.e., $R=\Delta \tau/\epsilon$ and $N=\Delta H \sigma_2/\epsilon^{3/2}$. According to $N$, we need to have
\begin{align}\label{eq:k_req_green}
    K\leq\max\bigg\{\frac{H}{\tau}\cdot\frac{\sigma_2}{M\epsilon^{1/2}},\frac{\sigma_2^2}{M^2\epsilon},\frac{\sigma_2^2\beta}{M\epsilon}\bigg\}\enspace.
\end{align}
Therefore, combining \eqref{eq:k_req} and \eqref{eq:k_req_green}, we can obtain
\begin{align*}
    \epsilon^{1/2}\leq \frac{\sigma_2\tau}{HM}\quad~\text{and}\quad~\beta\geq \frac{H\epsilon^{1/2}}{\sigma_2 \tau}\enspace.
\end{align*}
In addition, according to \eqref{eq:fp_beta_b0}, we have
\begin{align*}
    \beta\leq\frac{\epsilon MK}{\sigma_2^2 }\leq \frac{\epsilon N}{R\sigma_2^2}=\frac{H\epsilon^{1/2}}{\sigma_2 \tau}\enspace.
\end{align*}
Therefore, we can choose $\beta=H\epsilon^{1/2}/(\sigma_2\tau)$, and this will lead to
\begin{align*}
    K=\frac{\sigma_2 H}{M\tau\sqrt{\epsilon}}\enspace.
\end{align*}
In addition, according to \eqref{eq:fp_beta_b0} and \eqref{eq:beta_b0_1}, we have
\begin{align*}
    b_0=\frac{\sigma_2^3}{\Delta H M\epsilon^{1/2}}\enspace,
\end{align*}
and we need
\begin{align*}
    \frac{\sigma_2^3}{\Delta H \epsilon^{1/2}}\leq\frac{\sigma_2^2}{\epsilon}\leq \frac{\Delta H\sigma_2}{\epsilon^{3/2}}\enspace,
\end{align*}
which will hold if we have $\epsilon\sigma_2^2\leq(\Delta H)^2$.\\
To summarize, if we have $\epsilon^{1/2}\leq \sigma_2\tau/(HM)$, $H\epsilon^{1/2}\leq\sigma_2 \tau$ ($\epsilon\leq\sigma_2^2$), and $\epsilon\sigma_2^2\leq (\Delta H)^2$, we have 
\begin{align*}
    R=\frac{\Delta \tau}{\epsilon}\quad~\text{and}\quad~N=\frac{\Delta H\sigma_2}{\epsilon^{3/2}}\enspace.
\end{align*}
If we choose $K=\sigma_2 H/(M\tau\epsilon^{1/2})\geq 1$ (as $M\epsilon^{1/2}\leq \sigma_2$), $b_0=\sigma_2^3/(H\Delta M\epsilon^{1/2})$, $\beta=H\epsilon^{1/2}/(\sigma_2\tau)$ (always less than 1 in this regime). This gives us the green regime in Figure \ref{fig:3region}.
\\
\textbf{Orange regime}: In this regime, we still want to keep the $R=\Delta \tau/\epsilon$ while minimizing $N$. Since we have $\epsilon^{1/2}\geq \sigma_2\tau/(HM)$, we cannot make $N=\Delta \sigma_2 H/\epsilon^{3/2}$. Thus, according to \eqref{eq:k_req}, we have
\begin{align*}
    N=\frac{MH\Delta}{\epsilon}\cdot\frac{H}{\tau}+\frac{\sqrt{M}H\Delta}{\sqrt{\beta}\epsilon}\cdot\frac{H}{\tau}+\frac{MH\Delta}{\epsilon}\cdot\frac{H}{\tau\beta M}\enspace.
\end{align*}
By choosing $\beta=1/M$, we can get
\begin{align*}
    N=\frac{MH\Delta}{\epsilon}\cdot\frac{H}{\tau}\enspace.
\end{align*}
And we have $K=H^2/\tau^2$.
Furthermore, according to \eqref{eq:fp_beta_b0} and \eqref{eq:beta_b0_1}, we have
\begin{align*}
    \frac{\sigma_2^2\tau^2}{M^2H^2}\leq \epsilon,\quad b_0=\frac{\sigma_2^2}{\Delta \tau},\quad \frac{M\sigma_2^2}{\Delta \tau}\leq \frac{MH\Delta}{\epsilon}\cdot\frac{H}{\tau}\enspace.
\end{align*}
where the first inequality holds due to $\epsilon^{1/2}\geq \sigma\tau/(HM)$ and the last one holds if we have $\epsilon\sigma_2^2\leq(H\Delta)^2$.\\
To summarize, if we have $\epsilon^{1/2}\geq \sigma_2\tau/(HM)$ and $\epsilon\sigma_2^2\leq (\Delta H)^2$, we have
\begin{align*}
     R=\frac{\Delta \tau}{\epsilon}~\text{and}~N=\frac{MH\Delta}{\epsilon}\cdot\frac{H}{\tau}\enspace,
\end{align*}
if we choose $K=H^2/\tau^2$, $b_0=\sigma_2^2/(\Delta \tau)$.\\
\textbf{Red region}: If we have $\epsilon\geq \Delta \tau$, then we only need $R=1$, and thus we have $N\geq MH^2\Delta^2/\epsilon^2$. 
\end{proof}

\section{Mini-batch STORM}\label{sec:minibatch_storm}
In this section, we present the convergence guarantee of mini-batch STORM for completeness. More specifically, if we choose the number of local updates to be one in Algorithm \ref{alg:fed_vr}, our method will reduce to mini-batch STORM. 
As a result, we have the following convergence guarantee.
\begin{theorem}\label{thm:minibatch_storm}
Suppose we have a problem instance satisying \Cref{ass:smooth_second,ass:bounded_func_subopt,ass:tau} where each client $m\in[M]$ has a stochastic $2$-point oracle (cf., \Cref{def:oracle_first_multi_point}), then the output $\tilde x$ of mini-batch STORM using $\beta=\frac{(\Delta H)^{2/3}(MK)^{1/3}}{\sigma_2^{4/3}R^{2/3}} \leq 1$, $b_0~=~\min\left\{\frac{\sigma_2^{4/3}(RK)^{2/3}}{(\Delta H)^{2/3}M^{1/3}}, \frac{\sigma_2^{8/3}(KR)^{1/3}}{(\Delta H)^{4/3}M^{2/3}}\right\},$ and $\eta=\min\left\{\frac{1}{H}, \frac{(\beta M)^{1/2}}{HK^{1/2}}\right\}$ satisfies
$$\ee\|\nabla F(\tilde x)\|^2\leq c_1\cdot\left(\frac{\Delta H}{R}+\frac{\sigma_2^2}{MKR}+ \left(\frac{\Delta \sigma_2 H}{RMK}\right)^{2/3}\right)\enspace,$$
\end{theorem}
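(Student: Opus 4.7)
The plan is to derive Theorem~\ref{thm:minibatch_storm} as a special case of the analysis developed for Theorem~\ref{thm:alg1_full}, specialized to a single local update per communication round with mini-batch size $K$ (equivalently, setting $P=1$ and $b=K$ in Algorithm~\ref{alg:fed_vr}). Since there are no local steps within a round, all local-drift terms in the bound collapse: the inequality~\eqref{eq:fp_para_dis2} controlling $\ee\|x_{r+1}-x_r\|^2$ no longer picks up the $\tau$-dependent consensus contributions, and the recursion for $v_{r,k}^{\tilde m}-\nabla F(w_{r+1,k}^{\tilde m})$ in~\eqref{eq:fp_var_b} is trivial since $v_{r,0}^{\tilde m}=v_r$ is the only estimator used. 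This is why the rate contains no $\tau$ dependence.

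First I would write the one-step smoothness descent lemma at the server iterate $x_{r+1}=x_r-\eta v_r$, following the derivation leading to~\eqref{eq:fp_fb1}, which yields
\begin{align*}
   \ee\|\nabla F(\tilde x)\|^2 \leq \frac{4}{R\eta}\bigl(F(x_0)-F^\star\bigr) + \frac{12e}{R}\sum_{r=0}^{R-1}\ee\|v_r - \nabla F(x_r)\|^2\enspace,
\end{align*}
under the step-size constraint $\eta\leq 1/(4H)$. Next I would bound the estimator error using the momentum variance-reduction recursion~\eqref{eq:fp_contract_lemma9}, but now each gradient pair is computed from an effective mini-batch of size $MK$ per round (rather than $MKb$ local samples as in Theorem~\ref{thm:alg1_full}); this replaces $MKb$ by $MK$ throughout and drops the intra-round drift contribution. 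Telescoping yields the analog of~\eqref{eq:fp_new_vr11}:
\begin{align*}
   \frac{1}{R}\sum_{r=0}^{R-1}\ee\|v_r-\nabla F(x_r)\|^2 \leq \frac{2(1-\beta)^2 H^2}{\beta MK R}\sum_{r=0}^{R-1}\ee\|x_{r+1}-x_r\|^2 + 2\beta\frac{\sigma_2^2}{MK} + \frac{\sigma_2^2}{\beta R M b_0}\enspace.
\end{align*}
With a single update, $\ee\|x_{r+1}-x_r\|^2 = \eta^2\ee\|v_r\|^2$, which is bounded by $2\eta^2(\ee\|v_r-\nabla F(x_r)\|^2 + \ee\|\nabla F(x_r)\|^2)$, absorbing the first term into the gradient norm provided $\eta\leq c\sqrt{\beta M/K}/H$, matching the stated step-size choice.

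Combining these inequalities, analogous to~\eqref{eq:fp_fb2}, gives
\begin{align*}
   \ee\|\nabla F(\tilde x)\|^2 \lesssim \frac{\Delta}{R\eta} + \beta\frac{\sigma_2^2}{MK} + \frac{\sigma_2^2}{\beta R M b_0}\enspace,
\end{align*}
and plugging the step-size $\eta = \min\{1/H, \sqrt{\beta M}/(H\sqrt{K})\}$ produces $\Delta H/R + \Delta H/(R\sqrt{\beta MK}) + \beta\sigma_2^2/(MK) + \sigma_2^2/(\beta R M b_0)$. The final step is parameter tuning: choose $\beta = (\Delta H)^{2/3}(MK)^{1/3}/(\sigma_2^{4/3}R^{2/3})$ to equate $\beta\sigma_2^2/(MK)$ with $\Delta H/(R\sqrt{\beta MK})$, both becoming $(\Delta \sigma_2 H/(RMK))^{2/3}$, and choose $b_0$ so that the warm-start term $\sigma_2^2/(\beta R M b_0)$ is no larger than the dominant terms—this is exactly the min in the stated choice of $b_0$ and yields the claimed rate.

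The main obstacle is the bookkeeping for the step-size constraints ensuring $\beta\leq 1$ and that all three requirements on $\eta$ remain compatible, and in particular verifying that the chosen $b_0$ is the correct minimum of two expressions (one arising from balancing $\sigma_2^2/(\beta RMb_0)$ against $(\Delta\sigma_2 H/(RMK))^{2/3}$ and another against $\sigma_2^2/(MKR)$). This is routine but tedious parameter balancing, analogous to the tuning in the green regime of Theorem~\ref{thm:alg1_full}; no genuinely new technical difficulty arises beyond what the proof of Theorem~\ref{thm:alg1_full} already handles.
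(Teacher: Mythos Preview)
Your proposal is correct and follows essentially the same approach as the paper: specialize the analysis of Theorem~\ref{thm:alg1_full} to a single local update per round with mini-batch size $b=K$, so that all $\tau$-dependent drift terms collapse, and then tune $\beta$ and $b_0$ exactly as you describe. The paper's proof is terser---it directly invokes the final bound~\eqref{eq:new_ub_thm3.1} with the substitution $K\to 1$, $\tau\to H$, $b\to K$ and drops the $\Delta H/(R\sqrt{Kb})$ term (which only appears for $K>1$ local steps)---whereas you re-derive the intermediate inequalities, but the logic is identical.
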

where $c_1$ is a numerical constant.
\begin{proof}[Proof of Theorem \ref{thm:minibatch_storm}]
The proof of this result directly follows the proof of Theorem \ref{thm:alg1_full}. We can just set $K=1$, let $\tau=H$, and ignoring the $\Delta H/(R\sqrt{Kb})$ term (which appears when local updates $K>1$) in \eqref{eq:new_ub_thm3.1} to get
\begin{align*}
   \ee\|\nabla F(\tilde x)\|^2&\leq C_1\bigg( \frac{\Delta H}{R}+\frac{\sigma_2^2}{MbR} + \left(\frac{\sigma_2 \Delta H}{MbR}\right)^{2/3}\bigg)\enspace,
\end{align*}
provided that
\begin{align*}
    \beta=\frac{(\Delta H)^{2/3}(Mb)^{1/3}}{\sigma_2^{4/3}R^{2/3}} \leq 1\enspace.
\end{align*}
Finally, if we choose the batch size to be the number of updates in the local update algorithms, i.e., $b=K$, we obtain that
\begin{align*}
   \ee\|\nabla F(\tilde x)\|^2&\leq C_1\bigg( \frac{\Delta H}{R}+\frac{\sigma_2^2}{MKR}+ \left(\frac{\Delta \sigma_2 H}{RMK}\right)^{2/3}\bigg)\enspace,
\end{align*}
and we have 
\begin{align*}
    \beta=\frac{(\Delta H)^{2/3}(MK)^{1/3}}{\sigma_2^{4/3}R^{2/3}} \leq 1,~b_0~=~\min\left\{\frac{\sigma_2^{4/3}(RK)^{2/3}}{(\Delta H)^{2/3}M^{1/3}}, \frac{\sigma_2^{8/3}(KR)^{1/3}}{(\Delta H)^{4/3}M^{2/3}}\right\}\enspace.
\end{align*}
Note that $C_1,C_2$ are numerical constants.
\end{proof}
\chapter{Additional Details for Chapter 7}\label{app:chap7}
We introduce some additional notation that we will use in this appendix. First, we will denote the function classes satisfying the assumptions introduced in the main body as follows:
\begin{enumerate}
    \item $\boldsymbol{\fff^{G}}$, the class of convex, differentiable, and $G$-Lipschitz functions, i.e., the class to which the cost functions belong when they satisfy \Cref{ass:online_convex,ass:online_bounded_gradients}.
    \item $\boldsymbol{\fff^{H}}$, the class of convex, differentiable, and $H$-smooth functions, i.e., the class to which the cost functions belong when they satisfy \Cref{ass:online_convex,ass:online_smooth_second}
    \item $\boldsymbol{\fff_{lin}^{G}} \subset \boldsymbol{\fff^{G, H}}$, which includes linear cost functions with gradients bounded by $G$, i.e., the class to which the cost functions belong when they satisfy \Cref{ass:online_linear}. Note that linear functions are the ``smoothest" functions in the class $\boldsymbol{\fff^{G, H}}$, i.e., the class to which the cost functions belong when they satisfy both \Cref{ass:online_bounded_gradients,ass:online_smooth_second}.
\end{enumerate}
Now recall that we defined a problem class $\ppp$ as taking in the history at any particular time, as well as an algorithm (and not its randomness) to output a distribution over $M$ different functions. In this appendix, we will often make the restrictions on the problem class explicit by using superscripts. For instance, if the cost functions satisfy \Cref{ass:online_bounded_gradients} we denote the problem class as $\ppp^{\fff^G}$. Furthermore, if the cost functions satisfy, \Cref{ass:online_zeta} we will use $\ppp^{\fff^G,\hat\zeta}$. This usage will be clear in the discussion and will enable us to present our analysis concisely. 

Finally, to make it explicit that we are hoping to characterize the min-max complexity of several problems in the intermittent communication setting with $M$ machines, $K$ local updates, and $R$ communication rounds, we will denote the min-max regret by $\rrr_{M,K,R}(\ppp,\aaa)$ for some problem class $\ppp$ and algorithm class $\aaa$. This also allows us to refer to the serial setting, i.e., a single machine's min-max regret, by using $\rrr_{1,K,R}(\ppp,\aaa)$, and noting that $\ppp,\ \aaa$ essentially reduce to adversaries and algorithms on a single machine when used in this notation.
\section{Related Problem Settings and Reductions}\label{sec:related}
\begin{figure}[!tbh]
    \centering
    \includegraphics[width=0.9\textwidth]{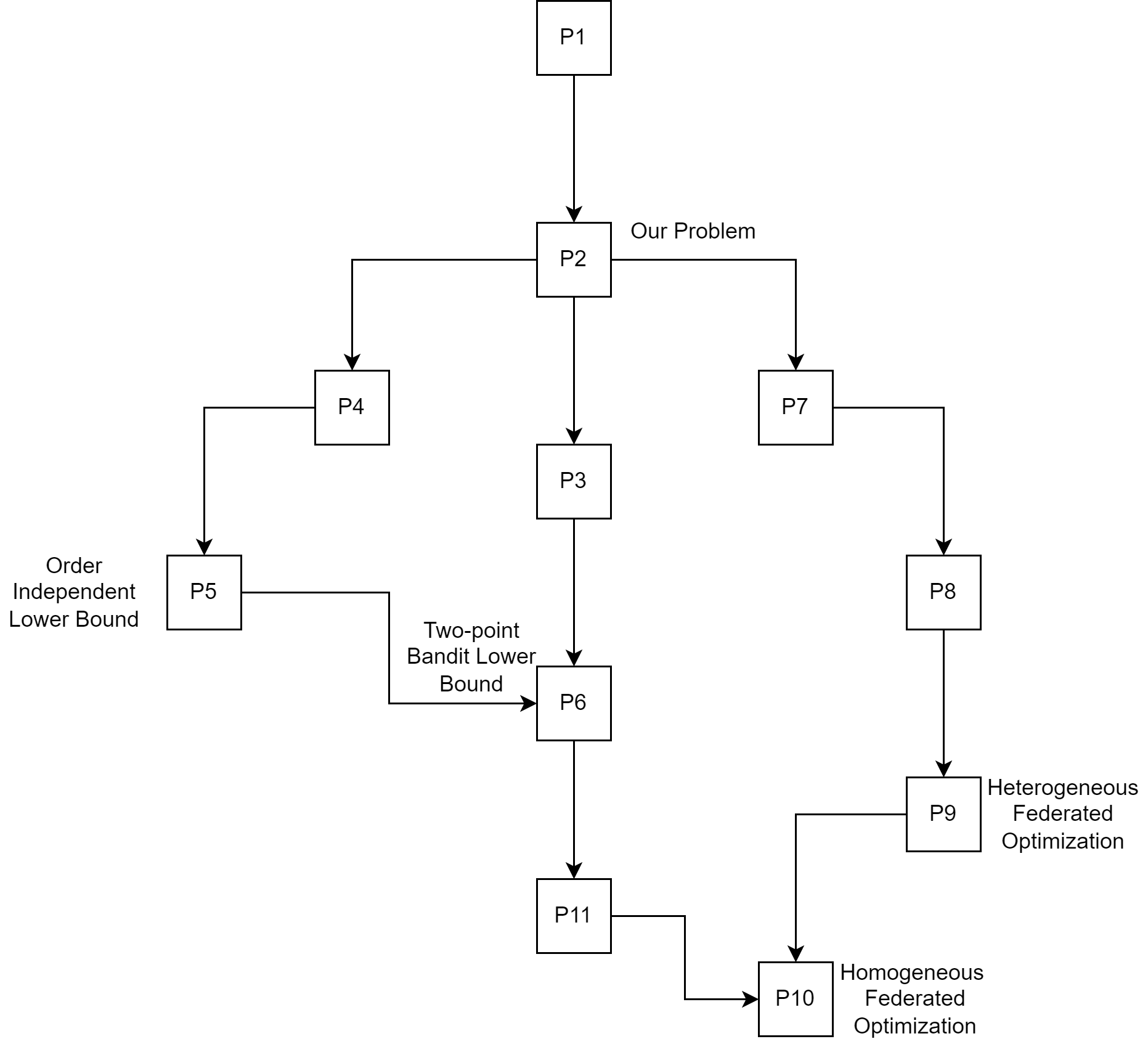}
    \caption{Summary of the problem space of federated online optimization. An arrow from the parent to the child denotes that the child's min-max problem is easier or has a lower min-max value than the parent's problem. Note that to demonstrate the absence of benefit from collaboration for first-order algorithms on the problem (P2), we utilize the lower bound construction for the problem (P5). The figure clarifies why this does not contradict the benefit of collaboration for problems (P9) and (P10), as they lie on a different path from the parent (P2).}
    \label{fig:hierarchy}
\end{figure}
In this section, we will characterize different federated learning problems and objectives using the min-max value, defined in \eqref{eq:P2}. As before, we look at an algorithm class $\aaa$ and adversary class $\ppp$. The hardest problem we can hope to solve is when the adversary, besides knowing $\hhh_t$ and $A$, also knows the randomization of the algorithm at any given time $t$:
\begin{align}
    \min_{A\in\aaa} \ee_{A}\sb{\max_{P\in \ppp}  \ee_P\sb{\frac{1}{MT}\sum_{t\in [T], m\in[M]}f_t^m(x_t^m) - \min_{x^\star\in \bb_2(B)}\frac{1}{MT}\sum_{t\in [T], m\in[M]}f_t^m(x^\star)}}\enspace.\tag{\textbf{P1}}
\end{align}
For this problem, note that both the min and max-player do not gain from randomization, as on both players, respectively, we can just put all the distributions' mass on the best deterministic strategies. As a result, we can instead look at the sub-classes $\aaa_{det}\subset \aaa$, $\ppp_{det}\subset \ppp$ denoting deterministic strategies. This simplifies problem (P1) to the following:
\begin{align}
    \min_{A\in\aaa_{det}} \max_{P\in \ppp_{det}}  \frac{1}{MT}\sum_{t\in [T], m\in[M]}f_t^m(x_t^m) - \min_{x^\star\in \bb_2(B)}\frac{1}{MT}\sum_{t\in [T], m\in[M]}f_t^m(x^\star)\enspace.\tag{\textbf{P1}}
\end{align}
Recall that this is not the problem we study in the paper, but instead, as defined in \eqref{eq:P2}, we look at the following simpler problem where the max-player/adversary does not know the random seeds of the min-player:
\begin{align}
    \min_{A\in\aaa}\max_{P\in \ppp}  \ee_{A, P}\sb{\frac{1}{MT}\sum_{t\in [T], m\in[M]}f_t^m(x_t^m) - \min_{x^\star\in \bb_2(B)}\frac{1}{MT}\sum_{t\in [T], m\in[M]}f_t^m(x^\star)}\enspace.\tag{\textbf{P2}}
\end{align}
Note that the adversary does not benefit from randomization so that we can replace the $\ppp$ above by $\ppp_{det}$. However, making the randomization on the max-player explicit makes it easier to state the following easier version of the problem (P2) with a weaker comparator $x^\star$ that does not depend on the randomness of the adversary and is thus worse in general:
\begin{align}
    \min_{A\in\aaa}\max_{P\in \ppp}  \ee_{A}\rb{\ee_P\sb{\frac{1}{MT}\sum_{t\in [T], m\in[M]}f_t^m(x_t^m)} - \min_{x^\star\in \bb_2(B)}\ee_P\sb{\frac{1}{MT}\sum_{t\in [T], m\in[M]}f_t^m(x^\star)}}\enspace. \tag{\textbf{P3}}
\end{align}
This form of regret is common in multi-armed bandit literature and is often referred to as \textit{``pseudo-regret''}. Intuitively, one wants to disregard the random perturbations an adversary might add in multi-armed bandits while comparing to a hindsight optimal model. We have only discussed the \textit{``fully adaptive''} setting so far. We can also relax the problem (P2) by weakening the adversary. One way to do this is by requiring the functions to be the same across the machines, which leads to the following problem,
\begin{align*}
    \min_{A\in\aaa}\max_{P\in \ppp_{cood}}  \ee_{A}\sb{\frac{1}{MT}\sum_{t\in [T], m\in[M]}f_t(x_t^m) - \min_{x^\star\in \bb_2(B)}\frac{1}{T}\sum_{t\in [T]}f_t(x^\star)}\enspace.\tag{\textbf{P4}}
\end{align*}
By $\ppp_{cood}$, we denote the class of adversaries that make a coordinated attack. Such an attack will be useful when we show the lower bounds for algorithms in class $\aaa_{online-IC}^1$. Note that if the functions across the machines are the same, then $\hat\zeta=0$ in Assumption \ref{ass:online_zeta}. Depending on the algorithm class, this problem may be equivalent to, or may not be equivalent to, a fully serial problem, as demonstrated in this thesis. We can further simplify problem (P4) by making the adversary stochastic,
\begin{align*}
    \min_{A\in\aaa}\max_{P\in \ppp_{cood, stoc}}  \ee_{A}\sb{\frac{1}{MT}\sum_{t\in [T], m\in[M]}f_t(x_t^m) - \min_{x^\star\in \bb_2(B)}\frac{1}{T}\sum_{t\in [T]}f_t(x^\star)}\enspace.\tag{\textbf{P5}}
\end{align*}
Since this amounts to just picking a fixed distribution on the cost functions that stays constant over time, we can alternatively restate problem (P5) in terms of choosing a distribution $\ddd\in \Delta(\fff)$ that can only depend on the description of the algorithm $A$,
\begin{align*}
    \min_{A\in\aaa}\max_{\ddd\in \Delta(\fff)}  \ee_{A, \{f_t\sim \ddd\}_{t\in[T]}}\sb{\frac{1}{MT}\sum_{t\in [T], m\in[M]}f_t(x_t^m) - \min_{x^\star\in \bb_2(B)}\frac{1}{T}\sum_{t\in [T]}f_t(x^\star)}\enspace. \tag{\textbf{P5}}
\end{align*}
We can further simplify problem (P5) by having a weaker comparator that does not depend on the randomness of sampling from $\ddd$, and by noting that the randomness of the adversary is independent of any randomness in the algorithm, 
\begin{align*}
    \min_{A\in\aaa}\max_{\ddd\in \Delta(\fff)}  \ee_{A, f_t\sim \ddd}\sb{\frac{1}{MT}\sum_{t\in [T], m\in[M]}\ee_{f_t\sim \ddd}\sb{f_t(x_t^m)}}- \min_{x^\star\in \bb_2(B)}\ee_{f\sim \ddd}\sb{f(x^\star)}\enspace. \tag{\textbf{P6}}
\end{align*}
Above, we have not removed the randomness for sampling the function in the expectation because the choice of the models $x_t^m$'s will also depend on this randomness. This can also be seen as a relaxation of the problem (P3) to have a stochastic adversary. Recalling the definition of $\{F_m := \ee_{f\sim \ddd_m}[f]\}_{m\in[M]}$ and $F:= \frac{1}{M}\sum_{m\in[M]}F_m$, and applying tower rule problem (P6) can be re-written as,
\begin{align*}
    \min_{A\in\aaa}\max_{\ddd\in \Delta(\fff)}  \ee_{A, f_t\sim \ddd}\sb{\frac{1}{MT}\sum_{t\in [T], m\in[M]}F(x_t^m)}- \min_{x^\star\in \bb_2(B)}F(x^\star)\enspace. \tag{\textbf{P6}}
\end{align*}
Now let's relax (P2) directly to have stochastic adversaries that sample independently on each machine. In particular, this means there are fixed distributions $\ddd_m$ on each machine and at each time step $\{f_t^m\}^{m\in[M]}\sim \eee = \ddd_1\times\dots\times\ddd_m$.  To simplify the discussion, we assume that the problem class has no additional assumption and is just a selection of $MKR$ functions from some class $\fff$. This simplification allows us to relax (P2) by selecting the functions at machine $m$ at time $t$ from the distribution $\ddd_m$. This leads to the following problem,
\begin{align}
    \min_{A\in\aaa}\max_{\{\ddd_m\sim \Delta(\fff)\}_{m\in [M]}}  \ee_{A, \{f_t^m\sim \ddd_m\}_{t\in[T]}^{m\in [M]}}\sb{\frac{1}{MT}\sum_{t\in [T], m\in[M]}f_t^m(x_t^m) - \min_{x^\star\in \bb_2(B)}\frac{1}{MT}\sum_{t\in [T], m\in[M]}f_t^m(x^\star)}\enspace. \tag{\textbf{P7}}
\end{align}
If we weaken the comparator for this problem by not allowing it to depend on the randomness of sampling functions and recall the definitions for $F_m$ and $F$, we get the following problem,
\begin{align}
    \min_{A\in\aaa}\max_{\{\ddd_m\sim \Delta(\fff)\}_{m\in [M]}}  \ee_{A, \{f_t^m\sim \ddd_m\}_{t\in[T]}^{m\in [M]}}\sb{\frac{1}{MT}\sum_{t\in [T], m\in[M]}F_m(x_t^m)} - \min_{x^\star\in \bb_2(B)}F(x^\star)\enspace. \tag{\textbf{P8}}
\end{align}
We note that problem (P8) is the regret minimization version of the usual heterogeneous federated optimization problem \citep{mcmahan2016communication, woodworth2020minibatch}. To make the final connection to the usual federated optimization literature, we note that the problem becomes easier if the algorithm can look at all the functions before deciding which model to choose. In other words, minimizing regret online is harder than obtaining one final retrospective model. This means we can simplify the problem (P8) to the following problem, where $A$ outputs $\hat{x}$ after looking at all the functions. More specifically, $A(\{\ggg_t^m\}_{t\in[T]}^{m\in[M]})=\hat{(X)}\in \Delta(\rr^d)$, and $\hat{x}\sim \hat X$. We denote the class of such algorithms by $\aaa_{opt}$, i.e., (stochastic) optimization algorithms. This allows us to relax to the following problem, 
\begin{align}
    \min_{A\in\aaa_{opt}}\max_{\{\ddd_m\sim \Delta(\fff)\}_{m\in [M]}}  \ee_{A, \{f_t^m\sim \ddd_m\}_{t\in[T]}^{m\in [M]}}\sb{\frac{1}{MT}\sum_{t\in [T], m\in[M]}F_m(\hat{x})} - \min_{x^\star\in \bb_2(B)}F(x^\star)\enspace. \tag{\textbf{P9}}
\end{align}
With some re-writing of the notation, this reduces to the usual heterogeneous federated optimization problem \citep{mcmahan2016communication, woodworth2020minibatch, patel2022towards},
\begin{align}
    \min_{A\in\aaa}\max_{\{\ddd_m\sim \Delta(\fff)\}_{m\in [M]}}  \ee_{A, \{f_t^m\sim \ddd_m\}_{t\in[T]}^{m\in [M]}}\sb{F(\hat{x})} - \min_{x^\star\in \bb_2(B)}F(x^\star)\enspace. \tag{\textbf{P9}}
\end{align}
Note that we don't remove the expectation with respect to sampling functions as $\hat x$ depends on that randomness, along with any randomness in $A$. Further assuming $\ddd_m= \ddd$ for all $m\in[M]$ (P9) to the usual homogeneous federated optimization problem \citep{woodworth2020local},
\begin{align*}
    \min_{A\in\aaa}\max_{\ddd\in \Delta(\fff)}  \ee_{A, \{f_t^m\sim \ddd\}^{m\in[M]}_{t\in[T]}}\sb{F(\hat{x})}- \min_{x^\star\in \bb_2(B)}F(x^\star)\enspace. \tag{\textbf{P10}}
\end{align*}
Note that we can achieve a similar relaxation of the problem (P6) by converting regret minimization into finding a final good solution. The problem will look as follows
\begin{align*}
    \min_{A\in\aaa}\max_{\ddd\in \Delta(\fff)}  \ee_{A, \{f_t\sim \ddd\}_{t\in[T]}}\sb{F(\hat{x})}- \min_{x^\star\in \bb_2(B)}F(x^\star)\enspace. \tag{\textbf{P11}}
\end{align*}
The key difference between (P10) and (P11) is that $\hat{x}$ depends on $MT$ v/s $T$ random functions, respectively, in each case. This means (P10) is simpler than (P11) as it gets to see more information about the distribution $\ddd$. This concludes the discussion, and we summarize the comparisons between different min-max problems in Figure \ref{fig:hierarchy}. With this discussion, we are ready to understand the min-max complexities for the problem (P2) for different function and algorithm classes. 

\section{Proof of First-order Lower Bounds}\label{app:first_order_online_lb}
\subsection{Proof of \texorpdfstring{\Cref{thm:first_lip}}{TEXT}}
\begin{proof}
We first prove the upper bound on the average regret of non-collaborative OGD and then show that it is optimal, i.e., equals 
$\rrr\rb{\ppp^{\fff^{G}, \hat\zeta}, \aaa_{online-IC}^1}$. Note that the following bound is always true for any stream of functions and sequence of models; we are just changing the comparator:
\begin{align*}
    \frac{1}{M}\sum_{m\in[M]}\rb{\sum_{t\in [T]}f_t^m(x_t^m) - \min_{x^{m, \star}\in \bb_2(B)}\sum_{t\in [T]}f_t^m(x^m)} \geq \frac{1}{M}\sum_{t\in [T], m\in[M]}f_t^m(x_t^m) - \min_{x^\star\in \bb_2(B)}\sum_{t\in [T]}f_t(x)\enspace.
\end{align*}
This means we can upper bound $\rrr\rb{\ppp^{\fff^{G}, \hat\zeta}, \aaa_{online-IC}^1}$ by running online gradient descent (OGD) independently on each machine without collaboration, i.e.,
\begin{align}\label{eq:GB1UB}
    \rrr_{M,K,R}\rb{\ppp^{\fff^{G}, \hat\zeta}, \aaa_{online-IC}^1} = \ooo\rb{\rrr_{1,K, R}\rb{\ppp^{\fff^{G}}, \aaa_{online-IC}^1}} = \theta\rb{\frac{GB}{\sqrt{T}}}\enspace.
\end{align} 
The min-max rate for a single machine follows classical results using vanilla OGD \citep{zinkevich2010parallelized} (c.f., Theorem 3.1 by \citet{hazan2016introduction}). Now we prove that this average regret is optimal. Recall that we want to understand the problem (P2)'s lower bounds. Note that to lower bound (P2), we can lower bound any children problems in Figure \ref{fig:hierarchy}. 

In particular, from figure \ref{fig:hierarchy}, we can see that $(P2)\gtrsim (P4) \gtrsim (P5)$\footnote{We use $\lesssim, \gtrsim$  to compare the problems by referring to their min-max regrets.} and then note that for the adversary in $(P5)$, $\zeta=0$ by design as all the machines see the same function. Furthermore, to lower bound problem (P5), we can lower bound the following quantity, as $\fff^G_{lin}\subset \fff^G$,
\begin{align*}
    \min_{A\in\aaa_{online-IC}^1}\max_{\ddd\in \Delta(\fff^{G}_{lin})}  \ee_{A, \{f_t\sim \ddd\}_{t\in[T]}}\sb{\frac{1}{MT}\sum_{t\in [T], m\in[M]}f_t(x_t^m) - \min_{x^\star\in \bb_2(B)}\frac{1}{T}\sum_{t\in [T]}f_t(x^\star)}\enspace.
\end{align*}
In other words it is sufficient to specify a distribution $\ddd\in \Delta(\fff^{G}_{lin})\subset \Delta(\fff^{G})$ such that for \textbf{any} sequence of models $\{x_t^m\}_{t\in[T]}^{m\in[M]}$, 
$$\ee_{\{f_t\sim \ddd\}_{t\in[T]}}\sb{\frac{1}{MT}\sum_{m,t}f_t(x_t^m) - \min_{x^\star\in \bb_2(B)}\frac{1}{T}\sum_{t}f_t(x^\star)}\gtrsim \frac{GB}{\sqrt{T}}\enspace.$$
Such lower bounds are folklore in serial online convex optimization (c.f., Theorem 3.2 \citep{hazan2016introduction}). One such easy construction is choosing $f_t(x) = \inner{\beta_t}{x}$ where $\beta_t\sim \frac{G}{\sqrt{d}}\cdot Unif(\{+1, -1\}^d)$. This ensures the following,
\begin{align}
    &\ee_{\{f_t\sim \ddd\}_{t\in[T]}}\sb{\frac{1}{MT}\sum_{m,t}f_t(x_t^m) - \min_{x^\star\in \bb_2(B)}\frac{1}{T}\sum_{t}f_t(x^\star)}\nonumber\\
    &= \ee_{\cb{\beta_t\sim \frac{G}{\sqrt{d}}\cdot Unif(\{+1, -1\}^d)}_{t\in[T]}}\sb{\frac{1}{MT}\sum_{m,t}\inner{\beta_t}{x_t^m} - \min_{x^\star\in \bb_2(B)}\frac{1}{T}\sum_{t}\inner{\beta_t}{x^\star}}\enspace,\nonumber\\
    &= \ee_{\cb{\beta_t\sim \frac{G}{\sqrt{d}}\cdot Unif(\{+1, -1\}^d)}_{t\in[T]}}\sb{\frac{1}{MT}\sum_{m,t}\ee_{\beta_t\sim \frac{G}{\sqrt{d}}\cdot Unif(\{+1, -1\}^d)}\sb{\inner{\beta_t}{x_t^m}}}\nonumber\\ 
    &\qquad- \ee_{\{\beta_t\sim \frac{G}{\sqrt{d}}\cdot Unif(\{+1, -1\}^d)\}_{t\in[T]}}\sb{\min_{x^\star\in \bb_2(B)}\frac{1}{T}\sum_{t}\inner{\beta_t}{x^\star}}\enspace,\nonumber\\
    &= 0 - \frac{GB}{Td}\ee_{\{\beta_t\sim \cdot Unif(\{+1, -1\}^d)\}_{t\in[T]}}\sb{\min_{x^\star\in \bb_2(\sqrt{d})}\inner{\sum_{t}\beta_t}{x^\star}},\nonumber\\
    &\geq \frac{GB}{Td}\sum_{i\in[d]}\ee_{\{\beta_{t,i}\sim \cdot Unif(\{+1, -1\})\}_{t\in[T]}}\sb{-\min_{|x^\star_i|\leq 1}\inner{\sum_{t}\beta_{t,i}}{x^\star_i}}\enspace,\nonumber\\
    &= \frac{GB}{T}\ee_{\{u_t\sim \cdot Unif(\{+1, -1\})\}_{t\in[T]}}\sb{-\min_{|y^\star|\leq 1}\inner{\sum_{t}u_{t}}{y^\star}}\enspace,\nonumber\\
    &= \frac{GB}{T}\ee_{\{u_t\sim \cdot Unif(\{+1, -1\})\}_{t\in[T]}}\sb{\lvert\sum_{t}u_{t}\rvert}\enspace,\nonumber\\
    &\geq \frac{GB}{2\sqrt{T}} \label{eq:GB1LB}\enspace,
\end{align}
where the first inequality uses the fact that splitting across the dimensions can only hurt the minimization, thus making the overall quantity smaller, and the last inequality uses a standard result about the absolute sum of Rademacher random variables \footnote{For instance, see \href{https://en.wikipedia.org/wiki/Random_walk\#One-dimensional_random_walk}{this standard result} on single dimensional random walks.}. This finishes the lower bound proof. We have thus shown that the regret of the non-collaborative baseline is optimal, and combining bounds \eqref{eq:GB1UB} and \eqref{eq:GB1LB}, we can conclude that $$\rrr(\ppp(\fff^{G}, \zeta), \aaa_{online-IC}^1) \cong \frac{GB}{\sqrt{T}}\enspace,$$ 
where we use $\cong$ to denote equality up to numerical constants, i.e., both $\lesssim$ and $\gtrsim$ at the same time. This completes the proof.
\end{proof}

\subsection{Proof of \texorpdfstring{\Cref{thm:first_smth}}{TEXT}}
\begin{proof}
    For the upper bound, we can use the upper bound for online gradient descent in the serial setting following from a classical work on optimistic rates (c.f, Theorem 3 \cite{srebro2010optimistic}). Then we use the same lower-bounding strategy as in theorem \ref{thm:first_lip} but instead lower bound (P11), and note that $(P2)\gtrsim (P4) \gtrsim (P5) \gtrsim (P6) \gtrsim (P11)$. Focusing on (P11) ensures that $\hat\zeta=0$, as our attack is coordinated. Then to lower bound (P11), we use the construction and distribution as used in the proof of Theorem 4 by \citet{woodworth2021even}, which is a sample complexity lower bound that only depends on $T$, i.e., the number of samples observed from $\ddd$. This finishes the proof.
\end{proof}

\subsection{Implications of Theorems \ref{thm:first_lip} and \ref{thm:first_smth}: }
The above theorems imply that there is no benefit of collaboration in the worst case if the machines already have access to gradient information! This is counter-intuitive at first because several works have shown in the stochastic setting that collaboration indeed helps \citep{woodworth2020minibatch, koloskova2020unified}. 

\begin{center}
\textbf{How do we reconcile these results?} 
\end{center}

Note that while proving theorem \ref{thm:first_lip}, we crucially rely on the chain of reductions $(P2)\gtrsim (P4) \gtrsim (P5)$. Similarly, while proving theorem \ref{thm:first_smth}, we rely on the chain of reductions $(P2)\gtrsim (P4) \gtrsim (P5)\gtrsim (P6) \gtrsim (P11)$. These reductions allow us to lower-bound the min-max regret through an adversary that can use the same function on each machine. This is the main difference with respect to usual federated optimization literature, where the problems of interest are (P9) and (P10), and such coordinated attacks (making $\hat\zeta=0$) are not possible for non-degenerate distributions. This becomes clear by looking at figure \ref{fig:hierarchy}, where (P5) and (P11) are both at least as hard as (P10), and (P9) is on a different chain of reductions. This means the lower bounds in theorems \ref{thm:first_lip} and \ref{thm:first_smth} do not apply to the usual stochastic federated optimization and that there is no contradiction. Another way to view the tree is that any lower problem in the tree does not necessarily suffer from the lower bounds that apply to its parents. Thus, (P10) is not limited by the lower bound applicable to (P11).

\begin{remark}
Note from the above theorems that having a first-order heterogeneity bound $\hat \zeta$ does not help. In fact, as evident in the proof of these theorems, $\hat\zeta=0$ for problem (P4). This is unsurprising as we used a coordinated attack to give the lower bounds. However, a small $\hat\zeta$ should intuitively help in the stochastic federated settings, i.e., for problems (P9) and (P10), as it restricts the clients' distributions. Having said that, as discussed before \Cref{ass:zeta} and \Cref{ass:online_zeta} are quite different.     
\end{remark}

\section{Proofs of Zeroth-order Results}
\subsection{Proof of Theorem \ref{thm:favlb}}
In this section, we provide the proofs of Theorem \ref{thm:favlb}. We first introduce several notations, which will be used in our analysis.
Let $d(x,y)=\|x\|_2^2/2-\|\hat y\|_2^2/2-\langle y,x-\hat y\rangle$, where $\|x\|_2\leq B$ and $\hat y$ is the projected point of $y$ in to the $\ell_2$-norm ball with radius $B$. We have the following holds
\begin{align}\label{eq:dis_ineq}
    d(x,y)\geq \frac{1}{2}\|x-\hat y\|_2^2\enspace.
\end{align}
This is due to the following: if $\|y\|_2\leq B$, \eqref{eq:dis_ineq} clearly holds. If $\|y\|_2>B$, we have 
\begin{align*}
    d(x,y)-\frac{1}{2}\|x-\hat y\|_2^2=\langle x-\hat y,\hat y-y\rangle\geq \langle \hat y-\hat y,\hat y-y\rangle=0\enspace,
\end{align*}
where the inequality is due to the fact that $\hat y-y=(1-\|y\|_2/B)\hat y$ lies in the opposite direction of $\hat y$, and $x=\hat y$ will minimize the inner product. Now, we are ready to prove the regret of Algorithm \ref{alg:fed_pogd}. 
\begin{proof}
Define the following notations
\begin{align*}
    \bar x_{t}=\frac{1}{M}\sum_{m=1}^Mx_t^m,~~\bar w_t=\text{Proj}(\bar x_{t}),~~w_t^m=\text{Proj}(x_t^m)\enspace.
\end{align*}
We have
\begin{align}\label{dis_lya}
    d(x^\star,\bar x_{t+1})&=\frac{1}{2}\|x^\star\|_2^2-\frac{1}{2}\|\bar w_{t+1}\|_2^2-\langle \bar x_{t+1},x^\star-\bar w_{t+1}\rangle\nonumber\\
    &=\frac{1}{2}\|x^\star\|_2^2-\frac{1}{2}\|\bar w_{t+1}\|_2^2-\langle \bar x_{t}-\eta\frac{1}{M}\sum_{m=1}^Mg_t^m,x^\star-\bar w_{t+1}\rangle\enspace,\nonumber\\
    &=\underbrace{\frac{1}{2}\|x^\star\|_2^2-\frac{1}{2}\|\bar w_{t+1}\|_2^2-\langle \bar x_{t},x^\star-\bar w_{t+1}\rangle}_{I_1} \ \underbrace{-\eta\frac{1}{M}\sum_{m=1}^M\langle g_t^m,\bar w_{t+1}-x^\star\rangle}_{I_2}\enspace,
\end{align}
where the second equality comes from the updating rule of Algorithm \ref{alg:fed_pogd}. For the term $I_1$, we have
\begin{align*}
    I_1&=\frac{1}{2}\|x^\star\|_2^2-\frac{1}{2}\|\bar w_{t+1}\|_2^2-\langle \bar x_{t},x^\star-\bar w_{t+1}\rangle\enspace,\\
    &=\frac{1}{2}\|x^\star\|_2^2-\frac{1}{2}\|\bar w_{t}\|_2^2-\langle \bar x_{t},x^\star-\bar w_{t}\rangle-\langle \bar x_{t},\bar w_{t}-\bar w_{t+1}\rangle-\frac{1}{2}\|\bar w_{t+1}\|_2^2+\frac{1}{2}\|\bar w_{t}\|_2^2\enspace,\\
    &=d(x^\star,\bar x_t)-d(\bar w_{t+1},\bar x_t)\enspace,\\
    &\leq d(x^\star,\bar x_t)-\frac{1}{2}\|\bar w_{t+1}-\bar w_{t}\|_2^2\enspace,
\end{align*}
where the last inequality is due to \eqref{eq:dis_ineq}. For the term $I_2$, we have
\begin{align*}
    I_2&=-\eta\frac{1}{M}\sum_{m=1}^M\langle g_t^m,\bar w_{t+1}-x^\star\rangle\enspace,\\
    &=\underbrace{-\eta\frac{1}{M}\sum_{m=1}^M\langle g_t^m-\nabla f_t^m(w_t^m),\bar w_{t+1}-x^\star\rangle}_{I_{21}}\ \underbrace{-\eta\frac{1}{M}\sum_{m=1}^M\langle \nabla f_t^m(w_t^m),\bar w_{t+1}-\bar x^\star\rangle}_{I_{22}}\enspace.
\end{align*}
For the term $I_{21}$, we have
\begin{align*}
    \ee[I_{21}]&=\eta \ee\frac{1}{M}\sum_{m=1}^M\langle \nabla f_t^m(w_t^m)-g_t^m,\bar w_{t+1}-x^\star\rangle\enspace,\\
    &=\eta \ee\frac{1}{M}\sum_{m=1}^M\langle \nabla f_t^m(w_t^m)-g_t^m,\bar w_{t+1}-\bar w_t\rangle\enspace,\\
    &\leq \eta \ee\bigg\|\frac{1}{M}\sum_{m=1}^M(\nabla f_t^m(w_t^m)-g_t^m)\bigg\|_2\cdot\|\bar w_{t+1}-\bar w_t\|_2\enspace,\\
    &\leq \eta\frac{\sigma}{\sqrt{M}}\ee_t\|\bar w_{t+1}-\bar w_t\|_2\enspace,
\end{align*}
where in the last inequality, we use an arbitrary uniform upper bound on the stochastic gradient using $\sigma$, which we will eventually bound using \Cref{eq:single_feedback_variance}.
Using this we have
\begin{align*}
    \ee[I_{21}]\leq \eta\frac{\sigma}{\sqrt{M}}\ee\|\bar w_{t+1}-\bar w_t\|_2\enspace.
\end{align*}
For the term $I_{22}$, we have 
\begin{align*}
    I_{22}&=-\eta\frac{1}{M}\sum_{m=1}^M\langle \nabla f_t^m(w_t^m),w_{t}^m-\bar x^\star\rangle-\eta\frac{1}{M}\sum_{m=1}^M\langle \nabla f_t^m(w_t^m),\bar w_{t}-w_{t}^m\rangle -\eta\frac{1}{M}\sum_{m=1}^M\langle \nabla f_t^m(w_t^m),\bar w_{t+1}-\bar w_{t}\rangle\enspace,\\
    &\leq -\eta\frac{1}{M}\sum_{m=1}^M(f_t^m(w_t^m)-f_t^m(x^\star)+\eta\frac{1}{M}\sum_{m=1}^M\|\nabla f_t^m(w_t^m)\|_2\cdot\|\bar w_t-w_t^m\|_2+\eta^2
    \bigg\|\frac{1}{M}\sum_{m=1}^M\nabla f_t^m(w_t^m)\bigg\|_2^2\\
    &\qquad+\frac{1}{4}\|\bar w_{t+1}-\bar w_{t}\|_2^2\enspace.
\end{align*}
Therefore, combining \eqref{dis_lya} and the upper bound of $I_1$ and $I_2$, we have
\begin{align*}
\ee  d(x^\star,\bar x_{t+1})&\leq \ee  d(x^\star,\bar x_{t})-\frac{1}{4}\ee \|\bar w_{t+1}-\bar w_{t}\|_2^2+\eta\frac{\sigma}{\sqrt{M}}\ee\|\bar w_{t+1}-\bar w_t\|_2-\eta\frac{1}{M}\sum_{m=1}^M\ee(f_t^m(w_t^m)-f_t^m(x^\star)\\
&\qquad+\eta\frac{1}{M}\sum_{m=1}^M\ee\|\nabla f_t^m(w_t^m)\|_2\cdot\|\bar w_t-w_t^m\|_2+\eta^2
    \ee\bigg\|\frac{1}{M}\sum_{m=1}^M\nabla f_t^m(w_t^m)\bigg\|_2^2\enspace.
\end{align*}
Therefore, we have (using the same $\sigma$ as above)
\begin{align*}
    \eta\frac{1}{M}\sum_{m=1}^M\ee(f_t^m(w_t^m)-f_t^m(x^\star)&\leq \ee  d(x^\star,\bar x_{t})-\ee  d(x^\star,\bar x_{t+1})+\eta^2\frac{\sigma^2}{M}+\eta\frac{1}{M}\sum_{m=1}^M\ee\|\nabla f_t^m(w_t^m)\|_2\cdot\|\bar w_t-w_t^m\|_2\\
&\qquad+\eta^2
    \ee\bigg\|\frac{1}{M}\sum_{m=1}^M\nabla f_t^m(w_t^m)\bigg\|_2^2\enspace.
\end{align*}
In addition, we have 
\begin{align*}
    \frac{1}{M}\sum_{m=1}^M\ee\|\bar w_t-w_t^m\|_2\leq\frac{1}{M}\sum_{m=1}^M \ee\|\bar x_t-x_t^m\|_2\leq 2\eta (\sigma\sqrt{K} + \hat\zeta K)\enspace,
\end{align*}
where the last inequality is due to the linear function and follows almost the same proof as in \Cref{lem:consensus_error_second_zeta}. Thus, we can obtain (the indicator function comes from the fact that if $K=1$, there would be no consensus error)
    \begin{align*}
        \frac{1}{M}\sum_{m\in[M]}\ee\sb{f_t^m(w_t^m) - f_t^m(x^\star}
        &\leq \frac{1}{\eta}\rb{\ee  d(x^\star,\bar x_{t})-\ee  d(x^\star,\bar x_{t+1})} + \eta\rb{G^2 + \frac{\sigma^2}{M}} + \ii_{K>1}\cdot 2G(\sigma\sqrt{K} + \hat\zeta K)\eta\enspace.
    \end{align*}
Since $\ee \sb{d(x^\star,\bar x_{T})} \geq \ee \sb{\|x^\star-\bar w_T\|_2^2/2}\geq 0$ and $\ee\sb{d(x^\star,\bar x_{0})} =\|x^\star\|_2^2/2$, summing the above inequality over $t$, we can get
\begin{align*}
    \frac{1}{M}\sum_{t\in [KR], m\in[M]}\ee\sb{f_t^m(w_t^m)-f_t^m(x^\star)} \lesssim\frac{B^2}{\eta} +\eta\rb{G^2 + \frac{\sigma^2}{M} + \ii_{K>1}\cdot G(\sigma\sqrt{K} + \hat\zeta K)}T\enspace.
\end{align*}
If we choose $\eta$ such that
$$\eta = \frac{B}{G\sqrt{T}}\cdot\min\bigg\{1,\frac{G\sqrt{M}}{\sigma}, \frac{\sqrt{G}}{\ii_{K>1}\sqrt{\sigma }K^{1/4}}, \frac{\sqrt{G}}{\ii_{K>1}\sqrt{\hat\zeta K}}\bigg\}\enspace,$$
we can get
\begin{align*}
   \frac{1}{MKR}\sum_{t\in [KR], m\in[M]}\ee\sb{f_t^m(w_t^m)-f_t^m(x^\star)} \lesssim  
        \frac{GB}{\sqrt{KR}} + \frac{\sigma B}{\sqrt{MKR}} + \ii_{K>1}\cdot \rb{\frac{\sqrt{G\sigma}B}{K^{1/4}\sqrt{R}} + \frac{\sqrt{G\hat\zeta}B}{\sqrt{R}}}\enspace.
\end{align*}
To get the regret, we need to notice that we have the linear function, and thus we have: the smoothed function $\hat f=f$ and $\ee \sb{f_t^m(w_t^m)}=\ee \sb{f_t^m(w_t^m+\delta u_t^m)}$, where the expectation is over $u_t^m$. Furthermore, $$\|g_t^m\|_2^2=d^2\big(f_t^m(w_t^m+\delta u_t^m)\big)^2\leq \frac{d^2G^2(B+\delta)^2}{\delta^2}\leq 4d^2G^2\enspace,$$ where the last inequality is due the choice of $\delta=B$. Since $$\ee\sb{g_t^m}=\nabla \hat f_t^m(w_t^m) \quad\text{and}\quad \ee\sb{\|g_t^m-\nabla \hat f_t^m(w_t^m)\|_2^2}\leq \ee\sb{\|g_t^m\|_2^2}\enspace,$$ we can plug in $\sigma^2=4d^2G^2$ to get our regret, thus proving the theorem.
\end{proof}

\subsection{Proof of Theorem \ref{thm:bd_grad_first_stoch}}\label{app:bdg}
In this sub-section and the next, we consider access to a first-order stochastic oracle as an intermediate step before examining the zeroth-order oracle. Specifically, as we saw in the previous subsection, it is useful to view the zeroth-order algorithms as stochastic gradient algorithms with some bounded stochastic gradient variance $\sigma$, and then select the problem parameters to obtain an appropriate $\sigma$. We will do the same thing again; formally each machine has access to a \textbf{stochastic gradient} $g_t^m$ of $f_t^m$ at point $x_t^m$, such that it is unbiased and has bounded variance (cf. \Cref{ass:stoch_bounded_second_moment}), i.e., for all $x\in\rr^d$, $$\ee[g_t^m(x_t^m)|x_t^m]=\nabla f_t^m(x_t^m) \quad\text{ and }\quad \ee\sb{\norm{g_t^m(x_t^m) - \nabla f_t^m(x_t^m)}^2|x_t^m} \leq \sigma^2\enspace.$$ 
In Algorithm \ref{alg:fed_ogd}, we constructed a particular stochastic gradient estimator at $x_t^m$ with $\sigma^2=G^2d$. We can define the corresponding problem class $\ppp^{\fff^{G}, \hat\zeta, \sigma}$, i.e., cost functions satisfying \Cref{ass:online_convex,ass:online_bounded_gradients,ass:online_zeta} where agents have access to a stochastic first-order oracle. We have the following lemma about this problem class:
\begin{lemma}\label{lemma:bd_grad_first_stoch}
Consider the problem class $\ppp^{\fff^{G}, \hat\zeta, \sigma}$. If we choose $$\eta = \frac{B}{G\sqrt{T}}\cdot\min\bigg\{1,\frac{G\sqrt{M}}{\sigma}\frac{\sqrt{G}}{\ii_{K>1}\sqrt{\sigma K}}, \frac{1}{\ii_{K>1}\sqrt{K}}\bigg\}\enspace,$$ then the models $\{x_t^m\}_{t,m=1}^{T,M}$ of Algorithm \ref{alg:fed_ogd} satisfy the following guarantee:
    $$\frac{1}{MKR}\sum_{t\in [KR], m\in[M]}\ee\sb{f_t^m(x_t^m)-f_t^m(x^\star)} \lesssim  
        \frac{GB}{\sqrt{KR}} + \frac{\sigma B}{\sqrt{MKR}} + \ii_{K>1}\cdot \rb{\frac{\sqrt{\sigma G}B}{\sqrt{R}} + \frac{GB}{\sqrt{R}}}\enspace,$$
    where $x^\star \in \arg\min_{x\in \rr^d}\sum_{t\in [KR]}f_t(x)$, and the expectation is w.r.t. the stochastic gradients.    
\end{lemma}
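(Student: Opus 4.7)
The plan is to follow the standard ``ghost iterate'' template for analyzing Local SGD in online convex optimization, but carefully propagate the stochastic-gradient variance $\sigma^2$ and the first-order heterogeneity $\hat\zeta^2$ through the consensus error. Let $\bar{x}_t := \frac{1}{M}\sum_m x_t^m$ denote the averaged iterate and $\bar{g}_t := \frac{1}{M}\sum_m g_t^m$. Observe that because the update is linear in $g_t^m$, the recursion $\bar{x}_{t+1} = \bar{x}_t - \eta \bar{g}_t$ holds at \emph{every} time step (not just at synchronization rounds), since the communication step simply replaces each $x_{t+1}^m$ by $\bar{x}_{t+1}$, leaving the average unchanged.

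First, I will use convexity of each $f_t^m$ to split the per-round regret:
\begin{align*}
\tfrac{1}{M}\sum_m \big(f_t^m(x_t^m) - f_t^m(x^\star)\big) \leq \tfrac{1}{M}\sum_m \langle \nabla f_t^m(x_t^m),\, x_t^m - \bar{x}_t\rangle + \langle \tfrac{1}{M}\sum_m \nabla f_t^m(x_t^m),\, \bar{x}_t - x^\star\rangle.
\end{align*}
The first (consensus) term is bounded using \Cref{ass:online_bounded_gradients} by $\frac{G}{M}\sum_m \|x_t^m - \bar{x}_t\|$. For the second (virtual regret) term, I will expand $\|\bar{x}_{t+1} - x^\star\|^2 = \|\bar{x}_t - x^\star\|^2 - 2\eta\langle \bar{g}_t, \bar{x}_t - x^\star\rangle + \eta^2\|\bar{g}_t\|^2$, take expectations (using that conditional on the history $\mathbb{E}[g_t^m] = \nabla f_t^m(x_t^m)$), and rearrange to express $\mathbb{E}\langle \frac{1}{M}\sum_m \nabla f_t^m(x_t^m), \bar{x}_t - x^\star\rangle$ as a telescoping term plus $\tfrac{\eta}{2}\mathbb{E}\|\bar{g}_t\|^2$. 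The latter is bounded by $G^2 + \sigma^2/M$ using the independence of the $\xi_t^m$ across machines together with \Cref{ass:online_bounded_gradients}.

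Next I will bound the second moment of the consensus error. Between two communication rounds, since $x_{\delta(t)}^m = \bar{x}_{\delta(t)}$ for all $m$, we have $x_t^m - \bar{x}_t = -\eta \sum_{s=\delta(t)}^{t-1}(g_s^m - \bar{g}_s)$. Splitting $g_s^m - \bar{g}_s = (\nabla f_s^m(x_s^m) - \overline{\nabla f_s}(x_s^m)) + (\xi_s^m - \bar{\xi}_s)$ and using \Cref{ass:online_zeta} for the heterogeneity part together with the conditional-independence/zero-mean properties of the noise, standard arguments (analogous to those in \Cref{lem:consensus_error_second_zeta} but for the online setting where each round's gradient is measured at the current $x_s^m$) give
\begin{align*}
\tfrac{1}{M}\sum_m \mathbb{E}\|x_t^m - \bar{x}_t\|^2 \lesssim \eta^2 K(\hat\zeta^2 K + \sigma^2),
\end{align*}
hence $\mathbb{E}\|x_t^m - \bar{x}_t\| \lesssim \eta\sqrt{K}(\hat\zeta\sqrt{K} + \sigma)$, which of course vanishes when $K=1$.

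Summing the per-step inequality from $t=0$ to $T-1$, telescoping the virtual regret, and plugging in the consensus bound yields
\begin{align*}
\tfrac{1}{MT}\sum_{t,m}\mathbb{E}[f_t^m(x_t^m) - f_t^m(x^\star)] \lesssim \tfrac{B^2}{\eta T} + \eta\left(G^2 + \tfrac{\sigma^2}{M}\right) + \mathbb{I}_{K>1}\cdot \eta G\sqrt{K}\,(\hat\zeta\sqrt{K} + \sigma).
\end{align*}
Finally, I will tune $\eta$ as the minimum of four candidate step sizes, each balancing $B^2/(\eta T)$ against one of the remaining terms; this produces the four summands in the stated bound. The main obstacle is keeping the noise/heterogeneity accounting tight across the three coupled quantities (virtual regret, gradient-norm-squared, and consensus error), so that no spurious $K$ or $M$ factor appears in the final rate; in particular, the conditional independence of $\{\xi_t^m\}_m$ must be exploited both to get the $\sigma^2/M$ in the virtual regret bound and to get the correct $\sigma^2$ (rather than $M\sigma^2$) scaling in the consensus error.
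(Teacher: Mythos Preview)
Your approach is essentially the same as the paper's: analyze the ghost iterate $\bar x_t$, split the per-step regret via convexity into a virtual-regret term and a consensus term, telescope the former using $\|\bar x_{t+1}-x^\star\|^2$, bound the consensus error, and tune $\eta$. The overall skeleton and the bound $\mathbb{E}\|\bar g_t\|^2\le G^2+\sigma^2/M$ are correct.

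There is one slip in your consensus-error step. You write
\[
g_s^m-\bar g_s=\bigl(\nabla f_s^m(x_s^m)-\overline{\nabla f_s}(x_s^m)\bigr)+\bigl(\xi_s^m-\bar\xi_s\bigr),
\]
but $\bar g_s=\frac{1}{M}\sum_n\nabla f_s^n(x_s^n)+\bar\xi_s$ involves gradients at the \emph{distinct} points $x_s^n$, not all at $x_s^m$. \Cref{ass:online_zeta} bounds $\frac{1}{M}\sum_m\|\nabla f_s^m(x)-\nabla f_s(x)\|^2$ only at a \emph{common} point $x$, so it does not directly control the signal part here; without smoothness there is no way to relate $\nabla f_s^n(x_s^m)$ to $\nabla f_s^n(x_s^n)$ except via the crude Lipschitz bound $\|\nabla f_s^n(\cdot)\|\le G$. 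Hence the correct consensus bound has $G$ in place of your $\hat\zeta$, giving $\frac{1}{M}\sum_m\mathbb{E}\|x_t^m-\bar x_t\|^2\lesssim\eta^2(G^2K^2+\sigma^2K)$, and this is exactly what the paper uses (in fact the paper records the slightly looser first-moment form $2(\sigma+G)K\eta$). This does not break your proof of the \emph{stated} lemma, since the target bound already has $GB/\sqrt{R}$ rather than $\sqrt{G\hat\zeta}\,B/\sqrt{R}$; with the corrected $G$ your intermediate inequality becomes
\[
\frac{B^2}{\eta T}+\eta\Bigl(G^2+\frac{\sigma^2}{M}\Bigr)+\mathbb{I}_{K>1}\cdot\eta\,G\bigl(GK+\sigma\sqrt{K}\bigr),
\]
and the stated step-size yields the stated rate (your $\sigma\sqrt{K}$ noise scaling is in fact tighter than the paper's $\sigma K$, but both imply the claimed $\sqrt{\sigma G}\,B/\sqrt{R}$ term).
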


\begin{remark}
Note that when $K=1$, the upper bound in Lemma \ref{lemma:bd_grad_first_stoch} reduces to the first two terms, both of which are known to be optimal due to lower bounds in the stochastic setting, i.e., against a stochastic online adversary \cite{nemirovski1994efficient, hazan2016introduction}. We now use this lemma to guarantee bandit two-point feedback oracles for the same function class. We recall that one can obtain a stochastic gradient for a \textit{``smoothed-version"} $\hat{f}$ of a Lipschitz function $f$ at any point $x\in\xxx$, using two function value calls to $f$ around the point $x$ \cite{shamir2017optimal, duchi2015optimal}.     
\end{remark}
With this lemma, we can prove Theorem \ref{thm:bd_grad_first_stoch}.
\begin{proof}[Proof of Theorem \ref{thm:bd_grad_first_stoch}]
    First, we consider smoothed functions $$\hat{f}_t^m(x) := \ee_{u\sim Unif(S_{d-1})}[f_t^m(x + \delta u)],$$ for some $\delta>0$ and $S_{d-1}$ denoting the euclidean unit sphere. Based on the gradient estimator proposed by \citet{shamir2017optimal} (which can be implemented with two-point bandit feedback) and Lemma \ref{lemma:bd_grad_first_stoch}, we can get the following regret guarantee (noting that $\sigma \leq  c_1\sqrt{d}G$ for a numerical constant $c_1$, c.f., \cite{shamir2017optimal}): 
    $$\ee\sb{\frac{1}{MKR}\sum_{t\in[KR], m\in[M]}\hat{f}_t^m(\hat{x}_t^m)} - \frac{1}{MKR}\sum_{t\in[KR], m\in[M]}\hat{f}_t^m(x^\star) \lesssim \frac{GB}{\sqrt{KR}} + \frac{GB\sqrt{d}}{\sqrt{MKR}} + \ii_{K>1}\cdot\frac{GBd^{1/4}}{\sqrt{R}}\enspace,$$
    where the expectation is with respect to the stochasticity in the stochastic gradient estimator. To transform this into a regret guarantee for $f$ we need to account for two things: 
    \begin{enumerate}
        \item The difference between the smoothed function $\hat{f}$ and the original function $f$. This is easy to handle because both these functions are pointwise close, i.e., $\sup_{x \in \xxx}|f(x)-\hat{f}(x)|\leq G\delta$.
        \item The difference between the points $\hat{x}_t^m$ at which the stochastic gradient is computed for $\hat{f}_t^m$ and the actual points $x_t^{m,1}$ and $x_t^{m,2}$ on which we incur regret while making zeroth-order queries to $f_t^m$. This is also easy to handle because due to the definition of the estimator, $x_t^{m,1}, x_t^{m,1}\in B_\delta(\hat{x}_t^m)$, where $B_\delta(x)$ is the $L_2$ ball of radius $\delta$ around $x$.
    \end{enumerate}
    In light of the last two observations, the average regret between the smoothed and original functions only differs by a factor of $2G\delta$, i.e.,
    \begin{align*}
      \ee\sb{\frac{1}{2MKR}\sum_{t\in[KR], m\in[M], j\in[2]}f_t^m(x_t^{m,j})} - &\frac{1}{MKR}\sum_{t\in[KR], m\in[M]}f_t^m(x^\star)\\
      &\lesssim G\delta + \frac{GB}{\sqrt{KR}} + \frac{GB\sqrt{d}}{\sqrt{MKR}} + \ii_{K>1}\cdot\frac{GBd^{1/4}}{\sqrt{R}}\enspace,\\
      &\lesssim \frac{GB}{\sqrt{KR}} + \frac{GB\sqrt{d}}{\sqrt{MKR}} + \ii_{K>1}\cdot\frac{GBd^{1/4}}{\sqrt{R}}\enspace,
    \end{align*}
    where the last inequality is due to the choice of $\delta$ such that $\delta \lesssim \frac{Bd^{1/4}}{\sqrt{R}}\rb{1+ \frac{d^{1/4}}{\sqrt{MK}}}$.
\end{proof}

\subsection{Proof of Theorem \ref{thm:smooth_first_stoch}}\label{app:smth}
Similar to before, we start by looking at $\ppp^{\fff^{G,H}, \hat\zeta, \sigma, F_\star}$, i.e., cost functions satisfying \Cref{ass:online_convex,ass:online_bounded_gradients,ass:online_smooth_second,ass:online_zeta,ass:online_bounded_optimal}. We have the following lemma.
\begin{lemma}\label{lem:smooth_first_stoch}
Consider the problem class $\ppp^{\fff^{G,H}, \hat\zeta, \sigma, F_\star}$.  The models $\{x_t^{m}\}_{t,m=1}^{T,M}$ of Algorithm \ref{alg:fed_ogd} with appropriate $\eta$ (specified in the proof) satisfy the following regret guarantee (for a numerical constant $c$):
    \begin{align*}
      \frac{1}{MKR}&\sum_{t\in [KR], m\in[M]}\ee\sb{f_t^m(x_t^m)-f_t^m(x^\star)} \leq c\cdot \Bigg(\frac{HB^2}{KR} + \frac{\sigma B}{\sqrt{MKR}} + \min\cb{\frac{GB}{\sqrt{KR}}, \frac{\sqrt{HF_\star}B}{\sqrt{KR}}},\\
        & +\ii_{K>1}\cdot\min\Bigg\{\frac{H^{1/3}B^{4/3}\sigma^{2/3}}{K^{1/3}R^{2/3}} + \frac{H^{1/3}B^{4/3}\hat\zeta^{2/3}}{R^{2/3}} + \frac{\sqrt{\hat\zeta\sigma}B}{K^{1/4}\sqrt{R}} + \frac{\hat\zeta B}{\sqrt{R}},\frac{\sqrt{G\sigma}B}{K^{1/4}\sqrt{R}} + \frac{\sqrt{G\hat\zeta}B}{\sqrt{R}}\Bigg\}\bigg)\enspace,  
    \end{align*}
    where $x^\star \in \arg\min_{x\in \bb_2(B)}\sum_{t\in [KR]}f_t(x)$, and the expectation is w.r.t. the stochastic gradients. The models also satisfy the guarantee of Lemma \ref{lemma:bd_grad_first_stoch} with the same step-size.    
\end{lemma}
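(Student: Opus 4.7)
\textbf{Proof sketch for \Cref{lem:smooth_first_stoch}.} The plan is to prove the two branches of the minimum separately and then observe that the algorithm's guarantee is the better of the two. The "Lipschitz branch" (containing $GB/\sqrt{KR}$ and the $\sqrt{G\hat\zeta}$-type terms) already follows from \Cref{lemma:bd_grad_first_stoch} together with the observation that $\fff^{G,H}\subset \fff^{G}$, so the main work is the "smoothness branch" that exploits \Cref{ass:online_smooth_second,ass:online_bounded_optimal} to obtain the optimistic-rate dependence on $\sqrt{HF_\star}$ rather than $G$. First I would write the standard one-step descent identity for the averaged iterate $\bar x_{t+1} = \bar x_t - (\eta/M)\sum_m g_t^m$ of Algorithm~\ref{alg:fed_ogd}, namely $\norm{\bar x_{t+1}-x^\star}^2 \le \norm{\bar x_t-x^\star}^2 - 2\eta M^{-1}\sum_m\langle \nabla f_t^m(x_t^m), x_t^m - x^\star\rangle + \eta^2\norm{M^{-1}\sum_m g_t^m}^2 + \text{cross terms}$, take conditional expectation to kill the noise cross-terms, and then use convexity to relate $\langle \nabla f_t^m(x_t^m), x_t^m - x^\star\rangle$ to $f_t^m(x_t^m) - f_t^m(x^\star)$ up to an additive $H\norm{x_t^m - \bar x_t}^2$ coming from $H$-smoothness (or equivalently cocoercivity of $\nabla f_t^m$).

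Second, I would handle the gradient-squared term using the self-bounding property from \Cref{rem:self_bounding}: $\ee\norm{M^{-1}\sum_m g_t^m}^2 \le \sigma^2/M + 2H\cdot M^{-1}\sum_m (f_t^m(x_t^m) - \min f_t^m)$, and then split the sub-optimality as $f_t^m(x_t^m) - \min f_t^m = (f_t^m(x_t^m)-f_t^m(x^\star)) + (f_t^m(x^\star) - \min f_t^m)$. Averaging over $t$ and $m$, the first part can be absorbed into the l.h.s. by choosing $\eta \le 1/(4H)$, while the second part averages to $F_\star$ thanks to \Cref{ass:online_bounded_optimal}. This is the distributed/online adversarial analogue of the Srebro--Sridharan--Tewari "optimistic rates" argument and it is exactly what produces the $\sqrt{HF_\star}B/\sqrt{KR}$ term in place of the pessimistic $GB/\sqrt{KR}$.

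Third, I need a consensus-error bound on $C_t := M^{-1}\sum_m \ee\norm{x_t^m - \bar x_t}^2$ in the adversarial setting. Because between two syncs each machine runs $k \le K$ stochastic steps from the shared point $\bar x_{\delta(t)}$, a standard unrolling gives $C_t = \ooo(\eta^2 K^2 \hat\zeta^2 + \eta^2 K \sigma^2)$ using \Cref{ass:online_zeta} for the drift across machines and \Cref{ass:stoch_bounded_second_moment}-style variance control for the noise; this is the direct analogue of \Cref{lem:consensus_error_second_zeta} but now $\hat\zeta$ plays the role of $H\zeta$ since we control pointwise gradient differences directly rather than via smoothness. Plugging $H\cdot C_t$ into the descent inequality and telescoping from $t=0$ to $T-1$ yields, for the smoothness branch, an average regret bounded by $B^2/(\eta KR) + \eta(\sigma^2/M + HF_\star) + \eta^2 K H(\hat\zeta^2 + \sigma^2/K)\cdot \ii_{K>1}$, plus an $HB^2/(KR)$ term arising from the "$\eta\le 1/(4H)$" constraint.

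The main obstacle and the step I expect to need the most care is \emph{tuning $\eta$} to produce the six-term minimum: I would choose the smaller of $\eta_1 = B/\sqrt{KR(\sigma^2/M + HF_\star)}$ (balancing the $B^2/(\eta KR)$ and $\eta(\sigma^2/M + HF_\star)$ terms), $\eta_2 = B^{2/3}/(K^{1/3}R^{2/3}(H\sigma^2)^{1/3})$ and $\eta_3 = B^{2/3}/(R^{2/3}(H\hat\zeta^2)^{1/3})$ (for the two consensus-error terms paired with $B^2/(\eta KR)$), and $1/(4H)$. Reading off the resulting bound term by term gives the $H^{1/3}B^{4/3}\sigma^{2/3}/(K^{1/3}R^{2/3})$, $H^{1/3}B^{4/3}\hat\zeta^{2/3}/R^{2/3}$, and $\sqrt{HF_\star}B/\sqrt{KR}$ contributions. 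The residual $\sqrt{\hat\zeta\sigma}B/(K^{1/4}\sqrt{R})$ and $\hat\zeta B/\sqrt{R}$ terms in the stated bound come from cases where the Lipschitz branch from \Cref{lemma:bd_grad_first_stoch} dominates the smoothness branch, so taking the minimum of the two branches recovers the claimed guarantee. Finally, the "last sentence" of the lemma is automatic: the same step-size choices fall within the admissible range of \Cref{lemma:bd_grad_first_stoch}, so that bound holds simultaneously.
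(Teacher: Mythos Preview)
Your overall skeleton matches the paper's: one-step descent for $\bar x_t$, two ways to bound the ``blue'' gradient-squared term (Lipschitz versus self-bounding with $F_\star$), two ways to bound the ``red'' consensus inner product $\frac{1}{M}\sum_m\langle x_t^m-\bar x_t,\nabla f_t^m(x_t^m)\rangle$, then tune $\eta$. The gap is in your smoothness-based bound on the red term.

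You claim the red term contributes only $H\cdot C_t$ after using smoothness. This is false: take $M=2$, $H=0$ (linear functions), $f_t^1(x)=ax$, $f_t^2(x)=-ax$, $x_t^1=-x_t^2=1$. Then the red term equals $a$ while $H C_t=0$. The heterogeneity must enter. The paper subtracts $\nabla f_t(\bar x_t)$ (legal since $\sum_m(x_t^m-\bar x_t)=0$), then Cauchy--Schwarz plus smoothness-and-heterogeneity gives
\[
\frac{1}{M}\sum_m\langle x_t^m-\bar x_t,\nabla f_t^m(x_t^m)\rangle \;\lesssim\; H\,C_t \;+\; \hat\zeta\sqrt{C_t}\,.
\]
With $C_t\lesssim \eta^2(K\sigma^2+K^2\hat\zeta^2)$, the $\hat\zeta\sqrt{C_t}$ piece contributes $\eta\hat\zeta(\sqrt{K}\sigma+K\hat\zeta)$, and balancing against $B^2/(\eta KR)$ is exactly what produces $\sqrt{\hat\zeta\sigma}B/(K^{1/4}\sqrt{R})$ and $\hat\zeta B/\sqrt{R}$. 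These two terms live \emph{inside} the smoothness branch (the first argument of the $\min$), not in the Lipschitz branch as you claim; the Lipschitz branch has $G$, not $\hat\zeta$, in those positions. So your attribution of these terms is wrong and your intermediate bound $B^2/(\eta KR)+\eta(\sigma^2/M+HF_\star)+\eta^2 K H(\hat\zeta^2+\sigma^2/K)$ is missing an order-$\eta$ heterogeneity term. (Separately, that $\eta^2$ term should read $\eta^2 H(K^2\hat\zeta^2+K\sigma^2)$; you dropped a factor of $K$.) Once you include $\hat\zeta\sqrt{C_t}$ and redo the step-size tuning with two extra candidates $\eta\sim B/(K^{3/4}\sqrt{\hat\zeta\sigma R})$ and $\eta\sim B/(\hat\zeta K\sqrt{R})$, the full four-term Branch~A follows, and the rest of your argument goes through.
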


\begin{proof}[Proof of Theorem \ref{thm:smooth_first_stoch}]
Given Lemma \ref{lem:smooth_first_stoch}, it is now straightforward to prove Theorem \ref{thm:smooth_first_stoch} similar to the proof for Theorem \ref{thm:bd_grad_first_stoch} by replacing $\sigma^2$ with $G^2d$ ad choosing small enough $\delta$ such that $G\delta \cong$ the r.h.s. of the theorem statement.
\end{proof}
Our main job in the remainder of this appendix is to prove the two \Cref{lemma:bd_grad_first_stoch,lem:smooth_first_stoch} which abstract away the zeroth-order access using stochastic gradients. 

\subsection{Proof of Lemma \ref{lemma:bd_grad_first_stoch}}
In this section, we prove Lemma \ref{lemma:bd_grad_first_stoch}.
\begin{proof}[Proof of Lemma \ref{lemma:bd_grad_first_stoch}]
    Consider any time step $t\in[KR]$ and define ghost iterate $\bar{x}_t = \frac{1}{M}\sum_{m\in[M]}x_t^m$ (which not might actually get computed). If $K=1$, the machines calculate the stochastic gradient at the same point, $\bar{x}_t$. Then using the update rule of Algorithm \ref{alg:fed_ogd}, we can get the following:
    \begin{align*}
        \ee_t\sb{\norm{\bar{x}_{t+1}-x^\star}^2} &= \ee_t\sb{\norm{\bar{x}_t - \frac{\eta_t}{M}\sum_{m\in[M]}\nabla f_t^m(x_t^m) - x^\star + \frac{\eta_t}{M}\sum_{m=1}^{M}\rb{\nabla f_t^m(x_t^m) - g_t^m(x_t^m)}}^2}\enspace,\\
        &= \norm{\bar{x}_{t}-x^\star}^2 + \frac{\eta_t^2}{M^2}\norm{\sum_{m\in[M]}\nabla f_t^m(x_t^m)}^2 - \frac{2\eta_t}{M}\sum_{m\in[M]}\inner{\bar{x}_t-x^\star}{\nabla f_t^m(x_t^m)} + \frac{\eta_t^2\sigma^2}{M}\enspace,\\
        &= \norm{\bar{x}_{t}-x^\star}^2 + \frac{\eta_t^2}{M^2}\norm{\sum_{m\in[M]}\nabla f_t^m(x_t^m)}^2 - \frac{2\eta_t}{M}\sum_{m\in[M]}\inner{x^m_t-x^\star}{\nabla f_t^m(x_t^m)}\\
        &\quad + \ii_{K>1}\cdot\frac{2\eta_t}{M}\sum_{m\in[M]}\inner{x^m_t-\bar{x}_t}{\nabla f_t^m(x_t^m)} + \frac{\eta_t^2\sigma^2}{M}\enspace,\nonumber\\
        &\leq \norm{\bar{x}_{t}-x^\star}^2 + \frac{\eta_t^2}{M^2}\norm{\sum_{m\in[M]}\nabla f_t^m(x_t^m)}^2 - \frac{2\eta_t}{M}\sum_{m\in[M]}\rb{f_t^m(x_t^m) - f_t^m(x^\star)} \\
        &\quad + \ii_{K>1}\cdot\frac{2\eta_t}{M}\sum_{m\in[M]}\inner{x^m_t-\bar{x}_t}{\nabla f_t^m(x_t^m)} + \frac{\eta_t^2\sigma^2}{M}\enspace,\nonumber
    \end{align*}
    where $\ee_t$ is the expectation conditioned on the filtration at time $t$ under which $x_t^m$'s are measurable, and the last inequality is due to the convexity of each function. Re-arranging this leads to 
    \begin{align}
        \frac{1}{M}\sum_{m\in[M]}\rb{f_t^m(x_t^m) - f_t^m(x^\star)} &\leq \frac{1}{2\eta_t}\rb{\norm{\bar{x}_{t}-x^\star}^2 - \ee_t\sb{\norm{\bar{x}_{t+1}-x^\star}^2}} +  \frac{\eta_t}{2 M^2}\norm{\sum_{m\in[M]}\nabla f_t^m(x_t^m)}^2 \nonumber\\
        &\quad + \ii_{K>1}\cdot\frac{1}{M}\sum_{m\in[M]}\ee_t\inner{x^m_t-\bar{x}_t}{\nabla f_t^m(x_t^m)} + \frac{\eta_t\sigma^2}{2M}\enspace,\nonumber\\
        &\leq \frac{1}{2\eta_t}\rb{\norm{\bar{x}_{t}-x^\star}^2 - \ee_t\sb{\norm{\bar{x}_{t+1}-x^\star}^2}} + \frac{\eta_t}{2}\rb{G^2 + \frac{\sigma^2}{M}}\nonumber\\
        &\quad + \ii_{K>1}\cdot\frac{G}{M}\sum_{m\in[M]}\ee\sb{\norm{x_t^m-\bar{x}_t}}\label{eq:inter1}\enspace.
    \end{align}
    The last inequality comes from each function's $G$-Lipschitzness.
    For the last term in \eqref{eq:inter1}, we can upper bound it almost identically in the same way as in \Cref{lem:consensus_error_second_zeta} (noting that $\hat\zeta\leq 2G$) to get that
    \begin{align}\label{eq:concensus}
        \frac{1}{M}\sum_{m\in[M]}\ee\sb{\norm{x_t^m-\bar{x}_t}} & \leq 2(\sigma+G)K\eta\enspace.
    \end{align}
     Plugging \eqref{eq:concensus} into \eqref{eq:inter1} and choosing a constant step-size $\eta$, and taking full expectation we get
    \begin{align*}
        \frac{1}{M}\sum_{m\in[M]}\ee\sb{f_t^m(x_t^m) - f_t^m(x^\star)}
        &\leq \frac{1}{2\eta}\rb{\norm{\ee\sb{\bar{x}_{t}-x^\star}^2} - \ee\sb{\norm{\bar{x}_{t+1}-x^\star}^2}}+ \frac{\eta}{2}\rb{G^2 + \frac{\sigma^2}{M}}\\
        &\qquad + \ii_{K>1}\cdot 2G(\sigma + G)K\eta\enspace.
    \end{align*}
    Summing this over time $t\in[KR]$ we get,
    \begin{align*}
        \frac{1}{M}\sum_{m\in[M], t\in[T]}\ee\sb{f_t^m(x_t^m) - f_t^m(x^\star)} &\lesssim \frac{\norm{\bar{x}_{0}-x^\star}^2}{\eta} + \eta\rb{G^2 + \frac{\sigma^2}{M} + \ii_{K>1}\cdot\sigma G K + \ii_{K>1}\cdot\zeta G K}T\enspace,\\
        &\lesssim \frac{B^2}{\eta} + \eta\rb{G^2 + \frac{\sigma^2}{M} + \ii_{K>1}\cdot\sigma G K + \ii_{K>1}\cdot G^2K}T\enspace.
    \end{align*}
    Finally choosing, $$\eta = \frac{B}{G\sqrt{T}}\cdot\min\bigg\{1,\frac{G\sqrt{M}}{\sigma}, \frac{\sqrt{G}}{\ii_{K>1}\sqrt{\sigma K}}, \frac{1}{\ii_{K>1}\sqrt{K}}\bigg\}\enspace,$$
    we can obtain,
    \begin{align}
        \frac{1}{M}\sum_{m\in[M], t\in[T]}\ee\sb{f_t^m(x_t^m) - f_t^m(x^\star)} &\lesssim GB\sqrt{T} + \ii_{K>1}\cdot\sqrt{\sigma G}B\sqrt{KT} + \ii_{K>1}\cdot GB\sqrt{KT} + \frac{\sigma B\sqrt{T}}{\sqrt{M}}\enspace.
    \end{align}
    Dividing by $KR$ finishes the proof.
\end{proof}

\subsection{Proof of Lemma \ref{lem:smooth_first_stoch}}
In this section, we provide the proof for Lemma \ref{lem:smooth_first_stoch} following a very similar analysis as the one due to \citet{woodworth2020minibatch} for the stochastic setting.
\begin{proof}[Proof of Lemma \ref{lem:smooth_first_stoch}]
    Consider any time step $t\in[KR]$ and define ghost iterate $\bar{x}_t = \frac{1}{M}\sum_{m\in[M]}x_t^m$ (which not might actually get computed). Then using the update rule of Algorithm \ref{alg:fed_ogd}, we can get:
    \begin{align*}
        \ee_t\sb{\norm{\bar{x}_{t+1}-x^\star}^2} &= \ee_t\sb{\norm{\bar{x}_t - \frac{\eta_t}{M}\sum_{m\in[M]}\nabla f_t^m(x_t^m) - x^\star + \frac{\eta_t}{M}\sum_{m=1}^{M}\rb{\nabla f_t^m(x_t^m) - g_t^m(x_t^m)}}^2},\\
        &= \norm{\bar{x}_{t}-x^\star}^2 + \frac{\eta_t^2}{M^2}\norm{\sum_{m\in[M]}\nabla f_t^m(x_t^m)}^2 - \frac{2\eta_t}{M}\sum_{m\in[M]}\inner{\bar{x}_t-x^\star}{\nabla f_t^m(x_t^m)} + \frac{\eta_t^2\sigma^2}{M}\\
        &= \norm{\bar{x}_{t}-x^\star}^2 + \frac{\eta_t^2}{M^2}\norm{\sum_{m\in[M]}\nabla f_t^m(x_t^m)}^2 - \frac{2\eta_t}{M}\sum_{m\in[M]}\inner{x^m_t-x^\star}{\nabla f_t^m(x_t^m)}\\
        &\quad + \ii_{K>1}\cdot\frac{2\eta_t}{M}\sum_{m\in[M]}\inner{x^m_t-\bar{x}_t}{\nabla f_t^m(x_t^m)} + \frac{\eta_t^2\sigma^2}{M}\nonumber\\
        &\leq \norm{\bar{x}_{t}-x^\star}^2 + \frac{\eta_t^2}{M^2}\norm{\sum_{m\in[M]}\nabla f_t^m(x_t^m)}^2 - \frac{2\eta_t}{M}\sum_{m\in[M]}\rb{f_t^m(x_t^m) - f_t^m(x^\star)} \\
        &\quad + \ii_{K>1}\cdot\frac{2\eta_t}{M}\sum_{m\in[M]}\inner{x^m_t-\bar{x}_t}{\nabla f_t^m(x_t^m)} + \frac{\eta_t^2\sigma^2}{M}\nonumber,
    \end{align*}
    where $\ee_t$ is the expectation taken with respect to the filtration at time $t$, and the last line comes from the convexity of each function. Re-arranging this and taking expectation gives 
    leads to
    \begin{align}
        \frac{1}{M}\sum_{m\in[M]}\ee\rb{f_t^m(x_t^m) - f_t^m(x^\star)} &\leq \frac{1}{2\eta_t}\rb{\ee\norm{\bar{x}_{t}-x^\star}^2 - \ee\sb{\norm{\bar{x}_{t+1}-x^\star}^2}} +  \textcolor{blue}{\frac{\eta_t}{2 M^2}\ee\norm{\sum_{m\in[M]}\nabla f_t^m(x_t^m)}^2}\nonumber\\
        &\quad + \ii_{K>1}\cdot\textcolor{red}{\frac{1}{M}\sum_{m\in[M]}\ee\inner{x^m_t-\bar{x}_t}{\nabla f_t^m(x_t^m)}} + \frac{\eta_t\sigma^2}{2M}\enspace.\label{eq:smth_gen_recursion}
    \end{align}
    \paragraph{Bounding the blue term.} We consider two different ways to bound the term. First note that similar to \Cref{lem:consensus_error_second_zeta} we can just use the following bound,
    \begin{align}
        \textcolor{blue}{\frac{\eta_t}{2 M^2}\ee\norm{\sum_{m\in[M]}\nabla f_t^m(x_t^m)}^2} &\leq \frac{\eta_t G^2}{2}\enspace. \label{eq:blue_bd_one}
    \end{align}
    However, since we also have smoothness, we can use the self-bounding property (c.f., Lemma 4.1 \cite{srebro2010optimistic}) to get,
    \begin{align}
        \textcolor{blue}{\frac{\eta_t}{2 M^2}\ee\norm{\sum_{m\in[M]}\nabla f_t^m(x_t^m)}^2} &\leq \frac{\eta_t H}{2M}\sum_{m\in[M]}\rb{f_t^m(x_t^m) - f_t^m(x_t^\star)} + \frac{\eta_t H}{2M} \sum_{m\in[M]}f_t^m(x_t^\star)\enspace,\nonumber\\
        &\leq \frac{\eta_t H}{2M} \sum_{m\in[M]}f_t^m(x^\star)\enspace,\label{eq:blue_bd_two}
    \end{align}
    where $x_t^\star$ is the optimizer of $\frac{1}{M}\sum_{m\in[M]}f_t^m(x)$.

    \paragraph{Bounding the red term.} We will bound the term in three different ways. Similar to Lemma \ref{lemma:bd_grad_first_stoch}, we can bound the term after taking expectation and then bounding the consensus term similar to \Cref{lem:consensus_error_second_zeta} (recalling that $\hat\zeta\leq G$) as follows,
    \begin{align}\label{eq:red_bd_one}
        \textcolor{red}{\frac{1}{M}\sum_{m\in[M]}\ee\sb{\inner{x^m_t-\bar{x}_t}{\nabla f_t^m(x_t^m)}}} &\leq \frac{G}{M}\sum_{m\in[M]}\ee\sb{\norm{x_t^m-\bar{x}_t}}\enspace,\nonumber\\
        &\leq 2G(\sigma + G)\sum_{t'=\delta(t)}^{\delta(t)+K - 1}\eta_{t'}\enspace,
    \end{align}
    where $\delta(t)$ maps $t$ to the last time on or before time $t$ when communication happened. Alternatively, we can use smoothness as follows after assuming $\eta_t\leq 1/2H$,
    \begin{align}\label{eq:red_bd_two}
        &\textcolor{red}{\frac{1}{M}\sum_{m\in[M]}\ee\sb{\inner{x^m_t-\bar{x}_t}{\nabla f_t^m(x_t^m)}}}\\
        &\quad= \frac{1}{M}\sum_{m\in[M]}\ee\sb{\inner{x^m_t-\bar{x}_t}{\nabla f_t^m(x_t^m) - \nabla f_t(\bar{x}_t)}}\enspace,\nonumber\\
        &\quad\leq \sqrt{\frac{1}{M}\sum_{m\in[M]}\ee\norm{x^m_t-\bar{x}_t}^2}\sqrt{\frac{1}{M}\sum_{m\in[M]}\ee\norm{\nabla f_t^m(x_t^m) - \nabla f_t(\bar{x}_t)}^2}\enspace,\nonumber\\
        &\quad\leq^{\text{(\Cref{lem:mod_am_gm,ass:online_zeta})}} \sqrt{\frac{1}{M}\sum_{m\in[M]}\ee\norm{x^m_t-\bar{x}_t}^2}\sqrt{\frac{2}{M}\sum_{m\in[M]}H^2\ee\norm{x^m_t-\bar{x}_t}^2 + 2\hat\zeta^2}\enspace,\nonumber\\
        &\quad\leq^{(\text{a})} \frac{2H}{M}\sum_{m\in[M]}\ee\norm{x^m_t-\bar{x}_t}^2 + 2\hat\zeta\sqrt{\frac{1}{M}\sum_{m\in[M]}\ee\norm{x^m_t-\bar{x}_t}^2},\nonumber\\
        &\quad\lesssim^{(\text{b})} 2\eta_t^2H(\sigma^2 K + \zeta^2K^2) + 2\eta_t\hat\zeta(\sigma\sqrt{K} + \zeta K)\enspace,
    \end{align}
    where in (a) we used triangle inequality, and in (b) we used a similar upper bound as in \Cref{lem:consensus_error_second_zeta}. We can also use the lipschitzness and smoothness assumption together with a constant step size $\eta<1/2H$ to obtain,
    \begin{align}\label{eq:red_bd_three}
        \textcolor{red}{\frac{1}{M}\sum_{m\in[M]}\ee\sb{\inner{x^m_t-\bar{x}_t}{\nabla f_t^m(x_t^m)}}} &\leq \frac{G}{M}\sum_{m\in[M]}\ee\sb{\norm{x_t^m-\bar{x}_t}}\enspace,\nonumber\\
        &\lesssim \eta G(\sigma\sqrt{K} + \hat\zeta K)\enspace.
    \end{align}
    \paragraph{Combining everything.} After using a constant step-size $\eta$, summing \eqref{eq:smth_gen_recursion} over time, we can use the upper bound of the red and blue terms in different ways. If we plug in \eqref{eq:blue_bd_one} and \eqref{eq:red_bd_one} we recover the guarantee of lemma \ref{lemma:bd_grad_first_stoch}. This is not surprising because $\fff^{G,H,B}\subseteq \fff^{G,B}$. Combining the upper bounds in all other combinations, assuming $\eta<\frac{1}{2H}$, we can show the following upper bound
    \begin{align*}
        \frac{Reg(M,K,R)}{KR} &\lesssim \frac{HB^2}{KR} + \frac{\sigma B}{\sqrt{MKR}} + \min\cb{\frac{GB}{\sqrt{KR}}, \frac{\sqrt{HF_\star}B}{\sqrt{KR}}},\\
        &\quad +\ii_{K>1}\min\cb{\frac{H^{1/3}B^{4/3}\sigma^{2/3}}{K^{1/3}R^{2/3}} + \frac{H^{1/3}B^{4/3}\hat\zeta^{2/3}}{R^{2/3}} + \frac{\sqrt{\hat\zeta\sigma}B}{K^{1/4}\sqrt{R}} + \frac{\hat\zeta B}{\sqrt{R}}, \frac{\sqrt{G\sigma}B}{K^{1/4}\sqrt{R}} + \frac{\sqrt{G\hat\zeta}B}{\sqrt{R}}}\enspace,
    \end{align*}
    where we used the step size,
    \begin{align*}
        \eta &= \min\Bigg\{\frac{1}{2H}, \frac{B\sqrt{M}}{\sigma \sqrt{KR}}, \max\cb{\frac{B}{G\sqrt{KR}}, \frac{B}{\sqrt{HF_\star KR}}},\\ 
        &\qquad\frac{1}{\ii_{K>1}}\cdot\max\Bigg\{\min\cb{\frac{B^{2/3}}{H^{1/3}\sigma^{2/3}K^{2/3}R^{1/3}}, \frac{B^{2/3}}{H^{1/3}\hat\zeta^{2/3}KR^{1/3}}, \frac{B}{K^{3/4}\sqrt{\hat\zeta \sigma R}}, \frac{B}{\hat\zeta K\sqrt{R}}},\\ 
        &\qquad\min\cb{\frac{B}{K^{3/4}\sqrt{G \sigma R}}, \frac{B}{K\sqrt{\hat\zeta GR}}}\Bigg\}\Bigg\}\enspace.
    \end{align*}
    This finishes the proof.
\end{proof}

\subsubsection{Modifying the Proof for Federated Adversarial Linear Bandits}\label{sec:modify}
To prove the guarantee for the adversarial linear bandits, we first note that the self-bounding property can't be used anymore as the functions are not non-negative. Thus, we proceed with the lemma's proof with the following changes:
\begin{itemize}
    \item We don't prove the additional upper bound in \eqref{eq:blue_bd_two} for blue term.
    \item While upper bounding the red term in \eqref{eq:red_bd_two}, we set $H=0$ and use this single bound for the red term.
\end{itemize}
After making these changes, combining all the terms, and tuning the learning rate, we recover the correct lemma for federated adversarial linear bandits.

\section{Lower bound for Two-point Feedback}\label{app:two_pt_lb}
We want to prove a lower bound when the problem instance $\ppp$ satisfies \Cref{ass:online_convex,ass:online_bounded_gradients,ass:online_zeta} and we have an algorithm with two-point bandit feedback, i.e., in the class $\aaa_{online-IC}^{0,2}$. In particular, we want to show that
\begin{align}
\rrr(\ppp, \aaa^{0,2}_{online-IC}) = \Omega\rb{ \frac{GB}{\sqrt{KR}} + \frac{GB\sqrt{d}}{\sqrt{MKR}}}\enspace.
\end{align}
To prove this, we'd use the reduction $(P2)\gtrsim (P4) \gtrsim (P5) \gtrsim (P6)$ (cf. \Cref{sec:related}). Then we note for the problem (P6), $\hat\zeta=0$, and using $2$-point feedback, we get in total $2MKR$ function value accesses to $\ddd$. We can then use the lower bound in Proposition 2 by \citet{duchi2015optimal} for the problem (P6) for $2M$ points of feedback and $KR$ iterations. Combined with the order-independent lower bound, which we prove using problem (P5) in Theorem \ref{thm:first_lip}, this proves the required result.

\bibliographystyle{plainnat}
\bibliography{references}
\end{document}